\title{MAML and ANIL Provably Learn Representations}
\author{Liam Collins\thanks{Department of Electrical and Computer Engineering, 
The University of Texas at Austin, Austin, TX,  USA. \qquad\qquad\{liamc@utexas.edu, mokhtari@austin.utexas.edu, sanjay.shakkottai@utexas.edu\}.},\quad  Aryan Mokhtari$^*$, \quad Sewoong Oh\thanks{School of Computer Science and Engineering, University of Washington, Seattle, WA, USA. \qquad\qquad \{sewoong@cs.washington.edu\}.} ,\quad Sanjay Shakkottai$^*$}
\date{}
\begin{document}

\maketitle

\vskip 0.3in



\newcommand{\ones}{\mathbf{1}}
\newcommand{\integers}{{\mbox{\bf Z}}}
\newcommand{\symm}{{\mbox{\bf S}}}  

\newcommand{\nullspace}{{\mathcal N}}
\newcommand{\range}{{\mathcal R}}
\newcommand{\Rank}{\mathop{\bf Rank}}
\newcommand{\Tr}{\mathop{\bf Tr}}
\newcommand{\diag}{\mathop{\bf diag}}
\newcommand{\card}{\mathop{\bf card}}
\newcommand{\rank}{\mathop{\bf rank}}
\newcommand{\conv}{\mathop{\bf conv}}
\newcommand{\prox}{\mathbf{prox}}

\newcommand{\ind}{\mathds{1}}
\newcommand{\E}{\mathbb{E}}
\newcommand{\Prob}{\mathbb{P}}
\newcommand{\bigO}{\mathcal{O}}
\newcommand{\B}{\mathcal{B}}
\newcommand{\s}{\mathcal{S}}
\newcommand{\Ev}{\mathcal{E}}
\newcommand{\R}{\mathbb{R}}
\newcommand{\Co}{{\mathop {\bf Co}}} 
\newcommand{\dist}{\mathop{\bf dist{}}}
\newcommand{\argmin}{\mathop{\rm argmin}}
\newcommand{\argmax}{\mathop{\rm argmax}}
\newcommand{\epi}{\mathop{\bf epi}} 
\newcommand{\Vol}{\mathop{\bf vol}}
\newcommand{\dom}{\mathop{\bf dom}} 
\newcommand{\intr}{\mathop{\bf int}}
\newcommand{\sign}{\mathop{\bf sign}}

\newcommand{\cf}{{\it cf.}}
\newcommand{\eg}{{\it e.g.}}
\newcommand{\ie}{{\it i.e.}}
\newcommand{\etc}{{\it etc.}}

\newtheorem{innercustomthm}{Theorem}
\newenvironment{customthm}[1]
  {\renewcommand\theinnercustomthm{#1}\innercustomthm}
  {\endinnercustomthm}

\newtheorem{thm}{Theorem}
\newtheorem{lemma}{Lemma}
\newtheorem{cor}{Corollary}
\newtheorem{rem}{Remark}
\newtheorem{conj}{Conjecture}
\newtheorem{assumption}{Assumption}
\newtheorem{definition}{Definition}
\newtheorem{fact}{Fact}

\makeatletter
\newcommand*\rel@kern[1]{\kern#1\dimexpr\macc@kerna}
\newcommand*\widebar[1]{%
  \begingroup
  \def\mathaccent##1##2{%
    \rel@kern{0.8}%
    \overline{\rel@kern{-0.8}\macc@nucleus\rel@kern{0.2}}%
    \rel@kern{-0.2}%
  }%
  \macc@depth\@ne
  \let\math@bgroup\@empty \let\math@egroup\macc@set@skewchar
  \mathsurround\z@ \frozen@everymath{\mathgroup\macc@group\relax}%
  \macc@set@skewchar\relax
  \let\mathaccentV\macc@nested@a
  \macc@nested@a\relax111{#1}%
  \endgroup
}
\makeatother

\newcommand{\numberthis}{\addtocounter{equation}{1}\tag{\theequation}}

\newcommand{\simiid}{\overset{\text{i.i.d.}}{\sim}}
\def\dist{\operatorname{dist}}
\def\col{\operatorname{col}}
\def\Cov{\operatorname{Cov}}
\def\polylog{\operatorname{polylog}}
\def\poly{\operatorname{poly}}
\def\Tr{\operatorname{Tr}}
\def\del{\boldsymbol{\Delta}}
\def\deld{\boldsymbol{\bar{\Delta}}}
\def\deldin{\boldsymbol{\bar{\Delta}}_{t,i}^{in}}
\def\lam{\boldsymbol{\Lambda}}
\def\b{\mathbf{B}_t}
\def\w{\mathbf{w}_t}
\def\teta{\tilde{\eta}_\ast}
\def\sout{\mathbf{\Sigma}_{t,i}^{out}}
\def\sin{\mathbf{\Sigma}_{t,i}^{in}}
\def\zout{\mathbf{z}_{t,i}^{out}}
\def\zin{\mathbf{z}_{t,i}^{in}}
\def\bsbin{\mathbf{B}_{t}^{\top}\mathbf{\Sigma}_{t,i}^{in}\mathbf{B}_t}
\def\bstin{\mathbf{B}_{\ast}^{\top}\mathbf{\Sigma}_{t,i}^{in}\mathbf{B}_t}
\def\rank{\operatorname{rank}}
\def\dim{\operatorname{dim}}
\def\vec{\operatorname{vec}}
\def\st{\operatorname{s.t.}}
\def\spann{\operatorname{span}}

\begin{abstract}
Recent empirical evidence has driven conventional wisdom to believe that
gradient-based meta-learning (GBML) methods perform well at few-shot learning because they learn an expressive data representation that is shared across tasks. However, the mechanics of GBML have remained largely mysterious from a theoretical perspective. 
In this paper{, we prove that two well-known GBML methods, MAML and ANIL,} as well as their first-order approximations, are capable of learning common representation among a set of given tasks. Specifically, in the well-known multi-task linear representation learning setting, they are able to recover the ground-truth representation at an  exponentially fast rate. Moreover, our analysis illuminates that the driving force causing MAML and ANIL to recover the underlying representation is that 
they {adapt the final layer of their model,} which harnesses the underlying task diversity to improve the representation in all directions of interest.
To the best of our knowledge, these are the first results to show that MAML and/or ANIL learn expressive representations and to rigorously explain why they do so.
\end{abstract}


\newpage


\renewcommand{\baselinestretch}{0.5}\normalsize
\tableofcontents
\renewcommand{\baselinestretch}{1.0}\normalsize


\newpage

\section{Introduction} \label{sec:intro}

A widely popular approach to achieve fast adaptation in multi-task learning settings 
is to learn a representation that extracts the important features shared across tasks
\citep{maurer2016benefit}. However, our understanding of {\em how} multi-task representation learning should be done and {\em why} certain methods work well is still nascent.


Recently, a paradigm known as {\em meta-learning} has emerged as a powerful means of learning multi-task representations. This was sparked in large part by the introduction of Model-Agnostic Meta-Learning (MAML) \citep{finn2017model}, 
which achieved impressive results in few-shot image classification and reinforcement learning scenarios, and led to a series of related  gradient-based meta-learning (GBML) methods
\citep{raghu2019rapid,nichol2018reptile,antoniou2018train,hospedales2020meta}.
Surprisingly, MAML does not explicitly try to learn a useful representation; instead,
it aims to find a good initialization for a small number of task-specific gradient descent steps, agnostic of whether the learning model contains a representation. 
Nevertheless, \citet{raghu2019rapid} empirically argued that MAML's impressive performance on neural networks is likely due to its tendency to learn a shared representation across tasks. To make this argument, they noticed that MAML's representation does not change significantly when adapted to each task. Moreover, they showed that a modified version of MAML that freezes the representation during local adaptation, known as the Almost-No-Inner-Loop algorithm (ANIL), typically performs at least as well as MAML on few-shot image classification tasks. Yet it is still not well understood why these algorithms 
that search for a good initialization for gradient descent should find useful a global representation among tasks.
Thus, in this paper, we aim to address the following questions:

\vspace{-1mm}
\begin{quote}
\em{Do MAML and ANIL provably learn high-quality representations? If so, why?}
\end{quote}
\vspace{-1mm}

To answer these questions we consider the multi-task linear representation learning setting \citep{maurer2016benefit,tripuraneni2020provable,du2020fewshot} in which each task is a noisy linear regression problem in $\mathbb{R}^d$ with optimal solution lying in a shared $k$-dimensional subspace, where $k \ll d$. The learning model is a two-layer linear network consisting of a representation (the first layer of the model) and head {(the last layer)}. {The goal is to learn a representation that projects} data onto the shared subspace so as to reduce the number of samples needed to find the optimal regressor for a new task from $\Omega(d)$ to $\Omega(k)$.

\textbf{Main contributions}.
{\em We prove, for the first time, that both MAML and ANIL, as well their first-order approximations, are capable of representation learning and recover the ground-truth subspace in this setting.}
Our analysis reveals that ANIL and MAML's distinctive adaptation updates for the last layer of the learning model are critical to their recovery of the ground-truth representation. 
Figure 1 visualizes this observation: all meta-learning approaches (Exact ANIL, MAML, and their first-order (FO) versions that ignore second-order derivatives) approach the ground truth exponentially fast, while a non-meta learning baseline of average loss minimization {empirically} fails to recover the ground-truth. We show that the inner loop updates of the head exploit \textit{task diversity} to make the outer loop updates bring the representation  closer to the ground-truth.
 However, MAML's inner loop updates for the representation can inhibit this behavior, thus, our results for MAML require an initialization with error {\em related to task diversity}, whereas ANIL requires only {\em constant} error. We also show that ANIL learns the ground-truth representation with only $\tilde{O}(\tfrac{k^3d}{n}+k^3)\ll d$ samples per task, demonstrating that ANIL's representation learning is {\em sample-efficient}.



\begin{figure}[t]
\begin{center}
\vspace{-2mm}
\centerline{\includegraphics[width=0.52\columnwidth]{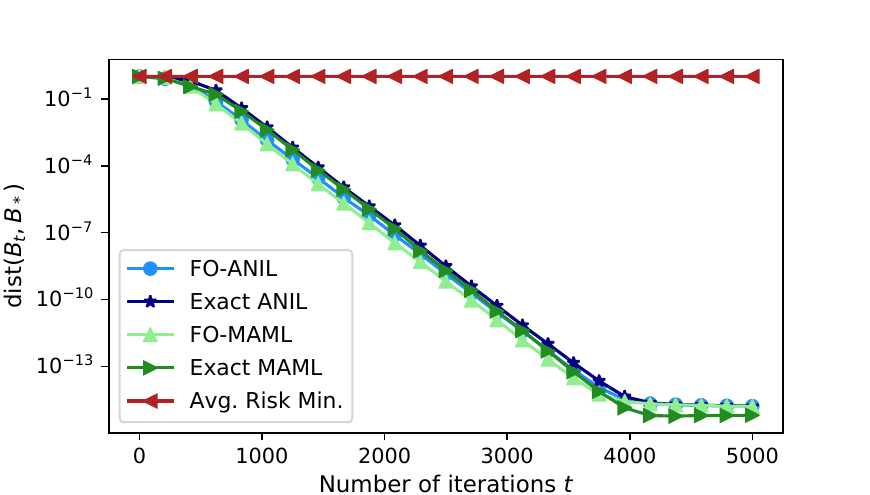}
}
\vspace{-2mm}
\caption{Distance of learned representation from the ground-truth for ANIL, MAML and average risk minimization run on task population losses in multi-task linear representation learning setting.}
\vspace{-8mm}
\label{fig:intro}
\end{center}
\end{figure}

\textbf{Related work.} Several works have studied why 
meta-learning algorithms are effective; please see 
Appendix \ref{app:related_work} for a comprehensive discussion. 
Building off \citet{raghu2019rapid}, most of these works have studied meta-learning from a representation learning perspective \citep{goldblum2020unraveling,kao2022maml,saunshi2021representation,arnold2021maml,wang2021bridging}. Among these, 
 \citet{ni2021data,bouniot2020towards,setlur2020support} and \citet{kumar2021effect} showed mixed  empirical impacts of training task diversity on model performance. Most related to our work is \citet{saunshi2020sample}, which proved that the continuous version of a first-order GBML method, Reptile \citep{nichol2018reptile}, learns a 1-D linear representation in a two-task setting with a specific initialization, explicit regularization, and infinite samples per task.
Other works studied multi-task  representation learning  in the linear setting we consider from a statistical perspective \citep{maurer2016benefit,du2020fewshot,tripuraneni2020provable}. \citet{collins2021exploiting} and \citet{thekumparampil2021statistically} further gave optimization results for gradient-based methods in this setting. 
However, the algorithms they studied are customized for the assumed low-dimensional linear representation model, which makes it relatively easy to learn the correct representation efficiently. A more challenging task is to understand how general purpose and model-agnostic meta-learning algorithms perform, such as MAML and ANIL we study.

\noindent \textbf{Notations.} We use bold capital letters for matrices and bold lowercase letters for vectors.  
We use $\mathcal{O}^{d\times k}$ to denote the set of matrices in $\mathbb{R}^{d \times k}$ with orthonormal columns. 
A hat above a matrix, e.g. $\mathbf{\hat{B}} \in \mathbb{R}^{d\times k}$ implies $\mathbf{\hat{B}}\in\mathcal{O}^{d\times k}$. We let $\col(\mathbf{B})$ denote the column space of $\mathbf{B}$ and $\col(\mathbf{B})^{\perp}$ denote the orthogonal complement to this space.
 $\mathcal{N}(0, \sigma^2)$ denotes the Gaussian distribution with mean 0 and variance $\sigma^2$. $O(\cdot)$ and $\Omega(\cdot)$ hide constant factors, and $\tilde{O}(\cdot)$ and $\tilde{\Omega}(\cdot)$ hide log factors.

\section{Problem Formulation}


We employ the linear representation learning framework studied in \cite{maurer2016benefit,tripuraneni2020provable,du2020fewshot}. We index tasks by $(t,i)$, corresponding to the $i$-th task sampled on iteration $t$.
Each task in this setting is a linear regression problem in $\mathbb{R}^d$. The inputs $\mathbf{x}_{t,i}\in \mathbb{R}^d$ and labels $y_{t,i}\in \mathbb{R}$ for the $(t,i)$-th task are sampled from a distribution on $\mathbb{R}^d\times\mathbb{R}$ such that:
\begin{equation}
    \mathbf{x}_{t,i} \sim p, \quad z_{t,i} \sim \mathcal{N}(0,\sigma^2), \quad y_{t,i} = \langle \boldsymbol{\theta}_{\ast,t,i}, \mathbf{x}_{t,i}\rangle + z_{t,i} \nonumber
\end{equation}
where  $\boldsymbol{\theta}_{\ast,t,i} \in \mathbb{R}^d$ is the ground-truth regressor for task $(t,i)$, and $p$  is a distribution over $\mathbb{R}^d$ satisfying $\mathbb{E}_{\mathbf{x}\sim p}[\mathbf{x}]=\mathbf{0}_d$ and $\mathbb{E}_{\mathbf{x}\sim p}[\mathbf{x}\mathbf{x}^\top]=\mathbf{I}_d$.
To account for shared information across tasks, we suppose there exists a matrix $\mathbf{B}_\ast \in \mathcal{O}^{d \times k}$ 
such that the ground-truth regressors $\{\boldsymbol{\theta}_{\ast,t,i}\}_i$ for all tasks lie in $\col(\mathbf{B}_\ast)$, so they can be written as $\boldsymbol{\theta}_{\ast,t,i} = \mathbf{B}_\ast \mathbf{w}_{\ast,t,i}$ for all $i$.
The task environment $(\mathbf{B}_\ast, \nu)$ consists of  $\mathbf{B}_\ast$ and a distribution $\nu$ on $\mathbb{R}^{k}$ from which the ground-truth heads, {i.e. last or predictive layers}, $\mathbf{w}_{\ast,t,i}$ are drawn.
With knowledge of $\col{(\mathbf{B}_\ast)}$, we can reduce  number of samples needed to solve a task from $\Omega(d)$ to $\Omega(k)$ by projecting the task data onto $\col{(\mathbf{B}_\ast)}$, then learning a head in $\mathbb{R}^k$. The question becomes how to learn $\col{(\mathbf{B}_\ast)}$.

The learning model consists of a  representation $\mathbf{B} \in \mathbb{R}^{d\times k}$ and a head $\mathbf{w}\in \mathbb{R}^{k}$.
We would like the column space of $\mathbf{B}$ to be close to that of $\mathbf{B}_\ast$, measured as follows. 
\begin{definition}[Principle angle distance]
 Let $\mathbf{\hat{B}}\in \mathcal{O}^{d\times k}$ and $ \mathbf{\hat{B}}_{\ast,\perp}\in \mathcal{O}^{d\times (d-k)}$ denote orthonormal matrices whose columns span $\col(\mathbf{B})$ and $\col(\mathbf{B}_\ast)^{\perp}$, respectively. Then the principle angle distance between $\mathbf{B}$ and  $
\mathbf{B}_\ast$ is
\begin{align}
    \dist(\mathbf{B}, \mathbf{B}_\ast) &\coloneqq \|\mathbf{\hat{B}}_{\ast,\perp}^\top \mathbf{\hat{B}}\|_2.
\end{align}
For shorthand, we denote $\dist_t \coloneqq \dist(\mathbf{B}_t, \mathbf{B}_\ast)$.
\end{definition}
Notice that $\dim(\col(\mathbf{B}_\ast)) = k$. Thus, the learned representation $\mathbf{B}$ must extract $k$ orthogonal directions belonging to $\col(\mathbf{B}_\ast)$. We will soon see that MAML and ANIL's task-specific adaptation of the head critically leverages task diversity to learn $k$ such directions.

\section{Algorithms}
Here we formally state the implementation of ANIL and MAML for the problem described above. 
First, letting $\boldsymbol{\theta} \coloneqq [\mathbf{w}^\top; \vec(\mathbf{B})]\in \mathbb{R}^{(d+1)k}$ denote the vector of model parameters, 
we define the population loss for task $(t,i)$:
\begin{align}
    \mathcal{L}_{t,i}(\boldsymbol{\theta}) \coloneqq \tfrac{1}{2}\mathbb{E}_{\mathbf{x}_{t,i},y_{t,i}} \left[(\langle \mathbf{B} \mathbf{w}, \mathbf{x}_{t,i}\rangle - y_{t,i})^2 \right]. 
\end{align}
Often we approximate this loss with the finite-sample loss  for a dataset $\mathcal{D}_{t,i}$:
\begin{align}
 \hat{\mathcal{L}}_{t,i}(\boldsymbol{\theta};\mathcal{D}_{t,i})  &\coloneqq   \tfrac{1}{2|\mathcal{D}_{t,i}|}\sum_{j}  (\langle \mathbf{Bw}, \mathbf{x}_{t,i,j}\rangle - y_{t,i,j})^2. \nonumber
 \label{fs_loss}
 \end{align}

\noindent\textbf{MAML.} 
MAML  minimizes the average loss across tasks {\em after} a small number of task-specific gradient updates. Here, we consider that the task-specific updates are one step of SGD for simplicity.  
Specifically, the loss function that MAML minimizes is
\begin{equation}
  \min_{\boldsymbol{\theta}} \mathcal{L}_{\text{\tiny{\textit{MAML}}}}(\boldsymbol{\theta}) \coloneqq
   \mathbb{E}_{\mathbf{w}_{\ast,t,i}, \mathcal{D}_{t,i}}[
  \mathcal{L}_{t,i}( \boldsymbol{\theta}- \alpha\nabla_{\boldsymbol{\theta}} \hat{\mathcal{L}}_{t,i}(\boldsymbol{\theta};   \mathcal{D}_{t,i}))
  )
   ]. \label{maml_obj_main}
\end{equation}
MAML essentially solves this objective with minibatch SGD. At iteration $t$, it samples $n$ tasks and two i.i.d. datasets $\mathcal{D}_{t,i}^{in},$ $\mathcal{D}_{t,i}^{out}$ for each task consisting of $m_{in}$ labeled inner loop samples and $m_{out}$ labeled outer loop samples, respectively. For the $i$-th sampled task, in what is known as the inner loop, MAML takes the task-specific gradient step from the initial model $(\mathbf{B}_t, \mathbf{w}_t)$ with step size $\alpha$ using the $m_{in}$ samples for  both the head and representation:
\begin{align*}
\boldsymbol{\theta}_{t,i}&= \begin{bmatrix} \mathbf{w}_{t,i} \\
\vec(\mathbf{B}_{t,i}) \\
\end{bmatrix} \gets 
\begin{bmatrix}
\mathbf{w}_{t} - \alpha \nabla_{\mathbf{w}} \hat{\mathcal{L}}_{i}(\mathbf{{B}}_{t}, \mathbf{w}_{t}; \mathcal{D}_{t,i}^{in})  \\ 
\vec(\mathbf{B}_{t}) - \alpha \nabla_{\vec(\mathbf{B})} \hat{\mathcal{L}}_{i}(\mathbf{{B}}_{t},
\mathbf{w}_{t}; \mathcal{D}_{t,i}^{in})  \\
\end{bmatrix}
\end{align*}
Then, in the outer loop, the new parameters are computed by taking a minibatch SGD step with respect to the loss after task-specific adaptation, using step size $\beta$ and the samples $\mathcal{D}_{t,i}^{out}$ for each task. Specifically, 
\begin{align}
    \boldsymbol{\theta}_{t+1}&\gets \boldsymbol{\theta}_t-\tfrac{\beta}{n}\sum_{i=1}^n (\mathbf{I}-\alpha \nabla^2_{\boldsymbol{\theta}}\hat{\mathcal{L}}_{t,i} (\boldsymbol{\theta}_t; \mathcal{D}_{t,i}^{out})) \nabla\hat{\mathcal{L}}_{t,i}(\boldsymbol{\theta}_{t,i}; \mathcal{D}_{t,i}^{out}) \nonumber
\end{align}
Note that the above Exact MAML update requires expensive second-order derivative computations. In practice, FO-MAML, which drops the Hessian,  is often used, {since it typically achieves similar performance \citep{finn2017model}}.

\textbf{ANIL.} 
Surprisingly, \citet{raghu2019rapid} noticed that training neural nets with a modified version of MAML that lacks inner loop updates for the representation resulted in models that matched and sometimes even exceeded the performance of models trained by MAML on few-shot image classification tasks. 
This modified version is ANIL, and its  inner loop updates in our linear case are given as follows:
\begin{align*}
\boldsymbol{\theta}_{t,i}&
=
\begin{bmatrix} \mathbf{w}_{t,i} \\
\vec(\mathbf{B}_{t,i}) \\
\end{bmatrix}  =
\begin{bmatrix}
\mathbf{w}_{t} - \alpha \nabla_{\mathbf{w}} \hat{\mathcal{L}}_{t,i}(\mathbf{{B}}_{t}, \mathbf{w}_{t}; \mathcal{D}_{t,i}^{in}) \\
\vec(\mathbf{B}_{t})  \\
\end{bmatrix}
\end{align*}
In the outer loop, ANIL again takes a minibatch SGD step with respect to the loss after the inner loop update. 
 Then, the outer loop updates for  Exact ANIL are given by:
\begin{align*}
\boldsymbol{\theta}_{t+1}\gets \boldsymbol{\theta}_{t} - \frac{\beta}{n}\sum_{i=1}^n 
\hat{\mathbf{H}}_{t,i}(\boldsymbol{\theta}_{t}; \mathcal{D}_{t,i}^{out})
\nabla_{\boldsymbol{\theta}}\hat{\mathcal{L}}_{t,i}(\boldsymbol{\theta}_{t,i},  \mathcal{D}_{t,i}^{out})
\end{align*}
where, for Exact ANIL,
\begin{align*}
\hat{\mathbf{{H}}}_{t,i}&(\mathbf{{B}}_{t}, \mathbf{w}_{t}; \mathcal{D}_{t,i}^{out})
\coloneqq \begin{bmatrix}
\mathbf{I}_k - \alpha \nabla_{\mathbf{w}}^2 \hat{\mathcal{L}}_{t,i}( \boldsymbol{\theta}_{t}; \mathcal{D}_{t,i}^{out}) &  \mathbf{0} \\
  -\alpha \tfrac{\partial^2}{\partial \vec({\mathbf{B}}) \partial{\mathbf{w}}} \hat{\mathcal{L}}_{t,i}( \boldsymbol{\theta}_{t}; \mathcal{D}_{t,i}^{out})   &  \mathbf{I}  
\end{bmatrix}
\end{align*} 
To avoid computing second order derivatives, we can instead treat $\hat{\mathbf{{H}}}_{t,i}$ as the identity operator, in which case we call the algorithm FO-ANIL. 



\subsection{Role of Adaptation} \label{sec:intuition}
Now we present new intuition for MAML and ANIL's representation learning ability which motivates our proof structure.
The key observation is that the outer loop gradients for the representation are {\em evaluated at the inner loop-adapted parameters}; this harnesses the power of task diversity  to improve the representation in {\em all} $k$ directions. 
This is easiest to see in the FO-ANIL case with $m_{in}=m_{out}=\infty$. In this case, using that the input data distribution $p$ satisfies $\mathbb{E}_{\mathbf{x}\sim p}[\mathbf{x}]=\mathbf{0}_d$ and $\mathbb{E}_{\mathbf{x}\sim p}[\mathbf{x}\mathbf{x}^\top]=\mathbf{I}_d$, the update for the representation is given as:
\begin{align}
    \mathbf{B}_{t+1} &=
   \mathbf{{B}}_{t}  \Big(\underbrace{  \mathbf{I}_k - \tfrac{\beta}{n} \sum_{i=1}^n \mathbf{w}_{t,i} \mathbf{w}_{t,i}^\top}_{\text{FO-ANIL prior weight}}\Big) + \mathbf{{B}}_\ast\underbrace{\tfrac{\beta}{n} \sum_{i=1}^n   \mathbf{w}_{\ast,t,i} \mathbf{w}_{t,i}^\top}_{\text{FO-ANIL signal weight}} \label{intuition}
\end{align}
We would like the `prior weight' to be small and the `signal weight' to be large so the update replaces energy from $\col(\mathbf{B}_t)$ with energy from $\col({\mathbf{B}_\ast})$. 
Roughly, this is true as long as  $\boldsymbol{\Psi}_t\coloneqq \tfrac{1}{n} \sum_{i=1}^n \mathbf{w}_{t,i} \mathbf{w}_{t,i}^\top$ is well-conditioned, i.e. 
the $\mathbf{w}_{t,i}$'s are diverse. Assuming
$\mathbf{w}_{t,i}\approx \mathbf{w}_{\ast,t,i}$ for each task,  
then $\boldsymbol{\Psi}_t$ 
is well-conditioned if and only if the tasks are diverse.
Thus, we can see how task diversity 
causes the column space of the representation learned by FO-ANIL to approach the ground-truth. For FO-MAML, we see similar behavior, with a caveat. The representation update is:
\begin{align}
    \mathbf{B}_{t+1} &\stackrel{(a)}{=}
    {\mathbf{{B}}_{t} - \tfrac{\beta}{n} \sum_{i=1}^n \mathbf{{B}}_{t,i}\mathbf{w}_{t,i} \mathbf{w}_{t,i}^\top} +  {\mathbf{{B}}_\ast\tfrac{\beta}{n} \sum_{i=1}^n   \mathbf{w}_{\ast,t,i} \mathbf{w}_{t,i}^\top} \nonumber \\
    \vspace{-1mm}
     &\stackrel{(b)}{=} 
    {\mathbf{{B}}_{t}\Big(\underbrace{\mathbf{I}_k - (\mathbf{I}_k -\alpha \mathbf{w}_t\mathbf{w}_t^\top) \tfrac{\beta}{n} \sum_{i=1}^n \mathbf{w}_{t,i} \mathbf{w}_{t,i}^\top}_{\text{FO-MAML prior weight}}\Big)} + {\mathbf{{B}}_\ast\Big(\underbrace{\tfrac{\beta}{n} \sum_{i=1}^n  (1 - \alpha\langle \mathbf{w}_{t,i},\mathbf{w}_t\rangle) \mathbf{w}_{\ast,t,i} \mathbf{w}_{t,i}^\top}_{\text{FO-MAML signal weight}}\Big)} \nonumber
\end{align}
Equation $(a)$ is similar to \eqref{intuition} except that one $\mathbf{B}_t$ is replaced by the $\mathbf{B}_{t,i}$'s resulting from inner loop adaptation. Expanding $\mathbf{B}_{t,i}$ in $(b)$, we see that the prior weight is at least as large as in \eqref{intuition}, since $\lambda_{\max}(\mathbf{I}_k -\alpha \mathbf{w}_t\mathbf{w}_t^\top)\leq 1$, but it can still be small as long as the $\mathbf{w}_{t,i}$'s are diverse and $\|\mathbf{w}_t\|_2$ is small. Thus we can see that FO-MAML should also learn the representation, yet its inner loop adaptation complicates its ability to do so.

{\textbf{Comparison with no inner-loop adaptation.}} Compare these updates to the case when there is {no inner loop adaptation}, i.e. we run SGD on the non-adaptive objective $\min_{\boldsymbol{\theta}} \mathbb{E}_{\mathbf{w}_{\ast,t,i}} [\mathcal{L}_{t,i}(\boldsymbol{\theta})]$ instead of \eqref{maml_obj_main}. In this case, $\mathbf{B}_{t+1}$ is:
\begin{align}
    \mathbf{B}_{t+1} &=
    {\mathbf{{B}}_{t} \big(  \mathbf{I}_k - {\beta}\mathbf{w}_{t} \mathbf{w}_{t}^\top\big)} +  \beta{\mathbf{{B}}_\ast \mathbf{\bar{w}}_{\ast,t} \mathbf{w}_{t}^\top} \label{noadapt}
\end{align}
where $\mathbf{\bar{w}}_{\ast,t} \coloneqq \frac{1}{n}\sum_{i=1}^n\mathbf{w}_{\ast,t,i}$.
Observe that the coefficient of $\b$ in the update is rank $k\!-\!1$, while the coefficient of $\mathbf{B}_\ast$ is rank 1. Thus, $\col(\mathbf{B}_{t+1})$ can approach $\col(\mathbf{B}_{\ast})$ in at most one direction on any iteration. Empirically, $\mathbf{w}_t$ points in roughly the same direction throughout training, preventing this approach from learning  $\col(\mathbf{B}_\ast)$ (e.g. see Figure \ref{fig:intro}).

%
%
%
%

\textbf{Technical challenges.} 
The intuition on the role of adaptation, while appealing, makes strong assumptions; most notably that the $\mathbf{w}_{t,i}$'s are diverse enough to improve the representation and that the algorithm dynamics are stable. 
To show these points, we observe that $\mathbf{w}_{t,i}$ can be written as the linear combination of a vector distinct for each task and a vector that is shared across all tasks at time $t$. Showing that the shared vector is small implies the $\mathbf{w}_{t,i}$'s are diverse, and we can control the magnitude of the shared vector by controlling $\|\mathbf{w}_t\|_2$ and $\|\mathbf{I}_k - \alpha \mathbf{B}_t^\top \mathbf{B}_t\|_2$. 
Showing that these  quantities are small at all times also ensures the stability of the algorithms.
Meanwhile, we must show that $\|\mathbf{B}_{\ast,\perp}^\top\mathbf{B}_t\|_2$ and $\dist_t=\|\mathbf{\hat{B}}_{\ast,\perp}^\top \mathbf{\hat{B}}_t\|_2 $ are contracting. 
It is not obvious that any of these conditions hold individually; in fact, they require a novel 
multi-way inductive argument to show that they  hold simultaneously for each $t$ (see Section \ref{sec:sketch}).
\section{Main Results}

In this section we formalize our intuition discussed previously and prove that both MAML and ANIL and their first-order approximations are capable learning the column space of the ground-truth representation.
To do so, we first make the following assumption concerning the diversity of the sampled ground-truth heads.

\begin{assumption}[Task diversity]\label{assump:tasks_diverse_main}
The eigenvalues of the symmetric matrix  $\boldsymbol{{\Psi}}_{\ast,t} \coloneqq \tfrac{1}{n}\sum_{i=1}^n \mathbf{w}_{\ast,t,i}\mathbf{w}_{\ast,t,i}^\top$ are almost surely\footnote{We could instead assume the ground-truth heads are sub-gaussian and use standard concentration results show that with $n\!=\!\Omega(k+\log(T))$, the set of ground-truth heads $\{\mathbf{w}_{\ast,t,i}\}_{i=1}^n$ sampled on iteration $t$ are $(1+O(1),1-O(1))$-diverse for all $T$ iterations with high probability, but instead we assume almost-sure bounds for simplicity.} uniformly bounded below and above by $\mu_\ast^2$ and $L_\ast^2$, respectively, i.e., $\mu_\ast^2 \mathbf{I} \preceq \boldsymbol{{\Psi}}_{\ast,t} \preceq L_\ast^2 \mathbf{I}$, for all $ t \in [T]$.
\end{assumption}

The lower bound on the eigenvalues of the matrix $\boldsymbol{{\Psi}}_{\ast,t}$ ensures that the $k\times k$ matrix $\boldsymbol{{\Psi}}_{\ast,t}$ is full rank and hence the vectors $\{\mathbf{w}_{\ast,t,i}\}_{i=1}^n$ span $\mathbb{R}^k$, therefore they are diverse. However, the diversity level of the tasks is defined by ratio of the eigenvalues of the  matrix $\boldsymbol{{\Psi}}_{\ast,t}$, i.e., $\kappa_\ast\coloneqq \frac{L_{\ast}}{\mu_\ast}$. If this ratio is close to 1, then the ground-truth heads are very diverse and have equal energy in all directions. On the other hand, if $\kappa_\ast$ is large, then the ground-truth heads are not very diverse as their energy is mostly focused in  a specific direction. Hence, as we will see in our results, smaller  $\kappa_\ast$ leads to faster convergence for ANIL and MAML.



{
Now we are ready to state our main results for the ANIL and FO-ANIL algorithms in the infinite sample case. 
\begin{thm}\label{thm:anil_pop_main}
Consider the infinite sample case for ANIL and FO-ANIL, where  $m_{in}\!=\!m_{out}\!= \!\infty$. Further, suppose the conditions in Assumption \ref{assump:tasks_diverse_main} hold, the initial weights are selected as $\mathbf{w}_0 = \mathbf{0}_k$ and ${\alpha}\mathbf{B}_0^\top\mathbf{B}_0  = \mathbf{I}_k$. Let the step sizes are chosen as $\alpha = O(\nicefrac{1}{L_{\ast}})$ and 
 $\beta = O({\alpha}{\kappa_{\ast}^{-4}})$ for ANIL and $\beta = O({\alpha}{\kappa_{\ast}^{-4}\min(1,{\nicefrac{\mu_\ast^2}{\eta_\ast^{2}}}}))$ for FO-ANIL, where ${\eta}_\ast$ satisfies $\|\tfrac{1}{n}\sum_{i=1}^n\mathbf{w}_{\ast,t,i}\|_2 \leq \eta_\ast$ for all times $t\in [T]$ almost surely. If the initial error satisfies the condition $\dist_0 \leq \sqrt{0.9}$, then almost surely for both  ANIL and FO-ANIL we have,
\begin{align}
    \dist(\mathbf{{B}}_T, \mathbf{{B}}_\ast) \leq \left(1 - 0.5\beta \alpha E_0 \mu_\ast^2   \right)^{T-1},
\end{align}
where 
 $E_0 \coloneqq  0.9 - \dist_0^2$.
\end{thm}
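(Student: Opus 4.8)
The plan is to convert the ANIL/FO-ANIL iteration into an explicit matrix recursion and then run a simultaneous inductive argument over a few scalar quantities. Since $m_{in}=m_{out}=\infty$, every sampled gradient and Hessian equals its population counterpart, and because each $\mathcal{L}_{t,i}$ is quadratic the inner-loop head is exactly $\mathbf{w}_{t,i} = (\mathbf{I}_k - \alpha\mathbf{B}_t^\top\mathbf{B}_t)\mathbf{w}_t + \alpha\mathbf{B}_t^\top\mathbf{B}_\ast\mathbf{w}_{\ast,t,i}$, and the blocks of $\hat{\mathbf{H}}_{t,i}$ have closed form ($\mathbf{I}_k-\alpha\mathbf{B}_t^\top\mathbf{B}_t$ for the head block, a cross-Hessian linear in $\mathbf{B}_t,\mathbf{w}_t$ for the off-diagonal block). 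Substituting into the outer loop gives an exact update $\mathbf{B}_{t+1} = \mathbf{B}_t\mathbf{G}_t + \mathbf{B}_\ast\mathbf{F}_t$ together with a recursion for $\mathbf{w}_{t+1}$, where $\mathbf{G}_t = \mathbf{I}_k - \beta\,\tfrac1n\sum_i\mathbf{w}_{t,i}\mathbf{w}_{t,i}^\top$ (up to the cross-Hessian correction for Exact ANIL) is the ``prior weight'' and $\mathbf{F}_t = \beta\,\tfrac1n\sum_i\mathbf{w}_{\ast,t,i}\mathbf{w}_{t,i}^\top$ the ``signal weight'' of Section~\ref{sec:intuition}. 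Writing $\mathbf{M}_t \coloneqq \mathbf{B}_\ast^\top\mathbf{B}_t$, a short computation gives $\tfrac1n\sum_i\mathbf{w}_{t,i}\mathbf{w}_{t,i}^\top = \alpha^2\mathbf{M}_t^\top\boldsymbol{\Psi}_{\ast,t}\mathbf{M}_t + \mathbf{E}_t$ with $\|\mathbf{E}_t\|_2$ bounded by $\mathrm{poly}(\alpha,L_\ast,\eta_\ast)\big(\|\mathbf{w}_t\|_2 + \|\mathbf{I}_k-\alpha\mathbf{B}_t^\top\mathbf{B}_t\|_2\big)$, so the whole trajectory is governed by the three quantities $\|\mathbf{w}_t\|_2$, $\|\mathbf{I}_k-\alpha\mathbf{B}_t^\top\mathbf{B}_t\|_2$, and $\|\hat{\mathbf{B}}_{\ast,\perp}^\top\mathbf{B}_t\|_2$.

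For the contraction I would multiply $\mathbf{B}_{t+1} = \mathbf{B}_t\mathbf{G}_t + \mathbf{B}_\ast\mathbf{F}_t$ on the left by $\hat{\mathbf{B}}_{\ast,\perp}^\top$; since $\hat{\mathbf{B}}_{\ast,\perp}^\top\mathbf{B}_\ast = \mathbf{0}$ the signal term vanishes and $\hat{\mathbf{B}}_{\ast,\perp}^\top\mathbf{B}_{t+1} = \hat{\mathbf{B}}_{\ast,\perp}^\top\mathbf{B}_t\,\mathbf{G}_t$ (for Exact ANIL the cross-Hessian contribution is also of the form $\hat{\mathbf{B}}_{\ast,\perp}^\top\mathbf{B}_t\times(\text{small matrix})$, because its $\col(\mathbf{B}_\ast)$-component is killed by $\hat{\mathbf{B}}_{\ast,\perp}^\top\mathbf{B}_\ast=\mathbf{0}$, so it merely perturbs $\mathbf{G}_t$). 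Since $\tfrac1n\sum_i\mathbf{w}_{t,i}\mathbf{w}_{t,i}^\top$ is symmetric PSD, $\mathbf{G}_t \preceq \mathbf{I}_k - \alpha^2\beta\,\mathbf{M}_t^\top\boldsymbol{\Psi}_{\ast,t}\mathbf{M}_t + \beta\|\mathbf{E}_t\|_2\mathbf{I}_k$, Assumption~\ref{assump:tasks_diverse_main} gives $\mathbf{M}_t^\top\boldsymbol{\Psi}_{\ast,t}\mathbf{M}_t \succeq \mu_\ast^2\,\mathbf{M}_t^\top\mathbf{M}_t$, and the identity $\mathbf{M}_t^\top\mathbf{M}_t = \mathbf{B}_t^\top\mathbf{B}_t - \mathbf{B}_t^\top\hat{\mathbf{B}}_{\ast,\perp}\hat{\mathbf{B}}_{\ast,\perp}^\top\mathbf{B}_t$ together with the frame bound yields $\alpha\sigma_{\min}(\mathbf{M}_t)^2 \ge (1-\dist_t^2) - O(\|\mathbf{I}_k-\alpha\mathbf{B}_t^\top\mathbf{B}_t\|_2)$. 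Choosing $\beta$ small enough that $\mathbf{G}_t\succeq\mathbf{0}$ and that the error terms are absorbed into the slack between $1$ and $0.9$, this gives $\|\hat{\mathbf{B}}_{\ast,\perp}^\top\mathbf{B}_{t+1}\|_2 \le \big(1 - \alpha\beta\mu_\ast^2(0.9-\dist_t^2)\big)\|\hat{\mathbf{B}}_{\ast,\perp}^\top\mathbf{B}_t\|_2$. Since $\dist_{t+1} \le \|\hat{\mathbf{B}}_{\ast,\perp}^\top\mathbf{B}_{t+1}\|_2/\sigma_{\min}(\mathbf{B}_{t+1})$ and the frame bound keeps $\sigma_{\min}(\mathbf{B}_{t+1})$ within a constant factor of $1/\sqrt{\alpha}$, and since $\dist_t\le\dist_0$ is maintained inductively so that $0.9-\dist_t^2\ge E_0$, the extra $0.5$ factor absorbs the loss from the $\sigma_{\min}$ ratio and I get $\dist_{t+1}\le(1-0.5\beta\alpha E_0\mu_\ast^2)\dist_t$, which telescopes to the claimed bound (using $\dist_0\le\sqrt{0.9}<1$ to drop the leading factor and reach exponent $T-1$).

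It remains to propagate the other two inductive quantities, and this is where I expect the main obstacle to lie. Plugging the closed forms in, the task-averaged outer head-gradient at the adapted parameters equals $\mathbf{B}_t^\top\mathbf{B}_t(\mathbf{I}_k-\alpha\mathbf{B}_t^\top\mathbf{B}_t)\mathbf{w}_t - (\mathbf{I}_k-\alpha\mathbf{B}_t^\top\mathbf{B}_t)\mathbf{B}_t^\top\mathbf{B}_\ast\bar{\mathbf{w}}_{\ast,t}$, i.e.\ $O(\|\mathbf{I}_k-\alpha\mathbf{B}_t^\top\mathbf{B}_t\|_2)$ in the frame deviation; for Exact ANIL the additional premultiplier $\mathbf{I}_k-\alpha\mathbf{B}_t^\top\mathbf{B}_t$ coming from $\hat{\mathbf{H}}_{t,i}$ makes the head update quadratically small in the frame deviation, so $\|\mathbf{w}_t\|_2$ stays tiny for any $\beta=O(\alpha\kappa_\ast^{-4})$, whereas for FO-ANIL it is only linearly small and carries a $\bar{\mathbf{w}}_{\ast,t}$ term, which forces the extra $\min(1,\mu_\ast^2/\eta_\ast^2)$ factor in $\beta$. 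Similarly $\alpha\mathbf{B}_{t+1}^\top\mathbf{B}_{t+1}=\mathbf{I}_k+O(\beta)$, and one must show the $O(\beta)$ perturbation keeps $\|\mathbf{I}_k-\alpha\mathbf{B}_t^\top\mathbf{B}_t\|_2$ below a fixed small constant for all $t$. The base case is immediate from $\mathbf{w}_0=\mathbf{0}_k$, $\alpha\mathbf{B}_0^\top\mathbf{B}_0=\mathbf{I}_k$, $\dist_0\le\sqrt{0.9}$. The difficulty is the coupling: the contraction of $\dist_t$ needs the frame and head bounds, but preserving those over all $T$ iterations needs $\dist_t$ already small and needs the per-step $O(\beta)$ errors to damp rather than accumulate; making the three bounds and the choice of $\beta$ mutually consistent — and separately controlling the Exact-ANIL cross-Hessian term, which has no analogue in the clean display \eqref{intuition} — is where the real work lies.
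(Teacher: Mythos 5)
Your overall framework is correct and closely tracks the paper's: write the update as $\mathbf{B}_{t+1}=\mathbf{B}_t\mathbf{G}_t+\mathbf{B}_\ast\mathbf{F}_t$, exploit $\hat{\mathbf{B}}_{\ast,\perp}^\top\mathbf{B}_\ast=\mathbf{0}$ to get contraction of the unnormalized energy, decompose $\tfrac1n\sum_i\mathbf{w}_{t,i}\mathbf{w}_{t,i}^\top$ into a dominant $\alpha^2\mathbf{M}_t^\top\boldsymbol{\Psi}_{\ast,t}\mathbf{M}_t$ term plus an error controlled by $\|\mathbf{w}_t\|_2$ and $\|\mathbf{I}_k-\alpha\mathbf{B}_t^\top\mathbf{B}_t\|_2$, and run a simultaneous induction over those two quantities together with the subspace distance. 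You also correctly identify that the Exact-ANIL pre-Hessian makes $\|\mathbf{w}_{t+1}\|_2-\|\mathbf{w}_t\|_2$ quadratic in the frame deviation (versus linear for FO-ANIL, hence the $\min(1,\mu_\ast^2/\eta_\ast^2)$ factor in $\beta$), and you honestly flag that the coupled induction for $\|\mathbf{w}_t\|_2$ and $\|\mathbf{I}_k-\alpha\mathbf{B}_t^\top\mathbf{B}_t\|_2$ is ``where the real work lies.''

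However, there is a concrete error in the step where you pass from the contraction of $\|\hat{\mathbf{B}}_{\ast,\perp}^\top\mathbf{B}_t\|_2$ to a per-step contraction of $\dist_t$. You write that since $\sigma_{\min}(\mathbf{B}_{t+1})$ stays ``within a constant factor of $1/\sqrt{\alpha}$, ... the extra $0.5$ factor absorbs the loss from the $\sigma_{\min}$ ratio and I get $\dist_{t+1}\le(1-0.5\beta\alpha E_0\mu_\ast^2)\dist_t$.'' That step cannot work. Converting the unnormalized recursion to a recursion in $\dist_t$ incurs, at every round, the factor $\sigma_{\max}(\mathbf{B}_t)/\sigma_{\min}(\mathbf{B}_{t+1})$, which the frame control only bounds by a fixed constant strictly greater than $1$ (about $\sqrt{11/9}$). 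Since $\beta\alpha\mu_\ast^2 E_0$ is made arbitrarily small by the required choice of $\beta=O(\alpha\kappa_\ast^{-4})$, the $0.5$ slack in the contraction factor is orders of magnitude too small to ``absorb'' a constant multiplicative loss per step; the product blows up and no contraction of $\dist_t$ follows. The paper explicitly calls out this pitfall: normalizing every round produces the scaling $\prod_{s=0}^{t}\sigma_{\max}(\mathbf{B}_s)/\sigma_{\min}(\mathbf{B}_{s+1})$, which is too large. The fix is to keep the induction in the unnormalized quantity — i.e., show $\|\hat{\mathbf{B}}_{\ast,\perp}^\top\mathbf{B}_{t+1}\|_2 \le \rho\,\|\hat{\mathbf{B}}_{\ast,\perp}^\top\mathbf{B}_t\|_2$ with $\rho=1-0.5\beta\alpha E_0\mu_\ast^2$ as an inductive hypothesis of its own — and only at the end normalize \emph{once}, via $\dist_{t+1}\le\tfrac{\sigma_{\max}(\mathbf{B}_0)}{\sigma_{\min}(\mathbf{B}_{t+1})}\rho^{t}\dist_0\le\tfrac{1}{\sqrt{1-\|\boldsymbol{\Delta}_{t+1}\|_2}}\rho^{t}\dist_0\le\rho^{t}$, where the constant $\sqrt{10/9}$ is paid exactly once and cancels against $\dist_0\le\sqrt{0.9}$. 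You must therefore carry both $\dist_t\le\rho^{t-1}$ and $\|\hat{\mathbf{B}}_{\ast,\perp}^\top\mathbf{B}_t\|_2\le\rho\|\hat{\mathbf{B}}_{\ast,\perp}^\top\mathbf{B}_{t-1}\|_2$ as separate hypotheses in the multi-way induction; the former feeds the task-diversity bound $0.9-\dist_t^2\ge E_0$, the latter is what actually telescopes.

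A secondary point: your claimed control of $\|\mathbf{I}_k-\alpha\mathbf{B}_t^\top\mathbf{B}_t\|_2$ and $\|\mathbf{w}_t\|_2$ at a fixed small level is not automatic from ``the per-step $O(\beta)$ errors keep it below a constant.'' The paper needs $\|\boldsymbol{\Delta}_t\|_2$ to be \emph{summable} (not merely uniformly small) — it proves a contraction-with-source recursion $\|\boldsymbol{\Delta}_{t+1}\|_2\le\rho\|\boldsymbol{\Delta}_t\|_2+O(\beta^2\alpha^2 L_\ast^4\dist_t^2)$ whose source decays geometrically thanks to the $\dist_t$ contraction, and only then can it control $\|\mathbf{w}_t\|_2$ by summing increments $\propto\|\boldsymbol{\Delta}_s\|_2$ (or $\|\boldsymbol{\Delta}_s\|_2^2$ for Exact ANIL) over time. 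Without this you cannot close the coupling you correctly identify as the bottleneck.
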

Theorem \ref{thm:anil_pop_main} shows that both FO-ANIL and Exact ANIL learn a representation that approaches the ground-truth exponentially fast as long as the initial representation $\mathbf{B}_0$ is normalized and is a constant distance away from the ground-truth, the initial head $\mathbf{w}_0=\mathbf{0}$, and the sampled tasks are diverse. {Note that $\beta$ is larger for ANIL and hence its convergence is faster, demonstrating the benefit of second-order updates.}

Next, we state our results for FO-MAML and Exact MAML for the same infinite sample setting. 
Due to the adaptation of both the representation and head, the MAML and FO-MAML updates involve  third- and fourth-order products of the ground-truth heads, unlike the ANIL and FO-ANIL updates which involve at most second-order products. 
To analyze the higher-order terms, we assume that the 
 energy in each ground-truth head is balanced. 

\begin{assumption}[Task incoherence]\label{assump:tasks_main}
    For all times $t\in [T]$ and tasks $i \in [n]$, we almost surely have
$
\|\mathbf{w}_{\ast,t,i}\|_2\!\leq\! c{\sqrt{k}}L_\ast 
$, where $c$ is a constant.
\end{assumption} 

Next, as discussed in Section \ref{sec:intuition}, MAML's adaptation of the representation complicates its ability to learn the ground-truth subspace. As a result, we require an additional condition to show that MAML learns the representation: the distance of the initialization to the ground-truth must small in the sense that it must scale with the task diversity and inversely with $k$. We formalize this in the following theorem. 

\begin{thm}\label{thm:maml_pop_main} Consider the infinite sample case for MAML, where $m_{in}=m_{out}= \infty$. Further, suppose the conditions in Assumptions \ref{assump:tasks_diverse_main} and \ref{assump:tasks_main} hold, the initial weights are selected as $\mathbf{w}_0 = \mathbf{0}_k$ and ${\alpha}\mathbf{B}_0^\top\mathbf{B}_0  = \mathbf{I}_k$, and the step sizes satisfy $\alpha = O({k^{-2/3} L_\ast^{-1} T^{-1/4}})$ and $ \beta = 
O({\alpha }{ \kappa_\ast^{-4} })$. If $\dist_0 =O({k}^{-0.75}\kappa_{\ast}^{-1.5})$, then almost surely 
\begin{align*}
    \dist(\mathbf{{B}}_T, \mathbf{{B}}_\ast) \leq \left(1 - 0.5\beta \alpha E_0 \mu_\ast^2   \right)^{T-1} ,
\end{align*}
where $E_0 \coloneqq  0.9 - \dist_0^2$.
\end{thm}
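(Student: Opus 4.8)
The plan is to make the infinite-sample MAML updates fully explicit and then close a multi-way induction, parallel to the argument behind Theorem~\ref{thm:anil_pop_main} but carrying the extra terms that MAML's inner-loop adaptation of the representation creates. With isotropic inputs the inner loop produces, for each sampled task, the adapted head $\mathbf{w}_{t,i} = (\mathbf{I}_k - \alpha\mathbf{B}_t^\top\mathbf{B}_t)\mathbf{w}_t + \alpha\mathbf{B}_t^\top\mathbf{B}_\ast\mathbf{w}_{\ast,t,i}$ and the adapted representation $\mathbf{B}_{t,i} = \mathbf{B}_t - \alpha(\mathbf{B}_t\mathbf{w}_t - \mathbf{B}_\ast\mathbf{w}_{\ast,t,i})\mathbf{w}_t^\top$; substituting these, together with the outer-loop Hessian factor for Exact MAML, into the outer step yields the representation recursion previewed in Section~\ref{sec:intuition}, along with companion recursions for $\mathbf{w}_{t+1}$ and for $\mathbf{B}_{t+1}^\top\mathbf{B}_{t+1}$. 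The key point to keep in view is that $\mathbf{w}_{t,i}$ splits into a part shared by all tasks, $(\mathbf{I}_k - \alpha\mathbf{B}_t^\top\mathbf{B}_t)\mathbf{w}_t$, and a task-specific part, $\alpha\mathbf{B}_t^\top\mathbf{B}_\ast\mathbf{w}_{\ast,t,i}$, and the whole analysis hinges on the shared part being negligible, which we control via $\|\mathbf{w}_t\|_2$ and $\|\mathbf{I}_k - \alpha\mathbf{B}_t^\top\mathbf{B}_t\|_2$.

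The structural observation that drives contraction is that projecting the representation update onto $\col(\mathbf{B}_\ast)^{\perp}$ annihilates every $\mathbf{B}_\ast$-term and, since $\mathbf{B}_{\ast,\perp}^\top\mathbf{B}_{t,i} = \mathbf{B}_{\ast,\perp}^\top\mathbf{B}_t(\mathbf{I}_k - \alpha\mathbf{w}_t\mathbf{w}_t^\top)$, factors $\mathbf{B}_{\ast,\perp}^\top\mathbf{B}_t$ out on the left, giving
\[
\mathbf{B}_{\ast,\perp}^\top\mathbf{B}_{t+1} \;=\; \mathbf{B}_{\ast,\perp}^\top\mathbf{B}_t\,\mathbf{G}_t, \qquad \mathbf{G}_t \coloneqq \mathbf{I}_k - \Big(\mathbf{I}_k - \alpha\mathbf{w}_t\mathbf{w}_t^\top\Big)\tfrac{\beta}{n}\sum_{i=1}^n\mathbf{w}_{t,i}\mathbf{w}_{t,i}^\top,
\]
exactly for FO-MAML and up to an $O(\alpha^2\beta)$ correction for Exact MAML. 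Hence $\|\mathbf{B}_{\ast,\perp}^\top\mathbf{B}_{t+1}\|_2$ contracts as soon as $\mathbf{G}_t$ acts as a contraction. When $\|\mathbf{w}_t\|_2$ is small and $\alpha\mathbf{B}_t^\top\mathbf{B}_t\approx\mathbf{I}_k$, one has $\tfrac1n\sum_i\mathbf{w}_{t,i}\mathbf{w}_{t,i}^\top\approx\alpha^2\mathbf{B}_t^\top\mathbf{B}_\ast\boldsymbol{\Psi}_{\ast,t}\mathbf{B}_\ast^\top\mathbf{B}_t$, whose smallest eigenvalue is $\gtrsim\alpha^2\mu_\ast^2\sigma_{\min}^2(\mathbf{B}_\ast^\top\mathbf{B}_t)\gtrsim\alpha\mu_\ast^2(1-\dist_t^2)\geq\alpha\mu_\ast^2 E_0$ by Assumption~\ref{assump:tasks_diverse_main}, the near-isometry of $\sqrt\alpha\,\mathbf{B}_t$, and $\dist_t\leq\dist_0$; since $\mathbf{I}_k-\alpha\mathbf{w}_t\mathbf{w}_t^\top$ is then a small perturbation of $\mathbf{I}_k$, we get $\|\mathbf{G}_t\|_2 \leq 1 - 0.5\beta\alpha\mu_\ast^2 E_0$, the factor $0.5$ absorbing the non-normality of $\mathbf{G}_t$, the $\mathbf{w}_t$-perturbation, and the Exact-MAML correction. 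Dividing $\|\mathbf{B}_{\ast,\perp}^\top\mathbf{B}_{t+1}\|_2 \leq \dist_t\,\sigma_{\max}(\mathbf{B}_t)\,\|\mathbf{G}_t\|_2$ by a lower bound on $\sigma_{\min}(\mathbf{B}_{t+1})$ (again from $\alpha\mathbf{B}_{t+1}^\top\mathbf{B}_{t+1}\approx\mathbf{I}_k$, propagated one step) gives $\dist_{t+1}\leq(1-0.5\beta\alpha\mu_\ast^2 E_0)\dist_t$, which iterates to the stated bound.

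What turns this into a genuine induction is that the contraction of $\mathbf{G}_t$ presumes the auxiliary invariants ``$\|\mathbf{w}_t\|_2$ small'', ``$\|\mathbf{I}_k - \alpha\mathbf{B}_t^\top\mathbf{B}_t\|_2$ small'', ``$\sigma_{\min}(\mathbf{B}_t),\sigma_{\max}(\mathbf{B}_t)$ two-sided bounded'', and ``$\dist_t\leq\dist_0$'', all of which must be re-established at $t+1$. The companion recursions for $\mathbf{w}_{t+1}$, $\mathbf{B}_{t+1}^\top\mathbf{B}_{t+1}$, and $\mathbf{B}_{t+1}$ contain third- and fourth-order products of the ground-truth heads (from $\mathbf{B}_{t,i}\mathbf{w}_{t,i}\mathbf{w}_{t,i}^\top$, from $\mathbf{B}_\ast\tfrac\beta n\sum_i\mathbf{w}_{\ast,t,i}\mathbf{w}_{t,i}^\top$, and from the Exact-MAML Hessian), which Assumption~\ref{assump:tasks_main} lets us bound by $O(\sqrt k\,L_\ast)$ per head. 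Unlike the ANIL case these terms do not cancel, so each auxiliary invariant acquires a one-step drift that couples $\|\mathbf{w}_t\|_2$ and $\|\mathbf{I}_k-\alpha\mathbf{B}_t^\top\mathbf{B}_t\|_2$ to one another and to $\dist_t$; part of the drift decays with $\dist_t$ and is summable, but part does not, and keeping the non-decaying part below the contraction margin $0.5\beta\alpha\mu_\ast^2 E_0$ over all $T$ iterations --- while keeping $\dist_t$ below $\sqrt{0.9}$ --- is exactly what forces $\alpha = O(k^{-2/3}L_\ast^{-1}T^{-1/4})$ (so that $\alpha^4 T = O(1)$) and $\dist_0 = O(k^{-3/4}\kappa_\ast^{-3/2})$, in contrast with the constant initial error ANIL tolerates in Theorem~\ref{thm:anil_pop_main}. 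The base case $t=0$ is immediate since $\mathbf{w}_0 = \mathbf{0}_k$, $\alpha\mathbf{B}_0^\top\mathbf{B}_0 = \mathbf{I}_k$, and $\dist_0$ is small.

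The main obstacle I expect is precisely the control of this non-cancelling higher-order feedback into $\|\mathbf{w}_t\|_2$ and $\|\mathbf{I}_k-\alpha\mathbf{B}_t^\top\mathbf{B}_t\|_2$: MAML's inner-loop representation step couples ground-truth-head energy of size $\Theta(\sqrt k\,L_\ast)$, modulated by $\dist_t$ and $\|\mathbf{w}_t\|_2$, back into these scalars, and one must show the coupling stays strictly subdominant to the decaying ``signal'' and to the contraction margin, uniformly over $T$ steps and simultaneously with the contraction of $\dist_t$ --- a genuinely multi-way dependency. A secondary but real difficulty is that $\mathbf{G}_t$ is a product of PSD matrices and hence non-normal, so its operator norm is not its spectral radius; handling the resulting cross term is what ties the allowable size of $\|\mathbf{w}_t\|_2$ to $\mu_\ast$, $L_\ast$, and $E_0$. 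Resolving these coupled estimates is the crux, and it is why MAML --- unlike ANIL --- needs both the task-incoherence assumption and a task-diversity-dependent initialization.
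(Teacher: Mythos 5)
Your structural observations about the inner-loop updates and the factorization $\mathbf{B}_{\ast,\perp}^\top\mathbf{B}_{t,i} = \mathbf{B}_{\ast,\perp}^\top\mathbf{B}_t(\mathbf{I}_k - \alpha\mathbf{w}_t\mathbf{w}_t^\top)$ are correct, and your resulting one-step formula for $\mathbf{B}_{\ast,\perp}^\top\mathbf{B}_{t+1}$ as $\mathbf{B}_{\ast,\perp}^\top\mathbf{B}_t\mathbf{G}_t$ is exactly the one the paper uses for FO-MAML. However, there is a concrete gap in how you close the recursion.

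The step ``divide by a lower bound on $\sigma_{\min}(\mathbf{B}_{t+1})$ ... gives $\dist_{t+1}\leq(1-0.5\beta\alpha\mu_\ast^2 E_0)\dist_t$, which iterates'' does not actually hold: normalizing on every round introduces the ratio $\sigma_{\max}(\mathbf{B}_t)/\sigma_{\min}(\mathbf{B}_{t+1})$ at each step, and these ratios are each strictly greater than $1$ (roughly $1 + O(\|\del_t\|_2 + \|\del_{t+1}\|_2)$). Under the maintained invariant $\|\del_t\|_2 \lesssim \alpha^2 L_{\max}^2 \sim \alpha^2 k L_\ast^2$, the accumulated product over $T$ rounds is of order $\exp\bigl(O(T\alpha^2 k L_\ast^2)\bigr)$, and with $\alpha = O(k^{-2/3}L_\ast^{-1}T^{-1/4})$ the exponent scales like $T^{1/2}k^{-1/3}$, which diverges in $T$. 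So per-step distance contraction cannot be iterated to the stated bound. The paper avoids this by proving contraction only at the level of the \emph{unnormalized} quantity $\|\mathbf{B}_{\ast,\perp}^\top\mathbf{B}_t\|_2$ (inductive hypothesis $A_4$), iterating that contraction from $T$ down to $0$, and normalizing \emph{once} at the end --- paying a single constant factor $\sigma_{\max}(\mathbf{B}_0)/\sigma_{\min}(\mathbf{B}_T) \leq \sqrt{10/9}$ that the tighter $\dist_0$ condition absorbs. The paper's proof-sketch section explicitly flags exactly the per-step normalization as the approach that fails. Your induction needs to be reorganized around the unnormalized distance.

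A secondary concern, since this theorem is for Exact MAML rather than FO-MAML: the residual from the second-order Hessian terms is not naturally an ``$O(\alpha^2\beta)$ correction to $\mathbf{G}_t$'' as a multiplicative factor. In the paper the Exact MAML outer gradient is decomposed as $\mathbf{G}_t = -\deld_t\mathbf{S}_t\mathbf{B}_t - \mathbf{S}_t\mathbf{B}_t\del_t + \mathbf{N}_t$, and after projection one gets $\mathbf{B}_{\ast,\perp}^\top\mathbf{B}_{t+1} = \mathbf{B}_{\ast,\perp}^\top\mathbf{B}_t(\mathbf{I}_k - \beta\alpha\mathbf{B}_t^\top\mathbf{S}_t\mathbf{B}_t) - \beta\mathbf{B}_{\ast,\perp}^\top\mathbf{N}_t$, where the last piece is an \emph{additive} residual that does not factor through $\mathbf{B}_{\ast,\perp}^\top\mathbf{B}_t$ in the paper's treatment. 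The paper bounds $\|\mathbf{N}_t\|_2$ directly (this is where Assumption~\ref{assump:tasks_main} and the tight $\|\del_t\|_2$, $\|\mathbf{w}_t\|_2$ controls are used) and then converts this additive error into an effective contraction using the lower bound $\|\mathbf{B}_{\ast,\perp}^\top\mathbf{B}_t\|_2 \geq \sqrt{E_0}/\sqrt{\alpha}$, which holds precisely because $\dist_t$ stays bounded away from $1$. Your sketch gestures at this (``keeping the non-decaying part below the contraction margin''), but the additive-to-multiplicative conversion, and the fact that it again requires working at the unnormalized level, is the step you'd need to make explicit.
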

We can see that the initial representation learning error for MAML must scale as $O({k}^{-0.75}\kappa_{\ast}^{-1.5})$, which can be much smaller than the constant scaling that is sufficient for ANIL to learn the representation (see Theorem \ref{thm:anil_pop_main}). Next we give the main result for FO-MAML, which requires an additional condition that the norm of the average of the ground-truth heads sampled on each iteration is small. This condition arises due to the fact that the FO-MAML updates are approximations of the exact MAML updates, and thus have a bias that depends on the average of the ground-truth heads. Without control of this bias, $\mathbf{w}_t$ the dynamics will diverge. 
\begin{thm}\label{thm:fomaml_pop_main} Consider the infinite sample case for FO-MAML, where $m_{in}=m_{out}= \infty$. Further, suppose the conditions in Assumptions \ref{assump:tasks_diverse_main} and \ref{assump:tasks_main} hold, the initial weights are selected as $\mathbf{w}_0 = \mathbf{0}_k$ and ${\alpha}\mathbf{B}_0^\top\mathbf{B}_0  = \mathbf{I}_k$, and the step sizes satisfy $\alpha = O(\tfrac{1}{\sqrt{k}L_\ast})$ and $\beta = O({\alpha}{\kappa_{\ast}^{-4}})$. If the initial error satisfies $\dist_0 =O({k}^{-0.5}\kappa_{\ast}^{-1})$, and the average of the true heads  almost surely satisfies $  \|\tfrac{1}{n}\sum_{i=1}^n\mathbf{w}_{\ast,t,i}\|_2= O(k^{-1.5}\kappa_{\ast}^{-3} \mu_\ast) $ for all times $t$, then almost surely
    $$
     \dist(\mathbf{{B}}_T, \mathbf{{B}}_\ast) \leq \left(1 - 0.5\beta \alpha E_0 \mu_\ast^2   \right)^{T-1},
    $$
    where $E_0 \coloneqq  0.9 - \dist_0^2$.
\end{thm}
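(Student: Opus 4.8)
The plan is to follow the inductive template of the proofs of Theorems~\ref{thm:anil_pop_main} and \ref{thm:maml_pop_main}, with extra work to control the bias that FO‑MAML incurs by dropping the Hessian correction $\hat{\mathbf{H}}_{t,i}$. First I would write out the infinite‑sample FO‑MAML updates in closed form. Using $\mathbb{E}[\mathbf{x}_{t,i}\mathbf{x}_{t,i}^\top]=\mathbf{I}_d$, the inner loop gives $\mathbf{w}_{t,i} = (\mathbf{I}_k-\alpha\mathbf{B}_t^\top\mathbf{B}_t)\mathbf{w}_t + \alpha\mathbf{B}_t^\top\mathbf{B}_\ast\mathbf{w}_{\ast,t,i}$ and $\mathbf{B}_{t,i} = \mathbf{B}_t - \alpha(\mathbf{B}_t\mathbf{w}_t-\mathbf{B}_\ast\mathbf{w}_{\ast,t,i})\mathbf{w}_t^\top$, and dropping the Hessian yields $\mathbf{w}_{t+1} = \mathbf{w}_t - \tfrac{\beta}{n}\sum_i\nabla_{\mathbf{w}}\mathcal{L}_{t,i}(\mathbf{B}_{t,i},\mathbf{w}_{t,i})$ together with the representation update $(a)$--$(b)$ of Section~\ref{sec:intuition}. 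I would then decompose each $\mathbf{w}_{t,i}$, and each $\mathbf{B}_{t,i}\mathbf{w}_{t,i}$, into a component shared by all tasks at time $t$ (which is a function of $\mathbf{w}_t$ and $\mathbf{I}_k-\alpha\mathbf{B}_t^\top\mathbf{B}_t$) plus a task‑specific component proportional to $\mathbf{w}_{\ast,t,i}$. This way the "signal weight" $\tfrac{\beta}{n}\sum_i(1-\alpha\langle\mathbf{w}_{t,i},\mathbf{w}_t\rangle)\mathbf{w}_{\ast,t,i}\mathbf{w}_{t,i}^\top$ is controlled by the conditioning of $\boldsymbol{\Psi}_{\ast,t}$, while the "prior weight" and every FO bias term is governed by the two scalars $\|\mathbf{w}_t\|_2$ and $\|\mathbf{I}_k-\alpha\mathbf{B}_t^\top\mathbf{B}_t\|_2$.

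Next I would run a simultaneous induction on $t$ maintaining: (i) $\mathbf{I}_k-\alpha\mathbf{B}_t^\top\mathbf{B}_t\succeq 0$ with $\|\mathbf{I}_k-\alpha\mathbf{B}_t^\top\mathbf{B}_t\|_2$ small; (ii) $\|\mathbf{w}_t\|_2$ small (on the scale of $\|\tfrac1n\sum_i\mathbf{w}_{\ast,t,i}\|_2$ times $\mathrm{poly}(\kappa_\ast)$); (iii) $\|\mathbf{B}_{\ast,\perp}^\top\mathbf{B}_t\|_2$ decaying geometrically; (iv) $\dist_t$ decaying at the claimed rate; together with $\sigma_{\max}(\mathbf{B}_t)=O(\alpha^{-1/2})$ and $\sigma_{\min}(\mathbf{B}_t)=\Omega(\alpha^{-1/2})$. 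The base case uses $\mathbf{w}_0=\mathbf{0}_k$ and $\alpha\mathbf{B}_0^\top\mathbf{B}_0=\mathbf{I}_k$. For the representation, since $\mathbf{B}_{\ast,\perp}^\top\mathbf{B}_\ast=\mathbf{0}$ and $\mathbf{B}_{\ast,\perp}^\top\mathbf{B}_{t,i}=\mathbf{B}_{\ast,\perp}^\top\mathbf{B}_t(\mathbf{I}_k-\alpha\mathbf{w}_t\mathbf{w}_t^\top)$, update $(a)$ gives $\mathbf{B}_{\ast,\perp}^\top\mathbf{B}_{t+1}=\mathbf{B}_{\ast,\perp}^\top\mathbf{B}_t\big(\mathbf{I}_k-(\mathbf{I}_k-\alpha\mathbf{w}_t\mathbf{w}_t^\top)\tfrac{\beta}{n}\sum_i\mathbf{w}_{t,i}\mathbf{w}_{t,i}^\top\big)$; under (i)--(ii) and Assumption~\ref{assump:tasks_diverse_main} the bracketed factor has norm at most $1-\Theta(\beta\alpha\mu_\ast^2)$, which both gives (iii) and, combined with a lower bound on $\sigma_{\min}(\mathbf{B}_{t+1})$ coming from the well‑conditioned $\mathbf{B}_\ast$ "signal" term and the identity $\dist_{t+1}\le\|\mathbf{B}_{\ast,\perp}^\top\mathbf{B}_{t+1}\|_2/\sigma_{\min}(\mathbf{B}_{t+1})$, yields (iv) with rate $(1-0.5\beta\alpha E_0\mu_\ast^2)$. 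Invariant (i) is propagated by noting $\mathbf{B}_{t+1}^\top\mathbf{B}_{t+1}$ moves by $O(\beta)$ per step and the update contracts $\|\mathbf{I}_k-\alpha\mathbf{B}_t^\top\mathbf{B}_t\|_2$ toward a small fixed point; here Assumption~\ref{assump:tasks_main} is used to bound the third‑ and fourth‑order products of ground‑truth heads created by MAML's representation adaptation by quantities of order $\alpha^2 k L_\ast^2=O(1)$.

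The genuinely new step is controlling the head dynamics. Expanding $\mathbf{w}_{t+1}$ around $\mathbf{w}_t$ small and $\mathbf{I}_k-\alpha\mathbf{B}_t^\top\mathbf{B}_t$ small gives the form $\mathbf{w}_{t+1}=(\mathbf{I}_k+\mathbf{M}_t)\mathbf{w}_t+\beta(\mathbf{I}_k-\alpha\mathbf{B}_t^\top\mathbf{B}_t)\mathbf{B}_t^\top\mathbf{B}_\ast\big(\tfrac1n\sum_i\mathbf{w}_{\ast,t,i}\big)+(\text{higher order})$, where the explicit $\tfrac1n\sum_i\mathbf{w}_{\ast,t,i}$ term is exactly the bias from having discarded $\hat{\mathbf{H}}_{t,i}$, and $\mathbf{M}_t$ is an operator of norm $O\big(\tfrac{\beta}{\alpha}(\|\mathbf{I}_k-\alpha\mathbf{B}_t^\top\mathbf{B}_t\|_2+\alpha^2 k L_\ast^2\dist_t^2)\big)$ whose third/fourth‑order head contributions are again bounded via Assumption~\ref{assump:tasks_main}. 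Because (i) and (iv) already force $\|\mathbf{I}_k-\alpha\mathbf{B}_t^\top\mathbf{B}_t\|_2$ and $\dist_t$ to decay geometrically, $\sum_t\|\mathbf{M}_t\|_2$ is finite, so iterating the recursion gives $\|\mathbf{w}_t\|_2 = O\big(\tfrac{\beta}{\,\beta\alpha\mu_\ast^2}\max_s\|\tfrac1n\sum_i\mathbf{w}_{\ast,s,i}\|_2\big)$ up to $\mathrm{poly}(\kappa_\ast)$ factors; the hypotheses $\alpha=O(1/(\sqrt k L_\ast))$ and $\|\tfrac1n\sum_i\mathbf{w}_{\ast,t,i}\|_2=O(k^{-1.5}\kappa_\ast^{-3}\mu_\ast)$ are precisely calibrated so this stays below the threshold required by (ii), and $\dist_0=O(k^{-0.5}\kappa_\ast^{-1})$ guarantees the prior‑weight/signal‑weight balance holds at $t=0$. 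The main obstacle is this coupling: the head bias enters $\mathbf{w}_t$, which enters the prior weight, the signal weight and $\mathbf{I}_k-\alpha\mathbf{B}_t^\top\mathbf{B}_t$, so one cannot close any of (i)--(iv) in isolation — the quantitative smallness conditions on $\dist_0$ and on $\|\tfrac1n\sum_i\mathbf{w}_{\ast,t,i}\|_2$ are exactly what make the joint induction go through. Once it closes, $\dist_T\le(1-0.5\beta\alpha E_0\mu_\ast^2)^{T-1}$ follows by unrolling (iv).
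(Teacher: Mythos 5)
Your proposal follows essentially the same route as the paper: a simultaneous induction on $\|\mathbf{I}_k-\alpha\mathbf{B}_t^\top\mathbf{B}_t\|_2$ (i.e., $\|\boldsymbol{\Delta}_t\|_2$), $\|\mathbf{w}_t\|_2$, $\|\mathbf{B}_{\ast,\perp}^\top\mathbf{B}_t\|_2$, and $\dist_t$; the identity $\mathbf{B}_{\ast,\perp}^\top\mathbf{B}_{t+1}=\mathbf{B}_{\ast,\perp}^\top\mathbf{B}_t\big(\mathbf{I}_k-(\mathbf{I}_k-\alpha\mathbf{w}_t\mathbf{w}_t^\top)\tfrac{\beta}{n}\sum_i\mathbf{w}_{t,i}\mathbf{w}_{t,i}^\top\big)$; the decomposition of $\mathbf{w}_{t,i}$ into shared and task-specific parts; and the observation that the leading-order bias in the head update is proportional to $\boldsymbol{\Delta}_t\mathbf{B}_t^\top\mathbf{B}_\ast\big(\tfrac1n\sum_i\mathbf{w}_{\ast,t,i}\big)$, which is why the small-$\eta_\ast$ hypothesis appears. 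The paper's inductive hypotheses for FO-MAML are exactly the quantities you name (Theorem~\ref{thm:fomaml_pop_app} and Lemmas~\ref{lem:maml_ind1}--\ref{lem:maml_ind3}), and your calibration of $\alpha=O(1/(\sqrt k L_\ast))$, $\dist_0=O(k^{-0.5}\kappa_\ast^{-1})$, and $\eta_\ast=O(\mu_\ast^4/L_{\max}^3)=O(k^{-1.5}\kappa_\ast^{-3}\mu_\ast)$ matches the paper's Assumption~\ref{assump:fomaml} under $L_{\max}=O(\sqrt k L_\ast)$.

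Two imprecisions worth flagging, neither fatal. First, you assert positive semidefiniteness of $\mathbf{I}_k-\alpha\mathbf{B}_t^\top\mathbf{B}_t$ in invariant (i); the paper never maintains a sign condition, only the norm bound $\|\boldsymbol{\Delta}_t\|_2\le\tfrac{E_0}{10}\alpha^2\mu_\ast^2$, and semidefiniteness is neither needed nor obviously preserved by the outer-loop update. Second, your head recursion $\mathbf{w}_{t+1}=(\mathbf{I}_k+\mathbf{M}_t)\mathbf{w}_t+(\text{bias})$ with $\sum_t\|\mathbf{M}_t\|_2<\infty$ underplays the structure the paper actually exploits: after isolating $-\beta\alpha^2\mathbf{B}_t^\top\mathbf{B}_\ast\boldsymbol{\Psi}_{\ast,t}\mathbf{B}_\ast^\top\mathbf{B}_t$, the coefficient of $\mathbf{w}_t$ is genuinely contractive with rate $1-\beta\alpha E_0\mu_\ast^2$ (not merely $1+o(1)$), and it is that contraction, rather than summability of a perturbation, that forces $\|\mathbf{w}_t\|_2=O(\sqrt\alpha\,\mu_\ast E_0^2\kappa_{\ast,\max}^{-3})$ uniformly; your stated bound $O\big(\eta_\ast/(\alpha\mu_\ast^2)\big)$ also dropped the $\|\boldsymbol{\Delta}_t\|_2=O(\alpha^2\mu_\ast^2)$ factor multiplying the bias and would be off by powers of $\alpha$ if taken literally.
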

Theorem \ref{thm:fomaml_pop_main} shows that FO-MAML learns $\col(\mathbf{B}_\ast)$ as long as the initial principal angle is small and $ \|\tfrac{1}{n}\sum_{i=1}^n\mathbf{w}_{\ast,t,i}\|_2= O(k^{-1.5}\kappa_{\ast}^{-3} \mu_\ast) $ on all iterations, due to the biased updates.
Note that the FO-ANIL updates are also biased, but this bias scales with $\|\mathbf{I}_k-\alpha\mathbf{B}_t^\top \mathbf{B}_t\|_2$, which is eventually decreasing quickly enough to make the cumulative error induced by the bias negligible without any additional conditions.  In contrast, $\|\mathbf{I}_k\!-\!\alpha\mathbf{B}_t^\top \mathbf{B}_t\|_2$ is not guaranteed to decrease for FO-MAML due to the inner loop adaptation of the representation, so we need the additional condition.

To the best of our knowledge, the above theorems  are the first results to show that ANIL, MAML, and their first-order approximations learn representations in any setting. Moreover, they are the first to show
how {\em task diversity} plays a key role in representation learning from an {optimization} perspective, to the best of our knowledge. 
Due the the restrictions on $\beta$ and $\alpha$, Theorems \ref{thm:anil_pop_main} and \ref{thm:maml_pop_main} show that the rate of contraction of principal angle distance diminishes with less task diversity. Thus, the more diverse the tasks, i.e. the smaller $\kappa_\ast$, the faster that ANIL and MAML learn the representation. Additionally, the less diverse the tasks, the more accurate initialization that MAML requires, and the tighter that the true heads must be centered around zero to control the FO-MAML bias.

}

\subsection{Finite-sample results}

Thus far we have only considered the infinite sample case, i.e.,  $m_{in}\!=\!m_{out}\!=\!\infty$, to highlight the reasons that the adaptation updates in MAML and ANIL are essential for representation learning. Next, we study the finite sample setting.
Indeed, establishing our results for the finite sample case is more challenging, but the mechanisms by which ANIL and MAML learn representations for finite $m_{in}$ and $m_{out}$ are very similar to the infinite-sample case, and the finite-sample problem reduces to showing concentration of the  updates to the infinite-sample updates.

For MAML, this concentration requires assumptions on  sixth and eighth-order products of the data which arise due to the inner-loop updates. In light of this, for the sake of  readability we  only give the finite-sample result for ANIL and FO-ANIL, whose analyses require only standard assumptions on the data, as we state below.
 \begin{assumption}[Sub-gaussian feature distribution]
\label{assump:data}
For $\mathbf{x}\sim p$,  $\mathbb{E}[\mathbf{x}]\!=\!\mathbf{0}$ and  $\Cov(\mathbf{x})\!=\!\mathbf{I}_d$. Moreover, $\mathbf{x}$ is $\mathbf{I}_d$-sub-gaussian in the sense that $\mathbb{E}[\exp(\mathbf{v}^\top \mathbf{x})]\!\leq\! \exp(\tfrac{\|\mathbf{v}\|_2^2}{2})$ $\forall\; \mathbf{v}$.
\end{assumption}
Under this assumption, we can show the following.
\begin{thm}[ANIL Finite Samples]\label{thm:anil_fs} Consider the finite-sample case for  ANIL and FO-ANIL.
Suppose Assumptions \ref{assump:tasks_diverse_main}, \ref{assump:tasks_main} and \ref{assump:data} hold,  $\alpha = O((\sqrt{k}L_\ast+\sigma)^{-1})$ 
and $\beta$ is chosen as in Theorem \ref{thm:anil_pop_main}.
For some $\delta \!>\!0$ to be defined later, let $E_0\! =\! 0.9\!-\!\dist^2_0-\delta$ and 
assume $E_0$ is lower bounded by a positive constant. Suppose the sample sizes  satisfy  $m_{in} = \tilde{\Omega}(M_{in})$ and $m_{out} = \tilde{\Omega}(M_{out})$ for some expressions $M_{in},M_{out}$ to be defined later. 
Then 
both ANIL and FO-ANIL satisfy:
\vspace{-1mm}
\begin{align}
  \dist(\mathbf{{B}}_{T}, \mathbf{{B}}_\ast ) &\leq \left(1 - 0.5\beta \alpha \mu_{\ast}^2\right)^{T-1} + \tilde{O}(\delta) \nonumber
\end{align}
\vspace{-2mm}
where for ANIL,
\vspace{-0.2mm}
\begin{align}
   M_{in}&=  k^3+\tfrac{k^3d}{n}, \quad M_{out}= k^2 + \tfrac{dk+k^3}{n}, \quad \delta=  (\sqrt{k}\kappa_{\ast}^2+\tfrac{\kappa_\ast\sigma}{\mu_\ast}+\tfrac{\sigma^2}{\mu_\ast^2})({k}\!+\!\tfrac{\sqrt{dk}}{\sqrt{n}})(\tfrac{1}{\sqrt{m_{in}}} \!+\!\tfrac{1}{\sqrt{m_{out}}}) \nonumber
\end{align}
\vspace{-2mm}
and for FO-ANIL,
\vspace{-0.2mm}
\begin{align}
  M_{in}&=  k^2
    , \quad M_{out}= \tfrac{dk+  k^3}{n}, \quad \delta= (\sqrt{k}\kappa_{\ast}^2+\tfrac{\kappa_\ast\sigma}{\mu_\ast}+ \tfrac{\sigma^2}{\mu_\ast^2\sqrt{m_{in}}})\tfrac{\sqrt{dk}}{\sqrt{nm_{out}}} \nonumber
\end{align}
\vspace{-0mm}
with probability at least $1\!-\!\tfrac{T}{\poly(n)}\!-\!\tfrac{T}{\poly(m_{in})}\!-O(Te^{-90k})$.
\end{thm}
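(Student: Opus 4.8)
The plan is to show that the finite-sample ANIL/FO-ANIL trajectory stays uniformly close to the infinite-sample trajectory analyzed in Theorem~\ref{thm:anil_pop_main}, so that the same multi-way contraction argument goes through up to an additive $\tilde O(\delta)$ floor. Since every per-task loss is quadratic, the inner-loop head update has the closed form
\begin{align}
\mathbf{w}_{t,i} &= (\mathbf{I}_k - \alpha \mathbf{B}_t^\top \hat{\mathbf{\Sigma}}_{t,i}^{in}\mathbf{B}_t)\mathbf{w}_t + \alpha \mathbf{B}_t^\top \hat{\mathbf{\Sigma}}_{t,i}^{in}\mathbf{B}_\ast \mathbf{w}_{\ast,t,i} + \alpha \mathbf{B}_t^\top \mathbf{z}_{t,i}^{in}, \nonumber
\end{align}
with $\hat{\mathbf{\Sigma}}_{t,i}^{in} := \tfrac{1}{m_{in}}\sum_j \mathbf{x}_{t,i,j}\mathbf{x}_{t,i,j}^\top$ and $\mathbf{z}_{t,i}^{in} := \tfrac{1}{m_{in}}\sum_j z_{t,i,j}\mathbf{x}_{t,i,j}$, and likewise the outer-loop updates for $\mathbf{B}_t$ and $\mathbf{w}_t$ (including, for exact ANIL, the Hessian block $\hat{\mathbf{H}}_{t,i}$) are affine in $\hat{\mathbf{\Sigma}}_{t,i}^{out}, \mathbf{z}_{t,i}^{out}$. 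Writing $\hat{\mathbf{\Sigma}}_{t,i}^{in} = \mathbf{I}_d + \mathbf{E}_{t,i}^{in}$, $\hat{\mathbf{\Sigma}}_{t,i}^{out} = \mathbf{I}_d + \mathbf{E}_{t,i}^{out}$ and expanding, each update equals its infinite-sample counterpart from the proof of Theorem~\ref{thm:anil_pop_main} (with $\mathbf{w}_{t,i}$ replaced by its population inner-loop value) plus an explicit perturbation that is linear in the centered matrices $\mathbf{E}_{t,i}^{in}, \mathbf{E}_{t,i}^{out}$ and the noise-gradients $\mathbf{z}_{t,i}^{in}, \mathbf{z}_{t,i}^{out}$, with coefficients polynomial in the quantities $\|\mathbf{w}_t\|_2, \|\mathbf{B}_t\|_2, \|\mathbf{w}_{\ast,t,i}\|_2$ that the induction keeps bounded.

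First I would record the concentration estimates under Assumption~\ref{assump:data}. The perturbations always appear either sandwiched between $\mathbf{B}_t, \mathbf{B}_\ast$ (inner loop) or contracted with $\mathbf{w}_{t,i}$ and averaged over the $n$ tasks (outer loop), so the effective dimension is $k$ in the inner loop and $dk/n$ in the outer loop: sub-gaussian covariance and matrix-Bernstein bounds give $\|\mathbf{B}_t^\top\mathbf{E}_{t,i}^{in}\mathbf{B}_\ast\|_2, \|\mathbf{B}_t^\top\mathbf{E}_{t,i}^{in}\mathbf{B}_t\|_2 = \tilde O(\sqrt{k/m_{in}})$ and $\|\mathbf{B}_t^\top\mathbf{z}_{t,i}^{in}\|_2 = \tilde O(\sigma\sqrt{k/m_{in}})$, while $\|\tfrac1n\sum_i \mathbf{E}_{t,i}^{out}(\mathbf{B}_t\mathbf{w}_{t,i}-\mathbf{B}_\ast\mathbf{w}_{\ast,t,i})\mathbf{w}_{t,i}^\top\|_2$ and $\|\tfrac1n\sum_i \mathbf{z}_{t,i}^{out}\mathbf{w}_{t,i}^\top\|_2$ scale like $\tilde O((\sqrt{k}L_\ast+\sigma)(k+\sqrt{dk/n})/\sqrt{m_{out}})$. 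Matching these against the affine expansion yields the advertised $M_{in}, M_{out}, \delta$; the extra $k^3 d/n$ factor in $M_{in}$ for exact ANIL comes from concentrating the Hessian block $\hat{\mathbf{H}}_{t,i}$, absent in FO-ANIL, hence the milder $M_{in}=\tilde\Omega(k^2)$ there. A union bound over $t\in[T], i\in[n]$ of all these events gives the failure probability $1-\tfrac{T}{\poly(n)}-\tfrac{T}{\poly(m_{in})}-O(Te^{-90k})$, the last term from the $k$-dimensional covariance concentrations.

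Conditioned on those events, the core is a perturbed version of the multi-way induction behind Theorem~\ref{thm:anil_pop_main}: for each $t$ I would carry the hypotheses (i) $\|\mathbf{w}_t\|_2$ small; (ii) $\|\mathbf{I}_k - \alpha\mathbf{B}_t^\top\mathbf{B}_t\|_2$ small; (iii) $\mathbf{B}_t$ well-conditioned and $\boldsymbol{\Psi}_t := \tfrac1n\sum_i\mathbf{w}_{t,i}\mathbf{w}_{t,i}^\top$ satisfying $\Omega(\mu_\ast^2)\mathbf{I}\preceq\boldsymbol{\Psi}_t\preceq O(L_\ast^2)\mathbf{I}$; (iv) $\|\mathbf{B}_{\ast,\perp}^\top\mathbf{B}_t\|_2 \le$ (geometric term) $+\,\tilde O(\delta)$; (v) $\dist_t \le (1-0.5\beta\alpha\mu_\ast^2)^{t-1}+\tilde O(\delta)$. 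The step $t\to t+1$ for (iv)–(v) mirrors the infinite-sample computation: by (iii) the ``prior weight'' $\mathbf{I}_k-\tfrac\beta n\sum_i\mathbf{w}_{t,i}\mathbf{w}_{t,i}^\top$ contracts the $\col(\mathbf{B}_\ast)^\perp$-component of $\mathbf{B}_t$ by $1-\Omega(\beta\mu_\ast^2)$, the ``signal'' term lies in $\col(\mathbf{B}_\ast)$, and the finite-sample perturbation adds at most $\tilde O(\delta)$ after dividing out the step size — which is exactly why the contraction saturates at the $\tilde O(\delta)$ floor. Hypotheses (i)–(iii) are maintained from the closed form for $\mathbf{w}_{t,i}$ (so $\mathbf{w}_{t,i} = \alpha\mathbf{B}_t^\top\mathbf{B}_\ast\mathbf{w}_{\ast,t,i}$ up to a task-shared vector controlled by (i)–(ii) and a finite-sample perturbation), Assumption~\ref{assump:tasks_diverse_main}, and the closeness of $\col(\mathbf{B}_t)$ to $\col(\mathbf{B}_\ast)$; finally (v) follows from (ii) and (iv) since $\alpha\mathbf{B}_t^\top\mathbf{B}_t\approx\mathbf{I}_k$ makes the orthonormalization in $\dist_t$ harmless.

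I expect the binding difficulty to be hypothesis (iii) — preserving diversity of the \emph{adapted} heads, i.e. $\boldsymbol{\Psi}_t\succeq\Omega(\mu_\ast^2)\mathbf{I}$ — in the presence of the per-task inner-loop perturbation $\alpha\mathbf{B}_t^\top(\mathbf{E}_{t,i}^{in}\mathbf{B}_\ast\mathbf{w}_{\ast,t,i}+\mathbf{z}_{t,i}^{in})$, which is correlated with $\mathbf{w}_{\ast,t,i}$ and need not average favorably across the $n$ tasks; forcing its spectral contribution below $\mu_\ast^2$ is what drives the requirement $m_{in}=\tilde\Omega(k^3+k^3d/n)$ for exact ANIL. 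A second difficulty is keeping $\|\mathbf{w}_t\|_2$ small over all $T$ iterations: the finite-sample outer-loop $\mathbf{w}_t$-recursion carries a non-vanishing bias term (from the nonlinear dependence of $\mathbf{w}_{t,i}$ on the inner-loop data, scaling with $\|\mathbf{w}_t\|_2$ and $m_{in}^{-1/2}$), so one must show this recursion is a contraction with a stable $\tilde O(\delta)$ fixed point rather than a slowly diverging one — again relying on the well-conditionedness of $\boldsymbol{\Psi}_t$ to supply the contraction, which closes the induction.
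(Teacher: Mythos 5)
Your high-level plan -- treating the finite-sample trajectory as a perturbation of the population trajectory of Theorem~\ref{thm:anil_pop_main}, carrying the same five inductive hypotheses with an additional $\tilde O(\delta)$ slack, and closing the induction by bounding the gap between finite-sample and population outer-loop gradients -- is exactly the paper's architecture (Appendix~\ref{app:anil_fs}, with hypotheses $A_1,\dots,A_5$ and noise terms $\zeta_1,\zeta_2,\zeta_4$). Your identification of the two pressure points (preserving $\boldsymbol{\Psi}_t \succeq \Omega(\mu_\ast^2)\mathbf{I}$ under per-task inner-loop noise, and preventing slow divergence of $\|\mathbf{w}_t\|_2$) also matches the paper.

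The gap is in the claim that ``sub-gaussian covariance and matrix-Bernstein bounds'' suffice for the gradient concentrations. That is true for the \emph{second-order} products in the inner loop (which are sandwiched down to $k\times k$) and for most of the FO-ANIL outer-loop terms, but it fails for the fourth-order products that arise in Exact ANIL. The Exact ANIL outer-loop gradient $\nabla_{\mathbf{B}}\hat F_{t,i}$ contains terms of the schematic form $\tfrac1n\sum_i \mathbf{\Sigma}_{t,i}^{in}\,\mathbf{C}_2\mathbf{C}_3^\top\,\mathbf{\Sigma}_{t,i}^{in}\,\mathbf{C}_4$ (and mixed $\mathbf{\Sigma}^{in}\mathbf{\Sigma}^{out}\mathbf{\Sigma}^{in}$ products), where the \emph{same} empirical covariance $\mathbf{\Sigma}_{t,i}^{in}\in\mathbb{R}^{d\times d}$ appears twice and the leading one is never reduced to $k$ dimensions. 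These are fourth moments of a $d$-dimensional sub-gaussian vector: the individual summands are not sub-exponential with $\poly(k)$ parameter, so a direct matrix-Bernstein argument would require $m_{in}=\tilde\Omega(d)$ per task -- which defeats the entire sample-complexity point of the theorem. The paper's actual route (Lemma~\ref{lem:fourth} and Theorem~\ref{thm:magen}) is to exploit that each summand has rank at most $k$, apply a truncated version of the low-rank matrix concentration theorem from \citet{magen2011low}, and separately control the bias of the truncated fourth-order term using the $L_4$-$L_2$-hypercontractivity of the sub-gaussian data (Lemma~\ref{lem:L4L2}). This is what produces the $k^3 d/n$ term in $M_{in}$ for Exact ANIL (the $k^3$ comes from the rank-$k$ truncation radius, and $d/n$ from averaging over tasks), rather than the $dk/n$ your Bernstein estimate would suggest. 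So while your paragraph attributing the extra $k^3 d/n$ to the Hessian block is pointing at the right culprit, the reason standard tools break there -- and the replacement tool -- is the missing ingredient, and without it the proposal would not actually deliver $o(d)$ samples per task for Exact ANIL.
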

For ease of presentation, the $\tilde{\Omega}()$ notation excludes log factors and all parameters besides $k,d$ and $n$; please see Theorem \ref{thm:anil_fs_app} in Appendix \ref{app:anil_fs} for the full statement. 
We focus on dimension parameters and $n$ here to highlight the sample complexity benefits conferred by ANIL and FO-ANIL compared to solving each task separately ($n\!=\!1$).
Theorem \ref{thm:anil_fs} shows that ANIL requires only  $m_{in}+m_{out} = \tilde{\Omega}(\frac{k^3d}{n}+ k^3)$  samples per task to reach a neighborhood of the ground-truth solution. 
Since $k\ll d$ and $n$ can be large, this sample complexity is far less than the 
$\tilde{\Omega}(d)$ required to solve each task individually \citep{hsu2012random}. Note that more samples are required for Exact ANIL because the second-order updates involve higher-order products of the data, which have heavier tails than the analogous terms for FO-ANIL.

\section{Proof sketch} \label{sec:sketch}

We now  discuss how we prove the results in greater detail. We focus on the FO-ANIL case because the presentation is simplest yet still illuminates the key ideas used in all proofs.

\subsection{Theorem 1 (FO-ANIL)}

\noindent {\textbf{Intuition.}} 
Our goal is to show that the distance between the column spaces of $\mathbf{B}_{t}$ and $\mathbf{B}_\ast$, i.e. $\dist_{t} \coloneqq \|\mathbf{\hat{B}}_{\ast,\perp}^\top\mathbf{\hat{B}}_{t}\|_2$ is converging to zero at a linear rate for all $t$. 
We will use an inductive argument in which we assume favorable conditions to hold up to time $t$, and will prove they continue to hold at time $t+1$.
To show $\dist_{t+1}$ is linearly decaying, it is helpful to first consider the non-normalized energy in the subspace orthogonal to the ground-truth, namely $\|\mathbf{\hat{B}}_{\ast,\perp}^\top\mathbf{{B}}_{t+1}\|_2$.  
 We have seen in equation \eqref{intuition} that if the inner-loop adapted heads $\mathbf{w}_{t,i}$ at time $t$ are diverse, then the FO-ANIL update of the representation subtracts energy from the previous representation and adds energy from the ground-truth representation.
Examining \eqref{intuition} closer, we see that the only energy in the column space of the new representation that can be orthogonal to the ground-truth subspace is contributed by the previous representation, and this energy 
is contracting at a rate proportional to  the condition number of the matrix formed by the adapted heads. In particular, if we define the matrix $\boldsymbol{\Psi}_t\coloneqq \tfrac{1}{n} \sum_{i=1}^n \mathbf{w}_{t,i} \mathbf{w}_{t,i}^\top$, then we have
\begin{align}
    \|\mathbf{B}_{\ast,\perp}^\top \mathbf{B}_{t+1}\|_2 &=\|\mathbf{B}_{\ast,\perp}^\top \mathbf{B}_{t}(\mathbf{I}-\beta \mathbf{\Psi}_t)\|_2 
    \leq ( 1 - \beta \lambda_{\min}(\mathbf{\Psi}_t))\|\mathbf{B}_{\ast,\perp}^\top \mathbf{B}_{t}\|_2, \label{psisi}
\end{align}
as long as $\beta \leq \nicefrac{1}{\lambda_{\max}(\mathbf{\Psi}_t)}$. 
Therefore, to show that the normalized energy $\|\mathbf{\hat{B}}_{\ast,\perp}^\top\mathbf{\hat{B}}_{t+1}\|_2$ approaches zero, we aim to show: (I) The condition number of $\mathbf{\Psi}_t$ continues to stay controlled and finite, which implies linear convergence of the non-normalized energy in $\col(\mathbf{B}_\ast)^\perp$ according to \eqref{psisi}; and (II) The minimum singular value of the representation $\mathbf{B}_{t+1}$ is staying the same. Otherwise, the energy orthogonal to the ground-truth subspace could be decreasing, but the representation could be becoming singular, which would mean the distance to the ground-truth subspace is not decreasing.

To show (I), note that the adapted heads are given by:
\begin{align}
    \mathbf{w}_{t,i}=
    \underbrace{\del_t\mathbf{w}_t}_{\text{non-unique}}+\alpha \underbrace{\mathbf{B}_{t}^\top \mathbf{B}_{\ast}\mathbf{w}_{\ast,t,i}}_{\text{unique}}, \label{updw}
    \vspace{-2mm}
\end{align}
where $\del_t \coloneqq \mathbf{I}_k - \alpha \b^\top \b$. 
The vector $\del_t \mathbf{w}_t$ is present in every $\mathbf{w}_{t,i}$,
so we refer to it as the non-unique part of $\mathbf{w}_{t,i}$. On the other hand, $\alpha \mathbf{B}_{t}^\top \mathbf{B}_{\ast}\mathbf{w}_{\ast,t,i}$ is the unique part of $\mathbf{w}_{t,i}$. 
Equation \eqref{updw} shows that if the non-unique part of each $\mathbf{w}_{t,i}$ is relatively small compared to the unique part, then $\mathbf{\Psi}_t \approx \alpha^2 \mathbf{B}_t^\top \mathbf{B}_\ast \mathbf{\Psi}_{\ast,t} \mathbf{B}_\ast^\top \mathbf{B}_t$, 
meaning the $\mathbf{w}_{t,i}$'s are almost as diverse as the ground-truth heads. So we aim to show $\|\del_t\|_2$ and $\|\mathbf{w}_t\|_2$ remain small for all $t$. We specifically need to show they are small compared to $\sigma_{\min}^2(\mathbf{B}_{t}^\top \mathbf{B}_\ast)$, since this quantity roughly lower bounds the energy in the diverse part of $\mathbf{w}_{t,i}$. {One can show that $\sigma_{\min}^2(\mathbf{\hat{B}}_{t}^\top \mathbf{B}_\ast) = 1 - \dist_t^2$, so we need to use that $\dist_t$ is decreasing in order to lower bound the energy in the unique part of $\mathbf{w}_{t,i}$.}


It is also convenient to track $\|\del_t\|_2$ in order to show (II), since $\|\del_{t+1}\|_2\leq \varepsilon$ implies  $\sigma_{\min}(\mathbf{B}_{t+1})\geq  \tfrac{\sqrt{1-\varepsilon}}{\sqrt{\alpha}}$. Note that for (II), we need control of $\|\del_{t+1}\|_2$, whereas to show (I) we needed control of $\|\del_t\|_2$. This difference in time indices is accounted for by the induction we will soon discuss.


We can now see why it makes sense to initialize with $\|\del_0\|_2=0$ and $\|\mathbf{w}_t\|_2=0$ (in fact, they do not have to be exactly zero; any initialization with $\|\mathbf{w}_0\|_2=O(\sqrt{\alpha})$ and  $\|\del_t\|_2=O({\alpha}^2)$ would work). 
However, proving that $\|\del_t\|_2$ and $\|\mathbf{w}_t\|_2$ remain small  is difficult
because the algorithm lacks explicit regularization or a normalization step after each round. 
Empirically, $\sigma_{\min}(\mathbf{B}_t)$ may decrease and $\|\mathbf{w}_t\|_2$ may increase on any particular round, so it is not clear why $\sigma_{\min}(\mathbf{B}_t)$ does not go to zero (i.e. $\|\del_t\|_2$ does not go to 1) and $\|\mathbf{w}_t\|_2$ does not blow up. To address these issues, one could add an explicit regularization term to the loss functions or an orthonormalization step to the algorithm, but doing so is empirically unnecessary and  would not be consistent with the ANIL formulation or algorithm.

\begin{figure*}[t]
\vspace{3mm}
\begin{center}
\centerline{\includegraphics[width=\textwidth]{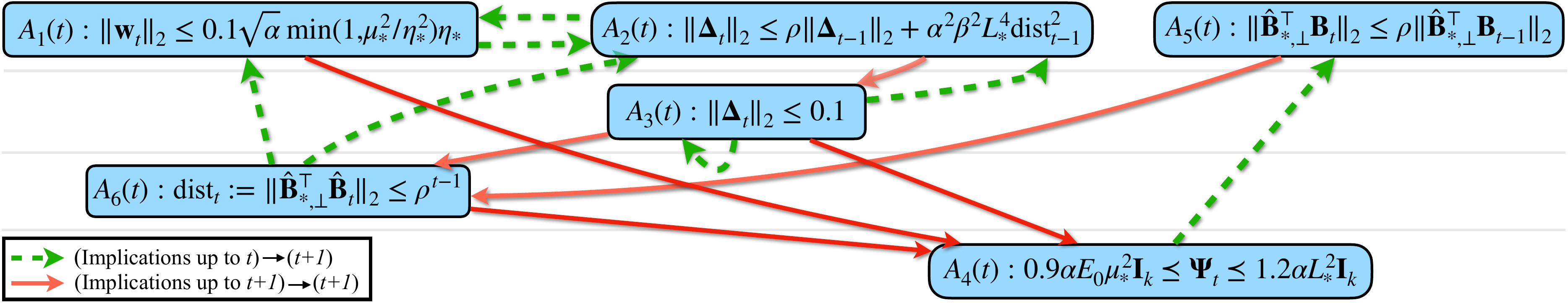}
}
\vspace{-2mm}
\caption{Logical flow of the proof. Note that there are no cycles among the implications from $t+1$ to $t+1$, so the logic is consistent.}
\vspace{-8mm}
\label{fig:flow}
\end{center}
\end{figure*}

\textbf{Inductive structure.}
We overcome the aforementioned challenges by executing a multi-way induction that involves the following six inductive hypotheses:
\begin{enumerate}
\vspace{-2mm}
\item $A_1(t) \coloneqq \{\|\mathbf{w}_t\|_2 = O(\sqrt{\alpha} \min(1,\tfrac{\mu_\ast^2}{\eta_\ast^2})\eta_\ast  )\}$,
   \item $A_2(t) \!\coloneqq\! \!\big\{\|\del_t\|_2 \!\leq\! \rho\|\del_{t-1}\|_2
 \!+\! O(\beta^2\alpha^2 L_{\ast}^4 \dist^2_{t-1})\big\}$, 
       \item $A_3(t) \coloneqq \big\{\|\del_t  \|_2 \leq \tfrac{1}{10}\big\}$, 
       \item $A_4(t) \coloneqq \big\{0.9\alpha E_0 \mu_\ast\mathbf{I}_k \preceq \mathbf{\Psi}_t \preceq 1.2\alpha L_\ast^2 \mathbf{I}_k \big\}$, 
        \item $A_5(t) \coloneqq \big\{\|\mathbf{{B}}_{\ast,\perp}^\top \mathbf{{B}}_{t} \|_2  \leq \rho \|\mathbf{{B}}_{\ast,\perp}^\top \mathbf{{B}}_{t-1} \|_2  \big\}$,  
     \item  $A_6(t) \coloneqq \{\dist_{t}=\|\mathbf{\hat{B}}_{\ast,\perp}^\top\mathbf{\hat{B}}_t\|_2 \leq \rho^{t-1} \}$,  
\end{enumerate}
\vspace{-2mm}
where $\rho\!=\!1 - 0.5\beta \alpha E_0 \mu_\ast^2$.  Our previous intuition motivates our choice of inductive hypotheses $A_1(t),\dots,A_5(t)$ as intermediaries to ultimately show that $\dist_t$ linearly converges to zero.  More specifically,  $A_1(t),A_2(t),$ and $A_3(t)$ bound  $\|\mathbf{w}_t\|_2$ and  $\|\del_t\|_2$, $A_4(t)$ controls the diversity of the inner loop-adapted heads, and  $A_5(t)$ and $A_6(t)$ confirm that the learned representation approaches the ground-truth.
We employ two upper bounds on $\|\del_t\|_2$ because we need to use that $\{\|\del_t\|_2\}_t$ is both summable $(A_2(t))$ and uniformly small $(A_3(t))$ to complete different parts of the induction. In particular, if true for all $t$, $A_2(t)$ shows that $\|\del_t\|_2$ may initially increase, but eventually linearly converges to zero due to the linear convergence of $\dist_t$. The initialization implies each inductive hypothesis holds at time $t=1$. We must show they hold at time $t+1$ if they hold up to time $t$.

To do this, we employ the logic visualized in 
Figure \ref{fig:flow}. 
The top level events ($A_1(t+1), A_2(t+1), A_5(t+1)$) are most ``immediate'' in the sense that they follow directly from other events at all times up to and including $t$ (via the dashed green arrows). The proofs of all other events at time $t\!+\!1$ require the occurrence of other events at time $t+1$, with more logical steps needed as one moves down the graph, and solid red arrows denoting implications from and to time $t+1$. In particular, $A_3(t+1)$ requires the events up to and including time $t$ {\em and} a top-level event at $t+1$, namely $A_2(t+1)$, so it is in the second level. Similarly, $A_6(t+1)$ requires events up to and including  time $t$ and the second-level event at $t\!+\!1$, so it is in the third level, and so on. 

Recall that our intuition is that diverse adapted heads leads to contraction of the non-normalized representation distance. We see this logic in the implication $A_4(t)\!\implies\! A_5(t+1)$. 
 We then reasoned that contraction of the non-normalized distance leads to linear convergence of the distance as long as the minimum singular value of the representation is controlled from below. This intuition is captured in the implication  $A_5(t+1)\cap A_3(t+1)\! \implies \!A_6(t+1)$.

We also discussed that the diversity of the adapted heads depends on the global head being small, the representation being close to a scaled orthonormal matrix, and the representation distance being bounded away from 1 at the start of that iteration. This is seen in the implication showing that the adapted heads are again diverse on iteration $t+1$, in particular
 $A_1(t+1)\cap A_3(t+1)\cap A_6(t+1)\! \implies\! A_4(t+1)$.
The other implications in the graph are technical and needed to control $\|\mathbf{w}_{t+1}\|_2$ and $\|\del_{t+1}\|_2$.

\textbf{Showing the implications.}
We now formally discuss each implication, starting with the top level. Full proofs are provided in Appendix \ref{app:anil}.
\begin{itemize}[leftmargin=10pt]
        \item $A_4(t)\!\implies \! A_5(t\!+\!1)$. This is true by equation \eqref{psisi}.
    \item $A_1(t)\cap A_3(t)\cap A_6(t) \!\implies\! A_2(t\!+\!1)$. It can be shown that $\del_{t+1}$ is of the form:
    \begin{align}
        \del_{t+1}&= \del_t(\mathbf{I}_k- \beta \alpha^2  \b^\top \mathbf{B}_\ast \mathbf{\Psi}_{\ast, t} \mathbf{B}_\ast^\top \b) + \mathbf{N}_{t} \nonumber
    \end{align}
    for some matrix $\mathbf{N}_{t}$ whose norm is upper bounded by a linear combination of $\|\del_t\|_2$ and $\dist_t$. We next use
    \begin{align}
        \lambda_{\min}(\b^\top \mathbf{B}_\ast \mathbf{\Psi}_{\ast, t} \mathbf{B}_\ast^\top \b ) &\geq  \mu_\ast^2 \sigma_{\min}^2(\b^\top \mathbf{B}_\ast)\nonumber \\
        &\geq  \tfrac{0.9}{\alpha} \mu_\ast^2 (1 - \dist_t^2 )\label{eoo}
    \end{align}
    where \eqref{eoo} follows by $\sigma_{\min}^2(\mathbf{\hat{B}}_t^\top \mathbf{B}_\ast)=1-\dist_t^2$ and $A_3(t)$.  The proof follows by applying $A_6(t)$ to control $1-\dist_t^2$.
    \item $\left( \cap_{s=1}^{t}A_2(s) \cap A_6(s)\right)\!\implies\! A_1(t\!+\!1)$. This is the most difficult induction to show. The FO-ANIL dynamics are such that $\|\mathbf{w}_t\|_2$ may increase on every iteration throughout the entire execution of the algorithm. However, we can exploit the fact that the amount that it increases is proportional to $\|\del_t\|_2$, which we can show is summable due 
    to the linear convergence of $\dist_t$. First, we have
    \begin{equation*}
        \mathbf{w}_{t+1} = (\mathbf{I}_k - \beta \mathbf{B}_t^\top \mathbf{B}_t\del_t)\mathbf{w}_t +\frac{\beta}{n}\sum_{i=1}^n \del_t \mathbf{B}_t^\top \mathbf{B}_\ast \mathbf{w}_{\ast,t,i}
    \end{equation*}
    which implies $\|\mathbf{w}_t\|_2$ increases on each iteration by $O(\frac{\beta}{\sqrt{\alpha}}\|\del_{t}\|_2 \eta_\ast)$. In particular,
     \begin{align}
        \|\mathbf{w}_{t+1}\|_2 &\stackrel{(a)}{\leq} (1 +\tfrac{2\beta}{\alpha}\|\del_t\|_2)\|\mathbf{w}_{t}\|_2 + \tfrac{2\beta L_\ast}{\sqrt{\alpha}}\|\del_t\|_2 \nonumber \\
        &\stackrel{(b)}{\leq} \sum_{s=0}^{t-1}\tfrac{2\beta \eta_\ast}{\sqrt{\alpha}}\|\del_s\|_2 \prod_{r=s}^{t-1}(1 +\tfrac{2\beta}{\alpha}\|\del_r\|_2)  \nonumber \\
        &\stackrel{(c)}{\leq} \sum_{s=0}^{t-1}\tfrac{2\beta \eta_\ast}{\sqrt{\alpha}}\|\del_s\|_2 \Big(1 +\tfrac{1}{t-s}\sum_{r=s}^{t-1}\tfrac{2\beta}{\alpha}\|\del_r\|_2\Big)^{t-s} \nonumber 
    \end{align}
    where $(b)$ follows by recursively applying $(a)$ for $t,t\!-\!1,...$. and $(c)$ follows by the AM-GM inequality.
    Next, for any $s\in[t]$, recursively apply $A_2(s)$,$A_2(s\!-\!1),...$ and use $A_6(r)\; \forall r \in [s]$ to obtain, for an absolute constant $c$,
    \begin{align}
        \|\del_{s}\|_2 
        &\stackrel{(d)}{\leq} c\sum_{r=0}^{s-1} \rho^{s-r}\beta^2 \alpha^2 L_\ast^4 \dist_{r}^2 \leq c \rho^{s} \sum_{r=0}^{s-1}\rho^r \beta^2 \alpha^2 L_\ast^4 \nonumber
    \end{align}
    Plugging $(d)$ into $(c)$, computing the sum of geometric series, and applying the choice of $\beta$ completes the proof.
    \item $A_2(t\!+\!1) \cap A_3(t)\! \implies\! A_3(t\!+\!1)$. This follows straightforwardly since $\beta$ is chosen sufficiently small.
    \item $A_3(t\!+\!1) \cap \left(\cap_{s=1}^{t+1} A_5(s)\right) \cap A_6(t)\! \implies \! A_6(t\!+\!1)$.  Using the definition of the principal angle distance, the Cauchy-Schwarz inequality, and $\cap_{s=1}^{t+1} A_5(s)$, we can show
    \begin{align*}
        \dist_{t+1} &\leq  \tfrac{1}{\sigma_{\min}(\mathbf{B}_{t+1})}\|\mathbf{\hat{B}}_{\ast,\perp}^\top \mathbf{{B}}_{t+1}\|_2
        \leq \tfrac{\sigma_{\max}(\mathbf{B}_{0})}{\sigma_{\min}(\mathbf{B}_{t+1})}\rho^t\dist_0 
    \end{align*}
    from which the proof follows after applying $A_3(t\!+\!1)$ and the initial conditions. Note that here we have normalized the representation only once at time $t\!+\!1$ and used the contraction of the non-normalized energy to recurse from $t\!+\!1$ to $0$, resulting in a $\tfrac{\sigma_{\max}(\mathbf{B}_0)}{\sigma_{\min}(\mathbf{B}_{t+1})}$ scaling error. If we instead tried to directly show the contraction of distance and thereby normalized analytically on every round, we would obtain $\dist_{t+1}\leq \big(\prod_{s=0}^t \tfrac{\sigma_{\max}(\mathbf{B}_{s})}{\sigma_{\min}(\mathbf{B}_{s+1})}\big) \rho^t \dist_0$, meaning a $\prod_{s=0}^t \tfrac{\sigma_{\max}(\mathbf{B}_{s})}{\sigma_{\min}(\mathbf{B}_{s+1})}$ scaling error, which is too large because $\mathbf{B}_s$ is in fact not normalized on every round.
    \item $A_1(t+1) \cap  A_3(t+1) \cap A_6(t+1)\!\implies \! A_4(t+1)$. This follows by expanding each $\mathbf{w}_{t,i}$ as in \eqref{updw}, and using similar logic as in \eqref{eoo}. 
\end{itemize}

\subsection{Other results -- ANIL, FO-MAML, and MAML}

For ANIL, the inductive structure is nearly identical. The only meaningful change in the proof is that the second-order updates imply $\|\mathbf{w}_{t+1}\|_2-\|\mathbf{w}_{t}\|_2= O(\|\del_t\|_2^2)$, which is smaller than the $O(\|\del_t\|_2)$ for FO-ANIL, and thereby allows to control $\|\mathbf{w}_{t+1}\|_2$ with  a potentially larger $\beta$. 

For FO-MAML and MAML, recall that the inner loop update of the representation weakens the benefit of adapted head diversity (see Section~\ref{sec:intuition}). Thus, {larger} adapted head diversity is needed to learn $\col(\mathbf{B}_\ast)$.
Specifically, we require a tighter bound of $\|\del_t\|_2=O(\alpha^2)$, compared to the  $\|\del_t\|_2=O(1)$ bound in ANIL, and for FO-MAML, we also require a tighter bound on $\|\w\|_2$ (recall from Section \ref{sec:sketch} that smaller $\|\del_t\|_2$ and $\|\w\|_2$ improves adapted head diversity). Moreover, to obtain tight bounds on $\|\mathbf{w}_{t+1}\|_2$ we can no longer use that $\|\mathbf{w}_{t+1}\|_2-\|\mathbf{w}_{t}\|_2$ is controlled by $\|\del_t\|_2$ due to to additional terms in the outer loop update. To overcome these issues, we must make 
stricter assumptions on the initial distance, and in the case of FO-MAML, on the average ground-truth head. See Appendix \ref{app:maml} for details.




\begin{figure}[t]
\begin{center}
\centerline{\includegraphics[width=0.6\columnwidth]{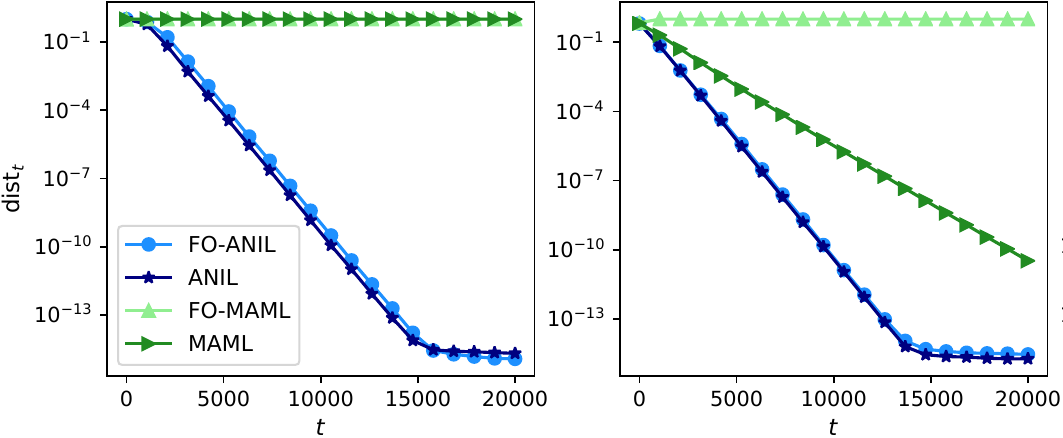}
}
\vspace{-2mm}
\caption{
(Left) Random $\mathbf{B}_0$. (Right) Methodical $\mathbf{B}_0$. In both cases, the mean ground-truth head is far from zero. 
}
\vspace{-8mm}
\vspace{-2mm}
\label{fig:sims1}
\end{center}
\end{figure}



Finally, the proof of Theorem \ref{thm:anil_fs} relies on showing concentration of the finite-sample gradients to the population gradients. 
The principal challenge is showing this concentration  for fourth-order products of the data that arise in the ANIL updates, since we cannot apply standard methods to these higher-order products while maintaining $o(d)$ samples per task. To get around this, we leverage the low-rankness of the products by applying a truncated version of the concentration result for low-rank random matrices from \cite{magen2011low}. We also use the L4-L2-hypercontractiveness of the data to control the bias in these higher-order products. Details are found in Appendix \ref{app:anil_fs}.
\section{Numerical simulations} \label{sec:sims}

{
We next show that the additional conditions required for MAML and FO-MAML to learn the ground-truth representation are empirically necessary.  That is, (i) MAML and FO-MAML require a good initialization relative to the underlying task diversity, and (ii) FO-MAML further requires  the ground-truth heads to be concentrated around zero.
To test these conditions,  we set $d\!=\!20$, $n\!=\!k=\!3$, randomly draw $\mathbf{B}_\ast$, and use the task population losses. {The ground-truth heads are drawn as $\mathbf{w}_{\ast,t,i}
{\sim} \mathcal{N}(10 \mathbf{1}_k, \mathbf{I}_k)$.}
Ground-truth task diversity is thus low, since most of the energy points in the direction $\mathbf{1}_k$. In Figure \ref{fig:sims1} (left), we use a random Gaussian initialization of $\mathbf{B}_0$, which has $\dist_0\approx 0.99$. In \ref{fig:sims1} (right), we initialize with a noisy version of $\mathbf{B}_\ast$ satisfying $\dist_0 \in [0.65,0.7]$. The plots show that in this low-diversity setting, MAML requires good initialization to achieve linear convergence, whereas FO-MAML cannot obtain it even with good initialization, as $\|\mathbb{E}[\mathbf{w}_{\ast,t,i}]\|\!\gg\! 0$. {Lastly, note that in the same setting except $\|\mathbb{E}[\mathbf{w}_{\ast,t,i}]\|\!=\!0$ as in Figure \ref{fig:intro}, all four GBML approaches  learn $\col(\mathbf{B}_\ast)$, as expected.}





}

\section{Conclusion}

Our analysis reveals that ANIL, MAML and their first-order approximations exploit task diversity via inner adaptation steps of the head to recover the ground-truth representation in the multi-task linear representation learning setting. 
Further, task diversity helps these algorithms to exhibit an implicit regularization that keeps the learned representation well-conditioned. 
However, the inner adaptation of the representation plays a restrictive role, inhibiting MAML and FO-MAML from achieving global convergence. To the best of our knowledge, these are the first results showing that GBML algorithms can learn a $k$-dimensional subspace. 

\section*{Acknowledgements}
This research is supported in part by NSF Grants 2127697, 2019844, 2107037, and 2112471, ARO Grant W911NF2110226, ONR Grant N00014-19-1-2566, the Machine Learning Lab (MLL) at UT Austin, and the Wireless Networking and Communications Group (WNCG) Industrial Affiliates Program.




\bibliography{refs}

\begin{thebibliography}{57}
\providecommand{\natexlab}[1]{#1}
\providecommand{\url}[1]{\texttt{#1}}
\expandafter\ifx\csname urlstyle\endcsname\relax
  \providecommand{\doi}[1]{doi: #1}\else
  \providecommand{\doi}{doi: \begingroup \urlstyle{rm}\Url}\fi

\bibitem[Antoniou et~al.(2019)Antoniou, Edwards, and
  Storkey]{antoniou2018train}
Antreas Antoniou, Harri Edwards, and Amos Storkey.
\newblock How to train your {MAML}.
\newblock In \emph{Seventh International Conference on Learning
  Representations}, 2019.

\bibitem[Arnold et~al.(2021)Arnold, Iqbal, and Sha]{arnold2021maml}
S{\'e}bastien Arnold, Shariq Iqbal, and Fei Sha.
\newblock When {MAML} {C}an {A}dapt {F}ast and {H}ow to {A}ssist {W}hen it
  {C}annot.
\newblock In \emph{International Conference on Artificial Intelligence and
  Statistics}, pages 244--252. PMLR, 2021.

\bibitem[Balcan et~al.(2019)Balcan, Khodak, and Talwalkar]{balcan2019provable}
Maria-Florina Balcan, Mikhail Khodak, and Ameet Talwalkar.
\newblock Provable {G}uarantees for {G}radient-{B}ased {M}eta-{L}earning.
\newblock In \emph{International Conference on Machine Learning}, pages
  424--433. PMLR, 2019.

\bibitem[Baxter(2000)]{baxter2000model}
Jonathan Baxter.
\newblock A {M}odel of {I}nductive {B}ias {L}earning.
\newblock \emph{Journal of Artificial Intelligence Research}, 12:\penalty0
  149--198, 2000.

\bibitem[Bernacchia(2020)]{bernacchia2020meta}
Alberto Bernacchia.
\newblock {M}eta-{L}earning with {N}egative {L}earning {R}ates.
\newblock In \emph{International Conference on Learning Representations}, 2020.

\bibitem[Bertinetto et~al.(2018)Bertinetto, Henriques, Torr, and
  Vedaldi]{bertinetto2018meta}
Luca Bertinetto, Joao~F Henriques, Philip Torr, and Andrea Vedaldi.
\newblock {M}eta-{L}earning with {D}ifferentiable {C}losed-form {S}olvers.
\newblock In \emph{International Conference on Learning Representations}, 2018.

\bibitem[Bouniot et~al.(2020)Bouniot, Redko, Audigier, Loesch, Zotkin, and
  Habrard]{bouniot2020towards}
Quentin Bouniot, Ievgen Redko, Romaric Audigier, Ang{\'e}lique Loesch, Yevhenii
  Zotkin, and Amaury Habrard.
\newblock Towards {B}etter {U}nderstanding {M}eta-learning {M}ethods through
  {M}ulti-task {R}epresentation {L}earning {T}heory.
\newblock \emph{arXiv preprint arXiv:2010.01992}, 2020.

\bibitem[Bullins et~al.(2019)Bullins, Hazan, Kalai, and
  Livni]{bullins2019generalize}
Brian Bullins, Elad Hazan, Adam Kalai, and Roi Livni.
\newblock Generalize {A}cross {T}asks: {E}fficient {A}lgorithms for {L}inear
  {R}epresentation {L}earning.
\newblock In \emph{Algorithmic Learning Theory}, pages 235--246. PMLR, 2019.

\bibitem[Caruana(1997)]{caruana1997multitask}
Rich Caruana.
\newblock Multitask {L}earning.
\newblock \emph{Machine learning}, 28\penalty0 (1):\penalty0 41--75, 1997.

\bibitem[Chen et~al.(2020)Chen, Wu, Li, Li, Zhan, and Chung]{chen2020closer}
Jiaxin Chen, Xiao-Ming Wu, Yanke Li, Qimai Li, Li-Ming Zhan, and Fu-lai Chung.
\newblock A {C}loser {L}ook at the {T}raining {S}trategy for {M}odern
  {M}eta-{L}earning.
\newblock \emph{Advances in Neural Information Processing Systems}, 33, 2020.

\bibitem[Chua et~al.(2021)Chua, Lei, and Lee]{chua2021fine}
Kurtland Chua, Qi~Lei, and Jason~D Lee.
\newblock How fine-tuning allows for effective {M}eta-{L}earning.
\newblock \emph{Advances in Neural Information Processing Systems}, 34, 2021.

\bibitem[Collins et~al.(2021)Collins, Hassani, Mokhtari, and
  Shakkottai]{collins2021exploiting}
Liam Collins, Hamed Hassani, Aryan Mokhtari, and Sanjay Shakkottai.
\newblock Exploiting {S}hared {R}epresentations for {P}ersonalized {F}ederated
  {L}earning.
\newblock In \emph{International Conference on Machine Learning}, pages
  2089--2099. PMLR, 2021.

\bibitem[Collins et~al.(2022)Collins, Mokhtari, and
  Shakkottai]{collins2020does}
Liam Collins, Aryan Mokhtari, and Sanjay Shakkottai.
\newblock How {D}oes the {T}ask {L}andscape {A}ffect {MAML} performance?
\newblock \emph{arxiv preprint arXiv:2010.14672}, 2022.

\bibitem[Denevi et~al.(2018)Denevi, Ciliberto, Stamos, and
  Pontil]{denevi2018incremental}
G~Denevi, C~Ciliberto, D~Stamos, and M~Pontil.
\newblock Incremental {L}earning-to-{L}earn with {S}tatistical {G}uarantees.
\newblock In \emph{34th Conference on Uncertainty in Artificial Intelligence
  2018, UAI 2018}, volume~34, pages 457--466. AUAI, 2018.

\bibitem[Du et~al.(2020)Du, Hu, Kakade, Lee, and Lei]{du2020fewshot}
Simon~Shaolei Du, Wei Hu, Sham~M Kakade, Jason~D Lee, and Qi~Lei.
\newblock Few-{S}hot {L}earning via {L}earning the {R}epresentation,
  {P}rovably.
\newblock In \emph{International Conference on Learning Representations}, 2020.

\bibitem[Fallah et~al.(2020{\natexlab{a}})Fallah, Mokhtari, and
  Ozdaglar]{fallah2020convergence}
Alireza Fallah, Aryan Mokhtari, and Asuman Ozdaglar.
\newblock On the {C}onvergence {T}heory of {G}radient-{B}ased
  {M}odel-{A}gnostic {M}eta-{L}earning {A}lgorithms.
\newblock In \emph{International Conference on Artificial Intelligence and
  Statistics}, pages 1082--1092. PMLR, 2020{\natexlab{a}}.

\bibitem[Fallah et~al.(2020{\natexlab{b}})Fallah, Mokhtari, and
  Ozdaglar]{fallah2020personalized}
Alireza Fallah, Aryan Mokhtari, and Asuman Ozdaglar.
\newblock Personalized {F}ederated learning with {T}heoretical {G}uarantees: A
  {M}odel-{A}gnostic {M}eta-{L}earning {A}pproach.
\newblock In \emph{Advances in Neural Information Processing Systems},
  volume~33, pages 3557--3568, 2020{\natexlab{b}}.

\bibitem[Fallah et~al.(2021)Fallah, Mokhtari, and
  Ozdaglar]{fallah2021generalization}
Alireza Fallah, Aryan Mokhtari, and Asuman Ozdaglar.
\newblock Generalization of {M}odel-{A}gnostic {M}eta-{L}earning {A}lgorithms:
  {R}ecurring and {U}nseen {T}asks.
\newblock \emph{Advances in Neural Information Processing Systems}, 34, 2021.

\bibitem[Finn et~al.(2017)Finn, Abbeel, and Levine]{finn2017model}
Chelsea Finn, Pieter Abbeel, and Sergey Levine.
\newblock Model-{A}gnostic {M}eta-{L}earning for {F}ast {A}daptation of {D}eep
  {N}etworks.
\newblock In \emph{Proceedings of the 34th International Conference on Machine
  Learning-Volume 70}, pages 1126--1135. JMLR. org, 2017.

\bibitem[Finn et~al.(2018)Finn, Xu, and Levine]{finn2018probabilistic}
Chelsea Finn, Kelvin Xu, and Sergey Levine.
\newblock Probabilistic {M}odel-{A}gnostic {M}eta-{L}earning.
\newblock \emph{Advances in Neural Information Processing Systems}, 31, 2018.

\bibitem[Finn et~al.(2019)Finn, Rajeswaran, Kakade, and Levine]{finn2019online}
Chelsea Finn, Aravind Rajeswaran, Sham Kakade, and Sergey Levine.
\newblock Online {M}eta-{L}earning.
\newblock In \emph{International Conference on Machine Learning}, pages
  1920--1930. PMLR, 2019.

\bibitem[Goldblum et~al.(2020)Goldblum, Reich, Fowl, Ni, Cherepanova, and
  Goldstein]{goldblum2020unraveling}
Micah Goldblum, Steven Reich, Liam Fowl, Renkun Ni, Valeria Cherepanova, and
  Tom Goldstein.
\newblock Unraveling {M}eta-{L}earning: {U}nderstanding {F}eature
  {R}epresentations for {F}ew-{S}hot {T}asks.
\newblock In \emph{International Conference on Machine Learning}, pages
  3607--3616. PMLR, 2020.

\bibitem[Hospedales et~al.(2021)Hospedales, Antoniou, Micaelli, and
  Storkey]{hospedales2020meta}
Timothy~M Hospedales, Antreas Antoniou, Paul Micaelli, and Amos~J Storkey.
\newblock Meta-{L}earning in {N}eural {N}etworks: {A} {S}urvey.
\newblock \emph{IEEE Transactions on Pattern Analysis and Machine
  Intelligence}, 2021.

\bibitem[Hsu et~al.(2012)Hsu, Kakade, and Zhang]{hsu2012random}
Daniel Hsu, Sham~M Kakade, and Tong Zhang.
\newblock Random {D}esign {A}nalysis of {R}idge {R}egression.
\newblock In \emph{Conference on learning theory}, pages 9--1. JMLR Workshop
  and Conference Proceedings, 2012.

\bibitem[Ji et~al.(2020{\natexlab{a}})Ji, Lee, Liang, and
  Poor]{ji2020convergence}
Kaiyi Ji, Jason~D Lee, Yingbin Liang, and H~Vincent Poor.
\newblock Convergence of {M}eta-{L}earning with {T}ask-{S}pecific {A}daptation
  over {P}artial {P}arameters.
\newblock \emph{Advances in Neural Information Processing Systems},
  33:\penalty0 11490--11500, 2020{\natexlab{a}}.

\bibitem[Ji et~al.(2020{\natexlab{b}})Ji, Yang, and Liang]{ji2020multi}
Kaiyi Ji, Junjie Yang, and Yingbin Liang.
\newblock Multi-{S}tep {M}odel-{A}gnostic {M}eta-{L}earning: {C}onvergence and
  {I}mproved algorithms.
\newblock \emph{CoRR}, abs/2002.07836, 2020{\natexlab{b}}.

\bibitem[Jiang et~al.(2019)Jiang, Kone{\v{c}}n{\`y}, Rush, and
  Kannan]{jiang2019improving}
Yihan Jiang, Jakub Kone{\v{c}}n{\`y}, Keith Rush, and Sreeram Kannan.
\newblock Improving {F}ederated {L}earning {P}ersonalization via {M}odel
  {A}gnostic {M}eta {L}earning.
\newblock \emph{arXiv preprint arXiv:1909.12488}, 2019.

\bibitem[Kao et~al.(2022)Kao, Chiu, and Chen]{kao2022maml}
Chia~Hsiang Kao, Wei-Chen Chiu, and Pin-Yu Chen.
\newblock {MAML} is a noisy contrastive learner in classification.
\newblock In \emph{International Conference on Learning Representations}, 2022.

\bibitem[Kumar et~al.(2021)Kumar, Deleu, and Bengio]{kumar2021effect}
Ramnath Kumar, Tristan Deleu, and Yoshua Bengio.
\newblock Effect of {D}iversity in {M}eta-{L}earning.
\newblock In \emph{Fifth Workshop on Meta-Learning at the Conference on Neural
  Information Processing Systems}, 2021.

\bibitem[Lee et~al.(2019)Lee, Maji, Ravichandran, and Soatto]{lee2019meta}
Kwonjoon Lee, Subhransu Maji, Avinash Ravichandran, and Stefano Soatto.
\newblock Meta-{L}earning with {D}ifferentiable {C}onvex {O}ptimization.
\newblock In \emph{Proceedings of the IEEE/CVF Conference on Computer Vision
  and Pattern Recognition}, pages 10657--10665, 2019.

\bibitem[Li et~al.(2017)Li, Zhou, Chen, and Li]{li2017meta}
Zhenguo Li, Fengwei Zhou, Fei Chen, and Hang Li.
\newblock Meta-{SGD}: {L}earning to {L}earn {Q}uickly for {F}ew-{S}hot
  {L}earning.
\newblock \emph{arXiv preprint arXiv:1707.09835}, 2017.

\bibitem[Magen and Zouzias(2011)]{magen2011low}
Avner Magen and Anastasios Zouzias.
\newblock Low {R}ank {M}atrix-{V}alued {C}hernoff {B}ounds and {A}pproximate
  {M}atrix {M}ultiplication.
\newblock In \emph{Proceedings of the twenty-second annual ACM-SIAM symposium
  on Discrete Algorithms}, pages 1422--1436. SIAM, 2011.

\bibitem[Maurer et~al.(2016)Maurer, Pontil, and
  Romera-Paredes]{maurer2016benefit}
Andreas Maurer, Massimiliano Pontil, and Bernardino Romera-Paredes.
\newblock The {B}enefit of {M}ultitask {R}epresentation {L}earning.
\newblock \emph{Journal of Machine Learning Research}, 17\penalty0
  (81):\penalty0 1--32, 2016.

\bibitem[McNamara and Balcan(2017)]{mcnamara2017risk}
Daniel McNamara and Maria-Florina Balcan.
\newblock Risk {B}ounds for {T}ransferring {R}epresentations {W}ith and
  {W}ithout {F}ine-{T}uning.
\newblock In \emph{International Conference on Machine Learning}, pages
  2373--2381. PMLR, 2017.

\bibitem[Ni et~al.(2021)Ni, Goldblum, Sharaf, Kong, and Goldstein]{ni2021data}
Renkun Ni, Micah Goldblum, Amr Sharaf, Kezhi Kong, and Tom Goldstein.
\newblock Data augmentation for {M}eta-{L}earning.
\newblock In \emph{International Conference on Machine Learning}, pages
  8152--8161. PMLR, 2021.

\bibitem[Nichol and Schulman(2018)]{nichol2018reptile}
Alex Nichol and John Schulman.
\newblock Reptile: {A} {S}calable {M}eta-{L}earning {A}lgorithm.
\newblock \emph{arXiv preprint arXiv:1803.02999}, 2:\penalty0 2, 2018.

\bibitem[Oh et~al.(2020)Oh, Yoo, Kim, and Yun]{oh2020boil}
Jaehoon Oh, Hyungjun Yoo, ChangHwan Kim, and Se-Young Yun.
\newblock {BOIL}: {T}owards {R}epresentation {C}hange for {F}ew-{S}hot
  {L}earning.
\newblock In \emph{International Conference on Learning Representations}, 2020.

\bibitem[Raghu et~al.(2020)Raghu, Raghu, Bengio, and Vinyals]{raghu2019rapid}
Aniruddh Raghu, Maithra Raghu, Samy Bengio, and Oriol Vinyals.
\newblock Rapid {L}earning or {F}eature {R}euse? {T}owards {U}nderstanding the
  {E}ffectiveness of {MAML}.
\newblock In \emph{International Conference on Learning Representations}, 2020.

\bibitem[Rajeswaran et~al.(2019)Rajeswaran, Finn, Kakade, and
  Levine]{rajeswaran2019meta}
Aravind Rajeswaran, Chelsea Finn, Sham~M Kakade, and Sergey Levine.
\newblock Meta-{L}earning with {I}mplicit {G}radients.
\newblock \emph{Advances in Neural Information Processing Systems}, 32, 2019.

\bibitem[Ravi and Larochelle(2016)]{ravi2016optimization}
Sachin Ravi and Hugo Larochelle.
\newblock Optimization as a {M}odel for {F}ew-{S}hot {L}earning.
\newblock 2016.

\bibitem[Saunshi et~al.(2020)Saunshi, Zhang, Khodak, and
  Arora]{saunshi2020sample}
Nikunj Saunshi, Yi~Zhang, Mikhail Khodak, and Sanjeev Arora.
\newblock A {S}ample {C}omplexity {S}eparation {B}etween {N}on-{C}onvex and
  {C}onvex {M}eta-{L}earning.
\newblock In \emph{International Conference on Machine Learning}, pages
  8512--8521. PMLR, 2020.

\bibitem[Saunshi et~al.(2021)Saunshi, Gupta, and Hu]{saunshi2021representation}
Nikunj Saunshi, Arushi Gupta, and Wei Hu.
\newblock A {R}epresentation {L}earning {P}erspective on the {I}mportance of
  {T}rain-{V}alidation {S}plitting in {M}eta-{L}earning.
\newblock In \emph{International Conference on Machine Learning}, pages
  9333--9343. PMLR, 2021.

\bibitem[Schmidhuber(1987)]{schmidhuber1987evolutionary}
J{\"u}rgen Schmidhuber.
\newblock \emph{Evolutionary {P}rinciples in {S}elf-{R}eferential {L}earning,
  or on {L}earning how to {L}earn: the {M}eta-{M}eta-...}
\newblock PhD thesis, Technische Universit{\"a}t M{\"u}nchen, 1987.

\bibitem[Setlur et~al.(2020)Setlur, Li, and Smith]{setlur2020support}
Amrith Setlur, Oscar Li, and Virginia Smith.
\newblock Is {S}upport {S}et {D}iversity {N}ecessary for {M}eta-{L}earning?
\newblock \emph{arXiv preprint arXiv:2011.14048}, 2020.

\bibitem[Snell et~al.(2017)Snell, Swersky, and Zemel]{snell2017prototypical}
Jake Snell, Kevin Swersky, and Richard Zemel.
\newblock Prototypical networks for few-shot learning.
\newblock In \emph{Advances in Neural Information Processing Systems}, pages
  4077--4087, 2017.

\bibitem[Thekumparampil et~al.(2021)Thekumparampil, Jain, Netrapalli, and
  Oh]{thekumparampil2021statistically}
Kiran~K Thekumparampil, Prateek Jain, Praneeth Netrapalli, and Sewoong Oh.
\newblock Statistically and {C}omputationally {E}fficient {L}inear
  {M}eta-{R}epresentation {L}earning.
\newblock \emph{Advances in Neural Information Processing Systems}, 34, 2021.

\bibitem[Tripuraneni et~al.(2020)Tripuraneni, Jordan, and
  Jin]{tripuraneni2020theory}
Nilesh Tripuraneni, Michael Jordan, and Chi Jin.
\newblock On the {T}heory of {T}ransfer {L}earning: {T}he {I}mportance of
  {T}ask {D}iversity.
\newblock \emph{Advances in Neural Information Processing Systems},
  33:\penalty0 7852--7862, 2020.

\bibitem[Tripuraneni et~al.(2021)Tripuraneni, Jin, and
  Jordan]{tripuraneni2020provable}
Nilesh Tripuraneni, Chi Jin, and Michael Jordan.
\newblock Provable {M}eta-{L}earning of {L}inear {R}epresentations.
\newblock In \emph{International Conference on Machine Learning}, pages
  10434--10443. PMLR, 2021.

\bibitem[Vershynin(2018)]{vershynin2018high}
Roman Vershynin.
\newblock \emph{High-{D}imensional {P}robability: {A}n {I}ntroduction with
  {A}pplications in {D}ata {S}cience}, volume~47.
\newblock Cambridge University Press, 2018.

\bibitem[Vinyals et~al.(2016)Vinyals, Blundell, Lillicrap, Wierstra,
  et~al.]{vinyals2016matching}
Oriol Vinyals, Charles Blundell, Timothy Lillicrap, Daan Wierstra, et~al.
\newblock Matching {N}etworks for {O}ne {S}hot {L}earning.
\newblock \emph{Advances in Neural Information Processing Systems},
  29:\penalty0 3630--3638, 2016.

\bibitem[Wang et~al.(2021{\natexlab{a}})Wang, Zhao, and Li]{wang2021bridging}
Haoxiang Wang, Han Zhao, and Bo~Li.
\newblock Bridging {M}ulti-task {L}earning and {M}eta-{L}earning: {T}owards
  {E}fficient {T}raining and {E}ffective {A}daptation.
\newblock In \emph{International Conference on Machine Learning}. PMLR,
  2021{\natexlab{a}}.

\bibitem[Wang et~al.(2020)Wang, Cai, Yang, and Wang]{wang2020global}
Lingxiao Wang, Qi~Cai, Zhuoran Yang, and Zhaoran Wang.
\newblock On the {G}lobal {O}ptimality of {M}odel-{A}gnostic {M}eta-{L}earning.
\newblock In \emph{International Conference on Machine Learning}, pages
  9837--9846. PMLR, 2020.

\bibitem[Wang et~al.(2021{\natexlab{b}})Wang, Yuan, Wu, and
  Ge]{wang2021guarantees}
Xiang Wang, Shuai Yuan, Chenwei Wu, and Rong Ge.
\newblock Guarantees for {T}uning the {S}tep {S}ize {U}sing a
  {L}earning-to-{L}earn {A}pproach.
\newblock In \emph{International Conference on Machine Learning}, pages
  10981--10990. PMLR, 2021{\natexlab{b}}.

\bibitem[Xu and Tewari(2021)]{xu2021representation}
Ziping Xu and Ambuj Tewari.
\newblock Representation {L}earning {B}eyond {L}inear {P}rediction {F}unctions.
\newblock \emph{Advances in Neural Information Processing Systems}, 34, 2021.

\bibitem[Yoon et~al.(2018)Yoon, Kim, Dia, Kim, Bengio, and
  Ahn]{yoon2018bayesian}
Jaesik Yoon, Taesup Kim, Ousmane Dia, Sungwoong Kim, Yoshua Bengio, and Sungjin
  Ahn.
\newblock Bayesian {M}odel-{A}gnostic {M}eta-{L}earning.
\newblock In \emph{Proceedings of the 32nd International Conference on Neural
  Information Processing Systems}, pages 7343--7353, 2018.

\bibitem[Zhou et~al.(2019)Zhou, Yuan, Xu, Yan, and Feng]{zhou2019efficient}
Pan Zhou, Xiaotong Yuan, Huan Xu, Shuicheng Yan, and Jiashi Feng.
\newblock Efficient {M}eta {L}earning via {M}inibatch {P}roximal {U}pdate.
\newblock In \emph{Advances in Neural Information Processing Systems}, pages
  1532--1542, 2019.

\bibitem[Zintgraf et~al.(2019)Zintgraf, Shiarli, Kurin, Hofmann, and
  Whiteson]{zintgraf2019fast}
Luisa Zintgraf, Kyriacos Shiarli, Vitaly Kurin, Katja Hofmann, and Shimon
  Whiteson.
\newblock Fast {C}ontext {A}daptation via {M}eta-{L}earning.
\newblock In \emph{International Conference on Machine Learning}, pages
  7693--7702. PMLR, 2019.

\end{thebibliography}
\bibliographystyle{plainnat}

\appendix

\newpage







\section{Additional Related Work} \label{app:related_work}

\textbf{Meta-learning background.} Multi-task representation learning and meta-learning have been of theoretical interest for many years \citep{schmidhuber1987evolutionary,caruana1997multitask,baxter2000model}. Recently, meta-learning methods have garnered much attention due to successful implementations in few-shot learning scenarios with deep networks. These modern approaches are roughly grouped into three categories: model-based \citep{ravi2016optimization}, metric-based \citep{snell2017prototypical,vinyals2016matching}, and gradient-based \citep{finn2017model}. In this paper we focus on gradient-based methods.

\textbf{Gradient-based meta-learning and MAML.} The practicality and simplicity of model-agnostic meta-learning (MAML) \citep{finn2017model}  has led to  many experimental and theoretical studies of gradient-based meta-learning in addition to those mentioned in Section \ref{sec:intro}. There have been numerous algorithms proposed as extensions of MAML \citep{li2017meta,finn2018probabilistic,yoon2018bayesian,antoniou2018train,nichol2018reptile,rajeswaran2019meta,zhou2019efficient,raghu2019rapid,zintgraf2019fast}, and MAML has been applied to  online \citep{finn2019online}
and federated \citep{fallah2020personalized,jiang2019improving} learning settings. Theoretical analyses of MAML and related methods have included sample complexity
guarantees in online settings \citep{balcan2019provable,denevi2018incremental}, 
 general convergence guarantees \citep{fallah2020convergence,ji2020multi,ji2020convergence}, and landscape analysis \citep{wang2020global,collins2020does}. Other works have studied the choice of inner loop step size \citep{wang2021guarantees,bernacchia2020meta} and generalization \citep{chen2020closer,fallah2021generalization}, all without splitting model parameters.

\textbf{Gradient-based meta-learning and representation learning.} A growing line of research has endeavored to develop and understand gradient-based meta-learning with a representation learning perspective.
Besides ANIL, multiple other meta-learning methods fix the representation in the inner loop \citep{lee2019meta, bertinetto2018meta}.
\citet{goldblum2020unraveling} showed that these meta-learners
learn representations that empirically exhibit the desirable behavior of clustering features by class. However, they also gave evidence suggesting this is not true for MAML since it adapts the feature extractor during the inner loop. Meanwhile, other works have argued for the benefits of adapting the representation in the inner loop both experimentally, when the head is fixed \citep{oh2020boil}, and theoretically, when the task optimal solutions may not share a representation \citep{chua2021fine}.

Two recent works have argued that ANIL behaves similarly to empirically successful approaches for representation learning.
\citet{wang2021bridging} showed that the models learned by ANIL and multi-task learning with a shared representation and unique heads are close in function space for sufficiently wide and deep ReLU networks, when the inner loop learning rate and number of inner adaptation steps for ANIL is small.
 \citet{kao2022maml} noticed that ANIL with the global head initialized at zero at the start of each round is a ``noisy contrastive learner'' in the sense that outer loop update for the representation is a gradient step with respect to a contrastive loss, which suggests that ANIL should learn quality representations. Moreover, they showed that zeroing the global head at the start of each round  empirically improves the performance  of both ANIL and MAML.
However, neither work shows that ANIL, let alone MAML, can in fact learn expressive representations. Additionally, our analysis rigorously explains the observation from \citet{kao2022maml} that having small $\|\mathbf{w}_t\|_2$ aids representation learning.

\textbf{Meta-learning and task diversity.} Initial efforts to empirically understand the effects of meta-training task diversity on meta-learning performance with neural networks have shown a promising connection between the two, although the picture is not yet clear. \citet{ni2021data} and \citet{bouniot2020towards} made modifications to the the meta-training task distribution and the meta-learning objective, respectively, to improve the effective task diversity, and both resulted in significant improvements in performance for a range of meta-learners. On the other hand, \citet{setlur2020support} and  \citet{kumar2021effect} empirically argued that reducing the overall diversity of the meta-training dataset does not restrict meta-learning. However, \citet{setlur2020support} only considered reducing intra-task data diversity, not the diversity of the tasks themselves (as no classes were dropped from the meta-training dataset), and the results due to \citet{kumar2021effect} showed that reducing the overall number of tasks seen during meta-training hurts performance for most meta-learners, including MAML. 

\textbf{Multi-task linear representation learning.} Several works have studied a similar multi-task linear representation learning setting as ours \citep{saunshi2021representation,thekumparampil2021statistically,collins2021exploiting, du2020fewshot, tripuraneni2020provable, bullins2019generalize,maurer2016benefit, mcnamara2017risk}, but did not analyze MAML or ANIL. Moreover, multiple works have shown that task diversity is necessary to learn generalizable representations from a statistical perspective \citep{du2020fewshot, tripuraneni2020theory,xu2021representation,tripuraneni2020provable}. Our work complements these by showing the benefit of task diversity to gradient-based meta-learning methods from an optimization perspective.

\newpage

\section{General Lemmas}

First we define the following notations used throughout the proofs.
\begin{table}[h]
\begin{tabular}{|c|l|}
\hline
\textbf{Notation}                                                                                                                                & \textbf{Definition}                                \\ \hline
$\del_t$                                                    & $\mathbf{I}_k - \alpha \mathbf{B}_t^\top \mathbf{B}_t$                        \\ 
$\deld_t$                & $\mathbf{I}_d - \alpha \mathbf{B}_t \mathbf{B}_t^\top$ \\ 
$L_\ast$                                                                                      & $\max_{t\in [T]} \sigma_{\max}^{0.5}\left(\tfrac{1}{n}\sum_{i=1}^n \mathbf{w}_{\ast,t,i}\mathbf{w}_{\ast,t,i}^\top\right)\leq L_\ast$   \\ 
$\mu_\ast$                                                                                      & $0< \mu_\ast \leq \min_{t\in [T]} \sigma_{\min}^{0.5}\left(\tfrac{1}{n}\sum_{i=1}^n \mathbf{w}_{\ast,t,i}\mathbf{w}_{\ast,t,i}^\top\right)$   \\
$\eta_\ast$                                                                                      & $\max_{t\in [T]} \left\|\tfrac{1}{n}\sum_{i=1}^n \mathbf{w}_{\ast,t,i}\right\|_2 \leq \eta_\ast \leq L_\ast$   \\
$L_{\max}$                                                                                      & $\max_{t\in [T],i\in[n]}\|\mathbf{w}_{\ast,t,i}\|_2 \leq L_{\max}\leq c\sqrt{k} L_\ast$  for constant $c$ \\ 
$\kappa_\ast$                                                                                      & $\nicefrac{L_\ast}{\mu_\ast}$  \\ 
$\kappa_{\ast,\max}$                                                                                      & $\nicefrac{L_{\max}}{\mu_\ast}$  \\ \hline
\end{tabular}
\end{table}

Now we have the following general lemmas.
\begin{lemma} \label{lem:sigminE} Suppose Assumption \ref{assump:tasks_diverse_main} holds and for some $t$, $\dist_t^2 \leq \frac{1}{1-\tau} \dist_0^2$. Also, suppose $\|\del_s\|_2 \leq \tau$ for all $s\in[t]$. Then, for $E_0 \coloneqq 1 - \tau - \dist_0^2$,
\begin{align}
    \sigma_{\min}\left( \tfrac{1}{n}\sum_{i=1}^n\mathbf{B}_t^\top\mathbf{B}_\ast \mathbf{w}_{\ast,t,i}\mathbf{w}_{\ast,t,i}^\top  \mathbf{B}_\ast^\top \mathbf{B}_t \right) \geq \frac{E_0 \mu_\ast^2}{\alpha}
\end{align}
\end{lemma}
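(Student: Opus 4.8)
\textbf{Proof proposal for Lemma \ref{lem:sigminE}.}

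The plan is to lower bound the minimum singular value of $\frac{1}{n}\sum_{i=1}^n \mathbf{B}_t^\top \mathbf{B}_\ast \mathbf{w}_{\ast,t,i}\mathbf{w}_{\ast,t,i}^\top \mathbf{B}_\ast^\top \mathbf{B}_t = \mathbf{B}_t^\top \mathbf{B}_\ast \boldsymbol{\Psi}_{\ast,t} \mathbf{B}_\ast^\top \mathbf{B}_t$ by separating the contribution of the ground-truth head covariance $\boldsymbol{\Psi}_{\ast,t}$ from that of the geometry of $\mathbf{B}_t$ relative to $\mathbf{B}_\ast$. First I would use the fact that for any matrix $\mathbf{M}$ and PSD matrix $\boldsymbol{\Psi}$, $\sigma_{\min}(\mathbf{M}\boldsymbol{\Psi}\mathbf{M}^\top) \geq \lambda_{\min}(\boldsymbol{\Psi})\,\sigma_{\min}^2(\mathbf{M})$, which follows since $\mathbf{M}\boldsymbol{\Psi}\mathbf{M}^\top \succeq \lambda_{\min}(\boldsymbol{\Psi})\mathbf{M}\mathbf{M}^\top$. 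Applying this with $\mathbf{M} = \mathbf{B}_t^\top \mathbf{B}_\ast$ and $\boldsymbol{\Psi} = \boldsymbol{\Psi}_{\ast,t}$, and invoking the lower bound $\lambda_{\min}(\boldsymbol{\Psi}_{\ast,t}) \geq \mu_\ast^2$ from Assumption \ref{assump:tasks_diverse_main}, reduces the problem to showing $\sigma_{\min}^2(\mathbf{B}_t^\top \mathbf{B}_\ast) \geq E_0/\alpha$.

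Next I would relate $\sigma_{\min}(\mathbf{B}_t^\top \mathbf{B}_\ast)$ to $\sigma_{\min}(\mathbf{\hat{B}}_t^\top \mathbf{B}_\ast)$ and hence to $\dist_t$. Writing $\mathbf{B}_t = \mathbf{\hat{B}}_t \mathbf{R}_t$ for an SVD-type factorization (or using $\mathbf{B}_t^\top \mathbf{B}_\ast = \mathbf{R}_t^\top \mathbf{\hat{B}}_t^\top \mathbf{B}_\ast$ with $\sigma_{\min}(\mathbf{R}_t) = \sigma_{\min}(\mathbf{B}_t)$), we get $\sigma_{\min}(\mathbf{B}_t^\top \mathbf{B}_\ast) \geq \sigma_{\min}(\mathbf{B}_t)\,\sigma_{\min}(\mathbf{\hat{B}}_t^\top \mathbf{B}_\ast)$. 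The key identity here is $\sigma_{\min}^2(\mathbf{\hat{B}}_t^\top \mathbf{B}_\ast) = 1 - \|\mathbf{\hat{B}}_{\ast,\perp}^\top \mathbf{\hat{B}}_t\|_2^2 = 1 - \dist_t^2$, which is standard for principal angles between equidimensional subspaces and is used repeatedly in the paper (e.g.\ around \eqref{eoo}). For $\sigma_{\min}(\mathbf{B}_t)$, I would use the hypothesis $\|\del_s\|_2 \leq \tau$: since $\del_t = \mathbf{I}_k - \alpha\mathbf{B}_t^\top \mathbf{B}_t$, this gives $\alpha\mathbf{B}_t^\top\mathbf{B}_t \succeq (1-\tau)\mathbf{I}_k$, i.e.\ $\sigma_{\min}^2(\mathbf{B}_t) \geq (1-\tau)/\alpha$. (Only $\|\del_t\|_2 \leq \tau$ is strictly needed for this step; the hypothesis for all $s\in[t]$ is presumably used to guarantee $\dist_t^2 \leq \frac{1}{1-\tau}\dist_0^2$ is consistent, or is simply stated for uniformity with the induction.)

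Combining these: $\sigma_{\min}^2(\mathbf{B}_t^\top \mathbf{B}_\ast) \geq \frac{1-\tau}{\alpha}(1 - \dist_t^2)$. Now apply the hypothesis $\dist_t^2 \leq \frac{1}{1-\tau}\dist_0^2$ to get $1 - \dist_t^2 \geq 1 - \frac{\dist_0^2}{1-\tau} = \frac{1-\tau-\dist_0^2}{1-\tau} = \frac{E_0}{1-\tau}$, so that $\sigma_{\min}^2(\mathbf{B}_t^\top\mathbf{B}_\ast) \geq \frac{1-\tau}{\alpha}\cdot\frac{E_0}{1-\tau} = \frac{E_0}{\alpha}$. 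Multiplying back by $\mu_\ast^2$ yields the claim. I do not anticipate a serious obstacle; the only mild subtlety is being careful about the non-square/ill-conditioned nature of $\mathbf{B}_t$ (it is $d\times k$ and not orthonormal), so the factorization $\mathbf{B}_t = \mathbf{\hat{B}}_t\mathbf{R}_t$ and the submultiplicativity of $\sigma_{\min}$ under multiplication by a full-rank square matrix must be invoked correctly — but these are routine linear-algebra facts. The cleanest route is probably to work directly with $\mathbf{B}_t^\top\mathbf{B}_\ast\boldsymbol{\Psi}_{\ast,t}\mathbf{B}_\ast^\top\mathbf{B}_t \succeq \mu_\ast^2 \mathbf{B}_t^\top\mathbf{B}_\ast\mathbf{B}_\ast^\top\mathbf{B}_t$ and then lower bound $\sigma_{\min}(\mathbf{B}_\ast^\top\mathbf{B}_t)$, avoiding an explicit factorization.
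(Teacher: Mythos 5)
Your proposal is correct and follows essentially the same route as the paper's proof: pull out $\lambda_{\min}(\boldsymbol{\Psi}_{\ast,t}) \geq \mu_\ast^2$, factor $\mathbf{B}_t = \mathbf{\hat{B}}_t\mathbf{R}_t$ to split $\sigma_{\min}^2(\mathbf{B}_\ast^\top\mathbf{B}_t) \geq \sigma_{\min}^2(\mathbf{B}_\ast^\top\mathbf{\hat{B}}_t)\sigma_{\min}^2(\mathbf{R}_t)$, use $\sigma_{\min}^2(\mathbf{B}_\ast^\top\mathbf{\hat{B}}_t)=1-\dist_t^2$ and $\sigma_{\min}^2(\mathbf{R}_t)\geq (1-\tau)/\alpha$, and then apply the distance hypothesis to cancel the $1-\tau$. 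Your observation that only $\|\del_t\|_2 \leq \tau$ is strictly needed (not all $s\in[t]$) is also accurate.
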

\begin{proof}
First note that since $\sigma_{\min}(\mathbf{A}_1 \mathbf{A}_2) \geq \sigma_{\min}(\mathbf{A}_1) \sigma_{\min}( \mathbf{A}_2)$ for any two square matrices $\mathbf{A}_1$, $\mathbf{A}_2$, we have
\begin{align}
    \sigma_{\min}\left( \mathbf{B}_t^\top\mathbf{B}_\ast\tfrac{1}{n}\sum_{i=1}^n \mathbf{w}_{\ast,t,i}\mathbf{w}_{\ast,t,i}^\top  \mathbf{B}_\ast^\top \mathbf{B}_t \right) &\geq \sigma^2_{\min}(\mathbf{B}_\ast^\top \mathbf{B}_t)\sigma_{\min}\left(\tfrac{1}{n}\sum_{i=1}^n \mathbf{w}_{\ast,t,i}\mathbf{w}_{\ast,t,i}^\top   \right)\nonumber \\
    &\geq \sigma_{\min}^2(\mathbf{B}_\ast^\top \mathbf{B}_t) \mu_\ast^2 \nonumber \\
    &\geq \sigma_{\min}^2(\mathbf{B}_\ast^\top \mathbf{\hat{B}}_t)\sigma_{\min}^2(\mathbf{{R}}_t)\mu_\ast^2 \nonumber \\
    &\geq \sigma_{\min}^2(\mathbf{B}_\ast^\top \mathbf{\hat{B}}_t)\tfrac{1 - \tau}{\alpha}\mu_\ast^2 \label{eqqq}
\end{align}
where $\mathbf{B}_t = \mathbf{\hat{B}}_t \mathbf{R}_t$ is the QR-factorization of $\mathbf{B}_t$. 
Next, observe that
\begin{align}
   \dist_t^2 \coloneqq  \|\mathbf{B}_{\ast,\perp}^\top \mathbf{\hat{B}}_t\|_2^2 &=     \|(\mathbf{I}_d - \mathbf{B}_{\ast}\mathbf{B}_{\ast}^\top) \mathbf{\hat{B}}_t\|_2^2 \nonumber \\
    &= \max_{\mathbf{u} \in \mathbb{R}^k: \|\mathbf{u}\|_2=1}   \mathbf{u}^\top\mathbf{\hat{B}}_t^\top (\mathbf{I}_d - \mathbf{B}_{\ast}\mathbf{B}_{\ast}^\top) (\mathbf{I}_d - \mathbf{B}_{\ast}\mathbf{B}_{\ast}^\top) \mathbf{\hat{B}}_t \mathbf{u} \nonumber\\
    &= \max_{\mathbf{u} \in \mathbb{R}^k: \|\mathbf{u}\|_2=1}   \mathbf{u}^\top\mathbf{\hat{B}}_t^\top (\mathbf{I}_d - \mathbf{B}_{\ast}\mathbf{B}_{\ast}^\top) \mathbf{\hat{B}}_t \mathbf{u} \nonumber\\
    &= \max_{\mathbf{u} \in \mathbb{R}^k: \|\mathbf{u}\|_2=1}   \mathbf{u}^\top(\mathbf{I}_k - \mathbf{\hat{B}}_t^\top\mathbf{B}_{\ast}\mathbf{B}_{\ast}^\top \mathbf{\hat{B}}_t )\mathbf{u} \nonumber\\
    &= 1 - \sigma_{\min}^2(\mathbf{B}_{\ast}^\top\mathbf{\hat{B}}_t) \nonumber \\
    \implies \sigma_{\min}^2(\mathbf{B}_{\ast}^\top\mathbf{\hat{B}}_t) &= 1 - \dist_t^2 \nonumber \\
    &\geq 1 - \tfrac{1}{1- \tau}\dist_0^2.
\end{align}
which gives the result after combining with \eqref{eqqq}.
\end{proof}

Note that all four algorithms considered (FO-ANIL, Exact ANIL, FO-MAML, Exact MAML) execute the same inner loop update procedure for the head. The following lemma characterizes the diversity of the inner loop-updated heads for all four algorithms, under some assumptions on the behavior of $\dist_t$ and $\|\del_t\|_2$ which we will show are indeed satisfied later.
\begin{lemma} \label{lem:sigmin}
Suppose Assumption \ref{assump:tasks_diverse_main} holds and that on some iteration $t$, FO-ANIL, Exact ANIL, FO-MAML, and Exact MAML satisfy $\dist_t^2 \leq \frac{1}{1-\tau} \dist_0^2$ and $\|\del_s\|_2 \leq \tau$ for all $s\in[t]$. Then the inner loop-updated heads on iteration $t$ satisfy:
\begin{align}
    \sigma_{\min}\left(\frac{1}{n}\sum_{i=1}^n \mathbf{w}_{t,i}\mathbf{w}_{t,i}^\top \right) &\geq \alpha E_0 \mu_\ast^2 - 2(1+\|\del_t\|_2)\sqrt{ \alpha}\|\del_t\|_2\|\mathbf{w}_t\|_2 \eta_{\ast}  \\
    \text{ and }\quad  \sigma_{\max}\left(\frac{1}{n}\sum_{i=1}^n \mathbf{w}_{t,i}\mathbf{w}_{t,i}^\top \right) &\leq (\|\del_t\|_2\|\mathbf{w}_t\|_2+ \sqrt{\alpha (1+\|\del_t\|_2)} L_\ast )^2. 
\end{align}
\end{lemma}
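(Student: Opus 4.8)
The plan is to work directly from the closed form of the inner-loop head update, which is identical for all four algorithms: one population gradient step in $\mathbf{w}$ from $(\mathbf{B}_t,\mathbf{w}_t)$ gives, using $\Cov(\mathbf{x})=\mathbf{I}_d$,
\[
\mathbf{w}_{t,i} = \del_t\mathbf{w}_t + \alpha\,\mathbf{B}_t^\top\mathbf{B}_\ast\mathbf{w}_{\ast,t,i},
\]
i.e.\ exactly \eqref{updw}. Write $\mathbf{a}\coloneqq\del_t\mathbf{w}_t$ for the task-independent part and $\mathbf{b}_i\coloneqq\alpha\mathbf{B}_t^\top\mathbf{B}_\ast\mathbf{w}_{\ast,t,i}$ for the task-specific part, so that $\mathbf{w}_{t,i}=\mathbf{a}+\mathbf{b}_i$ and
\[
\tfrac1n\textstyle\sum_{i=1}^n \mathbf{w}_{t,i}\mathbf{w}_{t,i}^\top = \tfrac1n\textstyle\sum_{i=1}^n\mathbf{b}_i\mathbf{b}_i^\top + \mathbf{a}\mathbf{a}^\top + \mathbf{a}\bar{\mathbf{b}}^\top + \bar{\mathbf{b}}\mathbf{a}^\top,
\]
where $\bar{\mathbf{b}}\coloneqq\tfrac1n\sum_i\mathbf{b}_i=\alpha\mathbf{B}_t^\top\mathbf{B}_\ast\bar{\mathbf{w}}_{\ast,t}$ and $\tfrac1n\sum_i\mathbf{b}_i\mathbf{b}_i^\top=\alpha^2\mathbf{B}_t^\top\mathbf{B}_\ast\boldsymbol{\Psi}_{\ast,t}\mathbf{B}_\ast^\top\mathbf{B}_t$. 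Two elementary estimates feed everything: $\|\mathbf{a}\|_2\le\|\del_t\|_2\|\mathbf{w}_t\|_2$, and since $\alpha\mathbf{B}_t^\top\mathbf{B}_t=\mathbf{I}_k-\del_t$ we have $\sigma_{\max}(\mathbf{B}_t)=\sqrt{\|\mathbf{I}_k-\del_t\|_2/\alpha}\le\sqrt{(1+\|\del_t\|_2)/\alpha}$; hence $\|\bar{\mathbf{b}}\|_2\le\alpha\,\sigma_{\max}(\mathbf{B}_t)\,\eta_\ast\le\sqrt{\alpha(1+\|\del_t\|_2)}\,\eta_\ast$, and using $\sqrt x\le x$ for $x\ge1$, $2\|\mathbf{a}\|_2\|\bar{\mathbf{b}}\|_2\le 2(1+\|\del_t\|_2)\sqrt\alpha\,\|\del_t\|_2\|\mathbf{w}_t\|_2\,\eta_\ast$, which is precisely the error term in the stated lower bound.

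\textbf{Lower bound.} Since $\mathbf{a}\mathbf{a}^\top\succeq\mathbf{0}$, Weyl's inequality gives $\sigma_{\min}\big(\tfrac1n\sum_i\mathbf{w}_{t,i}\mathbf{w}_{t,i}^\top\big)\ge\sigma_{\min}\big(\tfrac1n\sum_i\mathbf{b}_i\mathbf{b}_i^\top\big)-\|\mathbf{a}\bar{\mathbf{b}}^\top+\bar{\mathbf{b}}\mathbf{a}^\top\|_2\ge\sigma_{\min}\big(\tfrac1n\sum_i\mathbf{b}_i\mathbf{b}_i^\top\big)-2\|\mathbf{a}\|_2\|\bar{\mathbf{b}}\|_2$. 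The hypotheses assumed here, namely $\dist_t^2\le\frac1{1-\tau}\dist_0^2$ and $\|\del_s\|_2\le\tau$ for all $s\in[t]$, are exactly those of Lemma \ref{lem:sigminE}, which gives $\sigma_{\min}(\mathbf{B}_t^\top\mathbf{B}_\ast\boldsymbol{\Psi}_{\ast,t}\mathbf{B}_\ast^\top\mathbf{B}_t)\ge E_0\mu_\ast^2/\alpha$, hence $\sigma_{\min}\big(\tfrac1n\sum_i\mathbf{b}_i\mathbf{b}_i^\top\big)=\alpha^2\sigma_{\min}(\mathbf{B}_t^\top\mathbf{B}_\ast\boldsymbol{\Psi}_{\ast,t}\mathbf{B}_\ast^\top\mathbf{B}_t)\ge\alpha E_0\mu_\ast^2$. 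Combining with the cross-term bound above yields the first claim. For the \textbf{upper bound}, fix a unit vector $\mathbf{u}$ and apply Minkowski's inequality to the $n$-vectors $(\mathbf{u}^\top\mathbf{a})_i$ and $(\mathbf{u}^\top\mathbf{b}_i)_i$:
\[
\sqrt{\tfrac1n\textstyle\sum_i(\mathbf{u}^\top\mathbf{w}_{t,i})^2}\le|\mathbf{u}^\top\mathbf{a}|+\sqrt{\tfrac1n\textstyle\sum_i(\mathbf{u}^\top\mathbf{b}_i)^2}\le\|\del_t\|_2\|\mathbf{w}_t\|_2+\sqrt{\sigma_{\max}\big(\tfrac1n\textstyle\sum_i\mathbf{b}_i\mathbf{b}_i^\top\big)},
\]
and $\sigma_{\max}\big(\tfrac1n\sum_i\mathbf{b}_i\mathbf{b}_i^\top\big)\le\alpha^2\sigma_{\max}^2(\mathbf{B}_t)\,\sigma_{\max}(\boldsymbol{\Psi}_{\ast,t})\le\alpha(1+\|\del_t\|_2)L_\ast^2$ using $\|\mathbf{B}_\ast\|_2=1$ and $\sigma_{\max}(\boldsymbol{\Psi}_{\ast,t})\le L_\ast^2$. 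Squaring and maximizing over $\mathbf{u}$ gives $\sigma_{\max}\big(\tfrac1n\sum_i\mathbf{w}_{t,i}\mathbf{w}_{t,i}^\top\big)\le(\|\del_t\|_2\|\mathbf{w}_t\|_2+\sqrt{\alpha(1+\|\del_t\|_2)}\,L_\ast)^2$.

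\textbf{Main obstacle.} There is nothing deep here: the argument is a splitting of $\mathbf{w}_{t,i}$ into a shared and a diverse component plus the $L^2$ triangle inequality. The only points needing care are (i) correctly invoking Lemma \ref{lem:sigminE} for the ``signal'' block $\tfrac1n\sum_i\mathbf{b}_i\mathbf{b}_i^\top$, whose singular-value lower bound degrades with $\dist_t$ (which is why the hypothesis $\dist_t^2\le\frac1{1-\tau}\dist_0^2$ is needed), and (ii) handling the cross terms $\mathbf{a}\bar{\mathbf{b}}^\top+\bar{\mathbf{b}}\mathbf{a}^\top$ so that the bound $\sqrt{\alpha(1+\|\del_t\|_2)}$ is massaged into $(1+\|\del_t\|_2)\sqrt\alpha$, matching the stated error term verbatim.
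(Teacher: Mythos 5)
Your proof is correct and follows essentially the same route as the paper: the same decomposition $\mathbf{w}_{t,i}=\del_t\mathbf{w}_t+\alpha\,\mathbf{B}_t^\top\mathbf{B}_\ast\mathbf{w}_{\ast,t,i}$, the same step of dropping the PSD term $\del_t\mathbf{w}_t\mathbf{w}_t^\top\del_t$ via Weyl's inequality, the same invocation of Lemma \ref{lem:sigminE} for the signal block, and the same singular-value bound $\sigma_{\max}^2(\mathbf{B}_t)\le(1+\|\del_t\|_2)/\alpha$. Your Minkowski (\emph{i.e.}\ $L^2$ triangle) argument for the upper bound is a modest cleanup of the paper's direct term-by-term expansion (which completes the square by implicitly using $\eta_\ast\le L_\ast$), but it rests on the identical splitting into shared and diverse parts.
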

\begin{proof}
We first lower bound the minimum singular value. Observe that $\frac{1-\|\del_t\|_2}{\alpha} \leq \sigma_{\min}^2(\mathbf{B}_t)\leq \sigma_{\max}^2(\mathbf{B}_t) \leq \frac{1+\|\del_t\|_2}{\alpha}$ by Weyl's inequality. Next, by expanding each $\mathbf{w}_{t,i}$ we have 
\begin{align}
\sigma_{\min}\left(\frac{1}{n} \sum_{i=1}^n \mathbf{w}_{t,i} \mathbf{w}_{t,i}^\top\right)
&= \sigma_{\min}\Bigg(\frac{1}{n} \sum_{i=1}^n (\mathbf{I}_k - \alpha \mathbf{{B}}_t^\top \mathbf{B}_t)\mathbf{w}_{t} \mathbf{w}_{t}^\top  ( \mathbf{I}_k - \alpha\mathbf{{B}}_t^\top \mathbf{B}_t ) \nonumber \\
&\quad \quad \quad + \alpha (\mathbf{I}_k -\alpha\mathbf{{B}}_t^\top \mathbf{B}_t) \mathbf{w}_{t} \mathbf{w}_{\ast,t,i}^\top\mathbf{{B}}_\ast^\top \mathbf{{B}}_t  + \alpha \mathbf{{B}}_t^\top \mathbf{{B}}_\ast \mathbf{w}_{\ast,t,i}\mathbf{w}_{t}^\top (\mathbf{I}_k - \alpha \mathbf{{B}}_t^\top \mathbf{B}_t)  \nonumber \\
&\quad \quad \quad + \alpha^2 \mathbf{{B}}_t^\top \mathbf{{B}}_\ast \mathbf{w}_{\ast,t,i}\mathbf{w}_{\ast,t,i}^\top\mathbf{{B}}_\ast^\top \mathbf{{B}}_t  \Bigg) \nonumber \\
&\geq \sigma_{\min}\Bigg(\frac{1}{n} \sum_{i=1}^n \alpha^2 \mathbf{{B}}_t^\top \mathbf{{B}}_\ast \mathbf{w}_{\ast,t,i}\mathbf{w}_{\ast,t,i}^\top\mathbf{{B}}_\ast^\top \mathbf{{B}}_t  \Bigg) - 2\alpha  \left\| \del_t\mathbf{w}_{t}\frac{1}{n}\sum_{i=1}^n \mathbf{w}_{\ast,t,i}^\top\mathbf{{B}}_\ast^\top \mathbf{{B}}_t \right\|_2 \label{weyl0} \\
&\geq  \alpha E_0 \mu_\ast^2  - 2\alpha  \left\| \del_t\mathbf{w}_{t}\frac{1}{n}\sum_{i=1}^n \mathbf{w}_{\ast,t,i}\mathbf{{B}}_\ast^\top \mathbf{{B}}_t \right\|_2   \label{weyl1} \\
&\geq  \alpha E_0 \mu_\ast^2  - 2(1+\|\del_t\|_2)\sqrt{ \alpha}\|\del_t\|_2\|\mathbf{w}_t\|_2 \eta_{\ast} \label{weyl3}  
\end{align}
where \eqref{weyl0} follows by Weyl's inequality and the fact that $\mathbf{{B}}_t^\top \mathbf{B}_t\mathbf{w}_t\mathbf{w}_t^\top\mathbf{{B}}_t^\top \mathbf{B}_t \succeq \mathbf{0}$, \eqref{weyl1} follows by 
Lemma \ref{lem:sigminE}, and \eqref{weyl3} follows by the Cauchy-Schwarz inequality.


Now we upper bound the maximum singular value of $\frac{1}{n}\sum_{i=1}^n \mathbf{w}_{t,i}\mathbf{w}_{t,i}^\top$.
We have
\begin{align}
    \sigma_{\max}\left(\frac{1}{n}\sum_{i=1}^n \mathbf{w}_{t,i}\mathbf{w}_{t,i}^\top \right) &\leq \left\|(\mathbf{I}_k -\alpha \mathbf{B}_t^\top \mathbf{B}_t)\mathbf{w}_{t}\mathbf{w}_{t}^\top (\mathbf{I}_k -\alpha \mathbf{B}_t^\top \mathbf{B}_t) \right\|_2  \nonumber \\
    &\quad + 2 \alpha \left\|(\mathbf{I}_k -\alpha \mathbf{B}_t^\top \mathbf{B}_t)\mathbf{w}_{t}\right\|_2 \left\|\frac{1}{n}\sum_{i=1}^n \mathbf{B}_t^\top \mathbf{B}_\ast \mathbf{w}_{\ast,t,i} \right\|_2  + \alpha^2 \left\|\frac{1}{n}\sum_{i=1}^n \mathbf{B}_t^\top \mathbf{B}_\ast \mathbf{w}_{\ast,t,i} \right\|_2^2 \nonumber \\
    &\leq \|\del_t\|_2^2 \|\mathbf{w}_t\|_2^2 + 2  \|\del_t\|_2  \|\mathbf{w}_t\|_2\sqrt{ \alpha(1+\|\del_t\|_2)} \eta_{\ast} + \alpha (1+\|\del_t\|_2) L_\ast^2 \nonumber \\
    &\leq  (\|\del_t\|_2\|\mathbf{w}_t\|_2+ \sqrt{\alpha(1+\|\del_t\|_2)} L_\ast )^2.
\end{align}
\end{proof}

\begin{lemma} \label{lem:gen_w}
Suppose the sequence $\{\mathbf{w}_s\}_{s=0}^{t+1}$ satisfies:
\begin{align}
    \|\mathbf{w}_0\|_2 &= 0, \nonumber \\
    \|\mathbf{w}_{s+1}\|_2 &\leq (1+\xi_{1,s})\|\mathbf{w}_{s}\|_2 + \xi_{2,s}
\end{align}
where $\xi_{1,s}\geq 0$, $\xi_{2,s}\geq 0$ for all $s\in[t]$ and  $\sum_{s=1}^t \xi_{1,s}\leq 1$. Then:
\begin{align}
   \|\mathbf{w}_{t+1}\|_2 &\leq \sum_{s=1}^t \xi_{2,s} \left(1+ 2\sum_{r=s}^t \xi_{1,r}\right)
\end{align}
\end{lemma}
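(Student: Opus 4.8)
The plan is to unroll the recursion and carefully bound the resulting product of $(1+\xi_{1,r})$ factors using the constraint $\sum_{s=1}^t \xi_{1,s}\le 1$. First I would iterate the inequality $\|\mathbf{w}_{s+1}\|_2 \le (1+\xi_{1,s})\|\mathbf{w}_s\|_2 + \xi_{2,s}$ downward from $s=t$ to $s=0$; since $\|\mathbf{w}_0\|_2 = 0$, the telescoped bound is
\begin{align}
    \|\mathbf{w}_{t+1}\|_2 &\leq \sum_{s=1}^t \xi_{2,s} \prod_{r=s+1}^{t}(1+\xi_{1,r}), \nonumber
\end{align}
where the empty product (when $s=t$) is $1$. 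So the whole lemma reduces to showing $\prod_{r=s+1}^t (1+\xi_{1,r}) \le 1 + 2\sum_{r=s}^t \xi_{1,r}$ for each $s$.

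For that step I would use the standard fact that for nonnegative reals $a_1,\dots,a_m$ with $\sum_i a_i \le 1$, one has $\prod_i (1+a_i) \le \exp(\sum_i a_i) \le 1 + 2\sum_i a_i$, where the last inequality holds because $e^x \le 1+2x$ on $[0,1]$ (both sides agree at $x=0$, and $e^x - 1 - 2x$ is negative on $(0,1]$ since its derivative $e^x-2$ is negative until $x=\ln 2$ and the function is still negative at $x=1$ as $e-3<0$). Applying this with $\{a_i\} = \{\xi_{1,r}\}_{r=s+1}^t$, whose sum is at most $\sum_{r=1}^t \xi_{1,r}\le 1$, gives
\begin{align}
    \prod_{r=s+1}^{t}(1+\xi_{1,r}) &\leq 1 + 2\sum_{r=s+1}^t \xi_{1,r} \leq 1 + 2\sum_{r=s}^t \xi_{1,r}, \nonumber
\end{align}
the last step just by adding the nonnegative term $2\xi_{1,s}$. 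Substituting back into the telescoped sum and using $\xi_{2,s}\ge 0$ yields the claimed bound.

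I don't anticipate a genuine obstacle here — this is a routine Grönwall-type estimate. The only mild subtlety is bookkeeping the index ranges (the product runs $r=s+1$ to $t$ while the final bound is stated with $\sum_{r=s}^t$, so one must note the extra term is harmless) and confirming the elementary inequality $e^x \le 1+2x$ on $[0,1]$, which I would either cite or verify in one line. An alternative that avoids the exponential entirely is a direct induction on the number of factors: $\prod_{r=s+1}^t(1+\xi_{1,r}) \le (1+2\sum_{r=s+1}^{t-1}\xi_{1,r})(1+\xi_{1,t}) \le 1 + 2\sum_{r=s+1}^t \xi_{1,r} + 2\xi_{1,t}\sum_{r=s+1}^{t-1}\xi_{1,r}$, and the cross term is controlled since $\sum \xi_{1,r}\le 1$ forces it to be at most $2\xi_{1,t}$ — but the exponential route is cleaner, so I would present that.
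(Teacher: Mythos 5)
Your proof is correct and takes essentially the same route as the paper: unroll the recursion to a sum of the form $\sum_s \xi_{2,s}\prod_r(1+\xi_{1,r})$, bound the product by $e^{\sum\xi_{1,r}}$, and apply $e^x\le 1+2x$ on $[0,1]$. The only cosmetic difference is that you pass directly from the product to the exponential via $1+x\le e^x$, whereas the paper detours through the AM--GM inequality to write $(1+\tfrac{1}{t-s}\sum\xi_{1,r})^{t-s}$ and then uses monotonicity of $(1+a/x)^x$; your shortcut is cleaner (and your index range $\prod_{r=s+1}^t$ in the telescoped sum is actually the correct one, the paper's $\prod_{r=s}^{t-1}$ being a harmless typo).
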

\begin{proof}
We have 
\begin{align}
   \| \mathbf{w}_{t+1}\|_2 &\leq (1+\xi_{1,t})\| \mathbf{w}_{t}\|_2 + \xi_{2,t} \nonumber \\
   &\leq (1+\xi_{1,t})^2\| \mathbf{w}_{t-1}\|_2 + \xi_{2,t}(1-\xi_{1,t}) + \xi_{2,t} \nonumber \\
   &\quad\vdots \nonumber \\
   &\leq \| \mathbf{w}_{0}\|_2\prod_{s=1}^{t}(1+\xi_{1,s}) + \sum_{s=1}^t\xi_{2,s}\prod_{r=s}^{t-1}(1+\xi_{1,r}) \nonumber \\
   &= \sum_{s=1}^t\xi_{2,s}\prod_{r=s}^{t-1}(1+\xi_{1,r}) \label{000} \\
   &\leq \sum_{s=1}^t \xi_{2,s} \left(1+\frac{1}{t-s}\sum_{r=s}^t \xi_{1,r}\right)^{{t-s}} \label{amgm} 
   \end{align}
 where \eqref{000} is due to $\|\mathbf{w}_0\|_2=0$ and \eqref{amgm} follows from the AM-GM inequality. Next, note that $\left(1+\frac{1}{t-s}\sum_{r=s}^t \xi_{1,r}\right)^{{t-s}}$ is of the form $\left(1+\frac{a}{x}\right)^{x}$, where $x=t-s$ and $a= \sum_{r=s}^t \xi_{1,r}$. Since $\left(1+\frac{a}{x}\right)^{x}$ is an increasing function of $x$, we can upper bound it by its limit as $x\rightarrow \infty$, which is $e^a$. Thus we have
   \begin{align}
   \| \mathbf{w}_{t+1}\|_2 &\leq 
   \sum_{s=1}^t \xi_{2,s}\exp \left(\sum_{r=s}^t \xi_{1,r}\right) \nonumber \\
   &\leq \sum_{s=1}^t \xi_{2,s} \left(1+ 2\sum_{r=s}^t \xi_{1,r}\right) \label{12x}
\end{align}
where  \eqref{12x} follows from the numerical inequality $\exp(x) \leq 1+2x$ for all $x\in [0,1]$.
\end{proof}

\begin{lemma} \label{lem:gen_del}
Suppose that $\mathbf{B}_{t+1} = \mathbf{B}_t - \beta \mathbf{G}_t$ and 
\begin{align}
    \mathbf{G}_t = -\deld_t \mathbf{S}_t\mathbf{B}_t - \chi \mathbf{S}_t \mathbf{B}_t \del_t + \mathbf{N}_{t}
\end{align}
for $ \mathbf{N}_{t}\in \mathbb{R}^{d\times k}$ and a positive semi-definite matrix $\mathbf{S}_t \in \mathbb{R}^{k\times k}$. Then
\begin{align}
    \|\del_{t+1}\|_2 \leq \|\del_t\|_2 \left(1- (1+\chi) \beta \alpha\sigma_{\min}( \mathbf{B}_t^\top \mathbf{S}_{t}\mathbf{B}_t) \right) + 2 \beta \alpha \|\mathbf{B}_t^\top \mathbf{N}_{t}\|_2 +\beta^2 \alpha \|\mathbf{G}_t\|_2^2 
\end{align}
\end{lemma}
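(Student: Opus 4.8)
\textbf{Proof plan for Lemma \ref{lem:gen_del}.}

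The plan is to directly expand $\del_{t+1} = \mathbf{I}_k - \alpha \mathbf{B}_{t+1}^\top \mathbf{B}_{t+1}$ using the update $\mathbf{B}_{t+1} = \mathbf{B}_t - \beta \mathbf{G}_t$ and the given form of $\mathbf{G}_t$. First I would write
$\del_{t+1} = \mathbf{I}_k - \alpha(\mathbf{B}_t - \beta\mathbf{G}_t)^\top(\mathbf{B}_t - \beta\mathbf{G}_t) = \del_t + \alpha\beta(\mathbf{B}_t^\top \mathbf{G}_t + \mathbf{G}_t^\top \mathbf{B}_t) - \alpha\beta^2 \mathbf{G}_t^\top \mathbf{G}_t$. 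The last term contributes the $\beta^2\alpha\|\mathbf{G}_t\|_2^2$ slack after taking norms, so the heart of the argument is understanding the cross term $\alpha\beta(\mathbf{B}_t^\top \mathbf{G}_t + \mathbf{G}_t^\top \mathbf{B}_t)$.

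Next I would substitute $\mathbf{G}_t = -\deld_t \mathbf{S}_t \mathbf{B}_t - \chi \mathbf{S}_t\mathbf{B}_t\del_t + \mathbf{N}_t$ and compute $\mathbf{B}_t^\top \mathbf{G}_t$. The key algebraic identity I expect to use is $\mathbf{B}_t^\top \deld_t = \mathbf{B}_t^\top(\mathbf{I}_d - \alpha\mathbf{B}_t\mathbf{B}_t^\top) = (\mathbf{I}_k - \alpha\mathbf{B}_t^\top\mathbf{B}_t)\mathbf{B}_t^\top = \del_t \mathbf{B}_t^\top$, which lets me rewrite $\mathbf{B}_t^\top(-\deld_t\mathbf{S}_t\mathbf{B}_t) = -\del_t\mathbf{B}_t^\top \mathbf{S}_t\mathbf{B}_t$. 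Combining this with the $-\chi\mathbf{B}_t^\top\mathbf{S}_t\mathbf{B}_t\del_t$ term gives a cross term of the form $\alpha\beta(\del_t \mathbf{M}_t + \mathbf{M}_t^\top \del_t)(1) $ roughly, where $\mathbf{M}_t \coloneqq \mathbf{B}_t^\top\mathbf{S}_t\mathbf{B}_t \succeq 0$; more precisely $\mathbf{B}_t^\top\mathbf{G}_t + \mathbf{G}_t^\top\mathbf{B}_t = -(1+\chi)(\del_t \mathbf{M}_t + \mathbf{M}_t\del_t) + (\mathbf{B}_t^\top\mathbf{N}_t + \mathbf{N}_t^\top\mathbf{B}_t)$, using that $\del_t$ and $\mathbf{M}_t$ are symmetric. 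Therefore $\del_{t+1} = \del_t - (1+\chi)\alpha\beta(\del_t\mathbf{M}_t + \mathbf{M}_t\del_t) + \alpha\beta(\mathbf{B}_t^\top\mathbf{N}_t + \mathbf{N}_t^\top\mathbf{B}_t) - \alpha\beta^2\mathbf{G}_t^\top\mathbf{G}_t$.

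For the main term, I would bound $\|\del_t - (1+\chi)\alpha\beta(\del_t\mathbf{M}_t + \mathbf{M}_t\del_t)\|_2$. Writing this operator acting on $\del_t$, and since $\del_t$ is symmetric, the standard bound is $\|\del_t - (1+\chi)\alpha\beta(\del_t\mathbf{M}_t+\mathbf{M}_t\del_t)\|_2 \le \|\del_t\|_2(1 - 2(1+\chi)\alpha\beta\sigma_{\min}(\mathbf{M}_t))$; actually the stated lemma has only a single $\sigma_{\min}$ factor (not doubled), which suggests they bound more conservatively via $\|(\mathbf{I} - (1+\chi)\alpha\beta\mathbf{M}_t)\del_t(\mathbf{I})\|$-type splitting or simply $\|\del_t - (1+\chi)\alpha\beta\mathbf{M}_t\del_t\|_2 + (1+\chi)\alpha\beta\|\del_t\mathbf{M}_t\|_2$ is not it either — I would just take the route that for symmetric $\mathbf{M}_t\succeq 0$, $\|\del_t\|_2 - (1+\chi)\alpha\beta\,\|\del_t\mathbf{M}_t + \mathbf{M}_t\del_t\|$ might be negative, so instead bound $\|(\mathbf{I}-(1+\chi)\alpha\beta\mathbf{M}_t)\|_2 \le 1$ when $\beta$ small and factor as needed; the safe conservative bound giving exactly the lemma's constant is $\|\del_t - (1+\chi)\alpha\beta\mathbf{M}_t\del_t\|_2 \le \|\mathbf{I}_k - (1+\chi)\alpha\beta\mathbf{M}_t\|_2\|\del_t\|_2 \le (1-(1+\chi)\alpha\beta\sigma_{\min}(\mathbf{M}_t))\|\del_t\|_2$, treating the other half $\del_t\mathbf{M}_t$ as... hmm, I think the cleanest is: the cross term is symmetric and one writes $\del_{t+1} = (\mathbf{I}-(1+\chi)\alpha\beta\mathbf{M}_t)\del_t(\mathbf{I}-(1+\chi)\alpha\beta\mathbf{M}_t) + O(\alpha^2\beta^2) + \text{noise}$, absorbing the second-order-in-$\beta$ pieces. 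Finally the noise terms contribute $2\alpha\beta\|\mathbf{B}_t^\top\mathbf{N}_t\|_2$ (from $\|\mathbf{B}_t^\top\mathbf{N}_t + \mathbf{N}_t^\top\mathbf{B}_t\|_2 \le 2\|\mathbf{B}_t^\top\mathbf{N}_t\|_2$) and $\beta^2\alpha\|\mathbf{G}_t\|_2^2$ (from $\|\alpha\beta^2\mathbf{G}_t^\top\mathbf{G}_t\|_2 = \alpha\beta^2\|\mathbf{G}_t\|_2^2$), and $\sigma_{\min}(\mathbf{M}_t) = \sigma_{\min}(\mathbf{B}_t^\top\mathbf{S}_t\mathbf{B}_t)$ matches the claim. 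The main obstacle is getting the main-term contraction constant exactly as stated — i.e., correctly handling the symmetrized product $\del_t\mathbf{M}_t + \mathbf{M}_t\del_t$ and any cross-contributions of order $\alpha^2\beta^2$ that must be shown to be dominated by the $\beta^2\alpha\|\mathbf{G}_t\|_2^2$ term — and confirming whether the factor of $2$ on $\sigma_{\min}$ is genuinely dropped for a conservative (but sufficient) bound.
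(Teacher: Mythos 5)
Your setup and main algebraic steps are the same as the paper's: you expand $\del_{t+1}$ in terms of $\mathbf{B}_t^\top\mathbf{G}_t + \mathbf{G}_t^\top\mathbf{B}_t - \beta\mathbf{G}_t^\top\mathbf{G}_t$, use the key identity $\mathbf{B}_t^\top\deld_t = \del_t\mathbf{B}_t^\top$, and correctly arrive at
\begin{align}
\del_{t+1} = \del_t - (1+\chi)\beta\alpha\bigl(\del_t\mathbf{M}_t + \mathbf{M}_t\del_t\bigr) + \beta\alpha\bigl(\mathbf{B}_t^\top\mathbf{N}_t + \mathbf{N}_t^\top\mathbf{B}_t\bigr) - \beta^2\alpha\,\mathbf{G}_t^\top\mathbf{G}_t, \nonumber
\end{align}
where $\mathbf{M}_t \coloneqq \mathbf{B}_t^\top\mathbf{S}_t\mathbf{B}_t$. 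You also astutely notice the apparent factor-of-$2$ mismatch with the stated contraction constant — this is a real subtlety: the paper's own proof writes the symmetrized form as $\tfrac12\del_t(\mathbf{I}_k-(1+\chi)\beta\alpha\mathbf{M}_t) + \tfrac12(\mathbf{I}_k-(1+\chi)\beta\alpha\mathbf{M}_t)\del_t$, which, if you multiply it out, only reproduces half of the cross term and is an algebraic slip; the stated conclusion is nevertheless true because it drops a factor of $2$, i.e.\ it is a conservative bound.

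Where your proposal has a genuine gap is in the final step. The two routes you sketch each leave a loose end. Your ``cleanest'' route, $\del_{t+1}\approx(\mathbf{I}-c\mathbf{M}_t)\del_t(\mathbf{I}-c\mathbf{M}_t)+\text{noise}$ with $c=(1+\chi)\beta\alpha$, leaves a residue of $-c^2\mathbf{M}_t\del_t\mathbf{M}_t$; this is of size $c^2\|\mathbf{M}_t\|_2^2\|\del_t\|_2 = (1+\chi)^2\beta^2\alpha^2\|\mathbf{M}_t\|_2^2\|\del_t\|_2$, which is \emph{not} obviously dominated by the lemma's slack $\beta^2\alpha\|\mathbf{G}_t\|_2^2$ (note the factor $\|\del_t\|_2$ and the powers of $\alpha$ and $\|\mathbf{M}_t\|_2$), so you would still need a separate argument to absorb it. Your other route (bounding $\|\del_t - c\mathbf{M}_t\del_t\|_2$ by $\|\mathbf{I}_k - c\mathbf{M}_t\|_2\|\del_t\|_2$) accounts for only the $\mathbf{M}_t\del_t$ half of the symmetric cross term — you trail off exactly at ``treating the other half $\del_t\mathbf{M}_t$ as\ldots''. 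The fix is simply to symmetrize with the correct coefficient:
\begin{align}
\del_t - c(\del_t\mathbf{M}_t + \mathbf{M}_t\del_t) \;=\; \tfrac12\,\del_t\bigl(\mathbf{I}_k - 2c\,\mathbf{M}_t\bigr) + \tfrac12\bigl(\mathbf{I}_k - 2c\,\mathbf{M}_t\bigr)\del_t, \nonumber
\end{align}
then apply the triangle inequality to get $\|\del_t\|_2\,\|\mathbf{I}_k - 2c\,\mathbf{M}_t\|_2$, use Weyl (valid here since $\mathbf{M}_t\succeq 0$ and $c\|\mathbf{M}_t\|_2$ is small in every application) to get $\|\mathbf{I}_k - 2c\,\mathbf{M}_t\|_2 = 1 - 2c\,\sigma_{\min}(\mathbf{M}_t)$, and finally observe that $1 - 2c\,\sigma_{\min}(\mathbf{M}_t) \le 1 - c\,\sigma_{\min}(\mathbf{M}_t)$, which is the constant in the statement. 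The noise terms close exactly as you said: $\|\mathbf{B}_t^\top\mathbf{N}_t+\mathbf{N}_t^\top\mathbf{B}_t\|_2\le 2\|\mathbf{B}_t^\top\mathbf{N}_t\|_2$ and $\|\mathbf{G}_t^\top\mathbf{G}_t\|_2=\|\mathbf{G}_t\|_2^2$.
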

 \begin{proof}
  By expanding $\del_{t+1}$, $\mathbf{B}_{t+1}$, and $\mathbf{G}_t$, we obtain
    \begin{align}
      \del_{t+1}&= \mathbf{I}- \alpha \mathbf{B}_{t+1}^\top \mathbf{B}_{t+1} \nonumber \\
      &= \mathbf{I}- \alpha \mathbf{B}_{t}^\top \mathbf{B}_t + \beta \alpha \mathbf{B}^\top \mathbf{G}_t + \beta\alpha \mathbf{G}_t^\top \mathbf{B}_t -\beta^2 \alpha  \mathbf{G}_t^\top \mathbf{G}_t \nonumber \\
      &=  \del_t - \beta \alpha \del_{t} \mathbf{B}_t^\top \mathbf{S}_{t}\mathbf{B}_t  - \chi \beta \alpha\mathbf{B}_t^\top \mathbf{S}_{t}\mathbf{B}_t \del_{t} + \beta \alpha \mathbf{B}_t^\top \mathbf{N}_{t} \nonumber \\
      &\quad - \chi \beta\alpha  \mathbf{B}_t^\top \mathbf{S}_{t}\mathbf{B}_t \del_{t} - \beta\alpha \del_{t}  \mathbf{B}_t^\top \mathbf{S}_{t}\mathbf{B}_t + \beta \alpha\mathbf{N}_{t}^\top\mathbf{B}_t 
      -\beta^2 \alpha  \mathbf{G}_t^\top \mathbf{G}_t \\
      &= \tfrac{1}{2}\del_t\left(\mathbf{I}_k - (1+\chi)\beta \alpha \mathbf{B}_t^\top \mathbf{S}_{t}\mathbf{B}_t  \right) \nonumber \\
      &\quad + \tfrac{1}{2}\left(\mathbf{I}_k - (1+\chi) \beta \alpha \mathbf{B}_t^\top \mathbf{S}_{t}\mathbf{B}_t \right)\del_t + \beta \alpha (\mathbf{B}_t^\top \mathbf{N}_{t} +  \mathbf{N}_{t}^\top \mathbf{B}_t )  -\beta^2 \alpha  \mathbf{G}_t^\top \mathbf{G}_t
  \end{align}
 Therefore,
 \begin{align}
\|\del_{t+1}\|_2 &\leq \|\del_t\|_2 \left\|\mathbf{I}_k - (1+\chi) \beta \alpha \mathbf{B}_t^\top \mathbf{S}_{t}\mathbf{B}_t \right\|_2 + 2 \beta \alpha \|\mathbf{B}_t^\top \mathbf{N}_{t}\|_2 +\beta^2 \alpha \|\mathbf{G}_t\|_2^2 \nonumber \\
&\leq \|\del_t\|_2 \left(1- (1+\chi) \beta \alpha\sigma_{\min}( \mathbf{B}_t^\top \mathbf{S}_{t}\mathbf{B}_t) \right) + 2 \beta \alpha \|\mathbf{B}_t^\top \mathbf{N}_{t}\|_2 +\beta^2 \alpha \|\mathbf{G}_t\|_2^2 
 \end{align}
  where the last inequality follows by the triangle and Weyl inequalities.
 \end{proof}

\section{ANIL Infinite Samples} \label{app:anil}

We start by considering the infinite sample case, wherein $m_{in}\!=\!m_{out}\!=\!\infty$. 
Let $E_0 \coloneqq 0.9 -\dist_0^2$.
We restate Theorem \ref{thm:anil_pop_main} here with full constants.
\begin{thm}[ANIL Infinite Samples]
Let $m_{in}\!=\!m_{out}\!= \!\infty$ and define 
 $E_0\!\coloneqq\! 0.9 -\! \dist_0^2$. Suppose Assumption \ref{assump:tasks_diverse_main} holds and $\dist_0 \leq \sqrt{0.9}$. 
Let $\alpha < \frac{1}{L_{\ast}}$, $\alpha \mathbf{B}_0^\top \mathbf{B}_0 = \mathbf{I}_k$ and $\mathbf{w}_t = \mathbf{0}$. 
Then FO-ANIL with $\beta\leq \frac{\alpha E_0^3 \mu_\ast}{180 \kappa_{\ast}^4}\min(1, \tfrac{\mu_\ast^2}{\eta_\ast^2})$ and Exact ANIL with $\beta \leq \frac{\alpha E_0^2 }{40 \kappa_\ast^4}$ both satisfy that after $T$ iterations,
\begin{align}
    \dist(\mathbf{{B}}_T, \mathbf{{B}}_\ast) \leq  \left(1 - 0.5\beta \alpha E_0 \mu_\ast^2   \right)^{T-1}
\end{align}
\end{thm}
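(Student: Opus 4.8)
The plan is to run the six-way induction set up in Section~\ref{sec:sketch}, propagating the hypotheses $A_1(t),\dots,A_6(t)$ (with $\rho \coloneqq 1 - 0.5\beta\alpha E_0\mu_\ast^2$) together: assuming all six hold for every $s\le t$, I would show each holds at $t+1$. The base case $t=1$ is immediate from the initialization, since $\mathbf{w}_0=\mathbf{0}$ forces $\|\mathbf{w}_0\|_2=0$, $\alpha\mathbf{B}_0^\top\mathbf{B}_0=\mathbf{I}_k$ forces $\del_0=\mathbf{0}$, and $\dist_0\le\sqrt{0.9}$ is assumed; the conclusion is then just $A_6(T)$. Throughout I would use the infinite-sample updates from Section~\ref{sec:intuition}, namely $\mathbf{B}_{t+1}=\mathbf{B}_t(\mathbf{I}_k-\beta\mathbf{\Psi}_t)+\mathbf{B}_\ast\tfrac{\beta}{n}\sum_i\mathbf{w}_{\ast,t,i}\mathbf{w}_{t,i}^\top$ (for FO-ANIL; Exact ANIL carries an additional second-order correction), together with the decomposition $\mathbf{w}_{t,i}=\del_t\mathbf{w}_t+\alpha\mathbf{B}_t^\top\mathbf{B}_\ast\mathbf{w}_{\ast,t,i}$ of~\eqref{updw}.

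For the inductive step I would follow the dependency graph of Figure~\ref{fig:flow}, handling the three ``top-level'' hypotheses first. $A_4(t)\implies A_5(t+1)$ is exactly~\eqref{psisi}, valid because the stated $\beta$ obeys $\beta\le 1/\lambda_{\max}(\mathbf{\Psi}_t)$ given the upper bound in $A_4(t)$. $A_1(t)\cap A_3(t)\cap A_6(t)\implies A_2(t+1)$ comes from Lemma~\ref{lem:gen_del} instantiated with $\mathbf{S}_t=\alpha\,\mathbf{B}_\ast\mathbf{\Psi}_{\ast,t}\mathbf{B}_\ast^\top$, so that $\del_{t+1}=\del_t(\mathbf{I}_k-\beta\alpha^2\mathbf{B}_t^\top\mathbf{B}_\ast\mathbf{\Psi}_{\ast,t}\mathbf{B}_\ast^\top\mathbf{B}_t)+\mathbf{N}_t$ with $\|\mathbf{N}_t\|_2$ bounded by a linear combination of $\|\del_t\|_2$ and $\dist_t$; I would then use $\sigma_{\min}^2(\hat{\mathbf{B}}_t^\top\mathbf{B}_\ast)=1-\dist_t^2$ and $A_3(t)$ to get $\lambda_{\min}(\mathbf{B}_t^\top\mathbf{B}_\ast\mathbf{\Psi}_{\ast,t}\mathbf{B}_\ast^\top\mathbf{B}_t)\ge\tfrac{0.9}{\alpha}\mu_\ast^2(1-\dist_t^2)$ and finally $A_6(t)$ to control $1-\dist_t^2$. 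The remaining top-level claim $(\cap_{s\le t}A_2(s)\cap A_6(s))\implies A_1(t+1)$ is the crux, discussed below. Once these hold I would close the lower levels: $A_2(t+1)\cap A_3(t)\implies A_3(t+1)$ because $\beta$ is small enough to absorb the additive $O(\beta^2\alpha^2L_\ast^4\dist_{t-1}^2)$ term; $A_3(t+1)\cap(\cap_{s\le t+1}A_5(s))\cap A_6(t)\implies A_6(t+1)$ by normalizing $\mathbf{B}_{t+1}$ a single time and writing $\dist_{t+1}\le\sigma_{\min}(\mathbf{B}_{t+1})^{-1}\|\hat{\mathbf{B}}_{\ast,\perp}^\top\mathbf{B}_{t+1}\|_2\le\sigma_{\max}(\mathbf{B}_0)\sigma_{\min}(\mathbf{B}_{t+1})^{-1}\rho^t\dist_0$, using $A_3(t+1)$ to bound $\sigma_{\min}(\mathbf{B}_{t+1})\ge\sqrt{0.9}/\sqrt{\alpha}$ (normalizing only once avoids the fatal $\prod_s\sigma_{\max}(\mathbf{B}_s)/\sigma_{\min}(\mathbf{B}_{s+1})$ blow-up); and $A_1(t+1)\cap A_3(t+1)\cap A_6(t+1)\implies A_4(t+1)$ by plugging~\eqref{updw} into $\mathbf{\Psi}_{t+1}$ and applying the singular-value bounds of Lemma~\ref{lem:sigmin} with $\tau$ and the distance bound furnished by $A_3$ and $A_6$.

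The main obstacle is $(\cap_{s\le t}A_2(s)\cap A_6(s))\implies A_1(t+1)$: the FO-ANIL iteration has no normalization or explicit regularizer, and $\|\mathbf{w}_t\|_2$ can grow on every step, so uniform control must come from summability rather than contraction. The route is: from the $\mathbf{w}$-update derive $\|\mathbf{w}_{s+1}\|_2\le(1+\tfrac{2\beta}{\alpha}\|\del_s\|_2)\|\mathbf{w}_s\|_2+\tfrac{2\beta\eta_\ast}{\sqrt{\alpha}}\|\del_s\|_2$; feed this into Lemma~\ref{lem:gen_w}; and bound $\sum_s\|\del_s\|_2$ by recursively unrolling $A_2(s)$ to obtain $\|\del_s\|_2\le c\,\rho^s\sum_{r<s}\rho^r\beta^2\alpha^2L_\ast^4$ (here $A_6$ is precisely what makes $\dist_r^2\le\rho^{2(r-1)}$ summable), so $\sum_s\|\del_s\|_2=O(\beta^2\alpha^2L_\ast^4/(1-\rho))$. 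The choice $\beta=O(\alpha E_0^3\mu_\ast\kappa_\ast^{-4}\min(1,\mu_\ast^2/\eta_\ast^2))$ then makes this sum $\le 1$ (so the AM-GM and $e^x\le 1+2x$ steps in Lemma~\ref{lem:gen_w} apply) and small enough that $\|\mathbf{w}_{t+1}\|_2=O(\sqrt{\alpha}\min(1,\mu_\ast^2/\eta_\ast^2)\eta_\ast)$, closing $A_1(t+1)$. For Exact ANIL the only structural difference is that the second-order outer-loop term gives $\|\mathbf{w}_{t+1}\|_2-\|\mathbf{w}_t\|_2=O(\|\del_t\|_2^2)$ instead of $O(\|\del_t\|_2)$, which is easier to sum and explains the larger admissible $\beta=O(\alpha E_0^2\kappa_\ast^{-4})$; all other implications transfer verbatim. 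A final bookkeeping point is that the induction must be acyclic, which it is because $A_2(t+1)$ and $A_3(t+1)$ only require $\|\del_t\|_2$ while $A_6(t+1)$ requires $\sigma_{\min}(\mathbf{B}_{t+1})$, hence $\|\del_{t+1}\|_2$ --- exactly the reason $A_3$ sits one level below $A_2$ in Figure~\ref{fig:flow}.
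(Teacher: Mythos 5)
Your proposal follows the paper's own proof essentially verbatim: the same six inductive hypotheses $A_1(t),\dots,A_6(t)$, the same implication graph (top-level implications first, then closing the lower levels), the same use of Lemma~\ref{lem:gen_del} with $\mathbf{S}_t = \alpha\,\mathbf{B}_\ast\mathbf{\Psi}_{\ast,t}\mathbf{B}_\ast^\top$ for $A_2$, the same ``normalize only once'' trick for $A_6$, and the same summability argument (unroll $A_2$, use $A_6$, feed into Lemma~\ref{lem:gen_w}) for the hard implication $A_1(t+1)$. The Exact-ANIL remark about the $O(\|\del_t\|_2^2)$ increment versus $O(\|\del_t\|_2)$ also matches the paper exactly.
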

\begin{proof}
 
The proof uses an inductive argument with the following six inductive hypotheses:
 \begin{enumerate}
\item $A_1(t) \coloneqq \{\|\mathbf{w}_{t}\|_2 \leq \frac{\sqrt{\alpha} E_0 }{10} \min(1, \tfrac{\mu_\ast^2}{\eta_\ast^2}) \eta_\ast\}$ 
 \item 
$A_2(t) \coloneqq \{\| \del_t \|_2 \leq (1 - 0.5 \beta \alpha E_0 \mu_\ast^2 )\|\del_{t-1} \|_2   + \tfrac{5}{4}\alpha^2 \beta^2 L_\ast^4 \dist_{t-1}^2\},$
\item $A_3(t) \coloneqq \{\| \del_t\|_2 \leq \frac{1}{10}\}$,
\item $A_4(t) \coloneqq \{0.9 \alpha E_0 \mu_\ast^2 \mathbf{I}_k \preceq  \tfrac{1}{n}\sum_{i=1}^n\mathbf{w}_{t,i}\mathbf{w}_{t,i}^\top \preceq 1.2 \alpha L_\ast^2 \mathbf{I}_k \}$,
     \item 
      $A_5(t) \coloneqq \{ \|\mathbf{{B}}_{\ast,\perp}^\top \mathbf{B}_{t}\|_2 \leq \left(1 -  0.5 \beta\alpha E_0 \mu_\ast^2  \right) \|\mathbf{{B}}_{\ast,\perp}^\top \mathbf{B}_{t-1}\|_2\}$,
      \item$A_6(t) \coloneqq \{ \dist_{t} \leq \left(1 - 0.5 \beta  \alpha E_0 \mu_\ast^2  \right)^t \}$.
 \end{enumerate}
These conditions hold for iteration $t=0$ due to the choice of initialization $(\mathbf{B}_0,\mathbf{w}_0)$ satisfying $\mathbf{I}_k - \alpha \mathbf{B}_0^\top \mathbf{B}_0 =\mathbf{0}$ and $\mathbf{w}_0=\mathbf{0}$. We will show that if they hold for all iterations up to and including  iteration $t$ for an arbitrary $t$, then they hold at iteration $t+1$. 

\begin{enumerate}
\item $\bigcap_{s=0}^t \{A_2(s) \cap A_6(s)\}   \implies A_1(t+1)$. This is Lemma \ref{lem:w_anil_fss} for FO-ANIL and Lemma \ref{lem:ea_pop_a1} for Exact ANIL.

\item $A_1(t) \cap A_3(t) \cap A_5(t)  \implies A_2(t+1)$. This is Lemma \ref{lem:reg} for FO-ANIL and Lemma \ref{lem:ea_pop_a} for Exact ANIL.

\item $A_2(t\!+\!1) \cap A_3(t)  \implies A_3(t+1)$.
This is Corollary \ref{cor:reg} for FO-ANIL and Corollary \ref{cor:reg_exact} for Exact ANIL.

\item $A_1(t+1)\cap A_3(t+1) \cap A_6(t+1)  \implies A_4(t+1)$.
This is Lemma \ref{lem:foanil_pop_diverse} for FO-ANIL and Lemma \ref{lem:ea_pop_a4} for Exact ANIL.

\item FO-ANIL: $A_4(t) \implies A_5(t+1)$. This is Lemma  \ref{lem:contractt}.

Exact ANIL: $A_1(t)\cap A_3(t) \cap A_4(t) \implies A_5(t+1)$. This is Lemma  \ref{lem:exactanil_pop_diverse}. The slight discrepancy here in the implications is due to the extra terms in the outer loop representation update for Exact ANIL.

\item $A_3(t\!+\!1) \cap\big\{ \bigcap_{s=0}^{t+1} A_5(s)\big\} \implies A_6(t+1)$. Recall $\dist_{t+1}=\|\mathbf{B}_{\ast,\perp}^\top \mathbf{\hat{B}}_{t+1}\|_2 $ where $\mathbf{\hat{B}}_{t+1}$ is the orthogonal matrix resulting from the QR factorization of $\mathbf{B}_{t+1}$, i.e. $\mathbf{B}_{t+1} = \mathbf{\hat{B}}_{t+1}\mathbf{R}_{t+1}$ for an upper triangular matrix $\mathbf{R}_{t+1}$.
By $A_3(t+1)$ and $\cap_{s=0}^{t+1} A_5(s)$ we have
\begin{align}
\tfrac{\sqrt{1-\|\del_{t+1}\|_2}}{\sqrt{\alpha}} \dist_{t+1}  &= \tfrac{\sqrt{1-\|\del_{t+1}\|_2}}{\sqrt{\alpha}}\|\mathbf{{B}}_{\ast,\perp}^\top \mathbf{{B}}_{t+1}\|_2 \nonumber \\
&\leq \sigma_{\min}(\mathbf{{B}}_{t+1}) \|\mathbf{{B}}_{\ast,\perp}^\top \mathbf{{B}}_{t+1}\|_2\;  \nonumber \\
 &\leq \|\mathbf{{B}}_{\ast,\perp}^\top \mathbf{B}_{t+1}\|_2   \nonumber \\
    &\leq \left(1 - 0.5 \beta\alpha E_0 \mu_\ast^2 \right)^t  \|\mathbf{{B}}_{\ast,\perp}^\top \mathbf{B}_{0}\|_2 
    \nonumber \\
    &\leq \tfrac{1}{\sqrt{\alpha}}\left(1  - 0.5 \beta  \alpha E_0 \mu_\ast^2 \right)^t \|\mathbf{{B}}_{\ast,\perp}^\top \mathbf{{B}}_{0}\|_2 \nonumber \\
    &= \tfrac{1}{\sqrt{\alpha}}\left(1 - 0.5 \beta \alpha E_0 \mu_\ast^2 \right)^t \dist_0. \nonumber 
\end{align}
Dividing both sides by $\tfrac{\sqrt{1-\|\del_{t+1}\|_2}}{\sqrt{\alpha}}$ and using the facts that $\dist_0 \leq \tfrac{3}{\sqrt{10}}$ and $\|\del_{t+1}\|_2 \leq \tfrac{1}{10}$ yields
\begin{align}
    \dist_{t+1}  &\leq \tfrac{1}{\sqrt{1-\|\del_{t+1}\|_2}}\left(1  - 0.5 \beta  \alpha E_0 \mu_\ast^2  \right)^t \dist_0 \nonumber \\
    &\leq \tfrac{\sqrt{10}}{3}\left(1  - 0.5 \beta  \alpha E_0 \mu_\ast^2  \right)^t \dist_0 \nonumber \\
    &\leq \left(1  - 0.5 \beta  \alpha E_0 \mu_\ast^2 \right)^t,
\end{align}
as desired.
\end{enumerate}
\end{proof}

\subsection{FO-ANIL}

First note that the inner loop updates for FO-ANIL can be written as:
\begin{align}
    \mathbf{w}_{t,i} 
    &= \mathbf{w}_t - \alpha \nabla_{\mathbf{w}} \mathcal{L}_{t,i}(\mathbf{B}_t, \mathbf{w}_t) \nonumber \\
    &=  (\mathbf{I}_k- \alpha\mathbf{{B}}_t^\top \mathbf{{B}}_t)\mathbf{{w}}_t + \alpha \mathbf{{B}}_t^\top \mathbf{{B}}_\ast\mathbf{{w}}_{\ast,t,i}, \label{upd_head_anil_inf}
\end{align}
while the outer loop updates for the head and representation are:
\begin{align}
\mathbf{w}_{t+1} &= \mathbf{w}_t - \frac{\beta}{n}\sum_{i=1}^n \nabla_{\mathbf{w}} \mathcal{L}_{t,i}(\mathbf{{B}}_t, \mathbf{w}_{t,i}) \nonumber \\
&=  \mathbf{{w}}_{t} - \frac{\beta}{n}\sum_{i=1}^{n}\mathbf{{B}}_t^\top \mathbf{{B}}_t\mathbf{{w}}_{t,i} + \frac{\beta}{n}\sum_{i=1}^n \mathbf{{B}}_t^\top  \mathbf{{B}}_\ast\mathbf{w}_{\ast,t,i} \label{upd_head_out_anil_inf} \\
    \mathbf{{B}}_{t+1} &= \mathbf{{B}}_t  - \frac{\beta}{n}\sum_{i=1}^n \nabla_{\mathbf{B}} \mathcal{L}_{t,i}(\mathbf{{B}}_t, \mathbf{w}_{t,i}) \nonumber \\
    &= \mathbf{{B}}_t  - \frac{\beta}{n}\sum_{i=1}^n  \mathbf{{B}}_t  \mathbf{w}_{t,i}\mathbf{w}_{t,i}^\top + \frac{\beta}{n } \sum_{i=1}^n  \mathbf{{B}}_\ast\mathbf{w}_{\ast,t,i} \mathbf{w}_{t,i}^\top 
    \label{upd_rep_anil_inf} \\
    &= \mathbf{{B}}_t  - \beta(\mathbf{I}_d - \alpha \mathbf{{B}}_t \mathbf{{B}}_t^\top)\frac{1}{n}\sum_{i=1}^n(  \mathbf{{B}}_t  \mathbf{w}_{t} -  \mathbf{{B}}_\ast\mathbf{w}_{\ast,t,i} )\mathbf{w}_{t,i}^\top 
\end{align}

\begin{lemma}[FO-ANIL $A_1(t+1)$]\label{lem:w_anil_fss}
Suppose 
we are in the setting of Theorem \ref{thm:anil_pop_main},  and that the events $A_2(s)$ and $A_6(s)$
hold for all $s\in[t]$.
Then
\begin{align}
    \|\mathbf{w}_{t+1}\|_2 \leq \tfrac{1}{10} {\sqrt{\alpha}}E_0 \min(1, \tfrac{\mu_\ast^2}{\eta_\ast^2})\eta_\ast. 
\end{align}
\end{lemma}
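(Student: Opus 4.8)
The plan is to write $\mathbf{w}_{t+1}$ in closed form, extract a one-step recursion for $\|\mathbf{w}_s\|_2$ of the type handled by Lemma~\ref{lem:gen_w} whose additive term is proportional to $\|\del_s\|_2$, and then use $A_2(s)$ and $A_6(s)$ to show that $\{\|\del_s\|_2\}_{s\le t}$ decays geometrically with a small total mass, so the accumulated growth of $\|\mathbf{w}_t\|_2$ never exceeds the target. Substituting the inner-loop head formula \eqref{upd_head_anil_inf}, $\mathbf{w}_{t,i}=\del_t\mathbf{w}_t+\alpha\mathbf{B}_t^\top\mathbf{B}_\ast\mathbf{w}_{\ast,t,i}$, into the outer-loop head update \eqref{upd_head_out_anil_inf} and using $\mathbf{I}_k-\alpha\mathbf{B}_t^\top\mathbf{B}_t=\del_t$ to simplify, one gets
\[
\mathbf{w}_{t+1}=\bigl(\mathbf{I}_k-\beta\mathbf{B}_t^\top\mathbf{B}_t\del_t\bigr)\mathbf{w}_t+\tfrac{\beta}{n}\,\del_t\,\mathbf{B}_t^\top\mathbf{B}_\ast\textstyle\sum_{i=1}^n\mathbf{w}_{\ast,t,i}.
\]

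Next I would bound the two pieces. Since $\mathbf{B}_s^\top\mathbf{B}_s\del_s=\tfrac1\alpha(\del_s-\del_s^2)$, we have $\|\mathbf{I}_k-\beta\mathbf{B}_s^\top\mathbf{B}_s\del_s\|_2\le 1+\tfrac{2\beta}{\alpha}\|\del_s\|_2$ whenever $\|\del_s\|_2\le 1$; moreover $\|\mathbf{B}_s^\top\mathbf{B}_\ast\|_2\le\sigma_{\max}(\mathbf{B}_s)\le\sqrt{(1+\|\del_s\|_2)/\alpha}\le\sqrt{2/\alpha}$ and $\|\tfrac1n\sum_{i=1}^n\mathbf{w}_{\ast,s,i}\|_2\le\eta_\ast$ by definition of $\eta_\ast$. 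Combined with $\mathbf{w}_0=\mathbf{0}$ this yields $\|\mathbf{w}_{s+1}\|_2\le(1+\xi_{1,s})\|\mathbf{w}_s\|_2+\xi_{2,s}$ with $\xi_{1,s}=\tfrac{2\beta}{\alpha}\|\del_s\|_2$ and $\xi_{2,s}=\tfrac{2\beta\eta_\ast}{\sqrt\alpha}\|\del_s\|_2$, which is precisely the setting of Lemma~\ref{lem:gen_w}. (Keeping $\eta_\ast$, rather than the cruder $L_\ast$, in $\xi_{2,s}$ is essential for the final bound.)

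The core of the argument is controlling $\sum_{s\le t}\|\del_s\|_2$ from the hypotheses alone. Writing $\rho\coloneqq 1-0.5\beta\alpha E_0\mu_\ast^2$ and unrolling $A_2(s)$ down to $A_2(1)$ with $\del_0=\mathbf{0}$ gives $\|\del_s\|_2\le\tfrac54\alpha^2\beta^2L_\ast^4\sum_{r=0}^{s-1}\rho^{\,s-1-r}\dist_r^2$; plugging in $\dist_r\le\rho^r$ from $A_6(r)$ (and $\dist_0^2<1=\rho^0$) so that $\dist_r^2\le\rho^{2r}$, the sum collapses to a geometric series and $\|\del_s\|_2\le\tfrac{5\alpha^2\beta^2L_\ast^4}{4(1-\rho)}\rho^{s-1}=\tfrac{5\alpha\beta L_\ast^2\kappa_\ast^2}{2E_0}\rho^{s-1}$, using $1-\rho=0.5\beta\alpha E_0\mu_\ast^2$. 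By the choice of $\beta$ this is at most $\tfrac1{10}$, which retroactively justifies the estimates of the previous paragraph, and summing over $s$ gives $\sum_{s=1}^t\|\del_s\|_2\le\tfrac{5\alpha^2\beta^2L_\ast^4}{4(1-\rho)^2}=\tfrac{5\kappa_\ast^4}{E_0^2}$; in particular $\sum_{s=1}^t\xi_{1,s}=\tfrac{2\beta}{\alpha}\sum_{s=1}^t\|\del_s\|_2\le 1$, again by the choice of $\beta$.

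Finally I would apply Lemma~\ref{lem:gen_w}:
\[
\|\mathbf{w}_{t+1}\|_2\le\sum_{s=1}^t\xi_{2,s}\Bigl(1+2\sum_{r=s}^t\xi_{1,r}\Bigr)\le 2\sum_{s=1}^t\xi_{2,s}=\frac{4\beta\eta_\ast}{\sqrt\alpha}\sum_{s=1}^t\|\del_s\|_2\le\frac{20\,\beta\eta_\ast\kappa_\ast^4}{\sqrt\alpha\,E_0^2},
\]
and substituting the bound $\beta\le\tfrac{\alpha E_0^3\mu_\ast}{180\kappa_\ast^4}\min(1,\mu_\ast^2/\eta_\ast^2)$ from Theorem~\ref{thm:anil_pop_main} gives $\|\mathbf{w}_{t+1}\|_2\le\tfrac1{10}\sqrt\alpha E_0\min(1,\mu_\ast^2/\eta_\ast^2)\eta_\ast$, as claimed. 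The main obstacle is the third step: because ANIL has no normalization or regularization, $\|\mathbf{w}_t\|_2$ can genuinely increase on every iteration, so the proof hinges on showing that each increment is $O(\beta\alpha^{-1/2}\|\del_s\|_2\eta_\ast)$ and that $\|\del_s\|_2$ is geometrically summable — the latter being a downstream consequence of the linear convergence of $\dist_s$ encoded in $A_6$ — so that the telescoped product in Lemma~\ref{lem:gen_w} remains bounded.
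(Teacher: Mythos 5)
Your proof follows the same route as the paper's: derive the explicit recursion for $\mathbf{w}_{s+1}$, package it as $\|\mathbf{w}_{s+1}\|_2\le(1+\xi_{1,s})\|\mathbf{w}_s\|_2+\xi_{2,s}$ with $\xi_{1,s},\xi_{2,s}\propto\|\del_s\|_2$, invoke Lemma~\ref{lem:gen_w}, then unroll $A_2$ and plug in $A_6$ to show $\|\del_s\|_2$ decays geometrically from a zero initialization and hence is summable; this is exactly the paper's argument (down to the detail that smallness of $\|\del_s\|_2$ is deduced from $A_2,A_6$ rather than assumed via $A_3$, and that keeping $\eta_\ast$ rather than $L_\ast$ in the additive term is what makes the $\min(1,\mu_\ast^2/\eta_\ast^2)$ factor come out). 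One small arithmetic loose end: you conclude $\|\mathbf{w}_{t+1}\|_2\le 2\sum_s\xi_{2,s}$, which requires $\sum_{r\ge s}\xi_{1,r}\le 1/2$, but you only explicitly establish $\sum_s\xi_{1,s}\le 1$; under the stated $\beta$-constraint the true bound is $\sum_s\xi_{1,s}\le 10\beta\kappa_\ast^4/(\alpha E_0^2)\le E_0/18$, so the needed $\le 1/2$ does hold — it just should be stated, and doing so also tightens the final constant so that the conclusion's $1/10$ is actually reached.
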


\begin{proof}
For all $s=1,\dots,t$, the outer loop updates for FO-ANIL can be written as:
\begin{align}
\mathbf{w}_{s+1} &= \mathbf{w}_s - \frac{\beta}{n}\sum_{i=1}^n \nabla_{\mathbf{w}} \mathcal{L}_{s,i}(\mathbf{{B}}_s, \mathbf{w}_{s,i}) = \mathbf{w}_{s} -  \frac{ \beta}{n}\sum_{i=1}^n \mathbf{B}_s^\top \mathbf{B}_s \mathbf{w}_{s,i} + \frac{\beta}{n}\sum_{i=1}^n \mathbf{{B}}_s^\top \mathbf{{B}}_\ast \mathbf{w}_{\ast,s,i} \label{upd_head_anil_out_pop}
\end{align}
Substituting the definition of $\mathbf{w}_{s,i}$, we have \begin{align}
\mathbf{w}_{s+1}&= \mathbf{w}_{s}\! - \! \beta \mathbf{B}_s^\top \mathbf{B}_s (\mathbf{I}- \alpha\mathbf{B}_s^\top \mathbf{B}_s)\mathbf{w}_s - \frac{\alpha \beta}{n}\sum_{i=1}^n \mathbf{B}_s^\top \mathbf{B}_s\mathbf{{B}}_s^\top \mathbf{{B}}_\ast \mathbf{w}_{\ast,s,i} + \frac{\beta}{n}\sum_{i=1}^n \mathbf{{B}}_s^\top \mathbf{{B}}_\ast \mathbf{w}_{\ast,s,i}  \nonumber \\
&= (\mathbf{I}_{k}\! - \! \beta (\mathbf{I}- \alpha\mathbf{B}_s^\top \mathbf{B}_s) \mathbf{B}_s^\top \mathbf{B}_s)\mathbf{w}_s
+
\beta (\mathbf{I}- \alpha\mathbf{B}_s^\top \mathbf{B}_s)\mathbf{{B}}_s^\top \mathbf{{B}}_\ast \frac{1}{n}\sum_{i=1}^n\mathbf{w}_{\ast,s,i}  \label{uuu}
\end{align}
Note that $\bigcup_{s=0}^t A_3(s)$ implies $\sigma_{\max}(\mathbf{B}_s^\top \mathbf{B}_s) \leq \frac{1+\|\del_s\|_2}{\alpha}< \frac{1.1}{\alpha}$ for all $s\in\{0,\dots,t\!+\!1\}$. Let $c\coloneqq 1.1$. Using $\sigma_{\max}(\mathbf{B}_s^\top \mathbf{B}_s) \leq \frac{c}{\alpha}$ with \eqref{uuu}, we obtain
\begin{align}
    \|\mathbf{w}_{s+1}\|_2 &\leq (1 + \tfrac{c\beta}{\alpha} \|\del_s\|_2 )\|\mathbf{w}_{s}\|_2 + \tfrac{c\beta}{\sqrt{\alpha}} \|\del_s\|_2 \eta_\ast 
    \label{forall}
\end{align}  
for all $s\in\{0,\dots,t\}$.
Therefore, by applying Lemma \ref{lem:gen_w} with $\xi_{1,s} = \tfrac{c\beta}{\alpha} \|\del_s\|_2$ and $\xi_{2,s} = \tfrac{c\beta}{\sqrt{\alpha}} \|\del_s\|_2 \eta_\ast$, we have
\begin{align}
    \|\mathbf{w}_{t+1}\|_2 &\leq \sum_{s=1}^t \tfrac{c\beta}{\sqrt{\alpha}} \|\del_s\|_2 \eta_\ast\left( 1 + 2c \sum_{r=s}^t \tfrac{\beta}{\alpha} \|\del_r\|_2 \right)
\end{align}
Next, let $\rho \coloneqq 
   1 - 0.5 \beta \alpha E_0 \mu_\ast^2 $.
   By $\bigcup_{s=0}^t A_2(s)$, we have for any $s\in [t]$
\begin{align}
    \|\del_{s}\|_2 &\leq \rho\|\del_{s-1}\|_2 + \tfrac{5}{4} \alpha^{2}\beta^2 L_\ast^4 \dist_{s-1}^2 \nonumber \\
    &\leq \rho^2\|\del_{s-2}\|_2 + \tfrac{5}{4}\alpha^{2}\beta^2\rho L_\ast^4 \dist_{s-2}^2 +  \tfrac{5}{4}\alpha^{2}\beta^2L_\ast^4 \dist_{s-1}^2 \nonumber \\
    &\;\;\vdots \nonumber \\
    &\leq  \rho^{s}\|\del_{0}\|_2 + \tfrac{5}{4} \alpha^{2}\beta^2 L_\ast^4  \sum_{r=0}^{s-1} \rho^{s-1-r} \dist_{r}^2  \nonumber \\
    &= \tfrac{5}{4}\alpha^{2}\beta^2 L_\ast^4  \sum_{r=0}^{s-1} \rho^{s-1-r} \dist_{r}^2
\end{align}
since $\|\del_{0}\|_2 = 0$ by choice of initialization.
Next, we have that $\dist_s \leq \rho^s$ for all $s\in \{0,...,t\}$ by $\bigcup_{s=0}^t A_5(s)$. 
Thus, for any $s\in \{0,...,t\}$, we can further bound $\|\del_{s}\|_2$ as
\begin{align}
    \|\del_{s}\|_2 &\leq \tfrac{5}{4} \alpha^{2}\beta^2 L_\ast^4 \sum_{r=0}^{s-1} \rho^{s-1-r} \rho^{2r} \nonumber \\
    &= \tfrac{5}{4} \alpha^{2}\beta^2 L_\ast^4  \rho^{s-1} \sum_{r=0}^{s-1} \rho^r \nonumber \\
    &\leq  \rho^{s-1} \tfrac{5\alpha^{2}\beta^2 L_\ast^4 }{4(1-\rho)} \nonumber \\
    &\leq  \rho^{s-1} \tfrac{5\beta \alpha L_\ast^4 }{2 E_0 \mu_\ast^2} ,
\end{align}
which means that
\begin{align}
   \|\mathbf{w}_{t+1}\|_2 &\leq \sum_{s=1}^t \tfrac{c\beta}{\sqrt{\alpha}}  \rho^{s-1} \tfrac{5\beta \alpha L_\ast^4 }{2 E_0 \mu_\ast^2} \eta_\ast\left( 1 + 2c \sum_{r=s}^t \tfrac{\beta}{\alpha}  \rho^{r-1} \tfrac{5\beta \alpha L_\ast^4 }{2 E_0 \mu_\ast^2} \right) \nonumber \\
   &\leq 2.5c \beta^2 \sqrt{\alpha} \frac{ L_\ast^4 \eta_\ast}{E_0 \mu_\ast^2} \sum_{s=1}^t \rho^{s-1} \left(1 + 5c \beta^2\frac{L_\ast^4}{E_0 \mu_\ast^2} \sum_{r=s}^t \rho^{r-1} \right) \nonumber \\
   &\leq 2.5c \beta^2 \sqrt{\alpha} \frac{ L_\ast^4 \eta_\ast}{E_0 \mu_\ast^2} \sum_{s=1}^t \rho^{s-1} \left(1 + 6 \beta^2\frac{L_\ast^4 \rho^{s}}{E_0 \mu_\ast^2 (1-\rho)} \right) \nonumber \\
   &\leq 3 \beta^2 \sqrt{\alpha} \frac{ L_\ast^4 \eta_\ast}{E_0 \mu_\ast^2} \sum_{s=1}^t \rho^{s-1} \left(1 + 12 \frac{\beta L_\ast^4 }{\alpha E_0^2 \mu_\ast^4} \right) \nonumber \\
     &\leq 6 \beta^2 \sqrt{\alpha} \frac{ L_\ast^4 \eta_\ast}{E_0 \mu_\ast^2} \sum_{s=1}^t 1.5\rho^{s-1}  \label{twwo} \\
   &\leq 18  \frac{\beta \kappa_\ast^4 \eta_\ast}{\sqrt{\alpha}E_0^2}  \nonumber \\
   &\leq \tfrac{1}{10}\sqrt{\alpha}E_0 \min(1, \tfrac{\mu_\ast^2}{\eta_\ast^2}) \eta_\ast 
   \label{twwwo}
\end{align}
where \eqref{twwo} and \eqref{twwwo} follow since $\beta \leq \frac{\alpha E_0^3}{180 \kappa_{\ast}^4} \min(1, \tfrac{\mu_\ast^2}{\eta_\ast^2}) \eta_\ast$.     
\end{proof}

\begin{rem}
As referred to in Section \ref{sec:sketch}, it is not necessary to start with $\|\mathbf{w}_0\|_2$ and $\|\del_0\|_2$ strictly equal to zero. Precisely, it can be shown that the above lemma still holds with $\|\mathbf{w}_0\|_2 \leq c \sqrt{\alpha}E_0 \min(1,\tfrac{\mu_\ast^2}{\eta_\ast^2})\eta_\ast$ and $\|\del_0\|_2 \leq c\tfrac{\beta \alpha L_\ast^4}{E_0\mu_\ast^2}$ for a sufficiently small absolute constant $c$. Inductive hypothesis $A_6(t+1)$ would also continue to hold under this initialization, with a slightly different constant. These are the only times we use $\|\mathbf{w}_0\|_2 = \|\del_0\|_2 = 0$, so the rest of the proof would hold. Similar statements can be made regarding the rest of the algorithms.
\end{rem}

\begin{lemma} [FO-ANIL $A_2(t+1)$] \label{lem:reg}
Suppose we are in the setting of Theorem \ref{thm:anil_pop_main}
and that  $A_1(t), A_3(t), A_6(t)$ hold. 
 Then $A_2(t+1)$ holds, i.e.
\begin{align}
    \|\del_{t+1}  \|_2 
    &\leq \left(
   1 - 0.5\beta \alpha E_0 \mu_\ast^2 \right)\|\del_{t}  \|_2 
     +\tfrac{5}{4} \beta^2 \alpha^2 L_\ast^4 \dist_t^2. \label{res:control_spectrum_tight}
\end{align}
\end{lemma}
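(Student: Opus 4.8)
The approach is to fit the FO-ANIL representation update into the template of Lemma \ref{lem:gen_del}. Writing $\mathbf{B}_{t+1}=\mathbf{B}_t-\beta\mathbf{G}_t$, the update gives $\mathbf{G}_t=\deld_t\,\tfrac1n\sum_{i=1}^n(\mathbf{B}_t\mathbf{w}_t-\mathbf{B}_\ast\mathbf{w}_{\ast,t,i})\mathbf{w}_{t,i}^\top$. Substituting the inner-loop head $\mathbf{w}_{t,i}=\del_t\mathbf{w}_t+\alpha\mathbf{B}_t^\top\mathbf{B}_\ast\mathbf{w}_{\ast,t,i}$ from \eqref{updw} and writing $\mathbf{\bar{w}}_t\coloneqq\tfrac1n\sum_i\mathbf{w}_{t,i}$, $\mathbf{\bar{w}}_{\ast,t}\coloneqq\tfrac1n\sum_i\mathbf{w}_{\ast,t,i}$, one has $\tfrac1n\sum_i\mathbf{w}_{\ast,t,i}\mathbf{w}_{t,i}^\top=\mathbf{\bar{w}}_{\ast,t}\mathbf{w}_t^\top\del_t+\alpha\mathbf{\Psi}_{\ast,t}\mathbf{B}_\ast^\top\mathbf{B}_t$, so collecting terms,
\[
\mathbf{G}_t=-\deld_t\mathbf{S}_t\mathbf{B}_t+\mathbf{N}_t,\qquad \mathbf{S}_t\coloneqq\alpha\,\mathbf{B}_\ast\mathbf{\Psi}_{\ast,t}\mathbf{B}_\ast^\top\succeq\mathbf{0},
\]
where $\mathbf{N}_t\coloneqq\deld_t\mathbf{B}_t\mathbf{w}_t\mathbf{\bar{w}}_t^\top-\deld_t\mathbf{B}_\ast\mathbf{\bar{w}}_{\ast,t}\mathbf{w}_t^\top\del_t$ collects the two pieces that each carry a factor $\mathbf{w}_t$. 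This is precisely the form required by Lemma \ref{lem:gen_del} with $\chi=0$, which (for $\beta$ small enough that $\beta\alpha\lambda_{\max}(\mathbf{B}_t^\top\mathbf{S}_t\mathbf{B}_t)\le1$) yields
\[
\|\del_{t+1}\|_2\le\|\del_t\|_2\bigl(1-\beta\alpha\,\sigma_{\min}(\mathbf{B}_t^\top\mathbf{S}_t\mathbf{B}_t)\bigr)+2\beta\alpha\|\mathbf{B}_t^\top\mathbf{N}_t\|_2+\beta^2\alpha\|\mathbf{G}_t\|_2^2.
\]

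It then remains to control the three terms on the right. For the leading one, $\mathbf{B}_t^\top\mathbf{S}_t\mathbf{B}_t=\alpha\,\mathbf{B}_t^\top\mathbf{B}_\ast\mathbf{\Psi}_{\ast,t}\mathbf{B}_\ast^\top\mathbf{B}_t$, so Lemma \ref{lem:sigminE} (applicable with $\tau=\tfrac1{10}$: $A_3(t)$ controls $\|\del_t\|_2$ and the inductive hypotheses through time $t$ give $\dist_t^2\le\tfrac1{0.9}\dist_0^2$, so $E_0=0.9-\dist_0^2=1-\tau-\dist_0^2$) delivers $\sigma_{\min}(\mathbf{B}_t^\top\mathbf{S}_t\mathbf{B}_t)\ge E_0\mu_\ast^2$, hence the factor $1-\beta\alpha E_0\mu_\ast^2$. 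For the middle term, the identity $\mathbf{B}_t^\top\deld_t=\del_t\mathbf{B}_t^\top$ pulls a $\del_t$ out front of $\mathbf{B}_t^\top\mathbf{N}_t$; then $A_1(t)$'s bound on $\|\mathbf{w}_t\|_2$ (which via AM-GM on the $\min$ also gives $\|\mathbf{w}_t\|_2\eta_\ast\le\tfrac{\sqrt\alpha E_0\mu_\ast^2}{10}$), together with $A_3(t)$ and $\mathbf{\bar{w}}_t=\del_t\mathbf{w}_t+\alpha\mathbf{B}_t^\top\mathbf{B}_\ast\mathbf{\bar{w}}_{\ast,t}$, gives $2\beta\alpha\|\mathbf{B}_t^\top\mathbf{N}_t\|_2\le0.4\,\beta\alpha E_0\mu_\ast^2\|\del_t\|_2$ --- precisely enough to degrade $1-\beta\alpha E_0\mu_\ast^2$ to $1-0.5\,\beta\alpha E_0\mu_\ast^2$ with a little slack to spare. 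For the quadratic term, the dominant part of $\mathbf{G}_t$ is $\alpha\deld_t\mathbf{B}_\ast\mathbf{\Psi}_{\ast,t}\mathbf{B}_\ast^\top\mathbf{B}_t$, and the crucial point is that $\deld_t\mathbf{B}_\ast$ is \emph{small}: decomposing $\mathbb{R}^d=\col(\mathbf{\hat{B}}_t)\oplus\col(\mathbf{\hat{B}}_{t,\perp})$, the matrix $\deld_t$ acts on the first summand as $\mathbf{I}_k-\alpha\mathbf{R}_t\mathbf{R}_t^\top$ (operator norm $\|\del_t\|_2$) and on the second as the identity, so $\|\deld_t\mathbf{B}_\ast\|_2\le\|\del_t\|_2\,\|\mathbf{\hat{B}}_t^\top\mathbf{B}_\ast\|_2+\|\mathbf{\hat{B}}_{t,\perp}^\top\mathbf{B}_\ast\|_2\le\|\del_t\|_2+\dist_t$, using $\|\mathbf{\hat{B}}_{t,\perp}^\top\mathbf{B}_\ast\|_2=\dist_t$ by symmetry of principal angles. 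Together with the negligible $\mathbf{w}_t$-pieces of $\mathbf{G}_t$ this yields $\|\mathbf{G}_t\|_2^2=O\!\bigl(\alpha L_\ast^4(\|\del_t\|_2^2+\dist_t^2)\bigr)$, so $\beta^2\alpha\|\mathbf{G}_t\|_2^2\le\tfrac54\beta^2\alpha^2L_\ast^4\dist_t^2$ plus an $O(\alpha^2\beta^2L_\ast^4)\|\del_t\|_2$ remainder which, since $\beta$ is chosen small as in Theorem \ref{thm:anil_pop_main}, is absorbed into the leftover contraction slack.

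Assembling the three bounds gives $\|\del_{t+1}\|_2\le(1-0.5\beta\alpha E_0\mu_\ast^2)\|\del_t\|_2+\tfrac54\beta^2\alpha^2L_\ast^4\dist_t^2$, which is $A_2(t+1)$. I expect the main obstacle to be the quadratic term: one must recognize that $\beta^2\alpha\|\mathbf{G}_t\|_2^2$ carries a genuine $\dist_t^2$ factor rather than an $O(1)$ one --- this is exactly the near-orthogonality of $\col(\mathbf{B}_t)^{\perp}$ to $\col(\mathbf{B}_\ast)$ surfacing through $\|\deld_t\mathbf{B}_\ast\|_2$ --- and then, on the bookkeeping side, to verify that the $\mathbf{N}_t$ contribution and the $\|\del_t\|_2^2$-part of $\|\mathbf{G}_t\|_2^2$ together consume no more than half of the leading contraction while the surviving $\dist_t^2$-coefficient lands at exactly $\tfrac54$.
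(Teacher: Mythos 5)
Your proposal is correct and reproduces the paper's argument essentially verbatim: you use the same decomposition $\mathbf{G}_t = -\deld_t\mathbf{S}_t\mathbf{B}_t + \mathbf{N}_t$ with $\mathbf{S}_t = \alpha\mathbf{B}_\ast\boldsymbol{\Psi}_{\ast,t}\mathbf{B}_\ast^\top$ and the same $\mathbf{N}_t$ (just written with the averages $\mathbf{\bar{w}}_t,\mathbf{\bar{w}}_{\ast,t}$ instead of keeping the sums), invoke Lemma \ref{lem:gen_del} with $\chi=0$ and Lemma \ref{lem:sigminE}, and control $\|\mathbf{G}_t\|_2^2$ via the bound $\|\deld_t\mathbf{B}_\ast\|_2\le\|\del_t\|_2+\dist_t$ exactly as in the paper. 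The bookkeeping constants you sketch are consistent with the paper's, so no gap.
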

\begin{proof}
Let $\mathbf{G}_t$ be the outer loop gradient for the representation, i.e. $\mathbf{G}_t = \frac{1}{\beta}(\mathbf{B}_t- \mathbf{B}_{t+1})$. 
We aim to apply Lemma \ref{lem:gen_del}, we write $\mathbf{G}_t$ as $-\deld_t \mathbf{S}_t \mathbf{B}_t + \mathbf{N}_t$, for some positive definite matrix $\mathbf{S}_t$ and another matrix $\mathbf{N}_t$. We have
\begin{align}
  \mathbf{G}_t &= \frac{1}{n}\sum_{i=1}^n (\mathbf{B}_t \mathbf{w}_{t,i} - \mathbf{B}_\ast \mathbf{w}_{\ast,t,i})\mathbf{w}_{t,i}^\top \nonumber \\
  &= \frac{1}{n}\sum_{i=1}^n (\mathbf{B}_t \del_t \mathbf{w}_{t} + \alpha \mathbf{B}_t \mathbf{B}_t^\top \mathbf{B}_\ast \mathbf{w}_{\ast,t,i} - \mathbf{B}_\ast \mathbf{w}_{\ast,t,i})\mathbf{w}_{t,i}^\top \nonumber \\
  &= \frac{1}{n}\sum_{i=1}^n \deld_t(\mathbf{B}_t \mathbf{w}_t - \mathbf{B}_{\ast}\mathbf{w}_{\ast,t,i})\mathbf{w}_{t,i}^\top \nonumber \\
  &= - \alpha \deld_t \mathbf{B}_{\ast}\left( \frac{1}{n}\sum_{i=1}^n\mathbf{w}_{\ast,t,i}\mathbf{w}_{\ast,t,i}^\top \right) \mathbf{B}_{\ast}^\top \mathbf{B}_t + 
  \frac{1}{n}\sum_{i=1}^n \deld_t \mathbf{B}_t \mathbf{w}_t \mathbf{w}_{t,i}^\top - \frac{1}{n}\sum_{i=1}^n \deld_t \mathbf{B}_{\ast}\mathbf{w}_{\ast,t,i} \mathbf{w}_t^\top \del_t \label{expand} \\
  &= -\deld_t \mathbf{S}_t \mathbf{B}_t + \mathbf{N}_t \nonumber  
\end{align}
where \eqref{expand} follows by expanding $\mathbf{w}_{t,i}$, and $\mathbf{S}_t = \alpha \mathbf{B}_{\ast}\left( \frac{1}{n}\sum_{i=1}^n\mathbf{w}_{\ast,t,i}\mathbf{w}_{\ast,t,i}^\top \right) \mathbf{B}_{\ast}^\top$ and \\
$\mathbf{N}_t = \frac{1}{n}\sum_{i=1}^n \deld_t \mathbf{B}_t \mathbf{w}_t \mathbf{w}_{t,i}^\top - \frac{1}{n}\sum_{i=1}^n \deld_t \mathbf{B}_{\ast}\mathbf{w}_{\ast,t,i} \mathbf{w}_t^\top \del_t$. Since $\sigma_{\min}(\mathbf{B}_t^\top \mathbf{S}_t \mathbf{B}_t) \geq  E_0 \mu_\ast^2$ (by Lemma \ref{lem:sigminE}), we have by Lemma \ref{lem:gen_del}
\begin{align}
    \|\del_{t+1}\|_2 &\leq (1 - \beta \alpha E_0 \mu_\ast^2)\|\del_t\|_2 + 2 \beta \alpha \|\mathbf{B}_t^\top \mathbf{N}_t\|_2 + \beta^2 \alpha \|\mathbf{G}_t\|_2^2
\end{align}
To bound $\|\mathbf{B}_t^\top \mathbf{N}_t\|_2$, we have
\begin{align}
  \|\mathbf{B}_t^\top \mathbf{N}_t\|_2 &= \left\|\frac{1}{n}\sum_{i=1}^n \del_t \mathbf{B}_t^\top \mathbf{B}_t \mathbf{w}_t \mathbf{w}_{t,i}^\top - \frac{1}{n}\sum_{i=1}^n \del_t \mathbf{B}_t^\top  \mathbf{B}_{\ast}\mathbf{w}_{\ast,t,i} \mathbf{w}_t^\top \del_t\right\|_2 \nonumber \\
   &\leq \left\|\del_t \mathbf{B}_t^\top \mathbf{B}_t \mathbf{w}_t \mathbf{w}_t^\top \del_t \right\|_2 + \alpha \left\|\frac{1}{n}\sum_{i=1}^n\del_t \mathbf{B}_t^\top \mathbf{B}_t \mathbf{w}_t \mathbf{w}_{\ast,t,i}^\top \mathbf{B}_\ast^\top \mathbf{B}_t \right\|_2 \nonumber \\
    &\quad + \left\|\frac{1}{n}\sum_{i=1}^n \del_t \mathbf{B}_t^\top  \mathbf{B}_{\ast}\mathbf{w}_{\ast,t,i} \mathbf{w}_t^\top \del_t \right\|_2 \nonumber \\
    &\leq \tfrac{c}{\alpha} \|\del_t\|_2^2 \|\mathbf{w}_t\|_2^2 + \tfrac{c}{\sqrt{\alpha}}(\|\del_t\|_2+\|\del_{t}\|_2^2) \|\mathbf{w}_t\|_2 \eta_\ast \nonumber \\
    &\leq 
    \frac{c E_0 \mu_\ast^2}{1000 \kappa_{\ast}^4} \|\del_t\|_2 
    + \tfrac{11}{100} c E_0 \mu_\ast^2
    \|\del_t\|_2  \nonumber \\
    &\leq \tfrac{1}{8} \mu_\ast^2 \|\del_t\|_2  
\end{align}
where we have used $A_1(t)$ and $A_3(t)$ and the fact that $\min(1,\tfrac{\mu_\ast^2}{\eta_\ast^2})\eta_\ast^2\leq \mu_\ast^2$.
To bound $\|\mathbf{G}_t\|_2^2$ we have
\begin{align}
    \|\mathbf{G}_t\|_2 &\leq \|\deld_t \mathbf{S}_t \mathbf{B}_t\|_2 + \|\mathbf{N}_t\|_2 \nonumber\\
    &\leq c\sqrt{\alpha} L_\ast^2(\|\del_t\|_2 +\dist_t) + \left\|\frac{1}{n}\sum_{i=1}^n \deld_t \mathbf{B}_t \mathbf{w}_t \mathbf{w}_{t,i}^\top\right\|_2 +\left\| \frac{1}{n}\sum_{i=1}^n \deld_t \mathbf{B}_{\ast}\mathbf{w}_{\ast,t,i} \mathbf{w}_t^\top \del_t\right\|_2 \nonumber \\
    &\leq  c\sqrt{\alpha} L_\ast^2(\|\del_t\|_2 +\dist_t) + \tfrac{c}{\sqrt{\alpha}}\left\| \del_t \right\|_2^2 \|\mathbf{w}_t\|_2^2 + 2c \|\del_t\|_2 \|\mathbf{w}_t\|_2 \eta_\ast +\left\|  \del_t\right\|_2\|\mathbf{w}_t\| \eta_\ast \dist_t \nonumber \\
    &\leq c \sqrt{\alpha} L_\ast^2  \|\del_t\|_2  + c \sqrt{\alpha}L_\ast^2 \dist_t + \tfrac{c}{1000} \sqrt{\alpha}  \mu_\ast^2   \|\del_t\|_2 + 
    \tfrac{3c }{10}  {\sqrt{\alpha}} L_\ast \eta_\ast \|\del_t\|_2 \nonumber \\
    &\leq 1.5  \sqrt{\alpha} L_\ast^2  \|\del_t\|_2  + 1.1 \sqrt{\alpha}L_\ast^2 \dist_t  \label{covid}
\end{align}
where \eqref{covid} follows since 
$\eta_\ast \leq L_\ast$.
 Therefore
\begin{align}
    \|\mathbf{G}_t\|_2^2 &\leq \alpha L_{\ast}^4 (2.5\|\del_t\|_2^2 + 3.3\|\del_t\|_2 + \tfrac{5}{4}\dist_t^2) \nonumber \\
    &\leq 4\alpha L_{\ast}^4 \|\del_t\|_2 + \tfrac{5}{4} \alpha L_\ast^4 \dist_t^2 \nonumber
\end{align}
and 
\begin{align}
    \|\del_{t+1}\|_2 &\leq 
    \left(1 - \beta \alpha E_0 \mu_\ast^2 + 0.25 \beta \alpha E_0 \mu_\ast^2 + 4 \beta^2\alpha^2 L_{\ast}^4  \right)\|\del_t\|_2 + \tfrac{5}{4} \beta^2\alpha^2 L_{\ast}^4\dist_t^2 \nonumber \\
    &\leq  \left(1 - 0.5\beta \alpha E_0 \mu_\ast^2 \right)\|\del_t\|_2 + \tfrac{5}{4} \beta^2\alpha^2 L_{\ast}^4\dist_t^2
    \label{59}
\end{align}
where in \eqref{59} we have used $\beta \leq \frac{\alpha E_0^3}{180 \kappa_{\ast}^4}$, $\alpha\leq \frac{1}{L_\ast}$, and $E_0\leq 1$.
\end{proof}


\begin{cor}[FO-ANIL $A_3(t+1)$] \label{cor:reg}
Suppose we are in the setting of Theorem \ref{thm:anil_pop_main}.
If inductive hypotheses $A_2(t+1)$ and $A_3(t)$ hold, then $A_3(t+1)$ holds, i.e.
\begin{align}
    \|\del_{t+1} \|_2 \leq \tfrac{1}{10} 
\end{align}
\end{cor}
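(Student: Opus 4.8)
The plan is to plug the one-step recursion provided by $A_2(t+1)$ into the uniform bound provided by $A_3(t)$, and then verify that the prescribed step sizes make the leftover quadratic term negligible. Recall that $A_2(t+1)$ reads
\begin{equation*}
\|\del_{t+1}\|_2 \;\leq\; \bigl(1 - 0.5\,\beta\alpha E_0\mu_\ast^2\bigr)\|\del_t\|_2 \;+\; \tfrac{5}{4}\,\beta^2\alpha^2 L_\ast^4\,\dist_t^2 .
\end{equation*}

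First I would use $A_3(t)$, i.e.\ $\|\del_t\|_2\leq\tfrac{1}{10}$, together with the elementary fact $\dist_t = \|\mathbf{\hat{B}}_{\ast,\perp}^\top\mathbf{\hat{B}}_t\|_2 \leq 1$ (a spectral norm of a product of matrices with orthonormal columns), to obtain
\begin{equation*}
\|\del_{t+1}\|_2 \;\leq\; \tfrac{1}{10}\bigl(1 - 0.5\,\beta\alpha E_0\mu_\ast^2\bigr) + \tfrac{5}{4}\,\beta^2\alpha^2 L_\ast^4 \;=\; \tfrac{1}{10} - \tfrac{1}{20}\beta\alpha E_0\mu_\ast^2 + \tfrac{5}{4}\beta^2\alpha^2 L_\ast^4 .
\end{equation*}
So it suffices to show $\tfrac{5}{4}\beta^2\alpha^2 L_\ast^4 \leq \tfrac{1}{20}\beta\alpha E_0\mu_\ast^2$, equivalently $\beta\alpha \leq \tfrac{E_0\mu_\ast^2}{25 L_\ast^4} = \tfrac{E_0}{25 L_\ast^2\kappa_\ast^2}$.

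This follows from the step-size conditions of Theorem \ref{thm:anil_pop_main}: with $\beta \leq \tfrac{\alpha E_0^3}{180\kappa_\ast^4}$ and $\alpha \leq \tfrac{1}{L_\ast}$ we have $\beta\alpha \leq \tfrac{\alpha^2 E_0^3}{180\kappa_\ast^4} \leq \tfrac{E_0^3}{180 L_\ast^2\kappa_\ast^4}$, and since $E_0 \leq 1$, $\kappa_\ast \geq 1$, and $\tfrac{1}{180} \leq \tfrac{1}{25}$, this is at most $\tfrac{E_0}{25 L_\ast^2\kappa_\ast^2}$; hence $\|\del_{t+1}\|_2 \leq \tfrac{1}{10}$. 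The argument is not vacuous because $E_0 = 0.9 - \dist_0^2 > 0$ under $\dist_0 \leq \sqrt{0.9}$, so the contraction coefficient is genuinely below $1$. I expect no real obstacle here: the contraction term is first order in the (small) step sizes while the leftover term is second order, and the only care needed is bookkeeping of absolute constants to confirm the prescribed bound on $\beta$ (together with $\alpha L_\ast \leq 1$) leaves enough slack.
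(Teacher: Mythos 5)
Your proof is correct and follows essentially the same route as the paper: plug $A_3(t)$ and $\dist_t \leq 1$ into the recursion from $A_2(t+1)$, then observe that the second-order term $\tfrac{5}{4}\beta^2\alpha^2 L_\ast^4$ is dominated by the first-order contraction $\tfrac{1}{20}\beta\alpha E_0\mu_\ast^2$ under the step-size constraints $\beta\leq\tfrac{\alpha E_0^3}{180\kappa_\ast^4}$, $\alpha\leq\tfrac{1}{L_\ast}$. The paper's version only differs in presenting an intermediate numerical bound ($\tfrac{5}{720}E_0\beta\alpha\mu_\ast^2$) before concluding; the logic and constants are the same.
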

\begin{proof}
Note that according to equation \eqref{59}, 
we have 
\begin{align}
    \|\del_{t+1}\|_2 &\leq  (1 - 0.5\beta \alpha E_0\mu_\ast^2 )\|\del_{t}\|_2 + \tfrac{5}{4} \beta^2 \alpha^2 L_\ast^4 \nonumber \\
    &\leq (1 - 0.5\beta \alpha E_0\mu_\ast^2 )\tfrac{1}{10}  + \tfrac{5}{720}  E_0 \beta\alpha \mu_\ast^2  \label{res:control_spectrumm} \\
    &\leq \tfrac{1}{10}\nonumber
\end{align}
where equation \eqref{res:control_spectrumm} is satisfied by our choice of  $\beta \leq \frac{\alpha E_0^3}{180 \kappa_{\ast}^4}$ and $\alpha 
\leq \frac{1}{ L_\ast}$ and inductive hypothesis $A_3(t)$.
\end{proof}

\begin{lemma}[FO-ANIL $A_4(t+1)$] \label{lem:foanil_pop_diverse}
Suppose the conditions  of Theorem \ref{thm:anil_pop_main} are satisfied and
 inductive hypotheses $A_1(t)$, $A_3(t)$ and $A_6(t)$ hold. Then $A_4(t+1)$ holds, i.e.
\begin{align}
    \sigma_{\min}\left(\frac{1}{n}\sum_{i=1}^n \mathbf{w}_{t+1,i}\mathbf{w}_{t+1,i}^\top \right) 
    &\geq  0.9\alpha E_0 \mu_\ast^2 \nonumber \\
    \text{ and } \sigma_{\max}\left(\frac{1}{n}\sum_{i=1}^n \mathbf{w}_{t+1,i}\mathbf{w}_{t+1,i}^\top \right) 
    &\leq 1.2 \alpha L_\ast^2 \nonumber
\end{align}
\end{lemma}
\begin{proof}
By Lemma \ref{lem:sigmin} and inductive hypotheses $A_1(t)$, $A_3(t)$ and $A_6(t)$, we have
\begin{align}
    \sigma_{\min}\left(\frac{1}{n}\sum_{i=1}^n \mathbf{w}_{t,i}\mathbf{w}_{t,i}^\top \right) &\geq \alpha E_0 \mu_\ast^2 - 0.022{ \alpha}E_0 \mu_{\ast}^2 \geq  0.9\alpha E_0 \mu_\ast^2 \nonumber \\ \sigma_{\max}\left(\frac{1}{n}\sum_{i=1}^n \mathbf{w}_{t,i}\mathbf{w}_{t,i}^\top \right) &\leq (\tfrac{1}{100}\sqrt{\alpha}E_0 \kappa_\ast^{-1}+ \sqrt{1.1\alpha} L_\ast )^2 \leq 1.2 \alpha L_\ast^2
\end{align}
where we have used the fact that $\min(1,\tfrac{\mu_\ast^2}{\eta_\ast^2})\eta_\ast^2\leq \mu_\ast^2$ to lower bound the minimum singular value.
\end{proof}

\begin{lemma}[FO-ANIL $A_5(t+1)$] \label{lem:contractt}
Suppose the conditions of Theorem \ref{thm:anil_pop_main} are satisfied.
If inductive hypotheses $A_4(t)$ holds, then $A_5(t+1)$ holds,
i.e. \begin{align}
    \|\mathbf{{B}}_{\ast,\perp}^\top \mathbf{B}_{t+1}\|_2 &\leq \left(1 - 0.5 \beta \alpha E_0 \mu_\ast^2 
    \right)  \|\mathbf{{B}}_{\ast,\perp}^\top \mathbf{B}_{t}\|_2.
\end{align}
\end{lemma}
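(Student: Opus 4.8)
\textbf{Proof plan for Lemma~\ref{lem:contractt}.} The plan is to peel off the component of the representation update that lies in $\col(\mathbf{B}_\ast)$, which is killed by left-multiplication by $\mathbf{B}_{\ast,\perp}^\top$, and thereby reduce the claim to a one-step spectral estimate controlled by $A_4(t)$. Starting from the FO-ANIL representation update \eqref{upd_rep_anil_inf},
\[
\mathbf{B}_{t+1} = \mathbf{B}_t - \tfrac{\beta}{n}\sum_{i=1}^n \mathbf{B}_t \mathbf{w}_{t,i}\mathbf{w}_{t,i}^\top + \tfrac{\beta}{n}\sum_{i=1}^n \mathbf{B}_\ast \mathbf{w}_{\ast,t,i}\mathbf{w}_{t,i}^\top ,
\]
I would left-multiply by $\mathbf{B}_{\ast,\perp}^\top$. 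Since $\mathbf{B}_{\ast,\perp}^\top \mathbf{B}_\ast = \mathbf{0}$, the ``signal'' term vanishes, leaving
\[
\mathbf{B}_{\ast,\perp}^\top \mathbf{B}_{t+1} = \mathbf{B}_{\ast,\perp}^\top \mathbf{B}_t \big(\mathbf{I}_k - \beta\,\boldsymbol{\Psi}_t\big), \qquad \boldsymbol{\Psi}_t \coloneqq \tfrac{1}{n}\sum_{i=1}^n \mathbf{w}_{t,i}\mathbf{w}_{t,i}^\top,
\]
which is exactly the identity already invoked in \eqref{psisi}. Taking $\|\cdot\|_2$ and using submultiplicativity gives $\|\mathbf{B}_{\ast,\perp}^\top \mathbf{B}_{t+1}\|_2 \le \|\mathbf{I}_k - \beta\boldsymbol{\Psi}_t\|_2\,\|\mathbf{B}_{\ast,\perp}^\top \mathbf{B}_t\|_2$, so it remains only to show $\|\mathbf{I}_k - \beta\boldsymbol{\Psi}_t\|_2 \le 1 - 0.5\beta\alpha E_0\mu_\ast^2$.

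This is where $A_4(t)$ enters: it gives $0.9\alpha E_0\mu_\ast^2\,\mathbf{I}_k \preceq \boldsymbol{\Psi}_t \preceq 1.2\alpha L_\ast^2\,\mathbf{I}_k$. Because the step sizes satisfy $\alpha < 1/L_\ast$ and $\beta$ is chosen as in Theorem~\ref{thm:anil_pop_main} (so that, in particular, $1.2\beta\alpha L_\ast^2 \le 1$), the matrix $\mathbf{I}_k - \beta\boldsymbol{\Psi}_t$ is positive semidefinite with spectrum contained in $[\,1 - 1.2\beta\alpha L_\ast^2,\ 1 - 0.9\beta\alpha E_0\mu_\ast^2\,]\subseteq[\,0,\ 1 - 0.9\beta\alpha E_0\mu_\ast^2\,]$. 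Hence $\|\mathbf{I}_k - \beta\boldsymbol{\Psi}_t\|_2 = 1 - 0.9\beta\alpha E_0\mu_\ast^2 \le 1 - 0.5\beta\alpha E_0\mu_\ast^2$, and substituting into the previous display yields $A_5(t+1)$.

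The only genuine subtlety is confirming the positivity of $\mathbf{I}_k - \beta\boldsymbol{\Psi}_t$, i.e. that $\beta\,\lambda_{\max}(\boldsymbol{\Psi}_t)\le 1$ so the top eigenvalue does not flip sign; otherwise the one-step factor would degrade to $\max(1 - 0.9\beta\alpha E_0\mu_\ast^2,\ 1.2\beta\alpha L_\ast^2 - 1)$, which need not be below one. This is precisely what the smallness of $\beta$ relative to $\alpha L_\ast^2$ (equivalently, relative to the $A_4(t)$ upper bound on the adapted-head covariance) buys us, so it is a routine check rather than a real obstacle. Everything else is the one-line matrix identity above together with Weyl's inequality; notably no event at time $t+1$ is required, only $A_4(t)$, which is why this implication sits at the top level of the inductive graph in Figure~\ref{fig:flow}.
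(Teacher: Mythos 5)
Your proposal is correct and matches the paper's own argument: both start from the identity $\mathbf{B}_{\ast,\perp}^\top \mathbf{B}_{t+1} = \mathbf{B}_{\ast,\perp}^\top \mathbf{B}_t(\mathbf{I}_k - \beta\boldsymbol{\Psi}_t)$ (the signal term vanishes because $\mathbf{B}_{\ast,\perp}^\top \mathbf{B}_\ast = \mathbf{0}$), then use $A_4(t)$ together with $\beta\lambda_{\max}(\boldsymbol{\Psi}_t) \le 1$ (ensured by the choice of $\beta$ and $\alpha \le 1/L_\ast$) so that Weyl's inequality gives $\|\mathbf{I}_k - \beta\boldsymbol{\Psi}_t\|_2 \le 1 - 0.9\beta\alpha E_0\mu_\ast^2 \le 1 - 0.5\beta\alpha E_0\mu_\ast^2$. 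The only cosmetic nit is that your ``$=$'' for $\|\mathbf{I}_k-\beta\boldsymbol{\Psi}_t\|_2$ should really be ``$\le$'' since $A_4(t)$ only bounds $\lambda_{\min}(\boldsymbol{\Psi}_t)$ from below; this has no effect on the argument.
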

\begin{proof}
Note from $A_4(t+1)$ that $\left({\sigma_{\max}\left(\tfrac{1}{n}\sum_{i=1}^n\mathbf{w}_{t,i}\mathbf{w}_{t,i}^\top\right))}\right)^{-1} \geq \tfrac{1}{\alpha L_\ast^2} \geq \tfrac{1}{ L_\ast}$.
Thus, since $\beta \leq  \frac{\alpha E_0^3}{180 \kappa_{\ast}^4}\leq \tfrac{1}{ L_\ast} \leq \left({\sigma_{\max}\left(\tfrac{1}{n}\sum_{i=1}^n\mathbf{w}_{t,i}\mathbf{w}_{t,i}^\top\right))}\right)^{-1}$, 
we have by Weyl's inequality that
\begin{align}
    \|\mathbf{{B}}_{\ast,\perp}^\top \mathbf{B}_{t+1}\|_2 &\leq \|\mathbf{{B}}_{\ast,\perp}^\top \mathbf{B}_{t}\|_2 \left\|\mathbf{I}_k -  \frac{\beta}{n} \sum_{i=1}^n   \mathbf{w}_{t,i}\mathbf{w}_{t,i}^\top  \right\|_2 \leq  \|\mathbf{{B}}_{\ast,\perp}^\top \mathbf{B}_{t}\|_2 \left(1 - \beta \sigma_{\min}\left(\tfrac{1}{n}\sum_{i=1}^n \mathbf{w}_{t,i}\mathbf{w}_{t,i}^\top\right) \right) \nonumber \\
    &\leq \|\mathbf{{B}}_{\ast,\perp}^\top \mathbf{B}_{t}\|_2 \left(1  - 0.5\beta   \alpha E_0 \mu_\ast^2  \right).
    \nonumber
\end{align}
\end{proof}

\subsection{Exact ANIL}

To study Exact ANIL, first note that the inner loop updates are identical to those for FO-MAML. However, the outer loop updates are different. Here, we have
\begin{align}
    \mathbf{w}_{t+1} &= \mathbf{w}_t - \frac{\beta}{n}\sum_{i=1}^n \nabla_{\mathbf{w}}F_{t,i}(\mathbf{B}_t, \mathbf{w}_{t}) \nonumber \\
    \mathbf{B}_{t+1} &= \mathbf{B}_t - \frac{\beta}{n}\sum_{i=1}^n \nabla_{\mathbf{B}}F_{t,i}(\mathbf{B}_t, \mathbf{w}_{t}) \nonumber
\end{align}
where for all $t,i$:
\begin{align}
  F_{t,i}(\mathbf{B}_t, \mathbf{w}_{t}) \coloneqq  \mathcal{L}_{t,i}(\mathbf{B}_t, \mathbf{w}_{t}-\alpha \nabla_{\mathbf{w}}\mathcal{L}_{t,i}(\mathbf{B}_t, \mathbf{w}_t)) &\coloneqq \tfrac{1}{2}\|\mathbf{v}_{t,i}\|_2^2
\end{align}
and
\begin{align}
    \mathbf{v}_{t,i} &\coloneqq \mathbf{B}_t \del_t \mathbf{w}_t + \alpha \mathbf{B}_t\mathbf{B}_t^\top \mathbf{B}_\ast \mathbf{w}_{\ast,t,i} - \mathbf{B}_\ast \mathbf{w}_{\ast,t,i} = \deld_t (\mathbf{B}_t \mathbf{w}_t  - \mathbf{B}_\ast \mathbf{w}_{\ast,t,i} )
\end{align}
Therefore,
\begin{align}
    \nabla_{\mathbf{w}}F_{t,i}(\mathbf{B}_t, \mathbf{w}_{t}) &= \mathbf{B}_t^\top\deld_t \mathbf{v}_{t,i} \label{outer_w} \\
    \nabla_{\mathbf{B}}F_{t,i}(\mathbf{B}_t, \mathbf{w}_{t})
    &= \mathbf{v}_{t,i} \mathbf{w}_{t}^\top \del_t + \alpha \mathbf{v}_{t,i}\mathbf{w}_{\ast,t,i}^\top \mathbf{B}_{\ast}^\top \mathbf{B}_t  -  \alpha \mathbf{B}_t \mathbf{w}_{t}\mathbf{v}_{t,i}^\top  \mathbf{B}_t
    - \alpha \mathbf{B}_t\mathbf{B}_t^\top \mathbf{v}_{t,i}\mathbf{w}_{t}^\top + \alpha  \mathbf{B}_{\ast}\mathbf{w}_{\ast,t,i}\mathbf{v}_{t,i}^\top \mathbf{B}_t \label{outer_grad}
\end{align}
One can observe that for $\mathbf{w}$, the outer loop gradient is the same as in the FO-ANIL case but with an extra $\alpha \mathbf{B}_t^\top\deld_t$ factor. Meanwhile, the first two terms in the outer loop gradient for $\mathbf{B}$ compose the outer loop gradient in the FO-ANIL case, while the other three terms are new. We deal with these differences in the following lemmas.



\begin{lemma}[Exact ANIL $A_1(t+1)$]\label{lem:ea_pop_a1}
Suppose the conditions of Theorem \ref{thm:anil_pop_main} are satisfied and $A_2(s)$ and $A_6(s)$ hold for all $s\in[t]$, then $A_1(t+1)$ holds, i.e.
\begin{align}
    \|\mathbf{w}_{t+1}\|_2 \leq \tfrac{\sqrt{\alpha}E_0 }{10} \min(1, \tfrac{\mu_\ast^2}{\eta_\ast^2}) \eta_\ast. 
\end{align}
\end{lemma}
\begin{proof}
Similarly to the FO-ANIL case, we can show that for any $s\in[t]$,
\begin{align}
    \mathbf{w}_{s+1}&= \mathbf{w}_{s}- \frac{\beta}{n}\sum_{i=1}^n \nabla_{\mathbf{w}} F_{s,i}(\mathbf{B}_s,\mathbf{w}_s) =(\mathbf{I}_k - \beta \del_s \mathbf{B}_s^\top \mathbf{B}_s \del_s)\mathbf{w}_s + \beta \del_s^2 \mathbf{B}_s^\top \mathbf{B}_\ast \frac{1}{n}\sum_{i=1}^n \mathbf{w}_{\ast,s,i} \label{wwweb}
\end{align}
Note that $\bigcup_{s=0}^t A_3(s)$ implies $\sigma_{\max}(\mathbf{B}_s^\top \mathbf{B}_s) \leq \frac{1+\|\del_s\|_2}{\alpha}< \frac{1.1}{\alpha}$ for all $s\in\{0,\dots,t\!+\!1\}$. Let $c\coloneqq 1.1$.

Unlike in the first-order case, the coefficient of $\mathbf{w}_s$ in \eqref{wwweb}  is the identity matrix minus a positive semi-definite matrix, so this coefficient has spectral norm at most 1 (as $\beta$ is sufficiently small). So,we can bound $ \|\mathbf{w}_{s+1}\|_2$ as:
\begin{align}
    \|\mathbf{w}_{s+1}\|_2 &\leq \|\mathbf{w}_s\|_2 + \frac{c\beta }{\sqrt{\alpha}}\|\del_s\|_2^2 \eta_\ast \label{eq64}
\end{align}
which allows us to apply Lemma \ref{lem:gen_w} with $\xi_{1,s}=0$ and $\xi_{2,s} = \frac{c \beta}{\sqrt{\alpha}} \|\del_s\|_2^2\eta_\ast$ for all $s \in [t].$ This results in:
\begin{align}
    \|\mathbf{w}_{t+1}\|_2 &\leq \sum_{s=1}^t \frac{c \beta}{\sqrt{\alpha}} \|\del_s\|_2^2\eta_\ast. \nonumber 
\end{align}
Next, note that 
\begin{align}
    \|\del_s\|_2 &\leq ( 1 - 0.5 \beta \alpha E_0 \mu_\ast^2) \|\del_{s-1}\|_2 + \tfrac{5}{4}\beta^2 \alpha^2 L_\ast^4 \dist_{s-1}^2 \nonumber \\
    &\vdots \nonumber \\
    &\leq \sum_{r = 1}^{s-1} ( 1 - 0.5 \beta \alpha E_0 \mu_\ast^2)^{s-1-r}(\tfrac{5}{4}\beta^2 \alpha^2 L_\ast^4 \dist_{r}^2) \nonumber \\
    &\leq \tfrac{5}{4} L_\ast^4 \beta^2 \alpha^2 ( 1 - 0.5 \beta \alpha E_0 \mu_\ast^2)^{s-1} \sum_{r = 1}^{s-1}( 1 - 0.5 \beta \alpha E_0 \mu_\ast^2)^{r} \nonumber \\
    &\leq \frac{5\beta \alpha  L_\ast^4}{2E_0 \mu_\ast^2}( 1 - 0.5 \beta \alpha E_0 \mu_\ast^2)^{s-1}
\end{align}
therefore
\begin{align}
    \|\mathbf{w}_{t+1}\|_2 &\leq \sum_{s=1}^t c  \frac{25\beta^3 \alpha^{1.5}  L_\ast^8}{4E_0^2 \mu_\ast^4}( 1 - 0.5 \beta \alpha E_0 \mu_\ast^2)^{2s-2} \eta_\ast \nonumber \\
    &\leq   \frac{14\beta^2 \sqrt{\alpha}  L_\ast^8}{ E_0^3 \mu_\ast^6}\eta_\ast \nonumber 
    \\
    &\leq \frac{\sqrt{\alpha}E_0}{10 \kappa_{\ast}^2} \eta_\ast \label{ssss} \\
    &\leq \frac{\sqrt{\alpha}E_0}{10 } \min(1,\tfrac{\mu_\ast^2}{\eta_\ast^2}) \eta_\ast  \label{sssss}. 
\end{align}
where \eqref{ssss} follows by choice of $\beta \leq \alpha E_0^2 /(40 \kappa_\ast^4)$ and  $\alpha\leq 1/L_\ast$, and \eqref{sssss} follows since $\eta_\ast\leq L_\ast$.
\end{proof}

\begin{lemma}[Exact ANIL $A_2(t+1)$] \label{lem:ea_pop_a}
Suppose the conditions of Theorem \ref{thm:anil_pop_main} are satisfied 
and $A_1(t), A_3(t)$ and $A_5(t)$ hold, then
$A_2(t+1)$ holds, i.e.
\begin{align}
    \|\del_{t+1}\|_2 \leq (1- 0.5 \beta \alpha E_0 \mu_\ast^2)\|\del_{t}\|_2 +\tfrac{5}{4}\beta^2 \alpha^2 L_\ast^4 \dist_t^2.
    \end{align}
\end{lemma}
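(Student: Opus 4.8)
The plan is to follow the template of Lemma~\ref{lem:reg} (the FO-ANIL version of this claim): writing $\mathbf{G}_t \coloneqq \tfrac{1}{\beta}(\mathbf{B}_t-\mathbf{B}_{t+1}) = \tfrac{1}{n}\sum_{i=1}^n \nabla_{\mathbf{B}}F_{t,i}(\mathbf{B}_t,\mathbf{w}_t)$ for the exact outer-loop representation gradient, I would cast it in the form $\mathbf{G}_t = -\deld_t\mathbf{S}_t\mathbf{B}_t - \chi\,\mathbf{S}_t\mathbf{B}_t\del_t + \mathbf{N}_t$ required by Lemma~\ref{lem:gen_del}, with the positive semi-definite matrix $\mathbf{S}_t = \alpha\mathbf{B}_\ast\big(\tfrac{1}{n}\sum_{i=1}^n\mathbf{w}_{\ast,t,i}\mathbf{w}_{\ast,t,i}^\top\big)\mathbf{B}_\ast^\top$ exactly as in the FO-ANIL case, and then bound the error matrix $\mathbf{N}_t$ and the gradient norm $\|\mathbf{G}_t\|_2$. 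Lemma~\ref{lem:gen_del} then gives $\|\del_{t+1}\|_2 \le (1-(1+\chi)\beta\alpha\,\sigma_{\min}(\mathbf{B}_t^\top\mathbf{S}_t\mathbf{B}_t))\|\del_t\|_2 + 2\beta\alpha\|\mathbf{B}_t^\top\mathbf{N}_t\|_2 + \beta^2\alpha\|\mathbf{G}_t\|_2^2$, and $\sigma_{\min}(\mathbf{B}_t^\top\mathbf{S}_t\mathbf{B}_t)\ge E_0\mu_\ast^2$ follows from Lemma~\ref{lem:sigminE}, whose hypotheses ($\|\del_s\|_2\le\tfrac{1}{10}$ for all $s\le t$; and $\dist_t^2\le\tfrac{1}{1-\tau}\dist_0^2$ with $\tau=\tfrac{1}{10}$) are supplied by $A_3(\cdot)$ together with $\dist_t\le\|\mathbf{B}_{\ast,\perp}^\top\mathbf{B}_t\|_2/\sigma_{\min}(\mathbf{B}_t)$ and the chained $A_5$ hypotheses up to time $t$.

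The only genuinely new work is handling the three extra terms in the exact outer-loop gradient \eqref{outer_grad} relative to FO-ANIL, namely (after averaging over $i$) the contributions of $-\alpha\mathbf{B}_t\mathbf{w}_t\mathbf{v}_{t,i}^\top\mathbf{B}_t$, $-\alpha\mathbf{B}_t\mathbf{B}_t^\top\mathbf{v}_{t,i}\mathbf{w}_t^\top$ and $+\alpha\mathbf{B}_\ast\mathbf{w}_{\ast,t,i}\mathbf{v}_{t,i}^\top\mathbf{B}_t$; the first two terms of \eqref{outer_grad} recombine into $\mathbf{v}_{t,i}\mathbf{w}_{t,i}^\top=(\mathbf{B}_t\mathbf{w}_{t,i}-\mathbf{B}_\ast\mathbf{w}_{\ast,t,i})\mathbf{w}_{t,i}^\top$, which is exactly the FO-ANIL outer-loop gradient already analyzed in Lemma~\ref{lem:reg}. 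Two algebraic identities make the extra terms tractable: $\mathbf{B}_t^\top\deld_t=\del_t\mathbf{B}_t^\top$ (so $\mathbf{B}_t\mathbf{B}_t^\top\mathbf{v}_{t,i}=\mathbf{B}_t\del_t(\mathbf{B}_t^\top\mathbf{B}_t\mathbf{w}_t-\mathbf{B}_t^\top\mathbf{B}_\ast\mathbf{w}_{\ast,t,i})$), and $\mathbf{v}_{t,i}^\top\mathbf{B}_t=\mathbf{w}_t^\top\del_t\mathbf{B}_t^\top\mathbf{B}_t-\mathbf{w}_{\ast,t,i}^\top\mathbf{B}_\ast^\top\mathbf{B}_t\del_t$. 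Averaging over $i$ collapses the ground-truth products into $\tfrac{1}{n}\sum_i\mathbf{w}_{\ast,t,i}\mathbf{w}_{\ast,t,i}^\top$ and $\bar{\mathbf{w}}_{\ast,t}\coloneqq\tfrac{1}{n}\sum_i\mathbf{w}_{\ast,t,i}$, which are bounded by $L_\ast^2$ and $\eta_\ast$ under Assumption~\ref{assump:tasks_diverse_main} alone (so, as for FO-ANIL, no incoherence assumption is needed). Reading off the decomposition: the first two extra terms, and the $\bar{\mathbf{w}}_{\ast,t}$-part of the third, each carry an explicit factor $\del_t$ or $\mathbf{w}_t$ and hence fold into $\mathbf{N}_t$ at the same order as the FO-ANIL error terms; the leftover part of the third term is exactly $-\mathbf{S}_t\mathbf{B}_t\del_t$, i.e.\ precisely the $\chi$-slot of Lemma~\ref{lem:gen_del}. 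Thus I take $\chi=1$, so the exact second-order update in fact sharpens the pre-error contraction factor to $1-2\beta\alpha E_0\mu_\ast^2$.

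After this, the argument is the same constant-chasing as in Lemma~\ref{lem:reg}, now with $\chi=1$. Using $A_1(t)$ (which makes $\|\mathbf{w}_t\|_2=O(\sqrt{\alpha}E_0\eta_\ast)$, crucially carrying a factor $E_0$), $A_3(t)$ (which gives $\sigma_{\max}^2(\mathbf{B}_t)\le 1.1/\alpha$), and $\eta_\ast\le L_\ast$, every summand of $\mathbf{N}_t$ is $O(E_0\mu_\ast^2\|\del_t\|_2)$ with a small absolute constant, so $2\beta\alpha\|\mathbf{B}_t^\top\mathbf{N}_t\|_2\le\tfrac{1}{2}\beta\alpha E_0\mu_\ast^2\|\del_t\|_2$. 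For $\|\mathbf{G}_t\|_2$, the bound $\|\deld_t\mathbf{B}_\ast\|_2\le\|\del_t\|_2+\dist_t$ (split $\mathbf{B}_\ast$ along $\col(\mathbf{B}_t)$ and its orthogonal complement, using $A_3(t)$) gives $\|\deld_t\mathbf{S}_t\mathbf{B}_t\|_2+\|\mathbf{S}_t\mathbf{B}_t\del_t\|_2=O(\sqrt{\alpha}L_\ast^2)(\|\del_t\|_2+\dist_t)$ and $\|\mathbf{N}_t\|_2=O(\sqrt{\alpha}L_\ast^2\|\del_t\|_2)$, so $\beta^2\alpha\|\mathbf{G}_t\|_2^2=O(\beta^2\alpha^2L_\ast^4)(\|\del_t\|_2+\dist_t^2)$. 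Plugging into Lemma~\ref{lem:gen_del} with $\chi=1$ and using $\beta\le\tfrac{\alpha E_0^2}{40\kappa_\ast^4}$, $\alpha\le 1/L_\ast$, $E_0\le 1$ to absorb every $\|\del_t\|_2$-proportional error into the $2\beta\alpha E_0\mu_\ast^2\|\del_t\|_2$ budget (leaving at least $0.5\beta\alpha E_0\mu_\ast^2\|\del_t\|_2$) and to keep the $\dist_t^2$-coefficient below $\tfrac{5}{4}$ yields the claim $\|\del_{t+1}\|_2\le(1-0.5\beta\alpha E_0\mu_\ast^2)\|\del_t\|_2+\tfrac{5}{4}\beta^2\alpha^2L_\ast^4\dist_t^2$.

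I expect the main obstacle to be the decomposition step, and specifically recognizing that the $\mathbf{B}_\ast\mathbf{w}_{\ast,t,i}\mathbf{v}_{t,i}^\top\mathbf{B}_t$ term contributes a bona fide $-\mathbf{S}_t\mathbf{B}_t\del_t$: if one naively placed it into $\mathbf{N}_t$, its norm would be $O(L_\ast^2\|\del_t\|_2)$ rather than $O(\mu_\ast^2\|\del_t\|_2)$, which is too large to absorb into the contraction budget; it is harmless only because Lemma~\ref{lem:gen_del}'s $\chi$-term is designed to receive precisely such a term. Everything downstream of that observation is routine bookkeeping of absolute constants.
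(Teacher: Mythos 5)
Your proposal matches the paper's proof essentially line for line: the same decomposition $\mathbf{G}_t = -\deld_t\mathbf{S}_t\mathbf{B}_t - \mathbf{S}_t\mathbf{B}_t\del_t + \mathbf{N}_t$ with $\mathbf{S}_t=\alpha\mathbf{B}_\ast\big(\tfrac{1}{n}\sum_i\mathbf{w}_{\ast,t,i}\mathbf{w}_{\ast,t,i}^\top\big)\mathbf{B}_\ast^\top$, the same invocation of Lemma~\ref{lem:gen_del} with $\chi=1$, the same Lemma~\ref{lem:sigminE} lower bound, and the same bounds on $\|\mathbf{B}_t^\top\mathbf{N}_t\|_2$ and $\|\mathbf{G}_t\|_2^2$ via $A_1(t)$, $A_3(t)$ and $\|\deld_t\mathbf{B}_\ast\|_2\le\|\del_t\|_2+\dist_t$. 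You correctly identify the one non-routine step---extracting the genuine $-\mathbf{S}_t\mathbf{B}_t\del_t$ contribution from the $\alpha\mathbf{B}_\ast\mathbf{w}_{\ast,t,i}\mathbf{v}_{t,i}^\top\mathbf{B}_t$ term rather than dumping it in $\mathbf{N}_t$---which is exactly what the paper does.
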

\begin{proof}
Let $\mathbf{G}_t \coloneqq \frac{1}{n}\sum_{i=1}^n \nabla_{\mathbf{B}}F_{t,i}(\mathbf{B}_t, \mathbf{w}_t) = \frac{1}{\beta}(\mathbf{B}_t-\mathbf{B}_{t+1})$ again be the outer loop gradient for the representation, where $\nabla_{\mathbf{B}}F_{t,i}(\mathbf{B}_t, \mathbf{w}_t)$ is written in \eqref{outer_grad}. Note that $\mathbf{G}_t$ can be re-written as:
\begin{align}
    \mathbf{G}_t &= -\deld_t \mathbf{S}_t \mathbf{B}_t - \mathbf{S}_t\mathbf{B}_t\del_t + \mathbf{N}_t
\end{align}
where $\mathbf{S}_t \coloneqq \alpha  \mathbf{B}_{\ast}\left(\tfrac{1}{n}\sum_{i=1}^n\mathbf{w}_{\ast,t,i}\mathbf{w}_{\ast,t,i}^\top\right) \mathbf{B}_{\ast}^\top$
and \\
\begin{align}
\mathbf{N}_t &\coloneqq
\tfrac{1}{n}\sum_{i=1}^n \Big(\mathbf{v}_{t,i} \mathbf{w}_{t}^\top \del_t + \alpha \deld_t \mathbf{B}_t\mathbf{w}_t\mathbf{w}_{\ast,t,i}^\top \mathbf{B}_{\ast}^\top \mathbf{B}_t  -  \alpha \mathbf{B}_t \mathbf{w}_{t}\mathbf{v}_{t,i}^\top  \mathbf{B}_t
- \alpha \mathbf{B}_t\mathbf{B}_t^\top \mathbf{v}_{t,i}\mathbf{w}_{t}^\top \nonumber \\
&\quad \quad \quad \quad + \alpha  \mathbf{B}_{\ast}\mathbf{w}_{\ast,t,i}\mathbf{w}_{t}^\top \mathbf{B}_t^\top \mathbf{B}_t\del_t\Big)
\end{align}
Since Lemma \ref{lem:sigminE} shows that $\sigma_{\min}(\mathbf{B}_t^\top\mathbf{S}_t \mathbf{B}_t) \geq E_0 \mu_\ast^2$,  Lemma \ref{lem:gen_del} (with $\chi=1$) implies that
    \begin{align}
        \|\del_{t+1}\|_2 &\leq (1 - 2\beta \alpha E_0 \mu_\ast^2)\|\del_t\|_2 + 2 \beta \alpha \|\mathbf{B}_t^\top \mathbf{N}_t\|_2 + \beta^2 \alpha^2 \|\mathbf{G}_t\|_2^2 \label{006}
    \end{align}
    It remains to control $ \|\mathbf{B}_t^\top \mathbf{N}_t\|_2$ and $\|\mathbf{G}_t\|_2$.
Note that
\begin{align}
    \|\mathbf{B}_t^\top \mathbf{N}_t\|_2 &\leq \left\|\frac{1}{n}\sum_{i=1}^n\mathbf{B}_t^\top \mathbf{v}_{t,i} \mathbf{w}_{t}^\top \del_t\right\|_2 + \alpha \left\|\frac{1}{n}\sum_{i=1}^n\mathbf{B}_t^\top\deld_t \mathbf{B}_t\mathbf{w}_t\mathbf{w}_{\ast,t,i}^\top \mathbf{B}_{\ast}^\top \mathbf{B}_t\right\|_2  
     \nonumber \\
    &\quad +  \alpha\left\|\frac{1}{n}\sum_{i=1}^n \mathbf{B}_t^\top\mathbf{B}_t \mathbf{w}_{t}\mathbf{v}_{t,i}^\top  \mathbf{B}_t\right\|_2 + \alpha \left\|\frac{1}{n}\sum_{i=1}^n\mathbf{B}_t^\top\mathbf{B}_t\mathbf{B}_t^\top \mathbf{v}_{t,i}\mathbf{w}_{t}^\top \right\|_2 \nonumber \\
    &\quad + \alpha  \left\|\frac{1}{n}\sum_{i=1}^n\mathbf{B}_t^\top\mathbf{B}_{\ast}\mathbf{w}_{\ast,t,i}\mathbf{w}_{t}^\top \mathbf{B}_t^\top \mathbf{B}_t\del_t\right\|_2  \nonumber \\
    &\leq \tfrac{c}{\sqrt{\alpha}}\|\mathbf{w}_t\|_2\|\del_t\|_2^2\left(\tfrac{c}{\sqrt{\alpha}}\|\mathbf{w}_t\|_2+ \eta_\ast \right) +2 \tfrac{c}{\sqrt{\alpha}} \|\mathbf{w}_t\|_2\|\del_t\|_2 \eta_\ast \nonumber \\
    &\quad + 2\tfrac{c}{\sqrt{\alpha}} \|\mathbf{w}_t\|_2 \|\del_t\|_2 \left(\tfrac{c}{\sqrt{\alpha}}\|\mathbf{w}_t\|_2 + \eta_\ast  \right) \nonumber \\
    &\leq   \tfrac{5}{\sqrt{\alpha}} \|\mathbf{w}_t\|_2\|\del_t\|_2 \eta_\ast + \tfrac{3}{{\alpha}} \|\mathbf{w}_t\|_2^2 \|\del_t\|_2  \nonumber \\
    &\leq {0.6 E_0}\mu_\ast^2 \|\del_t\|_2  \label{06}
\end{align}
by inductive hypotheses $A_1(t)$ and $A_3(t)$ and the fact that $\min(1,\tfrac{\mu_\ast^2}{\eta_\ast^2})\eta_\ast^2\leq \mu_\ast^2$.
Next,
\begin{align}
    \|\mathbf{G}_t\|_2 &\leq \|\deld_t \mathbf{S}_t \mathbf{B}_t \|_2+ \|\mathbf{S}_t\mathbf{B}_t\del_t\|_2 +\|\mathbf{N}_t\|_2 \nonumber \\
    &\leq {c}{\sqrt{\alpha}}(2\|\del_t\|_2 + \dist_t)L_\ast^2 + \|\mathbf{N}_t\|_2 \nonumber \\
    &\leq {c}{\sqrt{\alpha}}(2\|\del_t\|_2 + \dist_t)L_\ast^2 + \|\mathbf{w}_t\|_2\|\del_t\|_2\left(\tfrac{c}{\sqrt{\alpha}}\|\mathbf{w}_t\|_2  \|\del_t\|_2 + (\|\del_t\|_2 +\dist_t)\eta_\ast \right)  \nonumber \\
    &\quad + 4 c \|\mathbf{w}_t\|_2 \|\del_t\|_2 \eta_\ast + 2 \tfrac{c^2}{\sqrt{\alpha}} \|\mathbf{w}_t\|_2^2 \|\del_t\|_2 \nonumber \\
    &\leq {c}{\sqrt{\alpha}}(2\|\del_t\|_2 + \dist_t)L_\ast^2   + 6  \|\mathbf{w}_t\|_2 \|\del_t\|_2 \eta_\ast +  \tfrac{3}{\sqrt{\alpha}} \|\mathbf{w}_t\|_2^2 \|\del_t\|_2 \nonumber \\
    &\leq 3 \sqrt{\alpha} L_\ast^2 \|\del_t\|_2 + c \sqrt{\alpha} L_\ast^2 \dist_t  \nonumber \\
    \implies \|\mathbf{G}_t\|_2^2 &\leq {\alpha} L_\ast^4 (9  \|\del_t\|_2^2 + 7 \|\del_t\|_2 + \tfrac{5}{4}\dist_t^2)\nonumber \\
    &\leq {\alpha} L_\ast^4 (8 \|\del_t\|_2 + \tfrac{5}{4}\dist_t^2) \label{squared}
\end{align}
Combining \eqref{006}, \eqref{06} and  \eqref{squared} yields
\begin{align}
    \|\del_{t+1}\|_2 &\leq (1 - 2\beta \alpha E_0 \mu_\ast^2)\|\del_t\|_2 + 2 \beta \alpha \|\mathbf{B}_t^\top \mathbf{N}_t\|_2 + \beta^2 \alpha \|\mathbf{G}_t\|_2^2 \nonumber \\
    &\leq (1 - 2\beta \alpha E_0 \mu_\ast^2 + 1.2 \beta \alpha E_0 \mu_\ast^2 + 8 \beta^2 \alpha^2 L_\ast^4 )\|\del_t\|_2 + \tfrac{5}{4}\beta^2 \alpha^4 \dist_t^2 \nonumber \\
    &\leq (1 - 0.5\beta \alpha E_0 \mu_\ast^2 )\|\del_t\|_2 + \tfrac{5}{4}\beta^2 \alpha^4 \dist_t^2\label{599}
\end{align}
where the last inequality follows since $\beta \leq \alpha E_0^2/(40 \kappa_\ast^4)$ and $\alpha \leq 1/L_\ast$.
\end{proof}

\begin{cor}[Exact ANIL $A_3(t+1)$] \label{cor:reg_exact}
Suppose the conditions of Theorem \ref{thm:anil_pop_main} are satisfied.
If $A_2(t+1)$ and $A_3(t)$ hold. Then $A_3(t+1)$ holds, i.e.
\begin{align}
    \|\del_{t+1}\|_2 &\leq \tfrac{1}{10}
\end{align}
\end{cor}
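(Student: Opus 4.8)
The plan is to reuse, almost verbatim, the argument proving Corollary~\ref{cor:reg} (the FO-ANIL version of $A_3(t+1)$), since the Exact ANIL recursion for $\|\del_{t+1}\|_2$ furnished by $A_2(t+1)$ (Lemma~\ref{lem:ea_pop_a}) has the same structure as in the first-order case. Concretely, I would begin from the bound asserted by $A_2(t+1)$,
\begin{align}
    \|\del_{t+1}\|_2 &\leq (1 - 0.5\beta\alpha E_0\mu_\ast^2)\|\del_t\|_2 + \tfrac{5}{4}\beta^2\alpha^2 L_\ast^4 \dist_t^2, \nonumber
\end{align}
and immediately drop the $\dist_t$ dependence using $\dist_t \leq 1$ (trivial from the definition of the principal angle distance, as $\mathbf{\hat{B}}_{\ast,\perp}$ and $\mathbf{\hat{B}}_t$ have orthonormal columns). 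Then apply the inductive hypothesis $A_3(t)$, namely $\|\del_t\|_2 \leq \tfrac{1}{10}$, to the first term.

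The only remaining work is to verify that the additive error $\tfrac{5}{4}\beta^2\alpha^2 L_\ast^4$ is dominated by a constant fraction of the per-step gain $\tfrac{1}{20}\beta\alpha E_0\mu_\ast^2$. Pulling one factor of $\beta$ out of $\beta^2$ via the step-size condition $\beta \leq \tfrac{\alpha E_0^2}{40\kappa_\ast^4}$ and using $\alpha L_\ast \leq 1$ (so that $\alpha^2\mu_\ast^2 \leq 1$) together with $E_0 \leq 1$, one obtains $\tfrac{5}{4}\beta^2\alpha^2 L_\ast^4 \leq \tfrac{1}{32}\beta\alpha E_0\mu_\ast^2$. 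Hence
\begin{align}
    \|\del_{t+1}\|_2 &\leq \tfrac{1}{10}\left(1 - 0.5\beta\alpha E_0\mu_\ast^2\right) + \tfrac{1}{32}\beta\alpha E_0\mu_\ast^2 = \tfrac{1}{10} - \left(\tfrac{1}{20} - \tfrac{1}{32}\right)\beta\alpha E_0\mu_\ast^2 \leq \tfrac{1}{10}, \nonumber
\end{align}
which is exactly $A_3(t+1)$.

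This corollary is routine and I do not anticipate any real obstacle; it is purely a matter of bookkeeping the absolute constants and the powers of $\alpha,\mu_\ast,L_\ast$ so that the error term is comfortably absorbed, exactly as in Corollary~\ref{cor:reg}. (In particular one should read the $\dist_t^2$ coefficient in $A_2(t+1)$ as $\tfrac{5}{4}\beta^2\alpha^2 L_\ast^4$, matching the FO-ANIL statement.) All the substance is already contained in Lemma~\ref{lem:ea_pop_a}; the corollary simply records that, because $\beta$ was chosen small enough, the recursion keeps $\|\del_t\|_2$ uniformly below $\tfrac{1}{10}$ for every $t$.
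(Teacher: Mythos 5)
Your proof is correct and follows the same route as the paper: start from the $A_2(t+1)$ recursion, bound $\dist_t^2\le 1$, invoke $A_3(t)$ for the first term, and absorb the additive error $\tfrac{5}{4}\beta^2\alpha^2 L_\ast^4$ into a fraction of the per-step contraction using $\beta\le \tfrac{\alpha E_0^2}{40\kappa_\ast^4}$, $\alpha\le 1/L_\ast$, and $E_0\le 1$. Your bookkeeping is in fact tighter than the paper's displayed line, which shows the intermediate bound $\tfrac{5}{4}E_0\beta\alpha\mu_\ast^2$ where the correct coefficient (and the one needed to dominate $\tfrac{1}{20}\beta\alpha E_0\mu_\ast^2$) is $\tfrac{5}{160}=\tfrac{1}{32}$ as you derived.
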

\begin{proof}
Note that according to equation \eqref{599}, 
we have 
\begin{align}
    \|\del_{t+1}\|_2 &\leq  (1 - 0.5\beta \alpha E_0\mu_\ast^2 )\|\del_{t}\|_2 + \tfrac{5}{4} \beta^2 \alpha^2 L_\ast^4 \nonumber \\
    &\leq (1 - 0.5\beta \alpha E_0\mu_\ast^2 )\tfrac{1}{10}  + \tfrac{5}{4}  E_0 \beta\alpha \mu_\ast^2 \label{res:control_spectrum} \\
    &\leq \tfrac{1}{10 }\nonumber
\end{align}
where equation \eqref{res:control_spectrum} is satisfied by the choice of  $\beta \leq \alpha E_0^2 /(40 \kappa_\ast^4)$. and inductive hypothesis $A_3(t)$.
\end{proof}

\begin{lemma}[Exact-ANIL $A_4(t+1)$] \label{lem:exactanil_pop_diverse}
Suppose the conditions of Theorem \ref{thm:anil_pop_main} are satisfied and that
 inductive hypotheses $A_1(t)$, $A_3(t)$ and $A_6(t)$ hold. Then $A_4(t+1)$ holds, i.e.
\begin{align}
    \sigma_{\min}\left(\frac{1}{n}\sum_{i=1}^n \mathbf{w}_{t+1,i}\mathbf{w}_{t+1,i}^\top \right) 
    &\geq  0.9\alpha E_0 \mu_\ast^2 \nonumber \\
    \text{ and } \sigma_{\max}\left(\frac{1}{n}\sum_{i=1}^n \mathbf{w}_{t+1,i}\mathbf{w}_{t+1,i}^\top \right) 
    &\leq 1.2 \alpha L_\ast^2 \nonumber
\end{align}
\end{lemma}
\begin{proof} 
The proof is identical to that of Lemma \ref{lem:foanil_pop_diverse}.
\end{proof}

\begin{lemma}[Exact ANIL $A_5(t+1)$] \label{lem:ea_pop_a4}
Suppose the conditions of Theorem \ref{thm:anil_pop_main} are satisfied.
If inductive hypothesis $A_4(t)$ holds, then $A_5(t+1)$ holds, that is
\begin{align}
    \|\mathbf{B}_{\ast,\perp}^\top \mathbf{B}_{t+1}\|_2 &\leq  (1 - 0.5 \beta \alpha E_0 \mu_\ast^2)\|\mathbf{B}_{\ast,\perp}^\top \mathbf{B}_{t}\|_2  \nonumber
\end{align}
\end{lemma}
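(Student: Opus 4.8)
The plan is to mirror the FO-ANIL argument (Lemma~\ref{lem:contractt}): write the Exact ANIL outer-loop representation update so that $\mathbf{B}_{\ast,\perp}^\top \mathbf{B}_t$ factors out and multiplies a contractive matrix. Starting from $\mathbf{B}_{t+1} = \mathbf{B}_t - \tfrac{\beta}{n}\sum_{i=1}^n \nabla_{\mathbf{B}}F_{t,i}(\mathbf{B}_t,\mathbf{w}_t)$ with $\nabla_{\mathbf{B}}F_{t,i}$ as in \eqref{outer_grad}, I would left-multiply by $\mathbf{B}_{\ast,\perp}^\top$. The term $\alpha\mathbf{B}_{\ast}\mathbf{w}_{\ast,t,i}\mathbf{v}_{t,i}^\top\mathbf{B}_t$ vanishes since $\mathbf{B}_{\ast,\perp}^\top\mathbf{B}_\ast=\mathbf{0}$. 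For the other four terms, the identities $\mathbf{B}_{\ast,\perp}^\top\deld_t\mathbf{B}_t = \mathbf{B}_{\ast,\perp}^\top\mathbf{B}_t\del_t$ and $\mathbf{B}_{\ast,\perp}^\top\deld_t\mathbf{B}_\ast = -\alpha\mathbf{B}_{\ast,\perp}^\top\mathbf{B}_t\mathbf{B}_t^\top\mathbf{B}_\ast$ give $\mathbf{B}_{\ast,\perp}^\top\mathbf{v}_{t,i} = \mathbf{B}_{\ast,\perp}^\top\mathbf{B}_t\mathbf{w}_{t,i}$ (recall $\mathbf{w}_{t,i}=\del_t\mathbf{w}_t+\alpha\mathbf{B}_t^\top\mathbf{B}_\ast\mathbf{w}_{\ast,t,i}$), so $\mathbf{B}_{\ast,\perp}^\top\mathbf{B}_t$ factors out of every surviving term. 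Using $\mathbf{w}_{t,i}^\top = \mathbf{w}_t^\top\del_t+\alpha\mathbf{w}_{\ast,t,i}^\top\mathbf{B}_\ast^\top\mathbf{B}_t$ to merge the first two terms into $\boldsymbol{\Psi}_t \coloneqq \tfrac1n\sum_{i=1}^n\mathbf{w}_{t,i}\mathbf{w}_{t,i}^\top$, and defining the averaged residual $\bar{\mathbf{v}}_t \coloneqq \tfrac1n\sum_{i=1}^n\mathbf{v}_{t,i}=\deld_t(\mathbf{B}_t\mathbf{w}_t-\mathbf{B}_\ast\mathbf{\bar{w}}_{\ast,t})$, I expect to arrive at
$$\mathbf{B}_{\ast,\perp}^\top \mathbf{B}_{t+1} = \mathbf{B}_{\ast,\perp}^\top\mathbf{B}_t\big(\mathbf{I}_k - \beta\boldsymbol{\Psi}_t + \beta\mathbf{P}_t\big),\qquad \mathbf{P}_t \coloneqq \alpha\big(\mathbf{w}_t\bar{\mathbf{v}}_t^\top\mathbf{B}_t + \mathbf{B}_t^\top\bar{\mathbf{v}}_t\mathbf{w}_t^\top\big),$$
whence $\|\mathbf{B}_{\ast,\perp}^\top\mathbf{B}_{t+1}\|_2 \le \|\mathbf{B}_{\ast,\perp}^\top\mathbf{B}_t\|_2\,\|\mathbf{I}_k - \beta\boldsymbol{\Psi}_t + \beta\mathbf{P}_t\|_2$.

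The perturbation $\mathbf{P}_t$ is exactly the new piece compared to FO-ANIL, and bounding it is the bulk of the work. Since $\mathbf{P}_t$ is symmetric, $\|\mathbf{P}_t\|_2 \le 2\alpha\|\mathbf{w}_t\|_2\|\mathbf{B}_t^\top\bar{\mathbf{v}}_t\|_2$. Using $\mathbf{B}_t^\top\deld_t = \del_t\mathbf{B}_t^\top$ I would write $\mathbf{B}_t^\top\bar{\mathbf{v}}_t = \del_t(\mathbf{B}_t^\top\mathbf{B}_t\mathbf{w}_t - \mathbf{B}_t^\top\mathbf{B}_\ast\mathbf{\bar{w}}_{\ast,t})$, and bound it via $A_3(t)$ (so $\|\del_t\|_2\le\tfrac1{10}$ and $\sigma_{\max}^2(\mathbf{B}_t)\le\tfrac{1.1}{\alpha}$) and $\|\mathbf{\bar{w}}_{\ast,t}\|_2\le\eta_\ast$ by $\|\mathbf{B}_t^\top\bar{\mathbf{v}}_t\|_2 \le \tfrac1{10}\big(\tfrac{1.1}{\alpha}\|\mathbf{w}_t\|_2 + \sqrt{\tfrac{1.1}{\alpha}}\,\eta_\ast\big)$. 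The crucial observation is that only the \emph{average} residual $\bar{\mathbf{v}}_t$ enters $\mathbf{P}_t$, so its $\mathbf{B}_\ast$-component is controlled by $\eta_\ast$, not by $L_{\max}=\Theta(\sqrt k L_\ast)$. Plugging in $A_1(t)$, i.e. $\|\mathbf{w}_t\|_2 \le \tfrac{\sqrt\alpha E_0}{10}\min(1,\tfrac{\mu_\ast^2}{\eta_\ast^2})\eta_\ast$, and invoking the elementary identity $\min(1,\tfrac{\mu_\ast^2}{\eta_\ast^2})\eta_\ast^2 = \min(\eta_\ast,\mu_\ast)^2 \le \mu_\ast^2$, a short computation should yield $\|\mathbf{P}_t\|_2 \le 0.1\,\alpha E_0\mu_\ast^2$ (in fact roughly $0.024\,\alpha E_0\mu_\ast^2$).

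To finish, $A_4(t)$ gives $0.9\alpha E_0\mu_\ast^2\mathbf{I}_k \preceq \boldsymbol{\Psi}_t \preceq 1.2\alpha L_\ast^2\mathbf{I}_k$, and the step-size conditions $\beta \le \tfrac{\alpha E_0^2}{40\kappa_\ast^4}$, $\alpha\le\tfrac1{L_\ast}$ force $1.2\beta\alpha L_\ast^2 < 1$, so $\mathbf{I}_k - \beta\boldsymbol{\Psi}_t$ is positive semidefinite with $\|\mathbf{I}_k - \beta\boldsymbol{\Psi}_t\|_2 = 1 - \beta\sigma_{\min}(\boldsymbol{\Psi}_t) \le 1 - 0.9\beta\alpha E_0\mu_\ast^2$. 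A triangle-inequality step then gives $\|\mathbf{I}_k - \beta\boldsymbol{\Psi}_t + \beta\mathbf{P}_t\|_2 \le 1 - 0.9\beta\alpha E_0\mu_\ast^2 + 0.1\beta\alpha E_0\mu_\ast^2 \le 1 - 0.5\beta\alpha E_0\mu_\ast^2$, which is $A_5(t+1)$. (Here I use that $A_1(t)$ and $A_3(t)$ are available from the induction in addition to $A_4(t)$, matching the implication $A_1(t)\cap A_3(t)\cap A_4(t)\implies A_5(t+1)$ used in the proof of Theorem~\ref{thm:anil_pop_main}.) The main obstacle is the algebraic reduction in the first paragraph — seeing that all the extra outer-loop terms for Exact ANIL collapse into the single symmetric perturbation $\mathbf{P}_t$ depending on $\mathbf{w}_t$ and the averaged residual — together with recognizing that the $\min(1,\mu_\ast^2/\eta_\ast^2)$ factor built into $A_1(t)$ is exactly what makes $\|\mathbf{P}_t\|_2$ scale with $\mu_\ast^2$ rather than $\eta_\ast^2$, so that the perturbation is dominated by the contraction from $\boldsymbol{\Psi}_t$.
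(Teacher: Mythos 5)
Your proposal is correct and takes essentially the same approach as the paper: factor $\mathbf{B}_{\ast,\perp}^\top\mathbf{B}_t$ out of the exact-ANIL outer update (the $\mathbf{B}_\ast$-valued terms drop out under $\mathbf{B}_{\ast,\perp}^\top$), identify the extra symmetric perturbation $\tfrac{\alpha\beta}{n}\sum_i(\mathbf{w}_t\mathbf{v}_{t,i}^\top\mathbf{B}_t + \mathbf{B}_t^\top\mathbf{v}_{t,i}\mathbf{w}_t^\top)$ relative to FO-ANIL, and bound it by $O(\beta\alpha E_0\min(1,\mu_\ast^2/\eta_\ast^2)\eta_\ast^2)\le 0.1\beta\alpha E_0\mu_\ast^2$ using $A_1(t)$, $A_3(t)$, and the fact that only the averaged residual (hence $\eta_\ast$, not $L_{\max}$) enters. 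Your observation that the implication actually needs $A_1(t)\cap A_3(t)\cap A_4(t)$ rather than just $A_4(t)$ matches the paper's own proof of Theorem~\ref{thm:anil_pop_main}, and your numerical constant ($\approx 0.023\beta\alpha E_0\mu_\ast^2$) agrees with the paper's $\tfrac{3E_0}{100}\beta\alpha\min(1,\mu_\ast^2/\eta_\ast^2)\eta_\ast^2$.
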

\begin{proof}
Note that from \eqref{outer_grad}, the outer loop gradient for the $(t,i)$-th task can be re-written as:
\begin{align}
     \nabla_{\mathbf{B}}F_{t,i}(\mathbf{B}_t, \mathbf{w}_{t})
    &= \mathbf{v}_{t,i} \mathbf{w}_{t}^\top \del_t + \alpha \mathbf{v}_{t,i}\mathbf{w}_{\ast,t,i}^\top \mathbf{B}_{\ast}^\top \mathbf{B}_t  -  \alpha \mathbf{B}_t \mathbf{w}_{t}\mathbf{v}_{t,i}^\top  \mathbf{B}_t
    - \alpha \mathbf{B}_t\mathbf{B}_t^\top \mathbf{v}_{t,i}\mathbf{w}_{t}^\top + \alpha  \mathbf{B}_{\ast}\mathbf{w}_{\ast,t,i}\mathbf{v}_{t,i}^\top \mathbf{B}_t  \nonumber \\
    &= \mathbf{B}_{t}\mathbf{w}_{t,i}\mathbf{w}_{t,i}^\top - \mathbf{B}_\ast\mathbf{w}_{\ast,t,i}\mathbf{w}_{t,i}^\top  -  \alpha \mathbf{B}_t \mathbf{w}_{t}\mathbf{v}_{t,i}^\top  \mathbf{B}_t
    - \alpha \mathbf{B}_t\mathbf{B}_t^\top \mathbf{v}_{t,i}\mathbf{w}_{t}^\top + \alpha  \mathbf{B}_{\ast}\mathbf{w}_{\ast,t,i}\mathbf{v}_{t,i}^\top \mathbf{B}_t \nonumber
\end{align}
Therefore, noting $\mathbf{G}_t = \tfrac{1}{n}\sum_{i=1}^n \nabla_{\mathbf{B}}F_{t,i}(\mathbf{B}_t, \mathbf{w}_{t})$, and using  $\mathbf{B}_{\ast,\perp}^\top\mathbf{B}_\ast=\mathbf{0}$, we have
\begin{align}
    \|\mathbf{B}_{\ast,\perp}^\top \mathbf{B}_{t+1}\|_2 &= \|\mathbf{B}_{\ast,\perp}^\top(\mathbf{B}_t-\beta\mathbf{G}_t)\|_2 \nonumber \\
    &\leq \bigg\|\mathbf{B}_{\ast,\perp}^\top\mathbf{B}_t\bigg(\mathbf{I}_k-\tfrac{\beta}{n}\sum_{i=1}^n \mathbf{w}_{t,i}\mathbf{w}_{t,i}^\top + \tfrac{\beta \alpha}{n}\sum_{i=1}^n (\mathbf{w}_t \mathbf{v}_{t,i}^\top \b + \b^\top \mathbf{v}_{t,i}\w^\top)\bigg)\bigg\|_2 \nonumber \\
    &\leq \bigg\|\mathbf{B}_{\ast,\perp}^\top\mathbf{B}_t\bigg(\mathbf{I}_k-\tfrac{\beta}{n}\sum_{i=1}^n \mathbf{w}_{t,i}\mathbf{w}_{t,i}^\top\bigg)\bigg\|_2 + 2\beta \alpha \|\mathbf{B}_{\ast,\perp}^\top\mathbf{B}_t\|_2\bigg\|\tfrac{1}{n}\sum_{i=1}^n \mathbf{w}_t \mathbf{v}_{t,i}^\top \b \bigg\|_2 \nonumber \\
        &\leq  \|\mathbf{B}_{\ast,\perp}^\top\mathbf{B}_t\|_2\bigg(1- 0.9 \beta \alpha E_0 \mu_\ast^2  + \beta \alpha \tfrac{3E_0}{100}\min(1,\tfrac{\mu_\ast^2}{\eta_\ast^2})\eta_\ast^2 \bigg) \label{inducts} \\
    &\leq \|\mathbf{B}_{\ast,\perp}^\top\mathbf{B}_t\|_2(1- 0.5\beta \alpha E_0 \mu_\ast^2 ) \nonumber
\end{align}
where \eqref{inducts} follows by inductive hypotheses $A_1(t)$,  $A_3(t)$, and $A_4(t)$,  and the fact that $\min(1,\tfrac{\mu_\ast^2}{\eta_\ast^2})\eta_\ast^2\leq \mu_\ast^2$.
\end{proof}

\section{MAML Infnite Samples} \label{app:maml}





\subsection{FO-MAML}
We consider FO-MAML when $m_{in}=m_{out}= \infty$. In this case, the inner loop updates are:
\begin{align}
    \mathbf{w}_{t,i}  &= \mathbf{w}_t - \alpha \nabla_{\mathbf{w}}\mathcal{L}_{t,i}(\mathbf{B}_t, \mathbf{w}_t) \nonumber\\
    &= (\mathbf{I}_d - \alpha \mathbf{{B}}_t^\top \mathbf{{B}}_t)\mathbf{w}_t + \alpha \mathbf{{B}}_t^\top \mathbf{\hat{B}}_\ast\mathbf{w}_{\ast,t,i} \nonumber \\
    \mathbf{B}_{t,i} &= \mathbf{B}_t - \alpha \nabla_{\mathbf{B}}\mathcal{L}_{t,i}(\mathbf{B}_t, \mathbf{w}_t) \nonumber\\
    &= \mathbf{B}_t(\mathbf{I}_k - \alpha \mathbf{w}_t \mathbf{w}_t^\top) + \alpha \mathbf{B}_\ast \mathbf{w}_{\ast,t,i}\mathbf{w}_t^\top  
\end{align}
The outer loop updates are:
\begin{align}
\mathbf{w}_{t+1} &= \mathbf{w}_t - \frac{\beta}{n}\sum_{i=1}^n \nabla_{\mathbf{w}}\mathcal{L}_{t,i}(\mathbf{B}_{t,i}, \mathbf{w}_{t,i}) = \mathbf{w}_t - \frac{\beta}{n}\sum_{i=1}^n \mathbf{B}_{t,i}^\top ( \mathbf{B}_{t,i}\mathbf{w}_{t,i} -  \mathbf{B}_{\ast}\mathbf{w}_{\ast,t,i})  \nonumber \\
\mathbf{B}_{t+1} &= \mathbf{B}_t - \frac{\beta}{n}\sum_{i=1}^n \nabla_{\mathbf{B}}\mathcal{L}_{t,i}(\mathbf{B}_{t,i}, \mathbf{w}_{t,i}) = \mathbf{B}_t - \frac{\beta}{n}\sum_{i=1}^n  (\mathbf{B}_{t,i}\mathbf{w}_{t,i} -  \mathbf{B}_{\ast}\mathbf{w}_{\ast,t,i})\mathbf{w}_{t,i}^\top \label{154}
%
\end{align}

Now we state the main result for Exact MAML in the infinite sample case.
Due to third and higher-order products of the ground-truth heads that arise in the FO-MAML and MAML updates, we require an upper bound on the maximum $\mathbf{w}_{\ast,t,i}$. We define the parameter $L_{\max}$ as follows.
\begin{assumption} \label{assump:lmax}
There exists  $L_{\max}<\infty$ such that almost surely for all $t\in[T]$, we have 
\begin{align}
    \max_{i\in[n]} \|\mathbf{w}_{\ast,t,i}\|_2\leq L_{\max}
\end{align}
\end{assumption}
Note that if Assumption \ref{assump:tasks_main} holds, we have $L_{\max}=O(\sqrt{k}L_\ast)$. Here we prove a slightly more general version of Theorem \ref{thm:fomaml_pop_main} in which we allow for arbitrary finite $L_{\max}$. Note that Theorem \ref{thm:fomaml_pop_app} immediately implies Theorem \ref{thm:maml_pop_main} after applying Assumption \ref{assump:tasks_main}. First we state the following assumption, then we prove the theorem.
\begin{assumption}[Initialization and small average ground-truth heads]
\label{assump:fomaml}
The following holds almost surely:
\begin{align}
    \dist_0 \leq {\frac{ 4 \mu_{\ast}}{5 L_{\max}}} \quad \text{and, for all $t\in[T]$,}\quad  \left\|\tfrac{1}{n}\sum_{i=1}^n \mathbf{w}_{\ast,t,i}\right\|_2 \leq  \eta_\ast \leq \frac{2 E_0^2 \mu_{\ast}^4}{ L_{\max}^3} .
\end{align}
\end{assumption}
\begin{thm}[FO-MAML Infinite Samples] \label{thm:fomaml_pop_app}
Let $m_{in}=m_{out}= \infty$ and define $E_0 \coloneqq 0.9 - \dist_0^2$.
Suppose that $\alpha \leq \tfrac{1}{4 L_{\max}}$, $\beta \leq \frac{ \alpha E_0^2  }{60\kappa_\ast^4}$,
     $\alpha\mathbf{B}_t^\top \mathbf{B}_t = \mathbf{I}_k$, $\mathbf{w}_0=\mathbf{0}$ and
Assumptions \ref{assump:tasks_diverse_main}, \ref{assump:lmax}  and \ref{assump:fomaml} hold.
Then FO-MAML satisfies that for all $T\in\mathbb{Z}_+$,
\begin{align}
    \dist(\mathbf{B}_T, \mathbf{B}_\ast)\leq (1 - 0.5 \beta \alpha E_0 \mu_\ast^2)^{T-1}.
\end{align}
\end{thm}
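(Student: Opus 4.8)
The plan is to run a multi-way induction of exactly the same shape as the FO-ANIL proof of Theorem \ref{thm:anil_pop_main}, carrying the hypotheses: a bound on $\|\w\|_2$, a ``summable'' recursion for $\|\del_t\|_2$, a uniform bound $\|\del_t\|_2 \leq \tfrac{1}{10}$, two-sided control $0.9\alpha E_0\mu_\ast^2\mathbf{I}_k \preceq \tfrac1n\sum_i\mathbf{w}_{t,i}\mathbf{w}_{t,i}^\top \preceq 1.2\alpha L_\ast^2\mathbf{I}_k$, geometric decay of the non-normalized energy $\|\mathbf{B}_{\ast,\perp}^\top\mathbf{B}_t\|_2$, and geometric decay of $\dist_t$, all at rate $\rho = 1 - 0.5\beta\alpha E_0\mu_\ast^2$. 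The initialization $\mathbf{w}_0 = \mathbf{0}$, $\alpha\mathbf{B}_0^\top\mathbf{B}_0 = \mathbf{I}_k$ seeds all hypotheses at $t = 0$, and since the inner-loop head update is the one already treated in Lemmas \ref{lem:sigminE} and \ref{lem:sigmin}, the diversity bookkeeping for $\{\mathbf{w}_{t,i}\}_i$ transfers once the auxiliary quantities $\|\w\|_2$, $\|\del_t\|_2$, $\dist_t$ are controlled. Theorem \ref{thm:fomaml_pop_main} then follows by plugging $L_{\max} = O(\sqrt{k}L_\ast)$ (Assumption \ref{assump:tasks_main}) into the statement.

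\textbf{Handling the inner-loop representation step.} The only structural novelty relative to FO-ANIL is that the inner loop also moves $\mathbf{B}_t \mapsto \mathbf{B}_{t,i} = \mathbf{B}_t(\mathbf{I}_k - \alpha\w\w^\top) + \alpha\mathbf{B}_\ast\mathbf{w}_{\ast,t,i}\w^\top$, i.e. $\mathbf{B}_{t,i} - \mathbf{B}_t = \alpha(\mathbf{B}_\ast\mathbf{w}_{\ast,t,i} - \mathbf{B}_t\w)\w^\top$. Combining this with the FO-ANIL identity $\mathbf{B}_t\mathbf{w}_{t,i} - \mathbf{B}_\ast\mathbf{w}_{\ast,t,i} = \deld_t(\mathbf{B}_t\w - \mathbf{B}_\ast\mathbf{w}_{\ast,t,i})$ gives the clean form $\mathbf{B}_{t,i}\mathbf{w}_{t,i} - \mathbf{B}_\ast\mathbf{w}_{\ast,t,i} = (\deld_t - \alpha\langle\w,\mathbf{w}_{t,i}\rangle\mathbf{I}_d)(\mathbf{B}_t\w - \mathbf{B}_\ast\mathbf{w}_{\ast,t,i})$, which I would substitute into the outer update \eqref{154}. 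This exhibits the FO-MAML representation step as the FO-ANIL step plus a correction $\tfrac{\alpha\beta}{n}\sum_i\langle\w,\mathbf{w}_{t,i}\rangle(\mathbf{B}_t\w - \mathbf{B}_\ast\mathbf{w}_{\ast,t,i})\mathbf{w}_{t,i}^\top$ of norm $O(\alpha\beta\|\w\|_2 L_{\max}^2)$ (using $\|\mathbf{w}_{t,i}\|_2 = O(\sqrt\alpha L_{\max})$ from Assumption \ref{assump:lmax} and the $\|\del_t\|_2$ bound). Projecting by $\mathbf{B}_{\ast,\perp}^\top$ and using $\mathbf{B}_{\ast,\perp}^\top\deld_t(\mathbf{B}_t\w - \mathbf{B}_\ast\mathbf{w}_{\ast,t,i}) = \mathbf{B}_{\ast,\perp}^\top\mathbf{B}_t\mathbf{w}_{t,i}$ recovers $\mathbf{B}_{\ast,\perp}^\top\mathbf{B}_{t+1} = \mathbf{B}_{\ast,\perp}^\top\mathbf{B}_t(\mathbf{I}_k - \beta\mathbf{\Psi}_t) + (\text{correction})$, so as in \eqref{psisi} the non-normalized energy contracts provided the correction is dominated by $\beta\lambda_{\min}(\mathbf{\Psi}_t) \geq 0.9\beta\alpha E_0\mu_\ast^2$ — i.e. provided $\|\w\|_2$ is small relative to $E_0\mu_\ast^2/(\alpha L_{\max}^2)$. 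The same decomposition feeds Lemma \ref{lem:gen_del} (now with $\chi = 0$, the $\langle\w,\mathbf{w}_{t,i}\rangle$-terms and the $\mathbf{B}_t\w$-terms absorbed into $\mathbf{N}_t$) to give the $\del_{t+1}$ recursion, and Lemma \ref{lem:sigminE} to give the adapted-head diversity bound. Crucially, all error terms entering these estimates are now scaled by the per-task bound $L_{\max}$ rather than $L_\ast$, so to keep them dominated by $E_0\mu_\ast^2$-type quantities one needs $\dist_0 \leq \tfrac{4\mu_\ast}{5L_{\max}}$, which is precisely the tightened initialization condition replacing the constant-distance requirement that sufficed for ANIL.

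\textbf{The head magnitude, and the main obstacle.} The hard part will be controlling $\|\mathbf{w}_{t+1}\|_2$, which is where the last condition of Assumption \ref{assump:fomaml} enters. Writing the FO-MAML outer head update with $\mathbf{B}_{t,i} \approx \mathbf{B}_t$ gives $\mathbf{w}_{t+1} = (\mathbf{I}_k - \beta\mathbf{P}_t)\w + \beta\mathbf{r}_t$ with $\mathbf{P}_t \succeq 0$ and a residual $\mathbf{r}_t$ that — unlike the FO-ANIL residual, which was $O(\|\del_t\|_2\eta_\ast)$ and hence summable via the decay of $\dist_t$ — does \emph{not} vanish with $\|\del_t\|_2$: it is $\Theta(\eta_\ast)$ on every iteration, arising jointly from the dropped Hessian and the inner-loop representation step, and $\|\del_t\|_2$ itself is not guaranteed to decay to zero for FO-MAML. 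So instead of Lemma \ref{lem:w_anil_fss} I would apply Lemma \ref{lem:gen_w} with $\xi_{2,s} = \Theta(\beta\eta_\ast/\sqrt\alpha)$, obtaining an accumulated bound $\|\w\|_2 = O(\sqrt\alpha\,\eta_\ast/(\alpha E_0\mu_\ast^2))$ which sits below the threshold demanded by the $A_4$/$A_5$ steps exactly when $\eta_\ast \leq 2E_0^2\mu_\ast^4/L_{\max}^3$. I expect the genuine difficulty to be threading the circular dependencies — $\|\w\|_2$ small needs $\|\del_t\|_2$ small and $\eta_\ast$ small; the diversity/contraction steps need $\|\w\|_2$ small; $\|\del_t\|_2$ small needs the diversity bound and the decay of $\dist_t$ — through one consistent induction that is acyclic in the implications from time $t+1$ to $t+1$ (the dependency graph analogous to Figure \ref{fig:flow}), while tracking the $L_{\max}$- versus $L_\ast$-dependence tightly enough that the stated $\dist_0$ and $\eta_\ast$ bounds are what is actually needed. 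The final step $A_3(t{+}1) \cap (\cap_{s\le t+1}A_5(s)) \cap A_6(t) \Rightarrow A_6(t{+}1)$ — normalizing $\mathbf{B}_{t+1}$ once and recursing the non-normalized contraction back to $t=0$ — is identical to the FO-ANIL argument.
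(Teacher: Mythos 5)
You correctly identify the key algebraic identity $\mathbf{B}_{t,i}\mathbf{w}_{t,i} - \mathbf{B}_\ast\mathbf{w}_{\ast,t,i} = (\deld_t - \alpha\langle\mathbf{w}_t,\mathbf{w}_{t,i}\rangle\mathbf{I}_d)(\mathbf{B}_t\mathbf{w}_t - \mathbf{B}_\ast\mathbf{w}_{\ast,t,i})$, the resulting contraction $\mathbf{B}_{\ast,\perp}^\top\mathbf{B}_{t+1} = \mathbf{B}_{\ast,\perp}^\top\mathbf{B}_t\bigl(\mathbf{I}_k - \beta(\mathbf{I}_k - \alpha\mathbf{w}_t\mathbf{w}_t^\top)\mathbf{\Psi}_t\bigr)$, and the role of Assumption \ref{assump:fomaml}. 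That much matches the paper (cf.\ Lemma \ref{lem:maml_ind3}). But there are two genuine gaps in the proposed induction itself.

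First, you propose to carry the FO-ANIL hypothesis set, in particular the ``summable'' recursion $\|\del_{t+1}\|_2 \le \rho\|\del_t\|_2 + O(\beta^2\alpha^2 L_\ast^4\dist_t^2)$ together with the loose uniform bound $\|\del_t\|_2 \le \tfrac{1}{10}$. This does not transfer: the inner-loop adaptation of $\mathbf{B}_t$ injects extra terms into the outer-loop representation gradient which prevent $\|\del_t\|_2$ from being driven to zero along with $\dist_t$, so the summability argument that powers Lemma \ref{lem:w_anil_fss} is simply not available here. The paper's induction for FO-MAML (Theorem \ref{thm:fomaml_pop_app}) replaces both $\del$ hypotheses with the single, much tighter fixed bound $\|\del_t\|_2 \le \tfrac{E_0}{10}\alpha^2\mu_\ast^2$, proved as a self-sustaining fixed point via a one-step contraction plus a small residual (Lemma \ref{lem:maml_ind2}), not via a summable tail. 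That $O(\alpha^2)$ tightening is itself what forces the small inner step size $\alpha$ and is the reason the theorem states $\alpha \le \tfrac{1}{4L_{\max}}$ rather than $\alpha\le 1/L_\ast$.

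Second — and this is the step that would actually fail — you propose to control $\|\mathbf{w}_{t+1}\|_2$ via Lemma \ref{lem:gen_w} with a residual $\xi_{2,s}$ that is $\Theta(\cdot)$-constant in $s$. But Lemma \ref{lem:gen_w} applies to recursions of the form $\|\mathbf{w}_{s+1}\|_2 \le (1+\xi_{1,s})\|\mathbf{w}_s\|_2 + \xi_{2,s}$ with $\xi_{1,s}\ge 0$: it discards the multiplicative \emph{contraction} entirely and can only exploit smallness of $\sum_s \xi_{2,s}$. With a non-vanishing $\xi_{2,s}$ its conclusion grows linearly in $t$, which is not the bound $O(\sqrt\alpha\,\eta_\ast/(\alpha E_0\mu_\ast^2))$ you quote — that bound is a geometric-series fixed point and requires the contraction. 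The paper's Lemma \ref{lem:maml_ind1} does exactly this: it isolates $\mathbf{w}_{t+1} = (\mathbf{I}_k - \beta\alpha^2\mathbf{B}_t^\top\mathbf{B}_\ast\mathbf{\Psi}_{\ast,t}\mathbf{B}_\ast^\top\mathbf{B}_t)\mathbf{w}_t + \mathbf{N}_t$, uses $\sigma_{\min}(\mathbf{B}_t^\top\mathbf{B}_\ast\mathbf{\Psi}_{\ast,t}\mathbf{B}_\ast^\top\mathbf{B}_t) \ge E_0\mu_\ast^2/\alpha$ to obtain $\|\mathbf{w}_{t+1}\|_2 \le (1-\beta\alpha E_0\mu_\ast^2)\|\mathbf{w}_t\|_2 + \|\mathbf{N}_t\|_2$, and then shows $\|\mathbf{N}_t\|_2$ is small enough (using the tight $\del_t$ bound, $\dist_0$ condition, and $\eta_\ast$ condition) that the $A_1$ threshold is a fixed point of this recursion. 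So the resolution is a direct one-step contraction argument coupled with the tighter $\del$ and $\mathbf{w}$ bounds, not Lemma \ref{lem:gen_w} plus summability as in FO-ANIL. (Your stated residual scaling $\Theta(\beta\eta_\ast/\sqrt\alpha)$ is also too large; the relevant residual in $\mathbf{N}_t$ is $O(\beta\alpha^{1.5}\mu_\ast^2\eta_\ast)$ once the tight $\|\del_t\|_2 = O(\alpha^2\mu_\ast^2)$ bound is in force.)
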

\begin{proof}
The proof follows by showing that the following inductive hypotheses hold for all $t\in[T]$:
\begin{enumerate}
    \item $A_{1}(t)\coloneqq \{\|\mathbf{w}_{t}\|_2 \leq \tfrac{E_0^2}{10}\sqrt{\alpha} \mu_\ast \kappa_{\ast,\max}^{-3} \}$
    \item $A_{2}(t)\coloneqq \{\|\boldsymbol{\Delta}_{t}\|_2 \leq  \tfrac{E_0}{10}\alpha^2 \mu_\ast^2 \}$
    \item $A_{3}(t)\coloneqq \{\|\mathbf{B}_{\ast,\perp}^\top \mathbf{B}_{t}\|_2 \leq ( 1 - 0.5 \beta \alpha E_0 \mu_\ast^2)  \|\mathbf{B}_{\ast,\perp}^\top \mathbf{B}_{t-1}\|_2\}$
    \item $A_{4}(t)\coloneqq \{\dist_{t} \leq \frac{\sqrt{10}}{3}(1- 0.5 \beta \alpha E_0 \mu_\ast^2)^{t-1}\dist_0\}$
    \item $A_{5}(t)\coloneqq \{\dist_{t} \leq (1- 0.5 \beta \alpha E_0 \mu_\ast^2)^{t-1}\}$
\end{enumerate}
 These conditions hold for iteration $t=0$ due to the choice of initialization. Now, assuming they hold for arbitrary $t$, we will show they hold at $t+1$. 
\begin{enumerate}
    \item $A_1(t)\cap A_2(t)\cap A_4(t) \implies A_1({t+1})$.
    This is Lemma \ref{lem:maml_ind1}.
    
    \item $A_1(t)\cap A_2(t)\cap A_4(t) \implies A_2({t+1})$. This is Lemma \ref{lem:maml_ind2}.

    \item $A_1(t)\cap A_2(t)\cap A_4(t) \implies A_3(t+1)$.
    This is Lemma \ref{lem:maml_ind3}.
    
    \item $A_2(t+1)\cap \bigcap_{s=1}^{t+1} A_3(s) \implies A_4(t+1) \cap A_5(t+1)$.
    Note that $A_2(t+1)\cap \bigcap_{s=1}^{t+1} A_3(s)$ implies
\begin{align}
\tfrac{\sqrt{1-\|\del_{t+1}\|_2}}{\sqrt{\alpha}} \dist_{t+1}  &= \tfrac{\sqrt{1-\|\del_{t+1}\|_2}}{\sqrt{\alpha}}\|\mathbf{{B}}_{\ast,\perp}^\top \mathbf{\hat{B}}_{t+1}\|_2 \nonumber \\
&\leq \sigma_{\min}(\mathbf{{B}}_{t+1}) \|\mathbf{{B}}_{\ast,\perp}^\top \mathbf{\hat{B}}_{t+1}\|_2\;  \nonumber \\
 &\leq \|\mathbf{{B}}_{\ast,\perp}^\top \mathbf{B}_{t+1}\|_2   \label{tyur} \\
    &\leq \left(1 - 0.5 \beta\alpha E_0 \mu_\ast^2 \right)^t  \|\mathbf{{B}}_{\ast,\perp}^\top \mathbf{B}_{0}\|_2 
    \nonumber \\
    &\leq \tfrac{1}{\sqrt{\alpha}}\left(1  - 0.5 \beta  \alpha E_0 \mu_\ast^2 \right)^t \|\mathbf{{B}}_{\ast,\perp}^\top \mathbf{\hat{B}}_{0}\|_2 \nonumber \\
    &= \tfrac{1}{\sqrt{\alpha}}\left(1 - 0.5 \beta \alpha E_0 \mu_\ast^2 \right)^t \dist_0. \label{wdexz}
\end{align}
where \eqref{tyur} follows since $\|\mathbf{{B}}_{\ast,\perp}^\top \mathbf{B}_{t+1}\|_2 = \|\mathbf{{B}}_{\ast,\perp}^\top \mathbf{\hat{B}}_{t+1} \mathbf{R}_{t+1}\|_2 \geq \|\mathbf{{B}}_{\ast,\perp}^\top \mathbf{\hat{B}}_{t+1} \|_2 \sigma_{\min}(\mathbf{R}_{t+1})$ and $\sigma_{\min}(\mathbf{R}_{t+1})= \sigma_{\min}(\mathbf{B}_{t+1})$, recalling that $\mathbf{\hat{B}}_{t+1}\mathbf{R}_{t+1}=\mathbf{B}_{t+1}$ is the QR decomposition of $\mathbf{B}_{t+1}$.
Dividing both sides of \eqref{wdexz} by $\tfrac{\sqrt{1-\|\del_{t+1}\|_2}}{\sqrt{\alpha}}$ and using the facts that $\dist_0 \leq \tfrac{3}{\sqrt{10}}$ and $\|\del_{t+1}\|_2 \leq \tfrac{1}{10}$ yields
\begin{align}
    \dist_{t+1}  &\leq \tfrac{1}{\sqrt{1-\|\del_{t+1}\|_2}}\left(1  - 0.5 \beta  \alpha E_0 \mu_\ast^2  \right)^t \dist_0 \nonumber \\
    &\leq \tfrac{\sqrt{10}}{3}\left(1  - 0.5 \beta  \alpha E_0 \mu_\ast^2  \right)^t \dist_0 \nonumber  \\
     &\leq \left(1  - 0.5 \beta  \alpha E_0 \mu_\ast^2 \right)^t,
\end{align}
as desired.
\end{enumerate}
\end{proof}

\begin{lemma}[FO-MAML $A_1(t+1$] \label{lem:maml_ind1}
Suppose the conditions of Theorem \ref{thm:fomaml_pop_app} are satisfied 
and $A_1(t)$,$A_2(t)$ and $A_4(t)$ hold. Then $A_1(t+1)$ holds, i.e.
\begin{align}
    \|\mathbf{w}_{t+1}\|_2 \leq \tfrac{E_0^2}{10} \sqrt{\alpha}\mu_\ast \kappa_{\ast,\max}^{-3}.
\end{align}
\end{lemma}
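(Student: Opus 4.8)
The plan is to expand the FO-MAML outer-loop head update, isolate a positive-definite ``restoring'' operator acting on $\mathbf{w}_t$, and dominate everything else using $A_1(t),A_2(t),A_4(t)$ together with the parameter and initialization conditions of Theorem~\ref{thm:fomaml_pop_app}. Note that the implication to be shown is a one-step bound — no earlier iterates enter — so the update must be genuinely contractive in $\mathbf{w}_t$ up to a tiny bias, and the argument is organized around making that precise.

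\textbf{Expanding the update.} Writing $\mathbf{q}_{t,i}\coloneqq\mathbf{B}_\ast\mathbf{w}_{\ast,t,i}$ and substituting $\mathbf{B}_{t,i}=\mathbf{B}_t-\alpha\mathbf{B}_t\mathbf{w}_t\mathbf{w}_t^\top+\alpha\mathbf{q}_{t,i}\mathbf{w}_t^\top$ and $\mathbf{w}_{t,i}=\del_t\mathbf{w}_t+\alpha\mathbf{B}_t^\top\mathbf{q}_{t,i}$ into $\mathbf{w}_{t+1}=\mathbf{w}_t-\tfrac\beta n\sum_i\mathbf{B}_{t,i}^\top(\mathbf{B}_{t,i}\mathbf{w}_{t,i}-\mathbf{q}_{t,i})$ and using $\mathbf{B}_t^\top\deld_t=\del_t\mathbf{B}_t^\top$, I will write $\mathbf{w}_{t+1}=(\mathbf{I}_k-\beta\mathbf{M}_t)\mathbf{w}_t-\beta\mathbf{e}_t-\beta\mathbf{b}_t$, where $\mathbf{M}_t\coloneqq\alpha^2\mathbf{B}_t^\top\mathbf{B}_\ast\boldsymbol{\Psi}_{\ast,t}\mathbf{B}_\ast^\top\mathbf{B}_t$ is the restoring operator (from the term $\alpha^2\mathbf{q}_{t,i}\mathbf{q}_{t,i}^\top\mathbf{B}_t\mathbf{w}_t$ after averaging), $\mathbf{b}_t\coloneqq\del_t\mathbf{B}_t^\top\mathbf{B}_\ast\bar{\mathbf{w}}_{\ast,t}$ is the only contribution carrying no factor of $\mathbf{w}_t$ (from $\mathbf{B}_t^\top(-\deld_t\mathbf{q}_{t,i})$), and $\mathbf{e}_t$ gathers every remaining term — each of which carries a factor of $\mathbf{w}_t$ and a small factor among $\|\del_t\|_2$, $\dist_t^2$, $\|\mathbf{w}_t\|_2$, or the scalar $\bar s_t\coloneqq\tfrac1n\sum_i\mathbf{q}_{t,i}^\top\deld_t\mathbf{q}_{t,i}$.

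\textbf{Restoring term and errors.} Following the proof of Lemma~\ref{lem:sigminE}, $\sigma_{\min}^2(\mathbf{B}_\ast^\top\mathbf{B}_t)\ge(1-\dist_t^2)\tfrac{1-\|\del_t\|_2}{\alpha}$, hence $\lambda_{\min}(\mathbf{M}_t)\ge\alpha\mu_\ast^2(1-\dist_t^2)(1-\|\del_t\|_2)\ge\alpha E_0\mu_\ast^2$ by $A_4(t)$ (which gives $1-\dist_t^2\ge\tfrac{10}{9}E_0$) and $A_2(t)$; also $\lambda_{\max}(\mathbf{M}_t)\le1.2\,\alpha L_\ast^2$, so $\beta\le\tfrac{\alpha E_0^2}{60\kappa_\ast^4}$ forces $\|\mathbf{I}_k-\beta\mathbf{M}_t\|_2\le1-\beta\alpha E_0\mu_\ast^2$. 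Then I estimate $\mathbf{e}_t$ using $A_1(t),A_2(t),A_4(t)$, $\alpha\le\tfrac1{4L_{\max}}$, the crude bounds $\sigma_{\max}^2(\mathbf{B}_t)\le1.1/\alpha$ and $\|\mathbf{B}_t^\top\mathbf{B}_\ast\|_2\le\sqrt{1.1/\alpha}$, and the inequality $\bar s_t\le(\dist_t^2+\|\del_t\|_2)L_{\max}^2$ (obtained by decomposing each $\mathbf{q}_{t,i}$ along $\col(\mathbf{B}_t)$ and its orthogonal complement and using $\|(\mathbf{I}_d-\hat{\mathbf{B}}_t\hat{\mathbf{B}}_t^\top)\mathbf{B}_\ast\|_2=\dist_t$), to conclude $\|\mathbf{e}_t\|_2\le\tfrac12\alpha E_0\mu_\ast^2\|\mathbf{w}_t\|_2$. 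The delicate summand is the ``anti-restoring'' piece proportional to $\bar s_t\mathbf{w}_t$, of size at most $\alpha\dist_t^2\kappa_{\ast,\max}^2\mu_\ast^2\|\mathbf{w}_t\|_2$: it stays below the restoring contraction precisely because $\dist_0\le\tfrac{4\mu_\ast}{5L_{\max}}$ and $A_4(t)$ force $\dist_t^2\kappa_{\ast,\max}^2\le\tfrac{32}{45}<E_0$ (using $L_{\max}^2\ge k\mu_\ast^2$ to keep $E_0=0.9-\dist_0^2$ above this value when $\kappa_{\ast,\max}$ is small). Hence $\|(\mathbf{I}_k-\beta\mathbf{M}_t)\mathbf{w}_t\|_2+\beta\|\mathbf{e}_t\|_2\le(1-\tfrac12\beta\alpha E_0\mu_\ast^2)\|\mathbf{w}_t\|_2$.

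\textbf{Bias and conclusion.} The bias satisfies $\|\mathbf{b}_t\|_2\le\|\del_t\|_2\,\sigma_{\max}(\mathbf{B}_t)\,\|\bar{\mathbf{w}}_{\ast,t}\|_2\le\tfrac{E_0}{10}\alpha^2\mu_\ast^2\cdot\sqrt{1.1/\alpha}\cdot\eta_\ast$, and the hypothesis $\eta_\ast\le\tfrac{2E_0^2\mu_\ast^4}{L_{\max}^3}$ together with $\beta\le\tfrac{\alpha E_0^2}{60\kappa_\ast^4}$ makes this so small that $\beta\|\mathbf{b}_t\|_2\le\tfrac12\beta\alpha E_0\mu_\ast^2\cdot\tfrac{E_0^2}{10}\sqrt\alpha\mu_\ast\kappa_{\ast,\max}^{-3}$. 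Combining with the previous display and $A_1(t)$,
\[
\|\mathbf{w}_{t+1}\|_2\le\big(1-\tfrac12\beta\alpha E_0\mu_\ast^2\big)\tfrac{E_0^2}{10}\sqrt\alpha\mu_\ast\kappa_{\ast,\max}^{-3}+\tfrac12\beta\alpha E_0\mu_\ast^2\cdot\tfrac{E_0^2}{10}\sqrt\alpha\mu_\ast\kappa_{\ast,\max}^{-3}=\tfrac{E_0^2}{10}\sqrt\alpha\mu_\ast\kappa_{\ast,\max}^{-3},
\]
which is $A_1(t+1)$. The main obstacle is the error step: enumerating the $O(1)$ cross terms from expanding $\mathbf{B}_{t,i}^\top(\mathbf{B}_{t,i}\mathbf{w}_{t,i}-\mathbf{q}_{t,i})$ and checking the constants — in particular that the $\bar s_t\mathbf{w}_t$ term is beaten by the restoring contraction (what the small-$\dist_0$ hypothesis buys) and that the $\eta_\ast$-bias is negligible (what the small-average-head hypothesis buys). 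These two checks are exactly where FO-MAML needs assumptions that ANIL does not.
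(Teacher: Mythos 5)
Your decomposition is essentially identical to the paper's: after expanding the FO-MAML outer step, the paper also writes $\mathbf{w}_{t+1}=(\mathbf{I}_k-\beta\alpha^2\mathbf{B}_t^\top\mathbf{B}_\ast\boldsymbol{\Psi}_{\ast,t}\mathbf{B}_\ast^\top\mathbf{B}_t)\mathbf{w}_t+\mathbf{N}_t$, lower-bounds $\lambda_{\min}$ of the restoring operator through Lemma~\ref{lem:sigminE}, and bounds $\|\mathbf{N}_t\|_2$ term by term; your $\mathbf{b}_t$ and $\mathbf{e}_t$ are a repackaging of that same $\mathbf{N}_t$, your $\bar s_t$ is exactly what produces the $\beta\alpha L_{\max}^2\|\mathbf{w}_t\|_2\dist_t^2$ piece in the paper, and the extraction of a $\dist_t^2$ factor via $\|\mathbf{B}_\ast^\top\deld_t\mathbf{B}_\ast\|_2\le\dist_t^2+\|\del_t\|_2$ is used there as well. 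So the route is the same; no genuinely different idea is involved.

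The one place I'd push back is the numerical claim ``$\dist_t^2\kappa_{\ast,\max}^2\le\tfrac{32}{45}<E_0$.'' From $\dist_0\le\tfrac{4\mu_\ast}{5L_{\max}}$ and $A_4(t)$ you correctly get $\dist_t^2\kappa_{\ast,\max}^2\le\tfrac{10}{9}\cdot\tfrac{16}{25}=\tfrac{32}{45}\approx0.71$. But $E_0=0.9-\dist_0^2$ is not uniformly larger than $0.71$: with $\kappa_{\ast,\max}$ near $1$ (so $\dist_0^2$ near $\tfrac{16}{25}$), $E_0$ can be as small as $0.26$, and $\kappa_{\ast,\max}^2\ge k$ does not rescue this for $k\le 3$. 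Moreover, for your $\|\mathbf{e}_t\|_2\le\tfrac12\alpha E_0\mu_\ast^2\|\mathbf{w}_t\|_2$ to close, the anti-restoring summand alone forces the stricter $\dist_t^2\kappa_{\ast,\max}^2\le E_0/2$, which the stated initialization bound does not supply. To be fair, the paper's own constant bookkeeping here is similarly loose — the coefficient $\tfrac{1}{10}\beta\alpha^{1.5}\mu_\ast^3\kappa_{\ast,\max}E_0^2\dist_0^2$ must be absorbed by the slack $\tfrac{1}{10}\beta\alpha^{1.5}E_0^3\mu_\ast^3\kappa_{\ast,\max}^{-1}$, which requires the same $\dist_0^2\kappa_{\ast,\max}^2\lesssim E_0$ relation — so this is an inherited wobble rather than a defect in your strategy. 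Still, if you want this step to be airtight you should either tighten the initialization constant (e.g.\ $\dist_0\le c\mu_\ast/L_{\max}$ with a small enough $c$) or make an explicit restriction like $\kappa_{\ast,\max}^2\ge$ a sufficiently large absolute constant, rather than asserting $\tfrac{32}{45}<E_0$ outright.

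One small bookkeeping note: $A_1(t)$ as defined carries $\kappa_{\ast,\max}^{-3}$, which is what you correctly aim for; the paper's own final display in this lemma prints $\kappa_{\ast,\max}^{-1}$, which appears to be a typo (the subsequent Lemma~\ref{lem:maml_ind3} uses the $\kappa_{\ast,\max}^{-3}$ form), so your target exponent is the right one.
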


\begin{proof}
Let $\mathbf{G}_{t,i}$ be the inner loop gradient for the representation for the $(t,i)$-th task, in particular $\mathbf{G}_{t,i}=\mathbf{B}_t \mathbf{w}_t\w^\top - \mathbf{B}_\ast \mathbf{w}_{\ast,t,i}\w^\top$. By expanding the outer loop update for the head, we obtain:
\begin{align}
    \mathbf{w}_{t+1} &=  \frac{1}{n}\sum_{i=1}^n(\mathbf{I}_{k}\! - \! \beta \mathbf{B}_{t,i}^\top \mathbf{B}_{t,i}(\mathbf{I}- \alpha\mathbf{B}_{t}^\top \mathbf{B}_{t}) )\mathbf{w}_t + \beta \frac{1}{n}\sum_{i=1}^n \mathbf{{B}}_{t,i}^\top  (\mathbf{I}- \alpha\mathbf{B}_{t,i} \mathbf{B}_{t}^\top) \mathbf{{B}}_\ast \mathbf{w}_{\ast,t,i} \nonumber \\
    &= \frac{1}{n}\sum_{i=1}^n(\mathbf{I}_{k}\! - \! \beta \mathbf{B}_{t,i}^\top \mathbf{B}_{t,i}(\mathbf{I}- \alpha\mathbf{B}_{t}^\top \mathbf{B}_{t}) )\mathbf{w}_t + \beta \alpha^2 \frac{1}{n}\sum_{i=1}^n \mathbf{{B}}_{t}^\top \mathbf{G}_{t,i}\mathbf{B}_t^\top \mathbf{{B}}_\ast \mathbf{w}_{\ast,t,i} \nonumber  \\
    &\quad + \beta \frac{1}{n}\sum_{i=1}^n \mathbf{{B}}_{t,i}^\top  (\mathbf{I}- \alpha\mathbf{B}_{t} \mathbf{B}_{t}^\top) \mathbf{{B}}_\ast \mathbf{w}_{\ast,t,i} - \beta \alpha^3 \frac{1}{n}\sum_{i=1}^n \mathbf{{G}}_{t,i}^\top  \mathbf{G}_{t,i} \mathbf{B}_{t}^\top \mathbf{{B}}_\ast \mathbf{w}_{\ast,t,i} \nonumber  \\
    &= \frac{1}{n}\sum_{i=1}^n(\mathbf{I}_{k}\! - \! \beta \mathbf{B}_{t,i}^\top \mathbf{B}_{t,i}(\mathbf{I}- \alpha\mathbf{B}_{t}^\top \mathbf{B}_{t}) )\mathbf{w}_t \nonumber \\
    &\quad + \beta \alpha^2 \frac{1}{n}\sum_{i=1}^n \mathbf{{B}}_{t}^\top (\mathbf{B}_t \mathbf{w}_t \mathbf{w}_t^\top - \mathbf{B}_\ast \mathbf{w}_{\ast,t,i} \mathbf{w}_t^\top )\mathbf{B}_t^\top\mathbf{{B}}_\ast \mathbf{w}_{\ast,t,i} \nonumber  \\
    &\quad + \beta \frac{1}{n}\sum_{i=1}^n \mathbf{{B}}_{t,i}^\top  (\mathbf{I}- \alpha\mathbf{B}_{t} \mathbf{B}_{t}^\top) \mathbf{{B}}_\ast \mathbf{w}_{\ast,t,i} - \beta \alpha^3 \frac{1}{n}\sum_{i=1}^n \mathbf{{G}}_{t,i}^\top  \mathbf{G}_{t,i} \mathbf{B}_{t}^\top \mathbf{{B}}_\ast \mathbf{w}_{\ast,t,i} \nonumber \\
    &= \frac{1}{n}\sum_{i=1}^n(\mathbf{I}_{k}\! - \! \beta \mathbf{B}_{t,i}^\top \mathbf{B}_{t,i}(\mathbf{I}- \alpha\mathbf{B}_{t}^\top \mathbf{B}_{t}) )\mathbf{w}_t - \beta \alpha^2  \mathbf{B}_t^\top \mathbf{B}_\ast \left(\frac{1}{n}\sum_{i=1}^n \mathbf{w}_{\ast,t,i} \mathbf{w}_{\ast,t,i}^\top\right) \mathbf{B}_\ast^\top \mathbf{B}_t \mathbf{w}_t  \nonumber \\
    &\quad + \beta \alpha^2  \mathbf{{B}}_{t}^\top \mathbf{B}_t \mathbf{w}_t \mathbf{w}_t^\top\mathbf{B}_t^\top\mathbf{{B}}_\ast \left(\frac{1}{n}\sum_{i=1}^n \mathbf{w}_{\ast,t,i}\right) \nonumber \\ 
    &\quad + \beta \frac{1}{n}\sum_{i=1}^n \mathbf{{B}}_{t,i}^\top  (\mathbf{I}- \alpha\mathbf{B}_{t} \mathbf{B}_{t}^\top) \mathbf{{B}}_\ast \mathbf{w}_{\ast,t,i} - \beta \alpha^3 \frac{1}{n}\sum_{i=1}^n \mathbf{{G}}_{t,i}^\top  \mathbf{G}_{t,i} \mathbf{B}_{t}^\top \mathbf{{B}}_\ast \mathbf{w}_{\ast,t,i} \nonumber \\
    &= \left(\mathbf{I}_{k}\!  - \beta \alpha^2  \mathbf{B}_t^\top \mathbf{B}_\ast \left(\frac{1}{n}\sum_{i=1}^n \mathbf{w}_{\ast,t,i} \mathbf{w}_{\ast,t,i}^\top\right) \mathbf{B}_\ast^\top \mathbf{B}_t  \right)\mathbf{w}_t + \mathbf{N}_t \label{wt_num}
\end{align}
       where $\mathbf{N}_t \coloneqq -\beta \frac{1}{n}\sum_{i=1}^n\mathbf{B}_{t,i}^\top \mathbf{B}_{t,i}\del_t\mathbf{w}_t +  \beta \alpha^2  \mathbf{{B}}_{t}^\top \mathbf{B}_t \mathbf{w}_t \mathbf{w}_t^\top\mathbf{B}_t^\top\mathbf{{B}}_\ast \frac{1}{n}\sum_{i=1}^n \mathbf{w}_{\ast,t,i}\  + \beta \frac{1}{n}\sum_{i=1}^n \mathbf{{B}}_{t,i}^\top \deld_t \mathbf{{B}}_\ast \mathbf{w}_{\ast,t,i} - \beta \alpha^3 \frac{1}{n}\sum_{i=1}^n \mathbf{{G}}_{t,i}^\top  \mathbf{G}_{t,i} \mathbf{B}_{t}^\top \mathbf{{B}}_\ast \mathbf{w}_{\ast,t,i}$.
       Since $\sigma_{\min}(\mathbf{B}_t^\top \mathbf{B}_\ast \left(\frac{1}{n}\sum_{i=1}^n \mathbf{w}_{\ast,t,i} \mathbf{w}_{\ast,t,i}^\top\right) \mathbf{B}_\ast^\top \mathbf{B}_t )\geq \frac{1}{\alpha}E_0 \mu_\ast^2$ by Lemma \ref{lem:sigminE}, and $\beta\leq \frac{1}{2\alpha L_\ast^2}$, we have
    \begin{align}
        \|\mathbf{w}_{t+1}\|_2 &\leq \left\|\mathbf{I}_{k}\!  - \beta \alpha^2  \mathbf{B}_t^\top \mathbf{B}_\ast \left(\frac{1}{n}\sum_{i=1}^n \mathbf{w}_{\ast,t,i} \mathbf{w}_{\ast,t,i}^\top\right)\mathbf{B}_\ast^\top \mathbf{B}_t \right\|_2 \|\mathbf{w}_{t}\|_2 + \|\mathbf{N}_t\|_2 \nonumber \\
        &\leq (1 - \beta \alpha E_0 \mu_\ast^2) \|\mathbf{w}_{t}\|_2 + \|\mathbf{N}_t\|_2 
    \end{align}
    The remainder of the proof deals with bounding $\|\mathbf{N}_t\|_2$. 
    First note that $\bigcup_{s=0}^t A_2(s)$ with $\alpha \leq 1/(4 L_{\max})$ implies $\sigma_{\max}(\mathbf{B}_s^\top \mathbf{B}_s) \leq \frac{1+\|\del_s\|_2}{\alpha}< \frac{1.1^{2/3}}{\alpha}$ for all $s\in\{0,\dots,t\!+\!1\}$. In turn, this means that $\alpha^{1.5}\|\mathbf{B}_s\|_2^3\leq 1.1$ Let $c\coloneqq 1.1$.

    We consider each of the four terms in $\mathbf{N}_t$ separately. Using  $\sqrt{\alpha}\|\mathbf{B}_t\|_2, \alpha^{1.5}\|\mathbf{B}_t\|_2^3 \leq c$ and the Cauchy-Schwarz and triangle inequalities,  we have
    \begin{align}
        \beta \left\|\left(\frac{1}{n}\sum_{i=1}^n\mathbf{B}_{t,i}^\top \mathbf{B}_{t,i}\right)\del_t
        \mathbf{w}_t\right\|_2 &\leq \beta \bigg(\|\mathbf{B}_t\|_2^2\!+\!2\alpha\|\mathbf{B}_t\|_2 \bigg\|\tfrac{1}{n}\sum_{i=1}^n\mathbf{w}_{\ast,t,i}\bigg\|_2\|\mathbf{w}_t\|_2\!\nonumber\\
        &\quad \quad \quad + \alpha^2 \big\|\tfrac{1}{n}\sum_{i=1}^n\mathbf{w}_{\ast,t,i}\mathbf{w}_{\ast,t,i}^\top\big\|_2\|\mathbf{w}_t\|_2^2\bigg) \|\del_t\|_2\|\mathbf{w}_t\|_2 \nonumber \\
        &\leq  \beta(\tfrac{c}{\alpha} + 2c\sqrt{\alpha}\|\mathbf{w}_t\|_2 \eta_\ast + \alpha^2 L_\ast^2\|\mathbf{w}_t\|_2^2)\|\del_t\|_2 \nonumber \\
\beta \alpha^2  \left\|\mathbf{{B}}_{t}^\top \mathbf{B}_t \mathbf{w}_t \mathbf{w}_t^\top\mathbf{B}_t^\top\mathbf{{B}}_\ast \frac{1}{n}\sum_{i=1}^n \mathbf{w}_{\ast,t,i}\right\|_2  &\leq c \beta \sqrt{\alpha} \eta_\ast \|\mathbf{w}_t\|_2^2  \nonumber \\
\beta\left\| \frac{1}{n}\sum_{i=1}^n \mathbf{{B}}_{t,i}^\top \deld_t \mathbf{{B}}_\ast \mathbf{w}_{\ast,t,i}\right\|_2&\leq \frac{c\beta}{\sqrt{\alpha}} \|\del_t\|_2 \eta_\ast + \beta \alpha L_{\max}^2  \|\mathbf{w}_t\|_2 \|\del_t\|_2 + \beta\alpha L_{\max}^2  \|\mathbf{w}_t\|_2 \dist_t^2 \label{lebel}\\
   \beta \alpha^3 \left\|\frac{1}{n}\sum_{i=1}^n \mathbf{{G}}_{t,i}^\top \mathbf{G}_{t,i}
    \mathbf{B}_{t}^\top\mathbf{{B}}_\ast\mathbf{w}_{\ast,t,i}\right\|_2 &\leq c\beta \alpha^{2.5} \left(\tfrac{c\|\mathbf{w}_t\|_2^2}{{\alpha}}\eta_\ast + \tfrac{2c\|\mathbf{w}_t\|_2 L_\ast^2}{\sqrt{\alpha}} 
    + L_{\max}^3 \right)  \|\mathbf{w}_t\|_2^2. 
    \end{align}
Note that the  $\dist_t^2$ in \eqref{lebel} is due to the fact that $\|\mathbf{B}_\ast^\top (\mathbf{I}_k - \mathbf{\hat{B}}_t \mathbf{\hat{B}}_t^\top)\mathbf{B}_\ast\|_2 = \|\mathbf{B}_\ast^\top (\mathbf{I}_k - \mathbf{\hat{B}}_t \mathbf{\hat{B}}_t^\top)(\mathbf{I}_k - \mathbf{\hat{B}}_t \mathbf{\hat{B}}_t^\top)\mathbf{B}_\ast\|_2 \leq  \dist_t^2$.
Combining these bounds and applying inductive hypotheses $A_2(t)$  and $A_3(t)$ yields
\begin{align}
   \| \mathbf{N}_t\|_2 &\leq \beta \left\|\left(\frac{1}{n}\sum_{i=1}^n\mathbf{B}_{t,i}^\top \mathbf{B}_{t,i}\right)\del_t\mathbf{w}_t\right\|_2 + \beta \alpha^2  \left\|\mathbf{{B}}_{t}^\top \mathbf{B}_t \mathbf{w}_t \mathbf{w}_t^\top\mathbf{B}_t^\top\mathbf{{B}}_\ast \frac{1}{n}\sum_{i=1}^n \mathbf{w}_{\ast,t,i}\right\|_2 \nonumber \\
   &\quad + \beta\left\| \frac{1}{n}\sum_{i=1}^n \mathbf{{B}}_{t,i}^\top \deld_t \mathbf{{B}}_\ast \mathbf{w}_{\ast,t,i}\right\|_2 + \beta \alpha^3 \left\|\frac{1}{n}\sum_{i=1}^n \mathbf{{G}}_{t,i}^\top \mathbf{G}_{t,i}
    \mathbf{B}_{t}^\top\mathbf{{B}}_\ast\mathbf{w}_{\ast,t,i}\right\|_2 \nonumber \\
    &\leq \beta(\tfrac{c}{\alpha} + 2c\sqrt{\alpha}\|\mathbf{w}_t\|_2 \eta_\ast + \alpha^2 L_\ast^2\|\mathbf{w}_t\|_2^2)\|\del_t\|_2\|\mathbf{w}_t\|_2 +c \beta \sqrt{\alpha} \eta_\ast \|\mathbf{w}_t\|_2^2+\tfrac{c\beta}{\sqrt{\alpha}} \|\del_t\|_2 \eta_\ast  \nonumber \\
    &\quad  +\beta \alpha L_{\max}^2  \|\mathbf{w}_t\|_2 \|\del_t\|_2 + \beta \alpha L_{\max}^2  \|\mathbf{w}_t\|_2 \dist_t^2 + c\beta \alpha^{2.5} \left(\tfrac{c\|\mathbf{w}_t\|_2^2}{{\alpha}}\eta_\ast + \tfrac{2c\|\mathbf{w}_t\|_2 L_\ast^2}{\sqrt{\alpha}}
    + L_{\max}^3 \right)  \|\mathbf{w}_t\|_2^2 \nonumber \\
    &\leq  \tfrac{2c}{100} \beta \alpha^{1.5} \mu_\ast^3 \kappa_{\ast,\max}^{-3}  E_0^3 + \tfrac{2c}{10}\beta \alpha^{1.5} \mu_\ast^2  \eta_\ast E_0 +\tfrac{1}{10} \beta \alpha^{1.5}  \mu_\ast^3 \kappa_{\ast,\max} E_0^2 \dist_t^2 
     \nonumber
\end{align}
    Thus we have
    \begin{align}
        \|\mathbf{w}_{t+1}\|_2 &\leq \left(1- \beta\alpha E_0 \mu_\ast^2 \right)\|\mathbf{w}_{t}\|_2 + \tfrac{2c}{100} \beta \alpha^{1.5} \mu_\ast^3 \kappa_{\ast,\max}^{-3}  E_0^3 + \tfrac{2c}{10}\beta \alpha^{1.5} \mu_\ast^2  \eta_\ast E_0 +\tfrac{1}{10} \beta \alpha^{1.5}  \mu_\ast^3 \kappa_{\ast,\max} E_0^2 \dist_t^2 \nonumber \\
        &\leq \tfrac{1}{10}E_0^2\sqrt{\alpha} \mu_\ast \kappa_{\ast,\max}^{-1} - \tfrac{1}{10}\beta\alpha^{1.5} E_0^3 \mu_\ast^3 \kappa_{\ast,\max}^{-1} + \tfrac{2c}{100} \beta \alpha^{1.5} \mu_\ast^3 \kappa_{\ast,\max}^{-3}  E_0^3 + \tfrac{2c}{10}\beta \alpha^{1.5} \mu_\ast^2  \eta_\ast E_0 \nonumber \\
        &\quad +\tfrac{1}{10} \beta \alpha^{1.5}  \mu_\ast^3 \kappa_{\ast,\max} E_0^2 \dist_0^2 \nonumber \\
        &\leq \tfrac{1}{10}E_0^2\sqrt{\alpha} \mu_\ast \kappa_{\ast,\max}^{-1} 
        \label{RHSw}
    \end{align}
    where \eqref{RHSw} follows by Assumption \ref{assump:fomaml}, namely:
    \begin{align}
         \eta_\ast \leq \frac{2 E_0^2 \mu_{\ast}^4}{ L_{\max}^3} \quad \text{ and } \quad \dist_0 \leq {\frac{ 4 \mu_{\ast}}{5 L_{\max}}}.
    \end{align}
    \end{proof}
    

\begin{lemma}[FO-MAML $A_2(t+1)$] \label{lem:maml_ind2}
Suppose the conditions of Theorem \ref{thm:fomaml_pop_app} are satisfied 
and $A_1(t)$,$A_2(t)$ and $A_4(t)$ hold. Then $A_2(t+1)$ holds almost surely, i.e.
\begin{align}
    \|\del_t\|_2\leq \tfrac{E_0}{10}\alpha^2 \mu_\ast^2.
\end{align}
\end{lemma}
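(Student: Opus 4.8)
The plan is to mirror the structure of the FO-ANIL analysis (Lemma \ref{lem:reg}): write the outer-loop representation gradient $\mathbf{G}_t := \tfrac1\beta(\mathbf{B}_t-\mathbf{B}_{t+1})$ in the canonical form $\mathbf{G}_t = -\deld_t\mathbf{S}_t\mathbf{B}_t + \mathbf{N}_t$ required by Lemma \ref{lem:gen_del}, with $\mathbf{S}_t$ positive semidefinite and $\mathbf{N}_t$ a controllable remainder. First I would record the closed form of the inner-loop adapted residual. Writing $\mathbf{e}_{t,i} := \mathbf{B}_t\mathbf{w}_t - \mathbf{B}_\ast\mathbf{w}_{\ast,t,i}$, a direct expansion of the FO-MAML inner step gives $\mathbf{B}_{t,i}\mathbf{w}_{t,i} - \mathbf{B}_\ast\mathbf{w}_{\ast,t,i} = (\deld_t - \mu_{t,i}\mathbf{I}_d)\mathbf{e}_{t,i}$ with the scalar $\mu_{t,i} = \alpha\mathbf{w}_t^\top\del_t\mathbf{w}_t + \alpha^2\mathbf{w}_t^\top\mathbf{B}_t^\top\mathbf{B}_\ast\mathbf{w}_{\ast,t,i}$, and $\mathbf{w}_{t,i} = \del_t\mathbf{w}_t + \alpha\mathbf{B}_t^\top\mathbf{B}_\ast\mathbf{w}_{\ast,t,i}$. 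Averaging $(\mathbf{B}_{t,i}\mathbf{w}_{t,i}-\mathbf{B}_\ast\mathbf{w}_{\ast,t,i})\mathbf{w}_{t,i}^\top$ over $i$ and isolating the term in which $\mathbf{e}_{t,i}$ contributes its $-\mathbf{B}_\ast\mathbf{w}_{\ast,t,i}$ piece, $\mathbf{w}_{t,i}$ contributes its $\alpha\mathbf{B}_t^\top\mathbf{B}_\ast\mathbf{w}_{\ast,t,i}$ piece, and $\mu_{t,i}$ is dropped, yields exactly $-\alpha\deld_t\mathbf{B}_\ast\boldsymbol{\Psi}_{\ast,t}\mathbf{B}_\ast^\top\mathbf{B}_t$; so I take $\mathbf{S}_t := \alpha\mathbf{B}_\ast\boldsymbol{\Psi}_{\ast,t}\mathbf{B}_\ast^\top \succeq 0$ and place every other term into $\mathbf{N}_t$.

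Next I would invoke Lemma \ref{lem:gen_del} with $\chi=0$ to obtain $\|\del_{t+1}\|_2 \le \|\del_t\|_2\bigl(1 - \beta\alpha\sigma_{\min}(\mathbf{B}_t^\top\mathbf{S}_t\mathbf{B}_t)\bigr) + 2\beta\alpha\|\mathbf{B}_t^\top\mathbf{N}_t\|_2 + \beta^2\alpha\|\mathbf{G}_t\|_2^2$ (the hypotheses on $\beta$ make $\beta\alpha\|\mathbf{B}_t^\top\mathbf{S}_t\mathbf{B}_t\|_2\le 1$), and apply Lemma \ref{lem:sigminE} to get $\sigma_{\min}(\mathbf{B}_t^\top\mathbf{S}_t\mathbf{B}_t)\ge E_0\mu_\ast^2$; this is legitimate since $A_2(s)$ for all $s\le t$ gives $\|\del_s\|_2\le\tfrac{E_0}{10}\alpha^2\mu_\ast^2\le\tfrac1{10}$ (using $\alpha\le\tfrac1{4L_{\max}}$), and $A_4(t)$ gives $\dist_t^2\le\tfrac{10}{9}\dist_0^2$. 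It then remains to bound $\|\mathbf{B}_t^\top\mathbf{N}_t\|_2$ and $\|\mathbf{G}_t\|_2$ against the contraction budget $\tfrac{E_0^2}{10}\beta\alpha^3\mu_\ast^4$ that $\|\del_t\|_2\le\tfrac{E_0}{10}\alpha^2\mu_\ast^2$ provides. The key structural observation is that each summand of $\mathbf{N}_t$ carries either an explicit $\deld_t$ on the left — which via the identity $\mathbf{B}_t^\top\deld_t = \del_t\mathbf{B}_t^\top$ supplies a free factor $\|\del_t\|_2$ — together with at least one factor $\|\mathbf{w}_t\|_2$ coming from the inner-loop adaptation of $\mathbf{B}_t$ or $\mathbf{w}_t$; or else a scalar $\mu_{t,i}$, whose two extra powers of $\alpha$ force $|\mu_{t,i}| = O(\alpha^{3/2}L_{\max}\|\mathbf{w}_t\|_2)$ plus at least one more $\|\mathbf{w}_t\|_2$. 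Feeding in the tiny bounds from $A_1(t)$ on $\|\mathbf{w}_t\|_2$, from $A_2(t)$ on $\|\del_t\|_2$, from $A_4(t)$ and Assumption \ref{assump:fomaml} on $\dist_t$ (which is $O(\mu_\ast/L_{\max})$), and $\|\tfrac1n\sum_i\mathbf{w}_{\ast,t,i}\|_2\le\eta_\ast$, together with $\sigma_{\max}(\mathbf{B}_t)\le\sqrt{1.1/\alpha}$, every piece of $2\beta\alpha\|\mathbf{B}_t^\top\mathbf{N}_t\|_2$ is at most a small multiple of $\beta\alpha^3\mu_\ast^4$ once $\beta\le\tfrac{\alpha E_0^2}{60\kappa_\ast^4}$; similarly $\|\mathbf{G}_t\|_2 = O(\sqrt\alpha L_\ast^2(\dist_t+\|\del_t\|_2))$ makes $\beta^2\alpha\|\mathbf{G}_t\|_2^2$ negligible relative to the contraction. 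Combining these estimates gives $\|\del_{t+1}\|_2 \le \tfrac{E_0}{10}\alpha^2\mu_\ast^2(1-\beta\alpha E_0\mu_\ast^2) + \tfrac{E_0^2}{10}\beta\alpha^3\mu_\ast^4 \le \tfrac{E_0}{10}\alpha^2\mu_\ast^2$, which is $A_2(t+1)$.

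The main obstacle — and the reason this lemma needs the severe hypotheses $A_1,A_2$ and the initialization/average-head conditions — is the group of $\mu_{t,i}$-terms in $\mathbf{N}_t$: unlike the $\deld_t$-terms they do not come with a free factor $\|\del_t\|_2$, and after expanding $\mu_{t,i}$ they generate third-order products $\tfrac1n\sum_i\langle\mathbf{w}_t,\mathbf{B}_t^\top\mathbf{B}_\ast\mathbf{w}_{\ast,t,i}\rangle\,\mathbf{w}_{\ast,t,i}\mathbf{w}_{\ast,t,i}^\top$ of the ground-truth heads — precisely the higher-order terms that MAML's inner-loop adaptation of the representation introduces but ANIL does not. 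The delicate step is to argue that the extra powers of $\alpha$ produced by the nested chain rule, combined with the extremely stringent $A_1(t)$ bound $\|\mathbf{w}_t\|_2 = O(\sqrt\alpha\,\mu_\ast\kappa_{\ast,\max}^{-3})$, suffice to keep these contributions strictly below the $O(\beta\alpha^3\mu_\ast^4)$ contraction budget; this is what dictates the $\kappa_{\ast,\max}^{-3}$ scaling in $A_1$ and, further down the induction, the small-$\dist_0$ and small-$\eta_\ast$ requirements. Everything else is bookkeeping over the roughly half-dozen remainder terms, carried out exactly as in the FO-ANIL case.
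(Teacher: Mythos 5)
Your proposal matches the paper's proof essentially step for step: the same $\mathbf{S}_t=\alpha\mathbf{B}_\ast\boldsymbol{\Psi}_{\ast,t}\mathbf{B}_\ast^\top$, the same invocation of Lemma~\ref{lem:gen_del} with $\chi=0$ and of Lemma~\ref{lem:sigminE}, and the same term-by-term bookkeeping of the remainder $\mathbf{N}_t$ against the contraction budget $\tfrac{1}{10}\beta\alpha^3 E_0^2\mu_\ast^4$ using $A_1(t)$, $A_2(t)$, $A_4(t)$, and Assumption~\ref{assump:fomaml}. The only cosmetic difference is your tidier factored identity $\mathbf{B}_{t,i}\mathbf{w}_{t,i}-\mathbf{B}_\ast\mathbf{w}_{\ast,t,i}=(\deld_t-\mu_{t,i}\mathbf{I}_d)\mathbf{e}_{t,i}$, which is equivalent to the partially expanded form the paper writes out.
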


\begin{proof}
We will employ Lemma \eqref{lem:gen_del}, which requires writing the outer loop gradient for the representation, i.e. $\mathbf{G}_t \coloneqq \frac{1}{\beta}(\mathbf{B}_t - \mathbf{B}_{t+1})$, as $\mathbf{G}_t=-\deld_t \mathbf{S}_t \mathbf{B}_t - \chi \mathbf{B}_t \mathbf{S}_t \del_t + \mathbf{N}_t$, for some positive definite matrix $\mathbf{S}_t$, a matrix $\mathbf{N}_t$ (note that this $\mathbf{N}_t$ is different from the $\mathbf{N}_t$ from that was used in the previous lemma) and a scalar $\chi\in \{0,1\}$. To this end, we expand the outer loop gradient:
    \begin{align}
        {\mathbf{G}}_t &\coloneqq \frac{1}{n}\sum_{i=1}^n \mathbf{B}_{t,i} \mathbf{w}_{t,i}\mathbf{w}_{t,i}^\top - \mathbf{B}_{\ast} \mathbf{w}_{\ast,t,i}\mathbf{w}_{t,i}^\top \nonumber \\ 
        &= \frac{1}{n}\sum_{i=1}^n {(\mathbf{B}_{t,i}(\mathbf{I}_k - \alpha \mathbf{B}_t^\top \mathbf{B}_{t}  ) \mathbf{w}_{t} - (\mathbf{I}_d - \alpha \mathbf{B}_{t,i} \mathbf{B}_t^\top )\mathbf{B}_{\ast} \mathbf{w}_{\ast,t,i}}
        )\mathbf{w}_{t,i}^\top \nonumber \\ 
        &= \frac{1}{n}\sum_{i=1}^n {\mathbf{B}_{t,i}\del_t \mathbf{w}_{t}\mathbf{w}_{t,i}^\top - \deld_t\mathbf{B}_{\ast} \mathbf{w}_{\ast,t,i}}\mathbf{w}_{t,i}^\top + \alpha^2 \mathbf{B}_t \mathbf{w}_t \mathbf{w}_t^\top \mathbf{B}_t^\top \mathbf{B}_{\ast} \mathbf{w}_{\ast,t,i}\mathbf{w}_{t,i}^\top \nonumber \\
        &\quad \quad \quad \quad \quad - \alpha^2 \mathbf{B}_\ast \mathbf{w}_{\ast,t,i} \mathbf{w}_t^\top \mathbf{B}_t^\top \mathbf{B}_{\ast} \mathbf{w}_{\ast,t,i}\mathbf{w}_{t,i}^\top 
         \nonumber \\
        &= -  \deld_t\mathbf{B}_{\ast}\bigg(\alpha\frac{1}{n}\sum_{i=1}^n \mathbf{w}_{\ast,t,i}\mathbf{w}_{\ast,t,i}^\top\bigg)\mathbf{B}_\ast^\top \mathbf{B}_t +
        \frac{1}{n}\sum_{i=1}^n \bigg( {\mathbf{B}_{t,i}\del_t \mathbf{w}_{t}\mathbf{w}_{t,i}^\top - \deld_t\mathbf{B}_{\ast} \mathbf{w}_{\ast,t,i}}\mathbf{w}_{t}^\top \del_t \nonumber \\
        &\quad + \alpha^2 \mathbf{B}_t \mathbf{w}_t \mathbf{w}_t^\top \mathbf{B}_t^\top \mathbf{B}_{\ast} \mathbf{w}_{\ast,t,i}\mathbf{w}_{t,i}^\top  - \alpha^2 \mathbf{B}_\ast \mathbf{w}_{\ast,t,i} \mathbf{w}_t^\top \mathbf{B}_t^\top \mathbf{B}_{\ast} \mathbf{w}_{\ast,t,i}\mathbf{w}_{t,i}^\top \bigg)
         \nonumber \\
         &= -\deld_t \mathbf{S}_t \mathbf{B}_t  + \mathbf{N}_t
    \end{align}
    where $\mathbf{S}_t \coloneqq \mathbf{B}_{\ast}\big(\alpha\frac{1}{n}\sum_{i=1}^n \mathbf{w}_{\ast,t,i}\mathbf{w}_{\ast,t,i}^\top\big)\mathbf{B}_\ast^\top$, \begin{align}\mathbf{N}_t &\coloneqq \frac{1}{n}\sum_{i=1}^n \big( {\mathbf{B}_{t,i}\del_t \mathbf{w}_{t}\mathbf{w}_{t,i}^\top - \deld_t\mathbf{B}_{\ast} \mathbf{w}_{\ast,t,i}}\mathbf{w}_{t}^\top\del_t + \alpha^2 \mathbf{B}_t \mathbf{w}_t \mathbf{w}_t^\top \mathbf{B}_t^\top \mathbf{B}_{\ast} \mathbf{w}_{\ast,t,i}\mathbf{w}_{t,i}^\top\nonumber \\
    &\quad \quad \quad \quad - \alpha^2 \mathbf{B}_\ast \mathbf{w}_{\ast,t,i} \mathbf{w}_t^\top \mathbf{B}_t^\top \mathbf{B}_{\ast} \mathbf{w}_{\ast,t,i}\mathbf{w}_{t,i}^\top \big),\end{align} and $\chi=0$. 
    Since $\sigma_{\min}(\mathbf{B}_t^\top \mathbf{S}_t \mathbf{B}_t) \geq E_0 \mu_\ast^2$ (by Lemma \ref{lem:sigminE}),  Lemma \ref{lem:gen_del} shows
    \begin{align}
        \|\del_{t+1}\|_2 &\leq (1 - \beta \alpha E_0 \mu_\ast^2)\|\del_{t}\|_2 + 2\beta \alpha \|\mathbf{B}_t^\top \mathbf{N}_t\|_2 + \beta^2 \alpha \|\mathbf{G}_t\|_2^2  
    \end{align}
   So, the remainder of the proof is to bound $\|\mathbf{B}_t^\top \mathbf{N}_t\|_2$ and $\|\mathbf{G}_t\|_2^2$. First we deal with $\|\mathbf{B}_t^\top \mathbf{N}_t\|_2$. We have
   \begin{align}
      \| \mathbf{B}_t^\top \mathbf{N}_t \|_2 & \leq \left\|\frac{1}{n}\sum_{i=1}^n  \mathbf{B}_t^\top\mathbf{B}_{t,i}\del_t \mathbf{w}_{t}\mathbf{w}_{t,i}^\top \right\|_2 + \left\|\frac{1}{n}\sum_{i=1}^n \mathbf{B}_t^\top \deld_t\mathbf{B}_{\ast} \mathbf{w}_{\ast,t,i}\mathbf{w}_{t}^\top \del_t \right\|_2  \nonumber \\
      &\quad + \left\|\frac{1}{n}\sum_{i=1}^n \alpha^2 \mathbf{B}_t^\top \mathbf{B}_t \mathbf{w}_t \mathbf{w}_t^\top \mathbf{B}_t^\top \mathbf{B}_{\ast} \mathbf{w}_{\ast,t,i}\mathbf{w}_{t,i}^\top  \right\|_2 + \left\|\frac{1}{n}\sum_{i=1}^n  \alpha^2 \mathbf{B}_t^\top \mathbf{B}_\ast \mathbf{w}_{\ast,t,i} \mathbf{w}_t^\top \mathbf{B}_t^\top \mathbf{B}_{\ast} \mathbf{w}_{\ast,t,i}\mathbf{w}_{t,i}^\top \right\|_2 \label{5fiveterms}
   \end{align}
 We consider each of the four terms in \eqref{5fiveterms} separately. 
 \begin{align}
     \left\|\frac{1}{n}\sum_{i=1}^n  \mathbf{B}_t^\top\mathbf{B}_{t,i}\del_t \mathbf{w}_{t}\mathbf{w}_{t,i}^\top \right\|_2 &\leq \left\| \mathbf{B}_t^\top\mathbf{B}_{t}\lam_t\del_t \mathbf{w}_{t}\mathbf{w}_{t}\del_t^\top \right\|_2 \nonumber \\
     &\quad + \alpha \left\|\frac{1}{n}\sum_{i=1}^n \mathbf{B}_t^\top\mathbf{B}_{t}\lam_t\del_t \mathbf{w}_{t}\mathbf{w}_{\ast,t,i}^\top\mathbf{B}_{\ast}^\top \mathbf{B}_t \right\|_2 \nonumber \\
     &\quad + \alpha\left\|\frac{1}{n}\sum_{i=1}^n  \mathbf{B}_t^\top\mathbf{B}_{\ast}\mathbf{w}_{\ast,t,i}\mathbf{w}_t^\top\del_t \mathbf{w}_{t}\mathbf{w}_{t}\del_t^\top \right\|_2 \nonumber \\
     &\quad + \alpha^2 \left\|\frac{1}{n}\sum_{i=1}^n  \mathbf{B}_t^\top\mathbf{B}_{\ast}\mathbf{w}_{\ast,t,i}\mathbf{w}_t^\top\del_t \mathbf{w}_{t}\mathbf{w}_{\ast,t,i}^\top\mathbf{B}_{\ast}^\top \mathbf{B}_t \right\|_2 \nonumber \\
     &\leq \tfrac{c}{\alpha}\|\del_t\|_2^2 \|\mathbf{w}_t\|_2^2 + \tfrac{c}{\sqrt{\alpha}} \|\del_t\|_2\|\mathbf{w}_t\|_2 \eta_\ast \nonumber \\
     &\quad \quad + c \sqrt{\alpha}\|\del_t\|_2^2\|\mathbf{w}_t\|_2^3 \eta_\ast + c\alpha \|\del_t\|_2\|\mathbf{w}_t\|_2^2 L_\ast^2 \nonumber \\
     \left\|\frac{1}{n}\sum_{i=1}^n \mathbf{B}_t^\top \deld_t\mathbf{B}_{\ast} \mathbf{w}_{\ast,t,i}\mathbf{w}_{t}^\top\del_t \right\|_2  &\leq \tfrac{c}{\sqrt{\alpha}}\|\del_t\|_2^2 \|\mathbf{w}_t\|_2 \eta_\ast  \nonumber \\
     \left\|\frac{1}{n}\sum_{i=1}^n \alpha^2 \mathbf{B}_t^\top \mathbf{B}_t \mathbf{w}_t \mathbf{w}_t^\top \mathbf{B}_t^\top \mathbf{B}_{\ast} \mathbf{w}_{\ast,t,i}\mathbf{w}_{t,i}^\top  \right\|_2 &\leq c \sqrt{\alpha} \|\mathbf{w}_t\|_2^2(\|\del_t\|_2\|\mathbf{w}_t\|_2\eta_\ast +  \sqrt{\alpha}L_\ast^2 ) \nonumber \\
    \left\|\frac{1}{n}\sum_{i=1}^n  \alpha^2 \mathbf{B}_t^\top \mathbf{B}_\ast \mathbf{w}_{\ast,t,i} \mathbf{w}_t^\top \mathbf{B}_t^\top \mathbf{B}_{\ast} \mathbf{w}_{\ast,t,i}\mathbf{w}_{t,i}^\top \right\|_2 &\leq  c\alpha \|\mathbf{w}_t\|_2( \|\del_t\|_2 \|\mathbf{w}_t\|_2 L_\ast^2 + \sqrt{\alpha} L_{\max}^3 )   \nonumber
 \end{align}
   Therefore, after applying inductive hypotheses $A_1(t)$ and $A_2(t)$, we obtain
  \begin{align}
      \|\mathbf{B}_t^\top \mathbf{N}_t\|_2&\leq 
      2 c \tfrac{E_0^2}{10} \alpha^{2} \mu_\ast^4  \nonumber .
  \end{align}
  Next we bound $\|\mathbf{G}_t\|_2^2$. Note that $\|\mathbf{G}_t\|_2 \leq \|\deld_t \mathbf{S}_t\mathbf{B}_t\|_2 + \|\mathbf{N}_t\|_2 $, and 
  \begin{align}
     \|\deld_t \mathbf{S}_t\mathbf{B}_t\|_2 &\leq {c}{\sqrt{\alpha}}L_\ast^2(\|\del_t \|_2   + \dist_t)  \nonumber \\
     &\leq  {c}{\sqrt{\alpha}}L_\ast^2( \alpha^2 \mu_\ast^2 E_0  + \dist_t) . \nonumber
  \end{align}
  Moreover,
  \begin{align}
      \|\mathbf{N}_t\|_2 &\leq \left\|\frac{1}{n}\sum_{i=1}^n  \mathbf{B}_{t,i}\del_t \mathbf{w}_{t}\mathbf{w}_{t,i}^\top \right\|_2 + \left\|\frac{1}{n}\sum_{i=1}^n  \deld_t\mathbf{B}_{\ast} \mathbf{w}_{\ast,t,i}\mathbf{w}_{t}^\top \del_t \right\|_2  \nonumber \\
      &\quad + \left\|\frac{1}{n}\sum_{i=1}^n \alpha^2  \mathbf{B}_t \mathbf{w}_t \mathbf{w}_t^\top \mathbf{B}_t^\top \mathbf{B}_{\ast} \mathbf{w}_{\ast,t,i}\mathbf{w}_{t,i}^\top  \right\|_2 + \left\|\frac{1}{n}\sum_{i=1}^n  \alpha^2 \mathbf{B}_\ast \mathbf{w}_{\ast,t,i} \mathbf{w}_t^\top \mathbf{B}_t^\top \mathbf{B}_{\ast} \mathbf{w}_{\ast,t,i}\mathbf{w}_{t,i}^\top \right\|_2 \nonumber \\
      &\leq \tfrac{3 cE_0}{10} \alpha^{2.5} \mu_\ast^4 \nonumber
  \end{align}
  thus
  \begin{align}
      \|\mathbf{G}_t\|_2^2 &\leq \left({c}{\sqrt{\alpha}}L_\ast^2(\alpha^2 \mu_\ast^2   + \dist_t)  + \tfrac{3cE_0}{10} \alpha^{2.5} \mu_\ast^4   \right)^2\nonumber \\
      &\leq 3c^2 \alpha^5 L_\ast^4 \mu_\ast^4 + 2c^2 \alpha L_\ast^4 \dist_t^2\nonumber \\
      &\leq 3 \alpha L_\ast^4
  \end{align}
 which means that
 \begin{align}
 \|\del_{t+1}\|_2 &\leq (1 - \beta \alpha E_0 \mu_\ast^2)\|\del_{t}\|_2 +   \tfrac{2 cE_0^2}{10} \beta \alpha^3 \mu_\ast^4  +  3 \beta^2 \alpha^2  L_\ast^4 \nonumber \\
&\leq \tfrac{1}{10}\alpha^2 E_0 \mu_\ast^2 - \tfrac{1}{10}\beta \alpha^3 E_0^2 \mu_\ast^4 +   \tfrac{2 cE_0^2}{10} \beta \alpha^3 \mu_\ast^4  +  3 \beta^2 \alpha^2  L_\ast^4 \nonumber \\
        &\leq \tfrac{1}{10}\alpha^2 E_0 \mu_\ast^2  \label{RHS}
    \end{align}
where \eqref{RHS} follows by choice of $\beta \leq \frac{ \alpha E_0^2  }{60\kappa_\ast^4}$.
\end{proof}

\begin{lemma}[FO-MAML $A_3(t+1)$] \label{lem:maml_ind3}
Suppose the conditions of Theorem \ref{thm:fomaml_pop_app} are satisfied and $A_1(t)$, $A_2(t)$, and $A_4(t)$ hold. Then $A_3(t+1)$ holds almost surely, i.e.
\begin{align}
    \|\mathbf{{B}}_{\ast,\perp}^\top \mathbf{B}_{t+1}\|_2\leq (1-0.5 \beta  \alpha E_0 \mu_{\ast}^2)\|\mathbf{{B}}_{\ast,\perp}^\top \mathbf{B}_{t}\|_2. \nonumber
\end{align}
\end{lemma}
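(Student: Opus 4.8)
The plan is to mimic the ANIL step $A_4(t)\implies A_5(t+1)$ (Lemma~\ref{lem:ea_pop_a4}): reduce the claim to a spectral-norm bound on a single $k\times k$ matrix built from the inner-loop adapted heads, the only new ingredient being the extra factor that FO-MAML's inner-loop update of the representation introduces.

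First I would expand the FO-MAML outer-loop representation update and left-multiply by $\mathbf{B}_{\ast,\perp}^\top$. Writing the inner-loop adapted representation as $\mathbf{B}_{t,i}=\mathbf{B}_t\boldsymbol{\Lambda}_t+\alpha\mathbf{B}_\ast\mathbf{w}_{\ast,t,i}\mathbf{w}_t^\top$ with $\boldsymbol{\Lambda}_t\coloneqq\mathbf{I}_k-\alpha\mathbf{w}_t\mathbf{w}_t^\top$, and using $\mathbf{B}_{\ast,\perp}^\top\mathbf{B}_\ast=\mathbf{0}$, every $\mathbf{B}_\ast$-term is annihilated, so that exactly
\[
\mathbf{B}_{\ast,\perp}^\top\mathbf{B}_{t+1}=\mathbf{B}_{\ast,\perp}^\top\mathbf{B}_t\bigl(\mathbf{I}_k-\beta\boldsymbol{\Lambda}_t\boldsymbol{\Psi}_t\bigr),\qquad\boldsymbol{\Psi}_t\coloneqq\tfrac1n\sum_{i=1}^n\mathbf{w}_{t,i}\mathbf{w}_{t,i}^\top .
\]
By submultiplicativity of the spectral norm it then suffices to show $\|\mathbf{I}_k-\beta\boldsymbol{\Lambda}_t\boldsymbol{\Psi}_t\|_2\le 1-0.5\beta\alpha E_0\mu_\ast^2$.

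Next I would invoke the diversity of the adapted heads: Lemma~\ref{lem:sigmin} with $\tau=0.1$ (so that $E_0=1-\tau-\dist_0^2=0.9-\dist_0^2$ as in the theorem) gives $0.9\alpha E_0\mu_\ast^2\mathbf{I}_k\preceq\boldsymbol{\Psi}_t\preceq1.1\alpha L_\ast^2\mathbf{I}_k$. Its hypotheses hold because $A_4(t)$ yields $\dist_t^2\le\tfrac{10}{9}\dist_0^2=\tfrac1{1-\tau}\dist_0^2$, while $A_2(s)$ for $s\in[t]$ (supplied by the running induction) together with $\alpha\le\tfrac1{4L_{\max}}$ forces $\|\del_s\|_2\le\tfrac{E_0}{10}\alpha^2\mu_\ast^2\le\tfrac1{160}\le\tau$; the error terms in Lemma~\ref{lem:sigmin} are made negligible against $\alpha E_0\mu_\ast^2$ using the smallness of $\|\mathbf{w}_t\|_2$ and $\|\del_t\|_2$ from $A_1(t)$ and $A_2(t)$ and the bounds $\alpha L_\ast\le\tfrac14$, $\eta_\ast\le L_\ast$. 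I would then split $\mathbf{I}_k-\beta\boldsymbol{\Lambda}_t\boldsymbol{\Psi}_t=(\mathbf{I}_k-\beta\boldsymbol{\Psi}_t)+\beta\alpha\mathbf{w}_t\mathbf{w}_t^\top\boldsymbol{\Psi}_t$. The first term is symmetric, so Weyl's inequality plus $\beta\le\tfrac{\alpha E_0^2}{60\kappa_\ast^4}\le(1.1\alpha L_\ast^2)^{-1}\le\sigma_{\max}(\boldsymbol{\Psi}_t)^{-1}$ gives $\|\mathbf{I}_k-\beta\boldsymbol{\Psi}_t\|_2\le1-\beta\sigma_{\min}(\boldsymbol{\Psi}_t)\le1-0.9\beta\alpha E_0\mu_\ast^2$; the second term is bounded by $\beta\alpha\|\mathbf{w}_t\|_2^2\,\sigma_{\max}(\boldsymbol{\Psi}_t)$, which by $A_1(t)$, $\alpha L_\ast\le\tfrac14$ and $\kappa_{\ast,\max}\ge1$ is at most $0.4\beta\alpha E_0\mu_\ast^2$. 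Adding the two bounds gives $\|\mathbf{I}_k-\beta\boldsymbol{\Lambda}_t\boldsymbol{\Psi}_t\|_2\le1-0.5\beta\alpha E_0\mu_\ast^2$, and hence $A_3(t+1)$.

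The main obstacle is the asymmetric factor $\boldsymbol{\Lambda}_t$: unlike in (FO-)ANIL, $\mathbf{I}_k-\beta\boldsymbol{\Lambda}_t\boldsymbol{\Psi}_t$ is not ``identity minus PSD,'' so its spectral norm cannot be read off from eigenvalues, and if $\|\mathbf{w}_t\|_2$ were not small the $\mathbf{w}_t$-direction of $\boldsymbol{\Lambda}_t\boldsymbol{\Psi}_t$ could be severely shrunk and the contraction would be lost. The additive split above localizes this issue into the term $\beta\alpha\mathbf{w}_t\mathbf{w}_t^\top\boldsymbol{\Psi}_t$, which is controlled precisely because $A_1(t)$ forces $\|\mathbf{w}_t\|_2$ to be tiny --- this is exactly where FO-MAML's stronger requirements on the initialization and on $\|\tfrac1n\sum_i\mathbf{w}_{\ast,t,i}\|_2$ (which feed into $A_1$) are used. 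The remaining work --- threading the running multi-way induction to apply Lemma~\ref{lem:sigmin} and checking that the step-size choices absorb all lower-order terms --- is routine.
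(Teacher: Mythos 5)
Your plan is correct and essentially identical to the paper's proof: both derive $\mathbf{B}_{\ast,\perp}^\top\mathbf{B}_{t+1}=\mathbf{B}_{\ast,\perp}^\top\mathbf{B}_t(\mathbf{I}_k-\beta\boldsymbol{\Lambda}_t\boldsymbol{\Psi}_t)$, split $\mathbf{I}_k-\beta\boldsymbol{\Lambda}_t\boldsymbol{\Psi}_t = (\mathbf{I}_k-\beta\boldsymbol{\Psi}_t)+\beta\alpha\mathbf{w}_t\mathbf{w}_t^\top\boldsymbol{\Psi}_t$, bound the symmetric part via Weyl and Lemma~\ref{lem:sigmin}, and absorb the $\mathbf{w}_t\mathbf{w}_t^\top\boldsymbol{\Psi}_t$ remainder using the smallness of $\|\mathbf{w}_t\|_2$ from $A_1(t)$. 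The only cosmetic difference is that you first package Lemma~\ref{lem:sigmin} into clean two-sided spectral bounds on $\boldsymbol{\Psi}_t$, whereas the paper substitutes the raw bounds from Lemma~\ref{lem:sigmin} directly into the split; the arithmetic and step-size conditions used to close the bound are the same.
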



\begin{proof}
Recalling the definition of $\mathbf{B}_{t+1}$ from \eqref{154} and noting that $\mathbf{{B}}_{\ast,\perp}^\top \mathbf{{B}}_\ast = \mathbf{0}$, we obtain
    \begin{align}
        \mathbf{{B}}_{\ast,\perp}^\top \mathbf{B}_{t+1} 
        &= \mathbf{{B}}_{\ast,\perp}^\top\mathbf{B}_t\left(\mathbf{I}_k-\beta (\mathbf{I}_k- \alpha \mathbf{w}_t \mathbf{w}_t^\top)\frac{1}{n}\sum_{i=1}^n\mathbf{w}_{t,i}\mathbf{w}_{t,i}^\top \right)
    \end{align}
Next, using the triangle and  Cauchy-Schwarz inequalities, we obtain
    \begin{align}
        \left\|\mathbf{I}_k - \beta\left( \mathbf{I}_k - \alpha \mathbf{w}_t \mathbf{w}_t^\top\right)\frac{1}{n}\sum_{i=1}^n \mathbf{w}_{t,i}\mathbf{w}_{t,i}^\top \right\|_2 
        &\leq   \left\|\mathbf{I}_k - \frac{\beta}{n}\sum_{i=1}^n \mathbf{w}_{t,i}\mathbf{w}_{t,i}^\top\right\| + 
        \beta \alpha \left\|\mathbf{w}_t \mathbf{w}_t^\top\frac{1}{n}\sum_{i=1}^n \mathbf{w}_{t,i}\mathbf{w}_{t,i}^\top \right\|_2 \nonumber \\
        &\leq 1 - \beta \sigma_{\min}\left(\frac{1}{n}\sum_{i=1}^n \mathbf{w}_{t,i}\mathbf{w}_{t,i}^\top \right) + 
        \beta \alpha \|\mathbf{w}_t\|_2^2 \left\|\frac{1}{n}\sum_{i=1}^n \mathbf{w}_{t,i}\mathbf{w}_{t,i}^\top \right\|_2  \nonumber \\
        &\leq 1 - \beta \left( \alpha E_0 \mu_{\ast}^2 - c \eta_\ast\sqrt{\alpha}\|\mathbf{w}_t\|_2 \|\del_t\|_2  \right) \label{diverse} \\
        &\quad + 
        c \beta \alpha \|\mathbf{w}_t\|_2^2 \left( \|\mathbf{w}_t\|_2 \|\del_t\|_2^2 + \eta_\ast\sqrt{\alpha }\|\mathbf{w}_t\|_2 \|\del_t\|_2 +  \alpha L_\ast^2 \right) \nonumber \\
        &\leq 1 -   \beta\alpha E_0 \mu_{\ast}^2 +  2\tfrac{E_0^2}{100} \beta {\alpha}^3  \mu_\ast^3 \eta_\ast \kappa_{\max,\ast}^{-3} +
        c \tfrac{E_0^4}{100} \beta \alpha^3 \mu_\ast^2  L_\ast^2 \kappa_{\ast,\max}^{-6} \nonumber \\
        &\leq 1 - 0.5 \beta\alpha E_0 \mu_{\ast}^2 \label{finall}
    \end{align}
    where \eqref{diverse} follows by the diversity of the inner loop-updated heads (Lemma \ref{lem:sigmin}) and \eqref{finall} follows from $\alpha \leq 1/(4 L_{\max})$.
\end{proof}


    

\subsection{Exact MAML}

The first step in the analysis is to compute the second-order outer loop updates for Exact MAML. To do so, we must compute the loss on task $i$ at iteration $t$ after one step of gradient descent for both the representation and head.
Let $\boldsymbol{\Lambda}_t \coloneqq \mathbf{I}_k - \alpha \mathbf{w}_t\mathbf{w}_t^\top$,  $\boldsymbol{{\Delta}}_t\coloneqq \mathbf{I}_k -\alpha  \mathbf{B}_t^\top\mathbf{B}_t$, and $\deld_t \coloneqq \mathbf{I}_d - \alpha \mathbf{B}_t \mathbf{B}_t^\top$.
Note that 
\begin{align}
    F_{t,i}(\mathbf{B}_{t}, \mathbf{w}_{t}) \coloneqq  \mathcal{L}_{t,i}(\mathbf{B}_{t}-\alpha \nabla_\mathbf{B}\mathcal{L}_{t,i}(\mathbf{B}_{t}, \mathbf{w}_{t}), \mathbf{w}_{t}-\alpha \nabla_\mathbf{w}\mathcal{L}_{t,i}(\mathbf{B}_{t}, \mathbf{w}_{t})) = \tfrac{1}{2}\|\mathbf{v}_{t,i}\|_2^2
\end{align}
where
\begin{align}
    \mathbf{v}_{t,i} &= \mathbf{B}_t\boldsymbol{\Lambda}_t\del_t \mathbf{w}_t  +\alpha \mathbf{B}_t\boldsymbol{\Lambda}_t\mathbf{B}_t^\top \mathbf{B}_\ast \mathbf{w}_{\ast,t,i} +\alpha \mathbf{B}_\ast \mathbf{w}_{\ast,t,i}\mathbf{w}_t^\top \del_t \mathbf{w}_t +\alpha^2\mathbf{B}_\ast \mathbf{w}_{\ast,t,i}\mathbf{w}_{t}^\top \mathbf{B}_t^\top \mathbf{B}_\ast \mathbf{w}_{\ast,t,i} - \mathbf{B}_\ast \mathbf{w}_{\ast,t,i} \nonumber \\
    &= \boldsymbol{\bar{\Delta}}_t (\mathbf{B}_t\mathbf{w}_t -  \mathbf{B}_\ast\mathbf{w}_{\ast,t,i}) -
    \alpha(\mathbf{B}_t\mathbf{w}_t - \mathbf{B}_\ast\mathbf{w}_{\ast,t,i})\mathbf{w}_t^\top \del_t\mathbf{w}_t -\alpha^2 \mathbf{B}_t\mathbf{w}_t\mathbf{w}_t^\top \mathbf{B}_t^\top \mathbf{B}_\ast \mathbf{w}_{\ast,t,i} \nonumber \\ 
    &\quad + \alpha^2 \mathbf{B}_\ast\mathbf{w}_{\ast,t,i} \mathbf{w}_{\ast,t,i}^\top \mathbf{B}_\ast^\top  \mathbf{B}_t\mathbf{w}_t \nonumber \\
    &= (\deld_t - (\alpha \omega_t +\alpha^2 a_{t,i}) \mathbf{I}_d) (\mathbf{B}_t \mathbf{w}_t - \mathbf{B}_\ast \mathbf{w}_{\ast,t,i}) \label{gggrad} 
\end{align}
where $a_{t,i}\coloneqq \mathbf{w}_{\ast,t,i}^\top\mathbf{B}_\ast^\top \mathbf{B}_t \mathbf{w}_t \; \forall t,i$ and $\omega_t \coloneqq \mathbf{w}_t^\top \del_t \mathbf{w}_t \; \forall t$. The outer loop updates for Exact MAML are given by:
\begin{align}
    \mathbf{w}_{t+1} &= \mathbf{w}_t - \frac{\beta}{n}\sum_{i=1}^n \nabla_{\mathbf{w}}F_{t,i}(\mathbf{B}_t, \mathbf{w}_t) \nonumber \\
    \mathbf{B}_{t+1} &= \mathbf{B}_t - \frac{\beta}{n}\sum_{i=1}^n \nabla_{\mathbf{B}}F_{t,i}(\mathbf{B}_t, \mathbf{w}_t) \nonumber
\end{align}
Again, we prove a more general version of Theorem \ref{thm:maml_pop_main} in which we allow for general $L_{\max}$. First we make the following assumption.
\begin{assumption}[Exact MAML Initialization] \label{assump:exactmaml_init}
The distance of the initial representation to the ground-truth representation satisfies:
\begin{equation}\dist_0 \leq \tfrac{1}{17}\kappa_{\ast,\max}^{-1.5}.
\end{equation}
\end{assumption}
\begin{thm}[Exact MAML Infinite Samples] \label{thm:exact_maml_pop}
Let $m_{in}=m_{out}= \infty$ and define $E_0 \coloneqq 0.9 - \dist_0^2$.
Suppose that $\alpha \leq  \tfrac{E_0^{1/4}\kappa_\ast^{3/4}(\nicefrac{L_\ast}{L_{\max}})^{1/4}}{4 L_{\max} T^{1/4} }$
 and $\beta \leq \frac{E_0\alpha}{ 10  \kappa_\ast^4 }$,
     $\alpha\mathbf{B}_0^\top \mathbf{B}_0 = \mathbf{I}_k$, $\mathbf{w}_0=\mathbf{0}$ and
Assumptions \ref{assump:tasks_diverse_main}, 
\ref{assump:exactmaml_init}, and \ref{assump:lmax}  hold.
Then Exact MAML satisfies 
\begin{align}
    \dist_T \leq (1 - 0.5 \beta \alpha E_0 \mu_\ast^2)^{T-1}
\end{align}
\end{thm}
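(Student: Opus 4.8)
The plan is to run the same multi-way induction used for Theorem~\ref{thm:fomaml_pop_app}, carrying five inductive hypotheses of the same type: a uniform head bound $A_1(t)=\{\|\mathbf{w}_t\|_2\le \tfrac{E_0^2}{10}\sqrt{\alpha}\,\mu_\ast\kappa_{\ast,\max}^{-3}\}$, a tight bound $A_2(t)=\{\|\del_t\|_2\le \tfrac{E_0}{10}\alpha^2\mu_\ast^2\}$, a per-step contraction of the non-normalized orthogonal energy $A_3(t)=\{\|\mathbf{B}_{\ast,\perp}^\top\mathbf{B}_t\|_2\le(1-0.5\beta\alpha E_0\mu_\ast^2)\|\mathbf{B}_{\ast,\perp}^\top\mathbf{B}_{t-1}\|_2\}$, and the two distance bounds $A_4(t),A_5(t)$ controlling $\dist_t$ by $\tfrac{\sqrt{10}}{3}\rho^{t-1}\dist_0$ and by $\rho^{t-1}$ with $\rho=1-0.5\beta\alpha E_0\mu_\ast^2$. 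The initialization $\alpha\mathbf{B}_0^\top\mathbf{B}_0=\mathbf{I}_k$, $\mathbf{w}_0=\mathbf{0}$ makes all five hold at $t=0$, and the dependency graph is as in Figure~\ref{fig:flow}: the top-level quantities $\|\mathbf{w}_t\|_2$, $\|\del_t\|_2$, $\|\mathbf{B}_{\ast,\perp}^\top\mathbf{B}_t\|_2$ at time $t$ drive their own updates at $t+1$; the diversity of the inner-loop adapted heads $\tfrac1n\sum_i\mathbf{w}_{t,i}\mathbf{w}_{t,i}^\top$ (eigenvalue bounds via Lemma~\ref{lem:sigmin}) follows from $A_1,A_2$ at $t+1$ together with $A_4(t)$ supplying $\dist_t<1$; and $A_4(t+1),A_5(t+1)$ close the loop by normalizing $\mathbf{B}_{t+1}$ only once and recursing the contracted orthogonal energy back to $t=0$, exactly as in the last display of the proof of Theorem~\ref{thm:fomaml_pop_app}.

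\textbf{Handling the second-order updates.} The genuinely new ingredient is the exact outer-loop gradient of $F_{t,i}=\tfrac12\|\mathbf{v}_{t,i}\|_2^2$, for which I would use the closed form $\mathbf{v}_{t,i}=(\deld_t-(\alpha\omega_t+\alpha^2 a_{t,i})\mathbf{I}_d)(\mathbf{B}_t\mathbf{w}_t-\mathbf{B}_\ast\mathbf{w}_{\ast,t,i})$ from~\eqref{gggrad}, with $\omega_t=\mathbf{w}_t^\top\del_t\mathbf{w}_t$ and $a_{t,i}=\mathbf{w}_{\ast,t,i}^\top\mathbf{B}_\ast^\top\mathbf{B}_t\mathbf{w}_t$. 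Under $A_1(t),A_2(t)$ these scalars are tiny, $|\omega_t|=O(\|\mathbf{w}_t\|_2^2\|\del_t\|_2)$ and $|a_{t,i}|=O(\alpha^{-1/2}L_{\max}\|\mathbf{w}_t\|_2)$, so the $\boldsymbol{\Lambda}_t$-corrections and all the extra third- and fourth-order-in-$\mathbf{w}_{\ast,t,i}$ monomials produced by differentiating $\mathbf{v}_{t,i}$ in $\mathbf{w}$ and $\mathbf{B}$ can be collected into remainder matrices (the higher moments controlled through $L_{\max}=O(\sqrt{k}L_\ast)$ from Assumption~\ref{assump:lmax}/\ref{assump:tasks_main}). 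For the representation update I would write $\mathbf{G}_t=\tfrac1\beta(\mathbf{B}_t-\mathbf{B}_{t+1})=-\deld_t\mathbf{S}_t\mathbf{B}_t-\chi\,\mathbf{S}_t\mathbf{B}_t\del_t+\mathbf{N}_t$ with $\mathbf{S}_t=\alpha\mathbf{B}_\ast\boldsymbol{\Psi}_{\ast,t}\mathbf{B}_\ast^\top$, bound $\|\mathbf{B}_t^\top\mathbf{N}_t\|_2$ and $\|\mathbf{N}_t\|_2$ by combinations of $\|\del_t\|_2,\|\mathbf{w}_t\|_2,\dist_t$, invoke $\sigma_{\min}(\mathbf{B}_t^\top\mathbf{S}_t\mathbf{B}_t)\ge E_0\mu_\ast^2$ (Lemma~\ref{lem:sigminE}) and Lemma~\ref{lem:gen_del} to get $\|\del_{t+1}\|_2\le(1-\beta\alpha E_0\mu_\ast^2)\|\del_t\|_2+2\beta\alpha\|\mathbf{B}_t^\top\mathbf{N}_t\|_2+\beta^2\alpha\|\mathbf{G}_t\|_2^2$, and then use $\beta=O(\alpha E_0\kappa_\ast^{-4})$ to turn this into $\|\del_{t+1}\|_2\le(1-0.5\beta\alpha E_0\mu_\ast^2)\|\del_t\|_2+O(\beta^2\alpha^2 L_\ast^4\dist_t^2)$, which keeps $A_2$ alive. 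The contraction $A_3(t+1)$ then follows as in Lemma~\ref{lem:ea_pop_a4} / Lemma~\ref{lem:maml_ind3}: project $\mathbf{B}_\ast$ out of the analogue of~\eqref{154}, use $\mathbf{I}_k-\beta\,\tfrac1n\sum_i\mathbf{w}_{t,i}\mathbf{w}_{t,i}^\top$ (whose top eigenvalue is $\le 1-0.9\beta\alpha E_0\mu_\ast^2$ by the diversity lemma) as the dominant factor, and absorb the Exact-MAML-specific trailing terms, those carrying an extra $\mathbf{B}_t$, $\mathbf{B}_t^\top\mathbf{B}_t$, $\boldsymbol{\Lambda}_t$ or $(\alpha\omega_t+\alpha^2 a_{t,i})$ factor, into a $\beta\alpha\cdot O(\text{small})$ slack that does not spoil the rate $1-0.5\beta\alpha E_0\mu_\ast^2$.

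\textbf{The main obstacle.} I expect the hard part to be establishing $A_1$ and $A_2$, i.e.\ keeping $\|\mathbf{w}_t\|_2$ and $\|\del_t\|_2$ within their budgets \emph{uniformly over} $t\in[T]$. Because the inner loop also adapts the representation, $\|\del_t\|_2$ is only shown to remain $O(\alpha^2\mu_\ast^2)$ rather than to decay geometrically, so the summation trick of Lemma~\ref{lem:w_anil_fss} is unavailable; and, unlike FO-MAML, Exact MAML has no small-average-ground-truth-head hypothesis to neutralize the residual drift, which instead contains Hessian-generated monomials scaled by $L_{\max}^3$ and $L_{\max}^4$. Expanding the exact head update into the analogue of~\eqref{wt_num}, $\mathbf{w}_{t+1}=(\mathbf{I}_k-\beta\alpha^2\mathbf{B}_t^\top\mathbf{B}_\ast\boldsymbol{\Psi}_{\ast,t}\mathbf{B}_\ast^\top\mathbf{B}_t)\mathbf{w}_t+\mathbf{N}_t'$, and similarly for $\del_{t+1}$, one gets contractive recursions whose driving remainders do not all decay in $t$; summing them with the geometric weights, the cumulative error over the $T$ rounds can only be pushed below $\tfrac{E_0^2}{10}\sqrt{\alpha}\mu_\ast\kappa_{\ast,\max}^{-3}$ and $\tfrac{E_0}{10}\alpha^2\mu_\ast^2$ for all $t\le T$ by taking $\alpha$ small relative to $T$ — hence $\alpha=O(E_0^{1/4}\kappa_\ast^{3/4}(L_\ast/L_{\max})^{1/4}L_{\max}^{-1}T^{-1/4})$ — and by taking $\dist_0=O(\kappa_{\ast,\max}^{-3/2})$ so the $\dist_0$-seeded part of the drift is negligible. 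Pinning down which precise monomial dictates the $T^{-1/4}$ and the $\kappa_\ast,\kappa_{\ast,\max}$ exponents, and verifying the worst of them stays controlled, is the bulk of the technical labor; once $A_1,A_2,A_3$ are in hand the distance bounds $A_4(t+1),A_5(t+1)$ are purely mechanical, identical to the closing argument of Theorem~\ref{thm:fomaml_pop_app}.
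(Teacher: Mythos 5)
Your overall architecture matches the paper's: a multi-way induction seeded by $\alpha\mathbf{B}_0^\top\mathbf{B}_0 = \mathbf{I}_k$, $\mathbf{w}_0=\mathbf{0}$, with the representation gradient put into the form $\mathbf{G}_t=-\deld_t\mathbf{S}_t\mathbf{B}_t-\mathbf{S}_t\mathbf{B}_t\del_t+\mathbf{N}_t$ and Lemma~\ref{lem:gen_del} driving the $\del_{t}$ recursion, and you correctly identify that the constraint $\alpha=O(T^{-1/4})$ comes from needing to suppress a cumulative drift in $\|\mathbf{w}_t\|_2$. However, there is a concrete gap: you have imported the \emph{FO-MAML} inductive hypotheses and recursion forms, and those do not close for Exact MAML.

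Specifically, you assume the head obeys $\mathbf{w}_{t+1}=(\mathbf{I}_k-\beta\alpha^2\mathbf{B}_t^\top\mathbf{B}_\ast\boldsymbol{\Psi}_{\ast,t}\mathbf{B}_\ast^\top\mathbf{B}_t)\mathbf{w}_t+\mathbf{N}_t'$, whose leading coefficient is strictly contractive because $\boldsymbol{\Psi}_{\ast,t}\succeq\mu_\ast^2\mathbf{I}$. That is the FO-MAML recursion~\eqref{wt_num}. For Exact MAML the coefficient of $\mathbf{w}_t$ is
\[
\mathbf{I}_k-\frac{\beta}{n}\sum_{i=1}^n(\mathbf{B}_t\del_t-\alpha\omega_t\mathbf{B}_t-\alpha^2 a_{t,i}\mathbf{B}_t)^\top(\mathbf{B}_t\del_t-\alpha\omega_t\mathbf{B}_t-\alpha^2 a_{t,i}\mathbf{B}_t),
\]
which is the identity minus a Gram matrix. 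This is merely nonexpansive (spectral norm $\le 1$), and when $\del_t$, $\omega_t$, $a_{t,i}$ are small it is essentially the identity — there is no guaranteed contraction. Consequently the proof cannot rely on a shrinking coefficient to soak up drift; it must apply Lemma~\ref{lem:gen_w} with $\xi_{1,s}=0$ and budget the summed drift directly. This is precisely why the paper introduces the extra per-step increment hypothesis $A_1(t)=\{\|\mathbf{w}_t\|_2\le\|\mathbf{w}_{t-1}\|_2+16\beta\alpha^{3.5}L_{\max}^5 t+3\beta\alpha^{1.5}L_{\max}^3\dist_t^2\}$, which you drop.

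Similarly, you claim $\|\del_{t+1}\|_2\le(1-0.5\beta\alpha E_0\mu_\ast^2)\|\del_t\|_2+O(\beta^2\alpha^2L_\ast^4\dist_t^2)$ — the FO-ANIL/FO-MAML form with a $\dist_t^2$-weighted remainder. For Exact MAML, Lemma~\ref{lem:maml_exact_ind3} produces a remainder of the form $30c_2\beta\alpha^3 L_{\max}^2\mu_\ast^2 + 3\beta^2\alpha^2L_\ast^4$: a \emph{non-decaying} constant seeded by the Hessian monomials $\alpha^2\mathbf{B}_\ast\mathbf{w}_{\ast,t,i}\mathbf{v}_{t,i}^\top\mathbf{B}_t$, etc., which carry no $\deld_t$ or $\dist_t$ factor. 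This is why your proposed budget $\|\del_t\|_2\le\tfrac{E_0}{10}\alpha^2\mu_\ast^2$ (again, the FO-MAML one) cannot be maintained: one checks that the drift $\tfrac{3E_0}{2}\beta\alpha^3L_{\max}^2\mu_\ast^2$ exceeds the available shrinkage $\tfrac{E_0^2}{5}\beta\alpha^3\mu_\ast^4$ whenever $L_{\max}\ge\mu_\ast$. The paper therefore relaxes to $\|\del_t\|_2\le\alpha^2L_{\max}^2$, loosening by a factor $\kappa_{\ast,\max}^2$. Likewise for the head budget: your proposed $\tfrac{E_0^2}{10}\sqrt{\alpha}\mu_\ast\kappa_{\ast,\max}^{-3}$ is too tight — the $\dist_0$-seeded drift $\sum_s 3\beta\alpha^{1.5}L_{\max}^3\dist_s^2\approx\tfrac{20}{3}\sqrt{\alpha}L_{\max}^3\dist_0^2/(E_0\mu_\ast^2)$ is of order $\sqrt{\alpha}\mu_\ast$ under Assumption~\ref{assump:exactmaml_init} (namely $\dist_0=O(\kappa_{\ast,\max}^{-1.5})$), not of order $\sqrt{\alpha}\mu_\ast\kappa_{\ast,\max}^{-3}$; closing your budget would require $\dist_0\lesssim\kappa_{\ast,\max}^{-3}$, a stronger initialization than the theorem assumes. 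The paper uses $\|\mathbf{w}_t\|_2\le\tfrac{E_0}{20}\sqrt{\alpha}\mu_\ast$, dropping the $\kappa_{\ast,\max}^{-3}$ factor entirely; indeed, Exact MAML does not need it, because the second-order update makes the head's own coefficient nonexpansive without appealing to head smallness, and the corresponding representation drift is absorbed by the looser $\alpha^2L_{\max}^2$ budget for $\del_t$.
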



\begin{proof}
 The proof follows by showing that the following inductive hypotheses hold for all $t\in[T]$:
\begin{enumerate}
 \item $A_1(t)\coloneqq \{\|\mathbf{w}_{t}\|_2 \leq \|\mathbf{w}_{t-1}\|_2 +16 \beta  \alpha^{3.5} L_{\max}^5 t +  3 \beta \alpha^{1.5}L_{\max}^3 \dist_t^2
 \}$
    \item $A_2(t)\coloneqq \{\|\mathbf{w}_{t}\|_2 \leq \tfrac{E_0}{20} \sqrt{\alpha} \mu_\ast \}$.
    \item $A_3(t)\coloneqq \|\del_{t}\|_2 \leq \alpha^2L_{\max}^2 $
    \item $A_4(t) \coloneqq \{\|\mathbf{B}_{\ast,\perp}^\top \mathbf{B}_t\|_2 \leq (1-0.5 \beta \alpha E_0 \mu_\ast^2)\|\mathbf{B}_{\ast,\perp}^\top \mathbf{B}_{t-1}\|_2 \}$
    \item $A_5(t) \coloneqq \{\dist_t \leq \frac{\sqrt{10}}{3} (1 -0.5 \beta \alpha E_0 \mu_\ast^2 )^{t-1} \dist_0\} $
    \item $A_6(t) \coloneqq \{\dist_t \leq  (1 -0.5 \beta \alpha E_0 \mu_\ast^2 )^{t-1} \} $
\end{enumerate}
 These conditions hold for iteration $t=0$ due to the choice of initialization. Now, assuming they hold for arbitrary $t$, we will show they hold at $t+1$. 
\begin{enumerate}
    \item $A_2(t)\cap A_3(t) \implies A_1({t+1})$.
    This is Lemma \ref{lem:maml_exact_ind1}.
    
    \item $\bigcap_{s=1}^t \{A_1(s)\cap A_5(s) \}  \implies A_2({t+1})$. This is Lemma \ref{lem:maml_exact_ind2}.

    \item $A_2(t)\cap A_3(t) \cap A_5(t) \implies A_3(t+1)$.
    This is Lemma \ref{lem:maml_exact_ind3}.
    
    \item $A_2(t)\cap A_3(t) \cap A_5(t) \implies A_4(t+1)$.
    This is Lemma \ref{lem:maml_exact_ind4}.
    
    \item $ A_3(t+1) \cap \bigcap_{s=1}^{t+1} A_4(s) \implies A_5(t+1) \cap A_6(t+1)$.
    Note that $A_3(t+1) \cap \bigcap_{s=1}^{t+1} A_4(s)$ and $\alpha \leq 1/(4L_{\max})$ implies
\begin{align}
\tfrac{\sqrt{1-0.1}}{\sqrt{\alpha}} \dist_{t+1}  &= \tfrac{\sqrt{1-0.1}}{\sqrt{\alpha}}\|\mathbf{{B}}_{\ast,\perp}^\top \mathbf{\hat{B}}_{t+1}\|_2 \nonumber \\
&\leq \sigma_{\min}(\mathbf{{B}}_{t+1}) \|\mathbf{{B}}_{\ast,\perp}^\top \mathbf{\hat{B}}_{t+1}\|_2\;  \nonumber \\
 &\leq \|\mathbf{{B}}_{\ast,\perp}^\top \mathbf{B}_{t+1}\|_2   \nonumber \\
    &\leq \left(1 - 0.5 \beta \alpha E_0 \mu_\ast^2 \right)^t  \|\mathbf{{B}}_{\ast,\perp}^\top \mathbf{B}_{0}\|_2 
    \nonumber \\
    &\leq \tfrac{1}{\sqrt{\alpha}} \left(1 - 0.5 \beta \alpha E_0 \mu_\ast^2 \right)^t  \|\mathbf{{B}}_{\ast,\perp}^\top \mathbf{\hat{B}}_{0}\|_2 \label{innit} \\
    &= \tfrac{1}{\sqrt{\alpha}}\left(1 - 0.5 \beta \alpha E_0 \mu_\ast^2 \right)^t  \dist_0, \nonumber 
\end{align}
where \eqref{innit} follows due to initialization $\|\mathbf{B}_0\|_2=\frac{1}{\sqrt{\alpha}}$.
This implies 
\begin{align}
 \dist_{t+1} \leq \tfrac{\sqrt{10}}{3}\left(1 - 0.5 \beta \alpha E_0 \mu_\ast^2 \right)^t  \dist_0 \leq \left(1 - 0.5 \beta \alpha E_0 \mu_\ast^2 \right)^t 
\end{align}
since $\alpha \leq 1/(4L_{\max})$ and $\dist_0 \leq \tfrac{3}{\sqrt{10}}$ by Assumption \ref{assump:exactmaml_init}.
\end{enumerate}
\end{proof}



Next, we complete the proof of Theorem \ref{thm:exact_maml_pop} by proving the following lemmas. 

\begin{lemma}[Exact MAML $A_1(t)$]  \label{lem:maml_exact_ind1}
Suppose Assumptions \ref{assump:tasks_diverse_main} and \ref{assump:exactmaml_init} hold, and $A_2(t)$ and $A_3(t)$ hold. Then
\begin{align}
    \|\mathbf{w}_{t+1}\|_2 \leq \|\mathbf{w}_t\|_2 +16\alpha^{3.5}L_{\max}^5 + 3 \alpha^{1.5}L_{\max}^3 \dist_t^2
\end{align}
\end{lemma}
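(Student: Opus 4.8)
The plan is to bound the per-step growth $\|\mathbf{w}_{t+1}\|_2\le\|\mathbf{w}_t\|_2+\tfrac{\beta}{n}\sum_{i=1}^n\|\nabla_{\mathbf{w}}F_{t,i}(\mathbf{B}_t,\mathbf{w}_t)\|_2$ by showing that, under $A_2(t)$ and $A_3(t)$, every outer-loop head gradient is already of size $O(\alpha^{3.5}L_{\max}^5+\alpha^{1.5}L_{\max}^3\dist_t^2)$; unlike the FO-MAML head bound (Lemma~\ref{lem:maml_ind1}), no contractive structure is needed here, because the exact second-order update renders the part of $\nabla_{\mathbf{w}}F_{t,i}$ linear in $\mathbf{w}_t$ negligible as well. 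Writing $F_{t,i}=\tfrac12\|\mathbf{v}_{t,i}\|_2^2$ with $\mathbf{v}_{t,i}=\mathbf{M}_{t,i}\mathbf{r}_{t,i}$, $\mathbf{M}_{t,i}\coloneqq\deld_t-(\alpha\omega_t+\alpha^2 a_{t,i})\mathbf{I}_d$, $\mathbf{r}_{t,i}\coloneqq\mathbf{B}_t\mathbf{w}_t-\mathbf{B}_\ast\mathbf{w}_{\ast,t,i}$ (this is \eqref{gggrad}), and using $\partial_{\mathbf{w}_t}\omega_t=2\del_t\mathbf{w}_t$ and $\partial_{\mathbf{w}_t}a_{t,i}=\mathbf{B}_t^\top\mathbf{B}_\ast\mathbf{w}_{\ast,t,i}$, I would differentiate to obtain
\[
\nabla_{\mathbf{w}}F_{t,i}=\mathbf{B}_t^\top\mathbf{M}_{t,i}^2\mathbf{r}_{t,i}-\big(2\alpha\del_t\mathbf{w}_t+\alpha^2\mathbf{B}_t^\top\mathbf{B}_\ast\mathbf{w}_{\ast,t,i}\big)\big(\mathbf{r}_{t,i}^\top\mathbf{M}_{t,i}\mathbf{r}_{t,i}\big).
\]

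\textbf{The size estimate.} Two algebraic facts drive the bound. First, $\mathbf{B}_t^\top\deld_t=\del_t\mathbf{B}_t^\top$ (from $\alpha\mathbf{B}_t^\top\mathbf{B}_t=\mathbf{I}_k-\del_t$), so $\mathbf{B}_t^\top\mathbf{M}_{t,i}^2=\del_t^2\mathbf{B}_t^\top-2(\alpha\omega_t+\alpha^2 a_{t,i})\del_t\mathbf{B}_t^\top+(\alpha\omega_t+\alpha^2 a_{t,i})^2\mathbf{B}_t^\top$, hence in the first summand of $\nabla_{\mathbf{w}}F_{t,i}$ every contribution carries $\del_t^2$, $\del_t$, or $(\alpha^2 a_{t,i})^2$. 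Second, exactly as in the estimate behind \eqref{lebel}, $\|\mathbf{B}_\ast^\top\deld_t\mathbf{B}_\ast\|_2\le\dist_t^2+\|\del_t\|_2$; this is the sole source of the $\dist_t^2$ term, entering through the $\mathbf{w}_t$-free part $\mathbf{w}_{\ast,t,i}^\top\mathbf{B}_\ast^\top\deld_t\mathbf{B}_\ast\mathbf{w}_{\ast,t,i}$ of $\mathbf{r}_{t,i}^\top\mathbf{M}_{t,i}\mathbf{r}_{t,i}$ when multiplied by $\alpha^2\mathbf{B}_t^\top\mathbf{B}_\ast\mathbf{w}_{\ast,t,i}$ (giving $O(\alpha^{1.5}L_{\max}^3\dist_t^2)$). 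For all remaining terms I would use $A_3(t)$ ($\|\del_t\|_2\le\alpha^2L_{\max}^2$) with $\alpha\le\tfrac{1}{4L_{\max}}$ (so $\sqrt\alpha\|\mathbf{B}_t\|_2\le c$), $A_2(t)$ ($\|\mathbf{w}_t\|_2\le\tfrac{E_0}{20}\sqrt\alpha\mu_\ast$), the resulting smallness of $a_{t,i}$ and $\omega_t$, and Assumption~\ref{assump:lmax} with $\mu_\ast\le L_\ast\le L_{\max}$ to collapse head products into powers of $L_{\max}$: a $\del_t$ factor supplies $\alpha^2$, a $\mathbf{w}_t$ factor supplies $\sqrt\alpha$, and an $\alpha^2 a_{t,i}$ (or $\alpha\omega_t$) factor supplies $\ge\alpha^2$ together with a further $\mathbf{w}_t$, so each such term is $O(\alpha^{3.5}L_{\max}^5)$. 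Summing the finitely many terms, averaging over $i$, and absorbing $c$ into the constants yields $\tfrac{\beta}{n}\sum_i\|\nabla_{\mathbf{w}}F_{t,i}\|_2\le 16\beta\alpha^{3.5}L_{\max}^5+3\beta\alpha^{1.5}L_{\max}^3\dist_t^2$, which (since $\beta\le1$) gives the claim.

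\textbf{Main obstacle.} The difficulty is bookkeeping rather than concept: $\mathbf{v}_{t,i}$ is cubic in $\mathbf{w}_t$, $F_{t,i}$ is of degree six, and $\nabla_{\mathbf{w}}F_{t,i}$ expands into a sizeable collection of terms mixing $\mathbf{B}_t,\mathbf{B}_\ast,\del_t,\deld_t,\mathbf{w}_t$ and up to four copies of the ground-truth heads. The crux --- and the only place the argument could genuinely fail --- is to verify that \emph{every} term either carries $\del_t$ (which $A_3(t)$ makes $O(\alpha^2)$), carries $\mathbf{w}_t$ (which $A_2(t)$ makes $O(\sqrt\alpha)$), carries an $\alpha^2 a_{t,i}$ or $\alpha\omega_t$ factor (extra powers of $\alpha$), or appears as $\deld_t$ sandwiched between copies of $\mathbf{B}_\ast$ (the $O(\dist_t^2)$ piece), so that the $\alpha$-exponent of every $\dist$-free term is at least $3.5$. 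Getting this exponent is exactly what matters downstream: in the proof of $A_2$ the increment is summed over all $T$ iterations, and it is precisely the combination of the $\alpha^{3.5}$ rate with the calibration $\alpha=O(L_{\max}^{-1}T^{-1/4})$ that keeps $\|\mathbf{w}_t\|_2\le\tfrac{E_0}{20}\sqrt\alpha\mu_\ast$ for all $t$.
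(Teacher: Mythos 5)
Your gradient formula is correct and matches the paper's once $\mathbf{v}_{t,i}=\mathbf{M}_{t,i}\mathbf{r}_{t,i}$ is unpacked, and the dominant scalings you identify ($\del_t\sim\alpha^2L_{\max}^2$, $\mathbf{w}_t\sim\sqrt\alpha\mu_\ast$, $\alpha^2a_{t,i}\sim\alpha^2 L_{\max}\mu_\ast$, and $\|\mathbf{B}_\ast^\top\deld_t\mathbf{B}_\ast\|_2\le\dist_t^2+\|\del_t\|_2$ as the source of the $\dist_t^2$ term) are exactly the estimates the paper uses. The one genuine departure is how you treat the piece $\mathbf{B}_t^\top\mathbf{M}_{t,i}^2\mathbf{B}_t\mathbf{w}_t$: the paper groups this as $\tfrac{1}{n}\sum_i(\mathbf{M}_{t,i}\mathbf{B}_t)^\top(\mathbf{M}_{t,i}\mathbf{B}_t)\mathbf{w}_t$, observes that $\mathbf{I}_k-\tfrac{\beta}{n}\sum_i(\mathbf{M}_{t,i}\mathbf{B}_t)^\top(\mathbf{M}_{t,i}\mathbf{B}_t)$ has spectral norm at most $1$ because the subtracted matrix is PSD and $\beta$ is small, and thereby drops this term from the growth bound altogether, whereas you bound the whole per-task gradient norm and absorb this term into the $O(\alpha^{3.5}L_{\max}^5)$ budget. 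Both routes close: the extra term is $\le\|\mathbf{M}_{t,i}\mathbf{B}_t\|_2^2\|\mathbf{w}_t\|_2 = O(c_2\,\alpha^{3.5}L_{\max}^4\mu_\ast)$ with $c_2=E_0/20$, which is far below the slack between the paper's $14c$ constant (with $c=1.1$) and the stated $16$, so your version of the constant bookkeeping would still succeed. A small caveat to your framing: the paper \emph{does} use the PSD (``at most $1$'') structure for Exact MAML, just not the \emph{strict} contraction it exploits in the FO-MAML analogue --- your sketch bypasses even the $\le1$ observation, which is legitimate but is the one place where you trade a free inequality for an extra size estimate. A second small imprecision: $\dist_t^2$ enters through two multipliers of $\mathbf{w}_{\ast,t,i}^\top\mathbf{B}_\ast^\top\deld_t\mathbf{B}_\ast\mathbf{w}_{\ast,t,i}$ in $\mathbf{r}_{t,i}^\top\mathbf{M}_{t,i}\mathbf{r}_{t,i}$, namely $\alpha^2\mathbf{B}_t^\top\mathbf{B}_\ast\mathbf{w}_{\ast,t,i}$ (which you identify, giving $O(\alpha^{1.5}L_{\max}^3\dist_t^2)$) and also $2\alpha\del_t\mathbf{w}_t$ (giving a much smaller $O(c_2\alpha^{3.5}L_{\max}^3\mu_\ast^2\dist_t^2)$), so ``sole source'' should be ``dominant source''. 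Neither point affects correctness.
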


\begin{proof}
Using \eqref{gggrad} and the chain rule (while noting that $a_{t,i}$ is a function of $\mathbf{w}_t$), we find that for all $i\in [n]$, the gradient of $F_{t,i}(\mathbf{B}_t, \mathbf{w}_t)$ with respect to $\mathbf{w}_t$ is:
\begin{align}
   \nabla_{\mathbf{w}} F_{t,i}(\mathbf{B}_t,\mathbf{w}_t)&=
   (\mathbf{B}_t\del_t -\alpha \omega_t\mathbf{B}_t -\alpha^2 a_{t,i}\mathbf{B}_t)^\top(\mathbf{B}_t\del_t-\alpha \omega_t\mathbf{B}_t -\alpha^2 a_{t,i}\mathbf{B}_t) \mathbf{w}_t \nonumber \\
   &\quad -
    \mathbf{B}_t^\top (\deld_t - \alpha\omega_t\mathbf{I}_d -\alpha^2 a_{t,i}\mathbf{I}_d)^2\mathbf{B}_\ast \mathbf{w}_{\ast,t,i} \nonumber\\
    &\quad - 2\alpha\del_t \mathbf{w}_t( \mathbf{B}_t \mathbf{w}_t  - \mathbf{B}_\ast \mathbf{w}_{\ast,t,i} )^\top \mathbf{v}_{t,i} - \alpha^2 \mathbf{B}_t^\top  \mathbf{B}_\ast \mathbf{w}_{\ast,t,i} (\mathbf{B}_t\mathbf{w}_t - \mathbf{B}_{\ast}\mathbf{w}_{\ast,t,i})^\top \mathbf{v}_{t,i} \nonumber \\
    &=  (\mathbf{B}_t\del_t -\alpha \omega_t\mathbf{B}_t -\alpha^2 a_{t,i}\mathbf{B}_t)^\top(\mathbf{B}_t\del_t -\alpha \omega_t\mathbf{B}_t-\alpha^2 a_{t,i}\mathbf{B}_t) \mathbf{w}_t  + \mathbf{N}_{t,i}
\end{align}
where $\mathbf{N}_{t,i}\!\coloneqq 
-\mathbf{B}_t^\top (\deld_t - \alpha\omega_t\mathbf{I}_d -\alpha^2 a_{t,i}\mathbf{I}_d)^2\mathbf{B}_\ast \mathbf{w}_{\ast,t,i} -  2\alpha\del_t \mathbf{w}_t( \mathbf{B}_t \mathbf{w}_t  - \mathbf{B}_\ast \mathbf{w}_{\ast,t,i} )^\top \mathbf{v}_{t,i} \\
- \alpha^2 \mathbf{B}_t^\top  \mathbf{B}_\ast \mathbf{w}_{\ast,t,i} (\mathbf{B}_t\mathbf{w}_t - \mathbf{B}_{\ast}\mathbf{w}_{\ast,t,i})^\top \mathbf{v}_{t,i}$. 
    Thus, 
    \begin{align}
        \mathbf{w}_{t+1} &= \mathbf{w}_t -  \frac{\beta}{n}\sum_{i=1}^n \nabla_{\mathbf{w}}F_{t,i}(\mathbf{B}_t, \mathbf{w}_t)\nonumber \\
        &= \left(\mathbf{I}_k - \frac{\beta}{n}\sum_{i=1}^n (\mathbf{B}_t\del_t -\alpha \omega_t\mathbf{B}_t -\alpha^2 a_{t,i}\mathbf{B}_t)^\top(\mathbf{B}_t\del_t -\alpha \omega_t\mathbf{B}_t -\alpha^2 a_{t,i}\mathbf{B}_t)\right)\mathbf{w}_t - \frac{\beta}{n}\sum_{i=1}^n \mathbf{N}_{t,i}
    \end{align}
    which implies that 
    \begin{align}
        \|\mathbf{w}_{t+1}\|_2 &\leq \left\|\mathbf{I}_k - \frac{\beta}{n}\sum_{i=1}^n (\mathbf{B}_t\del_t -\alpha \omega_t\mathbf{B}_t -\alpha^2 a_{t,i}\mathbf{B}_t)^\top(\mathbf{B}_t\del_t -\alpha \omega_t\mathbf{B}_t -\alpha^2 a_{t,i}\mathbf{B}_t) \right\|_2 \|\mathbf{w}_t\|_2 \nonumber \\
        &\quad + \beta \left\|\frac{1}{n}\sum_{i=1}^n \mathbf{N}_{t,i} \right\|_2 \nonumber \\
        &\leq \|\mathbf{w}_t\|_2 + \beta \left\|\frac{1}{n}\sum_{i=1}^n \mathbf{N}_{t,i} \right\|_2 \label{229} 
    \end{align}
    where \eqref{229} follows since $\frac{1}{n}\sum_{i=1}^n (\mathbf{B}_t\del_t -\alpha \omega_t\mathbf{B}_t -\alpha^2 a_{t,i}\mathbf{B}_t)^\top(\mathbf{B}_t\del_t -\alpha \omega_t\mathbf{B}_t -\alpha^2 a_{t,i}\mathbf{B}_t)$ is PSD and $\beta$ is sufficiently small. 
    Next, we upper bound $\left\|\frac{1}{n}\sum_{i=1}^n \mathbf{N}_{t,i} \right\|_2$, and
to do so,
we first use the triangle inequality to write
\begin{align}
    \left\|\frac{1}{n}\sum_{i=1}^n \mathbf{N}_{t,i}\right\|_2 &\leq \left\|\frac{1}{n}\sum_{i=1}^n \mathbf{B}_t^\top (\deld_t -\alpha \omega_t\mathbf{I}_d  -\alpha^2 a_{t,i}\mathbf{I}_d)^2\mathbf{B}_\ast \mathbf{w}_{\ast,t,i} \right\|_2 \nonumber \\
    &\quad +  2\left\|\frac{1}{n}\sum_{i=1}^n {\alpha  \del_t \mathbf{w}_t( \mathbf{B}_t \mathbf{w}_t  - \mathbf{B}_\ast \mathbf{w}_{\ast,t,i} )^\top \mathbf{v}_{t,i}} \right\|_2 \nonumber \\
    &\quad +     \left\|\frac{1}{n}\sum_{i=1}^n \alpha^2 \mathbf{B}_t^\top  \mathbf{B}_\ast \mathbf{w}_{\ast,t,i} (\mathbf{B}_t\mathbf{w}_t - \mathbf{B}_{\ast}\mathbf{w}_{\ast,t,i})^\top \mathbf{v}_{t,i} \right\|_2 
    \label{ggggrad}
\end{align}
We will bound each of the three terms above shortly. First, note that $A_4(t)$ and $\alpha \leq 1/(4L_{\max})$ implies
\begin{align}
    \frac{\nicefrac{15}{16}}{{\alpha}}&\leq \frac{1- \alpha^2 L_{\max}^2 }{{\alpha}} \leq\sigma_{\min}^2(\mathbf{B}_t) \leq \sigma_{\max}^2(\mathbf{B}_t) \leq \frac{1+ \alpha^2 L_{\max}^2}{{\alpha}} \leq \frac{\nicefrac{17}{16}}{{\alpha}}.
\end{align}
In turn, this implies that $\|\mathbf{B}_t\|_2^3\leq 1.1$. Let $c \coloneqq 1.1$.
 Also, note that $\mathbf{B}_t^\top \deld_t = \del_t \mathbf{B}_t^\top$ and
\begin{align}
    \|\mathbf{B}_{\ast}^\top \deld_t \mathbf{B}_{\ast}\|_2 &= \|\mathbf{B}_{\ast}^\top (\mathbf{I}_d- \alpha \mathbf{B}_t\mathbf{B}_t^\top) \mathbf{B}_{\ast}\|_2  \nonumber \\
    &\leq \|\mathbf{B}_{\ast}^\top (\mathbf{I}_d-  \mathbf{\hat{B}}_t\mathbf{\hat{B}}_t^\top) \mathbf{B}_{\ast}\|_2 + \|\mathbf{B}_{\ast}^\top ( \mathbf{\hat{B}}_t\mathbf{\hat{B}}_t^\top - \alpha \mathbf{{B}}_t\mathbf{{B}}_t^\top) \mathbf{B}_{\ast}\|_2 \nonumber \\
    &\leq  \|\mathbf{B}_{\ast}^\top (\mathbf{I}_d-  \mathbf{\hat{B}}_t\mathbf{\hat{B}}_t^\top)\|\|(\mathbf{I}_d-  \mathbf{\hat{B}}_t\mathbf{\hat{B}}_t^\top) \mathbf{B}_{\ast}\|_2 + \| \mathbf{\hat{B}}_t\mathbf{\hat{B}}_t^\top - \alpha \mathbf{{B}}_t\mathbf{{B}}_t^\top\|_2 \nonumber \\
    &= \dist_t^2 + \| \mathbf{\hat{B}}_t(\mathbf{I}_k - \alpha \mathbf{{R}}_t\mathbf{{R}}_t^\top)\mathbf{\hat{B}}_t^\top\|_2 \nonumber \\
    &\leq \dist_t^2 + \|\del_t\|_2
\end{align}
where $\mathbf{{R}}_t\in\mathbb{R}^{k\times k}$ is the upper triangular matrix resulting from the QR decomposition of $\mathbf{B}_t$. We will use these observations along with inductive hypotheses $A_2(t)$ and $A_3(t)$ and the  Cauchy-Schwarz and triangle inequalities to separately bound each of the terms from \eqref{ggggrad} as follows. Let $c_2 \coloneqq E_0/20$. Then we have:
\begin{align}
   \bigg\|\frac{1}{n}\sum_{i=1}^n \;& \mathbf{B}_t^\top (\deld_t -(\alpha\omega_t+\alpha^2 a_{t,i})\mathbf{I}_d)^2\mathbf{B}_\ast \mathbf{w}_{\ast,t,i} \bigg\|_2 \nonumber \\
   &\leq \left\|\frac{1}{n}\sum_{i=1}^n  \del_t^2 \mathbf{B}_t^\top \mathbf{B}_\ast \mathbf{w}_{\ast,t,i} \right\|_2  + 2\left\|\frac{1}{n}\sum_{i=1}^n \alpha \omega_{t}  \del_t \mathbf{B}_t^\top \mathbf{B}_\ast \mathbf{w}_{\ast,t,i} \right\|_2 \nonumber \\
    &\quad + 2\left\|\frac{1}{n}\sum_{i=1}^n \alpha^2 a_{t,i}  \del_t \mathbf{B}_t^\top \mathbf{B}_\ast \mathbf{w}_{\ast,t,i} \right\|_2 + 2\left\|\frac{1}{n}\sum_{i=1}^n \alpha^3 \omega_t a_{t,i} \mathbf{B}_t^\top \mathbf{B}_\ast \mathbf{w}_{\ast,t,i} \right\|_2  \nonumber\\
    &\quad + \left\|\frac{1}{n}\sum_{i=1}^n \alpha^2 \omega_{t}^2 \mathbf{B}_t^\top\mathbf{B}_\ast \mathbf{w}_{\ast,t,i} \right\|_2 + \left\|\frac{1}{n}\sum_{i=1}^n \alpha^4 a_{t,i}^2 \mathbf{B}_t^\top\mathbf{B}_\ast \mathbf{w}_{\ast,t,i} \right\|_2 \nonumber \\
    &\leq c \alpha^{3.5}L_{\max}^4 \eta_\ast + 2c  \alpha^{4.5} L_{\max}^4 \eta_\ast\|\mathbf{w}_t\|_2^2 + 2 c \alpha^{3} L_{\max}^2 L_{\ast}^2 \|\mathbf{w}_t\|_2 
\nonumber \\
    &\quad + 2 c  \alpha^{4} L_{\max}^2 L_\ast^2 \|\mathbf{w}_t\|_2^3 + c \alpha^{3.5} L_{\max}^4\eta_\ast \|\mathbf{w}_t\|_2^4 + c \alpha^{2.5} L_{\max}^3 \|\mathbf{w}_t\|_2^2 \nonumber \\
    &\leq  c \alpha^{3.5}L_{\max}^4 \eta_\ast + 2c c_2^2 \alpha^{5.5} L_{\max}^4 \eta_\ast \mu_\ast^2 \kappa_{\ast,\max}^{-2} + 2 cc_2 \alpha^{3.5} L_{\max}^2 L_{\ast}^2 \mu_\ast \kappa_{\ast,\max}^{-1}
\nonumber \\
    &\quad + 2 c c_2^3 \alpha^{5.5} L_{\max}^2 L_\ast^2 \mu_\ast^3 \kappa_{\ast,\max}^{-3} + c c_2^4 \alpha^{5.5} L_{\max}^4\eta_\ast \mu_\ast^4 \kappa_{\ast,\max}^{-4} + c c_2^2 \alpha^{3.5} L_{\max}^3 \mu_\ast^2 \kappa_{\ast,\max}^{-2}  \nonumber \\
    &\leq 4c \alpha^{3.5}L_{\max}^4 (\eta_\ast+\mu_\ast)
\end{align}
    \begin{align}
    \bigg\|\frac{1}{n}\sum_{i=1}^n& \alpha \del_t \mathbf{w}_{t} (\mathbf{B}_t\mathbf{w}_t - \mathbf{B}_{\ast}\mathbf{w}_{\ast,t,i})^\top \mathbf{v}_{t,i} \bigg\|_2  \nonumber \\
    \quad &\leq \left\|\frac{1}{n}\sum_{i=1}^n \alpha \del_t\mathbf{w}_t \mathbf{w}_t^\top \mathbf{B}_t^\top (\deld_t -\alpha\omega_t\mathbf{I}_d -\alpha^2 a_{t,i}\mathbf{I}_d) (\mathbf{B}_t \mathbf{w}_t - \mathbf{B}_\ast \mathbf{w}_{\ast,t,i}) \right\|_2 \nonumber \\
  \quad  &\quad+ \left\|\frac{1}{n}\sum_{i=1}^n \alpha \del_t \mathbf{w}_{t}  \mathbf{w}_{\ast,t,i}^\top \mathbf{B}_{\ast}^\top (\deld_t -\alpha\omega_t\mathbf{I}_d -\alpha^2 a_{t,i}\mathbf{I}_d) (\mathbf{B}_t \mathbf{w}_t - \mathbf{B}_\ast \mathbf{w}_{\ast,t,i}) \right\|_2 \nonumber \\
  \quad  &\leq c \alpha^4 L_{\max}^4 \|\mathbf{w}_t\|_2^3 + 2c \alpha^{4.5} L_{\max}^4\eta_\ast \|\mathbf{w}_t\|_2^2 + c \alpha^{5}L_{\max}^4 \|\mathbf{w}_t\|_2^5 +  2c \alpha^{5.5}L_{\max}^4\eta_\ast \|\mathbf{w}_t\|_2^4 \nonumber \\
   \quad &\quad + c \alpha^{3.5} L_{\max}^2 \eta_\ast \|\mathbf{w}_t\|_2^4 + 2c\alpha^{4}L_{\max}^2 L_{\ast}^2 \|\mathbf{w}_t\|_2^3 +  c \alpha^{5}  L_{\max}^6 \|\mathbf{w}_t\|_2^3  \nonumber \\
 \quad   &\quad + c \alpha^{4.5} L_{\max}^5 \|\mathbf{w}_t\|_2^2 + c \alpha^{5} L_{\max}^6 \|\mathbf{w}_t\|_2 + c \alpha^3  L_{\max}^4 \dist_t^2 \|\mathbf{w}_t\|_2 \nonumber \\
 \quad   &\leq  c c_2^3 \alpha^{5.5} L_{\max}^4 \mu_\ast^3 \kappa_{\ast,\max}^{-3} + 2cc_2^2 \alpha^{5.5} L_{\max}^4\eta_\ast \mu_\ast^2\kappa_{\ast,\max}^{-2} + c c_2^5 \alpha^{7.5}L_{\max}^4 \mu_\ast^5\kappa_{\ast,\max}^{-5}  \nonumber \\
 \quad   &\quad +  2c c_2^4 \alpha^{7.5}L_{\max}^4\eta_\ast \mu_\ast^4 \kappa_{\ast,\max}^{-4} + c c_2^4 \alpha^{5.5} L_{\max}^2 \eta_\ast \mu_\ast^4 \kappa_{\ast,\max}^{-4} + 2c c_2^3 \alpha^{5.5}L_{\max}^2 L_{\ast}^2 \mu_\ast^3 \kappa_{\ast,\max}^{-3}   \nonumber \\
  \quad  &\quad +  c c_2^3 \alpha^{7.5}  L_{\max}^6 \mu_\ast^3 \kappa_{\ast,\max}^{-3} + c c_2^2 \alpha^{5.5} L_{\max}^5 \mu_\ast^2 \kappa_{\ast,\max}^{-2} + c c_2 \alpha^{5.5} L_{\max}^6 \mu_\ast \kappa_{\ast,\max}^{-1} \nonumber \\
  \quad  &\quad + cc_2 \alpha^{3.5}  L_{\max}^4 \mu_\ast \kappa_{\ast,\max}^{-1} \dist_t^2  \nonumber \\
  \quad  &\leq 9c c_2 \alpha^{5.5} L_{\max}^5 \mu_\ast^2 + cc_2 \alpha^{3.5}  L_{\max}^3 \mu_\ast^2 \dist_t^2
\end{align}

\begin{align}
   \bigg\|\frac{1}{n}\sum_{i=1}^n &\alpha^2 \mathbf{B}_t^\top  \mathbf{B}_\ast \mathbf{w}_{\ast,t,i} (\mathbf{B}_t\mathbf{w}_t - \mathbf{B}_{\ast}\mathbf{w}_{\ast,t,i})^\top \mathbf{v}_{t,i} \bigg\|_2 \nonumber \\
   &\leq \left\|\frac{1}{n}\sum_{i=1}^n \alpha^2 \mathbf{B}_t^\top  \mathbf{B}_\ast \mathbf{w}_{\ast,t,i} \mathbf{w}_t^\top \mathbf{B}_t^\top (\deld_t  -\alpha \omega_{t}\mathbf{I}_d -\alpha^2 a_{t,i}\mathbf{I}_d) (\mathbf{B}_t \mathbf{w}_t - \mathbf{B}_\ast \mathbf{w}_{\ast,t,i}) \right\|_2 \nonumber \\
   &\quad+ \bigg\|\frac{1}{n}\sum_{i=1}^n \alpha^2 \mathbf{B}_t^\top  \mathbf{B}_\ast \mathbf{w}_{\ast,t,i}  \mathbf{w}_{\ast,t,i}^\top \mathbf{B}_{\ast}^\top (\deld_t -\alpha \omega_{t}\mathbf{I}_d -\alpha^2 a_{t,i}\mathbf{I}_d) (\mathbf{B}_t \mathbf{w}_t - \mathbf{B}_\ast \mathbf{w}_{\ast,t,i}) \bigg\|_2 \nonumber \\
   &\leq c\alpha^{2.5}L_{\max}^2 \eta_\ast \|\mathbf{w}_t\|_2^2 
   + c\alpha^{3} L_{\max}^2 L_\ast^2 \|\mathbf{w}_t\|_2 
   + c\alpha^{3.5} L_{\max}^2\eta_\ast \|\mathbf{w}_t\|_2^4 
   \nonumber \\
   &\quad  + c\alpha^{2.5}L_{\max}^3 \|\mathbf{w}_t\|_2^2 
   + c\alpha^4 L_{\max}^2 L_\ast^2 \|\mathbf{w}_t\|_2^3 
   + c \alpha^2 L_\ast^2\|\mathbf{w}_{t}\|_2^3 
   \nonumber \\
   &\quad + c\alpha^3 L_{\max}^2L_{\ast}^2 \|\mathbf{w}_t\|_2 + c\alpha^{2.5} L_{\max}^3 \|\mathbf{w}_t\|_2^2  \nonumber \\
   &\quad + c \alpha^{3.5}L_{\max}^2 L_{\max}^3 + c \alpha^{1.5} L_{\max}^3 \dist_t^2 + c \alpha^3 L_{\max}^4 \|\mathbf{w}_t\|_2 \nonumber \\
   &\quad + c\alpha^{4.5} L_{\max}^5  \|\mathbf{w}_t\|_2^2 + c \alpha^{4}L_{\max}^2L_{\ast}^2\eta_\ast \|\mathbf{w}_t\|_2^3 \nonumber \\
   &\leq c c_2^2 \alpha^{3.5}L_{\max}^2 \eta_\ast \mu_\ast^2 \kappa_{\ast,\max}^{-2}  + c c_2 \alpha^{3.5} L_{\max}^2 L_\ast^2 \mu_\ast \kappa_{\ast,\max}^{-1} + c c_2^4 \alpha^{5.5} L_{\max}^2\eta_\ast \mu_\ast^4 \kappa_{\ast,\max}^{-4} \nonumber \\
   &\quad +  cc_2^2\alpha^{3.5}L_{\max}^3 \mu_\ast^2\kappa_{\ast,\max}^{-2}  + cc_2^3\alpha^{5.5} L_{\max}^2 L_\ast^2 \mu_\ast^3 \kappa_{\ast,\max}^{-3}  + c c_2^3 \alpha^{3.5} L_\ast^2\mu_\ast^3 \kappa_{\ast,\max}^{-3} \nonumber \\
   &\quad + cc_2 \alpha^{3.5} L_{\max}^2L_{\ast}^2 \mu_\ast \kappa_{\ast,\max}^{-1} + cc_2^2\alpha^{3.5} L_{\max}^3 \mu_\ast^2\kappa_{\ast,\max}^{-2}   \nonumber \\
   &\quad + c \alpha^{3.5}L_{\max}^2 L_{\max}^3 + c \alpha^{1.5} L_{\max}^3 \dist_t^2 + cc_2 \alpha^{3.5} L_{\max}^4\mu_\ast\kappa_{\ast,\max}^{-1}   \nonumber \\
   &\quad + c c_2^2\alpha^{5.5} L_{\max}^5  \mu_\ast^2\kappa_{\ast,\max}^{-2} + c \alpha^{5.5}L_{\max}^2L_{\ast}^2\eta_\ast \mu_\ast^3 \kappa_{\ast,\max}^{-3}  \nonumber \\
   &\leq 9 c \alpha^{3.5} L_{\max}^5 + c \alpha^{1.5} L_{\max}^3 \dist_t^2
\end{align}
where we have used $c_2 = E_0/20$ and $\alpha\leq 1/(4L_{\max})$ to reduce terms.
Next, combining the above bounds with  \eqref{ggggrad} yields:
\begin{align}
    \left\|\frac{1}{n}\sum_{i=1}^n \mathbf{N}_{t,i}\right\|_2 &\leq  14c\alpha^{3.5}L_{\max}^5 + 2 c \alpha^{1.5}L_{\max}^3 \dist_t^2
\end{align}
Applying $c= 1.1$ yields the result.
\end{proof}

\begin{lemma}[Exact MAML $A_2(t+1)$]  \label{lem:maml_exact_ind2}
Suppose the conditions of Theorem \ref{thm:exact_maml_pop} are satisfied, and $\bigcap_{s=1}^{t}\{A_1(s)\cap A_5(s)\}$.
Then 
\begin{align}
    \|\mathbf{w}_{t+1}\|_2 \leq \tfrac{E_0}{20} \sqrt{\alpha} \mu_\ast .
\end{align}
\end{lemma}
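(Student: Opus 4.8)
The plan is to bound $\|\mathbf{w}_{t+1}\|_2$ by telescoping the one–step head–growth estimate supplied by $A_1(\cdot)$, and then to control the two resulting sums using, respectively, the step–size conditions and the geometric decay of $\dist_s$ from $A_5(\cdot)$. Concretely, chaining the bound in $A_1(s)$ over $s=1,\dots,t+1$ (the instance $A_1(t+1)$ being available from Lemma \ref{lem:maml_exact_ind1}, whose hypotheses $A_2(t),A_3(t)$ are among the inductive hypotheses at step $t$) and using $\mathbf{w}_0=\mathbf{0}$, I would obtain
\begin{align*}
\|\mathbf{w}_{t+1}\|_2 \;\le\; 16\beta\alpha^{3.5}L_{\max}^5\,(t+1) \;+\; 3\beta\alpha^{1.5}L_{\max}^3 \sum_{s=0}^{t}\dist_s^2 .
\end{align*}

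For the first term, use $t+1\le T$, then $\beta\le \tfrac{E_0\alpha}{10\kappa_\ast^4}$, and finally the step–size condition $\alpha \le \tfrac{E_0^{1/4}\kappa_\ast^{3/4}(L_\ast/L_{\max})^{1/4}}{4L_{\max}T^{1/4}}$, which is exactly $\alpha^4 T \le \tfrac{E_0\kappa_\ast^3 L_\ast}{256 L_{\max}^5}$. This gives
\begin{align*}
16\beta\alpha^{3.5}L_{\max}^5 T \;\le\; \tfrac{8E_0}{5\kappa_\ast^4}\sqrt{\alpha}\,(\alpha^4 T)\,L_{\max}^5 \;\le\; \tfrac{8E_0}{5\kappa_\ast^4}\sqrt{\alpha}\cdot \tfrac{E_0\kappa_\ast^3 L_\ast}{256} \;=\; \tfrac{E_0^2 L_\ast\sqrt{\alpha}}{160\,\kappa_\ast} \;=\; \tfrac{E_0^2}{160}\sqrt{\alpha}\,\mu_\ast ,
\end{align*}
which is much smaller than the target $\tfrac{E_0}{20}\sqrt{\alpha}\,\mu_\ast$. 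The only genuinely non-routine point is this last absorption: unlike ANIL, Exact MAML picks up a per-step bias in the head update that does \emph{not} decay with $\|\del_t\|_2$, so the telescoped sum accumulates $\Theta(T)$ copies of a constant increment, and the horizon-dependent step size $\alpha=\Theta(T^{-1/4})$ is precisely what is needed to keep it controlled (it contributes $\alpha^{4.5}T = O(\sqrt{\alpha})$). This is the main obstacle of the lemma.

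For the second term, apply $A_5(s)$, i.e. $\dist_s \le \tfrac{\sqrt{10}}{3}(1-0.5\beta\alpha E_0\mu_\ast^2)^{s-1}\dist_0$, so $\dist_s^2 \le \tfrac{10}{9}\dist_0^2\,(1-0.5\beta\alpha E_0\mu_\ast^2)^{2(s-1)}$, and sum the geometric series using $1-(1-x)^2\ge x$ for $x\in[0,1]$ with $x=0.5\beta\alpha E_0\mu_\ast^2$:
\begin{align*}
\sum_{s=0}^{t}\dist_s^2 \;\le\; \tfrac{10}{9}\dist_0^2 \sum_{j\ge 0}(1-0.5\beta\alpha E_0\mu_\ast^2)^{2j} \;\le\; \frac{20\,\dist_0^2}{9\beta\alpha E_0\mu_\ast^2}.
\end{align*}
Hence $3\beta\alpha^{1.5}L_{\max}^3\sum_{s=0}^{t}\dist_s^2 \le \tfrac{20}{3}\,\tfrac{\sqrt{\alpha}\,L_{\max}^3\dist_0^2}{E_0\mu_\ast^2} = \tfrac{20}{3}\,\tfrac{\sqrt{\alpha}\,\mu_\ast\,\kappa_{\ast,\max}^3\dist_0^2}{E_0}$, and Assumption \ref{assump:exactmaml_init} ($\dist_0\le\tfrac{1}{17}\kappa_{\ast,\max}^{-1.5}$, so $\kappa_{\ast,\max}^3\dist_0^2\le\tfrac{1}{289}$, hence $E_0\ge 0.89$) bounds this by $\tfrac{20}{867E_0}\sqrt{\alpha}\,\mu_\ast$. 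Adding the two contributions and using $E_0\le 0.9$ to check $\tfrac{E_0^2}{160}+\tfrac{20}{867E_0}\le\tfrac{E_0}{20}$ yields $\|\mathbf{w}_{t+1}\|_2 \le \tfrac{E_0}{20}\sqrt{\alpha}\,\mu_\ast$, which is $A_2(t+1)$. In short, the $\dist_s^2$ part of the telescoped increment is tamed by the linear convergence encoded in $A_5$, the constant part by the $\Theta(T^{-1/4})$ step size, and the remaining work is simply verifying that the numerical constants close against the incoherence-type initialization bound of Assumption \ref{assump:exactmaml_init}.
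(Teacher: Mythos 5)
Your proposal is correct and follows essentially the same route as the paper's own proof: telescope the one-step increment from $A_1$ (formally via Lemma~\ref{lem:gen_w} in the paper, directly by unrolling in your write-up), bound the $\dist_s^2$-sum by the geometric series coming from $A_5$, and absorb the constant-increment sum using the $T^{-1/4}$ scaling of $\alpha$ together with the $\kappa_\ast^{-4}$ scaling of $\beta$. The one substantive observation you add — that the per-step bound applied to $\|\mathbf{w}_{t+1}\|_2$ is really $A_1(t+1)$, which is not literally in the stated hypotheses of the lemma but is supplied within the induction by Lemma~\ref{lem:maml_exact_ind1} from $A_2(t)\cap A_3(t)$ — correctly identifies (and patches) the same indexing slack that the paper handles implicitly; this does not change the argument. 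Your constants differ slightly from the paper's (which splits the target $\tfrac{E_0}{20}\sqrt{\alpha}\mu_\ast$ into two equal halves), but your numeric check $\tfrac{E_0^2}{160}+\tfrac{20}{867E_0}\le\tfrac{E_0}{20}$ for $E_0\in[0.89,0.9]$ closes correctly, so the difference is cosmetic.
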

\begin{proof}
 By inductive hypotheses $A_1(1),\dots,A_1(t)$, we have $\|\mathbf{w}_{s+1}\|_2 \leq \|\mathbf{w}_s\|_2 +16 \beta  \alpha^{3.5} L_{\max}^5 + 3 \beta \alpha^{1.5}L_{\max}^3 \dist_s^2$ for all $s\in[t]$, so we can invoke Lemma \ref{lem:gen_w} with $\xi_{1,s} = 0 \; \forall s\in[t]$ and $\xi_{2,s}=16 \beta  \alpha^{3.5} L_{\max}^5  + 3 \beta \alpha^{1.5}L_{\max}^3 \dist_s^2$. 
 This results in
 \begin{align}
     \|\mathbf{w}_{t+1}\|_2 &\leq \sum_{s=1}^t 16 \beta  \alpha^{3.5} L_{\max}^5 + 3 \beta \alpha^{1.5}L_{\max}^3 \dist_s^2
 \end{align}
 Next, we invoke inductive hypotheses $A_6(1),\dots,A_6(t)$ to obtain $\dist_{s}^2 \leq  \tfrac{10}{9}(1-0.5 \beta \alpha E_0 \mu_\ast^2)^{2(s-1)} \dist_0^2$ for all $s\in [t]$. Therefore
 \begin{align}
     \|\mathbf{w}_{t+1}\|_2 &\leq \sum_{s=1}^t 16 \beta  \alpha^{3.5} L_{\max}^5  + 3 \beta \alpha^{1.5}L_{\max}^3 \gamma^{2(s-1)} \nonumber \\
     &\leq 16 \beta  \alpha^{3.5} L_{\max}^5 t +  3 \beta \alpha^{1.5}L_{\max}^3 \sum_{s=1}^t \tfrac{10}{9}(1-0.5 \beta \alpha E_0 \mu_\ast^2)^{2(s-1)} \dist_0^2 \nonumber \\
     &\leq 16 \beta  \alpha^{3.5} L_{\max}^5 t +  \tfrac{10}{3} \beta \alpha^{1.5}L_{\max}^3 \tfrac{\dist_0^2}{0.5 \beta \alpha E_0 \mu_{\ast}^2} \label{geo11} \\
     &\leq 16 \beta  \alpha^{3.5} L_{\max}^5 t +  \tfrac{20}{3}  \sqrt{\alpha}L_{\max}^3 \tfrac{\dist_0^2}{E_0 \mu_{\ast}^2} \nonumber \\
     &\leq \tfrac{E_0}{20} \sqrt{\alpha} \mu_\ast  \label{usebeta}
 \end{align}
 where \eqref{geo11} is due to the sum of a geometric series and \eqref{usebeta} follows since  
 $\beta \leq \tfrac{\alpha}{10 \kappa_\ast^4} \leq \frac{{E_0} \mu_\ast}{ 640 \alpha^3 L_{\max}^5 T}$ (as $\alpha$ is sufficiently small) and the initial representation satisfies
 \begin{align}
     \tfrac{20}{3}  \sqrt{\alpha}L_{\max}^3 \frac{\dist_0^2}{E_0 \mu_{\ast}^2} &\leq \tfrac{E_0}{20} \sqrt{\alpha}\mu_\ast/2 \nonumber \\
     \iff 0  &\leq \mu_\ast^3  - (\tfrac{800}{3} L_{\max}^3 + 2 \mu_{\ast}^3){\dist_0^2} + \mu_\ast^3\dist_0^4 \nonumber
     \end{align}
     which is implied by
     \begin{align}
         \dist_0 \leq \tfrac{1}{17} \kappa_{\ast,\max}^{-1.5}
     \end{align}
\end{proof}

\begin{lemma}[Exact MAML $A_3(t+1)$]  \label{lem:maml_exact_ind3}
Suppose the conditions of Theorem \ref{thm:exact_maml_pop} are satisfied and $A_2(t)$, $A_3(t)$ and $A_5(t)$ hold. Then $A_3(t+1)$ holds, namely
\begin{align}
    \|\del_{t+1}\|_2 &\leq \alpha^2 L_{\max}^2
\end{align}
\end{lemma}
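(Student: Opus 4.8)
The plan is to put the Exact MAML outer-loop representation gradient $\mathbf{G}_t \coloneqq \tfrac1n\sum_{i=1}^n\nabla_{\mathbf{B}}F_{t,i}(\mathbf{B}_t,\mathbf{w}_t) = \tfrac1\beta(\mathbf{B}_t-\mathbf{B}_{t+1})$ into the form $\mathbf{G}_t = -\deld_t\mathbf{S}_t\mathbf{B}_t - \chi\,\mathbf{S}_t\mathbf{B}_t\del_t + \mathbf{N}_t$ required by Lemma~\ref{lem:gen_del}, with $\mathbf{S}_t$ the signal matrix and $\mathbf{N}_t$ a small remainder, and then read off the contraction of $\|\del_t\|_2$. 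This mirrors the FO-MAML computation in Lemma~\ref{lem:maml_ind2}, the only difference being that differentiating $F_{t,i}=\tfrac12\|\mathbf{v}_{t,i}\|_2^2$ through the fully adapted parameters (rather than freezing $\mathbf{B}_{t,i},\mathbf{w}_{t,i}$) forces the chain rule and produces the extra symmetric term, so Lemma~\ref{lem:gen_del} is invoked with $\chi=1$ (a weaker $\chi=0$ would already suffice).

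First I would differentiate $\mathbf{v}_{t,i} = \big(\deld_t - (\alpha\omega_t+\alpha^2 a_{t,i})\mathbf{I}_d\big)\mathbf{u}_{t,i}$ from \eqref{gggrad}, where $\mathbf{u}_{t,i}\coloneqq\mathbf{B}_t\mathbf{w}_t-\mathbf{B}_\ast\mathbf{w}_{\ast,t,i}$, keeping track of the three sources of $\mathbf{B}_t$-dependence: through $\deld_t$, through $\mathbf{B}_t\mathbf{w}_t$, and through the scalars $\omega_t=\mathbf{w}_t^\top\del_t\mathbf{w}_t$ and $a_{t,i}=\mathbf{w}_{\ast,t,i}^\top\mathbf{B}_\ast^\top\mathbf{B}_t\mathbf{w}_t$. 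Differentiating $\deld_t$ against the $-\mathbf{B}_\ast\mathbf{w}_{\ast,t,i}$ part of $\mathbf{u}_{t,i}$ and the $-\deld_t\mathbf{B}_\ast\mathbf{w}_{\ast,t,i}$ part of $\mathbf{v}_{t,i}$, and averaging over $i$, yields exactly $-\deld_t\mathbf{S}_t\mathbf{B}_t-\mathbf{S}_t\mathbf{B}_t\del_t$ with $\mathbf{S}_t \coloneqq \alpha\mathbf{B}_\ast\big(\tfrac1n\sum_i\mathbf{w}_{\ast,t,i}\mathbf{w}_{\ast,t,i}^\top\big)\mathbf{B}_\ast^\top$, using $\deld_t\mathbf{B}_t=\mathbf{B}_t\del_t$. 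Everything else — the $\omega_t,a_{t,i}$ pieces of $\mathbf{v}_{t,i}$ and of the multiplier, the $\mathbf{B}_t\mathbf{w}_t$ piece of $\mathbf{u}_{t,i}$, the terms from differentiating $\omega_t$ and $a_{t,i}$, and the term from $d\mathbf{u}_{t,i}$ — is collected into $\mathbf{N}_t$, and by construction each summand carries an explicit factor of $\|\del_t\|_2$ or $\|\mathbf{w}_t\|_2$. The hypotheses of Lemma~\ref{lem:sigminE} hold with $\tau=0.1$ (from $A_3(s)$ for $s\le t$, giving $\|\del_s\|_2\le\alpha^2 L_{\max}^2\le\tfrac1{16}<0.1$ since $\alpha L_{\max}\le\tfrac14$, and from $A_5(t)$, giving $\dist_t^2\le\tfrac{10}{9}\dist_0^2=\tfrac1{1-\tau}\dist_0^2$), so $\sigma_{\min}(\mathbf{B}_t^\top\mathbf{S}_t\mathbf{B}_t)\ge E_0\mu_\ast^2$, and Lemma~\ref{lem:gen_del} gives
\[
\|\del_{t+1}\|_2 \le (1 - 2\beta\alpha E_0\mu_\ast^2)\|\del_t\|_2 + 2\beta\alpha\|\mathbf{B}_t^\top\mathbf{N}_t\|_2 + \beta^2\alpha\|\mathbf{G}_t\|_2^2 .
\]

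It then remains to bound $\|\mathbf{B}_t^\top\mathbf{N}_t\|_2$ and $\|\mathbf{G}_t\|_2$. Here I would use, exactly as in the proof of Lemma~\ref{lem:maml_exact_ind1}: the operator-norm facts $\sqrt\alpha\,\|\mathbf{B}_t\|_2\le c$ and $\alpha^{3/2}\|\mathbf{B}_t\|_2^3\le c$ (from $A_3(t)$ and $\alpha L_{\max}\le\tfrac14$), the identities $\mathbf{B}_t^\top\deld_t=\del_t\mathbf{B}_t^\top$ and $\deld_t\mathbf{B}_t=\mathbf{B}_t\del_t$, and the key bound $\|\mathbf{B}_\ast^\top\deld_t\mathbf{B}_\ast\|_2\le\dist_t^2+\|\del_t\|_2$ (so $\dist_t$ only ever enters squared). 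Feeding in $A_2(t)$ ($\|\mathbf{w}_t\|_2\le\tfrac{E_0}{20}\sqrt\alpha\mu_\ast$), $A_3(t)$ ($\|\del_t\|_2\le\alpha^2 L_{\max}^2$), and $A_5(t)$ together with Assumption~\ref{assump:exactmaml_init} ($\dist_t\le\tfrac{\sqrt{10}}{3}\dist_0\le\tfrac{\sqrt{10}}{51}\kappa_{\ast,\max}^{-3/2}$), and reducing higher-order terms via $\alpha L_{\max}\le\tfrac14$, one obtains bounds of the form $\|\mathbf{B}_t^\top\mathbf{N}_t\|_2\le \alpha^2 E_0\mu_\ast^2 L_{\max}^2\cdot o(1)$ and $\beta\|\mathbf{G}_t\|_2^2\le \alpha^2 E_0\mu_\ast^2 L_{\max}^2\cdot o(1)$, where the $o(1)$ factors are forced small by $\beta\le\frac{E_0\alpha}{10\kappa_\ast^4}$ and the $\dist_0$-bound. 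Hence the two error terms are each at most $\beta\alpha^3 E_0\mu_\ast^2 L_{\max}^2$, and since $\|\del_t\|_2\le\alpha^2 L_{\max}^2$,
\[
\|\del_{t+1}\|_2 \le (1-2\beta\alpha E_0\mu_\ast^2)\,\alpha^2 L_{\max}^2 + 2\beta\alpha^3 E_0\mu_\ast^2 L_{\max}^2 = \alpha^2 L_{\max}^2 ,
\]
which is $A_3(t+1)$.

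The main obstacle is the term-by-term bookkeeping for $\mathbf{N}_t$ and $\mathbf{G}_t$: the Exact MAML representation gradient has many more summands than its FO counterpart because of the chain rule through $\deld_t$, $\omega_t$ and $a_{t,i}$, and one must check that (i) every non-signal summand genuinely carries a factor of $\|\del_t\|_2$ or $\|\mathbf{w}_t\|_2$ — so $A_2(t),A_3(t)$ render it negligible — and (ii) every appearance of a $\mathbf{B}_{\ast,\perp}$-direction is squared, so the $\dist_t$-dependence is only through $\dist_t^2$, which the stringent initialization $\dist_0\le\tfrac1{17}\kappa_{\ast,\max}^{-3/2}$ makes harmless. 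A secondary point is that, unlike in Lemma~\ref{lem:maml_exact_ind2}, the additive error here does not accumulate over $t$ because the $\del_t$-recursion is strictly contractive; the $T$-dependent part of the step-size restriction therefore plays only a convenient smallness role in this lemma rather than being essential.
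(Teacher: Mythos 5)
Your proposal is correct and follows the paper's own proof essentially line-for-line: the same decomposition $\mathbf{G}_t = -\deld_t\mathbf{S}_t\mathbf{B}_t - \mathbf{S}_t\mathbf{B}_t\del_t + \mathbf{N}_t$ with $\mathbf{S}_t = \alpha\mathbf{B}_\ast(\tfrac1n\sum_i\mathbf{w}_{\ast,t,i}\mathbf{w}_{\ast,t,i}^\top)\mathbf{B}_\ast^\top$ obtained by isolating the piece of $\nabla_\mathbf{B}F_{t,i}$ that pairs the $\deld_t$-derivative against the $-\deld_t\mathbf{B}_\ast\mathbf{w}_{\ast,t,i}$ part of $\mathbf{v}_{t,i}$, the same invocation of Lemma~\ref{lem:gen_del} with $\chi=1$ and of Lemma~\ref{lem:sigminE} to certify $\sigma_{\min}(\mathbf{B}_t^\top\mathbf{S}_t\mathbf{B}_t)\ge E_0\mu_\ast^2$, the same use of $\mathbf{B}_t^\top\deld_t=\del_t\mathbf{B}_t^\top$ and $\|\mathbf{B}_\ast^\top\deld_t\mathbf{B}_\ast\|_2\le\dist_t^2+\|\del_t\|_2$ to make $\dist_t$ enter only squared, and the same final closing of the recursion from $(1-2\beta\alpha E_0\mu_\ast^2)\alpha^2L_{\max}^2 + (\text{errors})\le\alpha^2L_{\max}^2$. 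Your observation that the $T$-dependent part of the $\alpha$-restriction is not needed in this particular lemma is also accurate.

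One parenthetical in your writeup is wrong, however: it is not the case that ``a weaker $\chi=0$ would already suffice.'' If you fold the term $-\mathbf{S}_t\mathbf{B}_t\del_t$ into $\mathbf{N}_t$ rather than the signal, then $\|\mathbf{B}_t^\top\mathbf{N}_t\|_2$ acquires an extra contribution $\|\mathbf{B}_t^\top\mathbf{S}_t\mathbf{B}_t\del_t\|_2 = \Theta(L_\ast^2\|\del_t\|_2)$, and the error term $2\beta\alpha\cdot\Theta(L_\ast^2)\|\del_t\|_2$ strictly dominates the contraction gain $\beta\alpha E_0\mu_\ast^2\|\del_t\|_2$ since $E_0\mu_\ast^2<L_\ast^2$, so the recursion would not close. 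Concretely, the paper's bound $\|\mathbf{B}_t^\top\mathbf{N}_t\|_2\le 15c_2\alpha^2 L_{\max}^2\mu_\ast^2$ with $c_2 = E_0/20$ already consumes a $1.5\beta\alpha E_0\mu_\ast^2\|\del_t\|_2$ fraction of the budget, leaving only $0.5\beta\alpha E_0\mu_\ast^2\|\del_t\|_2$ to absorb the $\beta^2\alpha\|\mathbf{G}_t\|_2^2$ term; the factor of $2$ from $\chi=1$ is therefore essential and not slack. This is a side remark, not a step in your proof, so it does not affect the validity of the argument.
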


\begin{proof} According to Lemma \ref{lem:gen_del}, we can control $\del_{t+1}$ by controlling $\mathbf{G}_t$, recalling that 
$\mathbf{G}_t = \tfrac{1}{\beta}(\mathbf{B}_t-\mathbf{B}_{t+1})\in \mathbb{R}^{d\times k}$ is the outer loop gradient with respect to the representation at time $t$. Before studying $\mathbf{G}_t$, we must compute the outer loop gradient with respect to the representation for task $i$. Again we use the fact that $F_{t,i}(\mathbf{B}_t,\mathbf{w}_t)= \frac{1}{2}\|\mathbf{v}_{t,i}\|_2^2$ and apply the chain rule to obtain:
    \begin{align}
    \nabla_\mathbf{B}F_{t,i}(\mathbf{B}_{t}, \mathbf{w}_{t}) &= \mathbf{v}_{t,i}\mathbf{w}_t^\top \del_t \lam_t- \alpha \mathbf{B}_t (\mathbf{w}_t \mathbf{v}_{t,i}^\top\mathbf{B}_t \lam_t + \lam_t \mathbf{B}_t^\top \mathbf{v}_{t,i}\mathbf{w}_{t}^\top) \nonumber \\
    &\quad + \alpha(\mathbf{v}_{t,i}\mathbf{w}_{\ast,t,i}^\top\mathbf{B}_{\ast}^\top + \mathbf{B}_{\ast}\mathbf{w}_{\ast,t,i}\mathbf{v}_{t,i}^\top)\mathbf{B}_t\lam_t  \nonumber\\
    &\quad - 2\alpha^2 ( \mathbf{w}_{\ast,t,i}^\top\mathbf{B}_{\ast}^\top\mathbf{v}_{t,i} )\mathbf{B}_t\mathbf{w}_t\mathbf{w}_t^\top+ \alpha^2 \mathbf{B}_\ast \mathbf{w}_{\ast,t,i}\mathbf{v}_{t,i}^\top  \mathbf{B}_\ast\mathbf{w}_{\ast,t,i}\mathbf{w}_t^\top \label{grahd}
\end{align}
Note that $\mathbf{G}_t=  \frac{1}{n}\sum_{i=1}^n \nabla_\mathbf{B}F_{t,i}(\mathbf{B}_{t}, \mathbf{w}_{t})$. 
We aim to write $\mathbf{G}_t$ as $\mathbf{G}_t = -\deld_t\mathbf{S}_t\mathbf{B}_t  - \mathbf{S}_t \mathbf{B}_t\del_t + \mathbf{N}_{t}
$ for some positive definite $\mathbf{S}$ so that we can apply Lemma \ref{lem:gen_del}. 
It turns out that of the five terms in \eqref{grahd}, the only one with `sub'-terms that contribute to $\mathbf{S}_t$ is the third term. To see this, note that
\begin{align}
 \alpha(\mathbf{v}_{t,i}\mathbf{w}_{\ast,t,i}^\top\mathbf{B}_{\ast}^\top + \mathbf{B}_{\ast}\mathbf{w}_{\ast,t,i}\mathbf{v}_{t,i}^\top)\mathbf{B}_t\lam_t 
&= -{\alpha}(\deld_t \mathbf{B}_\ast\mathbf{w}_{\ast,t,i} \mathbf{w}_{\ast,t,i}^\top\mathbf{B}_{\ast}^\top \mathbf{B}_t +  \mathbf{B}_{\ast}\mathbf{w}_{\ast,t,i}\mathbf{w}_{\ast,t,i}^\top\mathbf{B}_\ast^\top \mathbf{B}_t \del_t)
\nonumber \\
&\quad + {\alpha}((\mathbf{v}_{t,i}+\deld_t \mathbf{B}_\ast\mathbf{w}_{\ast,t,i}) \mathbf{w}_{\ast,t,i}^\top\mathbf{B}_{\ast}^\top \nonumber \\
&\quad + \mathbf{B}_{\ast}\mathbf{w}_{\ast,t,i}(\mathbf{v}_{t,i}+\mathbf{w}_{\ast,t,i}^\top\mathbf{B}_\ast^\top  \deld_t))\mathbf{B}_t \lam_t
\nonumber \\
&\quad +{{\alpha^2}\deld_t \mathbf{B}_\ast\mathbf{w}_{\ast,t,i} \mathbf{w}_{\ast,t,i}^\top\mathbf{B}_{\ast}^\top \mathbf{B}_t +  \mathbf{B}_{\ast}\mathbf{w}_{\ast,t,i}\mathbf{w}_{\ast,t,i}^\top\mathbf{B}_\ast^\top \mathbf{B}_t \del_t) \mathbf{w}_t\mathbf{w}_t^\top} \nonumber \\
&= {-(\deld_t \mathbf{S}_t \mathbf{B}_t +  \mathbf{S}_{t}\mathbf{B}_t \del_t)}
\nonumber \\
&\quad + {\alpha}((\mathbf{v}_{t,i}+\deld_t \mathbf{B}_\ast\mathbf{w}_{\ast,t,i}) \mathbf{w}_{\ast,t,i}^\top\mathbf{B}_{\ast}^\top\nonumber \\
&\quad + \mathbf{B}_{\ast}\mathbf{w}_{\ast,t,i}(\mathbf{v}_{t,i}+\mathbf{w}_{\ast,t,i}^\top\mathbf{B}_\ast^\top  \deld_t))\mathbf{B}_t \lam_t
\nonumber \\
&\quad +{\alpha^2}\deld_t \mathbf{B}_\ast\mathbf{w}_{\ast,t,i} \mathbf{w}_{\ast,t,i}^\top\mathbf{B}_{\ast}^\top \mathbf{B}_t +  \mathbf{B}_{\ast}\mathbf{w}_{\ast,t,i}\mathbf{w}_{\ast,t,i}^\top\mathbf{B}_\ast^\top \mathbf{B}_t \del_t) \mathbf{w}_t\mathbf{w}_t^\top. \nonumber
\end{align}
where $\mathbf{S}_t \coloneqq \alpha \frac{1}{n}\sum_{i=1}^n \mathbf{B}_\ast\mathbf{w}_{\ast,t,i} \mathbf{w}_{\ast,t,i}^\top\mathbf{B}_{\ast}^\top$.
Thus we can write 
\begin{align}
    \mathbf{G}_t &= -\deld_t\mathbf{S}_t\mathbf{B}_t  - \mathbf{S}_t \mathbf{B}_t\del_t + \mathbf{N}_{t} \label{258}
\end{align}
where 
\begin{align}
    \mathbf{N}_t &\coloneqq \frac{1}{n}\sum_{i=1}^n  \mathbf{v}_{t,i}\mathbf{w}_t^\top \del_t \lam_t - \frac{1}{n}\sum_{i=1}^n \alpha^2 \mathbf{B}_t\mathbf{w}_t\mathbf{w}_{\ast,t,i}^\top\mathbf{B}_{\ast}^\top\mathbf{v}_{t,i}\mathbf{w}_t^\top + \frac{1}{n}\sum_{i=1}^n \alpha^2 \mathbf{B}_\ast \mathbf{w}_{\ast,t,i}\mathbf{w}_{\ast,t,i}^\top  \mathbf{B}_\ast^\top \mathbf{v}_{t,i} \mathbf{w}_t^\top \nonumber \\
    &\quad - \frac{1}{n}\sum_{i=1}^n \alpha \mathbf{B}_t (\mathbf{w}_t \mathbf{v}_{t,i}^\top\mathbf{B}_t \lam_t + \lam_t \mathbf{B}_t^\top \mathbf{v}_{t,i}\mathbf{w}_{t}^\top) + \frac{1}{n}\sum_{i=1}^n  \alpha(\mathbf{v}_{t,i} + \deld_t \mathbf{B}_\ast\mathbf{w}_{\ast,t,i})\mathbf{w}_{\ast,t,i}^\top\mathbf{B}_{\ast}^\top\mathbf{B}_t\lam_t \nonumber \\
    &\quad + \frac{1}{n}\sum_{i=1}^n \alpha\deld_t \mathbf{B}_\ast\mathbf{w}_{\ast,t,i}\mathbf{w}_{\ast,t,i}^\top\mathbf{B}_{\ast}^\top  \mathbf{B}_t\alpha\mathbf{w}_t\mathbf{w}_t^\top \label{NN}
\end{align}
Note that $\|\del_t\|_2 \leq \frac{1}{10}$ due to $A_3(t)$ and  choice of $\alpha \leq \frac{1}{4L_{\max}}$. Therefore, Lemma \ref{lem:sigminE} implies that $\sigma_{\min}(\mathbf{B}_t^\top \mathbf{S}_t \mathbf{B}_t)\geq E_0 \mu_{\ast}^2$ where $E_0 = 1 - \tfrac{1}{10} - \dist_0^2$. 
Thus by Lemma \ref{lem:gen_del} with $\chi=1$, we have
\begin{align}
    \|\del_{t+1}\|_2 &\leq \|\del_t\|_2(1 -  2\beta \alpha E_0\mu_\ast^2) + 2\beta \alpha \|\mathbf{B}_t^\top \mathbf{N}_t\|_2 + \beta^2 \alpha \| \mathbf{G}_t\|_2^2 \label{bylemmagendel}
\end{align}
So it remains to control $\|\mathbf{B}_t^\top \mathbf{N}_t\|_2$ and $\| \mathbf{G}_t\|_2^2$. First we deal with $\|\mathbf{B}_t^\top \mathbf{N}_t\|_2$ by upper bounding the norm of $\mathbf{B}_t^\top$ times each of the six terms in \eqref{NN}. As before, we use $c=1.1$ as an absolute constant that satisfies $ \sigma_{\max}^3(\mathbf{B}_t) \leq c/\alpha^{1.5}$. We have
\begin{align}
    &\left\|\frac{1}{n}\sum_{i=1}^n  \mathbf{B}_t^\top\mathbf{v}_{t,i}\mathbf{w}_t^\top \del_t \lam_t  \right\|_2 \nonumber \\
    &\leq \left\|\frac{1}{n}\sum_{i=1}^n  \mathbf{B}_t^\top(\deld_t - (\alpha \omega_t +\alpha^2 a_{t,i}) \mathbf{I}_d) (\mathbf{B}_t \mathbf{w}_t - \mathbf{B}_\ast \mathbf{w}_{\ast,t,i})\mathbf{w}_t^\top \del_t \lam_t  \right\|_2  \nonumber \\
    &\leq \left\|\frac{1}{n}\sum_{i=1}^n  \del_t \mathbf{B}_t^\top (\mathbf{B}_t \mathbf{w}_t - \mathbf{B}_\ast \mathbf{w}_{\ast,t,i})\mathbf{w}_t^\top \del_t \lam_t  \right\|_2 
    \nonumber \\
    &\quad + \left\|\frac{1}{n}\sum_{i=1}^n  (\alpha \omega_t +\alpha^2 a_{t,i}) \mathbf{B}_t^\top (\mathbf{B}_t \mathbf{w}_t - \mathbf{B}_\ast \mathbf{w}_{\ast,t,i})\mathbf{w}_t^\top \del_t \lam_t  \right\|_2 \nonumber \\
    &\leq \tfrac{c}{\alpha} \|\del_t\|_2^2 \|\mathbf{w}_t\|_2^2 + \tfrac{c}{\sqrt{\alpha}}\|\del_t\|_2^2\|\mathbf{w}_t\|_2 \eta_\ast +  c\|\del_t\|_2^2 \|\mathbf{w}_t\|_2^4 \nonumber \\
    &\quad +c\sqrt{\alpha}\|\del_t\|_2 \|\mathbf{w}_t\|_2^3 \eta_\ast + c \sqrt{\alpha} \|\del_t\|_2^2\|\mathbf{w}_t\|_2^3 \eta_\ast + c \alpha \|\del_t\|_2 \|\mathbf{w}_t\|_2^2 L_\ast^2 \nonumber
    \end{align}
\begin{align}
    \left\|\frac{1}{n}\sum_{i=1}^n \alpha^2 \mathbf{B}_t^\top\mathbf{B}_t\mathbf{w}_t \mathbf{w}_{\ast,t,i}^\top\mathbf{B}_{\ast}^\top\mathbf{v}_{t,i} \mathbf{w}_t^\top \right\|_2 &\leq c\bigg\|\frac{1}{n}\sum_{i=1}^n \alpha \mathbf{w}_t \mathbf{w}_{\ast,t,i}^\top\mathbf{B}_{\ast}^\top(\deld_t - (\alpha \omega_t +\alpha^2 a_{t,i}) \mathbf{I}_d) \nonumber \\
    &\quad \quad \quad \quad \quad \quad \times(\mathbf{B}_t \mathbf{w}_t - \mathbf{B}_\ast \mathbf{w}_{\ast,t,i}) \mathbf{w}_t^\top \bigg\|_2 \nonumber \\
    &\leq  c \sqrt{\alpha} \|\del_t\|_2  \|\mathbf{w}_t\|_2^3 \eta_\ast +c \alpha (\|\del_t\|_2 + \dist_t^2) \|\mathbf{w}_t\|_2^2 L_{\max}^2  \nonumber \\
    &\quad + c \alpha^{1.5} \|\del_t\|_2 \|\mathbf{w}_t\|_2^5 \eta_\ast + c \alpha^2 \|\del_t\|_2 \|\mathbf{w}\|_2^4 L_{\max}^2 \nonumber \\
    &\quad + c \alpha^{2}\|\mathbf{w}_t\|_2^4 L_{\ast}^2  + c\alpha^{2.5}\|\mathbf{w}_t\|_2^3 L_{\max}^3 \nonumber \\
   \left\|\frac{1}{n}\sum_{i=1}^n \alpha^2 \mathbf{B}_t^\top\mathbf{B}_\ast \mathbf{w}_{\ast,t,i} \mathbf{w}_{\ast,t,i}^\top  \mathbf{B}_\ast^\top \mathbf{v}_{t,i}\mathbf{w}_t^\top \right\|_2 &\leq c {\alpha} \|\del_t\|_2  \|\mathbf{w}_t\|_2^2 L_\ast^2 + c \alpha^{1.5} (\|\del_t\|_2 + \dist_t^2) \|\mathbf{w}_t\|_2 L_{\max}^3 \nonumber \\
   &\quad  + c \alpha^{2} \|\del_t\|_2 \|\mathbf{w}_t\|_2^4 L_\ast^2  + c \alpha^{2.5} \|\del_t\|_2 \|\mathbf{w}\|_2^3 L_{\max}^3  \nonumber \\
    &\quad + c \alpha^{2.5}\|\mathbf{w}_t\|_2^3 L_{\max}^3 + c\alpha^{3}\|\mathbf{w}_t\|_2^2 {L}_{\max}^4  \nonumber 
\end{align}
\begin{align}
  \left\| \frac{1}{n}\sum_{i=1}^n \alpha \mathbf{B}_t^\top\mathbf{B}_t (\mathbf{w}_t \mathbf{v}_{t,i}^\top\mathbf{B}_t \lam_t + \lam_t \mathbf{B}_t^\top \mathbf{v}_{t,i}\mathbf{w}_{t}^\top) \right\|_2 &\leq 2\left\| \frac{1}{n}\sum_{i=1}^n \alpha \mathbf{B}_t^\top\mathbf{B}_t \mathbf{w}_t \mathbf{v}_{t,i}^\top\mathbf{B}_t \lam_t \right\|_2  \nonumber \\
&\leq 2c\bigg\| \frac{1}{n}\sum_{i=1}^n  \mathbf{w}_t  (\mathbf{B}_t \mathbf{w}_t - \mathbf{B}_\ast \mathbf{w}_{\ast,t,i})^\top\nonumber \\
&\quad \quad \quad \quad \quad \quad \times(\deld_t - (\alpha \omega_t +\alpha^2 a_{t,i}) \mathbf{I}_d)\mathbf{B}_t \lam_t \bigg\|_2 \nonumber \\
&\leq \tfrac{2c}{\alpha} \|\del_t\|\|\mathbf{w}_t\|_2^2 + \tfrac{2c}{\sqrt{\alpha}} \|\del_t\|_2 \|\mathbf{w}_t\|_2 \eta_\ast  \nonumber \\
&\quad + 2{c}\|\del_t\|_2 \|\mathbf{w}_t\|_2^4  + 2c\sqrt{\alpha}\|\del_t\|_2 \|\mathbf{w}_t\|_2^3 \eta_\ast  \nonumber \\
&\quad+ 2c \sqrt{\alpha} \|\mathbf{w}_t\|_2^3 \eta_\ast   + 2c {\alpha}  \|\mathbf{w}_t\|_2^2 L_\ast^2  \nonumber \\
   \left\|\frac{1}{n}\sum_{i=1}^n  \alpha\mathbf{B}_t^\top(\mathbf{v}_{t,i} + \deld_t \mathbf{B}_\ast\mathbf{w}_{\ast,t,i})\mathbf{w}_{\ast,t,i}^\top\mathbf{B}_{\ast}^\top\mathbf{B}_t\lam_t  \right\|_2 &\leq \tfrac{c}{\sqrt{\alpha}}\|\del_t\|_2 \|\mathbf{w}_t\|_2 \eta_\ast + c \sqrt{\alpha} \|\del_t\|_2 \|\mathbf{w}_t\|_2^3\eta_\ast  \nonumber \\
   &\quad + c \alpha \|\del_t\|_2 \|\mathbf{w}_t\|_2^2 L_\ast^2 +c \alpha\|\mathbf{w}_t\|_2^2 L_\ast^2  \nonumber \\
   &\quad +  c \alpha^{3/2}\|\mathbf{w}_t\|_2 L_{\max}^3 \nonumber \\
   \left\| \frac{1}{n}\sum_{i=1}^n \alpha\mathbf{B}_t^\top \deld_t \mathbf{B}_\ast\mathbf{w}_{\ast,t,i}\mathbf{w}_{\ast,t,i}^\top\mathbf{B}_{\ast}^\top  \mathbf{B}_t\alpha\mathbf{w}_t\mathbf{w}_t^\top \right\|_2 &\leq  \alpha \|\del_t\|_2 \|\mathbf{w}_t\|_2^2 L_\ast^2 \nonumber 
\end{align}
Let $c_2 \coloneqq E_0/20$. We can combine the above bounds and use inductive hypotheses $A_2(t)$ and $A_3(t)$ to obtain the following bound on 
$\|\mathbf{B}_t^\top\mathbf{N}_t\|_2$:
\begin{align}
    \|\mathbf{B}_t^\top\mathbf{N}_t\|_2 &\leq 2 c c_2^2 
    \alpha^4 L_\ast^4 \mu_\ast^2 \kappa_{\ast,\max}^{-2} + c c_2 
    \alpha^4 L_\ast^4 \eta_\ast \mu_\ast \kappa_{\ast,\max}^{-1} +4 c c_2^4 
    \alpha^6 L^4 \mu_\ast^4 \kappa_{\ast,\max}^{-4} + 4c c_2^3 \alpha^{4}L_{\max}^2\eta_\ast \mu_\ast^3 \kappa_{\ast,\max}^{-3} \nonumber \\
    &\quad + cc_2^3 
    \alpha^{6}L_\ast^4\eta_\ast \mu_\ast^3\kappa_{\ast,\max}^{-3} + 3cc_2^2  \alpha^{4}L_\ast^2 L_{\max}^3 \mu_\ast\kappa_{\ast,\max}^{-1} + 3cc_2^2 \alpha^{2} L_{\max}^3 \mu_\ast\kappa_{\ast,\max}^{-1} \dist_t^2 \nonumber \\
    &\quad  + c c_2^5 \alpha^6 L_\ast^2\eta_\ast \mu_\ast^5 \kappa_{\ast,\max}^{-5}  + 2c c_2^3 \alpha^4 \mu_\ast^3 L_{\max}^3\kappa_{\ast,\max}^{-3} + 2c c_2^3 \alpha^6 \mu_\ast^3 L_{\max}^3 L_\ast^2 \kappa_{\ast,\max}^{-3}   \nonumber \\
    &\quad + c c_2^2 \alpha^4 {L}_{\max}^4 \mu_\ast^2 \kappa_{\ast,\max}^{-2} + 2 c c_2^2 \alpha^{2} L_{\max}^2 \mu_\ast^2 \kappa_{\ast,\max}^{-2}  \nonumber \\
    &\quad  + 3c c_2 \alpha^2  L_{\max}^2 \eta_\ast \mu_\ast \kappa_{\ast,\max}^{-1}  +2 c c_2^4 \alpha^4 L_{\max}^2 \mu_\ast^4 \kappa_{\ast,\max}^{-4}  + 2c c_2^3 \alpha^2 \eta_\ast \mu_\ast^3 \kappa_{\ast,\max}^{-3} \nonumber \\
    &\quad + 4 c c_2 \alpha^2 L_{\max}^3 \mu_\ast \kappa_{\ast,\max}^{-1}  + 2c c_2 \alpha^4 L_{\max}^2 L_\ast^2 \mu_\ast^2 \kappa_{\ast,\max}^{-2} \nonumber \\
    &\leq 21 cc_2 \alpha^{4}(L_\ast^4\eta_\ast + L_{\max}^3L_\ast^2 ) \mu_\ast\kappa_{\ast,\max}^{-1} + 3cc_2^2 \alpha^{2} L_{\max}^3 \mu_\ast\kappa_{\ast,\max}^{-1} \dist_t^2  
    \nonumber \\
    &\quad + 4 c c_2 \alpha^2 (L_{\max}^3+L_{\max}^2(\eta_\ast+\mu_\ast)) \mu_\ast \kappa_{\ast,\max}^{-1}  \nonumber \\
    &\leq 10 c c_2 \alpha^2 L_{\max}^2 \mu_\ast^2 + 3cc_2^2 \alpha^{2} L_{\max}^2 \mu_\ast^2\dist_t^2 \nonumber \\
    &\leq 13 c c_2 \alpha^2 L_{\max}^2 \mu_\ast^2 \nonumber \\
    &\leq 15 c_2 \alpha^2 L_{\max}^2 \mu_\ast^2 \label{silver}
\end{align}
using that $\alpha \leq 1/L_\ast$, $c=1.1$ and combining like terms. We have not optimized constants.
Next we bound $\|\mathbf{G}_t\|_2^2$. First, by \eqref{258} and the triangle and Cauchy-Schwarz inequalities,
\begin{align}
    \|\mathbf{G}_t\|_2 &\leq  \|\deld_t\mathbf{S}_t\|_2 \|\mathbf{B}_t\|_2 +\|\mathbf{S}_t\|_2 \|\mathbf{B}_t\|_2\|\del_t\|_2 + \|\mathbf{N}_t\|_2 \nonumber \\
    &\leq {c}{\sqrt{\alpha}}(\dist_t + 2\|\del_t\|_2)L_\ast^2 +  \|\mathbf{N}_t\|_2.
\end{align}
We have already bounded $\|\mathbf{B}_t^\top\mathbf{N}_t\|$ by separately bounding $\mathbf{B}_t^\top$ times each of the six terms in $\mathbf{N}_t$. We obtain a similar bound on $\|\mathbf{N}_t\|_2$ by separately considering each of the six terms in $\mathbf{N}_t$ (see equation \eqref{NN}).  Of these terms, all but the first and last can be easily bounded by multiplying our previous bounds by $\sqrt{\alpha}$ (to account for no $\mathbf{B}_t$). The other two terms are more complicated because we have previously made the reduction $\|\mathbf{B}_t^\top \deld_t\|_2 = \|\del_t \mathbf{B}_t^\top\|_2\leq \frac{c}{\sqrt{\alpha}} \|\del_t\|_2 $, but now that there is no $\mathbf{B}_t^\top$ to multiply with $\deld_t$, we must control $\deld_t$ via $\|\deld_t \mathbf{B}_\ast\|_2 \leq \|\del_t\|_2 + \dist_t$. Specifically, for the easy four terms we have
\begin{align}
\left\|\frac{1}{n}\sum_{i=1}^n \alpha^2 \mathbf{B}_t\mathbf{w}_t \mathbf{w}_{\ast,t,i}^\top\mathbf{B}_{\ast}^\top\mathbf{v}_{t,i} \mathbf{w}_t^\top \right\|_2 
    &\leq  c {\alpha} \|\del_t\|_2  \|\mathbf{w}_t\|_2^3 \eta_\ast +c \alpha^{1.5} (\|\del_t\|_2 + \dist_t^2) \|\mathbf{w}_t\|_2^2 L_{\max}^2 \nonumber \\
    &\quad + c \alpha^{2} \|\del_t\|_2 \|\mathbf{w}_t\|_2^5 \eta_\ast + c \alpha^{2.5} \|\del_t\|_2 \|\mathbf{w}\|_2^4 L_{\max}^2 \nonumber \\
    &\quad + c \alpha^{2.5}\|\mathbf{w}_t\|_2^4 L_{\ast}^2 + c\alpha^{3}\|\mathbf{w}_t\|_2^3 L_{\max}^3  \nonumber \\
   \left\|\frac{1}{n}\sum_{i=1}^n \alpha^2 \mathbf{B}_\ast \mathbf{w}_{\ast,t,i} \mathbf{w}_{\ast,t,i}^\top  \mathbf{B}_\ast^\top \mathbf{v}_{t,i}\mathbf{w}_t^\top \right\|_2 &\leq c {\alpha}^{1.5} \|\del_t\|_2  \|\mathbf{w}_t\|_2^2 L_\ast^2 + c \alpha^{2} (\|\del_t\|_2 + \dist_t^2) \|\mathbf{w}_t\|_2 L_{\max}^3  \nonumber \\
    &\quad + c \alpha^{2.5} \|\del_t\|_2 \|\mathbf{w}_t\|_2^4 L_\ast^2 + c \alpha^{3} \|\del_t\|_2 \|\mathbf{w}\|_2^3 L_{\max}^3  \nonumber \\
    &\quad + c \alpha^{3}\|\mathbf{w}_t\|_2^3 L_{\max}^3 + c\alpha^{3.5}\|\mathbf{w}_t\|_2^2 {L}_{\max}^4  \nonumber
    \end{align}
\begin{align}
  \left\| \frac{1}{n}\sum_{i=1}^n \alpha \mathbf{B}_t (\mathbf{w}_t \mathbf{v}_{t,i}^\top\mathbf{B}_t \lam_t + \lam_t \mathbf{B}_t^\top \mathbf{v}_{t,i}\mathbf{w}_{t}^\top) \right\|_2 
&\leq \tfrac{2c}{\sqrt{\alpha}} \|\del_t\|\|\mathbf{w}_t\|_2^2 + {2c} \|\del_t\|_2 \|\mathbf{w}_t\|_2 \eta_\ast \nonumber \\
&\quad + 2{c}\sqrt{\alpha}\|\del_t\|_2 \|\mathbf{w}_t\|_2^4 + 2c{\alpha}\|\del_t\|_2 \|\mathbf{w}_t\|_2^3 \eta_\ast   \nonumber \\
&\quad + 2c {\alpha} \|\mathbf{w}_t\|_2^3 \eta_\ast  + 2c {\alpha}^{1.5}  \|\mathbf{w}_t\|_2^2 L_\ast^2  \nonumber \\
   \left\|\frac{1}{n}\sum_{i=1}^n  \alpha(\mathbf{v}_{t,i} + \deld_t \mathbf{B}_\ast\mathbf{w}_{\ast,t,i})\mathbf{w}_{\ast,t,i}^\top\mathbf{B}_{\ast}^\top\mathbf{B}_t\lam_t  \right\|_2 &\leq {c}\|\del_t\|_2 \|\mathbf{w}_t\|_2 \eta_\ast + c {\alpha} \|\del_t\|_2 \|\mathbf{w}_t\|_2^3\eta_\ast  \nonumber \\
   &\quad + c \alpha^{1.5} \|\del_t\|_2 \|\mathbf{w}_t\|_2^2 L_\ast^2  +c \alpha^{1.5}\|\mathbf{w}_t\|_2^2 L_\ast^2\nonumber \\
   &\quad  +  c \alpha^{2}\|\mathbf{w}_t\|_2 L_{\max}^3 \nonumber
 \end{align}
and for the first and last term from \eqref{NN}, we have
\begin{align}
 \left\|\frac{1}{n}\sum_{i=1}^n  \mathbf{v}_{t,i}\mathbf{w}_t^\top \del_t \lam_t  \right\|_2 &\leq\tfrac{c}{\sqrt{\alpha}} \|\del_t\|_2^2 \|\mathbf{w}_t\|_2^2 + {c}{}\|\del_t\|_2(\|\del_t\|_2+\dist_t)\|\mathbf{w}_t\|_2 \eta_\ast  \nonumber \\
 &\quad +  c\sqrt{\alpha}\|\del_t\|_2^2 \|\mathbf{w}_t\|_2^4 +c\alpha\|\del_t\|_2 \|\mathbf{w}_t\|_2^3 \eta_\ast  \nonumber \\
 &\quad + c {\alpha} \|\del_t\|_2^2\|\mathbf{w}_t\|_2^3 \eta_\ast + c \alpha^{1.5} \|\del_t\|_2 \|\mathbf{w}_t\|_2^2 L_\ast^2 \nonumber \\
       \left\| \frac{1}{n}\sum_{i=1}^n \alpha \deld_t \mathbf{B}_\ast\mathbf{w}_{\ast,t,i}\mathbf{w}_{\ast,t,i}^\top\mathbf{B}_{\ast}^\top  \mathbf{B}_t\alpha\mathbf{w}_t\mathbf{w}_t^\top \right\|_2 &\leq  \alpha^{1.5} (\|\del_t\|_2 + \dist_t) \|\mathbf{w}_t\|_2^2 L_\ast^2 \nonumber 
\end{align}
Combining these bounds and applying inductive hypotheses $A_2(t)$ and $A_3(t)$ yields
\begin{align}
    \|\mathbf{N}_t\|_2 &\leq c c_2^3\alpha^{4.5} L_{\max}^2 \eta_\ast \mu_\ast^3 \kappa_{\ast,\max}^{-3} + c c_2^2 \alpha^{4.5} L_\ast^2 L_{\max}^2 \mu_\ast^2\kappa_{\ast,\max}^{-2}  + c c_2^2 \alpha^{2.5} L_{\max}^2\mu_\ast^2 \kappa_{\ast,\max}^{-2}\dist_t^2 \nonumber \\
    &\quad  + c c_2^5 \alpha^{6.5}L_{\ast}^2 \eta_\ast \mu_\ast^5\kappa_{\ast,\max}^{-5} + c c_2^4 \alpha^{6.5}L_{\max}^2 L_{\max}^2 \mu_{\ast}^4 \kappa_{\ast,\max}^{-4} + c c_2 \alpha^{4.5}L_\ast^2 \mu_\ast^4 \kappa_{\ast,\max}^{-4}  \nonumber \\
     &\quad + c c_2^3 \alpha^{4.5}L_{\max}^3 \mu_\ast^3 \kappa_{\ast,\max}^{-3} \nonumber \\
    &\quad + c c_2^2 \alpha^{4.5} L_\ast^2 L_{\max}^2 \mu_\ast^2 \kappa_{\ast,\max}^{-2} + c c_2 \alpha^{4.5}L_{\max}^3 L_{\max}^2 \mu_\ast \kappa_{\ast,\max}^{-1} + c c_2 \alpha^{2.5}L_{\max}^3\mu_\ast \kappa_{\ast,\max}^{-1}\dist_t^2  \nonumber \\
     &\quad + c c_2^4 \alpha^{6.5} L_\ast^2 L_{\max}^2 \mu_\ast^4 \kappa_{\ast,\max}^{-4}  + c c_2^3 \alpha^{6.5} L_{\max}^3 L_{\max}^2 \mu_\ast^3 \kappa_{\ast,\max}^{-3} + c c_2^3 \alpha^{4.5} L_{\max}^3 \mu_\ast^3 \kappa_{\ast,\max}^{-3} \nonumber \\
     &\quad + c c_2^2 \alpha^{4.5} {L}_{\max}^4 \mu_\ast^2 \kappa_{\ast,\max}^{-2} + 2c c_2^2 \alpha^{2.5} L_\ast^2 \mu_\ast^2 \kappa_{\ast,\max}^{-2}  \nonumber \\
    &\quad + 2 c c_2 \alpha^{2.5}L_\ast^2 \eta_\ast \mu_\ast \kappa_{\ast,\max}^{-1} + 2 c c_2^4 \alpha^{4.5} L_{\max}^2 \mu_\ast^4 \kappa_{\ast,\max}^{-4}+ 2 c c_2^3 \alpha^{4.5} L_{\max}^2 \eta_\ast \mu_\ast^3 \kappa_{\ast,\max}^{-3} \nonumber \\
     &\quad + 2 c c_2^3 \alpha^{2.5} \eta_\ast \mu_\ast^3 \kappa_{\ast,\max}^{-3} + 2 c c_2^2 \alpha^{2.5} L_\ast^2 \mu_\ast^2 \kappa_{\ast,\max}^{-2} + c c_2 \alpha^{2.5}L_{\max}^2  \mu_\ast^2 \kappa_{\ast,\max}^{-2}  \nonumber \\
     &\quad + c c_2^3 \alpha^{4.5} L_{\max}^2 \eta_\ast \mu_\ast^3 \kappa_{\ast,\max}^{-3} + c c_2^2 \alpha^{4.5} L_{\max}^2 L_\ast^2 \mu_\ast^2 \kappa_{\ast,\max}^{-2}  \nonumber \\
    &\quad   + c c_2^2 \alpha^{2.5}L_\ast^2 \mu_\ast^2 \kappa_{\ast,\max}^{-2} +  c c_2 \alpha^{2.5}L_{\max}^3 \mu_\ast \kappa_{\ast,\max}^{-1} + c c_2^2 \alpha^{4.5} L^4 \mu_\ast^2\kappa_{\ast,\max}^{-2}   \nonumber \\
    &\quad + c c_2 \alpha^{4.5} L^4 \eta_\ast \mu_\ast \kappa_{\ast,\max}^{-1} + c c_2 \alpha^{2.5}L_{\max}^2 \eta_\ast \mu_\ast \kappa_{\ast,\max}^{-1} \dist_t^2 +  c c_2^4 \alpha^{4.5} L^4 \mu_\ast^4 \kappa_{\ast,\max}^{-4} \nonumber \\
    &\quad + c c_2^3 \alpha^{4.5}L_{\max}^2 \eta_\ast \mu_\ast^3\kappa_{\ast,\max}^{-3} +  c c_2^3 \alpha^{6.5}L^4 \eta_\ast \mu_\ast^3\kappa_{\ast,\max}^{-3} + c c_2^2 \alpha^{4.5}L_{\max}^2 L_\ast^2 \mu_\ast^2 \kappa_{\ast,\max}^{-2}  \nonumber \\
    &\quad + c c_2^2 \alpha^{4.5}L_{\max}^2 L_\ast^2\mu_\ast^2 \kappa_{\ast,\max}^{-2} + c c_2^2 \alpha^{2.5}L_\ast^2\mu_\ast^2 \kappa_{\ast,\max}^{-2} \dist_t \nonumber \\
    &\leq c c_2^2 \alpha^{2.5} L_{\max}^2\mu_\ast^2 \kappa_{\ast,\max}^{-2}  \nonumber \\
    &\quad +   5c c_2^4 \alpha^{6.5} L_{\max}^2  \mu_{\ast}^6   \nonumber \\
    &\quad + 19c c_2 \alpha^{4.5}L_{\max}^2 L_{\max}^2 \mu_\ast^2 + c c_2 \alpha^{2.5}L_{\max}^3\mu_\ast \kappa_{\ast,\max}^{-1}\dist_t^2  \nonumber \\
    &\quad + 2c c_2^2 \alpha^{2.5} L_\ast^2 \mu_\ast^2 \kappa_{\ast,\max}^{-2} + 2 c c_2 \alpha^{2.5}L_\ast^2 \eta_\ast \mu_\ast \kappa_{\ast,\max}^{-1}\nonumber \\
     &\quad + 2 c c_2^3 \alpha^{2.5} \eta_\ast \mu_\ast^3 \kappa_{\ast,\max}^{-3} + 2 c c_2^2 \alpha^{2.5} L_\ast^2 \mu_\ast^2 \kappa_{\ast,\max}^{-2} + c c_2 \alpha^{2.5}L_{\max}^2  \mu_\ast^2 \kappa_{\ast,\max}^{-2} \nonumber \\
    &\quad + c c_2^2 \alpha^{2.5}L_\ast^2 \mu_\ast^2 \kappa_{\ast,\max}^{-2} +  c c_2 \alpha^{2.5}L_{\max}^3 \mu_\ast \kappa_{\ast,\max}^{-1}  \nonumber \\
    &\quad + c c_2 \alpha^{2.5}L_{\max}^2 \eta_\ast \mu_\ast \kappa_{\ast,\max}^{-1} \dist_t^2  \nonumber \\
    &\quad  + c c_2 \alpha^{2.5}L_\ast^2\mu_\ast^2 \kappa_{\ast,\max}^{-2} \dist_t \nonumber \\
    &\leq  24c c_2 \alpha^{4.5}L_{\max}^2 L_{\max}^2 \mu_\ast^2 + 12c c_2 \alpha^{2.5} L_{\max}^2 \mu_\ast^2 + 3 c c_2 \alpha^{2.5}L_{\max}^2 \mu_\ast^2 \dist_t  \nonumber \\
    &\leq 14c c_2 \alpha^{2.5} L_{\max}^2 \mu_\ast^2 + 3 c c_2 \alpha^{2.5}L_{\max}^2 \mu_\ast^2 \dist_t \nonumber \\
    &\leq 17 c c_2 \alpha^{2.5} L_{\max}^2 \mu_\ast^2 \nonumber \\
    &\leq \tfrac{9}{8} c c_2 \sqrt{\alpha} \mu_\ast^2 \label{nitself}
\end{align}
using $\alpha \leq 1/(4L_{\max})$, $\dist_t\leq 1$, and again, $c_2 = E_0/20$. 
Thus,
\begin{align}
    \|\mathbf{G}_t\|_2^2 &\leq \left({c}{\sqrt{\alpha}}(\dist_t + 2\|\del_t\|_2)L_\ast^2 + \tfrac{9}{8} c c_2 \sqrt{\alpha} \mu_\ast^2  \right)^2 \nonumber \\
     &\leq \left(\tfrac{9}{8}{c}{\sqrt{\alpha}}L_\ast^2 + \tfrac{9}{8} c c_2 \sqrt{\alpha} \mu_\ast^2  \right)^2 \nonumber \\
    &\leq \tfrac{3}{2}\alpha \left(L_\ast^2 + c_2 \mu_\ast^2  \right)^2 \label{c} \\
    &\leq 3{{\alpha}}L_\ast^4 \nonumber 
\end{align}
using $c = 1.1$ in \eqref{c}. Returning to \eqref{bylemmagendel} and applying our bounds on $\|\mathbf{B}_t^\top \mathbf{N}_t\|_2$ and $\|\mathbf{G}_t\|_2^2$, along with inductive hypothesis $A_3(t)$, yields
\begin{align}
  \| \del_{t+1}\|_2 &=  \|\del_t\|_2(1 - 2\beta \alpha E_0\mu_\ast^2) + 30 c_2\beta   \alpha^3 L_{\max}^2 \mu_\ast^2  + 3 \beta^2 \alpha^2 L_\ast^4 \nonumber \\
  &\leq \alpha^2 L_{\max}^2 (1 - 2\beta \alpha E_0\mu_\ast^2) + 30c_2 \beta \alpha^3 L_{\max}^2 \mu_\ast^2  + 3 \beta^2 \alpha^2 L_\ast^4 \nonumber \\
  &\leq  \alpha^2 L_{\max}^2 -  2\beta \alpha^3 E_0L_{\max}^2 \mu_\ast^2 + 30c_2 \beta \alpha^3 L_{\max}^2 \mu_\ast^2  + 3 \beta^2 \alpha^2 L_\ast^4 \nonumber \\
  &\leq \alpha^2 L_{\max}^2
\end{align}
where the last inequality follows due to 
$\beta \leq \tfrac{E_0\alpha}{10\kappa_\ast^4}\leq \frac{\alpha E_0 L_{\max}^2 \mu_\ast^2}{6 L_\ast^4}$
and $c_2 = E_0/20$.
\end{proof}

\begin{lemma}[Exact MAML $A_4(t+1)$]  \label{lem:maml_exact_ind4}
Suppose the conditions of Theorem \ref{thm:exact_maml_pop} are satisfied
and $A_2(t)$, $A_3(t)$ and $A_5(t)$ hold. Then $A_4(t+1)$ holds, i.e.
\begin{align}
    \|\mathbf{{B}}_{\ast,\perp}^\top \mathbf{B}_{t+1}\|_2\leq (1-0.5 \beta \alpha E_0 \mu_\ast^2)\|\mathbf{{B}}_{\ast,\perp}^\top \mathbf{B}_{t}\|_2.
\end{align}
\end{lemma}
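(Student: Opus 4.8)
The plan is to express $\mathbf{B}_{\ast,\perp}^\top \mathbf{B}_{t+1}$ in a form that factors $\mathbf{B}_{\ast,\perp}^\top \mathbf{B}_t$ on the left, times a $k\times k$ matrix whose spectral norm is at most $1 - 0.5\beta\alpha E_0\mu_\ast^2$. Starting from the outer loop update $\mathbf{B}_{t+1} = \mathbf{B}_t - \beta \mathbf{G}_t$ with $\mathbf{G}_t = \tfrac1n\sum_i \nabla_\mathbf{B} F_{t,i}(\mathbf{B}_t,\mathbf{w}_t)$, I would use the per-task gradient formula \eqref{grahd} together with the identity $\mathbf{v}_{t,i} = \deld_t(\mathbf{B}_t\mathbf{w}_t - \mathbf{B}_\ast\mathbf{w}_{\ast,t,i}) - (\alpha\omega_t + \alpha^2 a_{t,i})(\mathbf{B}_t\mathbf{w}_t - \mathbf{B}_\ast\mathbf{w}_{\ast,t,i})$ from \eqref{gggrad}. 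First I would left-multiply by $\mathbf{B}_{\ast,\perp}^\top$ and exploit $\mathbf{B}_{\ast,\perp}^\top\mathbf{B}_\ast = \mathbf{0}$: this annihilates all terms in $\nabla_\mathbf{B}F_{t,i}$ whose leftmost factor is $\mathbf{B}_\ast$ (namely the $\mathbf{B}_\ast\mathbf{w}_{\ast,t,i}\mathbf{v}_{t,i}^\top\mathbf{B}_t\lam_t$ piece of the third term and the fifth term $\alpha^2\mathbf{B}_\ast\mathbf{w}_{\ast,t,i}\mathbf{v}_{t,i}^\top\mathbf{B}_\ast\mathbf{w}_{\ast,t,i}\mathbf{w}_t^\top$), and it lets me replace $\mathbf{B}_{\ast,\perp}^\top\mathbf{B}_t\deld_t$-type factors by $\mathbf{B}_{\ast,\perp}^\top\mathbf{B}_t\del_t$ (since $\mathbf{B}_t^\top\deld_t = \del_t\mathbf{B}_t^\top$, and more directly $\mathbf{B}_{\ast,\perp}^\top\mathbf{B}_t = \mathbf{B}_{\ast,\perp}^\top\mathbf{B}_t$ already sits to the left of everything).

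The key structural observation is that the ``main'' contracting term comes from the part of $\mathbf{G}_t$ matching the FO-MAML outer update, i.e. $\tfrac1n\sum_i (\mathbf{B}_{t,i}\mathbf{w}_{t,i} - \mathbf{B}_\ast\mathbf{w}_{\ast,t,i})\mathbf{w}_{t,i}^\top$, which when hit by $\mathbf{B}_{\ast,\perp}^\top$ becomes $\mathbf{B}_{\ast,\perp}^\top\mathbf{B}_t(\mathbf{I}_k - \alpha\mathbf{w}_t\mathbf{w}_t^\top)\tfrac1n\sum_i\mathbf{w}_{t,i}\mathbf{w}_{t,i}^\top$ — exactly as in the FO-MAML proof of Lemma \ref{lem:maml_ind3}. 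So I would write
\[
  \mathbf{B}_{\ast,\perp}^\top\mathbf{B}_{t+1} = \mathbf{B}_{\ast,\perp}^\top\mathbf{B}_t\Big(\mathbf{I}_k - \beta\lam_t\tfrac1n\textstyle\sum_i\mathbf{w}_{t,i}\mathbf{w}_{t,i}^\top\Big) + \beta\,\mathbf{E}_t,
\]
where $\mathbf{E}_t = \mathbf{B}_{\ast,\perp}^\top(\text{extra Exact-MAML terms from }\nabla_\mathbf{B}F_{t,i})$. For the first factor I would reuse the computation leading to \eqref{finall}: by Lemma \ref{lem:sigmin} and inductive hypotheses $A_2(t),A_3(t)$, the adapted heads are diverse, $\sigma_{\min}(\tfrac1n\sum_i\mathbf{w}_{t,i}\mathbf{w}_{t,i}^\top) \geq \alpha E_0\mu_\ast^2 - (\text{small})$, while $\|\alpha\mathbf{w}_t\mathbf{w}_t^\top\cdot\tfrac1n\sum_i\mathbf{w}_{t,i}\mathbf{w}_{t,i}^\top\|_2$ is negligible under $A_2(t)$; so $\|\mathbf{I}_k - \beta\lam_t\tfrac1n\sum_i\mathbf{w}_{t,i}\mathbf{w}_{t,i}^\top\|_2 \leq 1 - 0.75\beta\alpha E_0\mu_\ast^2$, say. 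For $\mathbf{E}_t$, every remaining term carries at least one factor of $\|\del_t\|_2 = O(\alpha^2 L_{\max}^2)$ (via $A_3(t)$) or $\|\mathbf{w}_t\|_2 = O(\sqrt\alpha\mu_\ast)$ (via $A_2(t)$) beyond a common $\alpha$ scaling, so $\|\mathbf{E}_t\|_2 \leq C\alpha^2 L_{\max}^2\mu_\ast^2\|\mathbf{B}_{\ast,\perp}^\top\mathbf{B}_t\|_2$ — I would need to verify each term keeps the left factor $\mathbf{B}_{\ast,\perp}^\top\mathbf{B}_t$, which holds because each surviving term has $\mathbf{B}_t$ (not $\mathbf{B}_\ast$) as its leftmost matrix. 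Combining, $\|\mathbf{B}_{\ast,\perp}^\top\mathbf{B}_{t+1}\|_2 \leq (1 - 0.75\beta\alpha E_0\mu_\ast^2 + C\beta\alpha^2 L_{\max}^2\mu_\ast^2)\|\mathbf{B}_{\ast,\perp}^\top\mathbf{B}_t\|_2 \leq (1 - 0.5\beta\alpha E_0\mu_\ast^2)\|\mathbf{B}_{\ast,\perp}^\top\mathbf{B}_t\|_2$, using $\alpha \leq 1/(4L_{\max})$ to absorb the $\alpha^2 L_{\max}^2$ error.

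The main obstacle I anticipate is bookkeeping: carefully decomposing the five terms of $\nabla_\mathbf{B}F_{t,i}$ in \eqref{grahd} after multiplying by $\mathbf{B}_{\ast,\perp}^\top$, tracking which sub-terms (after expanding $\mathbf{v}_{t,i}$ via \eqref{gggrad} and $\mathbf{B}_{t,i} = \mathbf{B}_t\lam_t + \alpha\mathbf{B}_\ast\mathbf{w}_{\ast,t,i}\mathbf{w}_t^\top$) vanish by $\mathbf{B}_{\ast,\perp}^\top\mathbf{B}_\ast = \mathbf{0}$ versus which retain the left factor $\mathbf{B}_{\ast,\perp}^\top\mathbf{B}_t$, and then bounding each retained error term by a product of one ``small'' quantity ($\|\del_t\|_2$, $\|\mathbf{w}_t\|_2$, or $\dist_t$) with $\|\mathbf{B}_{\ast,\perp}^\top\mathbf{B}_t\|_2$ and powers of $\alpha, L_{\max}, \mu_\ast$ — much as in Lemmas \ref{lem:maml_exact_ind1} and \ref{lem:maml_exact_ind3}. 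The cross-terms involving both $\mathbf{v}_{t,i}$ and $\del_t$ or $\lam_t$ are the most tedious, but none of them presents a genuine difficulty once one commits to the same $c = 1.1$, $c_2 = E_0/20$ constant-tracking discipline used in the preceding lemmas; the step size constraint $\beta \leq E_0\alpha/(10\kappa_\ast^4)$ gives ample slack to absorb all lower-order contributions.
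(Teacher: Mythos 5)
Your approach is genuinely different from the paper's, and arguably structurally sounder. You write $\mathbf{B}_{\ast,\perp}^\top\mathbf{B}_{t+1} = \mathbf{B}_{\ast,\perp}^\top\mathbf{B}_t\bigl(\mathbf{I}_k - \beta\boldsymbol{\Lambda}_t\tfrac1n\sum_i\mathbf{w}_{t,i}\mathbf{w}_{t,i}^\top\bigr) + \beta\mathbf{E}_t$ (contraction driven by the adapted heads, mirroring the FO-MAML proof) and claim every term of $\mathbf{E}_t$ factors through $\mathbf{B}_{\ast,\perp}^\top\mathbf{B}_t$. The paper instead uses the decomposition $\mathbf{G}_t = -\boldsymbol{\bar\Delta}_t\mathbf{S}_t\mathbf{B}_t - \mathbf{S}_t\mathbf{B}_t\boldsymbol{\Delta}_t + \mathbf{N}_t$ with $\mathbf{S}_t = \alpha\tfrac1n\sum_i\mathbf{B}_\ast\mathbf{w}_{\ast,t,i}\mathbf{w}_{\ast,t,i}^\top\mathbf{B}_\ast^\top$ built from the \emph{ground-truth} heads, obtains $\mathbf{B}_{\ast,\perp}^\top\mathbf{B}_{t+1} = \mathbf{B}_{\ast,\perp}^\top\mathbf{B}_t(\mathbf{I}_k - \beta\alpha\mathbf{B}_t^\top\mathbf{S}_t\mathbf{B}_t) - \beta\mathbf{B}_{\ast,\perp}^\top\mathbf{N}_t$, and then treats $\|\mathbf{B}_{\ast,\perp}^\top\mathbf{N}_t\|_2 \le \|\mathbf{N}_t\|_2$ as an \emph{additive} error, paying for it by invoking a lower bound $\|\mathbf{B}_{\ast,\perp}^\top\mathbf{B}_t\|_2 \geq \sqrt{E_0}/\sqrt{\alpha}$. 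That lower-bound step is actually problematic: the paper writes $\sigma_{\min}(\mathbf{B}_{\ast,\perp}^\top\mathbf{\hat{B}}_t) = \sqrt{1-\dist_t^2}$, but by the CS decomposition that formula gives $\sigma_{\min}(\mathbf{B}_{\ast}^\top\mathbf{\hat{B}}_t)$, not $\sigma_{\min}(\mathbf{B}_{\ast,\perp}^\top\mathbf{\hat{B}}_t)$; the latter equals the sine of the smallest principal angle and can be arbitrarily close to zero even when $\dist_t$ is bounded away from one. So a uniform lower bound on $\|\mathbf{B}_{\ast,\perp}^\top\mathbf{B}_t\|_2$ is not available, and your factored form — which never needs such a lower bound because the error inherits a $\mathbf{B}_{\ast,\perp}^\top\mathbf{B}_t$ left factor — is the safer route. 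Your observation that the factoring goes through (using $\mathbf{B}_{\ast,\perp}^\top\mathbf{B}_\ast = 0$ and $\mathbf{B}_{\ast,\perp}^\top\boldsymbol{\bar\Delta}_t\mathbf{B}_\ast = -\alpha\mathbf{B}_{\ast,\perp}^\top\mathbf{B}_t\mathbf{B}_t^\top\mathbf{B}_\ast$, $\mathbf{B}_{\ast,\perp}^\top\boldsymbol{\bar\Delta}_t\mathbf{B}_t = \mathbf{B}_{\ast,\perp}^\top\mathbf{B}_t\boldsymbol{\Delta}_t$) is correct.

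The gap in your plan is the final absorption. You assert $\|\mathbf{E}_t\|_2 \leq C\alpha^2 L_{\max}^2\mu_\ast^2\|\mathbf{B}_{\ast,\perp}^\top\mathbf{B}_t\|_2$ and say that $\alpha \leq 1/(4L_{\max})$ absorbs the error into the contraction. But $1 - 0.75\beta\alpha E_0\mu_\ast^2 + C\beta\alpha^2L_{\max}^2\mu_\ast^2 \leq 1 - 0.5\beta\alpha E_0\mu_\ast^2$ requires $C\alpha L_{\max}^2 \leq 0.25 E_0$, whereas $\alpha \leq 1/(4L_{\max})$ only gives $\alpha L_{\max}^2 \leq L_{\max}/4$, which need not be $O(E_0)$. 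Even the theorem's tighter constraint $\alpha = O\bigl(E_0^{1/4}\kappa_\ast^{3/4}(L_\ast/L_{\max})^{1/4}/(L_{\max}T^{1/4})\bigr)$ does not close this uniformly in $L_{\max}$. What you actually need is a careful term-by-term bound showing the factored $k\times k$ residual has spectral norm $O(\alpha E_0\mu_\ast^2)$, with $E_0$ appearing as a coefficient rather than $\alpha L_{\max}^2$ — this requires tracking the $c_2 = E_0/20$ factors that $A_2(t)$ injects through $\|\mathbf{w}_t\|_2$, exactly as in the paper's bound $\|\mathbf{N}_t\|_2 \lesssim c_2\sqrt{\alpha}\mu_\ast^2$ in Lemma~\ref{lem:maml_exact_ind3}. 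The ``common $\alpha$ scaling plus one small factor'' heuristic you give is too coarse to conclude this; until that bookkeeping is done, the claimed magnitude of $\mathbf{E}_t$ is unverified.
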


\begin{proof}
Recall from \eqref{258} that the outer loop gradient for the representation satisfies 
\begin{align}
    \mathbf{G}_t &= -\deld_t\mathbf{S}_t\mathbf{B}_t  - \mathbf{S}_t \mathbf{B}_t\del_t + \mathbf{N}_{t} \label{258-2}
\end{align}
where $\mathbf{S}_t \coloneqq \alpha \tfrac{1}{n}\sum_{i=1}^n\mathbf{B}_\ast\mathbf{w}_{\ast,t,i} \mathbf{w}_{\ast,t,i}^\top\mathbf{B}_{\ast}^\top$ and $\|\mathbf{N}_t\|_2\leq  \tfrac{9}{8} c c_2 \sqrt{\alpha} \mu_\ast^2$, where $c_2 \coloneqq E_0/20$. As a result,
\begin{align}
    \|\mathbf{B}_{\ast,\perp}^\top \mathbf{B}_{t+1}\|_2 &=\|\mathbf{B}_{\ast,\perp}^\top (\mathbf{B}_{t} - \beta (-\deld_t\mathbf{S}_t\mathbf{B}_t  - \mathbf{S}_t \mathbf{B}_t\del_t + \mathbf{N}_{t})) \|_2 \nonumber \\
    &=\|\mathbf{B}_{\ast,\perp}^\top (\mathbf{B}_{t} + \beta\mathbf{S}_t\mathbf{B}_t  - \beta\alpha \mathbf{B}_t\mathbf{B}_t^\top \mathbf{S}_t\mathbf{B}_t + \beta\mathbf{S}_t \mathbf{B}_t  - \beta \alpha \mathbf{S}_t \mathbf{B}_t\mathbf{B}_t^\top \mathbf{B}_t -\beta \mathbf{N}_{t} \|_2 \nonumber \\
    &=\|\mathbf{B}_{\ast,\perp}^\top \mathbf{B}_{t}(\mathbf{I}_k  - \beta\alpha \mathbf{B}_t^\top \mathbf{S}_t\mathbf{B}_t) -\beta \mathbf{B}_{\ast,\perp}^\top\mathbf{N}_{t} \|_2\nonumber \\
    &\leq \|\mathbf{B}_{\ast,\perp}^\top \mathbf{B}_{t}\|_2\|\mathbf{I}_k  - \beta\alpha \mathbf{B}_t^\top \mathbf{S}_t\mathbf{B}_t\|_2 +\beta \|\mathbf{B}_{\ast,\perp}^\top\mathbf{N}_{t} \|_2
\end{align}
where the last equality follows because $\mathbf{B}_{\ast,\perp}^\top \mathbf{S}_t =\alpha \frac{1}{n}\sum_{i=1}^n\mathbf{B}_{\ast,\perp}^\top \mathbf{B}_\ast\mathbf{w}_{\ast,t,i} \mathbf{w}_{\ast,t,i}^\top\mathbf{B}_{\ast}^\top = \mathbf{0}$. Note that due to Lemma \ref{lem:sigminE} and $\|\del_t\|_2\leq \tfrac{1}{10}$, $\sigma_{\min}(\mathbf{B}_t^\top \mathbf{S}_t\mathbf{B}_t) \geq E_0 \mu_\ast^2 $ where $E_0 = 1 - \tfrac{1}{10} - \dist_0^2$.  Therefore, by Weyl's inequality,
\begin{align}
    \|\mathbf{I}_k  - \beta\alpha \mathbf{B}_t^\top \mathbf{S}_t\mathbf{B}_t\|_2 &\leq 1 - \beta \alpha E_0 \mu_\ast^2.
\end{align}
Furthermore, from \eqref{nitself}, we have 
\begin{align}
    \|\mathbf{B}_{\ast,\perp}^\top \mathbf{N}_t\|_2 &\leq \| \mathbf{N}_t\|_2 \leq  \tfrac{9}{8} c c_2 \sqrt{\alpha} \mu_\ast^2 \leq \tfrac{5}{4} c_2 \sqrt{\alpha} \mu_\ast^2  \nonumber \\
    \implies \|\mathbf{B}_{\ast,\perp}^\top \mathbf{B}_{t+1}\|_2 &\leq \|\mathbf{B}_{\ast,\perp}^\top \mathbf{B}_{t}\|_2(1 - \beta \alpha E_0 \mu_\ast^2) +  \tfrac{5}{4} c_2 \sqrt{\alpha} \mu_\ast^2 
\end{align}
Next, recall that $\|\mathbf{B}_{\ast,\perp}^\top\mathbf{B}_t\|_2 \geq\sigma_{\min}(\mathbf{B}_{\ast,\perp}^\top\mathbf{B}_t) \geq \sqrt{\tfrac{9}{10}} \sigma_{\min}(\mathbf{B}_{\ast,\perp}^\top\mathbf{\hat{B}}_t)/\sqrt{\alpha} = \sqrt{\tfrac{9}{10}} \sqrt{1\!-\!\dist^2_t}/\sqrt{\alpha}\geq \sqrt{\tfrac{9}{10}} \sqrt{1 - {\tfrac{10}{9}}\dist_0^2}/\sqrt{\alpha} = \sqrt{E_0}/\sqrt{\alpha}$ due to inductive hypotheses $A_3(t)$ and $A_4(t)$ and $E_0\coloneqq 0.9 - \dist_0^2$. Therefore, using $c_2 \leq 2E_0^{3/2}/5$, we obtain
\begin{align}
     \tfrac{5}{4} c_2 \sqrt{\alpha} \mu_{\ast}^2  &\leq 0.5 \sqrt{\alpha} E_0^{3/2} \mu_\ast^2
     \leq 0.5 \alpha E_0 \mu_\ast^2 \|\mathbf{B}_{\ast,\perp}^\top\mathbf{{B}}_t\|_2
     \nonumber \\
     \implies \|\mathbf{B}_{\ast,\perp}^\top \mathbf{B}_{t+1}\|_2&\leq \|\mathbf{B}_{\ast,\perp}^\top \mathbf{B}_{t}\|_2(1 - 0.5 \beta \alpha E_0 \mu_\ast^2)
\end{align}
\end{proof}

\section{ANIL Finite Samples} \label{app:anil_fs}


First we define the following notations for the finite-sample case.
\begin{table}[h]
\begin{tabular}{|l|l|}
\hline
\textbf{Notation}                                                                                                                                & \textbf{Explanation}                                \\ \hline
$\mathbf{X}_{t,i}^{in} \coloneqq [\mathbf{x}_{t,i,1}^{in}, \dots, \mathbf{x}_{t,n,m_{in}}^{in}]^\top$                                                    & Data for inner loop gradient                        \\ \hline
$\mathbf{\Sigma}_{t,i}^{in} \coloneqq \frac{1}{m_{in}} \sum_{j=1}^{m_{in}} \mathbf{x}_{t,i,j}^{in}(\mathbf{x}_{t,i,j}^{in})^\top$                & Empirical covariance matrix for inner loop gradient \\ \hline
$\mathbf{z}_{t,i}^{in} \coloneqq [z_{t,i,1},\dots, z_{t,i,m_{in}}]$                                                                                      & Additive noise for samples for inner loop gradient  \\ \hline
$\del_{t,i}^{in} \coloneqq \mathbf{I}_k - \alpha \mathbf{B}_t^\top \mathbf{\Sigma}_{t,i}^{in}\mathbf{B}_t$ & Finite-sample analogues of $\del_t$ \\ \hline
$\deld_{t,i}^{in}\coloneqq \mathbf{I}_d - \alpha \mathbf{B}_t \mathbf{B}_t^\top \mathbf{\Sigma}_{t,i}^{in}$ & Finite-sample analogues of $\deld_{t}$ \\ \hline

$\mathbf{X}_{t,i}^{out} \coloneqq [\mathbf{x}_{t,i,1}^{out}, \dots, \mathbf{x}_{t,i,m_{out}}^{out}]^\top$                                        & Data for outer loop gradient                        \\ \hline
$\mathbf{\Sigma}_{t,i}^{out} \coloneqq \frac{1}{m_{out}} \sum_{j=1}^{m_{out}} \mathbf{x}_{t,i,j}^{out}(\mathbf{x}_{t,i,j}^{out})^\top$ & Empirical covariance matrix for outer loop gradient \\ \hline
$\mathbf{z}_{t,i}^{out} \coloneqq [z_{t,i,1},\dots, z_{t,i,m_{out}}]$                                                                                & Additive noise for samples for outer loop gradient  \\ \hline
${\delta}_{m,d_1}\coloneqq \tfrac{\sqrt{d_1}+10\sqrt{\log(n)}}{\sqrt{m}}$                                                                                & Local concentration parameter \\ \hline
$\bar{\delta}_{m,d_2}\coloneqq \tfrac{10 \sqrt{d_2}}{\sqrt{nm}}$                                                                                & Global concentration parameter \\ \hline
\end{tabular}
\end{table}

The inner loop update for the head of the $i$-th task on iteration $t$ is given by:
\begin{align}
    \mathbf{w}_{t,i} 
    &= \mathbf{w}_t - \alpha \nabla_{\mathbf{w}} \hat{\mathcal{L}}_i(\mathbf{B}_t, \mathbf{w}_t, \mathcal{D}_{i}^{in}) \nonumber \\
    &=  (\mathbf{I}_k- \alpha\mathbf{B}_t ^\top \mathbf{\Sigma}_{t,i}^{in} \mathbf{B}_t )\mathbf{{w}}_t + \alpha \mathbf{B}_t ^\top \mathbf{\Sigma}_{t,i}^{in} \mathbf{{B}}_\ast\mathbf{{w}}_{\ast,t,i} + \tfrac{\alpha}{m_{in}} \mathbf{B}_t^\top (\mathbf{X}_{t,i}^{in})^\top \mathbf{z}_{t,i}^{in}. \label{upd_head_anil_exact}
\end{align}
For Exact ANIL, the finite-sample loss after the inner loop update is given by:
\begin{align}
\hat{F}_{t,i}(&\mathbf{B}_t, \mathbf{w}_t; \mathcal{D}_{t,i}^{in}, \mathcal{D}_{t,i}^{out}) \nonumber \\ &\coloneqq \hat{\mathcal{L}}_{t,i}(\mathbf{B}_t, \mathbf{w}_t - \alpha \nabla_{\mathbf{w}}\hat{\mathcal{L}}_{t,i}(\mathbf{B}_t, \mathbf{w}_t ;\mathcal{D}_{t,i}^{in}) ;  \mathcal{D}_{t,i}^{out}  )   \nonumber \\
&=  \tfrac{1}{2m_{out}}\sum_{j=1}^{m_{out}}(\mathbf{x}_{t,i,j}^\top \mathbf{B}_t(\mathbf{w}_t - \alpha \mathbf{B}_t^\top \mathbf{\Sigma}_{t,i}^{in} \mathbf{B}_t\mathbf{w}_t + \alpha \mathbf{B}_t^\top \mathbf{\Sigma}_{t,i}^{in} \mathbf{B}_\ast\mathbf{w}_{\ast,t,i} - \tfrac{\alpha}{m_{in}} \mathbf{B}_t^\top (\mathbf{X}_{t,i}^{in})^\top \mathbf{z}_{t,i}^{in}) \nonumber \\
&\quad \quad \quad \quad \quad  \quad \quad - \mathbf{x}_{t,i,j}^\top \mathbf{B}_\ast \mathbf{w}_{\ast,t,i})  - z_{t,i,j}^{out})^2 \nonumber \\
    &= \tfrac{1}{2m_{out}}\sum_{j=1}^{m_{out}}(\mathbf{x}_{t,i,j}^\top{\deldin} (\mathbf{B}_t \mathbf{w}_t - \mathbf{B}_\ast \mathbf{w}_{\ast,t,i})+ \tfrac{\alpha}{m_{in}} \mathbf{x}_{t,i,j}^\top \b \b^\top (\mathbf{X}_{t,i}^{in})^\top\mathbf{z}_{t,i}^{in} 
 - z_{t,i,j})^2 \nonumber \\
 &= \tfrac{1}{2m_{out}} \left\| \mathbf{\hat{v}}_{t,i}\right\|_2^2 \nonumber \\
  \mathbf{\hat{v}}_{t,i}&\coloneqq \mathbf{X}_{t,i}^{out}\deldin (\mathbf{B}_t \mathbf{w}_t - \mathbf{B}_\ast \mathbf{w}_{\ast,t,i})
 + \tfrac{\alpha}{m_{in}} \mathbf{X}_{t,i}^{out} \b\b^\top (\mathbf{X}_{t,i}^{in})^\top\mathbf{z}_{t,i}^{in} - \zout \nonumber 
 \end{align}
Therefore, using the chain rule,  the exact outer loop gradients for the $i$-th task are:
 \begin{align}
{\nabla}_{\mathbf{B}} \hat{F}_{t,i}(\mathbf{B}_t, \mathbf{w}_t ; \mathcal{D}_{t,i}^{in},\mathcal{D}_{t,i}^{in})&= (\deldin)^\top \tfrac{1}{m_{out}} (\mathbf{X}_{t,i}^{out})^\top \mathbf{\hat{v}}_{t,i}\mathbf{w}_t^\top - \alpha \tfrac{1}{m_{out}} (\mathbf{X}_{t,i}^{out})^\top \mathbf{\hat{v}}_{t,i}\w^\top \bsbin \nonumber \\
&\quad + \alpha  \tfrac{1}{m_{out}} (\mathbf{X}_{t,i}^{out})^\top \mathbf{\hat{v}}_{t,i}\mathbf{w}_{\ast,t,i}^\top \bstin  \nonumber \\
 &\quad - \alpha \sin \b\w \mathbf{\hat{v}}_{t,i}^\top \tfrac{1}{m_{out}} \mathbf{X}_{t,i}^{out}\b + \alpha \sin\mathbf{B}_
 \ast\mathbf{w}_{\ast,t,i} \mathbf{\hat{v}}_{t,i}^\top \tfrac{1}{m_{out}} \mathbf{X}_{t,i}^{out} \b \nonumber \\
 &\quad + \tfrac{\alpha^2}{m_{in}m_{out}}(\mathbf{X}_{t,i}^{out})^\top \mathbf{\hat{v}}_{t,i}(\mathbf{z}_{t,i}^{in})^
 \top \mathbf{X}_{t,i}^{in} \mathbf{B}_t +\tfrac{\alpha^2}{m_{in}m_{out}}(\mathbf{X}_{t,i}^{in})^\top\mathbf{z}_{t,i}^{in} \mathbf{\hat{v}}_{t,i}^\top\mathbf{X}_{t,i}^{out} \mathbf{B}_t   \nonumber \\
 {\nabla}_{\mathbf{w}} \hat{F}_{t,i}(\mathbf{B}_t, \mathbf{w}_t; \mathcal{D}_{t,i}^{in}, \mathcal{D}_{t,i}^{out}) &= \b^\top(\deldin)^\top \tfrac{1}{m_{out}}(\mathbf{X}_{t,i}^{out})^\top \mathbf{\hat{v}}_{t,i}- \tfrac{\alpha}{m_{out}}\mathbf{B}_t^\top (\mathbf{X}_{t,i}^{out})^\top\mathbf{z}_{t,i}^{out} \nonumber 
\end{align}
Meanwhile, the first-order outer loop gradients for the $i$-th task are 
\begin{align}
{\nabla}_{\mathbf{B}}\hat{\mathcal{L}}_{t,i}(&\mathbf{B}_t, \mathbf{w}_{t,i}; \mathcal{D}_{t,i}^{in}, \mathcal{D}_{t,i}^{out})\nonumber \\
&= \mathbf{B}_t^\top \mathbf{\Sigma}_{t,i}^{out} \mathbf{B}_t\mathbf{w}_{t,i} - \mathbf{B}_t^\top \mathbf{\Sigma}_{t,i}^{out} \mathbf{B}_\ast \mathbf{w}_{\ast,t,i}  \nonumber \\
    &= \mathbf{\Sigma}^{out}_{t,i}(\mathbf{B}_t\mathbf{w}_{t,i} - \mathbf{B}_\ast \mathbf{w}_{\ast,t,i})\mathbf{w}_{t,i}^\top  - \tfrac{\alpha}{m_{out}} (\mathbf{X}_{t,i}^{out})^\top\mathbf{z}_{t,i}^{out}\mathbf{w}_{t,i}^\top \nonumber \\
    &= \mathbf{\Sigma}^{out}_{t,i}(\mathbf{B}_t(\del_{t,i}^{in}\w + \alpha \mathbf{B}_t^\top \mathbf{\Sigma}_{t,i}^{in}\mathbf{B}_\ast \mathbf{w}_{\ast,t,i}) - \mathbf{B}_\ast \mathbf{w}_{\ast,t,i})(\del_{t,i}^{in}\w +  \alpha \mathbf{B}_t^\top \mathbf{\Sigma}_{t,i}^{in}\mathbf{B}_\ast \mathbf{w}_{\ast,t,i})^{\top}  \nonumber \\
    &\quad - \tfrac{\alpha}{m_{out}} (\mathbf{X}_{t,i}^{out})^\top\mathbf{z}_{t,i}^{out}\mathbf{w}_{t,i}^\top \nonumber \\
    &= \mathbf{\Sigma}^{out}_{t,i}\deld_{t,i}^{in}(\mathbf{B}_t\w - \mathbf{B}_\ast \mathbf{w}_{\ast,t,i})(\del_{t,i}^{in}\w +  \alpha \mathbf{B}_t^\top \mathbf{\Sigma}_{t,i}^{in}\mathbf{B}_\ast \mathbf{w}_{\ast,t,i})^{\top} - \tfrac{\alpha}{m_{out}} (\mathbf{X}_{t,i}^{out})^\top\mathbf{z}_{t,i}^{out}\mathbf{w}_{t,i}^\top \nonumber 
\end{align}
\begin{align}
    {\nabla}_{\mathbf{w}}\hat{\mathcal{L}}_{t,i}(\mathbf{B}_t, \mathbf{w}_{t,i}; \mathcal{D}_{t,i}^{in}, \mathcal{D}_{t,i}^{out}) &= \mathbf{B}_t^\top \sout \deldin( \b \w -  \mathbf{B}_\ast \mathbf{w}_{\ast,t,i}) - \tfrac{\alpha}{m_{out}}\mathbf{B}_t^\top (\mathbf{X}_{t,i}^{out})^\top\mathbf{z}_{t,i}^{out} \nonumber
\end{align}
Define
\begin{align}
    \mathbf{\hat{G}}_{\mathbf{B},t} &\coloneqq \frac{1}{n}\sum_{i=1}^n \nabla_{\mathbf{B}} \hat{F}_{t,i}(\mathbf{B}_t, \mathbf{w}_t) , \quad\mathbf{{G}}_{\mathbf{B},t} \coloneqq \frac{1}{n}\sum_{i=1}^n \nabla_{\mathbf{B}} {F}_{t,i}(\mathbf{B}_t, \mathbf{w}_t)  \nonumber \\
    \mathbf{\hat{G}}_{\mathbf{w},t} &\coloneqq \frac{1}{n}\sum_{i=1}^n \nabla_{\mathbf{w}} \hat{F}_{t,i}(\mathbf{B}_t, \mathbf{w}_t), \quad \mathbf{{G}}_{\mathbf{w},t} \coloneqq \frac{1}{n}\sum_{i=1}^n \nabla_{\mathbf{w}} \hat{F}_{t,i}(\mathbf{B}_t, \mathbf{w}_t) \nonumber
\end{align}

Now we are ready to state the result.
\begin{thm}[ANIL Finite Samples] \label{thm:anil_fs_app}
Suppose Assumptions \ref{assump:tasks_diverse_main}, \ref{assump:tasks_main} and \ref{assump:data} hold. 
Let  $E_0\!\coloneqq\! 0.9 -\! \dist_0^2 - \delta$ for some $\delta \in (0, 1)$ to be defined shortly and assume  $E_0$ is a positive constant. 
Suppose the initialization further satisfies $\alpha \mathbf{B}_0^\top \mathbf{B}_0 = \mathbf{I}_k$ and $\mathbf{w}_0 = \mathbf{0}$, and let the step sizes be chosen as $\alpha \leq \tfrac{c'}{\sqrt{k}L_{\ast}+\sigma}$, 
and $\beta \leq \frac{c'\alpha E_0^2 }{ \kappa_\ast^4}$ for ANIL and  $\beta\leq \frac{c' \alpha E_0^3 \mu_\ast}{ \kappa_{\ast}^4}\min\left(1, \tfrac{\mu_\ast^2}{\eta_\ast^2}\right)$ for FO-ANIL, for some absolute constant $c'$. 
Then there exists a constant $c>0$ such that, for ANIL, if 
\begin{align}
m_{out} &\geq cT^2 \tfrac{k^2(L_\ast+\sigma)^2}{n\eta_\ast^2 \kappa_\ast^8} +  cT\tfrac{k^3 \kappa_{\ast}^2 (\kappa_{\ast}^2+\sigma^2/\mu_\ast^2)}{n } + c\sqrt{T}(k + \tfrac{kd}{n} +\log(n))\kappa_\ast^{-2}(\tfrac{\sigma^2}{ L_\ast^2}+k) + ck + c\log(n)
   \nonumber \\
    m_{in} &\geq cT^2(k^2+k\log(n))\tfrac{(L_{\ast}+\sigma)^2}{\eta_\ast^2 \kappa_\ast^8} + cT (k^3+k\log(n)){(\kappa_\ast^4 +\tfrac{\sigma^4}{\mu_\ast^4})} + c \sqrt{T}\tfrac{  k^3 {d\log(nm_{in})}}{n }\kappa_\ast^{-2}(\tfrac{\sigma^2}{ L_\ast^2}+1)  
\end{align}
and for FO-ANIL, if 
\begin{align}
m_{out}&\geq  c \tfrac{T  dk}{n\kappa_\ast^{2}}+ c\tfrac{T  dk\sigma^2}{n L_\ast^2 \kappa_\ast^{2}}+ c\tfrac{T^2k^3\kappa_\ast^4}{n} +c\tfrac{T^2k^3\sigma^4}{n\mu_\ast^4} + c\tfrac{k\mu_\ast^2}{\eta_\ast^2 \kappa_{\ast}^6} + c\tfrac{k\sigma^2}{\eta_\ast^2 \kappa_{\ast}^8}
\nonumber \\
    m_{in}&\geq c T (k+\log(n))(k\kappa_\ast^2 + \tfrac{\sigma^2}{ \mu_\ast^{2}})+ c\tfrac{T^2 k^3 \kappa_{\ast}^4}{n} + c \tfrac{T^2k^2 \kappa_\ast^2 \sigma^2}{\mu_\ast^{2}n} + c\tfrac{T^2k^2 (L_\ast^2+\sigma^2)}{\eta_\ast^{2}\kappa_\ast^{8}n} \nonumber
\end{align}
then both ANIL and FO-ANIL  satisfy that after $T$ iterations,
\begin{align}
    \dist(\mathbf{{B}}_T, \mathbf{{B}}_\ast) \leq  \left(1 - 0.5\beta \alpha E_0 \mu_\ast^2   \right)^{T-1} + O(\delta)
\end{align}
with probability at least $1 - O(T\exp(-90k)) - \tfrac{T}{\poly(n)} - \tfrac{T}{\poly(m_{in})}$, where for ANIL,
\begin{align}
\delta&= \tfrac{1}{{m_{in}}}\bigg( \sqrt{k}+ \tfrac{ \sigma }{L_\ast}\bigg)+\tfrac{1}{\sqrt{m_{in}}}\bigg((k \kappa_{\ast}^2+ \sqrt{k}\kappa_\ast \sigma/\mu_\ast)(\sqrt{k}+\sqrt{\log(n)})  \bigg)  \nonumber \\
&\quad +\tfrac{1}{\sqrt{m_{out}}}\bigg( (k\kappa_{\ast}^2+\sqrt{k}\kappa_\ast\sigma/\mu_\ast)(\sqrt{k}+\sqrt{\log(n)})  \bigg) \nonumber \\
&\quad +\tfrac{1}{\sqrt{nm_{in}}}\bigg( (k\kappa_{\ast}^2+\sqrt{k}\kappa_\ast\sigma/\mu_\ast) (k \sqrt{d\log(nm_{in})} + k\log(nm_{in}) + \sqrt{d}\log^{1.5}(nm_{in}) +\log^2(nm_{in}))\nonumber \\
&\quad + \tfrac{\sigma^2}{\mu_\ast^2} (\sqrt{kd} +  \sqrt{d}\log(nm_{in}) +\log^{1.5}(nm_{in}))  + (k\kappa_{\ast}^2+\sqrt{k}\kappa_\ast\sigma/\mu_\ast)\sqrt{d}\bigg) \nonumber \\
&\quad+\tfrac{1}{\sqrt{nm_{out}}}\bigg( (k\kappa_{\ast}^2+\sqrt{k}\kappa_\ast\sigma/\mu_\ast)\sqrt{d} + \tfrac{\sigma^2}{\mu_\ast^2} (\tfrac{\sqrt{d}}{\sqrt{m_{in}}} + \sqrt{k})\bigg)  \nonumber 
\end{align}
and for FO-ANIL,
\begin{align}
    \delta = (\sqrt{k}\kappa_{\ast}^2+\tfrac{\kappa_\ast\sigma}{\mu_\ast}+ \tfrac{\sigma^2}{\mu_\ast^2\sqrt{m_{in}}})\tfrac{\sqrt{dk}}{\sqrt{nm_{out}}} 
\end{align}
\end{thm}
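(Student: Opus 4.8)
The plan is to run the same multi-way induction as in the infinite-sample analysis (Theorem~\ref{thm:anil_pop_main} and the logical flow of Figure~\ref{fig:flow}), but to track the deviation of each finite-sample update from its population counterpart and carry the resulting error through the six inductive hypotheses. First I would fix, for every iteration $t\in[T]$ and every sampled task $i\in[n]$, a collection of \emph{concentration events} under which: (i) the relevant projections of the empirical covariances deviate little from their population values, i.e.\ $\|\mathbf{B}_t^\top(\mathbf{\Sigma}_{t,i}^{in}-\mathbf{I}_d)\mathbf{B}_t\|_2$, $\|\mathbf{B}_t^\top(\mathbf{\Sigma}_{t,i}^{in}-\mathbf{I}_d)\mathbf{B}_\ast\|_2$ and their outer-loop analogues are $\tilde{O}((\sqrt{k}+\sqrt{\log n})/\sqrt{m})$ small locally and $\tilde{O}(\sqrt{d}/\sqrt{nm})$ small after averaging over the $n$ tasks; (ii) the label-noise cross-terms $\tfrac{\alpha}{m_{in}}\mathbf{B}_t^\top(\mathbf{X}_{t,i}^{in})^\top\mathbf{z}_{t,i}^{in}$ and the outer-loop noise terms concentrate around zero at the analogous rates scaled by $\sigma$; and (iii) the fourth-order data products that appear only in the Exact ANIL outer-loop gradient concentrate around their expectations. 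A union bound over $t,i$ of these events produces exactly the failure probability $1-O(Te^{-90k})-T/\poly(n)-T/\poly(m_{in})$, provided $m_{in}$ and $m_{out}$ exceed the stated thresholds.

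Conditioned on these events, I would rewrite each finite-sample outer-loop update as its population version plus an explicit error term. Schematically, the representation update becomes $\mathbf{B}_{t+1}=\mathbf{B}_t(\mathbf{I}_k-\beta\mathbf{\Psi}_t)+\beta\mathbf{B}_\ast(\cdots)+\beta\mathbf{E}_{\mathbf{B},t}$ and the head update $\mathbf{w}_{t+1}=(\mathbf{I}_k-\beta(\cdots))\mathbf{w}_t+\beta(\cdots)+\beta\mathbf{e}_{\mathbf{w},t}$, where the error norms $\|\mathbf{E}_{\mathbf{B},t}\|_2,\|\mathbf{e}_{\mathbf{w},t}\|_2$ are bounded by the concentration parameters; their aggregate is precisely the quantity $\delta$ in the statement. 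The six inductive hypotheses are then the finite-sample analogues of $A_1(t),\dots,A_6(t)$, with $E_0$ replaced by $0.9-\dist_0^2-\delta$ (the extra $-\delta$ absorbing both the loss of adapted-head diversity when the empirical covariance in Lemmas~\ref{lem:sigminE} and~\ref{lem:sigmin} is replaced by $\mathbf{I}_d$ and the additive noise contribution $\tfrac{\alpha}{m_{in}}\mathbf{B}_t^\top(\mathbf{X}_{t,i}^{in})^\top\mathbf{z}_{t,i}^{in}$ entering $\mathbf{\Psi}_t$), and with the bounds on $\|\mathbf{w}_t\|_2$, $\|\del_t\|_2$ and $\|\mathbf{B}_{\ast,\perp}^\top\mathbf{B}_t\|_2$ each gaining an additive $\tilde{O}(\delta)$ floor. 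The contraction implication $A_4(t)\Rightarrow A_5(t+1)$ becomes $\|\mathbf{B}_{\ast,\perp}^\top\mathbf{B}_{t+1}\|_2\le(1-\tfrac{1}{2}\beta\alpha E_0\mu_\ast^2)\|\mathbf{B}_{\ast,\perp}^\top\mathbf{B}_t\|_2+\beta\|\mathbf{B}_{\ast,\perp}^\top\mathbf{E}_{\mathbf{B},t}\|_2$; unrolling over $T$ steps and summing the geometric series gives $\dist_T\le(1-0.5\beta\alpha\mu_\ast^2)^{T-1}+\tilde{O}(\delta)$. Uniform control of $\|\del_t\|_2$ and $\|\mathbf{w}_t\|_2$ (the finite-sample versions of Lemmas~\ref{lem:reg},~\ref{lem:w_anil_fss} and, for Exact ANIL,~\ref{lem:ea_pop_a},~\ref{lem:ea_pop_a1}) goes through via Lemmas~\ref{lem:gen_w} and~\ref{lem:gen_del}, except that the recursion now has a nonzero $\tilde{O}(\delta)$ forcing term; requiring this forcing term to remain below the absolute constants used in hypothesis $A_3$ and in the diversity lemma is what forces the $T$- and $T^2$-dependence of the required $m_{in},m_{out}$ (the $T^2$ appearing because the per-step $\tilde{O}(\delta)$ error in $\|\mathbf{w}_t\|_2$ compounds against the $\{\|\del_s\|_2\}$ sum, itself of size $\tilde{O}(\delta T)$).

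The main obstacle is part (iii): concentration of the fourth-order products of the data appearing in the Exact ANIL outer-loop gradient. After expanding $\nabla_{\mathbf{B}}\hat{F}_{t,i}$ one encounters terms of the form $\tfrac{1}{m_{out}}\sum_j \mathbf{x}_{t,i,j}\mathbf{x}_{t,i,j}^\top\mathbf{A}\,\mathbf{x}_{t,i,j}\mathbf{x}_{t,i,j}^\top$ with $\mathbf{A}$ of rank $k$, together with sixth-order noise-times-data products. Generic matrix concentration for such products would demand $\Omega(d^2)$ samples, incompatible with the target $\tilde{\Omega}(\tfrac{k^3d}{n}+k^3)$. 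The resolution is to exploit that these products only ever act on rank-$k$ subspaces: I would apply a truncated version of the low-rank random-matrix concentration inequality of \cite{magen2011low}, truncating the heavy tail of $\|\mathbf{x}\|_2^2$ at $\tilde{O}(d)$ and bounding the truncation bias via the $L^4$--$L^2$ hypercontractivity of sub-gaussian data, which yields concentration at rate $\tilde{O}(\sqrt{(kd+k^2)/m_{out}})$ locally and $\tilde{O}(\sqrt{kd/(nm_{out})})$ after task averaging; this is what produces the $\tfrac{k^3d}{n}+k^3$ and $\tfrac{dk+k^3}{n}$ terms. A secondary nuisance is the bias from replacing $\mathbf{\Sigma}_{t,i}^{in}$ by $\mathbf{I}_d$ inside the adapted head: in most terms $\mathbf{\Sigma}_{t,i}^{in}-\mathbf{I}_d$ multiplies a factor that is already $\del_t$-small, so its contribution is second order, but in the noise term it enters $\mathbf{\Psi}_t$ additively and must be shown not to spoil the lower bound $0.9\alpha E_0\mu_\ast^2\mathbf{I}$, which is the source of the $\sigma^2/\mu_\ast^2$ contributions to $\delta$. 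Once all error terms are collected into $\delta$ and shown to be at most a small constant times $E_0\mu_\ast^2$ under the sample-size hypotheses, the induction closes exactly as in the population case, the only change to the conclusion being the additive $\tilde{O}(\delta)$.
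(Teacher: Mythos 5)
Your proposal matches the paper's actual proof strategy closely: a multi-way induction with perturbed finite-sample analogues of the population inductive hypotheses, Bernstein-type concentration for the cross-$\mathbf{\Sigma}^{in}$/$\mathbf{\Sigma}^{out}$ terms, and, for the terms where the \emph{same} empirical covariance appears twice (so that one is genuinely facing a fourth-order data product that cannot be handled by splitting into independent pieces), a truncated version of the Magen--Zouzias low-rank matrix concentration bound together with $L^4$--$L^2$ hypercontractivity to control the truncation bias --- this is precisely the chain of ideas in Lemmas~\ref{lem:fourth}, \ref{lem:L4L2} and the application of Theorem~\ref{thm:magen}. Two small inaccuracies worth noting: the finite-sample induction in the paper runs over \emph{five} hypotheses $A_1,\dots,A_5$ (not six; $A_4$ and $A_5$ are collapsed), and the same-covariance fourth-order products that trigger the Magen--Zouzias machinery actually come from $\mathbf{\Sigma}_{t,i}^{in}$ appearing twice inside $\deldin(\cdot)\deldin$ in $\mathbf{\hat{v}}_{t,i}$ rather than from $\mathbf{\Sigma}_{t,i}^{out}$; but neither affects the substance of your argument.
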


\begin{proof}
 
The proof uses an inductive argument with the following five inductive hypotheses:
  \begin{enumerate}
\item $A_1(t) \coloneqq \{\|\mathbf{w}_{t}\|_2 \leq \frac{\sqrt{\alpha} E_0 }{10} \min(1, \tfrac{\mu_\ast^2}{\eta_\ast^2}) \eta_\ast\}$ 
 \item 
$A_2(t) \coloneqq \{\| \del_t \|_2 \leq (1 - 0.5 \beta \alpha E_0 \mu_\ast^2 )\|\del_{t-1} \|_2   + \tfrac{5}{4}\alpha^2 \beta^2 L_\ast^4 \dist_{t-1}^2+  \beta \alpha \zeta_2\},$
\item $A_3(t) \coloneqq \{\| \del_t\|_2 \leq \frac{1}{10}\}$,
     \item 
      $A_4(t) \coloneqq \{ \|\mathbf{{B}}_{\ast,\perp}^\top \mathbf{B}_{t}\|_2 \leq \left(1 -  0.5 \beta\alpha E_0 \mu_\ast^2  \right) \|\mathbf{{B}}_{\ast,\perp}^\top \mathbf{B}_{t-1}\|_2 + \beta \sqrt{\alpha} \zeta_4\}$,
      \item $A_5(t) \coloneqq \{ \dist_{t} \leq \tfrac{\sqrt{10}}{3}\left(1 - 0.5 \beta  \alpha E_0 \mu_\ast^2  \right)^t\dist_0 +\delta \}$.
 \end{enumerate}
where $\zeta_2$ is defined separately for ANIL and FO-ANIL in Lemmas \ref{lem:exactanil_fs_a2} and \ref{lem:reg_fs_foanil}, respectively, and $\zeta_4$ is defined separately for ANIL and FO-ANIL in Lemmas \ref{lem:exactanil_fs_a4} and \ref{lem:contract_app_fs}, respectively.
These conditions hold for iteration $t=0$ due to the choice of initialization $(\mathbf{B}_0,\mathbf{w}_0)$. We will show that if they hold for all iterations up to and including  iteration $t$ for an arbitrary $t$, then they hold at iteration $t+1$ with probability at least $1-\tfrac{1}{\poly(n)} -\tfrac{1}{\poly(m_{in})} - O(\exp(-90k))$.

\begin{enumerate}
\item $\bigcap_{s=0}^t \{A_2(s) \cap A_6(s)\}   \implies A_1(t+1)$. This is Lemma \ref{lem:w_anil_fs} for FO-ANIL and Lemma \ref{lem:exactanil_fs_a1} for Exact ANIL.

\item $A_1(t) \cap A_3(t) \cap A_5(t)  \implies A_2(t+1)$. This is Lemma \ref{lem:reg_fs_foanil} for FO-ANIL and Lemma \ref{lem:exactanil_fs_a2} for Exact ANIL.

\item $A_1(t) \cap A_2(t\!+\!1) \cap A_3(t) \cap A_5(t)  \implies A_3(t+1)$.
This is Corollary \ref{cor:foanil_rep} for FO-ANIL and Corollary \ref{cor:exactanil_fs_a3} for Exact ANIL.

\item $A_1(t) \cap A_3(t) \cap A_5(t) \implies A_4(t+1)$. This is Lemma \ref{lem:contract_app_fs} for FO-ANIL and Lemma \ref{lem:exactanil_fs_a4} for Exact ANIL.

\item $A_3(t\!+\!1) \cap \left( \cap_{s=1}^{t+1} A_4(s) \right) \implies A_5(t+1) $. 
By $A_3(t+1)$ and $A_4(t+1)$ we have:
\begin{align}
     \|\mathbf{B}_{\ast,\perp}^\top \mathbf{B}_{t+1}\|_2 &\leq (1 - 0.5 \beta \alpha E_0 \mu_\ast^2)\|\mathbf{B}_{\ast,\perp}^\top \mathbf{B}_{t}\|_2 + \beta \sqrt{\alpha}\zeta_4 \nonumber \\
     &\leq (1 - 0.5 \beta \alpha E_0 \mu_\ast^2)^{2}\|\mathbf{B}_{\ast,\perp}^\top \mathbf{B}_{t-1}\|_2 + (1 - 0.5 \beta \alpha E_0 \mu_\ast^2)\beta \sqrt{\alpha}{\zeta_4} + \beta \sqrt{\alpha}\zeta_4 \nonumber \\
     &\quad \vdots \nonumber \\
     &\leq (1 - 0.5 \beta \alpha E_0 \mu_\ast^2)^{t}\|\mathbf{B}_{\ast,\perp}^\top \mathbf{B}_{0}\|_2+ \beta \sqrt{\alpha}\zeta_4 \sum_{s=0}^{t} {(1 - 0.5 \beta \alpha E_0 \mu_\ast^2)}^{s} \nonumber \\
   &\leq (1 - 0.5 \beta \alpha E_0 \mu_\ast^2)^{t}\|\mathbf{B}_{\ast,\perp}^\top \mathbf{B}_{0}\|_2+ \frac{\beta \sqrt{\alpha}\zeta_4 }{ 1-(1 - 0.5 \beta \alpha E_0 \mu_\ast^2)} \nonumber \\
   &= (1 - 0.5 \beta \alpha E_0 \mu_\ast^2)^{t}\|\mathbf{B}_{\ast,\perp}^\top \mathbf{B}_{0}\|_2+ \frac{2 \zeta_4 }{ \sqrt{\alpha} E_0 \mu_\ast^2}. \label{dots}
\end{align}
Now we orthogonalize $\mathbf{B}_t$ and $\mathbf{B}_0$ via the QR-factorization, writing $\mathbf{B}_t = \mathbf{\hat{B}}_t \mathbf{R}_t$ and $\mathbf{B}_0 = \mathbf{\hat{B}}_0 \mathbf{R}_0$. By inductive hypothesis $A_3(t+1)$, we have $ \sigma_{\min}(\mathbf{B}_{t+1})\geq {\frac{\sqrt{0.9}}{\sqrt{\alpha}}}$, and by the initialization  we have $\sigma_{\max}(\mathbf{B}_0) \leq {\frac{1}{\sqrt{\alpha}}}$. Thus, using \eqref{dots} and the definition of the principal angle distance, we have
\begin{align}
     \dist(\mathbf{{B}}_{t+1}, \mathbf{{B}}_\ast ) &\leq \left((1 - 0.5 \beta \alpha E_0 \mu_\ast^2)^t\dist(\mathbf{{B}}_{0}, \mathbf{{B}}_\ast)\|\mathbf{R}_0\|_2  + \frac{2 \zeta_4 }{ \sqrt{\alpha} E_0 \mu_\ast^2}\right)\|\mathbf{R}_{t+1}^{-1}\|_2 \nonumber \\
     &\leq \frac{\sqrt{10}}{3}(1 - 0.5 \beta \alpha E_0 \mu_\ast^2)^t\dist(\mathbf{{B}}_{0}, \mathbf{{B}}_\ast) + \frac{3 \zeta_4 }{ E_0 \mu_\ast^2} \\
     &\leq (1 - 0.5 \beta \alpha E_0 \mu_\ast^2)^t + \delta
\end{align}
where $\varepsilon = O(\tfrac{\zeta_4}{\mu_\ast^2}) 
$. 
\end{enumerate}
After $T$ rounds, we have that the inductive hypotheses hold on every round with probability at least 
\begin{align}
   (1-\tfrac{1}{\poly(n)} -\tfrac{1}{\poly(m_{in})} - O(\exp(-90k)))^{T}\geq  1 - O(T\exp(-90k)) - \tfrac{T}{\poly(n)} - \tfrac{T}{\poly(m_{in})}
\end{align}
where the inequality follows by the Weierstrass Inequality, completing the proof.
\end{proof}

Throughout the proof we will re-use $c$, $c', c''$, etc. to denote absolute constants.

\subsection{General Concentration Lemmas}
We start with generic concentration results for random matrices and vectors  that will be used throughout the proof. 

We use $\chi_{\mathcal{E}}$ to denote the indicator random variable for the event $\mathcal{E}$, i.e. $\chi_{\mathcal{E}} = 1$ if $\mathcal{E}$ holds and $\chi_{\mathcal{E}} = 0$ otherwise.







\begin{lemma}
\label{lem:gen1}
Let $\mathbf{X}_{1} = [\mathbf{x}_{1,1},\dots,\mathbf{x}_{1,m_{1}}]^{\top} \in \mathbb{R}^{m_1 \times d}$ have rows which are i.i.d. samples from a mean-zero, $\mathbf{I}_d$-sub-gaussian distribution, and let $\mathbf{X}_{1,1},\dots,\mathbf{X}_{1,n}$ be independent copies of $\mathbf{X}_1$. Likewise, let  $\mathbf{X}_2= [\mathbf{x}_{2,1},\dots,\mathbf{x}_{2,m_{2}}]^{\top}\in \mathbb{R}^{m_2 \times d}$ have rows which are i.i.d. samples from a mean-zero, $\mathbf{I}_d$-sub-gaussian distribution, and let $\mathbf{X}_{2,1},\dots,\mathbf{X}_{2,n}$ be independent copies of $\mathbf{X}_2$ (and independent of $\mathbf{X}_{1,1},\dots,\mathbf{X}_{1,n}$). Define $\mathbf{\Sigma}_{1,i}\coloneqq \frac{1}{m_1}\mathbf{X}_{1,i}^\top \mathbf{X}_{1,i}$ and $\mathbf{\Sigma}_{2,i}\coloneqq \frac{1}{m_2}\mathbf{X}_{2,i}^\top \mathbf{X}_{2,i}$ for all $i \in [n]$. Let the elements of $\mathbf{z}_1\in \mathbb{R}^{m_1}$ and $\mathbf{z}_2\in \mathbb{R}^{m_2}$ be i.i.d. samples from $\mathcal{N}(0,\sigma^2)$.
Further, let $\mathbf{C}_{\ell,i}\in \mathbb{R}^{d\times d_{\lfloor \ell/2 \rfloor}}$ for $\ell = 1,\dots,6$ be fixed matrices for $i \in [n]$, and let $c_\ell \coloneqq \max_{i\in[n]}\|\mathbf{C}_{\ell,i}\|_2$ for $\ell=1,\dots,6$. Let $\delta_{m,d_{l}} \coloneqq c \frac{\sqrt{d_l}+10\sqrt{\log(n)}}{\sqrt{m}}$ and $\bar{\delta}_{m,d_{l}} \coloneqq c \frac{10\sqrt{d_l}}{\sqrt{nm}}$ for some absolute constant $c$. Assume that in all cases below, each $\delta$ and $\bar{\delta}$ is less than 1.  Then the following hold:
\begin{enumerate}
    \item $\mathbb{P}\left(\left\|\frac{1}{n}\sum_{i=1}^n \mathbf{C}_{1,i}^\top \mathbf{\Sigma}_{1,i} \mathbf{C}_{2,i} -  \mathbf{C}_{1,i}^\top \mathbf{C}_{2,i} \right\|_2\geq c_1c_2\bar{\delta}_{m_1,d_0+d_1} \right)\leq 2e^{-90(d_0+d_1)}$
    \item $\mathbb{P}\bigg(\left\|\frac{1}{n}\sum_{i=1}^n \mathbf{C}_{1,i}^\top \mathbf{\Sigma}_{1,i} \mathbf{C}_{2,i}\mathbf{C}_{3,i}^\top\mathbf{\Sigma}_{2,i}\mathbf{C}_{4,i} -  \mathbf{C}_{1,i}^\top \mathbf{C}_{2,i}\mathbf{C}_{3,i}^\top \mathbf{C}_{4,i}  \right\|_2 \\
    \geq \sigma c_1c_2c_3c_4\left((1+\delta_{m_2,d_1+d_2})\bar{\delta}_{m_1, d_0\!+\!d_2}+ \bar{\delta}_{m_2, d_0\!+\!d_2} \right)\leq 2e^{-90(d_0+d_2)} + 2n^{-99}$ 
        \item $\mathbb{P}\bigg(\left\|\frac{1}{n}\sum_{i=1}^n \mathbf{C}_{1,i}^\top \mathbf{\Sigma}_{1,i} \mathbf{C}_{2,i}\mathbf{C}_{3,i}^\top\mathbf{\Sigma}_{2,i}\mathbf{C}_{4,i} -  \mathbf{C}_{1,i}^\top \mathbf{C}_{2,i}\mathbf{C}_{3,i}^\top \mathbf{C}_{4,i}  \right\|_2 \\
    \geq \sigma c_1c_2c_3c_4\left((1+\delta_{m_2,d_1+d_2})\bar{\delta}_{m_1, d_0\!+\!d_2}+ {\delta}_{m_2, d_1\!+\!d_2} \right)\leq 2e^{-90(d_0+d_2)} + 2n^{-99}$ 
    \item $\mathbb{P}\left(\left\|\frac{1}{n}\sum_{i=1}^n \mathbf{C}_{1,i}^\top\mathbf{X}_{1,i}^\top \mathbf{z}_{1,i} \right\|_2 \geq \sigma c_1 \bar{\delta}_{m_1,d_0} \right)\leq 2e^{-90d_0}$ 
    \item  $ \mathbb{P}\left(\left\|\frac{1}{n}\sum_{i=1}^n \mathbf{C}_{1,i}^\top \mathbf{\Sigma}_{1,i} \mathbf{C}_{2,i}\mathbf{C}_{3,i}^\top\frac{1}{m_2}\mathbf{X}_{2,i}^\top \mathbf{z}_{2,i} \right\|_2 \geq c_1 c_2 c_3(1\! +\!\bar{\delta}_{m_1,d_0}){\delta}_{m_2,d_1}  \right)\leq 2e^{-90d_1} + 2n^{-99}$  
    \item $\mathbb{P}\left(\left\|\frac{1}{n}\sum_{i=1}^n \mathbf{C}_{1,i}^\top\frac{1}{m_1}\mathbf{X}_{1,i}^\top \mathbf{z}_{1,i}\mathbf{c}_{2,i}^\top \mathbf{\Sigma}_{2,i}\mathbf{C}_{3,i} \right\|_2 \geq \sigma c_1 c_2 c_3 (1+ {\delta}_{m_2,d_1}) \bar{\delta}_{m_1,d_0} \right)\leq 2e^{-90d_0} + 2n^{-99}$ 
    \item $\mathbb{P}\left(\left\|\frac{1}{n}\sum_{i=1}^n \mathbf{C}_{1,i}^\top\frac{1}{m_1}\mathbf{X}_{1,i}^\top \mathbf{z}_{1,i} \frac{1}{m_2}\mathbf{z}_{2,i}^\top \mathbf{X}_{2,i}\mathbf{C}_{2,i} \right\|_2 \geq \sigma^2 c_1 c_2 \bar{\delta}_{m_1, d_0}{\delta}_{m_2,d_1} \right)\leq 2e^{-90d_0} + 2e^{-90d_1}$
    \item $\mathbb{P}\bigg(\left\|\frac{1}{n}\sum_{i=1}^n \mathbf{C}_{1,i}^\top \mathbf{\Sigma}_{1,i} \mathbf{C}_{2,i}\mathbf{C}_{3,i}^\top\mathbf{\Sigma}_{2,i}\mathbf{C}_{4,i}\mathbf{C}_{5,i}^\top \mathbf{\Sigma}_{2,i} \mathbf{C}_{6,i}  -  \mathbf{C}_{1,i}^\top \mathbf{C}_{2,i}\mathbf{C}_{3,i}^\top \mathbf{C}_{4,i}\mathbf{C}_{5,i}^\top \mathbf{C}_{6,i} \right\|_2 \\
    \quad \quad \quad \geq c_1c_2c_3c_4 c_5 c_6\left((1+{\delta}_{m_2, d_1 + d_2})(1+ {\delta}_{m_2,d_2 + d_3})\bar{\delta}_{m_1, d_0+d_3}+ {\delta}_{m_2,d_2+d_3}+ (1+\delta_{m_2,d_2+d_3})\delta_{m_2, d_1+d_2} \right) \bigg)\\
    \leq 2e^{-90(d_0+d_3)} + 4n^{-99}$ 
    \item $\mathbb{P}\bigg(\left\|\frac{1}{n}\sum_{i=1}^n \mathbf{C}_{1,i}^\top \mathbf{\Sigma}_{1,i} \mathbf{C}_{2,i}\mathbf{C}_{3,i}^\top\mathbf{\Sigma}_{2,i}\mathbf{C}_{4,i}\mathbf{C}_{5,i}^\top \mathbf{\Sigma}_{1,i} \mathbf{C}_{6,i}  -  \mathbf{C}_{1,i}^\top \mathbf{C}_{2,i}\mathbf{C}_{3,i}^\top \mathbf{C}_{4,i}\mathbf{C}_{5,i}^\top \mathbf{C}_{6,i} \right\|_2 \\
    \quad \quad \quad \geq c_1c_2c_3c_4 c_5 c_6\left((1+{\delta}_{m_1, d_0 + d_1})(1+ {\delta}_{m_1,d_2 + d_3})\bar{\delta}_{m_2, d_1+d_2}+ \bar{\delta}_{m_1,d_0+d_3}+ (1+\delta_{m_1,d_2+d_3})\delta_{m_1, d_0+d_1} \right)  \bigg) \\
    \leq 2e^{-90(d_0+d_3)} + 4n^{-99}$  
    \item $\mathbb{P}\bigg(\left\|\frac{1}{n}\sum_{i=1}^n \mathbf{C}_{1,i}^\top \mathbf{\Sigma}_{1,i} \mathbf{C}_{2,i}\mathbf{C}_{3,i}^\top\mathbf{X}_{2,i}^\top \mathbf{z}_{2,i}\mathbf{c}_{4,i}^\top \mathbf{C}_{5,i}^\top \mathbf{\Sigma}_{2,i}\mathbf{C}_{6,i}   \right\|_2 \\
    \quad \quad \quad \geq \sigma c_1c_2c_3c_4 c_5 c_6 \left(1 +\bar{\delta}_{m_1,d_0+d_1}\right)\left(1\! +\! \delta_{m_2,d_3}\right)\delta_{m_2,d_1} \bigg) \leq 2e^{-90d_1} + 4n^{-99}$
    \item $\mathbb{P}\bigg(\left\|\frac{1}{n}\sum_{i=1}^n \mathbf{C}_{1,i}^\top \mathbf{\Sigma}_{1,i} \mathbf{C}_{2,i}\mathbf{C}_{3,i}^\top\mathbf{\Sigma}_{2,i}\mathbf{C}_{4,i}\mathbf{C}_{5,i}^\top\mathbf{X}_{1,i}^\top \mathbf{z}_{1,i} \right\|_2 \\
    \quad \quad \quad \geq \sigma c_1c_2c_3c_4 c_5 c_6 \left(1 +{\delta}_{m_1,d_0+d_1}\right)\left(1\! +\! \bar{\delta}_{m_2,d_1+d_2}\right)\delta_{m_1,d_2} \bigg) \leq 2e^{-90d_2} + 4n^{-99} $
    \item $\mathbb{P}\bigg(\left\|\frac{1}{n}\sum_{i=1}^n \mathbf{C}_{1,i}^\top \mathbf{X}_{1,i}^\top \mathbf{z}_{1,i}  \mathbf{c}_{2,i}^\top \mathbf{C}_{3,i}^\top \mathbf{\Sigma}_{1,i} \mathbf{C}_{4,i}\mathbf{C}_{5,i}^\top\mathbf{\Sigma}_{2,i}\mathbf{C}_{6,i} \right\|_2 \\
    \quad \quad \quad \geq \sigma c_1c_2c_3c_4 c_5 c_6 \left(1 +{\delta}_{m_2,d_2+d_3}\right)\left(1\! +\! {\delta}_{m_2,d_1+d_2}\right)\delta_{m_1,d_0} \bigg) \leq 2e^{-90d_0} + 6n^{-99} $
    \item $\mathbb{P}\bigg(\left\|\frac{1}{n}\sum_{i=1}^n \mathbf{C}_{1,i}^\top \mathbf{\Sigma}_{1,i} \mathbf{C}_{2,i}\mathbf{C}_{3,i}^\top\mathbf{X}_{2,i}^\top \mathbf{z}_{2,i}\mathbf{z}_{2,i}^\top \mathbf{X}_{2,i}\mathbf{C}_{4,i}   \right\|_2 \\
    \quad \quad \quad \geq \sigma^2 c_1c_2c_3c_4 c_5 \delta_{m_2,d_1}\delta_{m_2,d_2} \left(1\!+\!\bar{\delta}_{m_1, d_0+d_2}\right)  \bigg) \leq 2e^{-90d_1} + 2e^{-90d_2}+ 4n^{-99}$
\end{enumerate}
\end{lemma}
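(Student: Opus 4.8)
The plan is to reduce all fourteen estimates to two ``atomic'' concentration facts and then obtain the rest by repeatedly peeling off independent factors. The first atom is \emph{local} (single-task) concentration: for fixed $\mathbf{M}_1\in\mathbb{R}^{d\times r_1},\mathbf{M}_2\in\mathbb{R}^{d\times r_2}$ and $\mathbf{\Sigma}=\tfrac1m\mathbf{X}^\top\mathbf{X}$ with $\mathbf{X}$ having i.i.d.\ $\mathbf{I}_d$-sub-gaussian rows, an $\varepsilon$-net argument over the unit balls of $\mathbb{R}^{r_1},\mathbb{R}^{r_2}$ (equivalently, matrix Bernstein after truncating $\|\mathbf{x}\|_2\lesssim\sqrt d$) gives $\|\mathbf{M}_1^\top(\mathbf{\Sigma}-\mathbf{I}_d)\mathbf{M}_2\|_2\lesssim\|\mathbf{M}_1\|_2\|\mathbf{M}_2\|_2\,\tfrac{\sqrt{r_1+r_2}+\sqrt t}{\sqrt m}$ with probability $1-2e^{-t}$; taking $t=90(d_0+d_1)+c\log n$ and union bounding over the $n$ tasks produces the local rate $\delta_{m,d_0+d_1}$ uniformly in $i$ (this is why $\delta_{m,\cdot}$ carries the $\sqrt{\log n}$ slack). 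The second atom is \emph{global} (averaged) concentration: $\tfrac1n\sum_i\mathbf{C}_{1,i}^\top(\mathbf{\Sigma}_{1,i}-\mathbf{I}_d)\mathbf{C}_{2,i}$ is an average of $n$ independent mean-zero matrices, and the truncated concentration inequality for low-rank random matrices of \cite{magen2011low}, applied after discarding $\{\max_j\|\mathbf{x}_j\|_2\gtrsim\sqrt d\}$ and bounding the truncation bias via L4--L2 hypercontractivity of the features, gives operator-norm deviation $\lesssim c_1c_2\,\tfrac{\sqrt{d_0+d_1}}{\sqrt{nm}}$ with probability $1-2e^{-90(d_0+d_1)}$, i.e.\ the rate $\bar\delta_{m_1,d_0+d_1}$. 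The noise analogues of both atoms follow by conditioning on the data: $\mathbf{M}^\top\tfrac1m\mathbf{X}^\top\mathbf{z}$ is Gaussian with covariance $\tfrac{\sigma^2}{m}\mathbf{M}^\top\mathbf{\Sigma}\mathbf{M}$, and $\tfrac1n\sum_i\mathbf{C}_{1,i}^\top\tfrac1m\mathbf{X}_{1,i}^\top\mathbf{z}_{1,i}$ is Gaussian with covariance $\preceq\tfrac{\sigma^2c_1^2(1+\bar\delta)}{nm}\mathbf{I}$ on the good event; Gaussian norm concentration then yields the rates $\sigma c_1\delta_{m,d_0}$ and $\sigma c_1\bar\delta_{m_1,d_0}$. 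These atoms prove items 1 and 4 directly.

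\textbf{Two-factor products (items 2, 3, 5, 6, 7).} Each of these is an average of a dataset-1 factor times an \emph{independent} dataset-2 factor. For item 2, write $\mathbf{\Sigma}_{2,i}=\mathbf{I}_d+\mathbf{E}_{2,i}$ and condition on $\{\mathbf{X}_{2,i}\}_i$: the local atom bounds $\|\mathbf{C}_{3,i}^\top\mathbf{E}_{2,i}\mathbf{C}_{4,i}\|_2\le c_3c_4\delta_{m_2,d_1+d_2}$ for all $i$ (probability $1-2n^{-99}$), so $\tilde{\mathbf{C}}_{2,i}\coloneqq\mathbf{C}_{2,i}(\mathbf{C}_{3,i}^\top\mathbf{C}_{4,i}+\mathbf{C}_{3,i}^\top\mathbf{E}_{2,i}\mathbf{C}_{4,i})$ is measurable w.r.t.\ dataset $2$ (hence independent of dataset $1$) with $\|\tilde{\mathbf{C}}_{2,i}\|_2\le c_2c_3c_4(1+\delta_{m_2,d_1+d_2})$. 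Applying the global atom to $\tfrac1n\sum_i\mathbf{C}_{1,i}^\top(\mathbf{\Sigma}_{1,i}-\mathbf{I}_d)\tilde{\mathbf{C}}_{2,i}$ (dimension $d_0+d_2$) and adding the residual $\tfrac1n\sum_i\mathbf{C}_{1,i}^\top\tilde{\mathbf{C}}_{2,i}-\mathbf{C}_{1,i}^\top\mathbf{C}_{2,i}\mathbf{C}_{3,i}^\top\mathbf{C}_{4,i}$ gives exactly the $(1+\delta_{m_2,d_1+d_2})\bar\delta_{m_1,d_0+d_2}+\bar\delta_{m_2,d_0+d_2}$ form; item 3 is identical except the outer deviation in the residual is bounded locally rather than globally. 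Items 5--7 replace one or both $\mathbf{\Sigma}$'s by $\tfrac1m\mathbf{X}^\top\mathbf{z}$: freeze whichever randomness is not being averaged, bound it with the noise local atom, and finish with the appropriate global atom; two noise vectors (item 7) produce the $\sigma^2$ prefactor and a product of two independent Gaussian tail bounds. The $\sigma,\sigma^2$ prefactors merely count noise vectors, and each stated failure probability is the sum of those of the atoms invoked.

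\textbf{Triple products (items 8--14).} These are the three-covariance and covariance-plus-two-noise terms that appear when the ANIL outer-loop gradient is expanded into sixth- and eighth-order products of the data. The peeling step iterates: freeze the innermost factor independent of the rest, bound it locally, fold it into the running coefficient matrix, repeat, and finish with one global atom. The genuinely new difficulty is when the \emph{same} dataset appears twice (e.g.\ $\mathbf{\Sigma}_{2,i}$ in items 8, 10, 11 or $\mathbf{\Sigma}_{1,i}$ in items 9, 12--14), so conditioning one factor on the other is not available. There we expand $(\mathbf{I}_d+\mathbf{E}_{2,i})\mathbf{M}(\mathbf{I}_d+\mathbf{E}_{2,i})=\mathbf{M}+\mathbf{E}_{2,i}\mathbf{M}+\mathbf{M}\mathbf{E}_{2,i}+\mathbf{E}_{2,i}\mathbf{M}\mathbf{E}_{2,i}$; the first three pieces are linear in a single empirical covariance and are handled as before, while $\tfrac1n\sum_i\mathbf{E}_{2,i}\mathbf{M}\mathbf{E}_{2,i}$ is a genuinely fourth-order dataset-2 term whose mean is a bias of size $\lesssim\tfrac{\Tr(\mathbf{M})}{m_2}$ (this is the source of the $\delta_{m_2,\cdot}\,\delta_{m_2,\cdot}$-type summands in items 8, 9) and whose fluctuation is controlled again by the truncated low-rank concentration together with hypercontractivity. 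Items 10--14 combine this fourth-order expansion with the Gaussian/Hanson--Wright estimate for the noise ($\mathbf{z}^\top(\cdots)\mathbf{z}$ in item 14, $\tfrac1m\mathbf{X}^\top\mathbf{z}$ in items 10--13), always conditioning the noise on the data first. A final union bound over all invoked events gives the stated probabilities.

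\textbf{Main obstacle.} The only nontrivial ingredient is the global atom: keeping the dimension dependence at $\sqrt{d_0+d_r}$ (the dimension of the \emph{output}) rather than a spurious $\sqrt d$, even though the summands are unbounded products of sub-gaussian matrices. This is precisely where the truncated low-rank matrix concentration of \cite{magen2011low} and the L4--L2 hypercontractivity of the features are needed, so as to certify that the truncation bias is $\tilde{O}(\tfrac1m+\tfrac1{\sqrt{nm}})$ and hence absorbable into $\bar\delta$. Everything else is bookkeeping: matching each of the fourteen expressions to the correct sequence of freeze/local-bound/global-bound moves, carefully tracking which datasets are independent (conditioning permitted) versus repeated (the quadratic-form-plus-hypercontractivity argument required), and summing the $e^{-90(\cdot)}$ and $n^{-99}$ failure terms.
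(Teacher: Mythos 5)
Your high-level picture — reduce to a local single-task bound and a global averaged bound, then peel off independent factors by conditioning — is close to the paper's strategy, but you misidentify the tool needed for the ``global atom'' in item (1), and this error propagates.

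For $\tfrac1n\sum_i\mathbf{C}_{1,i}^\top(\mathbf{\Sigma}_{1,i}-\mathbf{I}_d)\mathbf{C}_{2,i}$, you claim the truncated low-rank matrix concentration of Magen--Zouzias plus L4--L2 hypercontractivity are needed to avoid a spurious $\sqrt d$. They are not, and in fact they cannot deliver the stated failure probability. Because the covariance is sandwiched between the fixed low-rank matrices $\mathbf{C}_{1,i},\mathbf{C}_{2,i}$, the scalar $(\mathbf{u}^\top\mathbf{C}_{1,i}^\top\mathbf{x}_{1,i,j})(\mathbf{v}^\top\mathbf{C}_{2,i}^\top\mathbf{x}_{1,i,j})$ for fixed $\mathbf{u}\in\mathbb{R}^{d_0},\mathbf{v}\in\mathbb{R}^{d_1}$ is sub-exponential, and the paper just applies Bernstein's inequality to the $nm_1$ i.i.d.\ sub-exponential summands followed by a $\tfrac14$-net over $\mathcal{S}^{d_0-1}\times\mathcal{S}^{d_1-1}$ of size $9^{d_0+d_1}$. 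This directly gives the rate $\bar\delta_{m_1,d_0+d_1}$ with failure probability $2e^{-90(d_0+d_1)}$. Magen--Zouzias only produces a $\poly(nm)^{-1}$ failure probability — not the exponential-in-$(d_0+d_1)$ bound the lemma states — and it is reserved for the separate Lemma~\ref{lem:fourth}, where the \emph{leading} empirical covariance is $\mathbf{\Sigma}_i$ itself, unsandwiched, so that a direct net argument really would pick up $\sqrt d$. In Lemma~\ref{lem:gen1} that obstruction never appears.

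For the triple products (items 8--14), your symmetric expansion $(\mathbf{I}+\mathbf{E}_{2,i})\mathbf{M}(\mathbf{I}+\mathbf{E}_{2,i})$ manufactures a cross term $\tfrac1n\sum_i\mathbf{E}_{2,i}\mathbf{M}\mathbf{E}_{2,i}$ that you then treat as a genuine fourth-order deviation requiring bias control and low-rank concentration. The paper instead telescopes one covariance at a time, e.g.\ for item (8) it writes the error as $\mathbf{E}_1+\mathbf{E}_2+\mathbf{E}_3$ with $\mathbf{E}_1$ carrying the $\mathbf{\Sigma}_{1,i}-\mathbf{I}_d$ fluctuation, $\mathbf{E}_2$ carrying $\mathbf{C}_{3,i}^\top(\mathbf{\Sigma}_{2,i}-\mathbf{I}_d)\mathbf{C}_{4,i}$ with $\mathbf{\Sigma}_{2,i}$ (not $\mathbf{\Sigma}_{2,i}-\mathbf{I}_d$) on the right, and $\mathbf{E}_3$ carrying only the rightmost deviation. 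The repeated-dataset terms $\mathbf{E}_2,\mathbf{E}_3$ are then bounded by a straight union bound over $i$, each at the local $\delta_{m_2,\cdot}$ rate; and the $\mathbf{E}_1$ piece is handled by conditioning on the dataset-$2$ good event and invoking the Bernstein+net bound for the dataset-$1$ fluctuation. There is no residual fourth-order term, no bias calculation, and no hypercontractivity. The terms $\delta_{m_2,d_2+d_3}$ and $(1+\delta_{m_2,d_2+d_3})\delta_{m_2,d_1+d_2}$ in the statement are single local-rate factors from those union bounds, not the trace-bias contribution you propose. Your ``main obstacle'' is therefore a non-obstacle for this lemma; the actual work is the careful telescoping and conditioning bookkeeping, which is simpler than your route.
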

\begin{proof}
We give the proofs for (1), (2), and (8) since the rest of the proofs follow using analogous arguments. In all cases, the proofs are standard applications of Bernstein's inequality.

\begin{enumerate}
    \item  
    For any fixed unit vector $\mathbf{u} \in \mathbb{R}^{d_0}$, 
${r}_{\mathbf{u},i,j}\coloneqq  \mathbf{u}^\top\mathbf{C}_{1,i}^\top  \mathbf{w}_{1,i,j}$ is sub-gaussian with sub-gaussian norm at most $c\|\mathbf{C}_{1,i}\|_2$. Likewise, for any fixed unit vector $\mathbf{v} \in \mathbb{R}^{d_1}$, ${r}_{\mathbf{v},i,j}\coloneqq \mathbf{v}^\top \mathbf{C}_{2,i}^\top \mathbf{x}_{2,i,j}$ is sub-gaussian with norm at most $c\|\mathbf{C}_{2,i}\|_2$ for an absolute constant $c$.
Furthermore, $\mathbb{E}[{r}_{\mathbf{v},i,j}{r}_{\mathbf{u},i,j}] =\mathbf{u}^\top \mathbf{C}_{1,i}^\top \mathbf{x}_{1,i,j} \mathbf{x}_{1,i,j}^\top\mathbf{C}_{2,i}\mathbf{v} = \mathbf{u}^\top \mathbf{C}_{1,i}^\top\mathbf{C}_{2,i}\mathbf{v}$. 
Therefore,
\begin{align}
    \mathbf{v}^\top \left(\frac{1}{n}\sum_{i=1}^n \mathbf{C}_{1,i}^\top \mathbf{\Sigma}_{1,i} \mathbf{C}_{2,i} -  \mathbf{C}_{1,i}^\top \mathbf{C}_{2,i} \right)\mathbf{u}
    &= \frac{1}{n m_1}\sum_{i=1}^n \sum_{j=1}^{m_{1}} \left({r}_{\mathbf{v},i,j}{r}_{\mathbf{u},i,j} - \mathbb{E}[{r}_{\mathbf{v},i,j}{r}_{\mathbf{u},i,j}] \right)
\end{align}
is the sum of $n m_1$ independent, mean-zero, sub-exponential random variables with norm $O(\|\mathbf{C}_{1,i}\|_2\|\mathbf{C}_{2,i}\|_2)$. By Bernstein's inequality we have 
\begin{align}
    \left|\frac{1}{n m_{1}}\sum_{i=1}^n \sum_{j=1}^{m_{1}} \left(\mathbb{E}[{r}_{\mathbf{v},i,j}{r}_{\mathbf{u},i,j}] - {r}_{\mathbf{v},i,j}{r}_{\mathbf{u},i,j} \right)\right| 
    &\leq c \max_{i\in [n]} \|\mathbf{C}_{1,i}\|_2\|\mathbf{C}_{2,i}\|_2 \max \left( \tfrac{\sqrt{d_0+d_1}+ \lambda}{\sqrt{nm_{1}}}, \tfrac{(\sqrt{d_0+d_1}+ \lambda)^2}{nm_{1}} \right) \nonumber
\end{align}
for some absolute constant $c$ and any $\lambda > 0$, with probability at least $1 - 2 e^{-\lambda^2}$ over the outer loop samples. Let $\mathcal{S}^{d_0-1}$ and $\mathcal{S}^{d_1-1}$ denote the unit spheres in $\mathbb{R}^{d_0}$ and $\mathbb{R}^k$, respectively. From Corollary 4.2.13 in \cite{vershynin2018high}, we know that there exists $\frac{1}{4}$-nets $\mathcal{M}_1$ and $\mathcal{M}_2$ on $\mathcal{S}^{d_0-1}$ and $\mathcal{S}^{d_1-1}$ with cardinalities at most $9^{d_0}$ and $9^{d_1}$, respectively. Thus, conditioning on  using the variational definition of the spectral norm, and taking a union bound over the $\frac{1}{4}$-nets, we have
\begin{align}
    \left\|\frac{1}{n}\sum_{i=1}^n \mathbf{C}_{1,i}^\top \mathbf{\Sigma}_{1,i} \mathbf{C}_{2,i} -  \mathbf{C}_{1,i}^\top \mathbf{C}_{2,i} \right\|_2  
    &= \max_{\mathbf{v} \in \mathcal{S}^{d_0-1},\mathbf{u} \in \mathcal{S}^{d_1-1}} \left|\frac{1}{n m_{1}}\sum_{i=1}^n \sum_{j=1}^{m_{1}} \left(\mathbb{E}[{r}_{\mathbf{v},i,j}{r}_{\mathbf{u},i,j}] - {r}_{\mathbf{v},i,j}{r}_{\mathbf{u},i,j} \right)\right| \nonumber \\
    &\leq 2\max_{\mathbf{v} \in \mathcal{M}_1,\mathbf{u} \in \mathcal{M}_2} \left|\frac{1}{n m_{out}}\sum_{i=1}^n \sum_{j=1}^{m_{out}} \left(\mathbb{E}[{r}_{\mathbf{v},i,j}{r}_{\mathbf{u},i,j}] - {r}_{\mathbf{v},i,j}{r}_{\mathbf{u},i,j} \right)\right| \nonumber \\
    &\leq c' \max_{i\in [n]} \|\mathbf{C}_{1,i}\|_2\|\mathbf{C}_{2,i}\|_2 \max \left( \tfrac{\sqrt{d_0+d_1}+ \lambda}{\sqrt{nm_{1}}}, \tfrac{(\sqrt{d_0+d_1}+ \lambda)^2}{nm_{1}} \right) \nonumber
\end{align}
for some absolute constant $c$, with probability at least $1 - 2\times 9^{d_0+d_1}e^{-\lambda^2}$ over the outer loop samples. Choose $\lambda= 10\sqrt{d}$ and let $\sqrt{nm_{1}} \geq 11\sqrt{d_0+d_1}$ to obtain that,
\begin{align}
    \left\|\frac{1}{n}\sum_{i=1}^n \mathbf{C}_{1,i}^\top \mathbf{\Sigma}_{1,i} \mathbf{C}_{2,i} -  \mathbf{C}_{1,i}^\top \mathbf{C}_{2,i} \right\|_2  
    &\leq  c \max_{i\in [n]} \|\mathbf{C}_{1,i}\|_2\|\mathbf{C}_{2,i}\|_2 \bar{\delta}_{m_{1},d_0+d_1} \leq  c c_1 c_2 \bar{\delta}_{m_{1},d_0+d_1}  \nonumber
\end{align}
with probability at least $1-2 e^{-90(d_0+d_1)}$.
    \item Let $\mathbf{E}\coloneqq \frac{1}{n}\sum_{i=1}^n \mathbf{C}_{1,i}^\top \mathbf{\Sigma}_{1,i} \mathbf{C}_{2,i}\mathbf{C}_{3,i}^\top\mathbf{\Sigma}_{2,i}\mathbf{C}_{4,i} -  \mathbf{C}_{1,i}^\top \mathbf{C}_{2,i}\mathbf{C}_{3,i}^\top \mathbf{C}_{4,i}$.
    We have 
    \begin{align}
        \|\mathbf{E}\|_2 &\leq \bigg\|\frac{1}{n}\sum_{i=1}^n \mathbf{C}_{1,i}^\top \mathbf{\Sigma}_{1,i} \mathbf{C}_{2,i}\mathbf{C}_{3,i}^\top\mathbf{\Sigma}_{2,i}\mathbf{C}_{4,i} -  \mathbf{C}_{1,i}^\top \mathbf{C}_{2,i}\mathbf{C}_{3,i}^\top \mathbf{C}_{4,i}\bigg\|_2 \nonumber \\
        &= \bigg\|\frac{1}{n}\sum_{i=1}^n \mathbf{C}_{1,i}^\top (\mathbf{\Sigma}_{1,i}-\mathbf{I}_d) \mathbf{C}_{2,i}\mathbf{C}_{3,i}^\top\mathbf{\Sigma}_{2,i}\mathbf{C}_{4,i} +  \mathbf{C}_{1,i}^\top  \mathbf{C}_{2,i}\mathbf{C}_{3,i}^\top\mathbf{\Sigma}_{2,i}\mathbf{C}_{4,i} -  \mathbf{C}_{1,i}^\top \mathbf{C}_{2,i}\mathbf{C}_{3,i}^\top \mathbf{C}_{4,i}\bigg\|_2 \nonumber \\
        &\leq \bigg\|\underbrace{\frac{1}{n}\sum_{i=1}^n \mathbf{C}_{1,i}^\top (\mathbf{\Sigma}_{1,i} - \mathbf{I}_d) \mathbf{C}_{2,i}\mathbf{C}_{3,i}^\top\mathbf{\Sigma}_{2,i}\mathbf{C}_{4,i}}_{:= \mathbf{E}_1}\bigg\|_2 + \bigg\|\underbrace{\frac{1}{n}\sum_{i=1}^n \mathbf{C}_{1,i}^\top  \mathbf{C}_{2,i}\mathbf{C}_{3,i}^\top(\mathbf{\Sigma}_{2,i}-\mathbf{I}_d)\mathbf{C}_{4,i}}_{:=\mathbf{E}_2}\bigg\|_2 \label{matt} 
    \end{align}
We first consider $\|\mathbf{E}_1\|_2$. For any $i\in [n]$, we have by Theorem 4.6.1 in \citep{vershynin2018high}, 
    \begin{align}
        \left\| \mathbf{C}_{3,i}^\top\mathbf{\Sigma}_{2,i}\mathbf{C}_{4,i} -  \mathbf{C}_{3,i}^\top\mathbf{C}_{4,i}   \right\|_2 \leq c  \max_{i\in[n]}\|\mathbf{C}_{3,i}\|_2\|\mathbf{C}_{4,i}\|_2 \delta_{m_2,d_1+d_2} =  c  c_3c_4 \delta_{m_2,d_1+d_2} 
    \end{align}
    with probability at least $1 - 2 n^{-100}$. Union bounding over all $i\in [n]$ and using the triangle inequality gives 
    \begin{align}\mathbb{P}\left(\mathcal{A}:= \left\{\{\mathbf{\Sigma}_{2,i}\}_{i\in [n]}:\left\| \mathbf{C}_{3,i}^\top\mathbf{\Sigma}_{2,i}\mathbf{C}_{4,i}\right\|_2 \leq cc_3c_4(1+ \delta_{m_2,d_1+d_2}) \; \forall i \in [n] \right\}\right)\geq 1 - 2 n^{-99}. \label{steve}
    \end{align}
    Next, for any fixed set  $\{\mathbf{\Sigma}_{2,i}\}_{i\in [n]}\in \mathcal{A}$, the $d_2$-dimensional  random vectors \\
    $\{\mathbf{x}_{1,i,j}\mathbf{C}_{2,i}\mathbf{C}_{3,i}^\top\mathbf{\Sigma}_{2,i}\mathbf{C}_{4,i}\}_{i\in [n], j\in[m]}$ are sub-gaussian with sub-gaussian norms at most  $c' c_2 c_3c_4(1+\delta_{m_2,d_1+d_2})$. Likewise, the $d_0$-dimensional random vectors $\{\mathbf{C}_{1,i}^\top\mathbf{x}_{i,j'}\}_{i\in [n], j'\in\{2,...,m\}}$ are sub-gaussian with norms at most $c$. Thus using the same argument as in the proof of (1.), we have
\begin{align}
    \mathbb{P}\bigg(&\bigg\|\frac{1}{n}\sum_{i=1}^n \mathbf{C}_{1,i}^\top (\mathbf{\Sigma}_{1,i} - \mathbf{I}_d) \mathbf{C}_{2,i}\mathbf{C}_{3,i}^\top\mathbf{\Sigma}_{2,i}\mathbf{C}_{4,i}\bigg\|_2 \\
    &< c'' c_1 c_2c_3 c_4 (1+\delta_{m_2,d_1+d_2})\bar{\delta}_{m_{1}, d_0+d_2} \big| \{\mathbf{\Sigma}_{2,i}\}_{i\in [n]}, \{\mathbf{\Sigma}_{2,i}\}_{i\in [n]}\in\mathcal{A}\bigg) \nonumber\\
    &\geq 1 - 2e^{-90(d_0+d_2)}. 
\end{align}
for an absolute constant $c''$.
Integrating over all $\{\mathbf{\Sigma}_{2,i}\}_{i\in [n]}\in \mathcal{A}$ and using $\delta_{m_2,d_1+d_2}\leq 1$ yields 
\begin{align}
    \mathbb{P}\left(\bigg\|\frac{1}{n}\sum_{i=1}^n \mathbf{C}_{1,i}^\top (\mathbf{\Sigma}_{1,i} - \mathbf{I}_d) \mathbf{C}_{2,i}\mathbf{C}_{3,i}^\top\mathbf{\Sigma}_{2,i}\mathbf{C}_{4,i}\bigg\|_2< c'' c_1 c_2c_3 c_4 \bar{\delta}_{m_{1}, d_0+d_2} \big| \mathcal{A}\right) 
    &\geq 1 - 2e^{-90(d_0+d_2)}. 
\end{align}
Therefore, by the law of total probability and \eqref{steve}, we have
\begin{align}
    \mathbb{P}\left(\|\mathbf{E}_1\|_2\leq c'' c_1 c_2c_3 c_4 \bar{\delta}_{m_{1}, d_0+d_2}\right) 
    &\leq 2e^{-90(d_0+d_2)} + \mathbb{P}(\mathcal{A}^c) 
    \leq 2e^{-90(d_0+d_2)} + 2 n^{-99}. 
\end{align}
Next, we have from (1.) that $\|\mathbf{E}_2\|_2= \bigg\|\frac{1}{n}\sum_{i=1}^n \mathbf{C}_{1,i}^\top \mathbf{C}_{2,i} \mathbf{C}_{3,i}^\top (\mathbf{\Sigma}_{2,i} - \mathbf{I}_d) \mathbf{C}_{4,i}\bigg\|_2\leq c c_1 c_2 c_3 c_4 \bar{\delta}_{m_2, d_0+d_2}$ with probability at least $1 - 2 e^{-90 (d_0+d_2)}$.
Finally, combining our bounds on the two terms in \eqref{matt} via a union bound yields 
\begin{align}
    \mathbb{P}\left(\|\mathbf{E}\|_2\leq c'' c_1 c_2c_3 c_4 (\bar{\delta}_{m_{1}, d_0+d_2}+(1+\delta_{m_2,d_1+d_2})\bar{\delta}_{m_{2}, d_0+d_2})\right)
\end{align}
as desired. Note that we could instead use \eqref{steve} to bound $\|\mathbf{E}_2\|_2$, which would result in the bound (3.).
    \end{enumerate}
\begin{enumerate}
 \setcounter{enumi}{7}
 \item Let $\mathbf{E}\coloneqq \frac{1}{n}\sum_{i=1}^n \mathbf{C}_{1,i}^\top \mathbf{\Sigma}_{1,i} \mathbf{C}_{2,i}\mathbf{C}_{3,i}^\top\mathbf{\Sigma}_{2,i}\mathbf{C}_{4,i}\mathbf{C}_{5,i}^\top \mathbf{\Sigma}_{2,i} \mathbf{C}_{6,i}  -  \mathbf{C}_{1,i}^\top \mathbf{C}_{2,i}\mathbf{C}_{3,i}^\top \mathbf{C}_{4,i}\mathbf{C}_{5,i}^\top \mathbf{C}_{6,i}$.
 We make a similar argument as in the proof of (2.)
 We have 
 \begin{align}
     \|\mathbf{E}\|_2 &\leq \bigg\|\underbrace{\frac{1}{n}\sum_{i=1}^n \mathbf{C}_{1,i}^\top (\mathbf{\Sigma}_{1,i}-\mathbf{I}_d) \mathbf{C}_{2,i}\mathbf{C}_{3,i}^\top\mathbf{\Sigma}_{2,i}\mathbf{C}_{4,i}\mathbf{C}_{5,i}^\top \mathbf{\Sigma}_{2,i} \mathbf{C}_{6,i} }_{:= \mathbf{E}_1} \bigg\|_2  \nonumber \\
     &\quad + \bigg\|\underbrace{\frac{1}{n}\sum_{i=1}^n \mathbf{C}_{1,i}^\top \mathbf{C}_{2,i}\mathbf{C}_{3,i}^\top(\mathbf{\Sigma}_{2,i}-\mathbf{I}_d)\mathbf{C}_{4,i}\mathbf{C}_{5,i}^\top \mathbf{\Sigma}_{2,i} \mathbf{C}_{6,i}}_{:= \mathbf{E}_2}  \bigg\|_2 \nonumber\\
     &\quad+ \bigg\|\underbrace{\frac{1}{n}\sum_{i=1}^n \mathbf{C}_{1,i}^\top \mathbf{C}_{2,i}\mathbf{C}_{3,i}^\top\mathbf{C}_{4,i}\mathbf{C}_{5,i}^\top (\mathbf{\Sigma}_{2,i}-\mathbf{I}_d) \mathbf{C}_{6,i}}_{:= \mathbf{E}_3}  \bigg\|_2 
 \end{align}
We know from Theorem 4.6.1 in \cite{vershynin2018high} that $\mathbb{P}(\|\mathbf{C}_{3,i}^\top(\mathbf{\Sigma}_{2,i}-\mathbf{I}_d)\mathbf{C}_{4,i}\|_2\leq c  c_3 c_4 {\delta}_{m_2,d_1+d_2})\\ \geq 1 - 2n^{-100}$ and $\mathbb{P}(\|\mathbf{C}_{5,i}^\top\mathbf{\Sigma}_{2,i}\mathbf{C}_{6,i}\|_2\leq c  c_5 c_6 (1+{\delta}_{m_2,d_2+d_3})) \geq 1 - 2n^{-100}$. Union bounding these events over $i\in [n]$ gives $\mathbb{P}(\|\mathbf{E}_2\|_2\leq c c_1 c_2 c_3 c_4 c_5 c_6{\delta}_{m_2,d_1+d_2}(1+ {\delta}_{m_2,d_2+d_3})) \geq 1 - 4e^{-99}$. Union bounding over the same events, we also have $\mathbb{P}(\|\mathbf{E}_3\|_2\leq c c_1 c_2 c_3 c_4 c_5 c_6{\delta}_{m_2,d_2+d_3}) \geq 1 - 2n^{-99}$.  Next, we make a similar argument as in (2.) to control $\|\mathbf{E}_1\|_2$, except that here $\mathcal{A}$ is defined as \begin{align}\mathcal{A}&:= \bigg\{\{\mathbf{\Sigma}_{2,i}\}_{i\in [n]}:\left\| \mathbf{C}_{3,i}^\top\mathbf{\Sigma}_{2,i}\mathbf{C}_{4,i}\right\|_2 \leq cc_3c_4(1+ \delta_{m_2,d_1+d_2}), \nonumber \\
&\quad \quad\quad \quad \; \left\| \mathbf{C}_{5,i}^\top\mathbf{\Sigma}_{2,i}\mathbf{C}_{6,i}\right\|_2 \leq cc_5c_6(1+ \delta_{m_2,d_2+d_3}) \; \forall i \in [n] \bigg\},\end{align} which occurs with probability at least $1-4 n^{-99}$ (which is implied by our discussion of bounding $\|\mathbf{E}_3\|_2$). Thus, following the logic in (2.), we obtain $\mathbb{P}(\|\mathbf{E}_1\|_2 \leq c c_1 c_2 c_3 c_4 c_5 c_6(1+{\delta}_{m_2, d_1 + d_2})(1+ {\delta}_{m_2,d_2 + d_3})\bar{\delta}_{m_1, d_0+d_3}) \geq 1 - 4 n^{-99} - 2e^{-90(d_0+d_3)}$. 
Combining all bounds 
 yields the desired result.
\end{enumerate}
More generally, we add and subtract terms to show concentration through either a $\mathbf{\Sigma} - \mathbf{I}_d$ matrix, or an $\mathbf{X}\mathbf{z}$ matrix, with off terms bounded for each $i$ by sub-gaussianity.
\end{proof}


\begin{lemma} \label{lem:gen2}
Consider the setting described in Lemma \ref{lem:gen1}. Further, suppose $\min(d_1,d_2, d_3) = 1$ and $\max(d_1,d_2, d_3) = k$.  Then  the following events each hold with probability at most $c'(e^{-100} + n^{-99}+  {m_1}^{-99})$ for absolute constants $c,c'$:
\begin{itemize}
    \item $\mathcal{U}_1 \coloneqq \bigg\{ \bigg\|\frac{1}{n}\sum_{i=1}^n  \mathbf{\Sigma}_{1,i} \mathbf{C}_{2,i}\mathbf{C}_{3,i}^\top\mathbf{\Sigma}_{2,i}\mathbf{C}_{4,i}\mathbf{C}_{5,i}^\top \mathbf{\Sigma}_{1,i} \mathbf{C}_{6,i}\!  -\!   \mathbf{C}_{2,i}\mathbf{C}_{3,i}^\top \mathbf{\Sigma}_{2,i}\mathbf{C}_{4,i}\mathbf{C}_{5,i}^\top \mathbf{C}_{6,i} \bigg\|_2 \!\geq\! c c_2c_3c_4 c_5c_6 (\tilde{\delta}\! + \! \tfrac{k}{m_1}) \bigg\}$  
    

    \item $\mathcal{U}_2 \coloneqq \bigg\{\left\|\frac{1}{n}\sum_{i=1}^n \mathbf{\Sigma}_{1,i} \mathbf{C}_{2,i}\mathbf{C}_{3,i}^\top\mathbf{\Sigma}_{2,i}\mathbf{C}_{4,i}\mathbf{C}_{5,i}^\top\mathbf{X}_{1,i}^\top \mathbf{z}_{1,i} \right\|_2 \geq c \sigma  c_2c_3c_4 c_5 \tilde{\delta} \bigg\}$
    
    \item $\mathcal{U}_3 \coloneqq \bigg\{\left\|\frac{1}{n}\sum_{i=1}^n \mathbf{\Sigma}_{1,i} \mathbf{c}_{2,i} \mathbf{z}_{1,i}^\top \mathbf{X}_{1,i} \mathbf{C}_{4,i}\mathbf{C}_{5,i}^\top \mathbf{\Sigma}_{2,i}\mathbf{C}_{6,i}\right\|_2 \geq c \sigma  c_2c_3c_4 c_5 \tilde{\delta} \bigg\}$
    
    \item $\mathcal{U}_4 \coloneqq \bigg\{\left\|\frac{1}{n}\sum_{i=1}^n  \mathbf{X}_{1,i}^\top \mathbf{z}_{1,i}  \mathbf{c}_{2,i}^\top \mathbf{C}_{3,i}^\top \mathbf{\Sigma}_{1,i} \mathbf{C}_{4,i}\mathbf{C}_{5,i}^\top\mathbf{\Sigma}_{2,i}\mathbf{C}_{6,i} \right\|_2 \\\geq c \sigma c_1c_2c_3c_4 c_5 c_6 \tfrac{(\sqrt{kd}+ \sqrt{d\log(nm_1)}+ \log(nm_1))\log(nm_1)}{\sqrt{nm_1}}\bigg\}$ 
    
    \item 
    $\mathcal{U}_5 \coloneqq \bigg\{\big\| {\frac{1}{nm_{1}^2}\sum_{i=1}^n\mathbf{X}_{t,i}^\top\mathbf{z}_{t,i}\mathbf{z}_{t,i}^\top\mathbf{X}_{t,i}  \mathbf{C}_{2,i} \mathbf{C}_{3,i}^\top \mathbf{\Sigma}_{2,i} 
   \mathbf{C}_{4,i} }\big\|_2 \\\geq c\sigma^2 c_2 c_3 c_4 \left(\tfrac{(\sqrt{kd}+ \sqrt{d\log(nm_1)}+ \log(nm_1))\sqrt{\log(nm_1)}}{\sqrt{nm_1}}\!+\! \tfrac{1}{m_1}\right) \bigg\}$
   
\end{itemize}
where \begin{align}
    \tilde{\delta}&\coloneqq  (k\sqrt{d\log(nm_1)} +k \log(nm_1)+ \sqrt{d} \log^{1.5}(nm_1) + \log^{2}(nm_1) )/\sqrt{nm_1}. \nonumber
\end{align}
\end{lemma}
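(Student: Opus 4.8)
The final statement to prove is Lemma \ref{lem:gen2}, a set of concentration bounds for specific higher-order products of sub-gaussian data matrices and Gaussian noise vectors that arise in the finite-sample analysis of Exact ANIL. These are bounds for products involving six-fold or more matrix/vector terms, where the dimensions $d_1, d_2, d_3 \in \{1, k\}$ and at least one equals $1$ (so the products are effectively low-rank in the sense that the inner factors pass through a $1$-dimensional bottleneck or a thin $k$-wide factor), while the data live in $\mathbb{R}^d$ with $d$ large.

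\textbf{Plan.} The approach is the same add-and-subtract telescoping strategy used in Lemma \ref{lem:gen1}, but now pushed to higher order, and the new ingredient needed to keep the sample complexity at $o(d)$ per task is to exploit the low-rankness of the products rather than na\"ively bounding each fourth- or sixth-order product by $\bar\delta_{m,d}$-type quantities (which would force $m = \Omega(d)$ to make $\tilde\delta$ small, since $\tilde\delta$ scales like $k\sqrt{d\log(nm_1)}/\sqrt{nm_1}$ — acceptable only because it is divided by $n$). First I would decompose each of $\mathcal{U}_1,\dots,\mathcal{U}_5$ into a sum of terms, each of which replaces one empirical covariance $\mathbf{\Sigma}_{\ell,i}$ by $\mathbf{I}_d$ plus a residual $\mathbf{\Sigma}_{\ell,i} - \mathbf{I}_d$, or replaces $\frac1{m}\mathbf{X}^\top \mathbf{z}$ terms by their mean (zero) plus the sample average, exactly as in the proof of items (2), (8), (9) of Lemma \ref{lem:gen1}. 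Then, on the ``good event'' $\mathcal{A}$ where all the inner factors $\mathbf{C}_{3,i}^\top \mathbf{\Sigma}_{2,i} \mathbf{C}_{4,i}$, $\mathbf{C}_{5,i}^\top \mathbf{\Sigma}_{2,i} \mathbf{C}_{6,i}$, etc., have spectral norm $O(c_3 c_4)$, $O(c_5 c_6)$ (which holds with probability $1 - O(n^{-99})$ by Theorem 4.6.1 of \cite{vershynin2018high} union-bounded over $i \in [n]$), each residual term is a sum over $n m$ independent mean-zero sub-exponential (or higher-order) random variables whose tails are controlled by Bernstein's inequality combined with an $\epsilon$-net argument over the relevant unit spheres. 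The key difference: because the leading factor is $\mathbf{\Sigma}_{1,i}$ (full $d\times d$) rather than $\mathbf{C}_{1,i}^\top \mathbf{\Sigma}_{1,i}$ (thin), we cannot simply get a $\bar\delta_{m_1, d}$ bound — instead we must use that the \emph{other} factor of the product is low-dimensional, so that $\mathbf{\Sigma}_{1,i}\mathbf{v}$ for a fixed low-dimensional $\mathbf{v}$ concentrates, and we only pay $\sqrt d$ (not $d$) in the net, giving the $\tilde\delta$ rate. For the $\mathbf{X}^\top \mathbf{z} \mathbf{z}^\top \mathbf{X}$ term in $\mathcal{U}_5$ and the $\mathbf{X}^\top \mathbf{z} \mathbf{c}^\top \cdots$ term in $\mathcal{U}_4$, I would further use the $L4$-$L2$ hypercontractivity of the sub-gaussian data (so that $\|\mathbf{x}_{i,j}\|_2 = O(\sqrt{d} + \log(nm_1))$ uniformly with high probability, and the fourth moments behave) together with the truncated low-rank matrix concentration result of \cite{magen2011low} applied to the rank-$\le k$ matrices appearing after the $1$-dimensional bottleneck; this is exactly the mechanism flagged at the end of Section \ref{sec:sketch}. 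The logarithmic factors $\log(nm_1)$, $\log^{1.5}(nm_1)$, $\log^2(nm_1)$ in $\tilde\delta$ come from truncating the heavy (polynomial, not sub-exponential) tails of the degree-four products and union-bounding the truncation events over all $nm_1$ samples.

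\textbf{Step order.} (i) Condition on the high-probability event $\mathcal{B}$ that $\max_{i,j}\|\mathbf{x}_{\ell,i,j}\|_2 = O(\sqrt{d}+\sqrt{\log(nm_1)})$ and $\max_i \|\mathbf{z}_{\ell,i}\|_\infty = O(\sigma\sqrt{\log(nm_1)})$, $\|\mathbf{z}_{\ell,i}\|_2 = O(\sigma\sqrt{m_\ell})$, costing $O(n m_1 e^{-c d} + (nm_1)^{-99})$; (ii) condition on the event $\mathcal{A}$ that every inner factor $\mathbf{C}^\top \mathbf{\Sigma}_{2,i}\mathbf{C}'$ is within $O(c c' \delta_{m_2, \cdot})$ of $\mathbf{C}^\top \mathbf{C}'$ for all $i$, by Theorem 4.6.1 of \cite{vershynin2018high} and a union bound; (iii) for each of $\mathcal{U}_1,\dots,\mathcal{U}_5$, telescope the leading and trailing $\mathbf{\Sigma} - \mathbf{I}$ (or $\mathbf{X}^\top\mathbf{z}$) residuals one at a time, applying to each residual a Bernstein + net bound where the net dimension is the \emph{smaller} of the two ``sides'' of that residual (which is $O(k)$ or $O(1)$ on account of $\min(d_1,d_2,d_3)=1$), and for the quartic-in-$\mathbf{x}$ residuals invoke \cite{magen2011low}'s truncated low-rank bound; (iv) collect the resulting rates, observe they sum to the stated $\tilde\delta$-type expressions, and union-bound the failure probabilities over $\mathcal{A}^c$, $\mathcal{B}^c$, and the net events to get the claimed $O(e^{-100} + n^{-99} + m_1^{-99})$.

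\textbf{Main obstacle.} The crux is step (iii) for the terms with a full $d\times d$ leading factor $\mathbf{\Sigma}_{1,i}$ and quartic data dependence ($\mathcal{U}_4$, $\mathcal{U}_5$): standard sub-exponential Bernstein fails because $\mathbf{x}^\top \mathbf{x}' \cdot \mathbf{x}''^\top\mathbf{x}'''$-type products have tails only as light as $\exp(-t^{1/2})$, and na\"ive matrix concentration would demand $m_1 = \Omega(d)$. Making the low-rank truncation argument of \cite{magen2011low} interface cleanly with the per-$i$ averaging (so the $n$ tasks genuinely reduce the variance and we land on the $\sqrt{nm_1}$ denominator) — while simultaneously tracking the $\log(nm_1)$ powers produced by truncation — is the delicate part; everything else is a routine, if lengthy, application of the Lemma \ref{lem:gen1} machinery. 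The bookkeeping of which dimension ($1$ vs.\ $k$ vs.\ $d$) each net is taken over, and ensuring the final bound is expressed purely in terms of $\tilde\delta$, $k/m_1$, and the $c_\ell$'s, is also error-prone but not conceptually hard.
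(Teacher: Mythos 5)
Your proposal takes essentially the same route as the paper: condition on the high-probability event that the inner $k\times k$ or $k\times 1$ factors $\mathbf{C}_{3,i}^\top\mathbf{\Sigma}_{2,i}\mathbf{C}_{4,i}$, etc., have bounded spectral norm (union bound over $i$, cost $O(n^{-99})$), then handle the remaining higher-order products by truncating the data vectors and invoking the low-rank matrix concentration result of \cite{magen2011low} after symmetrizing, exactly as the paper packages into Lemmas \ref{lem:fourth}, \ref{lem:fourth1}, and \ref{lem:fourth4} and then calls in the proof of Lemma \ref{lem:gen2} for $\mathcal{U}_1$, $\mathcal{U}_2$--$\mathcal{U}_4$, and $\mathcal{U}_5$ respectively. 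One small misattribution: the $C$-L4-L2 hypercontractivity argument (Lemma \ref{lem:L4L2}) is used to control the \emph{bias} of the truncated quartic product in $\mathcal{U}_1$, where both outer factors are $\mathbf{\Sigma}_{1,i}$ and the $j=j'$ diagonal term has a nonzero $O(d_1/m_1)$ expectation; it is not needed for $\mathcal{U}_4$ or $\mathcal{U}_5$, where the Gaussian noise $\mathbf{z}$ is symmetric so the truncated mean vanishes ($\mathcal{U}_4$) or is $O(\sigma^2/m_1)$ by the law of total expectation ($\mathcal{U}_5$). Relatedly, the genuinely quartic-in-$\mathbf{x}$ case is $\mathcal{U}_1$, not $\mathcal{U}_4$ or $\mathcal{U}_5$.
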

\begin{proof}

\begin{enumerate}
    \item Similarly to previous proofs involving sums of products of independent matrices, the idea is to first use that one set of matrices is small with high probability, then condition on these sets of matrices being small to isolate the randomness of the other matrices. Note that matrix $\mathbf{C}_{3,i}^\top\mathbf{\Sigma}_{2,i}\mathbf{C}_{4,i}$ has maximum dimension at most $k$, so by Lemma \ref{lem:gen1}, for any $i \in [n]$, $\{\|\mathbf{C}_{3,i}^\top\mathbf{\Sigma}_{2,i}\mathbf{C}_{4,i}\|_2 \geq cc_3c_4(1+\delta_{m_1, k})\}$ holds with probability at most $n^{100}$. Applying a union bound over $[n]$ gives that $\mathcal{A}\coloneqq \cap_{i\in[n]}\{\|\mathbf{C}_{3,i}^\top\mathbf{\Sigma}_{2,i}\mathbf{C}_{4,i}\|_2 \leq  cc_3c_4(1+\delta_{m_1, k})\}$ holds with probability at least $1-n^{-99}$. Conditioning on $\mathcal{A}$, and using $\delta_{m_1, k}\leq 1$, we can apply Lemma \ref{lem:fourth} to obtain that 
\begin{align}
    \bigg\|\frac{1}{n}\sum_{i=1}^n  \mathbf{\Sigma}_{1,i} \mathbf{C}_{2,i}\mathbf{C}_{3,i}^\top\mathbf{\Sigma}_{2,i}\mathbf{C}_{4,i}\mathbf{C}_{5,i}^\top \mathbf{\Sigma}_{1,i} \mathbf{C}_{6,i}\!  -\!   \mathbf{C}_{2,i}\mathbf{C}_{3,i}^\top \mathbf{\Sigma}_{2,i}\mathbf{C}_{4,i}\mathbf{C}_{5,i}^\top \mathbf{C}_{6,i} \bigg\|_2 \!\geq\! c c_2c_3c_4 c_5c_6 (\tilde{\delta}\! + \! \tfrac{1+C^2}{m_1})
\end{align}
occurs with probability at most $ $. Since $\mathbb{P}(\mathcal{U}_1) \leq \mathbb{P}(\mathcal{U}_1|\mathcal{A})+ \mathbb{P}(\mathcal{A}^c)$, we obtain the result.

\item We make the same argument as for (1) except that we apply Lemma \ref{lem:fourth1} instead of Lemma \ref{lem:fourth}. 

\item Again, we use Lemma \ref{lem:fourth1} as in (2).

\item Again, we use Lemma \ref{lem:fourth1} as in (2).

\item Here we make the same argument as (1) except that we  apply  Lemma \ref{lem:fourth4} instead of Lemma \ref{lem:fourth}.


\end{enumerate}
\end{proof}







The following is a slightly generalized version of Theorem 1.1 in \citet{magen2011low}: here, the random matrices are not necessarily identically distributed, whereas they are identically distributed in \citet{magen2011low}. However, the proof from \cite{magen2011low} does not rely on the matrices being identically distributed, so the same proof from \citet{magen2011low} holds without modification for the below result.
\begin{thm}[Theorem 1.1 in \cite{magen2011low}] \label{thm:magen}
Let $0< \epsilon< 1$ and $\mathbf{M}_1,\dots, \mathbf{M}_N$ be a sequence of independent symmetric random matrices that satisfy $\|\tfrac{1}{N}\sum_{i=1}^N \mathbb{E}[\mathbf{M}_i]\|_2 \leq 1$ and $\|\mathbf{M}_i\|_2 \leq B$ and $\rank(\mathbf{M}_i)\leq r$ almost surely for all $i\in[N]$. Set $N = \Omega( B \log(B/\epsilon^2)/\epsilon^2)$. If $r \leq N$ almost surely, then
\begin{align}
    \mathbb{P}\left(\left\| \frac{1}{N}\sum_{i=1}^N \mathbf{M}_i - \mathbb{E}[\mathbf{M}_i] \right) \right\|_2\leq \frac{1}{\poly(N)}
\end{align}
\end{thm}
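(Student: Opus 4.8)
The plan is to establish this directly from the matrix-valued Chernoff argument of \citet{magen2011low}, after observing that that argument only ever manipulates the summands one at a time and hence never uses that the $\mathbf{M}_i$ are identically distributed; it uses only mutual independence, the uniform bounds $\|\mathbf{M}_i\|_2\le B$ and $\rank(\mathbf{M}_i)\le r$ almost surely, and the normalization $\|\tfrac1N\sum_i \mathbb{E}[\mathbf{M}_i]\|_2\le 1$. So the task reduces to re-running that proof and checking at each step that no exchangeability or common-law assumption is invoked.

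The steps, in order, are as follows. First I would reduce to a one-sided statement for $\lambda_{\max}$: since $\mathbf{M}_i$ and $-\mathbf{M}_i$ satisfy the same hypotheses, a bound of the claimed order on $\mathbb{P}\big(\lambda_{\max}(\tfrac1N\sum_i(\mathbf{M}_i-\mathbb{E}[\mathbf{M}_i]))\ge \epsilon\big)$ yields the stated operator-norm guarantee by a union bound. Second, apply the Laplace-transform (Markov) bound
\[
\mathbb{P}\Big(\lambda_{\max}\big(\textstyle\sum_i(\mathbf{M}_i-\mathbb{E}[\mathbf{M}_i])\big)\ge t\Big)\le \inf_{\theta>0} e^{-\theta t}\,\mathbb{E}\big[\Tr\exp\big(\theta\textstyle\sum_i(\mathbf{M}_i-\mathbb{E}[\mathbf{M}_i])\big)\big].
\]
Third, peel the summands from the trace-exponential one at a time using the Golden--Thompson inequality (or Lieb's concavity theorem with Jensen, as in the Ahlswede--Winter scheme), which, by independence, replaces the expectation of $\Tr\exp$ by $\Tr\exp$ of a deterministic sum of per-summand cumulant matrices $\log\mathbb{E}[e^{\theta(\mathbf{M}_i-\mathbb{E}\mathbf{M}_i)}]$. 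Fourth---the crux of \citet{magen2011low}---control the resulting $\Tr\exp(\mathbf{A})$ not by the trivial $d\,e^{\lambda_{\max}(\mathbf{A})}$ but by exploiting that each factor $e^{\theta\mathbf{M}_i}$ differs from $\mathbf{I}$ by a matrix of rank at most $r$ (since $\mathbf{M}_i$ does), so that the trace of the peeled product telescopes into $N$ contributions each carrying a factor $r$ rather than $d$; combined with $\|\mathbb{E}[\mathbf{M}_i]\|_2\le B$ (Jensen) to absorb the centering term, this yields an effective ambient dimension of order $Nr$. Fifth, use the per-summand norm/variance bounds to estimate $\log\mathbb{E}[e^{\theta(\mathbf{M}_i-\mathbb{E}\mathbf{M}_i)}]\preceq c\,\theta^2 B^2\mathbf{I}$ for $\theta B\le 1$, and optimize $\theta=\Theta(\epsilon/B)$ with $t=\epsilon N$: the tail becomes of order $Nr\cdot\exp(-\Omega(\epsilon^2 N/B))$, and the hypotheses $N=\Omega(B\log(B/\epsilon^2)/\epsilon^2)$ and $r\le N$ make this $\le 1/\poly(N)$.

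The main obstacle is the fourth step: the rank-based replacement of the $d$ (or $\log d$) dimension factor of vanilla matrix Bernstein/Ahlswede--Winter by a quantity depending only on $r$ and $N$. The subtlety there is the centering, since $\mathbf{M}_i-\mathbb{E}[\mathbf{M}_i]$ is generally full rank even when $\mathbf{M}_i$ has rank $r$; the resolution is to keep the rank-$r$ factor $e^{\theta\mathbf{M}_i}$ intact inside the Golden--Thompson peeling and treat $e^{-\theta\mathbb{E}\mathbf{M}_i}$ as a uniformly bounded operator factor via $\|\mathbb{E}\mathbf{M}_i\|_2\le B$. Everything else---the reduction to $\lambda_{\max}$, the Laplace transform, the cumulant estimate, and the optimization over $\theta$---is a routine transcription of \citet{magen2011low} with ``identically distributed'' deleted from the hypotheses, which is legitimate precisely because each of those steps acts on a single summand or on a deterministic product and never uses the common law.
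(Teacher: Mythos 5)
Your proposal is correct and takes essentially the same route as the paper: the paper's entire justification for this statement is the observation that the Magen--Zouzias proof of their Theorem 1.1 never uses the identical-distribution assumption, only independence plus the uniform norm, rank, and normalized-expectation bounds, so it carries over verbatim to non-identically-distributed summands. Your reconstruction of the underlying Ahlswede--Winter-style argument (Laplace transform, per-summand peeling, rank-based replacement of the ambient-dimension factor, and optimization of $\theta$) simply verifies that claim step by step, which is exactly what the paper asserts without spelling out.
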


The following lemma again gives generic concentration results but for a more difficult set of matrices. The key technical contribution is a truncated version of Theorem \ref{thm:magen}.
\begin{lemma} \label{lem:fourth}
Suppose that $\mathbf{x}$ is a random vector with $\mathbb{E}[\mathbf{x}]=\mathbf{0}_d$,  and $\Cov(\mathbf{x})=\mathbf{I}_d$, and is  $\mathbf{I}_d$-sub-gaussian. Let $\{ \mathbf{x}_{i,j}\}_{i\in[n], j\in[m]}$ be $nm$ independent copies of $\mathbf{x}$. 
Further, let $\mathbf{C}_{\ell,i}\in \mathbb{R}^{d\times d_{\lfloor \ell/2 \rfloor}}$ for $\ell = 2,3,4$ be fixed matrices for $i \in [n]$, and let $c_\ell \coloneqq \max_{i\in[n]}\|\mathbf{C}_{\ell,i}\|_2$ for $\ell=2,3,4$. 
Denote $\mathbf{\Sigma}_{i} = \frac{1}{m}\sum_{j=1}^{m}\mathbf{x}_{i,j}\mathbf{x}_{i,j}^\top$. Then, if $m\geq \max(1, C^2 c  c_2 c_3 c_4d_1)$, 
\begin{align}
     &\left\|\frac{1}{n}\sum_{i=1}^n  \mathbf{\Sigma}_{i}\mathbf{C}_{2,i}\mathbf{C}_{3,i}^\top \mathbf{\Sigma}_{i} \mathbf{C}_{4,i} - \frac{1}{n}\sum_{i=1}^n  \mathbf{C}_{2,i}\mathbf{C}_{3,i}^\top \mathbf{C}_{4,i}\right\|_2  \nonumber \\
     &\leq c(\sqrt{\log(nm)}+ \sqrt{d_1})(\sqrt{\log(nm)} + \sqrt{d_2})\tfrac{\sqrt{d+d_2} + \sqrt{\log(m)}}{\sqrt{nm}} c_2c_3c_4\nonumber \nonumber \\
     &\quad + \frac{c}{\sqrt{nm}} (\sqrt{\log(nm)}+\sqrt{d})\left(\prod_{\ell=2}^4(\sqrt{\log(nm)}+\sqrt{d_{\lfloor \ell/2 \rfloor}})\right)\max(c_2c_3c_4,1) + c\tfrac{d_1}{m} c_2c_3c_4  \nonumber
\end{align}
for an absolute constant $c$,
with probability at least $1 - 2m^{-99} - \frac{1}{\poly(nm)} - 2e^{-90d_1}$.
\end{lemma}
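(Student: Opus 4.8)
The plan is to decompose the fourth-order expression $\frac{1}{n}\sum_i \mathbf{\Sigma}_i \mathbf{C}_{2,i}\mathbf{C}_{3,i}^\top \mathbf{\Sigma}_i \mathbf{C}_{4,i}$ into a main term and a fluctuation term, handle the fluctuation via a ``truncated'' version of Theorem~\ref{thm:magen}, and collect the remaining lower-order pieces with elementary sub-gaussian concentration. First I would write $\mathbf{\Sigma}_i = \mathbf{I}_d + (\mathbf{\Sigma}_i - \mathbf{I}_d)$ in the \emph{left} factor, which splits the sum into $\frac{1}{n}\sum_i (\mathbf{\Sigma}_i - \mathbf{I}_d)\mathbf{C}_{2,i}\mathbf{C}_{3,i}^\top\mathbf{\Sigma}_i\mathbf{C}_{4,i}$ plus $\frac{1}{n}\sum_i \mathbf{C}_{2,i}\mathbf{C}_{3,i}^\top\mathbf{\Sigma}_i\mathbf{C}_{4,i}$; the latter is exactly the third-order quantity handled by Lemma~\ref{lem:gen1}(2) (up to relabeling), contributing the term $(\sqrt{\log(nm)}+\sqrt{d_1})(\sqrt{\log(nm)}+\sqrt{d_2})\tfrac{\sqrt{d+d_2}+\sqrt{\log m}}{\sqrt{nm}}c_2c_3c_4$ in the statement. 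So the crux is bounding $\mathbf{E}_1 := \frac{1}{n}\sum_i (\mathbf{\Sigma}_i - \mathbf{I}_d)\mathbf{C}_{2,i}\mathbf{C}_{3,i}^\top\mathbf{\Sigma}_i\mathbf{C}_{4,i}$.

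For $\mathbf{E}_1$, I would further write $\mathbf{\Sigma}_i = \frac{1}{m}\sum_j \mathbf{x}_{i,j}\mathbf{x}_{i,j}^\top$ in both occurrences, so that $\mathbf{E}_1$ becomes an average over $nm^2$ rank-$\le d_2$ terms of the form $\frac{1}{nm^2}\sum_{i,j,j'}(\mathbf{x}_{i,j}\mathbf{x}_{i,j}^\top - \mathbf{I}_d)\mathbf{C}_{2,i}\mathbf{C}_{3,i}^\top \mathbf{x}_{i,j'}\mathbf{x}_{i,j'}^\top\mathbf{C}_{4,i}$; separating the diagonal ($j=j'$) from the off-diagonal ($j\ne j'$) pieces. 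The diagonal piece has $O(m)$ fewer summands and is easily bounded by $O(\tfrac{d_1}{m}c_2c_3c_4)$ using that $\mathbf{x}_{i,j}^\top\mathbf{C}_{2,i}\mathbf{C}_{3,i}^\top\mathbf{x}_{i,j}$ concentrates (matching the $c\tfrac{d_1}{m}c_2c_3c_4$ term in the bound). For the off-diagonal piece, each summand is a random matrix of rank at most $d_2$ but with heavy (sub-exponential$^2$) tails, so I would \emph{truncate}: define $\mathcal{E}_{i,j,j'}$ to be the event that $\|\mathbf{x}_{i,j}\|_2 \le c\sqrt{d + \log(nm)}$ and similarly for $\mathbf{x}_{i,j'}$ and for the bilinear cross terms, and replace each summand by its restriction to $\mathcal{E}_{i,j,j'}$. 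On the truncated event the summands are bounded by $B = O((\log(nm)+d_1)(\log(nm)+d_2)\cdots)$ in operator norm with rank $\le d_2$, so Theorem~\ref{thm:magen} (in its non-identically-distributed form stated before the lemma) applies with $N = nm$ effectively (one needs $m \ge \tilde{\Omega}(d_1)$, which is the hypothesis), giving a $\frac{1}{\poly(nm)}$ deviation plus the expectation, which is itself $O(\tfrac{1}{\sqrt{nm}}\cdot\text{poly-dimension factors})$ — this produces the $\frac{c}{\sqrt{nm}}(\sqrt{\log(nm)}+\sqrt{d})\prod_\ell(\sqrt{\log(nm)}+\sqrt{d_{\lfloor\ell/2\rfloor}})$ term. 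Finally, I would control the truncation error: $\mathbb{P}(\mathcal{E}_{i,j,j'}^c)$ is at most $e^{-90d}$ per index by sub-gaussian norm concentration, so a union bound over all $O(nm^2)$ indices (absorbing the $m^2$ into the exponent, or more carefully using that $m^{-99}$ suffices after the choice of truncation radius) contributes the $2m^{-99}$ and $2e^{-90d_1}$ failure probabilities, and on the complement the contribution to the operator norm is negligible because the tails decay faster than the growth of $B$.

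The main obstacle is the truncation-plus-Magen argument: one must choose the truncation radius large enough that the truncated and untruncated sums agree with overwhelming probability (so the $m^{-99}$, $e^{-90d_1}$ terms are honest), yet small enough that the uniform operator-norm bound $B$ on the truncated rank-$d_2$ summands keeps $N = \Omega(B\log(B/\epsilon^2)/\epsilon^2)$ satisfied with the available $m \gtrsim d_1$ (and $nm$ in total) — this is exactly why the hypothesis $m \ge \max(1, C^2 c\, c_2c_3c_4\, d_1)$ appears, with $C$ being the truncation constant. A secondary subtlety is that $\mathbf{\Sigma}_i$ appears on \emph{both} sides, so the two copies are not independent; I would handle this by conditioning: first show $\max_i \|\mathbf{C}_{3,i}^\top\mathbf{\Sigma}_i\mathbf{C}_{4,i}\|_2 \le cc_3c_4(1 + \delta_{m,d_1+d_2})$ with high probability (Lemma~\ref{lem:gen1}-style union bound, contributing an $n^{-99}$-type term which is already in the budget, or absorbed into $\tfrac{1}{\poly(nm)}$), then on that event treat the right factor as a fixed bounded matrix while the left factor's randomness drives the concentration — this is the same device used in the proofs of Lemma~\ref{lem:gen1}(2) and Lemma~\ref{lem:gen1}(8). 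Bookkeeping the dimension-dependent constants $d_1 = \dim(\mathbf{C}_{2,i}\text{'s range})$, $d_2 = \dim(\mathbf{C}_{4,i}\text{'s range})$, $d$ through all three pieces, while keeping the final bound in the clean product form stated, is tedious but routine once the structure above is in place.
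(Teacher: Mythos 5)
You have the right toolbox in mind (truncation, the \cite{magen2011low} low‑rank matrix concentration result Theorem~\ref{thm:magen}, and a diagonal/off‑diagonal split), but the assignment of tools to pieces is reversed, and this reversal creates a genuine gap.

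The hard piece is the \emph{diagonal} sum $\tfrac{1}{nm^2}\sum_{i,j}\mathbf{x}_{i,j}\mathbf{x}_{i,j}^\top\mathbf{C}_{2,i}\mathbf{C}_{3,i}^\top\mathbf{x}_{i,j}\mathbf{x}_{i,j}^\top\mathbf{C}_{4,i}$, not the off‑diagonal. For $j\neq j'$, $\mathbf{x}_{i,j}$ and $\mathbf{x}_{i,j'}$ are independent, so one can condition on one factor and run Bernstein on the other (after reorganizing the sum by $j$ so that the summands being concentrated \emph{are} independent — the paper's proof does exactly this). That is why the off‑diagonal piece yields the bound $c(\sqrt{\log(nm)}+\sqrt{d_1})(\sqrt{\log(nm)}+\sqrt{d_2})\tfrac{\sqrt{d+d_2}+\sqrt{\log m}}{\sqrt{nm}}c_2c_3c_4$ with no reference to Magen. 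By contrast, the diagonal piece involves the fourth power of the \emph{same} sub‑gaussian vector; no conditioning is possible, the tails are genuinely heavy, and you need the truncate‑then‑Magen argument there. You claim the diagonal piece is ``easily bounded by $O(\tfrac{d_1}{m}c_2c_3c_4)$ using that $\mathbf{x}_{i,j}^\top\mathbf{C}_{2,i}\mathbf{C}_{3,i}^\top\mathbf{x}_{i,j}$ concentrates,'' but concentration of that scalar gives you nothing by itself: the diagonal piece has both a bias of order $d_1/m$ and a fluctuation of order $\tfrac{1}{\sqrt{nm}}(\sqrt{\log(nm)}+\sqrt{d})\prod_\ell(\sqrt{\log(nm)}+\sqrt{d_{\lfloor\ell/2\rfloor}})$, and the latter is precisely what Magen is needed for. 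Meanwhile, your plan to apply Magen directly to the off‑diagonal sum over $(i,j,j')$ does not go through: those summands are not independent (for fixed $i$, pairs $(j,j')$ and $(j,j'')$ share $\mathbf{x}_{i,j}$), so the hypothesis of Theorem~\ref{thm:magen} fails without the reorganization you never perform.

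The second gap is that you never invoke L4–L2 hypercontractivity (Lemma~\ref{lem:L4L2}). After truncating and symmetrizing the diagonal summands, Magen requires $\bigl\|\tfrac{1}{N}\sum\mathbb{E}[\mathbf{M}_i]\bigr\|_2\le 1$; for the fourth‑order diagonal terms the expectation does not vanish, and the only way the paper controls $\|\mathbb{E}[\mathbf{\bar S}_{i,j}]\|_2 = O(C^2 c_2c_3c_4 d_1/m)$ is via the hypercontractivity argument. This is also the real reason for the hypothesis $m \ge C^2 c\, c_2c_3c_4 d_1$: it ensures the \emph{normalized expectation} condition of Magen holds, with $C$ being the L4–L2 constant, not a truncation radius. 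Your stated rationale (that the hypothesis comes from balancing the truncation radius against the sample‑size requirement $N\gtrsim B\log(B/\epsilon^2)/\epsilon^2$) is incorrect. A final minor point: the ``lower‑order'' piece $\tfrac{1}{n}\sum_i\mathbf{C}_{2,i}\mathbf{C}_{3,i}^\top(\mathbf{\Sigma}_i - \mathbf{I}_d)\mathbf{C}_{4,i}$ coming from your initial split is the second‑order quantity handled by Lemma~\ref{lem:gen1}(1), not the third‑order one from Lemma~\ref{lem:gen1}(2), but this is cosmetic.
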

As in previous cases, in this lemma we would like to show concentration of fourth-order products of sub-gaussian random vectors with only $m =  \poly(k)\tilde{O}(\tfrac{d}{n} +1)$ samples per task. The issue here, unlike in the cases in Lemma \ref{lem:gen1}, is that the leading $\mathbf{\Sigma}_i$  has no dimensionality reduction - there is no product matrix $\mathbf{C}_{1,i}$ to bring the $d$-dimensional random vectors that compose the leftmost $\mathbf{\Sigma}_i$ to a lower dimension. Thus, we would need $m=\Omega(d)$ samples per task to show concentration of each $\mathbf{\Sigma}_i$ (or $\mathbf{\Sigma}_i\mathbf{C}_{2,i}$). We must get around this by averaging over $n$. However, doing so requires dealing with fourth-order products of random vectors instead of bounding each of the two copies of $\mathbf{\Sigma}_i$ in the $i$-th term separately (perhaps along with their dimensionality-reducing products). 

Due to the fourth-order products, we cannot apply standard concentrations based on sub-gaussian and sub-exponential tails. Instead, we leverage the low rank (at most $k$) of the matrices involved by applying a truncated version of the of the concentration result for bounded, low-rank random matrices in \cite{magen2011low}. 

\begin{proof}
Throughout the proof we use $c$ as a generic absolute constant.  First note that by  expanding $\mathbf{\Sigma}_i$ and the triangle inequality,
\begin{align}
    &\left\|\frac{1}{n}\sum_{i=1}^n  \mathbf{\Sigma}_{i}\mathbf{C}_{2,i}\mathbf{C}_{3,i}^\top \mathbf{\Sigma}_{i} \mathbf{C}_{4,i} - \frac{1}{n}\sum_{i=1}^n   \mathbf{C}_{2,i}\mathbf{C}_{3,i}^\top \mathbf{C}_{4,i}\right\|_2\nonumber \\
    &\leq \bigg\|\frac{1}{nm^2}\sum_{i=1}^n \sum_{j,j'\neq j}  \mathbf{x}_{i,j}\mathbf{x}_{i,j}^\top\mathbf{C}_{2,i}\mathbf{C}_{3,i}^\top \mathbf{x}_{i,j'}\mathbf{x}_{i,j'}^\top \mathbf{C}_{4,i} - \frac{m(m-1)}{nm^2}\sum_{i=1}^n  \mathbf{C}_{2,i}\mathbf{C}_{3,i}^\top \mathbf{C}_{4,i}\bigg\|_2 \nonumber \\
    &\quad + \bigg\|{\frac{1}{nm^2}\sum_{i=1}^n \sum_{j} \mathbf{x}_{i,j}\mathbf{x}_{i,j}^\top\mathbf{C}_{2,i}\mathbf{C}_{3,i}^\top \mathbf{x}_{i,j}\mathbf{x}_{i,j}^\top \mathbf{C}_{4,i} - \frac{m}{nm^2}\sum_{i=1}^n  \mathbf{C}_{2,i}\mathbf{C}_{3,i}^\top \mathbf{C}_{4,i}}\bigg\|_2\nonumber \\
    &= \bigg\|\underbrace{\frac{m-1}{nm}\sum_{i=1}^n \frac{1}{m}\sum_{j=1}^m \bigg(\tfrac{1}{m-1}\sum_{j'\neq j} \mathbf{x}_{i,j'}\mathbf{x}_{i,j'}^\top \bigg) \mathbf{C}_{2,i}\mathbf{C}_{3,i}^\top \mathbf{x}_{i,j}\mathbf{x}_{i,j}^\top\mathbf{C}_{4,i} - \frac{(m-1)}{nm}\sum_{i=1}^n   \mathbf{C}_{2,i}\mathbf{C}_{3,i}^\top \mathbf{C}_{4,i}}_{=:\mathbf{E}'}\bigg\|_2 \nonumber \\
    &\quad + \bigg\|\underbrace{\frac{1}{nm^2}\sum_{i=1}^n \sum_{j=1}^m  \mathbf{x}_{i,j}\mathbf{x}_{i,j}^\top\mathbf{C}_{2,i}\mathbf{C}_{3,i}^\top \mathbf{x}_{i,j}\mathbf{x}_{i,j}^\top \mathbf{C}_{4,i} - \frac{1}{nm}\sum_{i=1}^n  \mathbf{C}_{2,i}\mathbf{C}_{3,i}^\top \mathbf{C}_{4,i}}_{=:\mathbf{E}''}\bigg\|_2\nonumber
\end{align}
Note that $\mathbf{E}'$ is unbiased while $\mathbf{E}''$ is biased due to the fourth-order product. 
We first bound $\|\mathbf{E}'\|_2$.

\textbf{Step 1: Bound $\|\mathbf{E}'\|_2$.}
Add and subtract $\frac{(m-1)}{nm}\sum_{i=1}^n   \frac{1}{m}\sum_{j=1}^m \mathbf{C}_{2,i}\mathbf{C}_{3,i}^\top \mathbf{x}_{i,j}\mathbf{x}_{i,j}^\top \mathbf{C}_{4,i} $ to obtain
\begin{align}
    \|\mathbf{E}'\|_2 &\leq \bigg\|\frac{m-1}{nm}\sum_{i=1}^n   \frac{1}{m}\sum_{j=1}^m \bigg(\tfrac{1}{m-1}\sum_{j'\neq j} \mathbf{x}_{i,j'}\mathbf{x}_{i,j'}^\top - \mathbf{I}_d\bigg) \mathbf{C}_{2,i}\mathbf{C}_{3,i}^\top  \mathbf{x}_{i,j}\mathbf{x}_{i,j}^\top\mathbf{C}_{4,i} \bigg\|_2 \nonumber \\
    &\quad +\bigg\|\frac{m-1}{nm}\sum_{i=1}^n   \mathbf{C}_{2,i}\mathbf{C}_{3,i}^\top \bigg(\tfrac{1}{m}\sum_{j=1}^m\mathbf{x}_{i,j}\mathbf{x}_{i,j}^\top - \mathbf{I}_d \bigg) \mathbf{C}_{4,i} \bigg\|_2 \nonumber \\
    &\leq \bigg\|\frac{m-1}{nm}\sum_{i=1}^n  \frac{1}{m}\sum_{j=1}^m \bigg(\tfrac{1}{m-1}\sum_{j'\neq j} \mathbf{x}_{i,j'}\mathbf{x}_{i,j'}^\top - \mathbf{I}_d\bigg) \mathbf{C}_{2,i}\mathbf{C}_{3,i}^\top \mathbf{x}_{i,j}\mathbf{x}_{i,j}^\top \mathbf{C}_{4,i} \bigg\|_2 \nonumber \\
    &\quad +c c_2c_3 c_4 \bar{\delta}_{m,\max(d*, d_2)} \label{hgh}
\end{align}
where \eqref{hgh} follows with probability at least $1 - 2e^{-90(d_1+d_2)}$ by Lemma \ref{lem:gen1}, and $d*$ denotes $d$ if the $\mathbf{C}_{2,i}$'s are distinct, and denotes $d_1$ otherwise (since if these matrices are equal, they can be factored out of the norm, in which case we show concentration of $d_1\times d_2$-dimensional random matrices).
To deal with the first term in \eqref{hgh},
note that as mentioned before, we need to show concentration over $i \in[n]$ to avoid requiring $m = \Omega(d)$. Ideally, we could also concentrate over $j\in [m]$, but we would lose independence of the summands. Thus, we
reorder the sum and use the triangle inequality to write
\begin{align}
 &\bigg\|\frac{m-1}{nm}\sum_{i=1}^n   \frac{1}{m}\sum_{j=1}^m\bigg(\tfrac{1}{m-1}\sum_{j'\neq j} \mathbf{x}_{i,j'}\mathbf{x}_{i,j'}^\top - \mathbf{I}_d\bigg)\mathbf{C}_{2,i}\mathbf{C}_{3,i}^\top \mathbf{x}_{i,j}\mathbf{x}_{i,j}^\top \mathbf{C}_{4,i} \bigg\|_2 \nonumber \\
  &= \bigg\|\frac{1}{m}\sum_{j=1}^m \frac{m-1}{nm}\sum_{i=1}^n  \bigg(\tfrac{1}{m-1}\sum_{j' \neq j} \mathbf{x}_{i,j'}\mathbf{x}_{i,j'}^\top - \mathbf{I}_d\bigg)  \mathbf{C}_{2,i}\mathbf{C}_{3,i}^\top \mathbf{x}_{i,j}\mathbf{x}_{i,j}^\top \mathbf{C}_{4,i} \bigg\|_2 \nonumber \\
    &\leq \frac{1}{m}\sum_{j=1}^m \bigg\|\underbrace{\frac{m-1}{nm}\sum_{i=1}^n  \bigg(\tfrac{1}{m-1}\sum_{j' \neq j} \mathbf{x}_{i,j'}\mathbf{x}_{i,j'}^\top - \mathbf{I}_d\bigg)  \mathbf{C}_{2,i}\mathbf{C}_{3,i}^\top \mathbf{x}_{i,j}\mathbf{x}_{i,j}^\top \mathbf{C}_{4,i}}_{=: \mathbf{E}_j} \bigg\|_2 \nonumber  
\end{align}
For each $j\in [m]$, define $\mathcal{E}_j$ as the event  $\{\|\mathbf{C}_{3,i}^\top\mathbf{x}_{i,j}\|_2 \leq (\gamma+\sqrt{d_1})c_3, \|\mathbf{C}_{4,i}^\top\mathbf{x}_{i,j}\|_2 \leq (\gamma+\sqrt{d_2})c_4 \; \forall i\in [n]\}$ for some  $\gamma>0$. Note that $\mathbf{x}_{i,j}$  and $\mathbf{C}_{2,i}^\top\mathbf{x}_{i,j}$ are $d$ (resp. $d_2$)-dimensional sub-gaussian random vectors with sub-gaussian norm at most $c$ (resp. $c c_2$). Thus $\mathcal{E}_j$ occurs with probability at least  $1 - 2n e^{-c \gamma^2  }$. Then using the law of total probability, for any $\epsilon> 0$, we have
\begin{align}
    \mathbb{P}\left(\|\mathbf{E}_j\|_2\geq \epsilon\right)
    &\leq \mathbb{P}\left(\|\mathbf{E}_j \|_2\geq \epsilon| \mathcal{E}_j\right) + \mathbb{P}\left(\mathcal{E}_j^c \right)  \leq  \mathbb{P}\left(\|\mathbf{E}_j \|_2\geq \epsilon| \mathcal{E}_j\right)  +  2n e^{-c \gamma^2} \label{total-prob}
\end{align}
Consider $\mathbf{E}_1$. For any fixed set $\{\mathbf{x}_{i,1}\}_{i\in [n]}\in \mathcal{E}_1$, 
the $d_2$-dimensional  random vectors \\ $\{\mathbf{x}_{i,j'}\mathbf{C}_{2,i}\mathbf{C}_{3,i}^\top\mathbf{x}_{i,1}\mathbf{x}_{i,1}^\top\mathbf{C}_{4,i}\}_{i\in [n], j'\in\{2,...,m\}}$ are sub-gaussian with norms at most  $c' (\gamma+ \sqrt{d_1})(\gamma + \sqrt{d_2})c_2 c_3c_4$. Likewise, the $d$-dimensional random vectors $\{\mathbf{x}_{i,j'}\}_{i\in [n], j'\in\{2,...,m\}}$ are sub-gaussian with norms at most $c$. Thus using Bernstein's inequality, we can bound 
\begin{align}
    \mathbb{P}\bigg(\|\mathbf{E}_1\|_2&< c'' (\gamma+ \sqrt{d_1})(\gamma + \sqrt{d_2})c_2c_3 c_4 \max\left(\tfrac{\sqrt{d +d_2} + \lambda}{\sqrt{n(m-1)}}, \tfrac{d + d_2 + \lambda^2}{n (m-1)}  \right) \big| \{\mathbf{x}_{i,1}\}_{i\in [n]},\{\mathbf{x}_{i,1}\}_{i\in [n]}\in  \mathcal{E}_1\bigg) \nonumber \\
    &\geq 1 - 2e^{-\lambda^2}. 
\end{align}
for $\lambda> 0$ and an absolute constant $c''$.
Integrating over all $\{\mathbf{x}_{i,1}\}_{i\in [n]}\in \mathcal{E}_1$ yields 
\begin{align}
    \mathbb{P}\left(\|\mathbf{E}_1\|_2< c'' (\gamma+ \sqrt{d_1})(\gamma + \sqrt{d_2})c_2 c_3 c_4 \max\left(\tfrac{\sqrt{d+d_2} + \lambda}{\sqrt{n(m-1)}}, \tfrac{d+d_2 + \lambda^2}{n (m-1)}  \right) \big|  \mathcal{E}_1\right)
    &\geq 1 - 2e^{-\lambda^2}. 
\end{align}
Therefore, using \eqref{total-prob}, we have
\begin{align}
     \mathbb{P}\left(\|\mathbf{E}_1\|_2\geq  c''(\gamma+ \sqrt{d_1})(\gamma + \sqrt{d_2})c_2 c_3 c_4 \max\left(\tfrac{\sqrt{d+d_2} + \lambda}{\sqrt{n(m-1)}}, \tfrac{d+d_2 + \lambda^2}{n (m-1)}  \right) \right) 
     &\leq 2 e^{-\lambda^2} + 2n e^{-c \gamma^2  }.
\end{align}
Repeating the same argument for all $j\in [m]$ and applying a union bound gives
\begin{align}
    \mathbb{P}&\left(\frac{1}{m}\sum_{j=1}^m\|\mathbf{E}_j\|_2\geq c'' (\gamma+ \sqrt{d_1})(\gamma + \sqrt{d_2}) c_2 c_3 c_4\max\left(\tfrac{\sqrt{d+d_2} + \lambda}{\sqrt{n(m-1)}}, \tfrac{d+d_2 + \lambda^2}{n (m-1)}  \right) \right) \nonumber \\
    &\quad \quad \leq  2me^{-\lambda^2} + 2mn e^{-c \gamma^2}. 
\end{align}
Choose $\lambda = 10 \sqrt{\log(m)}$ and $\gamma= 10 \sqrt{\log(mn)}$, and use $ \sqrt{n(m-1)}\geq \sqrt{d+d_2}+10 \sqrt{\log(m)}$ to obtain 
\begin{align} 
    \mathbb{P}&\left(\frac{1}{m}\sum_{j=1}^m\|\mathbf{E}_j\|_2\geq c''' c_2c_3c_4(\sqrt{\log(nm)}+ \sqrt{d_1})(\sqrt{\log(nm)} + \sqrt{d_2})\tfrac{\sqrt{d+d_2} + \sqrt{\log(m)}}{\sqrt{nm}}  \right) \nonumber \\
    &\quad \quad \leq  2m^{-99} + 2(mn)^{-99cd_1 } + 2(mn)^{-99 c d_2} \nonumber  \\
    \implies \mathbb{P}&\bigg(\|\mathbf{E}'\|_2 \geq c'' c_2c_3c_4(\sqrt{\log(nm)}+ \sqrt{d_1})(\sqrt{\log(nm)} + \sqrt{d_2})\tfrac{\sqrt{d+d_2} + \sqrt{\log(m)}}{\sqrt{nm}}  \nonumber \\
    &\quad \quad + c c_2c_3c_4 \bar{\delta}_{m,\max(d*,d_2)}\bigg)  \nonumber \\
    &\quad \quad \leq  2m^{-99} + 2(mn)^{-99cd_1 } + 2(mn)^{-99 c d_2} + 2e^{-90(d_1+d_2) } \label{hghg} \\
    \implies \mathbb{P}&\left(\|\mathbf{E}'\|_2 \geq c''' c_2c_3c_4(\sqrt{\log(nm)}+ \sqrt{d_1})(\sqrt{\log(nm)} + \sqrt{d_2})\tfrac{\sqrt{d+d_2} + \sqrt{\log(m)}}{\sqrt{nm}}\right) \nonumber \\
    &\quad \quad \leq  2m^{-99} + 2(mn)^{-99cd_1 } + 2(mn)^{-99 c d_2} + 2e^{-90(d_1+d_2) } \label{hghgh}
\end{align}
where \eqref{hghg} follows from \eqref{hgh} and \eqref{hghgh} follows by the fact that $\bar{\delta}_{m,\max(d*,d_2)}$ is dominated.

\textbf{Step 2: Bound $\|\mathbf{E}''\|_2$.}
Bounding $\|\mathbf{E}''\|_2$  is challenging because we must deal with fourth-order products in $\mathbf{x}_{i,j}$, which may have heavy tails. However, we can leverage the independence and low-rank of the summands, combined with the sub-gaussian tails of each random vector. Second, we must control the bias in $\mathbf{E}'$, which we achieve by appealing to $C$-L4-L2 hypercontractivity.
First note that by the triangle inequality
\begin{align}
    \left\|\mathbf{E}'' \right\|_2 &\leq \bigg\| \frac{1}{nm^2}\sum_{i=1}^n \sum_{j=1}^m = \mathbf{x}_{i,j}\mathbf{x}_{i,j}^\top\mathbf{C}_{2,i}\mathbf{C}_{3,i}^\top \mathbf{x}_{i,j}\mathbf{x}_{i,j}^\top \mathbf{C}_{4,i}\bigg\|_2  + \bigg\|\frac{1}{nm}\sum_{i=1}^n\mathbf{C}_{2,i}\mathbf{C}_{3,i}^\top \mathbf{C}_{4,i} \bigg\|_2\label{hggh}
\end{align} It remains to control the first norm.   To do so, we employ Theorem \ref{thm:magen} (a.k.a. Theorem 1.1 from \cite{magen2011low}) which characterizes the concentration of low-rank, bounded, symmetric random matrices with small expectation. Thus, in order to apply this theorem, we must truncate and symmetrize the random matrices, and control their expectation. 

Define
$\mathcal{E}_{\ell,i,j} \coloneqq \{\|\mathbf{C}_{\ell,i}\mathbf{x}_{i,j}\|_2 \leq c(\rho +\sqrt{d_{\lfloor \ell/2 \rfloor}}) c_\ell\}$ for some  $\rho>0$ and $\ell=2,3,4$ and all $i,j$,  and $\mathcal{E}_{1,i,j} \coloneqq \{\|\mathbf{x}_{i,j}\|_2 \leq c(\rho +\sqrt{d}) \}$ for some  $\rho>0$ and $\ell=2,3,4$ and all $i,j$. Let $\chi_{\mathcal{E}_{\ell,i,j}}$ be the indicator random variable for the event $\mathcal{E}_{\ell,i,j}$. 
Define the truncated random variables
   $ \mathbf{\bar{x}}_{\ell,i,j}\coloneqq \chi_{\mathcal{E}_{\ell,i,j}}\mathbf{C}_{\ell,i}\mathbf{{x}}_{i,j}$ for $\ell=2,3,4$ and all $i,j$ and  $ \mathbf{\bar{x}}_{1,i,j}\coloneqq \chi_{\mathcal{E}_{\ell,i,j}}\mathbf{{x}}_{i,j}$ for  all $i,j$.  Let $\mathbf{S}_{i,j}\coloneqq  \mathbf{x}_{i,j}\mathbf{x}_{i,j}^\top\mathbf{C}_{2,i}\mathbf{C}_{3,i}^\top \mathbf{x}_{i,j}\mathbf{x}_{i,j}^\top \mathbf{C}_{4,i}/m$ and $\mathbf{\bar{S}}_{i,j}\coloneqq  \mathbf{\bar{x}}_{1,i,j}\mathbf{\bar{x}}_{2,i,j}^\top \mathbf{\bar{x}}_{3,i,j}\mathbf{\bar{x}}_{4,i,j}^\top/m$ for each $i,j$. Note that due to sub-gaussianity and earlier arguments,  $\mathbb{P}(\cup_{i,j}\cup_{\ell=1}^4\mathcal{E}_{\ell,i,j} ) \leq 2mn\sum_{\ell=1}^4 e^{-c \rho^2}  = 8 mne^{-c \rho^2}$.
   Thus, for any $\epsilon>0$,
   \begin{align}
       \mathbb{P}\bigg(\bigg\|\frac{1}{nm}\sum_{i=1}^n \sum_{j=1}^m\mathbf{S}_{i,j}  \bigg\|_2\leq \epsilon\bigg) &\leq\mathbb{P}\bigg(\bigg\|\frac{1}{nm}\sum_{i=1}^n \sum_{j=1}^m\mathbf{\bar{S}}_{i,j}  \bigg\|_2\leq \epsilon\bigg) + 8nm e^{-c \rho^2 } \label{114}
   \end{align}
First,  
  form the lifted, symmetric matrices
   \begin{align}
       \mathbf{\Tilde{\bar{S}}}_{i,j} &\coloneqq \begin{bmatrix}
       \mathbf{0} & \mathbf{{\bar{S}}}_{i,j}  \\
       \mathbf{{\bar{S}}}_{i,j}^\top  &\mathbf{0}
       \end{bmatrix}
   \end{align}
   for all $i,j$, and note that $ \left\|\sum_{i=1}^n \sum_{j=1}^m\mathbf{\Tilde{\bar{S}}}_{i,j}  \right\|_2 = 2\left\|\sum_{i=1}^n \sum_{j=1}^m\mathbf{\bar{S}}_{i,j}  \right\|_2$. Also note that by definition, $\|\mathbf{\Tilde{\bar{S}}}_{i,j} \|_2\leq B\coloneqq 2 (\rho+ \sqrt{d})\prod_{\ell=2}^4(\rho+ \sqrt{d_{\lfloor \ell/2 \rfloor}})\max(c_2 c_3 c_4, 1)$ for all $i,j$ almost surely, and the $\mathbf{\Tilde{\bar{S}}}_{i,j}$'s are independent. 
   
We still must control $\|\mathbb{E}[\mathbf{\Tilde{\bar{S}}}_{i,j} ]\|_2$. We have that $\|\mathbb{E}[\mathbf{\Tilde{\bar{S}}}_{i,j} ]\|_2 = 2\|\mathbb{E}[\mathbf{{\bar{S}}}_{i,j} ]\|_2$. 
   Using Lemma \ref{lem:L4L2} (with $\mathbf{C}_1 = \mathbf{I}_d$), we obtain $m \|\mathbb{E}[\mathbf{{\bar{S}}}_{i,j} ]\|_2\leq m C^2 \|\mathbf{C}_{2,i}\|_2\|\mathbf{C}_{3,i}\|_2\|\mathbf{C}_{4,i}\|_2\leq m C^2 c_2 c_3 c_4 d_1$ for all $i\in [n], j \in [m]$.
      Thus, $\|\mathbb{E}[\mathbf{\Tilde{\bar{S}}}_{i,j} ]\|_2\leq 1$ for all $i,j$ as $m \geq 2   C^2 c_2 c_3 c_4 d_1$.

   Next, note that each $\mathbf{\bar{S}}_{i,j}$ is rank at most $\min(d, d_1, d_2)$, so $\mathbf{\Tilde{\bar{S}}}_{i,j}$ is rank at most $2\min(d, d_1, d_2)$. 
      Now we are ready to apply Theorem \ref{thm:magen}. Doing so,
we obtain:
   \begin{align}
       \mathbb{P}\bigg( \bigg\|\frac{1}{nm}\sum_{i=1}^n \sum_{j=1}^m\mathbf{\Tilde{\bar{S}}}_{i,j}  - \frac{1}{nm}\sum_{i=1}^n \sum_{j=1}^m\mathbb{E}[\mathbf{\Tilde{\bar{S}}}_{i,j}]\bigg\|_2\geq \epsilon\bigg) &\leq \frac{1}{\poly(nm)}
   \end{align}
   as long as $nm \geq cB \log(B/\epsilon^2)/\epsilon^2$ and $nm \geq c\min(d, d_1, d_2)$. Setting $\epsilon = \tfrac{cB}{\sqrt{nm}}$ yields
    \begin{align}
       \mathbb{P}\bigg( \bigg\|\frac{1}{nm}\sum_{i=1}^n \sum_{j=1}^m\mathbf{\Tilde{\bar{S}}}_{i,j}  - \frac{1}{nm}\sum_{i=1}^n \sum_{j=1}^m\mathbb{E}[\mathbf{\Tilde{\bar{S}}}_{i,j}]\bigg\|_2\geq \frac{B'}{\sqrt{nm}}\bigg) &\leq \frac{1}{\poly(nm)}
   \end{align}
as long as $nm \leq Be^{c'B}$, which always holds since we will soon choose $\rho = \sqrt{\log(nm)}$ and we have chosen $B$ appropriately.
Therefore, with probability at least $\frac{1}{\poly(nm)}$, we have
\begin{align}
 \frac{1}{2} \left\|\frac{1}{nm}\sum_{i=1}^n \sum_{j=1}^m\mathbf{\Tilde{\bar{S}}}_{i,j} \right\|_2 &\leq \frac{1}{2}\left\|\frac{1}{nm}\sum_{i=1}^n \sum_{j=1}^m\mathbf{\Tilde{\bar{S}}}_{i,j}  - \frac{1}{nm}\sum_{i=1}^n \sum_{j=1}^m\mathbb{E}[\mathbf{\Tilde{\bar{S}}}_{i,j}]\right\|_2 + \frac{1}{2}\left\|\frac{1}{nm}\sum_{i=1}^n \sum_{j=1}^m\mathbb{E}[\mathbf{\Tilde{\bar{S}}}_{i,j}]\right\|_2 \nonumber \\
 &\leq \frac{{B}}{2\sqrt{nm}} + \tfrac{C^2d_1}{nm}\sum_{i=1}^n\|\mathbf{C}_{2,i}\|_2\|\mathbf{C}_{3,i}\|_2 \|\mathbf{C}_{4,i}\|_2  \nonumber 
 \end{align}
 which implies that
\begin{align}
\left\|\mathbf{E}'' \right\|_2 &\leq  \frac{c}{\sqrt{nm}} (\rho+\sqrt{d})\left(\prod_{\ell=2}^4(\rho+\sqrt{d_{\lfloor \ell/2 \rfloor}})\right)\max(c_2c_3c_4,1) + \frac{(1+C^2)d_1}{m} c_2c_3c_4
\end{align}
with probability at least $1 - \frac{1}{\poly(nm)} - 8 nm e^{-c \rho^2}$ by \eqref{hggh} and \eqref{114}. Choose $\rho = 10 \sqrt{\log(nm)}$ and recall that $C$ is an absolute constant to obtain
\begin{align}
    \left\|\mathbf{E}'' \right\|_2 &\leq  \frac{c}{\sqrt{nm}}(\sqrt{\log(nm)}+\sqrt{d})\left(\prod_{\ell=2}^4(\sqrt{\log(nm)}+\sqrt{d_{\lfloor \ell/2 \rfloor}})\right)\max(c_2c_3c_4,1) + c\tfrac{d_1 }{m}c_2c_3c_4
\end{align}
with probability at least $1 - \frac{1}{\poly(nm)}$. Combining Steps 1 and 2,  we have
\begin{align}
     &\left\|\frac{1}{n}\sum_{i=1}^n   \mathbf{\Sigma}_{i}\mathbf{C}_{2,i}\mathbf{C}_{3,i}^\top \mathbf{\Sigma}_{i} \mathbf{C}_{4,i} - \frac{1}{n}\sum_{i=1}^n   \mathbf{C}_{2,i}\mathbf{C}_{3,i}^\top \mathbf{C}_{4,i}\right\|_2 \nonumber \\
     &\leq   c(\sqrt{\log(nm)}+ \sqrt{d_1})(\sqrt{\log(nm)} + \sqrt{d_2})\tfrac{\sqrt{d+d_2} + \sqrt{\log(m)}}{\sqrt{nm}} c_2c_3c_4 \nonumber \nonumber \\
     &\quad + \frac{c}{\sqrt{nm}} (\sqrt{\log(nm)}+\sqrt{d})\left(\prod_{\ell=2}^4(\sqrt{\log(nm)}+\sqrt{d_{\lfloor \ell/2 \rfloor}})\right)\max(c_2c_3c_4,1) + \tfrac{d_1 }{m} c_2c_3c_4 \nonumber 
\end{align}
for an absolute constant $c$
with probability at least $1 - 2m^{-99} - \frac{1}{\poly(nm)} - 2e^{-90d_1}$.
\end{proof}

\begin{lemma} \label{lem:fourth1}
Suppose that $\mathbf{x}$ is a random vector with mean-zero,  $\mathbf{I}_d$-sub-gaussian distribution over $\mathbb{R}^d$. Let $\{ \mathbf{x}_{i,j}\}_{i\in[n], j\in[m]}$ be $nm$ independent copies of $\mathbf{x}$. Denote $\mathbf{\Sigma}_{i} = \frac{1}{m}\sum_{j=1}^{m}\mathbf{x}_{i,j}\mathbf{x}_{i,j}^\top$ and  $\mathbf{X}_i = [\mathbf{x}_{i,1}, \dots, \mathbf{x}_{i,m}]^\top$ for all $i \in [n]$. Let $\mathbf{z} =[z_1,\dots,z_m] \in \mathbb{R}^m$ be a vector whose elements are i.i.d. draws from $\mathcal{N}(0,\sigma^2)$, and let $\{\mathbf{z}_i\}_{i\in[n]}$ be $n$ independent copies of $\mathbf{z}$. 
Further, let $\mathbf{C}_{\ell,i}\in \mathbb{R}^{d\times d_{\lfloor \ell/2 \rfloor}}$ for $\ell = 2,3,5$ be fixed matrices for $i \in [n]$, and let $\mathbf{c}_{4,i}\in \mathbb{R}^{d}$. Also define $c_\ell \coloneqq \max_{i\in[n]}\|\mathbf{C}_{\ell,i}\|_2$ for $\ell=2,3,5$, $c_4 \coloneqq \max_{i\in[n]}\|\mathbf{c}_{4,i}\|_2$. 
 Then, 
\begin{align}
    (i)\quad \bigg\|\frac{1}{n}\sum_{i=1}^n  \mathbf{\Sigma}_{i}\mathbf{C}_{2,i}\mathbf{C}_{3,i}^\top \mathbf{X}_{i}\mathbf{z}_i\bigg\|_2  
     &\leq  \tfrac{c\sigma c_2 c_3(\sqrt{d}+\sqrt{\log(nm)})(d_1+\log(nm))}{\sqrt{nm}}   \nonumber \\
    (ii)\quad  \bigg\|\frac{1}{n}\sum_{i=1}^n   \mathbf{X}_{i}^\top \mathbf{z}_i \mathbf{c}_{4,i}^\top \mathbf{\Sigma}_{i}\mathbf{C}_{5 ,i}^\top \bigg\|_2  
     &\leq \tfrac{c \sigma c_4 c_5 (\sqrt{d}+\sqrt{\log(mn)})(\sqrt{d_1}+\sqrt{\log(nm)}){\log(nm)}}{\sqrt{nm}} \nonumber 
\end{align}
for an absolute constant $c$, each
with probability at least $1 - 2m^{-99} - \frac{1}{\poly(nm)}$.
\end{lemma}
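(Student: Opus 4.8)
\textbf{Proof proposal for Lemma \ref{lem:fourth1}.}

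The plan is to reduce both statements to the general concentration machinery already available, handling the mismatch in dimension (the leftmost factor $\mathbf{\Sigma}_i$ or $\mathbf{X}_i^\top\mathbf{z}_i$ is $d$-dimensional and cannot be contracted by a product matrix) by averaging over the $n$ tasks rather than attempting to concentrate each $\mathbf{\Sigma}_i$ individually. For part $(i)$, first I would expand $\mathbf{\Sigma}_i\mathbf{C}_{2,i}\mathbf{C}_{3,i}^\top\mathbf{X}_i\mathbf{z}_i = \tfrac{1}{m^2}\sum_{j,j'}\mathbf{x}_{i,j}\mathbf{x}_{i,j}^\top\mathbf{C}_{2,i}\mathbf{C}_{3,i}^\top\mathbf{x}_{i,j'}z_{i,j'}$ and split into the diagonal part $j=j'$ and the off-diagonal part $j\neq j'$. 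The off-diagonal part is mean zero; after conditioning on the events $\{\|\mathbf{C}_{3,i}^\top\mathbf{x}_{i,j'}\|_2 \le c(\rho+\sqrt{d_1})c_3\}$ and $\{\|\mathbf{C}_{2,i}^\top\mathbf{x}_{i,j}\|_2\le c(\rho+\sqrt{d_2})c_2\}$ (each holding with probability $1-2nme^{-c\rho^2}$ for all $i,j$ by sub-gaussianity), the $d$-dimensional vectors $\mathbf{x}_{i,j}\cdot(\text{bounded scalar})\cdot z_{i,j'}$ become sub-gaussian with norm $O(\sigma c_2 c_3(\rho+\sqrt{d_1})(\rho+\sqrt{d_2}))$, independent over $i$; a Bernstein/$\tfrac14$-net argument over the unit sphere in $\mathbb{R}^d$ (exactly as in the proof of Lemma \ref{lem:gen1}(1)) then gives a bound scaling like $\tfrac{\sqrt{d}+\rho}{\sqrt{nm}}$ times that norm. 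The diagonal part $\tfrac{1}{m^2}\sum_j (\mathbf{x}_{i,j}^\top\mathbf{C}_{2,i}\mathbf{C}_{3,i}^\top\mathbf{x}_{i,j})\mathbf{x}_{i,j}z_{i,j}$ is a third-order product times independent noise; it is mean zero (odd in $\mathbf{x}$ and in $z$), and after the same truncation its summands are bounded by $O(\sigma c_2c_3(\rho+\sqrt{d})^3/m)$, so the same net argument gives a term of order $\tfrac{(\sqrt{d}+\rho)^2 c_2 c_3 \sigma}{m\sqrt{n}}$, which is dominated once $m\gtrsim d$ is not assumed but $\rho=10\sqrt{\log(nm)}$ is plugged in and we compare against the off-diagonal term (note the stated bound already has a $(d_1+\log(nm))$ factor absorbing this). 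Choosing $\rho = 10\sqrt{\log(nm)}$ and taking a union bound over the truncation events and the two net events yields the claimed bound with the stated failure probability.

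For part $(ii)$, the target matrix is $\tfrac1n\sum_i \mathbf{X}_i^\top\mathbf{z}_i\,\mathbf{c}_{4,i}^\top\mathbf{\Sigma}_i\mathbf{C}_{5,i}^\top$, which I would first rewrite by expanding $\mathbf{X}_i^\top\mathbf{z}_i=\sum_j \mathbf{x}_{i,j}z_{i,j}$ and $\mathbf{\Sigma}_i = \tfrac1m\sum_{j'}\mathbf{x}_{i,j'}\mathbf{x}_{i,j'}^\top$, then split the double sum into $j=j'$ and $j\neq j'$. As before, condition on $\{\|\mathbf{c}_{4,i}^\top\mathbf{x}_{i,j'}\|\le c(\rho+1)c_4\}$ and $\{\|\mathbf{C}_{5,i}^\top\mathbf{x}_{i,j'}\|\le c(\rho+\sqrt{d_1})c_5\}$ for all $i,j'$; on this event the off-diagonal summands $\mathbf{x}_{i,j}z_{i,j}\cdot(\text{bounded scalar})\cdot(\mathbf{C}_{5,i}^\top\mathbf{x}_{i,j'})^\top$ are, as $d\times d_1$ matrices, products of a $d$-dimensional sub-gaussian vector and a $d_1$-dimensional bounded vector with $z$-Gaussian scaling; reorder so that for each fixed $j'$ we sum independent (over $i$) mean-zero terms, apply Bernstein over $\tfrac14$-nets of $\mathcal{S}^{d-1}$ and $\mathcal{S}^{d_1-1}$, then union bound over the $m$ choices of $j'$ (with an extra $\log(nm)$ from $\rho$). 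The diagonal $j=j'$ term $\tfrac{1}{m}\sum_j z_{i,j}(\mathbf{c}_{4,i}^\top\mathbf{x}_{i,j})\mathbf{x}_{i,j}\mathbf{x}_{i,j}^\top\mathbf{C}_{5,i}$ is again a lower-order, mean-zero term handled the same way and is dominated. Plugging $\rho=10\sqrt{\log(nm)}$ and collecting factors gives the stated $\tfrac{\sigma c_4 c_5(\sqrt d+\sqrt{\log(mn)})(\sqrt{d_1}+\sqrt{\log(nm)})\log(nm)}{\sqrt{nm}}$ bound with failure probability $2m^{-99}+\tfrac{1}{\poly(nm)}$.

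The main obstacle I anticipate is the diagonal ($j=j'$) contributions: these are fourth-order (resp.\ third-order) products of $\mathbf{x}_{i,j}$ within a single sample and are biased or heavy-tailed, so one cannot simply invoke sub-exponential Bernstein bounds. For these I would rely on the truncation-plus-low-rank strategy already used in Lemma \ref{lem:fourth} — replace each $\mathbf{x}_{i,j}$ by its truncated version on the high-probability norm event, observe the truncated summands are bounded (by a polylog-inflated quantity) and of rank $O(\min(d,d_1,d_2))$, control the bias via $C$-L4-L2 hypercontractivity as in Lemma \ref{lem:L4L2}, and apply the generalized Magen–Zouzias concentration (Theorem \ref{thm:magen}) to get a $\tfrac{1}{\sqrt{nm}}$-scale bound for the centered part plus an $O(d_1/m)$ bias term. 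Since $m\ge \tilde\Omega(k^2)$ and $d_1\le k$, this bias is absorbed into the $\delta$ error terms of Theorem \ref{thm:anil_fs_app}. Everything else is routine bookkeeping of constants and union bounds, which I would not carry out in detail here.
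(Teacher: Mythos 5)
Your decomposition into the off-diagonal ($j \neq j'$) and diagonal ($j = j'$) parts, and the Bernstein/net treatment of the off-diagonal part after truncation, matches the paper's Step 1. The use of Theorem \ref{thm:magen} (truncated Magen--Zouzias) for the diagonal part is also the right tool. However, there is a genuine gap in how you propose to handle the expectation of the truncated diagonal summands, and it would prevent you from reaching the bound as stated.

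You propose to ``control the bias via $C$-L4-L2 hypercontractivity as in Lemma~\ref{lem:L4L2}, and apply the generalized Magen--Zouzias concentration \ldots to get a $\tfrac{1}{\sqrt{nm}}$-scale bound for the centered part plus an $O(d_1/m)$ bias term.'' No such $O(d_1/m)$ term appears in the statement of Lemma~\ref{lem:fourth1}, and it would not be dominated by the stated $\tfrac{\sqrt{d}(d_1+\log(nm))}{\sqrt{nm}}$ term (for instance when $n$ is large relative to $m d$), so this route proves a strictly weaker statement. The point you are missing is that, unlike in Lemma~\ref{lem:fourth}, the diagonal summand here contains an independent Gaussian factor $z_{i,j}$, and the truncation of $z$ is at a symmetric threshold $\{|z|\le c\sigma\sqrt{\log(nm)}\}$. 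By the symmetry of the Gaussian, the truncated noise $\bar z_{i,j} = \chi_{\{|z_{i,j}|\le a\}} z_{i,j}$ satisfies $\E[\bar z_{i,j}] = 0$ \emph{exactly}, and since $z$ is independent of the $\mathbf{x}$'s, the truncated summand $\bar{\mathbf{s}}_{i,j}$ is exactly mean zero. There is therefore no bias to control and no need for L4-L2 hypercontractivity; Theorem~\ref{thm:magen} applies directly with $\|\E[\tilde{\bar{\mathbf{s}}}_{i,j}]\|_2 = 0$. This is precisely what the paper observes, and it is what makes the stated bound achievable with no additive $d_1/m$ term. (Incidentally, you observe earlier in your sketch that the untruncated diagonal summand is mean zero and that ``the same net argument'' handles it; the first observation does not automatically survive truncation and the second is not correct --- after truncation you have bounded but heavy-tailed summands whose variance is not controlled well enough for a sub-exponential Bernstein bound to give the $\tfrac{1}{\sqrt{nm}}$ rate without the extra $\sqrt d$ loss from the $d$-dimensional net, which is why you do need the low-rank Magen--Zouzias result here.) The same remark applies to part $(ii)$. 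Once you replace the L4-L2 step by the observation that $\E[\bar z]=0$, the rest of your argument goes through and matches the paper's proof.
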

\begin{proof}
We only show the proof for $(i)$ as the proof for $(ii)$ follows by similar arguments.
We argue similarly to the proof of Lemma \ref{lem:fourth}.
We have
\begin{align}
    \left\|\frac{1}{n}\sum_{i=1}^n  \mathbf{\Sigma}_{i}\mathbf{C}_{2,i}\mathbf{C}_{3,i}^\top \mathbf{X}_{i} \mathbf{z}_{i} \right\|_2
    &\leq \bigg\|\underbrace{\frac{m-1}{nm}\sum_{i=1}^n \frac{1}{m}\sum_{j=1}^m \bigg(\tfrac{1}{m-1}\sum_{j'\neq j} \mathbf{x}_{i,j'}\mathbf{x}_{i,j'}^\top \bigg) \mathbf{C}_{2,i}\mathbf{C}_{3,i}^\top \mathbf{x}_{i,j} {z}_{i,j} }_{=:\mathbf{e}'}\bigg\|_2 \nonumber \\
    &\quad + \bigg\|\underbrace{\frac{1}{nm^2}\sum_{i=1}^n \sum_{j=1}^m  \mathbf{x}_{i,j}\mathbf{x}_{i,j}^\top\mathbf{C}_{2,i}\mathbf{C}_{3,i}^\top \mathbf{x}_{i,j}{z}_{i,j}}_{=:\mathbf{e}''}\bigg\|_2\nonumber
\end{align}
\textbf{Step 1: $\|\mathbf{e}'\|_2$.} Add and subtract $\frac{m-1}{nm}\sum_{i=1}^n \frac{1}{m}\sum_{j=1}^m \mathbf{C}_{2,i}\mathbf{C}_{3,i}^\top \mathbf{x}_{i,j} {z}_{i,j}$ to obtain
\begin{align}
    \|\mathbf{e}'\|_2 &\leq \bigg\|{\frac{m-1}{nm}\sum_{i=1}^n \frac{1}{m}\sum_{j=1}^m \mathbf{C}_{2,i}\mathbf{C}_{3,i}^\top \mathbf{x}_{i,j} {z}_{i,j} }\bigg\|_2 \nonumber \\
    &\quad + \bigg\|{\frac{m-1}{nm}\sum_{i=1}^n \frac{1}{m}\sum_{j=1}^m \bigg(\tfrac{1}{m-1}\sum_{j'\neq j} \mathbf{x}_{i,j'}\mathbf{x}_{i,j'}^\top  - \mathbf{I}_d \bigg) \mathbf{C}_{2,i}\mathbf{C}_{3,i}^\top \mathbf{x}_{i,j} {z}_{i,j} }\bigg\|_2  \nonumber \\
    &\leq c\sigma c_2 c_3 \bar{\delta}_{m, d_1} + \bigg\|{\frac{m-1}{nm}\sum_{i=1}^n \frac{1}{m}\sum_{j=1}^m \bigg(\tfrac{1}{m-1}\sum_{j'\neq j} \mathbf{x}_{i,j'}\mathbf{x}_{i,j'}^\top  - \mathbf{I}_d \bigg) \mathbf{C}_{2,i}\mathbf{C}_{3,i}^\top \mathbf{x}_{i,j} {z}_{i,j} }\bigg\|_2 
\end{align}
where the second inequality follows with probability at least $1-e^{-90d_1}$ by Lemma \ref{lem:gen1}. Next,  
\begin{align}
    &\bigg\|{\frac{m-1}{nm}\sum_{i=1}^n \frac{1}{m}\sum_{j=1}^m \bigg(\tfrac{1}{m-1}\sum_{j'\neq j} \mathbf{x}_{i,j'}\mathbf{x}_{i,j'}^\top  - \mathbf{I}_d \bigg) \mathbf{C}_{2,i}\mathbf{C}_{3,i}^\top \mathbf{x}_{i,j} {z}_{i,j} }\bigg\|_2  \nonumber \\
    &\leq \frac{1}{m}\sum_{j=1}^m \bigg\| \frac{m-1}{nm}\sum_{i=1}^n \bigg(\tfrac{1}{m-1}\sum_{j'\neq j} \mathbf{x}_{i,j'}\mathbf{x}_{i,j'}^\top  - \mathbf{I}_d \bigg) \mathbf{C}_{2,i}\mathbf{C}_{3,i}^\top \mathbf{x}_{i,j} {z}_{i,j}  \bigg\|_2 \nonumber
\end{align}
By sub-gaussianity, we have that with probability at least $ 1- 4 (nm)^{-99}$, $\|\mathbf{C}_{3,i}\mathbf{x}_{i,j}\|_2 \leq c c_3(\sqrt{d_1}+\sqrt{\log(nm)})$ and  $\|{z}_{i,j}\|_2 \leq c \sigma\sqrt{\log(nm)}$ for all $i\in[n],j\in [m]$. Thus, as in previous arguments, we have
\begin{align}
     \bigg\| \frac{m-1}{nm}\sum_{i=1}^n \bigg(\tfrac{1}{m-1}\sum_{j'\neq j} \mathbf{x}_{i,j'}\mathbf{x}_{i,j'}^\top  - \mathbf{I}_d \bigg) \mathbf{C}_{2,i}\mathbf{C}_{3,i}^\top \mathbf{x}_{i,j} {z}_{i,j}  \bigg\|_2 
    &\leq \tfrac{c \sigma c_2 c_3 (\sqrt{d}+\sqrt{\log(m)})(\sqrt{d_1}+\sqrt{\log(nm)})\sqrt{\log(nm)}}{\sqrt{nm}} \nonumber
\end{align}
for all $j\in [m]$ with probability at least $1 - 2m^{-99} - 4 (nm)^{-99}$, resulting in
\begin{align}
    \|\mathbf{e}'\|_2 &\leq  c\sigma c_2 c_3 \bar{\delta}_{m, d_1} + \frac{c \sigma c_2 c_3 (\sqrt{d}+\sqrt{\log(m)})(\sqrt{d_1}+\sqrt{\log(nm)})\sqrt{\log(nm)}}{\sqrt{nm}} \nonumber \\
    &\leq \frac{c' \sigma c_2 c_3 (\sqrt{d}+\sqrt{\log(m)})(\sqrt{d_1}+\sqrt{\log(nm)})\sqrt{\log(nm)}}{\sqrt{nm}} \label{comnnin}
\end{align}
with probability at least $1 - 2m^{-99} - 4 (nm)^{-99}$.

\textbf{Step 2: $\|\mathbf{e}''\|_2$.} For $\mathbf{e}''$, we again use Theorem \ref{thm:magen}. Define $\mathcal{E}_{\ell,i,j}$ and $\mathbf{\bar{x}}_{\ell,i,j}$ as in Lemma \ref{lem:fourth} for $\ell = 1,2,3$ and $i \in [n]$ and $j \in [m]$. Define $\mathcal{E}_{4,i,j}= \{|z_{i,j}|\leq c\sigma \sqrt{\log(nm)}\}$ and  $\bar{z}_{i,j} = \chi_{\mathcal{E}_{4,i,j}}{z}_{i,j}$ for all $i \in [n]$ and $j \in [m]$. Define $\mathbf{s}_{i,j}=  \mathbf{x}_{i,j}\mathbf{x}_{i,j}^\top\mathbf{C}_{2,i}\mathbf{C}_{3,i}^\top \mathbf{x}_{i,j}{z}_{i,j}/m$
and $\mathbf{\bar{s}}_{i,j}=  \mathbf{\bar{x}}_{1,i,j}\mathbf{\bar{x}}_{2,i,j}^\top \mathbf{\bar{x}}_{3,i,j}\bar{z}_{i,j}/m$, then we have $\mathbf{s}_{i,j}=\mathbf{\bar{s}}_{i,j}$ for all $i,j$ with probability at least $ 1- \frac{1}{\poly(nm)}$. Also, 
$\|\mathbf{\bar{s}}_{i,j}\|\leq B \coloneqq c\sigma c_2 c_3(\sqrt{d}+\sqrt{\log(nm)})(d_1+\log(nm))\sqrt{\log(nm)}$. Next, by the symmetry of the Gaussian distribution, $\mathbb{E}[\bar{z}_{i,j}] = 0$, thus $\|\mathbb{E}[\mathbf{\bar{s}}_{i,j}]\|_2 = 0$ by independence. Defining $\mathbf{\Tilde{\bar{s}}}_{i,j}$ as in Lemma \ref{lem:fourth}, we can now apply Theorem \ref{thm:magen} as in Lemma \ref{lem:fourth} to obtain:
\begin{align}
    \mathbb{P}\bigg( \bigg\|\frac{1}{nm}\sum_{i=1}^n \sum_{j=1}^m\mathbf{\Tilde{\bar{s}}}_{i,j} \bigg\|_2\geq \tfrac{c\sigma c_2 c_3(\sqrt{d}+\sqrt{\log(nm)})(d_1+\log(nm))\sqrt{\log(nm)}}{\sqrt{nm}}\bigg) &\leq \tfrac{1}{\poly(nm)}
\end{align}
which, recalling $\|\mathbf{e}''\|_2 = \bigg\|\frac{1}{nm}\sum_{i=1}^n \sum_{j=1}^m\mathbf{{{s}}}_{i,j} \bigg\|_2$, implies
\begin{align}
    \mathbb{P}\big(\|\mathbf{e}''\|_2 \geq \tfrac{c\sigma c_2 c_3(\sqrt{d}+\sqrt{\log(nm)})(d_1+\log(nm))\sqrt{\log(nm)}}{\sqrt{nm}}\big) \leq \tfrac{1}{\poly(nm)} \label{comnin}
\end{align}
Combining  \eqref{comnnin} and \eqref{comnin} completes the proof.
\end{proof}

\begin{lemma} \label{lem:fourth4}
Suppose that $\mathbf{x}$ is a random vector with mean-zero,  $\mathbf{I}_d$-sub-gaussian distribution over $\mathbb{R}^d$. Let $\{ \mathbf{x}_{i,j}\}_{i\in[n], j\in[m]}$ be $nm$ independent copies of $\mathbf{x}$. Denote $\mathbf{\Sigma}_{i} = \frac{1}{m}\sum_{j=1}^{m}\mathbf{x}_{i,j}\mathbf{x}_{i,j}^\top$ and  $\mathbf{X}_i = [\mathbf{x}_{i,1}, \dots, \mathbf{x}_{i,m}]^\top$ for all $i \in [n]$. Let $\mathbf{z} =[z_1,\dots,z_m] \in \mathbb{R}^m$ be a vector whose elements are i.i.d. draws from $\mathcal{N}(0,\sigma^2)$, and let $\{\mathbf{z}_i\}_{i\in[n]}$ be $n$ independent copies of $\mathbf{z}$. 
Further, let $\mathbf{C}_{i}\in \mathbb{R}^{d\times d_1}$ be fixed matrices for $i \in [n]$, and let $\bar{c} \coloneqq \max_{i\in[n]}\|\mathbf{C}_{i}\|_2$. 
 Then, 
\begin{align}
     \bigg\|\frac{1}{n}\sum_{i=1}^n   \mathbf{X}_{i}^\top \mathbf{z}_i \mathbf{z}_i^\top \mathbf{X}_{i} \mathbf{C}_{i} \bigg\|_2  
     &\leq \tfrac{c\sigma^2 \bar{c}(\sqrt{d}+\sqrt{\log(nm)})(\sqrt{d_1}+\sqrt{\log(nm)})\sqrt{\log(nm)}}{\sqrt{nm}}+ \tfrac{\sigma^2 \bar{c}}{m}  \nonumber
\end{align}
for an absolute constant $c$
with probability at least $1 -2 m^{-99} - \frac{1}{\poly(nm)}$.
\end{lemma}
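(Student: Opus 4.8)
The plan is to adapt, nearly verbatim, the argument used for Lemmas~\ref{lem:fourth} and~\ref{lem:fourth1}, since under the normalization convention of $\mathcal{U}_5$ in Lemma~\ref{lem:gen2} the object of interest is $\frac{1}{nm^2}\sum_{i}\mathbf{X}_i^\top\mathbf{z}_i\mathbf{z}_i^\top\mathbf{X}_i\mathbf{C}_i$, again a degree-four object in the data, now with two of the four factors being the Gaussian noise. First I would expand $\mathbf{X}_i^\top\mathbf{z}_i\mathbf{z}_i^\top\mathbf{X}_i=\sum_{j,j'}z_{i,j}z_{i,j'}\mathbf{x}_{i,j}\mathbf{x}_{i,j'}^\top$ and split the double sum into its off-diagonal ($j\neq j'$) part $\mathbf{e}'$, which is mean zero, and its diagonal ($j=j'$) part, which equals $\frac{1}{nm}\sum_i\big(\frac1m\sum_j z_{i,j}^2\mathbf{x}_{i,j}\mathbf{x}_{i,j}^\top\big)\mathbf{C}_i$ and has expectation $\frac{\sigma^2}{nm}\sum_i\mathbf{C}_i$; this last matrix has operator norm at most $\sigma^2\bar c/m$ and is precisely the source of the additive $\sigma^2\bar c/m$ term in the statement.

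For the off-diagonal term I would reorder the triple sum to expose a single index, writing $\mathbf{e}'=\frac{m-1}{nm^2}\sum_{j=1}^m\sum_{i=1}^n z_{i,j}\mathbf{x}_{i,j}\big(\frac{1}{m-1}\sum_{j'\neq j}z_{i,j'}\mathbf{x}_{i,j'}^\top\mathbf{C}_i\big)$ and then bound each $j$-summand exactly as in Step~1 of Lemma~\ref{lem:fourth}: condition on the high-probability event that $|z_{i,j}|=O(\sigma\sqrt{\log(nm)})$ and $\|\mathbf{x}_{i,j}\|=O(\sqrt d+\sqrt{\log(nm)})$ for all $i$, so that the inner sum over $i$ becomes a sum of $n$ conditionally independent mean-zero rank-one matrices with controlled operator norms (the inner $j'$-average contributing an extra $O(\sigma\bar c\sqrt{d_1}/\sqrt m)$ factor with high probability), apply Bernstein together with the $\tfrac14$-net argument of Lemma~\ref{lem:gen1}, and finally union bound over $j\in[m]$ at the cost of the $2m^{-99}$ failure probability. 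This yields a bound of order $\sigma^2\bar c(\sqrt d+\sqrt{\log(nm)})(\sqrt{d_1}+\sqrt{\log(nm)})\sqrt{\log(nm)}/\sqrt{nm}$.

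For the fluctuation of the diagonal term, i.e. $\frac{1}{nm}\sum_i\big(\frac1m\sum_j z_{i,j}^2\mathbf{x}_{i,j}\mathbf{x}_{i,j}^\top-\sigma^2\mathbf{I}_d\big)\mathbf{C}_i$, I would reuse the truncation-plus-low-rank machinery of Step~2 of Lemma~\ref{lem:fourth}: truncate $z_{i,j}$, $\mathbf{x}_{i,j}$, and $\mathbf{C}_i\mathbf{x}_{i,j}$ to their typical ranges at level $\rho=10\sqrt{\log(nm)}$, form the symmetric lifts of the (rank $\le 1$) matrices $\frac1m z_{i,j}^2\,\bar{\mathbf{x}}_{i,j}(\mathbf{C}_i\bar{\mathbf{x}}_{i,j})^\top$, verify that they have operator norm at most $B=O(\sigma^2\bar c(\sqrt d+\sqrt{\log(nm)})(\sqrt{d_1}+\sqrt{\log(nm)})\sqrt{\log(nm)})$ and rank at most $2$, apply Theorem~\ref{thm:magen} to the appropriately rescaled, centered versions of these lifted matrices with $\epsilon$ of order $B/\sqrt{nm}$ (invoking $C$-L4-L2 hypercontractivity, Lemma~\ref{lem:L4L2}, to control their expectation), and add back the truncated expectation $\frac1{nm}\sum_i\sum_j\mathbb{E}[\cdots]$, whose operator norm is $O(\sigma^2\bar c/m)$. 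Combining the off-diagonal and diagonal bounds by the triangle inequality and collecting the failure probabilities ($2m^{-99}$ from the reordering step, $1/\poly(nm)$ from Theorem~\ref{thm:magen}, plus exponentially small truncation terms absorbed into $1/\poly(nm)$) gives the claim.

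The main obstacle is the same one that makes Lemma~\ref{lem:fourth} nontrivial: the quantity is a degree-four polynomial in sub-gaussian (here, sub-gaussian and Gaussian) inputs and therefore has no sub-exponential tail, so a naive matrix-Bernstein bound would force $m=\Omega(d)$ samples per task; the only way to retain $m=o(d)$ is to exploit the rank-one structure of the diagonal summands via the truncated version of the bound from~\cite{magen2011low}, and the delicate part is to keep the bias of the truncated diagonal sum under control — here that bias does not vanish but instead contributes the unavoidable $\sigma^2\bar c/m$ term, which is why the statement, unlike the earlier analogues in Lemmas~\ref{lem:fourth1} and~\ref{lem:gen1}, carries an additive $\sigma^2\bar c/m$ rather than being purely $\tilde O(1/\sqrt{nm})$.
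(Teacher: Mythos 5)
Your proposal matches the paper's proof in structure and in every substantive step: split into off-diagonal ($j\neq j'$) and diagonal ($j=j'$) parts; for the off-diagonal part, condition on boundedness of the $j$-indexed factors, apply Bernstein plus a $\tfrac14$-net over the remaining $n(m-1)$ summands, and union bound over $j$ (costing the $2m^{-99}$ term); for the diagonal part, truncate, symmetrize, apply Theorem \ref{thm:magen} to the centered lifted matrices, and add back the expectation, whose norm supplies the additive $\sigma^2\bar c/m$. You also correctly identify the $1/(nm^2)$ normalization implicit in the intended usage ($\mathcal{U}_5$ of Lemma \ref{lem:gen2}) rather than the $1/n$ written in the lemma display.

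The one place your sketch would need repair is the invocation of $C$-L4-L2 hypercontractivity (Lemma \ref{lem:L4L2}) to control the expectation of the truncated diagonal summands. That tool is what Lemma \ref{lem:fourth} needs, where the diagonal summand is a genuine fourth power $\mathbf{x}\mathbf{x}^\top\mathbf{C}_2\mathbf{C}_3^\top\mathbf{x}\mathbf{x}^\top\mathbf{C}_4$ of the data, and hypercontractivity produces a bias scaling like $d_1/m$. Here the diagonal summand is $z_{i,j}^2\,\mathbf{x}_{i,j}\mathbf{x}_{i,j}^\top\mathbf{C}_i/m$ with the scalar $z_{i,j}$ independent of $\mathbf{x}_{i,j}$, so the untruncated expectation is exactly $\sigma^2\mathbf{C}_i/m$ by independence — applying Lemma \ref{lem:L4L2} would be both unnecessary and would not yield the correct $\sigma^2\bar c/m$ scaling. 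The paper instead passes from the truncated to the untruncated expectation via a law-of-total-expectation bound; that is the route you should take (or, more carefully, write $\|\mathbb{E}[\bar{\mathbf{S}}_{i,j}]\|_2 \le \|\mathbb{E}[\mathbf{S}_{i,j}]\|_2 + \|\mathbb{E}[\chi_{\mathcal{E}^c}\mathbf{S}_{i,j}]\|_2$, compute the first term exactly, and absorb the second via Cauchy--Schwarz and the polynomially small truncation failure probability).
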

\begin{proof}
We have
\begin{align}
    \left\|\frac{1}{n}\sum_{i=1}^n \mathbf{X}_{i}^\top \mathbf{z}_i \mathbf{z}_i^\top \mathbf{X}_{i} \mathbf{C}_{i}  \right\|_2
    &\leq \bigg\|\underbrace{\frac{m-1}{nm}\sum_{i=1}^n \frac{1}{m}\sum_{j=1}^m \bigg(\tfrac{1}{m-1}\sum_{j'\neq j} \mathbf{x}_{i,j}z_{i,j}\bigg) z_{i,j}\mathbf{x}_{i,j}^\top  \mathbf{C}_{i}}_{=:\mathbf{E}'}\bigg\|_2 \nonumber \\
    &\quad + \bigg\|\underbrace{\frac{1}{nm^2}\sum_{i=1}^n \sum_{j=1}^m  z_{i,j}^2\mathbf{x}_{i,j}\mathbf{x}_{i,j}^\top\mathbf{C}_{i}}_{=:\mathbf{E}''}\bigg\|_2\nonumber
\end{align}
\textbf{Step 1: $\|\mathbf{E}'\|_2$.} 
Note that
\begin{align}
    \bigg\|{\frac{m-1}{nm}\sum_{i=1}^n \frac{1}{m}\sum_{j=1}^m  \mathbf{x}_{i,j}z_{i,j}\bigg(\tfrac{1}{m-1}\sum_{j'\neq j} z_{i,j'}\mathbf{x}_{i,j'}^\top \bigg) \mathbf{C}_{i}}\bigg\|_2 &\leq \frac{1}{m}\sum_{j=1}^m  \bigg\|{\frac{m-1}{nm}\sum_{i=1}^n \mathbf{x}_{i,j}z_{i,j}\bigg(\tfrac{1}{m-1}\sum_{j'\neq j} z_{i,j'}\mathbf{x}_{i,j'}^\top \mathbf{C}_{i}\bigg) }\bigg\|_2 \nonumber 
\end{align}
Next, with probability at least $ 1- 4 (nm)^{-99}$, $\|\mathbf{C}_i\mathbf{x}_{i,j}\|_2 \leq c (\sqrt{d_1}+\sqrt{\log(nm)})$ and  $\|{z}_{i,j}\|_2 \leq c \sigma\sqrt{\log(nm)}$ for all $i\in[n],j\in [m]$. Thus, by conditioning on this event as in previous arguments, we can show
\begin{align}
     \bigg\|{\frac{m-1}{nm}\sum_{i=1}^n \mathbf{x}_{i,j}z_{i,j}\bigg(\tfrac{1}{m-1}\sum_{j'\neq j} z_{i,j'}\mathbf{x}_{i,j'}^\top \mathbf{C}_{i}\bigg) }\bigg\|_2  
    &\leq \frac{c \sigma^2 \bar{c} (\sqrt{d}+\sqrt{\log(m)})(\sqrt{d_1}+\sqrt{\log(nm)})\sqrt{\log(nm)}}{\sqrt{nm}} \nonumber
\end{align}
for all $j\in [m]$ with probability at least $1 - 2m^{-99} - 4 (nm)^{-99}$, resulting in
\begin{align}
    \|\mathbf{E}'\|_2 &\leq   \frac{c \sigma^2 \bar{c} (\sqrt{d}+\sqrt{\log(m)})(\sqrt{d_1}+\sqrt{\log(nm)})\sqrt{\log(nm)}}{\sqrt{nm}} \label{comnninn}
\end{align}
with probability at least $1 - 2 m^{-99} - 4 (nm)^{-99}$.

\textbf{Step 2: $\|\mathbf{E}''\|_2$.}
Define $\mathcal{E}_{1,i,j}= \{\|\mathbf{x}_{i,j}\|\leq c(\sqrt{d}+ \sqrt{\log(nm)})\}$, $\mathcal{E}_{2,i,j}= \{\|\mathbf{C}_{i}\mathbf{x}_{i,j}\|_2 \leq c(\sqrt{d_1}+ \sqrt{\log(nm)})\}$ and $\mathcal{E}_{3,i,j}= \{|{z}_{i,j}|\leq c\sigma\sqrt{\log(nm)}\}$ for all $i\in[n], j\in[m]$.
Define $\mathbf{\bar{x}}_{1,i,j} = \chi_{\mathcal{E}_{1,i,j}}\mathbf{x}_{i,j}$, $\mathbf{\bar{x}}_{2,i,j} = \chi_{\mathcal{E}_{2,i,j}}\mathbf{C}_{i}^\top \mathbf{x}_{i,j}$, and  $\bar{z}_{i,j} = \chi_{\mathcal{E}_{3,i,j}}{z}_{i,j}$ for all $i \in [n]$ and $j \in [m]$. Define $\mathbf{S}_{i,j}= {z}_{i,j}^2 \mathbf{x}_{1,i,j}\mathbf{x}_{2,i,j}^\top\mathbf{C}_i/m$
and $\mathbf{\bar{S}}_{i,j}=  \bar{z}_{i,j}^2 \mathbf{\bar{x}}_{1,i,j}\mathbf{\bar{x}}_{2,i,j}^\top/m$, then we have $\mathbf{S}_{i,j}=\mathbf{\bar{S}}_{i,j}$ for all $i,j$ with probability at least $ 1- \frac{1}{\poly(nm)}$. Also, 
$\|\mathbf{\bar{S}}_{i,j}\|\leq B \coloneqq c\sigma^2 \bar{c}(\sqrt{d}+\sqrt{\log(nm)})(\sqrt{d_1}+\sqrt{\log(nm)})\sqrt{\log(nm)}$.  
Note that by the law of total expectation,
\begin{align}
    \|\mathbb{E}[\mathbf{{\bar{S}}}_{i,j}]\|_2 &= \|\mathbb{E}[\mathbf{{{S}}}_{i,j}\big | \mathcal{E}_{1,i,j}\cap \mathcal{E}_{2,i,j}\cap \mathcal{E}_{3,i,j}]\|_2\mathbb{P}(\mathcal{E}_{1,i,j}, \mathcal{E}_{2,i,j}, \mathcal{E}_{3,i,j}) \nonumber \\
    &\leq \|\mathbb{E}[\mathbf{{{S}}}_{i,j}\big | \mathcal{E}_{1,i,j}\cap \mathcal{E}_{2,i,j}\cap \mathcal{E}_{3,i,j}]\|_2\mathbb{P}(\mathcal{E}_{1,i,j}\cap \mathcal{E}_{2,i,j}\cap \mathcal{E}_{3,i,j}) \nonumber \\
    &\quad + \|\mathbb{E}[\mathbf{{{S}}}_{i,j}\big | \mathcal{E}_{1,i,j}^c\cup \mathcal{E}_{2,i,j}^c\cup \mathcal{E}_{3,i,j}^c]\|_2\mathbb{P}(\mathcal{E}_{1,i,j}^c\cup \mathcal{E}_{2,i,j}^c\cup \mathcal{E}_{3,i,j}) \nonumber \\
    &= \|\mathbb{E}[\mathbf{{{S}}}_{i,j}]\|_2 \nonumber \\
    &= \tfrac{\sigma^2}{m} \|\mathbf{C}_i\|_2 \nonumber 
\end{align}
Now,
defining $\mathbf{\Tilde{\bar{S}}}_{i,j}$ as in Lemma \ref{lem:fourth}, we can now apply Theorem \ref{thm:magen} as in Lemma \ref{lem:fourth} to obtain for $m \geq \sigma^2 \bar{c}$:
\begin{align}
    \mathbb{P}\bigg( \bigg\|\frac{1}{nm}\sum_{i=1}^n \sum_{j=1}^m\mathbf{\Tilde{\bar{S}}}_{i,j} - \mathbb{E}[\mathbf{\Tilde{\bar{S}}}_{i,j}] \bigg\|_2\geq \tfrac{c\sigma^2 \bar{c}(\sqrt{d}+\sqrt{\log(nm)})(\sqrt{d_1}+\sqrt{\log(nm)})\sqrt{\log(nm)}}{\sqrt{nm}}\bigg) &\leq \tfrac{1}{\poly(nm)}
\end{align}
Now, note that
\begin{align}
 \bigg\|\frac{1}{nm}\sum_{i=1}^n \sum_{j=1}^m\mathbf{{\bar{S}}}_{i,j} \bigg\|_2  &\leq \bigg\|\frac{1}{nm}\sum_{i=1}^n \sum_{j=1}^m\mathbf{{\bar{S}}}_{i,j} - \mathbb{E}[\mathbf{{\bar{S}}}_{i,j}] \bigg\|_2 + \bigg\|\frac{1}{nm}\sum_{i=1}^n \sum_{j=1}^m\mathbb{E}[\mathbf{{\bar{S}}}_{i,j}] \bigg\|_2  \nonumber \\
  &\leq \bigg\|\frac{1}{nm}\sum_{i=1}^n \sum_{j=1}^m\mathbf{\Tilde{\bar{S}}}_{i,j} - \mathbb{E}[\mathbf{\Tilde{\bar{S}}}_{i,j}] \bigg\|_2 + \tfrac{\sigma^2 \bar{c}}{m}
\end{align}
Thus, recalling $\|\mathbf{E}''\|_2 = \bigg\|\frac{1}{nm}\sum_{i=1}^n \sum_{j=1}^m\mathbf{{{S}}}_{i,j} \bigg\|_2 $, we have
\begin{align}
    \mathbb{P}\bigg( \|\mathbf{E}''\|_2 \leq \tfrac{c\sigma^2 \bar{c}(\sqrt{d}+\sqrt{\log(nm)})(\sqrt{d_1}+\sqrt{\log(nm)})\sqrt{\log(nm)}}{\sqrt{nm}}+ \tfrac{\sigma^2 \bar{c}}{m} \bigg) \leq 1 - \frac{1}{\poly(nm)} \label{comninn}
\end{align}
Combining  \eqref{comnninn} and \eqref{comninn} completes the proof.
\end{proof}

\begin{fact}
Suppose $\mathbf{x}\sim p$ satisfies $\mathbb{E}[\mathbf{x}]=\mathbf{0}$, $\Cov(\mathbf{x})=\mathbf{I}_d$ and $\mathbf{x}$ is $\mathbf{I}_d$-sub-gaussian, as in Assumption \ref{assump:data}. Then $\mathbf{x}$ is $C$-L4-L2 hypercontractive for an absolute constant $C$, that is for any $\mathbf{u}\in \mathbb{R}^d : \|\mathbf{u}\|_2=1$,
\begin{align}
    \mathbb{E}[\langle \mathbf{u}, \mathbf{x}_{i,j}\rangle^4] \leq C^2 (\mathbb{E}[\langle \mathbf{u}, \mathbf{x}_{i,j}\rangle^2])^2
\end{align}
\end{fact}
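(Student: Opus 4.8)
The plan is to reduce the $d$-dimensional claim to a one-dimensional moment bound for sub-gaussian scalars, which is entirely standard. Fix a unit vector $\mathbf{u}\in\mathbb{R}^d$ and set $Y\coloneqq\langle\mathbf{u},\mathbf{x}\rangle$. First I would use Assumption \ref{assump:data} to put both sides of the desired inequality into a convenient form: since $\mathbb{E}[\mathbf{x}]=\mathbf{0}$ and $\Cov(\mathbf{x})=\mathbf{I}_d$, we have $\mathbb{E}[Y^2]=\mathbf{u}^\top\mathbf{I}_d\mathbf{u}=\|\mathbf{u}\|_2^2=1$, so the right-hand side $(\mathbb{E}[Y^2])^2$ is exactly $1$, and it suffices to exhibit an absolute constant $C$ with $\mathbb{E}[Y^4]\le C^2$ uniformly over all unit $\mathbf{u}$.

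Next I would invoke the sub-gaussian hypothesis. Substituting $\mathbf{v}=\lambda\mathbf{u}$ into $\mathbb{E}[\exp(\mathbf{v}^\top\mathbf{x})]\le\exp(\tfrac{\|\mathbf{v}\|_2^2}{2})$ yields $\mathbb{E}[\exp(\lambda Y)]\le\exp(\tfrac{\lambda^2}{2})$ for every $\lambda\in\mathbb{R}$, i.e.\ $Y$ is a mean-zero sub-gaussian scalar with parameter $1$. A one-sided Chernoff bound, applied to both $Y$ and $-Y$ (which satisfies the same MGF bound), gives the tail estimate $\mathbb{P}(|Y|>t)\le 2e^{-t^2/2}$ for all $t\ge 0$. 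Then the layer-cake identity $\mathbb{E}[Y^4]=\int_0^\infty 4t^3\,\mathbb{P}(|Y|>t)\,dt$, bounded using the tail estimate and evaluated via the substitution $s=t^2/2$, gives $\mathbb{E}[Y^4]\le 8\int_0^\infty t^3 e^{-t^2/2}\,dt=16$. (Alternatively, one may simply cite the standard equivalence of characterizations of sub-gaussianity, e.g.\ Proposition~2.5.2 in \cite{vershynin2018high}, which bounds $\mathbb{E}[Y^4]$ by $(c\sqrt{4})^4$ for an absolute $c$.) Combining with $\mathbb{E}[Y^2]=1$ shows $\mathbb{E}[Y^4]\le 16=16\,(\mathbb{E}[Y^2])^2$, so the claim holds with $C^2=16$, i.e.\ $C=4$.

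There is no real obstacle here; the only point worth making explicit is bookkeeping about the constant. The symbol $C$ in the statement coincides with the normalization constant implicit in $\mathbf{I}_d$-sub-gaussianity, so I would emphasize that the produced constant ($C=4$) is truly absolute: it depends neither on the dimension $d$, nor on the direction $\mathbf{u}$, nor on the underlying law $p$, as long as $p$ has identity covariance and the stated MGF bound. I would also note that this $\mathsf{Fact}$ is an \emph{input} to the concentration arguments — it is precisely the $C$-L4-L2 hypercontractivity invoked when controlling the bias term in Lemma~\ref{lem:fourth} (and the analogous step in Theorem~\ref{thm:magen}'s application) — and not a consequence of them, so no further machinery is needed.
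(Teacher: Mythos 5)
The paper states this as a \emph{Fact} and gives no proof, treating the implication sub-gaussian $\Rightarrow$ L4--L2 hypercontractive as standard. Your argument is correct and is precisely the standard argument one would supply: reduce to the scalar $Y=\langle\mathbf{u},\mathbf{x}\rangle$, note $\mathbb{E}[Y^2]=1$ from the identity covariance, extract the one-dimensional MGF bound $\mathbb{E}[e^{\lambda Y}]\le e^{\lambda^2/2}$ by specializing $\mathbf{v}=\lambda\mathbf{u}$, Chernoff to get $\mathbb{P}(|Y|>t)\le 2e^{-t^2/2}$, and integrate the layer-cake formula to get $\mathbb{E}[Y^4]\le 16$. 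The arithmetic checks ($\int_0^\infty t^3 e^{-t^2/2}\,dt=2$), and the resulting constant $C=4$ is indeed absolute and dimension-free, which is all the Fact asserts.
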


\begin{lemma}[L4-L2 hypercontractive implication]\label{lem:L4L2}
Suppose $\mathbf{x}\in \mathbb{R}^d$ is $C$-L4-L2 hypercontractive, $\mathbb{E}[\mathbf{x}]=\mathbf{0}$, and $\Cov(\mathbf{x}) = \mathbf{I}_d$. 
Further, let $\mathbf{C}_{\ell}\in \mathbb{R}^{d\times d_{\lfloor \ell/2 \rfloor}}$ for $\ell = 1,2,3,4$ be fixed matrices for $i \in [n]$, and let $c_\ell \coloneqq \max_{i\in[n]}\|\mathbf{C}_{\ell,i}\|_2$ for $\ell=1,2,3,4$.
Given scalar thresholds $a_\ell$ for $\ell=1,\dots,4$, form the truncated random vectors $\mathbf{\bar{x}}_\ell \coloneqq \chi_{\|\mathbf{C}_{\ell}^\top\mathbf{x}\|_2\leq a_\ell}\mathbf{C}_{\ell}\mathbf{x}$. Then,
\begin{align}
   \| \mathbb{E}[\mathbf{\bar{x}}_1 \mathbf{\bar{x}}_2^\top \mathbf{\bar{x}}_3 \mathbf{\bar{x}}_4^\top]\|_2 &\leq C^2 \|\mathbf{C}_{1}\|_2\|\mathbf{C}_{2}\|_2\|\mathbf{C}_{3}\|_2\|\mathbf{C}_{4}\|_2 d_1.
\end{align}
\end{lemma}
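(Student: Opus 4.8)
The plan is to reduce the operator-norm bound to a single scalar fourth-order moment estimate and then invoke hypercontractivity one rank-one piece at a time. Fix unit vectors $\mathbf{u}$ and $\mathbf{v}$ of the appropriate dimensions. Since $\mathbf{\bar{x}}_2^\top\mathbf{\bar{x}}_3$ is a scalar, $\mathbf{u}^\top\mathbb{E}[\mathbf{\bar{x}}_1\mathbf{\bar{x}}_2^\top\mathbf{\bar{x}}_3\mathbf{\bar{x}}_4^\top]\mathbf{v} = \mathbb{E}[(\mathbf{\bar{x}}_2^\top\mathbf{\bar{x}}_3)(\mathbf{u}^\top\mathbf{\bar{x}}_1)(\mathbf{v}^\top\mathbf{\bar{x}}_4)]$. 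Writing $\mathbf{\bar{x}}_\ell = \chi_{\mathcal{E}_\ell}\mathbf{C}_\ell^\top\mathbf{x}$ with indicators $\chi_{\mathcal{E}_\ell}\in\{0,1\}$, we have $\prod_\ell \chi_{\mathcal{E}_\ell}\leq 1$, so $|\mathbf{u}^\top\mathbb{E}[\mathbf{\bar{x}}_1\mathbf{\bar{x}}_2^\top\mathbf{\bar{x}}_3\mathbf{\bar{x}}_4^\top]\mathbf{v}| \leq \mathbb{E}[\,|\mathbf{x}^\top\mathbf{C}_2\mathbf{C}_3^\top\mathbf{x}|\,|\langle\mathbf{C}_1\mathbf{u},\mathbf{x}\rangle|\,|\langle\mathbf{C}_4\mathbf{v},\mathbf{x}\rangle|\,]$. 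The key observation is that the truncation can only shrink the expectation, so the bound we obtain this way is valid uniformly over all thresholds $a_\ell$, which is exactly what the statement requires.

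Next I would expand the bilinear form over the columns of $\mathbf{C}_2$ and $\mathbf{C}_3$: writing $\mathbf{C}_2\mathbf{C}_3^\top = \sum_{l=1}^{d_1}(\mathbf{C}_2\mathbf{e}_l)(\mathbf{C}_3\mathbf{e}_l)^\top$ gives $\mathbf{x}^\top\mathbf{C}_2\mathbf{C}_3^\top\mathbf{x} = \sum_{l=1}^{d_1}\langle\mathbf{C}_2\mathbf{e}_l,\mathbf{x}\rangle\langle\mathbf{C}_3\mathbf{e}_l,\mathbf{x}\rangle$. Substituting and applying the triangle inequality, the quantity above is at most $\sum_{l=1}^{d_1}\mathbb{E}[\,|\langle\mathbf{C}_2\mathbf{e}_l,\mathbf{x}\rangle\langle\mathbf{C}_3\mathbf{e}_l,\mathbf{x}\rangle\langle\mathbf{C}_1\mathbf{u},\mathbf{x}\rangle\langle\mathbf{C}_4\mathbf{v},\mathbf{x}\rangle|\,]$. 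Now for each $l$, generalized H\"older's inequality with all four exponents equal to $4$, combined with $C$-L4-L2 hypercontractivity (and $\Cov(\mathbf{x})=\mathbf{I}_d$, so $\mathbb{E}[\langle\mathbf{w},\mathbf{x}\rangle^4]\leq C^2\|\mathbf{w}\|_2^4$ for any $\mathbf{w}$), bounds the summand by $C^2\|\mathbf{C}_2\mathbf{e}_l\|_2\|\mathbf{C}_3\mathbf{e}_l\|_2\|\mathbf{C}_1\mathbf{u}\|_2\|\mathbf{C}_4\mathbf{v}\|_2 \leq C^2\|\mathbf{C}_1\|_2\|\mathbf{C}_2\|_2\|\mathbf{C}_3\|_2\|\mathbf{C}_4\|_2$, using $\|\mathbf{C}_2\mathbf{e}_l\|_2\leq\|\mathbf{C}_2\|_2$ etc. Summing the $d_1$ terms and then taking the supremum over unit $\mathbf{u}$ and $\mathbf{v}$ yields $\|\mathbb{E}[\mathbf{\bar{x}}_1\mathbf{\bar{x}}_2^\top\mathbf{\bar{x}}_3\mathbf{\bar{x}}_4^\top]\|_2 \leq C^2\|\mathbf{C}_1\|_2\|\mathbf{C}_2\|_2\|\mathbf{C}_3\|_2\|\mathbf{C}_4\|_2\,d_1$, as claimed.

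I do not anticipate a genuine obstacle here; this is essentially bookkeeping. The two points that merit a sentence of care are: (i) dropping the truncation indicators is legitimate precisely because each $\chi_{\mathcal{E}_\ell}\in\{0,1\}$ and we pass to absolute values, so no structure of the thresholds is used; and (ii) the factor $d_1$ (as opposed to $\sqrt{d_1}$) is an artifact of having to split $\mathbf{C}_2\mathbf{C}_3^\top$ into $d_1$ rank-one summands before hypercontractivity applies, and it is unavoidable in the worst case (take $\mathbf{C}_2=\mathbf{C}_3$ an isometric embedding, so $\mathbf{x}^\top\mathbf{C}_2\mathbf{C}_3^\top\mathbf{x}$ concentrates near $d_1$). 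One should also note in passing that all fourth moments invoked are finite, which is exactly what hypercontractivity guarantees, so the H\"older step is valid.
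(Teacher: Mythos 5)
Your proof is correct, and it reaches the same bound by a slightly cleaner route than the paper's. The paper first establishes an intermediate fact that the truncated random vectors $\mathbf{\bar{x}}_\ell$ are themselves $C$-L4-L2 hypercontractive (via the law of total expectation and nonnegativity), and then applies nested Cauchy--Schwarz: it splits off $\mathbf{\bar{x}}_2^\top\mathbf{\bar{x}}_3$ from $(\mathbf{u}^\top\mathbf{\bar{x}}_1)(\mathbf{\bar{x}}_4^\top\mathbf{v})$, applies Cauchy--Schwarz a second time within each factor, writes $(\mathbf{\bar{x}}_2^\top\mathbf{\bar{x}}_3)^2$ as a double sum $\sum_{\ell,\ell'}$ over $d_1^2$ basis-vector pairs, and extracts $d_1=\sqrt{d_1^2}$ under the outer square root. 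You instead drop the truncation indicators immediately via $\chi_\ell\in\{0,1\}$ and absolute values (a strict simplification that makes the intermediate hypercontractivity-preservation lemma unnecessary), expand $\mathbf{C}_2\mathbf{C}_3^\top$ into $d_1$ rank-one pieces, and apply generalized H\"older with all exponents equal to $4$ termwise, getting $d_1$ from a single sum. Both arguments are just Cauchy--Schwarz/H\"older plus hypercontractivity and yield exactly the same constant $C^2 d_1$; yours uses fewer moving parts, which is a mild gain. Your observation (ii), that the $d_1$ factor (rather than $\sqrt{d_1}$) is unavoidable because $\mathbf{x}^\top\mathbf{C}_2\mathbf{C}_3^\top\mathbf{x}$ can concentrate near $d_1$, is a correct sanity check that the paper does not spell out.
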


\begin{proof}
First we note that if a random vector $\mathbf{x}$ is $C$-L4-L2 hypercontractive, then for any fixed matrix $\mathbf{C} \in \mathbb{R}^{d \times d_1}$, then the random vector $\mathbf{C}^\top \mathbf{x}\in \mathbb{R}^{d_1}$ is also $C$-L4-L2 hypercontractive, since for any unit vector $\mathbf{u}$,
\begin{align}
    \tfrac{1}{\|\mathbf{Cu}\|_2^4}\mathbb{E}[\langle \mathbf{u}, \mathbf{C}^\top \mathbf{x} \rangle^4] &= \mathbb{E}[\langle \tfrac{\mathbf{C}\mathbf{u}}{\|\mathbf{Cu}\|_2},  \mathbf{x} \rangle^4] \leq C^2(\mathbb{E}[\langle \tfrac{\mathbf{C}\mathbf{u}}{\|\mathbf{Cu}\|_2},  \mathbf{x} \rangle^2])^2 =  \tfrac{1}{\|\mathbf{Cu}\|_2^4} C^2(\mathbb{E}[\langle {\mathbf{u}},  \mathbf{C}^\top\mathbf{x} \rangle^2])^2 \nonumber \\
    \implies \mathbb{E}[\langle \mathbf{u}, \mathbf{C}^\top \mathbf{x} \rangle^4] &\leq C^2(\mathbb{E}[\langle {\mathbf{u}},  \mathbf{C}^\top\mathbf{x} \rangle^2])^2  \nonumber
\end{align}
Also, if the random vector $\mathbf{x}$ is $C$-L4-L2  hypercontractive then the truncated random vector $\mathbf{\bar{x}}\coloneqq \chi_{\|\mathbf{x}\|_2 \leq a}\|\mathbf{x}\|_2$ is also $C$-L4-L2  hypercontractive. To see this, observe that by the law of total expectation,
\begin{align}
    \mathbb{E}\big[\langle \mathbf{u},  \mathbf{\bar{x}} \rangle^4\big] &= \mathbb{E}\big[\langle \mathbf{u}, \chi_{\|\mathbf{x}\|_2 \leq a} \mathbf{{x}} \rangle^4\big] = \mathbb{E}\big[\langle \mathbf{u}, \mathbf{{x}} \rangle^4 \big| \|\mathbf{x}\|_2 \leq a\big] \mathbb{P}(\|\mathbf{x}\|_2 \leq a) \leq \mathbb{E}[\langle \mathbf{u},  \mathbf{{x}} \rangle^4] \leq C^2(\mathbb{E}[\langle \mathbf{u},  \mathbf{{x}} \rangle^2])^2
\end{align}
So we have that the truncated random vectors $\{\mathbf{\bar{x}}_h\}_{h=1}^4$ are $C$-L4-L2 hypercontractive. 
 Next, pick some $\mathbf{u}\in \mathbb{R}^{d_1}:\|\mathbf{u}\|_2\leq 1$ and $\mathbf{v}\in \mathbb{R}^{d_4}:\|\mathbf{v}\|\leq 1$. By the Cauchy-Schwarz inequality and $C$-L4-L2 hypercontractivity, we have 
 \begin{align}
       \mathbb{E}[\mathbf{u}^\top&\mathbf{\bar{x}}_1\mathbf{\bar{x}}_2^\top \mathbf{\bar{x}}_3 \mathbf{\bar{x}}_4^\top \mathbf{v} ] \nonumber \\
       &\leq (\mathbb{E}[(\mathbf{u}^\top\mathbf{\bar{x}}_1 \mathbf{\bar{x}}_4^\top \mathbf{v})^2 ] \mathbb{E}[(\mathbf{\bar{x}}_2^\top \mathbf{\bar{x}}_3 )^2])^{1/2} \nonumber \\
       &\leq (\mathbb{E}[(\mathbf{u}^\top\mathbf{\bar{x}}_1)^4]\mathbb{E}[(\mathbf{\bar{x}}_4^\top \mathbf{v})^4 ] )^{1/4}(\mathbb{E}[\Tr(\mathbf{\bar{x}}_2 \mathbf{\bar{x}}_3^\top )^2])^{1/2} \nonumber \\
       &\leq C(\mathbb{E}[(\mathbf{u}^\top \mathbf{\bar{x}}_1)^2]\mathbb{E}[( \mathbf{\bar{x}}_4^\top \mathbf{v})^2 ] )^{1/2}\bigg(\mathbb{E}\bigg[\bigg(\sum_{\ell=1}^d\mathbf{e}_\ell^\top\mathbf{\bar{x}}_2 \mathbf{\bar{x}}_3^\top \mathbf{e}_\ell\bigg)^2\bigg]\bigg)^{1/2} \nonumber \\
       &\leq C(\mathbb{E}[(\mathbf{u}^\top \mathbf{\bar{x}}_1)^2]\mathbb{E}[( \mathbf{\bar{x}}_4^\top \mathbf{v})^2 ] )^{1/2}\bigg(\sum_{\ell,\ell'}\mathbb{E}\left[\mathbf{e}_\ell^\top\mathbf{\bar{x}}_{2}\mathbf{\bar{x}}_{3}^\top \mathbf{e}_\ell \mathbf{e}_{\ell'}^\top\mathbf{\bar{x}}_{2}\mathbf{\bar{x}}_{3}^\top \mathbf{e}_{\ell'} \right]\bigg)^{1/2}  \nonumber \\
       &\leq C(\mathbb{E}[(\mathbf{u}^\top \mathbf{\bar{x}}_1)^2]\mathbb{E}[( \mathbf{\bar{x}}_4^\top \mathbf{v})^2 ] )^{1/2} \bigg(\sum_{\ell,\ell'}\big(\mathbb{E}\left[(\mathbf{e}_\ell^\top\mathbf{\bar{x}}_{2})^4]\mathbb{E}[(\mathbf{\bar{x}}_{3}^\top\mathbf{e}_\ell)^4]\mathbb{E}[( \mathbf{e}_{\ell'}^\top\mathbf{\bar{x}}_{2})^4]\mathbb{E}[(\mathbf{\bar{x}}_{3}^\top  \mathbf{e}_{\ell'})^{4} \right]\big)^{1/4}\bigg)^{1/2}  \nonumber \\
       &\leq  C^2(\mathbb{E}[(\mathbf{u}^\top \mathbf{\bar{x}}_1)^2]\mathbb{E}[( \mathbf{\bar{x}}_4^\top \mathbf{v})^2 ] )^{1/2} \nonumber \\
       &\quad \times\bigg(\sum_{\ell,\ell'}\big((\mathbb{E}[(\mathbf{e}_\ell^\top\mathbf{\bar{x}}_{2})^2])^2(\mathbb{E}[(\mathbf{\bar{x}}_{3}^\top\mathbf{e}_\ell)^2])^2(\mathbb{E}[( \mathbf{e}_{\ell'}^\top\mathbf{\bar{x}}_{2})^2])^2(\mathbb{E}[(\mathbf{\bar{x}}_{3}^\top  \mathbf{e}_{\ell'})^{2} ])^2\big)^{1/4}\bigg)^{1/2} \label{lgl}
   \end{align}
   where $\mathbf{e}_{\ell}$ is the $\ell$-th standard basis vector in $\mathbb{R}^{d_1}$. Note that by the law of total expectation and the nonnegativity of $\mathbf{U}^\top\mathbf{{C}}_1^\top \mathbf{{x}}\mathbf{{x}}^\top\mathbf{C}_1 \mathbf{U}$,
   \begin{align}
       \mathbb{E}[( \mathbf{\bar{x}}_1^\top \mathbf{u})^2 ] &= \mathbb{E}\left[\mathbf{u}^\top\mathbf{{C}}_1^\top \mathbf{{x}}\mathbf{{x}}^\top\mathbf{C}_1 \mathbf{u} \big| \|\mathbf{C}_1^\top\mathbf{{x}}\|_2\leq a \right] \mathbb{P}( \|\mathbf{C}_1^\top\mathbf{{x}}\|_2\leq a) \nonumber \\
       &\leq  \mathbb{E}[\mathbf{u}^\top\mathbf{{C}}_1^\top \mathbf{{x}}\mathbf{{x}}^\top\mathbf{C}_1 \mathbf{u} ] = \mathbf{u}^\top\mathbf{{C}}_1^\top \mathbf{C}_1 \mathbf{u}  \leq \|\mathbf{C}_1\|_2^2 \nonumber
   \end{align}
   Therefore, applying the same logic for $ \mathbb{E}[(\mathbf{e}_\ell^\top\mathbf{\bar{x}}_{2})^2]$,  $\mathbb{E}[(\mathbf{e}_\ell^\top\mathbf{\bar{x}}_{3})^2]$, and $\mathbb{E}[(\mathbf{e}_\ell^\top\mathbf{\bar{x}}_{4})^2]$, and using \eqref{lgl}, we obtain
 \begin{align}
       \mathbb{E}[\mathbf{u}^\top\mathbf{\bar{x}}_1\mathbf{\bar{x}}_2^\top \mathbf{\bar{x}}_3 \mathbf{\bar{x}}_4^\top \mathbf{v} ] 
       &\leq  C^2\|\mathbf{C}_{1}\|_2\|\mathbf{C}_{4}\|_2 \bigg(\sum_{\ell,\ell'}\|\mathbf{C}_{2}\|_2^2 \|\mathbf{C}_{3}\|_2^2 \bigg)^{1/2} = C^2\|\mathbf{C}_{1}\|_2\|\mathbf{C}_{2}\|_2 \|\mathbf{C}_{3}\|_2 \|\mathbf{C}_{4}\|_2 d_1 \nonumber
   \end{align}   
   Repeating this argument over all unit vectors $\mathbf{u}, \mathbf{v}$ completes the proof. 
\end{proof}

Next, we characterize the diversity of the inner loop-updated heads for both ANIL and FO-ANIL. Note that now we are analyzing  ANIL and FO-ANIL specifically rather than studying generic matrix concentration.
\begin{lemma} \label{lem:cond_app_exact}
Let $\mathbf{w}_{t,i}$ be the inner loop-updated head for the $i$-th task at iteration $t$ for ANIL and FO-ANIL for all $i\in [n]$. 
Define $\mu^2 \coloneqq \sigma_{\min}\left(\frac{1}{n} \sum_{i=1}^n \mathbf{w}_{t,i}\mathbf{w}_{t,i}^\top \right)$ and $L^2 \coloneqq \sigma_{\max}\left(\frac{1}{n} \sum_{i=1}^n \mathbf{w}_{t,i}\mathbf{w}_{t,i}^\top \right)$. 
Assume $\|\del_t\|_2 \leq \tfrac{1}{10}$ and Assumption \ref{assump:tasks_diverse_main}, \ref{assump:tasks_main}, and \ref{assump:data}  hold. Then 
\begin{align}
     \sigma_{\max}\left(\frac{1}{n} \sum_{i=1}^n \mathbf{w}_{t,i}\mathbf{w}_{t,i}^\top \right)\leq L^2 &\coloneqq   2\left(\|\del_t\|_2\|\mathbf{w}_t\|_2 +  \sqrt{\alpha} L_\ast + {\delta}_{m_{in},k}(\|\w\|_2  + \sqrt{{\alpha}}L_{\max}  + \sqrt{{{\alpha}}}\sigma ) \right)^2 \\
  \sigma_{\min}\left(\frac{1}{n} \sum_{i=1}^n \mathbf{w}_{t,i}\mathbf{w}_{t,i}^\top \right) \geq \mu^2 &\coloneqq 0.9 \alpha E_0 \mu_\ast^2 - 2.2 \sqrt{\alpha}\|\mathbf{w}_t\|_2 \|\del_t\|_2 \eta_\ast  \nonumber \\
  &\quad - 2 \|\del_t\|_2 \|\mathbf{w}_t\|_2\bar{\delta}_{m_{in},k}(\|\w\|_2  + \sqrt{{\alpha}}L_{\max}  + \sqrt{{{\alpha}}}\sigma ) \nonumber \\
   &\quad - 2.2 \sqrt{\alpha} \bar{\delta}_{m_{in},k}(\|\w\|_2 + \sqrt{{\alpha}}L_{\ast}  + \sqrt{{{\alpha}}}\sigma )L_{\max}  
 \end{align}
 with probability at least $1 - 4 n^{-99} - 6e^{-90k}$.
\end{lemma}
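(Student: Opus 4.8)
The plan is to mirror the population-case diversity argument (Lemma \ref{lem:sigmin}), tracking the extra empirical-covariance and noise contributions in the finite-sample inner-loop head update \eqref{upd_head_anil_exact}; since that update is the same for ANIL and FO-ANIL, one argument serves both. I would write $\mathbf{w}_{t,i} = \del_t\mathbf{w}_t + \underbrace{\alpha\mathbf{B}_t^\top\mathbf{\Sigma}_{t,i}^{in}\mathbf{B}_\ast\mathbf{w}_{\ast,t,i}}_{=:\,\mathbf{s}_i} + \underbrace{(\del_{t,i}^{in}-\del_t)\mathbf{w}_t + \tfrac{\alpha}{m_{in}}\mathbf{B}_t^\top(\mathbf{X}_{t,i}^{in})^\top\mathbf{z}_{t,i}^{in}}_{=:\,\mathbf{q}_i}$, so $\mathbf{w}_{t,i}=\del_t\mathbf{w}_t+\mathbf{s}_i+\mathbf{q}_i$ with $\del_t\mathbf{w}_t$ the common (``non-unique'') part, $\mathbf{s}_i$ the diverse signal, and $\mathbf{q}_i$ a small $i$-dependent perturbation. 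First I would record the deterministic consequences of $\|\del_t\|_2\le\tfrac1{10}$, namely $\tfrac{9/10}{\alpha}\le\sigma_{\min}^2(\mathbf{B}_t)\le\sigma_{\max}^2(\mathbf{B}_t)\le\tfrac{11/10}{\alpha}$ by Weyl, and fix a single ``good event'' carrying all needed concentration: for every $i\in[n]$ the sandwiched covariance deviations $\|\mathbf{B}_t^\top(\mathbf{\Sigma}_{t,i}^{in}-\mathbf{I}_d)\mathbf{B}_t\|_2$, $\|\mathbf{B}_\ast^\top(\mathbf{\Sigma}_{t,i}^{in}-\mathbf{I}_d)\mathbf{B}_t\|_2$ are $O(\delta_{m_{in},k})$ (Theorem 4.6.1 of \cite{vershynin2018high} plus a union bound over $i$, which produces the $\log n$ inside $\delta_{m_{in},k}$) and $\tfrac1{m_{in}}\|\mathbf{B}_t^\top(\mathbf{X}_{t,i}^{in})^\top\mathbf{z}_{t,i}^{in}\|_2 = O(\sigma\delta_{m_{in},k})$, while the $i$-averaged quantities below obey the global-rate estimates of Lemma \ref{lem:gen1} (item 1 and the mixed $\mathbf{\Sigma}$--$\mathbf{X}\mathbf{z}$ items). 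All failure probabilities are collected into the stated $4n^{-99}+6e^{-90k}$.

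For the $\sigma_{\max}$ bound I would use $\sigma_{\max}\big(\tfrac1n\sum_i\mathbf{w}_{t,i}\mathbf{w}_{t,i}^\top\big)^{1/2} = \big\|\tfrac1{\sqrt n}[\mathbf{w}_{t,1},\dots,\mathbf{w}_{t,n}]\big\|_2$ and apply the triangle inequality on the column-stacked matrix according to the four summands $\del_t\mathbf{w}_t$, $(\del_{t,i}^{in}-\del_t)\mathbf{w}_t$, $\mathbf{s}_i$, $\tfrac{\alpha}{m_{in}}\mathbf{B}_t^\top(\mathbf{X}_{t,i}^{in})^\top\mathbf{z}_{t,i}^{in}$: the first block has identical columns, hence norm $\|\del_t\mathbf{w}_t\|_2\le\|\del_t\|_2\|\mathbf{w}_t\|_2$; the second and fourth are bounded columnwise (spectral norm $\le$ max column norm) on the good event, giving $O(\delta_{m_{in},k}\|\mathbf{w}_t\|_2)$ and $O(\sqrt\alpha\,\sigma\,\delta_{m_{in},k})$; and for $\mathbf{s}_i$ I would split $\mathbf{\Sigma}_{t,i}^{in}=\mathbf{I}_d+(\mathbf{\Sigma}_{t,i}^{in}-\mathbf{I}_d)$, which breaks $\tfrac1n\sum_i\mathbf{s}_i\mathbf{s}_i^\top$ into the population piece $\alpha^2\mathbf{B}_t^\top\mathbf{B}_\ast\boldsymbol{\Psi}_{\ast,t}\mathbf{B}_\ast^\top\mathbf{B}_t$ (spectral norm $\le\tfrac{11}{10}\alpha L_\ast^2$ by Assumption \ref{assump:tasks_diverse_main}) plus deviation pieces of size $O(\sqrt\alpha\,\delta_{m_{in},k}L_{\max})$. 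Summing the four bounds, absorbing $\sqrt{11/10}<\sqrt2$, and squaring gives the claimed $L^2$.

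For the $\sigma_{\min}$ bound I would keep only the diverse block and discard the PSD cross-squares via Weyl: $\tfrac1n\sum_i\mathbf{w}_{t,i}\mathbf{w}_{t,i}^\top \succeq \tfrac1n\sum_i\mathbf{s}_i\mathbf{s}_i^\top - 2\big\|\tfrac1n\sum_i(\del_t\mathbf{w}_t+\mathbf{q}_i)\mathbf{s}_i^\top\big\|_2\mathbf{I}_k$. For the first term the $\mathbf{\Sigma}_{t,i}^{in}=\mathbf{I}_d+(\mathbf{\Sigma}_{t,i}^{in}-\mathbf{I}_d)$ split isolates $\alpha^2\mathbf{B}_t^\top\mathbf{B}_\ast\boldsymbol{\Psi}_{\ast,t}\mathbf{B}_\ast^\top\mathbf{B}_t$, whose minimum singular value is $\ge\alpha E_0\mu_\ast^2$ by the argument behind Lemma \ref{lem:sigminE} -- the $0.9$ prefactor and the $-\delta$ in $E_0$ absorb the extra slack from using $A_5(t)$ rather than $\dist_t^2\le\tfrac1{1-\tau}\dist_0^2$ to lower-bound $\sigma_{\min}^2(\mathbf{B}_\ast^\top\mathbf{\hat{B}}_t)=1-\dist_t^2$ -- while the single-$(\mathbf{\Sigma}_{t,i}^{in}-\mathbf{I}_d)$ remainder (the double-deviation piece is PSD, hence dropped) is $O(\sqrt\alpha\,\bar{\delta}_{m_{in},k}L_{\max}(\cdots))$ via item 1 of Lemma \ref{lem:gen1}. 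For the cross term I would split $\mathbf{q}_i$ into $\del_t\mathbf{w}_t$, $-\alpha\mathbf{B}_t^\top(\mathbf{\Sigma}_{t,i}^{in}-\mathbf{I}_d)\mathbf{B}_t\mathbf{w}_t$, and the noise, and split $\mathbf{s}_i=\alpha\mathbf{B}_t^\top\mathbf{B}_\ast\mathbf{w}_{\ast,t,i}+\alpha\mathbf{B}_t^\top(\mathbf{\Sigma}_{t,i}^{in}-\mathbf{I}_d)\mathbf{B}_\ast\mathbf{w}_{\ast,t,i}$: the common$\,\times\,$leading-signal pairing factors as $\del_t\mathbf{w}_t\big(\tfrac1n\sum_i\mathbf{s}_i\big)^\top$ and reproduces the $2.2\sqrt\alpha\|\mathbf{w}_t\|_2\|\del_t\|_2\eta_\ast$ term exactly as in Lemma \ref{lem:sigmin}, while every pairing with a data-dependent factor yields an $i$-average of products of $\mathbf{\Sigma}_{t,i}^{in}$, $\mathbf{\Sigma}_{t,i}^{in}-\mathbf{I}_d$, and $\tfrac1{m_{in}}(\mathbf{X}_{t,i}^{in})^\top\mathbf{z}_{t,i}^{in}$, bounded at the global rate $\bar{\delta}_{m_{in},k}$ by Lemma \ref{lem:gen1} (all inner matrices sandwiched by $\mathbf{B}_t$/$\mathbf{B}_\ast$, so only $k$-dimensional objects appear) together with the L4--L2 hypercontractivity estimate Lemma \ref{lem:L4L2} for the bias of the fourth-order pieces. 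Collecting terms gives $\mu^2$.

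The step I expect to be the main obstacle is the lower bound on the diverse block and the data-dependent cross terms: unlike the population case these involve products of two empirical covariances built from the \emph{same} inner-loop dataset (and products of an empirical covariance with $(\mathbf{X}_{t,i}^{in})^\top\mathbf{z}_{t,i}^{in}$), so the independent-dataset items of Lemma \ref{lem:gen1} do not apply verbatim. The work is to rewrite each such product as (population) $+$ (single-deviation) $+$ (PSD double-deviation, dropped) while keeping every intermediate factor projected onto the $k$-dimensional ranges of $\mathbf{B}_t$ and $\mathbf{B}_\ast$, so the surviving deviation terms concentrate at the global $\bar{\delta}_{m_{in},k}$ rate rather than the crude columnwise $\delta_{m_{in},k}$ rate, and to carry the $\mu_\ast,L_\ast,L_{\max},\sigma$ dependencies through each piece so that they match the stated expressions for $L^2$ and $\mu^2$.
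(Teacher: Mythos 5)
Your proposal is correct and follows essentially the same route as the paper: the same split of $\mathbf{w}_{t,i}$ into a common piece $\del_t\mathbf{w}_t$, a diverse signal piece, and small task-dependent deviations, the same PSD-dropping via Weyl for the lower bound, and the same use of per-$i$ (rate $\delta_{m_{in},k}$) versus $i$-averaged (rate $\bar{\delta}_{m_{in},k}$) concentration from Lemma \ref{lem:gen1} to get the $\sigma_{\max}$ and $\sigma_{\min}$ tails respectively. The only cosmetic difference is that you keep $\mathbf{\Sigma}_{t,i}^{in}$ inside $\mathbf{s}_i$ and split it off later, whereas the paper peels the deviation $\alpha\mathbf{B}_t^\top(\mathbf{\Sigma}_{t,i}^{in}-\mathbf{I}_d)\mathbf{B}_\ast\mathbf{w}_{\ast,t,i}$ off from the start, which makes $\tfrac1n\sum_i\mathbf{s}_i\mathbf{s}_i^\top$ deterministic and avoids the fourth-order/L4--L2 machinery you anticipate as the obstacle.
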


\begin{proof}
Note that $\mathbf{w}_{t,i}$ can be written as:
\begin{align}
    \mathbf{w}_{t,i} &= \w - \alpha\b^\top \sin \b \w + \alpha \b^\top \sin \mathbf{B}_\ast \mathbf{w}_{\ast,t,i} =  \mathbf{r} + \mathbf{s}_i   + \mathbf{p}_{1,i} + \mathbf{p}_{2,i} + \mathbf{p}_{3,i}
\end{align}
where $\mathbf{r} = \del_t\mathbf{w}_t$, $\mathbf{s}_i = \alpha \mathbf{B}_t^\top \mathbf{{B}}_\ast \mathbf{w}_{\ast,t,i}$,  $\mathbf{p}_{1,i} \coloneqq \alpha(\mathbf{B}_t^\top \mathbf{B}_t - \mathbf{B}_t^\top\mathbf{\Sigma}_{t,i}^{in} \mathbf{B}_t )  \mathbf{{w}}_{t}$,  $\mathbf{p}_{2,i} \coloneqq -\alpha(\mathbf{B}_t^\top \mathbf{{B}}_\ast - \mathbf{B}_t ^\top  \mathbf{\Sigma}_{t,i}^{in} \mathbf{{B}}_\ast) \mathbf{{w}}_{\ast,t,i}$, and  $\mathbf{p}_{3,i} \coloneqq\frac{\alpha}{m_{in}} \mathbf{B}_t ^\top (\mathbf{X}_{t,i}^{in})^\top \mathbf{z}_{t,i}^{in}$ for all $i \in [n]$ (for ease of notation we drop the iteration index $t$). Note that since $\|\del_t\|_2 \leq \tfrac{1}{10}$, $\|\mathbf{B}_t\|_2 \leq \tfrac{\sqrt{11/10}}{\sqrt{\alpha}}$.
As a result, for any $i \in [n]$, from Lemma \ref{lem:gen1} 
we have 
\begin{align}
    \| \mathbf{p}_{1,i}\|_2 &\leq  1.1 \|\w\|_2 {\delta}_{m_{in},k}, \quad
    \| \mathbf{p}_{2,i}\|_2 \leq  \sqrt{1.1{\alpha}}L_{\max} {\delta}_{m_{in},k}, \quad 
    \| \mathbf{p}_{3,i}\|_2 \leq   \sqrt{1.1{{\alpha}}}\sigma {\delta}_{m_{in},k}  \label{basicbounds}
\end{align}
each with probability at least $1 - 2 n^{-100}$. Thus, all of these events  happen simultaneously with probability at least $1- 6 n^{-100}$ via a union bound. Further, a union bound over all $i \in [n]$ shows that $A \coloneqq\left\{\cap_{i\in [n]}\left\{\| \mathbf{p}_{1,i}\|_2 \leq 1.1 \|\w\|_2 {\delta}_{m_{in},k} \cap \| \mathbf{p}_{2,i}\|_2 \leq \sqrt{1.1{\alpha}}L_{\max} {\delta}_{m_{in},k} \cap \| \mathbf{p}_{3,i}\|_2 \leq \sqrt{1.1{{\alpha}}}\sigma {\delta}_{m_{in},k} \right\} \right\}$ occurs with probability at least $ 1 - 6 n^{-99}$. Thus by the triangle inequality, a
    \begin{align}
       \left\|\frac{1}{n}\sum_{i=1}^n \mathbf{p}_{1,i} +  \mathbf{p}_{2,i} + \mathbf{p}_{3,i}\right\|_2 
       &\leq 1.1{\delta}_{m_{in},k}(\|\w\|_2  + \sqrt{{\alpha}}L_{\max}  + \sqrt{{{\alpha}}}\sigma )
    \end{align}
    with probability at least $1 - 6 n^{-99}$,
    and
\begin{align}
    \left\|\frac{1}{n}\sum_{i=1}^n (\mathbf{r}+\mathbf{s}_i)(\mathbf{r}+ \mathbf{s}_i)^\top \right\|_2 
    &\leq \|\del_t\|_2^2\|\mathbf{w}_t\|_2^2 + 2.2 \sqrt{\alpha} \|\del_t\|_2\|\mathbf{w}_t\|_2  \eta_\ast  + 1.1\alpha L_\ast^2 
    \end{align}
    So, 
\begin{align}
   &\left\| \frac{1}{n} \sum_{i = 1}^n \mathbf{w}_{t,i}\mathbf{w}_{t,i}^\top\right\|_2  \nonumber \\
   &\leq \left\|\frac{1}{n} \sum_{i = 1}^n (\mathbf{r}+\mathbf{s}_i)(\mathbf{r}+\mathbf{s}_i)^\top\right\|_2 + 2 \left\|\frac{1}{n} \sum_{i = 1}^n (\mathbf{r}+\mathbf{s}_i)(\mathbf{p}_{1,i} + \mathbf{p}_{2,i} + \mathbf{p}_{3,i} )^\top \right\|_2 \nonumber \\
   &\quad \quad \quad + \left\|\frac{1}{n} \sum_{i = 1}^n (\mathbf{p}_{1,i} + \mathbf{p}_{2,i} + \mathbf{p}_{3,i})(\mathbf{p}_{1,i} + \mathbf{p}_{2,i} + \mathbf{p}_{3,i})^\top \right\|_2 \nonumber \\
   &\leq \|\del_t\|_2^2\|\mathbf{w}_t\|_2^2 + 2.2 \sqrt{\alpha} \|\del_t\|_2\|\mathbf{w}_t\|_2  \eta_\ast  + 1.1\alpha L_\ast^2  + 2 \left\|\frac{1}{n} \sum_{i = 1}^n \mathbf{r}(\mathbf{p}_{1,i} + \mathbf{p}_{2,i} + \mathbf{p}_{3,i} )^\top \right\|_2 \nonumber\\
   &\quad + 2\alpha\|\mathbf{B}_t ^\top \mathbf{{B}}_\ast\|_2 \left\| \frac{1}{n}\mathbf{W}_{\ast,t}^\top \begin{bmatrix}\vdots \\
(\mathbf{p}_{1,i} + \mathbf{p}_{2,i} + \mathbf{p}_{3,i} )^\top    \\
   \vdots \\
   \end{bmatrix}\right\|_2  + \max_{i\in [n]} \|\mathbf{p}_{1,i} + \mathbf{p}_{2,i} + \mathbf{p}_{3,i} \|_2^2 \label{tri32}\\
   &\leq \|\del_t\|_2^2\|\mathbf{w}_t\|_2^2 + 2.2 \sqrt{\alpha} \|\del_t\|_2\|\mathbf{w}_t\|_2  \eta_\ast  + 1.1\alpha L_\ast^2  \nonumber \\
   &\quad + 2.2 \|\del_t\|_2^2\|\mathbf{w}_t\|_2{\delta}_{m_{in},k}(\|\w\|_2  + \sqrt{{\alpha}}L_{\max}  + \sqrt{{{\alpha}}}\sigma ) \nonumber \\ 
  &\quad   + 2.2 \sqrt{\alpha} L_\ast {\delta}_{m_{in},k}(\|\w\|_2  + \sqrt{{\alpha}}L_{\max}  + \sqrt{{{\alpha}}}\sigma ) 
 + 1.1^2(\|\w\|_2  + \sqrt{{\alpha}}L_{\max}  + \sqrt{{{\alpha}}}\sigma )^2 {\delta}_{m_{in},k}^2  \nonumber\\
   &\leq 2\left(\|\del_t\|_2\|\mathbf{w}_t\|_2 +  \sqrt{\alpha} L_\ast + {\delta}_{m_{in},k}(\|\w\|_2  + \sqrt{{\alpha}}L_{\max}  + \sqrt{{{\alpha}}}\sigma ) \right)^2
\label{hp1_exact}
\end{align}
where $\mathbf{W}_{\ast,t} = [\mathbf{w}_{\ast,t,1},\dots,\mathbf{w}_{\ast,t,n}]^\top$, \eqref{tri32} follows from the triangle inequality, and \eqref{hp1_exact} follows with probability at least $1 - 6 n^{-99}$ from the discussion above.

We make an analogous argument to lower bound $\sigma_{\min}\left(\frac{1}{n} \sum_{i = 1}^n \mathbf{w}_{t,i}\mathbf{w}_{t,i}^\top\right)$. This time, we only need to bound first-order products of the $\mathbf{p}$ matrices, which concentrate around zero as $n$ becomes large. So now we are able to obtain finite-sample dependence on $\bar{\delta}_{m_{in},k}$ (which decays with $\tfrac{1}{\sqrt{n}}$) instead of ${\delta}_{m_{in},k}$ (which does not), as follows.
\begin{align}
  \sigma_{\min} \left( \frac{1}{n} \sum_{i = 1}^n \mathbf{w}_{t,i}\mathbf{w}_{t,i}^\top\right) 
   &=  \sigma_{\min}  \Bigg(\frac{1}{n} \sum_{i = 1}^n (\mathbf{r}+\mathbf{s}_i)(\mathbf{r}+\mathbf{s}_i)^\top + (\mathbf{r}+\mathbf{s}_i)(\mathbf{p}_{1,i} + \mathbf{p}_{2,i} + \mathbf{p}_{3,i} )^\top \nonumber \\
   &\quad \quad \quad + (\mathbf{p}_{1,i} + \mathbf{p}_{2,i} + \mathbf{p}_{3,i})(\mathbf{r}+\mathbf{s}_i)^\top \nonumber \\
   &\quad \quad \quad + (\mathbf{p}_{1,i} + \mathbf{p}_{2,i} + \mathbf{p}_{3,i})(\mathbf{p}_{1,i} + \mathbf{p}_{2,i} + \mathbf{p}_{3,i})^\top \Bigg) \nonumber \\
   &\geq  \sigma_{\min}  \left(\frac{1}{n} \sum_{i = 1}^n (\mathbf{r}+\mathbf{s}_i)(\mathbf{r}+\mathbf{s}_i)^\top\right) - 2    \left\|\frac{1}{n}\sum_{i=1}^n(\mathbf{r}+\mathbf{s}_i)(\mathbf{p}_{1,i} + \mathbf{p}_{2,i} + \mathbf{p}_{3,i} )^\top\right\|_2  \nonumber \\
   &\geq \sigma_{\min}  \left(\frac{1}{n} \sum_{i = 1}^n \mathbf{s}_i\mathbf{s}_i^\top\right) - 2 \left\|\frac{1}{n} \sum_{i = 1}^n \mathbf{r}\mathbf{s}_i^\top \right\|_2  - 2    \left\|\frac{1}{n}\sum_{i=1}^n(\mathbf{r}+\mathbf{s}_i)(\mathbf{p}_{1,i} + \mathbf{p}_{2,i} + \mathbf{p}_{3,i} )^\top\right\|_2 
   \nonumber \\
   &\geq 0.9 \alpha E_0 \mu_\ast^2 - 2.2 \sqrt{\alpha}\|\mathbf{w}_t\|_2 \|\del_t\|_2 \eta_\ast \nonumber \\
   &\quad - 2 \|\del_t\|_2 \|\mathbf{w}_t\|_2\bar{\delta}_{m_{in},k}(\|\w\|_2  + \sqrt{{\alpha}}L_{\max}  + \sqrt{{{\alpha}}}\sigma ) \nonumber \\
   &\quad - 2.2 \sqrt{\alpha} \bar{\delta}_{m_{in},k}(\|\w\|_2 + \sqrt{{\alpha}}L_{\ast}  + \sqrt{{{\alpha}}}\sigma )L_{\max}   \nonumber 
\end{align}
where the last inequality follows with probability at least $1 - 6e^{-90k}$.
\end{proof}


\subsection{FO-ANIL}

For FO-ANIL, inner loop update for the head of the $i$-th task on iteration $t$ is given by:
\begin{align}
    \mathbf{w}_{t,i} 
    &= \mathbf{w}_t - \alpha \nabla_{\mathbf{w}} \hat{\mathcal{L}}_i(\mathbf{B}_t, \mathbf{w}_t, \mathcal{D}_{i}^{in}) \nonumber \\
    &=  (\mathbf{I}_k- \alpha\mathbf{B}_t ^\top \mathbf{\Sigma}_{t,i}^{in} \mathbf{B}_t )\mathbf{{w}}_t + \alpha \mathbf{B}_t ^\top \mathbf{\Sigma}_{t,i}^{in} \mathbf{{B}}_\ast\mathbf{{w}}_{\ast,t,i} + \tfrac{\alpha}{m_{in}} \mathbf{B}_t^\top (\mathbf{X}_{t,i}^{in})^\top \mathbf{z}_{t,i}^{in}. \label{upd_head_anil_foo}
\end{align}
The outer loop updates for the head and representation are:
\begin{align}
\mathbf{w}_{t+1} &= \mathbf{w}_t - \frac{\beta}{n}\sum_{i=1}^n \nabla_{\mathbf{w}} \hat{\mathcal{L}}_i(\mathbf{B}_t , \mathbf{w}_{t,i}, \mathcal{D}_{t,i}^{out}) \nonumber \\
&=  \mathbf{{w}}_{t} - \frac{\beta}{n}\sum_{i=1}^{n}\left(\mathbf{B}_t ^\top \mathbf{\Sigma}_{t,i}^{out} \mathbf{B}_t \mathbf{{w}}_{t,i} - \mathbf{B}_t ^\top \mathbf{\Sigma}_{t,i}^{out} \mathbf{{B}}_\ast\mathbf{{w}}_{\ast,i} - \tfrac{2}{m_{out}} \mathbf{B}_t ^\top (\mathbf{X}_{t,i}^{out,g})^\top\mathbf{z}_{t,i}^{out,g}\right) \label{upd_head_out_anil_foo} \\
    \mathbf{{B}}_{t+1} &= \mathbf{B}_t  - \frac{\beta}{n}\sum_{i=1}^n \nabla_{\mathbf{B}} \hat{\mathcal{L}}_i(\mathbf{B}_t , \mathbf{w}_{t,i}, \mathcal{D}_{t,i}^{out}) \nonumber \\
    &= \mathbf{{B}}_t  - \frac{\beta}{n}\sum_{i=1}^n \left( \mathbf{\Sigma}_{t,i}^{out} \mathbf{B}_t  \mathbf{w}_{t,i}\mathbf{w}_{t,i}^\top - \mathbf{\Sigma}_{t,i}^{out} \mathbf{{B}}_\ast\mathbf{w}_{\ast,t,i} \mathbf{w}_{t,i}^\top  - \tfrac{2}{m_{out}}(\mathbf{X}^{out}_{t,i})^\top \mathbf{z}^{out}_{t,i}\mathbf{w}_{t,i}^\top \right) \label{upd_rep_anil_foo}
\end{align}

\begin{lemma}[FO-ANIL, Finite samples $A_1(t+1)$]\label{lem:w_anil_fs}
For any $t$, suppose that $A_2(s),A_3(s)$ and $A_4(s)$ occur for all $s \in [t]$. Then
\begin{align}
    \|\mathbf{w}_{t+1}\|_2 \leq  \tfrac{1}{10}\sqrt{\alpha} E_0 \min(1,\tfrac{\mu_\ast^2}{\eta_\ast^2})\eta_\ast
\end{align}
with probability at least $1 - \frac{1}{\poly(n)}$ .
\end{lemma}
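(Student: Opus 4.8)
The plan is to replicate the structure of the infinite-sample proof of Lemma~\ref{lem:w_anil_fss}, replacing the population outer-loop recursion for the head by a perturbed recursion whose perturbation is a sum of empirical-covariance and Gaussian-noise fluctuations controlled by the concentration results of this section. First I would substitute the finite-sample inner-loop update \eqref{upd_head_anil_foo} into the finite-sample outer-loop update \eqref{upd_head_out_anil_foo} and regroup, so that for every $s\in[t]$,
\begin{align}
  \mathbf{w}_{s+1} &= \big(\mathbf{I}_k - \beta(\mathbf{I}_k-\alpha\mathbf{B}_s^\top\mathbf{B}_s)\mathbf{B}_s^\top\mathbf{B}_s\big)\mathbf{w}_s + \beta(\mathbf{I}_k-\alpha\mathbf{B}_s^\top\mathbf{B}_s)\mathbf{B}_s^\top\mathbf{B}_\ast\Big(\tfrac{1}{n}\sum_{i=1}^n\mathbf{w}_{\ast,s,i}\Big) + \beta\,\mathbf{q}_s, \nonumber
\end{align}
where $\mathbf{q}_s$ collects every term in which a factor $\mathbf{\Sigma}^{in}_{s,i}$ or $\mathbf{\Sigma}^{out}_{s,i}$ has been replaced by its deviation from $\mathbf{I}_d$, or a noise factor $\tfrac{1}{m_{in}}(\mathbf{X}^{in}_{s,i})^\top\mathbf{z}^{in}_{s,i}$ or $\tfrac{1}{m_{out}}(\mathbf{X}^{out}_{s,i})^\top\mathbf{z}^{out}_{s,i}$ appears. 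Because every matrix in $\mathbf{q}_s$ is at most $k$-dimensional (all $d$-dimensional products are sandwiched between copies of $\mathbf{B}_s$ or $\mathbf{B}_\ast$), the inner and outer samples are fresh, and $\|\mathbf{w}_{\ast,s,i}\|_2\le L_{\max}$ (Assumption~\ref{assump:tasks_main}), I would bound $\|\mathbf{q}_s\|_2$ using the relevant parts of Lemma~\ref{lem:gen1} — the covariance-centering bounds and the noise bounds, conditioning first on the inner-loop covariances being well-behaved before invoking the outer-loop concentration — to get, on an event of probability at least $1-\tfrac{1}{\poly(n)}$, $\|\mathbf{q}_s\|_2 \le \tfrac{c}{\alpha}\epsilon^{fs}_s\|\mathbf{w}_s\|_2 + \tfrac{c}{\sqrt{\alpha}}\epsilon^{fs}_s(L_{\max}+\sigma+\eta_\ast)$, where $\epsilon^{fs}_s$ is a combination of the local and global concentration parameters $\delta_{m,k},\bar{\delta}_{m,k}$ for $m\in\{m_{in},m_{out}\}$, of the same order (up to $\kappa_\ast$ and $\sigma/\mu_\ast$ factors) as the quantities $\zeta_2,\zeta_4$ used in the other inductive steps.

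Next I would combine this with Weyl's inequality applied to $\mathbf{B}_s^\top\mathbf{B}_s$ (using $A_3(s)$ to get $\sigma_{\max}(\mathbf{B}_s^\top\mathbf{B}_s)\le 1.1/\alpha$) to obtain a scalar recursion $\|\mathbf{w}_{s+1}\|_2\le(1+\xi_{1,s})\|\mathbf{w}_s\|_2+\xi_{2,s}$ with $\xi_{1,s}=O(\tfrac{\beta}{\alpha}(\|\del_s\|_2+\epsilon^{fs}_s))$ and $\xi_{2,s}=O(\tfrac{\beta}{\sqrt{\alpha}}(\|\del_s\|_2\,\eta_\ast+\epsilon^{fs}_s(L_{\max}+\sigma)))$; then verify $\sum_{s=1}^t\xi_{1,s}\le 1$ (the $\|\del_s\|_2$ contribution is summable by the estimate below, and the $\epsilon^{fs}_s$ contribution is $\le\tfrac{\beta}{\alpha}T\epsilon^{fs}$, forced small by the lower bounds on $m_{in},m_{out}$) and invoke Lemma~\ref{lem:gen_w} with $\mathbf{w}_0=\mathbf{0}$ to get $\|\mathbf{w}_{t+1}\|_2\le\sum_{s=1}^t\xi_{2,s}(1+2\sum_{r=s}^t\xi_{1,r})$.

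It then remains to bound $\|\del_s\|_2$. Unrolling $A_2(s)$ down to $s=1$ with $\|\del_0\|_2=0$ gives $\|\del_s\|_2\le\tfrac{5}{4}\alpha^2\beta^2L_\ast^4\sum_{r=0}^{s-1}\rho^{s-1-r}\dist_r^2+\tfrac{\beta\alpha\zeta_2}{1-\rho}$ with $\rho=1-0.5\beta\alpha E_0\mu_\ast^2$, and since $A_3(r)$ together with $\bigcap_{q\le r}A_4(q)$ yields $\dist_r\le\tfrac{\sqrt{10}}{3}\rho^r\dist_0+\delta$ (exactly the argument of the implication $A_3(t{+}1)\cap\bigcap A_4(s)\Rightarrow A_5(t{+}1)$ in Theorem~\ref{thm:anil_fs_app}), I obtain $\|\del_s\|_2\le c\rho^{s-1}\tfrac{\alpha\beta L_\ast^4}{E_0\mu_\ast^2}+c\tfrac{\zeta_2}{E_0\mu_\ast^2}$. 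Substituting into the bound on $\|\mathbf{w}_{t+1}\|_2$, computing the geometric sums with $1-\rho=0.5\beta\alpha E_0\mu_\ast^2$, and using $t+1\le T$, gives $\|\mathbf{w}_{t+1}\|_2\le c\tfrac{\beta\kappa_\ast^4\eta_\ast}{\sqrt{\alpha}E_0^2}+cT\tfrac{\beta\eta_\ast\zeta_2}{\sqrt{\alpha}E_0\mu_\ast^2}+cT\tfrac{\beta}{\sqrt{\alpha}}\epsilon^{fs}(L_{\max}+\sigma)$; the first term is $\le\tfrac{1}{20}\sqrt{\alpha}E_0\min(1,\tfrac{\mu_\ast^2}{\eta_\ast^2})\eta_\ast$ by the choice $\beta\le\tfrac{c'\alpha E_0^3\mu_\ast}{\kappa_\ast^4}\min(1,\tfrac{\mu_\ast^2}{\eta_\ast^2})$, and the remaining terms are $\le\tfrac{1}{20}\sqrt{\alpha}E_0\min(1,\tfrac{\mu_\ast^2}{\eta_\ast^2})\eta_\ast$ under the stated $m_{in},m_{out}$ thresholds (chosen so that $T\beta\zeta_2$ and $T\beta\epsilon^{fs}$ are sufficiently smaller than $\alpha E_0^2\mu_\ast^2/\kappa_\ast^{O(1)}$). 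The $\tfrac{1}{\poly(n)}$ failure probability comes from union-bounding the $O(1)$ invocations of Lemma~\ref{lem:gen1} and the $n$ per-task events.

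The main obstacle is not conceptual — the mechanism is identical to the infinite-sample case — but the error bookkeeping: matching each term of $\mathbf{q}_s$ to the correct part of Lemma~\ref{lem:gen1} (especially the nested inner-then-outer covariance products, which force a conditioning step), assembling $\epsilon^{fs}_s$ so that it lines up with $\zeta_2$ and $\zeta_4$ up to constants, and then checking that the $T$-fold accumulation of these errors through the recursion and the geometric sums stays below $\tfrac{1}{10}\sqrt{\alpha}E_0\min(1,\tfrac{\mu_\ast^2}{\eta_\ast^2})\eta_\ast$ under the precise sample-size requirements of Theorem~\ref{thm:anil_fs_app}, keeping the $O(\delta)$ slack and the step-size constraints consistent across this lemma and Lemmas~\ref{lem:reg_fs_foanil} and \ref{lem:contract_app_fs}.
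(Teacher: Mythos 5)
Your proposal is correct and follows essentially the same route as the paper: expand the finite-sample outer-loop head update as the population recursion plus a perturbation, bound each perturbation term with Lemma~\ref{lem:gen1} (conditioning on the inner-loop covariances first), apply Lemma~\ref{lem:gen_w}, unroll $\|\boldsymbol{\Delta}_s\|_2$ via $A_2$ and $\dist_r$ via $A_5$ (which you correctly re-derive from $A_3$ and $\bigcap A_4$), and then close via the step-size and sample-size constraints. The one small bookkeeping difference is that you keep the $\epsilon^{fs}_s\|\mathbf{w}_s\|_2$ product inside the multiplicative factor $\xi_{1,s}$ and verify $\sum_s\xi_{1,s}\le 1$ from the $m_{in},m_{out}$ thresholds, whereas the paper first invokes the a priori bound $\|\mathbf{w}_s\|_2 = O(\sqrt{\alpha}\,\eta_\ast)$ and absorbs that term into the additive $\xi_{2,s}$; both are valid and yield the same conclusion.
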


\begin{proof}
The proof follows similar structure as in the analogous proof for the infinite-sample case. Recall the outer loop updates for ANIL (here we replace $t$ with $s$): 
\begin{align}
\mathbf{w}_{s+1} &= (\mathbf{I}_{k} - \beta  \mathbf{B}_s^\top\mathbf{B}_s (\mathbf{I}_k-\alpha\mathbf{B}_s^\top\mathbf{B}_s)) \mathbf{{w}}_{s} + \beta (\mathbf{I}_k - \alpha\mathbf{B}_s^\top\mathbf{B}_s )  \mathbf{B}_s^\top \mathbf{{B}}_\ast\frac{1}{n}\sum_{i=1}^n\mathbf{{w}}_{\ast,s,i}  \nonumber \\
    &\quad + \alpha\beta \mathbf{B}_s^\top\mathbf{B}_s
    \frac{1}{n}\sum_{i=1}^n\left( \mathbf{B}_s^\top\mathbf{B}_s -  \mathbf{B}_s^\top\mathbf{\Sigma}_{s,i}^{in}\mathbf{B}_s\right)\mathbf{w}_s  - \alpha \beta \mathbf{B}_s^\top\mathbf{B}_s \frac{1}{n}\sum_{i=1}^n (\mathbf{B}_s^\top \mathbf{B}_\ast - \mathbf{B}_s^\top\mathbf{\Sigma}_{s,i}^{in}\mathbf{B}_\ast) \mathbf{w}_{\ast,s,i}\nonumber \\
    &\quad +  \alpha \beta \mathbf{B}_s^\top\mathbf{B}_s\frac{1}{n}\sum_{i=1}^n \mathbf{B}_s^\top (\mathbf{X}_{s,i}^{in})^\top \mathbf{z}_{s,i}^{in} + \frac{\beta}{n}\sum_{i=1}^n (\mathbf{B}_s^\top\mathbf{B}_s - \mathbf{B}_s ^\top\mathbf{\Sigma}_{s,i}^{out} \mathbf{B}_s )  \mathbf{{w}}_{s,i} \nonumber \\
&\quad - \frac{\beta}{n}\sum_{i=1}^n  (\mathbf{B}_s^\top \mathbf{{B}}_\ast - \mathbf{B}_s^\top  \mathbf{\Sigma}_{s,i}^{out} \mathbf{{B}}_\ast) \mathbf{{w}}_{\ast,s,i} +  \frac{2\beta}{nm_{out}} \sum_{i=1}^n \mathbf{B}_s^\top (\mathbf{X}_{s,i}^{out})^\top \mathbf{z}_{s,i}^{out} \label{anil_fo_outer_head}
\end{align}
Note that $\bigcup_{s=0}^t A_3(s)$ implies $\sigma_{\max}(\mathbf{B}_s^\top \mathbf{B}_s) \leq \frac{1+\|\del_s\|_2}{\alpha}< \frac{1.1}{\alpha}$ for all $s\in\{0,\dots,t\!+\!1\}$. Also, we can straightforwardly use Lemma \ref{lem:gen1} with the Cauchy-Schwartz inequality to obtain, for some absolute constant $c$,
\begin{align}
       \Bigg\| \alpha\beta \mathbf{B}_s^\top\mathbf{B}_s
    \frac{1}{n}\sum_{i=1}^n\left( \mathbf{B}_s^\top\mathbf{B}_s -  \mathbf{B}_s^\top\mathbf{\Sigma}_{s,i}^{in}\mathbf{B}_s\right)\mathbf{w}_t \Bigg\|_2
   &\leq c\tfrac{\beta}{\alpha} \|\mathbf{w}_s\|_2 \bar{\delta}_{m_{in},k} \nonumber \\
    \Bigg\|\alpha \beta \mathbf{B}_s^\top\mathbf{B}_s \frac{1}{n}\sum_{i=1}^n (\mathbf{B}_s^\top \mathbf{B}_\ast - \mathbf{B}_t^\top\mathbf{\Sigma}_{s,i}^{in}\mathbf{B}_s) \mathbf{w}_{\ast,s,i}\Bigg\|_2 &\leq c\tfrac{\beta}{\sqrt{\alpha}}   L_{\max} \bar{\delta}_{m_{in},k}
\end{align}
\begin{align}
\Bigg\|\tfrac{\beta}{n}\sum_{i=1}^n  (\mathbf{B}_s ^\top \mathbf{{B}}_s - \mathbf{B}_s ^\top  \mathbf{\Sigma}_{s,i}^{out} \mathbf{{B}}_s) \mathbf{{w}}_{s,i}\Bigg\|_2
&\leq c\tfrac{\beta}{\alpha}  \bar{\delta}_{m_{out},k}\max_{i\in[n]}\|\mathbf{w}_{s,i}\|_2 \nonumber \\
&\leq c\tfrac{\beta}{\alpha}  \bar{\delta}_{m_{out},k}(\|\del_s\|_2 \|\mathbf{w}_s\|_2\!+\!\sqrt{\alpha} L_{\max} + {\delta}_{m_{in},k}\|\mathbf{w}_s\|_2\nonumber \\
&\quad +\!\sqrt{\alpha} {\delta}_{m_{in},k} L_{\max} +\sqrt{\alpha} \sigma {\delta}_{m_{in},k}) \nonumber \\
&\leq c\tfrac{\beta}{\alpha}  \bar{\delta}_{m_{out},k}(\|\del_s\|_2 \|\mathbf{w}_s\|_2\!+\!c'\sqrt{\alpha} L_{\max} + \sqrt{\alpha} \sigma {\delta}_{m_{in},k}) \label{inin} \\
\Bigg\|\frac{\beta}{n}\sum_{i=1}^n  (\mathbf{B}_s ^\top \mathbf{{B}}_\ast - \mathbf{B}_s ^\top  \mathbf{\Sigma}_{s,i}^{out} \mathbf{{B}}_\ast) \mathbf{{w}}_{\ast,s,i}\Bigg\|_2 &\leq c\tfrac{\beta}{\sqrt{\alpha}} L_{\max} \bar{\delta}_{m_{out},k} \nonumber \\
\Bigg\|  \tfrac{2\beta}{nm_{out}} \sum_{i=1}^n \mathbf{B}_s^\top (\mathbf{X}_{s,i}^{out})^\top \mathbf{z}_{s,i}^{out}\Bigg\|_2 &\leq  c\tfrac{\beta}{\sqrt{\alpha}}  \sigma {\bar{\delta}_{m_{out},k}}{} \nonumber 
\end{align}
using that $\delta_{m_{in},k} < 1$ and $\|\mathbf{w}_s\|_2 \leq c\sqrt{\alpha} \eta_\ast$ in \eqref{inin}.
Thus using \eqref{anil_fo_outer_head} and the Cauchy-Schwarz and triangle inequalities, we have for an absolute constant $c$:
\begin{align}
    \|\mathbf{w}_{s+1}\|_2&\leq (1+c\tfrac{\beta}{\alpha} \|\del_s\|_2)\|\mathbf{w}_s\|_2 + c\tfrac{\beta}{\sqrt{\alpha}}\|\del_s\|_2  \eta_{\ast} + c\tfrac{\beta}{\alpha} \|\mathbf{w}_s\|_2{\bar{\delta}_{m_{in},k}} \nonumber   \\
    &\quad + c\tfrac{\beta}{\sqrt{\alpha}} {\bar{\delta}_{m_{in},k}}(L_{\max}+\sigma) + c\tfrac{\beta}{\alpha}{\bar{\delta}_{m_{out},k}} \sqrt{\alpha} L_{\max} + c\tfrac{\beta}{\sqrt{\alpha}}L_{\max} {\bar{\delta}_{m_{out},k}} + c\tfrac{\beta}{\sqrt{\alpha}}\sigma {\bar{\delta}_{m_{out},k}}\nonumber \\
    &\leq (1+c\tfrac{\beta}{\alpha} \|\del_s\|_2)\|\mathbf{w}_s\|_2 + t\tfrac{\beta}{\sqrt{\alpha}}\|\del_s\|_2  \eta_{\ast} + t\tfrac{\beta}{\sqrt{\alpha}}\left(L_{\max} +\sigma \right) ({\bar{\delta}_{m_{in},k}+\bar{\delta}_{m_{out},k}}) \nonumber  \\
    &= (1+c\tfrac{\beta}{\alpha} \|\del_s\|_2)\|\mathbf{w}_s\|_2 + c\tfrac{\beta}{\sqrt{\alpha}}\|\del_s\|_2  \eta_{\ast} + \tfrac{\beta}{\sqrt{\alpha}}\zeta_1
\end{align}
using  $\|\mathbf{w}_{s,i}\|_2 \leq \sqrt{\alpha}L_{\max}$, where $\zeta_1 = c(L_{\max}+\sigma)({\bar{\delta}_{m_{in},k}+ \bar{\delta}_{m_{out},k}})$.
Thus, by Lemma \ref{lem:gen_w}, we have 
\begin{align}
    \|\mathbf{w}_{t+1}\|_2 &\leq c\tfrac{\beta}{\sqrt{\alpha}}\sum_{s=1}^t  (\|\del_s\|_2\eta_\ast + \zeta_1)\left(1 + 2 \sum_{r=s}^t \frac{\beta}{\alpha} \|\del_r\|_2 \right) \label{158l}
\end{align}
Next, let $\rho \coloneqq 1 - 0.5\beta \alpha E_0 \mu_\ast^2
$ and $\zeta_2 =  O\big((L_{\max}+\sigma)^2 \bar{\delta}_{m_{out},k}+  (L_{\max}^2+ L_{\max}\sigma) (\bar{\delta}_{m_{in},k} +\delta^2_{m_{in},k})  +    \sigma^2 \delta^2_{m_{in},k} + \beta \alpha(L_{\max}+ \sigma)^4 \bar{\delta}^2_{m_{out},d}\big) $ as defined in \eqref{zeta2}. 
By $\bigcup_{r=0}^s A_2(r)$, we have
\begin{align}
    \|\del_{s+1}\|_2 &\leq \rho\|\del_{s}\|_2 + c \alpha^{2}\beta^2 L_\ast^4 \dist_s^2 + \beta \alpha \zeta_2 \nonumber \\
    &\leq \rho^2\|\del_{s-1}\|_2 + \rho(c\alpha^{2}\beta^2 \dist_{s-1}^2 + \beta \alpha \zeta_2) +  c\alpha^{2}\beta^2 \dist_s^2+ \beta \alpha \zeta_2 \nonumber \\
    &\;\;\vdots \nonumber \\
    &\leq  \rho^{s+1}\|\del_{0}\|_2 +   \sum_{r=0}^s \rho^{s-r} (c \alpha^{2}\beta^2 L_\ast^4\dist_r^2 + \beta \alpha \zeta_2) \nonumber \\
    &= \sum_{r=0}^s \rho^{s-r} ( c\alpha^{2}\beta^2 L_\ast^4 \dist_r^2 +\beta \alpha \zeta_2)
\end{align}
since $\|\mathbf{I}-\alpha \mathbf{B}_{0}^\top \mathbf{B}_{0}\|_2 = 0$ by choice of initialization.
Now, we have that $\dist_s \leq \rho^s + \varepsilon $ for all $s\in \{0,...,t\}$ by $\bigcup_{s=0}^t A_5(s)$.
Thus, for any $s\in \{0,...,t\}$, we have
\begin{align}
    \|\del_{s+1}\|_2 &\leq  \sum_{r=0}^s \rho^{s-r} ( c\alpha^{2}\beta^2 L_\ast^4 \dist_r^2 +\beta \alpha \zeta_2) \nonumber \\
    &\leq   \sum_{r=0}^s \rho^{s-r} ( c\alpha^{2}\beta^2 L_\ast^4 (2\rho^{2r}+ 2 \varepsilon^2) +\beta \alpha\zeta_2) \nonumber \\
    &\leq 2c\alpha^{2}\beta^2 L_\ast^4  \sum_{r=0}^s \rho^{s-r}\rho^{2r}+ 2c\alpha^{2}\beta^2 L_\ast^4  \sum_{r=0}^s \rho^{s-r} \varepsilon^2 + \beta \alpha\sum_{r=0}^s \rho^{s-r} \zeta_2 \nonumber \\
    &\leq 2c\rho^{s} \tfrac{\alpha^{2}\beta^2 L_\ast^4 }{1-\rho}  + (2c\alpha^{2}\beta^2 L_\ast^4 \varepsilon^2+\beta \alpha\zeta_2 ) \tfrac{1}{1-\rho}  \nonumber \\
    &\leq 2c \rho^s\beta\alpha L_\ast^2 \kappa_{\ast}^2 /E_0 + 2c\varepsilon^2 \beta\alpha L_\ast^2 \kappa_{\ast}^2 /E_0 + \zeta_2/(E_0\mu_\ast^2)  \nonumber \\
    &=: \epsilon_s
\end{align}
Now, applying equation \eqref{158l} yields
\begin{align}
    \|\mathbf{w}_{t+1}\|_2 &\leq \frac{\beta}{\sqrt{\alpha}}\sum_{s=1}^t  (\epsilon_s\eta_\ast + \zeta_1)\left(1 + 2 \sum_{r=s}^t \frac{\beta}{\alpha} \epsilon_r \right) \nonumber \\
    &\leq \frac{\beta}{\sqrt{\alpha}}\sum_{s=1}^t  (\epsilon_s\eta_\ast + \zeta_1)\left(1 +  4c \beta \rho^s \kappa_\ast^4 /(\alpha E_0^2) +  4c(t-s) {\beta}^2 \varepsilon^2 \kappa_{\ast}^2 L_\ast^2 /E_0 + 2(t-s)\beta \zeta_2/(\alpha E_0 \mu_\ast^2) \right) \label{rkhs} \\
    &\leq \frac{\beta}{\sqrt{\alpha}}\sum_{s=1}^t  (\epsilon_s\eta_\ast + \zeta_1)\left(1 +  4c \beta  \kappa_\ast^4 /(\alpha E_0^2) +  4cT {\beta}^2 \varepsilon^2 \kappa_{\ast}^2 L_\ast^2 /E_0 + 2T\beta \zeta_2/(\alpha E_0 \mu_\ast^2) \right) \label{rkhss} 
\end{align}
where \eqref{rkhs} follows by plugging in the definition of $\epsilon_r$ and using the sum of a geometric series. 
In order for the RHS of \eqref{rkhss} to be at most $\tfrac{1}{10}\sqrt{\alpha}E_0\min(1, \tfrac{\mu_\ast^2}{\eta_\ast^2}) \eta_\ast $ as desired, we can ensure that $\left( 4c \beta  \kappa_\ast^4 /(\alpha E_0^2) +  4cT {\beta}^2 \varepsilon^2 \kappa_{\ast}^2 L_\ast^2 /E_0 + 2T\beta \zeta_2/(\alpha E_0 \mu_\ast^2) \right)\leq 1$ for all $s$ and  $\frac{\beta}{\sqrt{\alpha}}\sum_{s=1}^t  (\epsilon_s\eta_\ast + \zeta_1) \leq \tfrac{1}{10}\sqrt{\alpha}E_0\min(1, \tfrac{\mu_\ast^2}{\eta_\ast^2}) \eta_\ast $. To satisfy the first condition, it is sufficient to have 
\begin{align}
\beta &\leq  c'\tfrac{\alpha E_0^2 }{\kappa_\ast^4} \nonumber \\
\varepsilon^2 &\leq c' \tfrac{E_0}{T\beta^2 \kappa_\ast^2 L_\ast^2  } \nonumber \\
\zeta_2 &\leq c'\tfrac{\alpha E_0 \mu_\ast^2}{T\beta } \nonumber 
\end{align}
For the second condition, it is sufficient to have
\begin{align}
\beta &\leq c \alpha E_0^3 \kappa_{\ast}^{-4} \min(1, \tfrac{\mu_\ast^2}{\eta_\ast^2})  \nonumber \\
    \zeta_1 &\leq c'\tfrac{\kappa_\ast^4 \eta_\ast}{T E_0^2 } \nonumber \\
    \sum_{s=1}^t \epsilon_s &\leq c'\tfrac{\kappa_\ast^4}{E_0^2}\nonumber \\
    \implies \varepsilon^2 &\leq c'' \tfrac{1}{ T\beta\alpha\mu_\ast^2E_0} \nonumber \\
    \zeta_2 &\leq c''\tfrac{L_\ast^2 \kappa_\ast^2}{T E_0}
\end{align}
However, for  Corollary \ref{cor:foanil_rep}, will need a tighter bound on $\zeta_2$, namely $ \zeta_2 \leq \frac{c' E_0 \mu_\ast^2}{ T}$. In summary, the tightest bounds are:
\begin{align} 
\beta &\leq c \alpha E_0^3 \kappa_{\ast}^{-4} \min(1, \tfrac{\mu_\ast^2}{\eta_\ast^2})   \label{biggie} \\
\varepsilon^2 &\leq  c \tfrac{1}{ T\beta\alpha\mu_\ast^2E_0}   \\
\zeta_1 &\leq   c\tfrac{\kappa_\ast^4 \eta_\ast}{T E_0^2 }   \\
    \zeta_2 &\leq c \tfrac{ E_0 \mu_\ast^2}{ T} \label{tupac}
\end{align}


To determine when these conditions hold, we must recall the scaling of $\varepsilon, \zeta_1, \zeta_2$.
\begin{align}
\varepsilon &= O(\tfrac{L_{\max}(L_{\max}+\sigma)}{\mu_\ast^2}\bar{\delta}_{m_{out},d} ) \nonumber \\
\zeta_1 &= O((L_{\max}+\sigma)({\bar{\delta}_{m_{in},k}+ \bar{\delta}_{m_{out},k}})) \nonumber \\
    \zeta_2 &= O\big(\left(  (L_{\max}^2+ L_{\max}\sigma) ({\bar{\delta}_{m_{in},k}} +{\bar{\delta}_{m_{out},k}}+{\delta}_{m_{in},k}^2)  +    \sigma^2 {\delta}_{m_{in},k}^2\right) \nonumber \\
    &\quad \quad + \beta \alpha\bar{\delta}_{m_{out},d}^2(L_{\max}+ \sigma)^2(L_{\max}+\sigma {\delta}_{m_{in},k})^2\big) \nonumber 
\end{align}
Thus, in order to satisfy \eqref{biggie}-\eqref{tupac}, we can choose:
\begin{align}
    m_{out} \geq c'\left(\beta \alpha \tfrac{d T E_0(L_{\max}+\sigma)^4}{n\mu_\ast^2} +\tfrac{T^2 k E_0^2L_{\max}^2(L_{\max}+\sigma)^2}{n\mu_\ast^4 } +\tfrac{T^2 E_0^4 k(L_{\max}+\sigma)^2}{n \eta_\ast^2\kappa_\ast^8} + \beta\alpha \tfrac{ T L_{\max}^2(L_{\max}+\sigma)^2 E_0 {d}}{n\mu_\ast^2}\right). \nonumber
\end{align}
Recalling that $L_{\max}\leq c \sqrt{k}L_\ast$, $\beta \leq \alpha \kappa_\ast^{-4}$, and $\alpha \leq \tfrac{1}{L_{\max}+\sigma}$, we see that our choice of $m_{out}$ as
\begin{align}
m_{out} &\geq c \left( \frac{T  dk}{n\kappa_\ast^{2}}+ \frac{T  dk\sigma^2}{n L_\ast^2 \kappa_\ast^{2}}+ \frac{T^2k^3\kappa_\ast^4}{n} +\frac{T^2k^3\sigma^4}{n\mu_\ast^4} + \frac{k\mu_\ast^2}{\eta_\ast^2 \kappa_{\ast}^6} + \frac{k\sigma^2}{\eta_\ast^2 \kappa_{\ast}^8} \right)
\nonumber 
\end{align}
is sufficient, where we have treated $E_0$ as a constant. 
For $m_{in}$, we can choose:
\begin{align}
    m_{in} &\geq c'\frac{T (k+\log(n))E_0(L_{\max}+\sigma)^2}{\mu_\ast^2 } +c' \frac{T^2kE_0^2L_{\max}^2(L_{\max}+\sigma)^2}{n\mu_\ast^4} + c'\frac{T^2k E_0^4 (L_{\max}+\sigma)^2}{n\eta_\ast^2 \kappa_\ast^8} \nonumber 
    \end{align}
    which is satisfied by
\begin{align}
 m_{in}&\geq c T (k+\log(n))(k\kappa_\ast^2 + \tfrac{\sigma^2}{ \mu_\ast^{2}})+ c\tfrac{T^2 k^3 \kappa_{\ast}^4}{n} + c \tfrac{T^2k^2 \kappa_\ast^2 \sigma^2}{\mu_\ast^{2}n} + c\tfrac{T^2k^2 (L_\ast^2+\sigma^2)}{\eta_\ast^{2}\kappa_\ast^{8}n}  \nonumber
\end{align}
Since $m_{in}$ and $m_{out}$ satisfy these conditions, we have completed the proof.

\end{proof}

\begin{lemma}[FO-ANIL, Finite samples, $A_2(t+1)$]\label{lem:reg_fs_foanil}
Suppose the conditions of Theorem \ref{thm:anil_fs_app} are satisfied and inductive hypotheses $A_1(t)$, $A_3(t)$ and $A_5(t)$ hold. Then $A_2(t+1)$ holds with high probability, i.e.
\begin{align}
    \|\del_{t+1}\|_2 &\leq(1 - 0.5\beta \alpha E_0 \mu_{\ast}^2)\|\del_t\|_2 + c\beta^2 \alpha^2 L_\ast^4 \dist_t^2 + \beta \alpha \zeta_2
\end{align}
for an absolute constant $c$ and $\zeta_2 = O\big((L_{\max}+\sigma)^2 \bar{\delta}_{m_{out},k}+  (L_{\max}^2+ L_{\max}\sigma) (\bar{\delta}_{m_{in},k} +\delta^2_{m_{in},k})  +    \sigma^2 \delta^2_{m_{in},k} + \beta \alpha(L_{\max}+ \sigma)^4 \bar{\delta}^2_{m_{out},d}\big)$, with probability at least $1-\tfrac{1}{\poly(n)}$.
\end{lemma}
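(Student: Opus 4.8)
The plan is to mirror the infinite-sample argument for $A_2(t+1)$ (Lemma~\ref{lem:reg}) while carrying along the stochastic fluctuations of the empirical covariances and the label noise. First I would write the outer-loop representation gradient $\mathbf{G}_t = \frac{1}{\beta}(\mathbf{B}_t-\mathbf{B}_{t+1}) = \frac{1}{n}\sum_{i=1}^n \nabla_{\mathbf{B}}\hat{\mathcal{L}}_{t,i}(\mathbf{B}_t,\mathbf{w}_{t,i};\mathcal{D}_{t,i}^{out})$ from \eqref{upd_rep_anil_foo}, substitute the finite-sample inner-loop update \eqref{upd_head_anil_foo} for $\mathbf{w}_{t,i}$, and use the identity $\mathbf{B}_t\mathbf{w}_{t,i}-\mathbf{B}_\ast\mathbf{w}_{\ast,t,i} = \deld_{t,i}^{in}(\mathbf{B}_t\mathbf{w}_t-\mathbf{B}_\ast\mathbf{w}_{\ast,t,i}) + \tfrac{\alpha}{m_{in}}\mathbf{B}_t\mathbf{B}_t^\top(\mathbf{X}_{t,i}^{in})^\top\mathbf{z}_{t,i}^{in}$ to organize $\mathbf{G}_t$ into the form required by Lemma~\ref{lem:gen_del}, i.e. $\mathbf{G}_t = -\deld_t\mathbf{S}_t\mathbf{B}_t + \mathbf{N}_t$ with $\mathbf{S}_t = \alpha\mathbf{B}_\ast\big(\tfrac{1}{n}\sum_i\mathbf{w}_{\ast,t,i}\mathbf{w}_{\ast,t,i}^\top\big)\mathbf{B}_\ast^\top$ --- the \emph{same population matrix} as in the infinite-sample proof, since it carries no data dependence --- and $\mathbf{N}_t$ collecting (i) exactly the $\mathbf{w}_t$-dependent terms that appeared in the infinite-sample $\mathbf{N}_t$, and (ii) the new ``noise'' terms generated by $\mathbf{\Sigma}_{t,i}^{in}-\mathbf{I}_d$, $\mathbf{\Sigma}_{t,i}^{out}-\mathbf{I}_d$, and the vectors $(\mathbf{X}_{t,i}^{in})^\top\mathbf{z}_{t,i}^{in}$, $(\mathbf{X}_{t,i}^{out})^\top\mathbf{z}_{t,i}^{out}$.

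Given this decomposition, Lemma~\ref{lem:gen_del} with $\chi=0$ yields $\|\del_{t+1}\|_2 \leq (1-\beta\alpha\,\sigma_{\min}(\mathbf{B}_t^\top\mathbf{S}_t\mathbf{B}_t))\|\del_t\|_2 + 2\beta\alpha\|\mathbf{B}_t^\top\mathbf{N}_t\|_2 + \beta^2\alpha\|\mathbf{G}_t\|_2^2$. I would then invoke Lemma~\ref{lem:sigminE} to get $\sigma_{\min}(\mathbf{B}_t^\top\mathbf{S}_t\mathbf{B}_t)\geq E_0\mu_\ast^2$; its hypotheses hold because $A_3(s)$ gives $\|\del_s\|_2\leq \tfrac{1}{10}$ for all $s\leq t$ and $A_5(t)$ controls $\dist_t$ relative to $\dist_0$, and the finite-sample slack has already been folded into the definition $E_0 = 0.9-\dist_0^2-\delta$. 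Keeping $0.5\beta\alpha E_0\mu_\ast^2\|\del_t\|_2$ of the contraction and spending the remaining $0.5\beta\alpha E_0\mu_\ast^2$ to absorb lower-order $\|\del_t\|_2$ contributions is exactly the mechanism of the infinite-sample proof; what remains is to show the two error terms are bounded by $c\beta\alpha\mu_\ast^2\|\del_t\|_2 + c\beta^2\alpha^2 L_\ast^4\dist_t^2 + \beta\alpha\zeta_2$.

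For $\|\mathbf{B}_t^\top\mathbf{N}_t\|_2$, the $\mathbf{w}_t$-dependent terms are handled verbatim as in Lemma~\ref{lem:reg} using $A_1(t)$ and $A_3(t)$, producing an $O(\mu_\ast^2\|\del_t\|_2)$ contribution. For the noise terms, after left-multiplication by $\mathbf{B}_t^\top$ every term is a product of one $\mathbf{\Sigma}_{t,i}^{out}$ and up to two copies of $\mathbf{\Sigma}_{t,i}^{in}$ (the inner-loop covariance enters $\mathbf{w}_{t,i}$ \emph{twice} --- once through the column $\mathbf{B}_t\mathbf{w}_{t,i}-\mathbf{B}_\ast\mathbf{w}_{\ast,t,i}$ and once through $\mathbf{w}_{t,i}^\top$), or of the noise vectors, all sandwiched between the low-rank matrices $\mathbf{B}_t$, $\mathbf{B}_\ast$, $\mathbf{w}_t$, $\mathbf{w}_{\ast,t,i}$, so every effective dimension is $O(k)$ and the appropriate items of Lemma~\ref{lem:gen1} (including the triple-$\mathbf{\Sigma}$ and noise-inclusive items for the sixth-order-in-data terms) and Lemma~\ref{lem:gen2} apply, giving bounds in terms of $\bar{\delta}_{m_{in},k}$, $\bar{\delta}_{m_{out},k}$ and $\delta_{m_{in},k}^2$ scaled by $(L_{\max}+\sigma)^2$ and the relevant $\kappa_\ast$ powers --- precisely the definition of $\zeta_2$. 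The bound on $\|\mathbf{G}_t\|_2$ proceeds as in Lemma~\ref{lem:reg} with one extra point: $\mathbf{G}_t$ carries a leftmost $\mathbf{\Sigma}_{t,i}^{out}$ with no dimension reduction, but since $\mathbf{B}_t\mathbf{w}_{t,i}-\mathbf{B}_\ast\mathbf{w}_{\ast,t,i}$ lies (up to the inner-loop noise projected onto $\col(\mathbf{B}_t)$) in the $\leq 2k$-dimensional subspace $\col([\mathbf{B}_t,\mathbf{B}_\ast])$, factoring through that subspace and averaging over the $n$ tasks replaces a would-be $m_{out}=\Omega(d)$ requirement by $\bar{\delta}_{m_{out},d+k}$, contributing the $\beta\alpha(L_{\max}+\sigma)^4\bar{\delta}_{m_{out},d}^2$ term of $\zeta_2$. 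Finally, using $\beta\leq c\alpha E_0^3\kappa_\ast^{-4}\min(1,\mu_\ast^2/\eta_\ast^2)$ together with $\alpha\leq 1/L_\ast$ and $E_0\leq 1$ to absorb the $\|\del_t\|_2$ pieces into $0.5\beta\alpha E_0\mu_\ast^2\|\del_t\|_2$, and taking a union bound over the $O(1)$ concentration events (each failing with probability $1/\poly(n)+e^{-\Omega(k)}$), gives the stated inequality with probability $1-1/\poly(n)$.

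The hard part is the concentration bookkeeping in the last paragraph: one must verify term-by-term that every $d$-dimensional fluctuation in the FO-ANIL outer-loop gradient --- after substituting the inner-loop update --- is confined to an $O(k)$-dimensional subspace spanned by $\mathbf{B}_t$ and $\mathbf{B}_\ast$, the sole exception being the single leftmost empirical output covariance, which must be controlled by averaging over tasks rather than per-task, so that the sample sizes $m_{in},m_{out}=\tilde{\Omega}((dk+k^3)/n + k^3)$ suffice; and one must check that the $n$-independent cross-terms $\delta_{m_{in},k}^2$ (which arise because the inner-loop covariance appears quadratically) are genuinely lower order and fit inside the stated $\zeta_2$. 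The algebra is long, but each individual step reduces to a prior lemma.
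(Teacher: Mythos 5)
Your plan is correct and follows essentially the same strategy as the paper, with one organizational difference. The paper's proof of Lemma~\ref{lem:reg_fs_foanil} first writes $\mathbf{B}_{t+1} = \mathbf{B}_{t+1}^{pop} + \beta(\mathbf{E}_1+\mathbf{E}_2+\mathbf{E}_3+\mathbf{E}_4)$, where $\mathbf{B}_{t+1}^{pop}$ depends only on $\mathbf{B}_t,\mathbf{w}_t$ and the $\mathbf{w}_{\ast,t,i}$, and the $\mathbf{E}_j$ collect the fluctuations of $\mathbf{\Sigma}_{t,i}^{in}, \mathbf{\Sigma}_{t,i}^{out}$ and the label-noise vectors; it then bounds $\|\del_{t+1}\|_2 \le \|\del_{t+1}^{pop}\|_2 + 2\beta\alpha\|\mathbf{B}_{t+1}^{pop,\top}\sum_j\mathbf{E}_j\|_2 + \beta^2\alpha\|\sum_j\mathbf{E}_j\|_2^2$ and invokes the infinite-sample Lemma~\ref{lem:reg} as a black box for the first term. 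You instead apply Lemma~\ref{lem:gen_del} directly to the finite-sample gradient with a larger $\mathbf{N}_t$ that absorbs both the $\mathbf{w}_t$-dependent noise and the concentration error. The two routes are equivalent (your $\|\mathbf{G}_t\|_2^2$ term majorizes the paper's $\|\mathbf{G}_t^{pop}\|_2^2 + \|\mathbf{E}\|_2^2$ up to a constant), and both use Lemma~\ref{lem:sigminE} with the $\delta$-adjusted $E_0$ for the contraction; the paper's organization is slightly tidier because it reuses Lemma~\ref{lem:reg} verbatim.

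One bookkeeping caveat: you cite the triple-$\mathbf{\Sigma}$ items of Lemma~\ref{lem:gen1} and Lemma~\ref{lem:gen2} for the sixth-order-in-data pieces, but the paper's FO-ANIL proof deliberately avoids both. Its bound on $\|\mathbf{B}_{t,pop}^\top\mathbf{E}_3\|_2$ first controls $\max_i\|\mathbf{w}_{t,i}\|_2$ per task (picking up $\delta_{m_{in},k}$ factors from the per-task inner covariance), then concentrates the single remaining $\mathbf{\Sigma}_{t,i}^{out}-\mathbf{I}_d$ across tasks via the low-order items of Lemma~\ref{lem:gen1}; Lemma~\ref{lem:gen2} and Theorem~\ref{thm:magen} only become necessary for Exact ANIL, where the second-order update places two empirical outer covariances on the $d$-dimensional side. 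Using Lemma~\ref{lem:gen2} here would introduce $\tilde O(\cdot)$ logarithmic-in-$d$ factors that do not appear in the $\zeta_2$ you state (which matches the paper's~\eqref{zeta2}), so you want the condition-then-concentrate route rather than the uniform-hypercontractivity route when writing out the estimates. This is a refinement rather than a gap.
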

\begin{proof}
Note that we can write:
\begin{align}
    \mathbf{B}_{t+1} &=\mathbf{{B}}_t  - \beta\deld_t\tfrac{1}{n}\sum_{i=1}^n(  \mathbf{{B}}_t  \mathbf{w}_{t} -  \mathbf{{B}}_\ast\mathbf{w}_{\ast,t,i} )( (\mathbf{I}_k - \alpha \mathbf{B}_t^\top \mathbf{B}_t)\mathbf{w}_{t} + \alpha \mathbf{B}_t^\top \mathbf{B}_{\ast}\mathbf{w}_{\ast,t,i})^\top  \nonumber \\
    &\quad + \beta\deld_t\tfrac{1}{n}\sum_{i=1}^n(  \mathbf{{B}}_t  \mathbf{w}_{t} -  \mathbf{{B}}_\ast\mathbf{w}_{\ast,t,i} )(\alpha (\mathbf{B}_t^\top \mathbf{B}_t - \mathbf{B}_t^\top \mathbf{\Sigma}_{t,i}^{in} \mathbf{B}_t)\mathbf{w}_{t} + \alpha (\mathbf{B}_t^\top \mathbf{B}_{\ast} - \mathbf{B}_t^\top \mathbf{\Sigma}_{t,i}^{in} \mathbf{B}_{\ast} )\mathbf{w}_{\ast,t,i})^\top \label{ttt1}  \\
    &\quad + \beta\deld_t\tfrac{1}{n}\sum_{i=1}^n(  \mathbf{{B}}_t  \mathbf{w}_{t} -  \mathbf{{B}}_\ast\mathbf{w}_{\ast,t,i} ) (\alpha \mathbf{B}_t^\top \tfrac{1}{m_{in}}(\mathbf{X}_{t,i}^{in})^\top \mathbf{z}_{t,i} )^\top  \label{ttt2}  \\
    &\quad + \tfrac{\beta}{n} \sum_{i=1}^n \left(\mathbf{I}_d - \mathbf{\Sigma}_{t,i}^{out}\right) \left(\mathbf{B}_t  \mathbf{w}_{t,i} - \mathbf{{B}}_\ast \mathbf{w}_{\ast,t,i}\right)\mathbf{w}_{t,i}^\top  + \tfrac{\beta}{n m_{out}}\sum_{i=1}^n(\mathbf{X}^{out}_{t,i})^\top \mathbf{z}^{out}_{t,i}\mathbf{w}_{t,i}^\top \label{ttt3}  \\
    &\quad + \tfrac{\beta\alpha}{n} \sum_{i=1}^n \mathbf{B}_t \mathbf{B}_t^\top \left(\mathbf{I}_d - \mathbf{\Sigma}_{t,i}^{in}\right) \left(\mathbf{B}_t  \mathbf{w}_{t} - \mathbf{{B}}_\ast \mathbf{w}_{\ast,t,i}\right)\mathbf{w}_{t,i}^\top  + \tfrac{\beta \alpha}{n m_{in}}\sum_{i=1}^n\mathbf{B}_t\mathbf{B}_t^\top (\mathbf{X}^{in}_{t,i})^\top \mathbf{z}^{in}_{t,i}\mathbf{w}_{t,i}^\top  \label{ttt4} \\
    &= \mathbf{B}_{t+1}^{pop} + \beta(\mathbf{E}_1 +  \mathbf{E}_2 + \mathbf{E}_3 + \mathbf{E}_4) \label{tutt}
\end{align}
where $\mathbf{B}_{t,pop} \coloneqq \mathbf{{B}}_t  - \beta\deld_t\frac{1}{n}\sum_{i=1}^n(  \mathbf{{B}}_t  \mathbf{w}_{t} -  \mathbf{{B}}_\ast\mathbf{w}_{\ast,t,i} )(\del_t\w+ \alpha \mathbf{B}_t^\top\mathbf{B}_\ast \mathbf{w}_{\ast,t, i} )^\top $ denotes the update of the representation in the infinite sample case, and
$\mathbf{E}_1, \mathbf{E}_2, \mathbf{E}_3$ and $\mathbf{E}_4$ are the finite-sample error terms in lines \eqref{ttt1}, \eqref{ttt2}, \eqref{ttt3} and \eqref{ttt4}, respectively. 
From \eqref{tutt} and the triangle inequality, we can compute the final bound.
\begin{align}
    \|\del_{t+1}\|_2 &\leq  \|\mathbf{I}_k- \alpha \mathbf{B}_{t,pop}^\top \mathbf{B}_{t,pop}\|_2 + 2\beta \alpha \|\mathbf{B}_{t,pop}^\top(\mathbf{E}_1+ \mathbf{E}_2+\mathbf{E}_3+\mathbf{E}_4)\|_2 + \beta^2 \alpha \|\mathbf{E}_1+ \mathbf{E}_2+\mathbf{E}_3+\mathbf{E}_4\|_2^2  \label{tuttt}
\end{align}
Note that from Corollary \ref{cor:reg} and the fact that $\|\b\|_2 \leq 1.1/\sqrt{\alpha}$ by $A_3(t)$, and $\beta, \alpha$ are sufficiently small, we have that $\|\mathbf{B}_{t+1}^{pop}\|_2\leq \frac{1.1}{\sqrt{\alpha}}$. Also, clearly $\mathbf{B}_{t+1}^{pop}\in \mathbb{R}^{d \times k}$. Therefore by  the concentration results in 
Lemma \ref{lem:gen1} and the triangle and Cauchy-Schwarz inequalities, we have, for an absolute constant $c$,
\begin{align}
\max_{i\in[n]}\|\mathbf{w}_{t,i}\|_2 &\leq \|\del_t\|_2 \|\mathbf{w}_t\|_2 +c \sqrt{\alpha}L_{\max} + {\delta}_{m_{in},k} \|\mathbf{w}_t\|_2 + c\sqrt{\alpha}\sigma {\delta}_{m_{in},k}  \nonumber \\
    \|\mathbf{B}_{t,pop}^\top \mathbf{E}_1\|_2 &\leq \tfrac{c}{\alpha}\|\del_t\|_2 {\|\mathbf{w}_t\|_2^2\bar{\delta}_{m_{in},k}} + \tfrac{c}{\sqrt{\alpha}}\|\del_t\|_2 {L_{\max}\|\mathbf{w}_t\|_2\bar{\delta}_{m_{in},k}}{}+  \tfrac{c}{\sqrt{\alpha}}\|\del_t\|_2 {\|\mathbf{w}_t\|_2 L_{\max}\bar{\delta}_{m_{in},k}} \nonumber \\
    &\quad + \|\del_t\|_2 {L_{\max}^2\bar{\delta}_{m_{in},k}} \nonumber \\
   & \leq  c L_{\max}^2 {\bar{\delta}_{m_{in},k}} \nonumber \\
    \|\mathbf{B}_{t,pop}^\top \mathbf{E}_2\|_2 &\leq \tfrac{c}{\sqrt{\alpha}}\|\del_t\|_2  \|\mathbf{w}_t\|_2  \sigma {\bar{\delta}_{m_{in},k}}{} + c\|\del_t\|_2 L_{\max}\sigma {\bar{\delta}_{m_{in},k}}{} \leq c L_{\max}\sigma {\bar{\delta}_{m_{in},k}}{} \nonumber \\
   \|\mathbf{B}_{t,pop}^\top \mathbf{E}_3\|_2  &\leq  \tfrac{c}{\alpha}\bar{\delta}_{m_{out},k}\left(\|\del_t\|_2 \|\mathbf{w}_t\|_2 + \sqrt{\alpha}L_{\max} + {\delta}_{m_{in},k} \|\mathbf{w}_t\|_2 + \sqrt{\alpha}\sigma {\delta}_{m_{in},k} \right)^2 \nonumber \\
   &\quad + \tfrac{c}{\sqrt{\alpha}}{\bar{\delta}_{m_{out},k}}{}L_{\max} \left(\|\del_t\|_2 \|\mathbf{w}_t\|_2 + \sqrt{\alpha}L_{\max} + {\delta}_{m_{in},k} \|\mathbf{w}_t\|_2 + \sqrt{\alpha}\sigma {\delta}_{m_{in},k} \right) \nonumber \\
   &\quad + \tfrac{c}{\sqrt{\alpha}}\sigma {\bar{\delta}_{m_{out},k}}{}\left(\|\del_t\|_2 \|\mathbf{w}_t\|_2 + \sqrt{\alpha}L_{\max} + {\delta}_{m_{in},k} \|\mathbf{w}_t\|_2 + \sqrt{\alpha}\sigma {\delta}_{m_{in},k} \right) \nonumber \\
   &\leq  {c\bar{\delta}_{m_{out},k}}{}(L_{\max}+\sigma) \left(L_{\max} + \sigma {\delta}_{m_{in},k} \right) \nonumber \\
  \|\mathbf{B}_{t,pop}^\top \mathbf{E}_4\|_2 &\leq c{\delta}_{m_{in},k} \left( \tfrac{\|\mathbf{w}_t\|_2}{{\alpha}} + \tfrac{L_{\max}}{\sqrt{\alpha}}\right)\left(\tfrac{\|\del_t\|_2 \|\mathbf{w}_t\|_2}{\sqrt{n}} + L_{\max}\tfrac{\sqrt{\alpha}}{\sqrt{n}} + {\delta}_{m_{in},k} \|\mathbf{w}_t\|_2 + \sqrt{\alpha}\sigma {\delta}_{m_{in},k}\right) \nonumber \\
  &\quad + \tfrac{c}{\sqrt{\alpha}}( \|\del_t\|_2 \|\mathbf{w}_t\|_2 \sigma {\bar{\delta}_{m_{in},k}}{} + L_{\max}\sigma {\bar{\delta}_{m_{in},k}} + \sigma \delta^2(m_{in},k) \tfrac{\|\mathbf{w}_t\|_2}{\sqrt{\alpha}} + \sigma^2 \delta^2(m_{in},k) ) \nonumber \\
  &\leq c{\delta}_{m_{in},k}  {L_{\max}}{}( \tfrac{L_{\max}}{\sqrt{n}} + L_{\max}{\delta}_{m_{in},k}  + \sigma {\delta}_{m_{in},k}) \nonumber \\
  &\quad + c( L_{\max}\sigma {\bar{\delta}_{m_{in},k}} + L_{\max} \sigma \delta^2(m_{in},k) + \sigma^2 \delta^2(m_{in},k) ) \nonumber \\
  &\leq   c {\bar{\delta}_{m_{out},k}}{}(L_{\max}+\sigma) \left(L_{\max} + \sigma {\delta}_{m_{in},k} \right) + L_{\max}^2(\delta^2(m_{in},k) +  \bar{\delta}_{m_{in},k})  \nonumber \\
  &\quad + cL_{\max}\sigma ({\delta}_{m_{in},k}^2  +  {\bar{\delta}_{m_{in},k}}) +  c\sigma^2 {\delta}_{m_{in},k}^2 
 \end{align}
with probability at least $1 - \frac{1}{\poly(n)}$. Thus
\begin{align}
    \|\mathbf{B}_{t,pop}^\top(\mathbf{E}_1+ \mathbf{E}_2+ \mathbf{E}_3 + \mathbf{E}_4)\|_2 &\leq \frac{c\bar{\delta}_{m_{out},k}}{\sqrt{ n}}(L_{\max}+\sigma) \left(L_{\max} + \sigma {\delta}_{m_{in},k} \right)+ c (L_{\max}^2+ L_{\max}\sigma) \frac{{\delta}_{m_{in},k}}{\sqrt{n}} \nonumber \\
    &\quad + L_{\max}^2{\delta}_{m_{in},k}^2 + L_{\max}\sigma {\delta}_{m_{in},k}^2  +    \sigma^2 {\delta}_{m_{in},k}^2 
\end{align}


Similarly,
\begin{align}
    \|\mathbf{E}_1\|_2 &\leq  (\tfrac{\|\del_t\|_2 \|\mathbf{w}_t\|_2}{\sqrt{\alpha}} + \dist_t L_{\max} + \|\del_t\|_2L_{\max})({\delta}_{m_{in},k} \|\mathbf{w}_t\|_2 + \sqrt{\alpha} {\delta}_{m_{in},k}L_{\max})\tfrac{1}{\sqrt{n}}  \nonumber \\
    &\leq \sqrt{\alpha}L_{\max}^2 \frac{{\delta}_{m_{in},k}}{\sqrt{n}}  \nonumber \\
    \|\mathbf{E}_2\|_2 &\leq  (\tfrac{\|\del_t\|_2 \|\mathbf{w}_t\|_2}{\sqrt{\alpha}} + \dist_t L_{\max} + \|\del_t\|_2 L_{\max})\sqrt{\alpha}\sigma \tfrac{{\delta}_{m_{in},k}}{\sqrt{n}}  \nonumber \\
    &\leq \sqrt{\alpha} L_{\max}\sigma \frac{{\delta}_{m_{in},k}}{\sqrt{n}} \nonumber \\
    \|\mathbf{E}_3\|_2 &\leq  \bar{\delta}_{m_{out},d}\sqrt{\alpha}(L_{\max}+ \sigma)(L_{\max}+\sigma {\delta}_{m_{in},k})\nonumber \\
    \|\mathbf{E}_4\|_2 
    &\leq 
    \sqrt{\alpha}\bigg({\delta}_{m_{in},k} \left( \tfrac{\|\mathbf{w}_t\|_2}{{\alpha}} + \tfrac{L_{\max}}{\sqrt{\alpha}}\right)\big(\tfrac{\|\del_t\|_2 \|\mathbf{w}_t\|_2}{\sqrt{n}} + \tfrac{\sqrt{\alpha}L_{\max}}{\sqrt{n}} + {\delta}_{m_{in},k} \|\mathbf{w}_t\|_2 + \sqrt{\alpha}\sigma {\delta}_{m_{in},k}\big) \nonumber \\
  &\quad + \big( \|\del_t\|_2 \|\mathbf{w}_t\|_2 \sigma \tfrac{\bar{\delta}_{m_{in},k}}{\sqrt{\alpha}} + L_{\max}\sigma {\bar{\delta}_{m_{in},k}} + \sigma {\delta}_{m_{in},k}^2 \tfrac{\|\mathbf{w}_t\|_2}{\sqrt{\alpha}} + \sigma^2 {\delta}_{m_{in},k}^2 \big)\bigg) \nonumber \\
  &\leq   \sqrt{\alpha} {\delta}_{m_{in},k}  {L_{\max}}{}\big( \tfrac{L_{\max}}{\sqrt{n}} + L_{\max}{\delta}_{m_{in},k}  + \sigma {\delta}_{m_{in},k}\big)  +   \sqrt{\alpha}\big( L_{\max}\sigma \frac{{\delta}_{m_{in},k}}{\sqrt{n}} + L_{\max} \sigma {\delta}_{m_{in},k}^2 + \sigma^2 {\delta}_{m_{in},k}^2 \big) \nonumber \\
  &\leq     c\sqrt{\alpha}{\bar{\delta}_{m_{out},k}}(L_{\max}+\sigma) \left(L_{\max} + \sigma {\delta}_{m_{in},k} \right) +   \sqrt{\alpha} L_{\max}^2({\delta}_{m_{in},k}^2+ \bar{\delta}_{m_{in},k})  \nonumber \\
  &\quad +   \sqrt{\alpha} L_{\max}\sigma ({\delta}_{m_{in},k}^2  +  {\bar{\delta}_{m_{in},k}}) +   \sqrt{\alpha} \sigma^2 {\delta}_{m_{in},k}^2
    \nonumber 
\end{align}
thus
\begin{align}
    \|\mathbf{E}_1+ \mathbf{E}_2+\mathbf{E}_3+\mathbf{E}_4\|_2 &\leq
    \sqrt{\alpha}(L_{\max}^2 + L_{\max}\sigma)({\bar{\delta}_{m_{in},k}}+{\delta}_{m_{in},k}^2) \nonumber \\
    &\quad + \bar{\delta}_{m_{out},d}\sqrt{\alpha}(L_{\max}+ \sigma)(L_{\max}+\sigma {\delta}_{m_{in},k}) + \sqrt{\alpha }\sigma^2{\delta}_{m_{in},k}^2  \nonumber 
\end{align}
Now, from \eqref{tuttt} and the triangle inequality, we can compute the final bound.
\begin{align}
    \|&\del_{t+1}\|_2 \nonumber \\
    &\leq \|\mathbf{I}_k- \alpha \mathbf{B}_{t,pop}^\top \mathbf{B}_{t,pop}\|_2  \nonumber \\
    &\quad + c \beta \alpha \left({\bar{\delta}_{m_{out},k}}(L_{\max}+\sigma) \left(L_{\max} + \sigma {\delta}_{m_{in},k} \right)+  (L_{\max}^2+ L_{\max}\sigma) ({\bar{\delta}_{m_{in},k}}{} +{\delta}^2_{m_{in},k)}  +    \sigma^2 {\delta}_{m_{in},k}^2\right) \nonumber \\
    &\quad +c \beta^2 \alpha^2 \left((L_{\max}^2 + L_{\max}\sigma)({\bar{\delta}_{m_{in},k}} +{\delta}_{m_{in},k}^2) + \bar{\delta}_{m_{out},d}(L_{\max}+ \sigma)(L_{\max}+\sigma {\delta}_{m_{in},k}) + \sigma^2{\delta}_{m_{in},k}^2 \right)^2 \nonumber \\
    &\leq \|\mathbf{I}_k- \alpha \mathbf{B}_{t,pop}^\top \mathbf{B}_{t,pop}\|_2  \nonumber \\
    &\quad + c \beta \alpha \left({\bar{\delta}_{m_{out},k}}(L_{\max}+\sigma) \left(L_{\max} + \sigma {\delta}_{m_{in},k} \right)+  (L_{\max}^2+ L_{\max}\sigma) ({\bar{\delta}_{m_{in},k}} +{\delta}_{m_{in},k}^2)  +    \sigma^2 {\delta}_{m_{in},k}^2\right) \nonumber \\
    &\quad + c\beta^2 \alpha^2  \bar{\delta}_{m_{out},d}^2(L_{\max}+ \sigma)^2(L_{\max}+\sigma {\delta}_{m_{in},k})^2 \nonumber \\
    &= \|\mathbf{I}_k- \alpha \mathbf{B}_{t,pop}^\top \mathbf{B}_{t,pop}\|_2  + \beta \alpha \zeta_2 \nonumber \\
    &\leq (1 - 0.5\beta \alpha E_0 \mu_{\ast}^2)\|\del_t\|_2 + c\beta^2 \alpha^2 L_\ast^4 \dist_t^2 + \beta \alpha \zeta_2
\end{align}
where the last line follows from Lemma 
\ref{lem:reg} (note that all conditions for that lemma are satisfied by $\|\mathbf{I}_k- \alpha \mathbf{B}_{t,pop}^\top \mathbf{B}_{t,pop}\|_2$), and \begin{align}
    \zeta_2 &=  O\big((L_{\max}+\sigma)^2 \bar{\delta}_{m_{out},k}+  (L_{\max}^2+ L_{\max}\sigma) (\bar{\delta}_{m_{in},k} +\delta^2_{m_{in},k})  +    \sigma^2 \delta^2_{m_{in},k} + \beta \alpha(L_{\max}+ \sigma)^4 \bar{\delta}^2_{m_{out},d}\big) \label{zeta2}
\end{align}
\end{proof}

\begin{cor}[FO-ANIL, Finite samples $A_3(t+1)$]\label{cor:foanil_rep}
Suppose that $A_2(t+1)$ and $A_3(t)$ hold. Then
\begin{align}
    \|\del_{t+1}\|_2\leq \tfrac{1}{10}
\end{align}
\end{cor}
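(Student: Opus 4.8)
The plan is to follow verbatim the argument of Corollary~\ref{cor:reg} from the population analysis, the only extra ingredient being the additive bias term $\beta\alpha\zeta_2$ that now appears in the finite-sample inductive hypothesis $A_2$.

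First I would expand $A_2(t+1)$, which asserts
\begin{align*}
\|\del_{t+1}\|_2 \;\le\; \left(1-0.5\,\beta\alpha E_0\mu_\ast^2\right)\|\del_t\|_2 \;+\; \tfrac{5}{4}\alpha^2\beta^2 L_\ast^4\,\dist_t^2 \;+\; \beta\alpha\,\zeta_2 .
\end{align*}
I would then bound the three summands on the right. For the first, apply $A_3(t)$ to replace $\|\del_t\|_2$ by $\tfrac{1}{10}$, giving $\tfrac{1}{10}-0.05\,\beta\alpha E_0\mu_\ast^2$. For the second, use the trivial bound $\dist_t=\|\mathbf{\hat B}_{\ast,\perp}^\top\mathbf{\hat B}_t\|_2\le 1$ and then check $\tfrac{5}{4}\alpha^2\beta^2 L_\ast^4 \le 0.025\,\beta\alpha E_0\mu_\ast^2$; this reduces to a constraint of the form $\beta\le c\,E_0\mu_\ast^2/(\alpha L_\ast^4)$, which is implied by the step-size schedule $\alpha=O((\sqrt k L_\ast+\sigma)^{-1})$, $\beta=O(\alpha E_0^2\kappa_\ast^{-4})$ once the absolute constants are taken small enough (using $\alpha^2L_\ast^2=O(1)$, $E_0\le1$, $\kappa_\ast\ge1$); these are exactly the constants already fixed in Lemmas~\ref{lem:reg_fs_foanil} and \ref{lem:w_anil_fs}. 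For the third, I would invoke the bound on $\zeta_2$ obtained inside the proof of Lemma~\ref{lem:w_anil_fs}: under the sample-size hypotheses $m_{in}=\tilde\Omega(M_{in})$ and $m_{out}=\tilde\Omega(M_{out})$ of Theorem~\ref{thm:anil_fs_app}, one has $\zeta_2=O(E_0\mu_\ast^2/T)$, hence $\zeta_2\le 0.025\,E_0\mu_\ast^2$ since $T\ge1$, so $\beta\alpha\zeta_2\le 0.025\,\beta\alpha E_0\mu_\ast^2$.

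Putting the three bounds together yields
\begin{align*}
\|\del_{t+1}\|_2 \;\le\; \tfrac{1}{10} - 0.05\,\beta\alpha E_0\mu_\ast^2 + 0.025\,\beta\alpha E_0\mu_\ast^2 + 0.025\,\beta\alpha E_0\mu_\ast^2 \;=\; \tfrac{1}{10},
\end{align*}
which is exactly $A_3(t+1)$. The only point that is not pure constant-chasing is that $\zeta_2$ is \emph{not} small unconditionally; the argument must route through the sample-complexity bookkeeping of Lemma~\ref{lem:w_anil_fs} (and thus through the $\tilde\Omega(M_{in}),\tilde\Omega(M_{out})$ assumptions) to guarantee $\zeta_2\le 0.025\,E_0\mu_\ast^2$. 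I do not anticipate any other obstacle, since the finite-sample recursion in $A_2(t+1)$ has the same shape as its population counterpart in Corollary~\ref{cor:reg}.
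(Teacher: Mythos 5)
Your proof is correct and follows essentially the same route as the paper's: expand $A_2(t+1)$, bound $\|\del_t\|_2$ via $A_3(t)$, bound $\dist_t\le 1$, absorb the $\alpha^2\beta^2 L_\ast^4$ term and the $\beta\alpha\zeta_2$ term each into a fraction of the available $0.05\,\beta\alpha E_0\mu_\ast^2$ slack using the step-size and sample-size constraints. (Your constants $0.025/0.025$ are in fact the corrected version of the paper's, which appears to have a typo writing $0.25$ in place of $0.025$ in the displayed chain and in the condition on $\zeta_2$.)
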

\begin{proof}
From $A_2(t+1)$ we have
\begin{align}
   \|\del_{t+1}\|_2 &\leq  (1 - 0.5\beta \alpha E_0 \mu_{\ast}^2)\|\del_t\|_2 + c\beta^2 \alpha^2 L_\ast^4 \dist_t^2 + \beta \alpha \zeta_2 \nonumber \\
   &\leq (1 - 0.5\beta \alpha E_0 \mu_{\ast}^2)\tfrac{1}{10}+ c\beta^2 \alpha^2 L_\ast^4  + \beta \alpha \zeta_2 \nonumber \\
   &\leq  \tfrac{1}{10} - 0.25\beta \alpha E_0 \mu_{\ast}^2 + c\beta^2 \alpha^2 L_\ast^4   \label{zeta}\\
   &\leq \tfrac{1}{10} \label{tnth}
\end{align}
where \eqref{zeta} follows as long as $\zeta_2 \leq 0.25 E_0 \mu_{\ast}^2$, and  \eqref{tnth} follows since $\beta\leq c'\alpha E_0^3 \kappa^{-4}$.
\end{proof}

\begin{lemma}[FO-ANIL, Finite samples, $A_4(t+1)$]\label{lem:contract_app_fs}
Suppose $A_1(t),A_3(t)$ and $A_5(t)$ hold. Then $A_4(t+1)$ holds, i.e.
\begin{align}
\| \mathbf{{B}}_{\ast,\perp}^\top \mathbf{{B}}_{t+1} \|_2 &\leq  (1 - 0.5 \beta\alpha E_0 \mu^2 
)
\|\mathbf{{B}}_{\ast,\perp}^\top \mathbf{{B}}_{t} \|_2 +  \beta \sqrt{\alpha} \zeta_4
\nonumber 
\end{align} 
where $\zeta_4 = O( (L_{\max}+\sigma)(L_{\max}+ \sigma \delta_{m_{in},k})\bar{\delta}_{m_{out},d})$
    with probability at least $1- \frac{1}{\poly(n)}$.
\end{lemma}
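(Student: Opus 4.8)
The plan is to reproduce the infinite‑sample argument of Lemma~\ref{lem:contractt} (which rests on \eqref{psisi}) and then absorb the finite‑sample perturbation into the error term $\sqrt{\alpha}\zeta_4$. First I would start from the FO‑ANIL outer‑loop representation update \eqref{upd_rep_anil_foo}, left–multiply by $\mathbf{B}_{\ast,\perp}^\top$, and for each task add and subtract $\mathbf{I}_d$ in place of the empirical covariance $\mathbf{\Sigma}_{t,i}^{out}$. Because $\mathbf{B}_{\ast,\perp}^\top \mathbf{B}_\ast=\mathbf{0}$, the population ``signal'' contribution is annihilated and the population ``prior'' contribution reproduces the clean recursion, yielding
\begin{align}
    \mathbf{B}_{\ast,\perp}^\top \mathbf{B}_{t+1} = \mathbf{B}_{\ast,\perp}^\top \mathbf{B}_{t}\left(\mathbf{I}_k - \beta \mathbf{\Psi}_t\right) - \beta\,\mathbf{E}, \qquad \mathbf{\Psi}_t \coloneqq \tfrac{1}{n}\sum_{i=1}^n \mathbf{w}_{t,i}\mathbf{w}_{t,i}^\top, \nonumber
\end{align}
where $\mathbf{E} = \tfrac1n\sum_i\big( \mathbf{B}_{\ast,\perp}^\top(\mathbf{\Sigma}_{t,i}^{out}-\mathbf{I}_d)\mathbf{B}_t\mathbf{w}_{t,i}\mathbf{w}_{t,i}^\top - \mathbf{B}_{\ast,\perp}^\top(\mathbf{\Sigma}_{t,i}^{out}-\mathbf{I}_d)\mathbf{B}_\ast\mathbf{w}_{\ast,t,i}\mathbf{w}_{t,i}^\top - \tfrac{2}{m_{out}}\mathbf{B}_{\ast,\perp}^\top(\mathbf{X}_{t,i}^{out})^\top\mathbf{z}_{t,i}^{out}\mathbf{w}_{t,i}^\top\big)$. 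Taking spectral norms gives $\|\mathbf{B}_{\ast,\perp}^\top\mathbf{B}_{t+1}\|_2 \le (1-\beta\lambda_{\min}(\mathbf{\Psi}_t))\|\mathbf{B}_{\ast,\perp}^\top\mathbf{B}_t\|_2 + \beta\|\mathbf{E}\|_2$, valid provided $\beta \le 1/\lambda_{\max}(\mathbf{\Psi}_t)$.

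Second, I would control the spectrum of $\mathbf{\Psi}_t$ via Lemma~\ref{lem:cond_app_exact}, which under $A_1(t)$, $A_3(t)$, and $A_5(t)$ (these supply $\|\mathbf{w}_t\|_2=O(\sqrt\alpha\eta_\ast)$, $\|\del_t\|_2\le\tfrac1{10}$, and the bound on $\dist_t$ needed for Lemma~\ref{lem:sigminE}) gives $\lambda_{\max}(\mathbf{\Psi}_t)=O(\alpha L_\ast^2)$ — so $\beta\le 1/\lambda_{\max}(\mathbf{\Psi}_t)$ holds for the prescribed $\beta$ — and $\lambda_{\min}(\mathbf{\Psi}_t) \ge 0.9\alpha E_0\mu_\ast^2 - O\big(\sqrt\alpha(\|\del_t\|_2\|\mathbf{w}_t\|_2\eta_\ast + \bar\delta_{m_{in},k}L_{\max}(L_{\max}+\sigma))\big)$. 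The lower bounds on $m_{in}$ (through $\delta_{m_{in},k},\bar\delta_{m_{in},k}$) make this correction at most $0.4\alpha E_0\mu_\ast^2$, so $1-\beta\lambda_{\min}(\mathbf{\Psi}_t)\le 1-0.5\beta\alpha E_0\mu_\ast^2$, which is the contraction factor in $A_4$. (I read the ``$\mu^2$'' in the statement as $\mu_\ast^2$, for consistency with the inductive hypothesis.)

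Third, I would bound $\|\mathbf{E}\|_2 \le \sqrt\alpha\zeta_4$. The structural point is that $\mathbf{B}_t$ and each $\mathbf{w}_{t,i}$ depend only on past iterates and the inner‑loop data $\{\mathcal{D}_{t,i}^{in}\}$, hence are independent of the outer‑loop data $\{\mathcal{D}_{t,i}^{out}\}$ driving $\mathbf{\Sigma}_{t,i}^{out}$ and $(\mathbf{X}_{t,i}^{out})^\top\mathbf{z}_{t,i}^{out}$. So I would first condition on a probability‑$(1-\tfrac1{\poly(n)})$ event for the inner‑loop data on which $\|\mathbf{B}_t\|_2\le 1.1/\sqrt\alpha$ (from $A_3(t)$) and $\max_i\|\mathbf{w}_{t,i}\|_2\le c\sqrt\alpha(L_{\max}+\sigma\delta_{m_{in},k})$ (from $A_1(t)$, $A_3(t)$, and the per‑task bounds \eqref{basicbounds}); on this event the three matrices are fixed. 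Then Lemma~\ref{lem:gen1}(1) applies to the two $(\mathbf{\Sigma}_{t,i}^{out}-\mathbf{I}_d)$ terms with $\mathbf{C}_{1,i}=\mathbf{B}_{\ast,\perp}$, so the relevant dimension is $(d-k)+k=d$ and one gets $\bar\delta_{m_{out},d}$ times $\max_i\|\mathbf{B}_t\mathbf{w}_{t,i}\mathbf{w}_{t,i}^\top\|_2\le\tfrac{1.1}{\sqrt\alpha}\max_i\|\mathbf{w}_{t,i}\|_2^2$ (resp.\ $L_{\max}\max_i\|\mathbf{w}_{t,i}\|_2$), while a Lemma~\ref{lem:gen1}(4)/(6)‑type bound with a rank‑one right factor $\mathbf{w}_{t,i}^\top$ handles the additive‑noise term, giving $\sigma\bar\delta_{m_{out},d-k}\max_i\|\mathbf{w}_{t,i}\|_2$. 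Integrating over the conditioning and a union bound produces $\|\mathbf{E}\|_2\le c\sqrt\alpha(L_{\max}+\sigma)(L_{\max}+\sigma\delta_{m_{in},k})\bar\delta_{m_{out},d}$, i.e.\ $\zeta_4=O((L_{\max}+\sigma)(L_{\max}+\sigma\delta_{m_{in},k})\bar\delta_{m_{out},d})$, with probability $1-\tfrac1{\poly(n)}$. Combining with the first two steps closes the lemma.

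The main obstacle is the bookkeeping in $\mathbf{E}$, and two points deserve care. First, the dimension: since $\mathbf{B}_{\ast,\perp}$ carries $d-k$ columns, the leading covariance‑error term enjoys no dimension reduction, producing a $\sqrt{d}$ (not $\sqrt{k}$) factor — this is precisely the term forcing $m_{out}=\tilde\Omega(dk/n)$, and it must be tracked exactly to match the stated $\zeta_4$. Second, the rank‑one right factors $\mathbf{w}_{t,i}^\top$ are random through the inner‑loop data; the conditioning argument resolves this, but one must verify that the uniform bound $\max_i\|\mathbf{w}_{t,i}\|_2\le c\sqrt\alpha(L_{\max}+\sigma\delta_{m_{in},k})$ holds on a $1-\tfrac1{\poly(n)}$ event (so the post‑conditioning matrices really are deterministic) and that the $\sigma\delta_{m_{in},k}$ pieces that surface are exactly those appearing in $\zeta_4$.
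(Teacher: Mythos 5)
Your proposal matches the paper's argument: both decompose $\mathbf{B}_{\ast,\perp}^\top\mathbf{B}_{t+1}$ into the contraction $\mathbf{B}_{\ast,\perp}^\top\mathbf{B}_t(\mathbf{I}_k-\beta\boldsymbol{\Psi}_t)$ plus finite-sample error terms (the paper's $\mathbf{E}_1,\mathbf{E}_2$ are exactly your $\mathbf{E}$ split), control $\boldsymbol{\Psi}_t$ via Lemma~\ref{lem:cond_app_exact}, and bound the error using Lemma~\ref{lem:gen1} with the uniform bound $\max_i\|\mathbf{w}_{t,i}\|_2=O(\sqrt\alpha(L_{\max}+\sigma\delta_{m_{in},k}))$. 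Your explicit conditioning-on-inner-loop-data justification is the right reading of how the paper invokes Lemma~\ref{lem:gen1} with random $\mathbf{w}_{t,i}$'s, and your reading of $\mu^2$ as $\mu_\ast^2$ (consistent with the inductive hypothesis $A_4$) is correct.
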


\begin{proof} 
Using \eqref{ttt4}, we have
\begin{align}
    \mathbf{\hat{B}}_{\ast,\perp}^\top \mathbf{B}_{t+1} &= \mathbf{\hat{B}}_{\ast,\perp}^\top\mathbf{B}_t \left(\mathbf{I}_k - \frac{\beta}{n}\sum_{i=1}^n \mathbf{w}_{t,i}\mathbf{w}_{t,i}^\top\right) \nonumber \\
    &\quad + \beta\underbrace{\frac{1}{n} \sum_{i=1}^n \mathbf{{B}}_{\ast,\perp}^\top\left(\mathbf{I}_d -\mathbf{\Sigma}_{t,i}^{out}\right)( \mathbf{B}_t  \mathbf{w}_{t,i} - \mathbf{{B}}_\ast \mathbf{w}_{\ast,t,i}) \mathbf{w}_{t,i}^\top}_{=:\mathbf{E}_1} + \beta\underbrace{\frac{1}{nm_{out}}\sum_{i=1}^n \mathbf{{B}}_{\ast,\perp}^\top (\mathbf{X}^{out}_{t,i})^\top \mathbf{z}^{out}_{t,i}\mathbf{w}_{t,i}^\top}_{=:\mathbf{E}_2} 
\end{align}
Next, we can use the concentration results in Lemma \ref{lem:gen1} to show that all of the following inequalities hold with probability at least $1 - \frac{1}{\poly(n)}$
\begin{align}
    \max_{i\in [n]} \|\mathbf{w}_{t,i}\|_2&\leq \|\del_t\|_2 \|\w\|_2 + c \sqrt{\alpha}L_{\max} + \delta_{m_{in},k}\|\w\|_2 + c \sqrt{\alpha}\sigma \delta_{m_{in},k} \leq c' \sqrt{\alpha}L_{\max}+ c \sqrt{\alpha}\sigma \delta_{m_{in},k} \nonumber \\
    \|\mathbf{E}_1\|_2 &\leq c L_{\max}\max_{i\in [n]} \|\mathbf{w}_{t,i}\|_2  \bar{\delta}_{m_{out},d}\\
     \|\mathbf{E}_2\|_2  &\leq c \sigma \max_{i\in [n]} \|\mathbf{w}_{t,i}\|_2  \bar{\delta}_{m_{out},d}
\end{align}
Thus we have 
\begin{align}
    \| \mathbf{\hat{B}}_{\ast,\perp}^\top \mathbf{{B}}_{t+1} \|_2 &\leq \bigg\| \mathbf{\hat{B}}_{\ast,\perp}^\top \mathbf{{B}}_{t}(\mathbf{I}_k - \frac{\beta}{n}\sum_{i=1}^n \mathbf{w}_{t,i}\mathbf{w}_{t,i}^\top) \bigg\|_2 + \beta \sqrt{\alpha} \zeta_4 \nonumber 
\end{align}
where $\zeta_4 = O( (L_{\max}+\sigma)(L_{\max}+ \sigma \delta_{m_{in},k})\bar{\delta}_{m_{out},d})$
with probability at least $1 -\frac{1}{\poly(n)}$. Next, recall from Lemma \ref{lem:contract_app_fs} that
\begin{align*}
     \sigma_{\max}\left(\frac{1}{n} \sum_{i=1}^n \mathbf{w}_{t,i}\mathbf{w}_{t,i}^\top \right)\leq L^2 &\coloneqq  2\left(\|\del_t\|_2\|\mathbf{w}_t\|_2 +  \sqrt{\alpha} L_\ast + {\delta}_{m_{in},k}(\|\w\|_2  + \sqrt{{\alpha}}L_{\max}  + \sqrt{{{\alpha}}}\sigma ) \right)^2 \\
  \sigma_{\min}\left(\frac{1}{n} \sum_{i=1}^n \mathbf{w}_{t,i}\mathbf{w}_{t,i}^\top \right)  \geq \mu^2 &\coloneqq  0.9 \alpha E_0 \mu_\ast^2 - 2.2 \sqrt{\alpha}\|\mathbf{w}_t\|_2 \|\del_t\|_2 \eta_\ast  \nonumber \\
  &\quad - 2 \|\del_t\|_2 \|\mathbf{w}_t\|_2\bar{\delta}_{m_{in},k}(\|\w\|_2  + \sqrt{{\alpha}}L_{\max}  + \sqrt{{{\alpha}}}\sigma ) \nonumber \\
   &\quad - 2.2 \sqrt{\alpha} \bar{\delta}_{m_{in},k}(\|\w\|_2 + \sqrt{{\alpha}}L_{\ast}  + \sqrt{{{\alpha}}}\sigma )L_{\max}  
 \end{align*}
 with probability at least $1 -\frac{1}{\poly(n)}$. Apply inductive hypotheses $A_1(t)$ and $A_3(t)$ to obtain
 \begin{align}
     \sigma_{\max}\left(\frac{1}{n} \sum_{i=1}^n \mathbf{w}_{t,i}\mathbf{w}_{t,i}^\top \right)&\leq  4 \alpha L_\ast^2 + 4 \alpha( L_{\max}+\sigma)^2{\delta}_{m_{in},k}^2 \leq 12 \alpha L_\ast^2\nonumber \end{align}
     by choice of $m_{in}= \Omega((k+\log(n))(L_{\max}+\sigma)^2)$ . This means that we have $\beta \leq  \sigma_{\max}\left(\frac{1}{n} \sum_{i=1}^n \mathbf{w}_{t,i}\mathbf{w}_{t,i}^\top \right)^{-1}$ since we have chosen $\beta = O(\alpha \kappa_\ast^{-4})$. Also, we have 
     \begin{align}
  \sigma_{\min}\left(\frac{1}{n} \sum_{i=1}^n \mathbf{w}_{t,i}\mathbf{w}_{t,i}^\top \right)  &\geq 0.8 \alpha E_0 \mu_\ast^2    
   - 2.3 {\alpha} L_{\max}( L_{\max}  + \sigma ) \bar{\delta}_{m_{in},k} \geq 0.5 \alpha E_0 \mu_\ast^2  \label{eqref}
 \end{align}
 where the last inequality follows since $m_{in}= \Omega\left({\tfrac{k^2}{n}(k\kappa_{\ast}^4 + \kappa_{\ast}^2 \sigma^2 \mu_\ast^{-2}} )\right)$, recalling that $L_{\max}\leq c \sqrt{k}L_\ast$.
 Thus, using the above and Weyl's inequality with $\beta \leq  \sigma_{\max}\left(\frac{1}{n} \sum_{i=1}^n \mathbf{w}_{t,i}\mathbf{w}_{t,i}^\top \right)^{-1}$, we obtain:
 \begin{align}
   \| \mathbf{\hat{B}}_{\ast,\perp}^\top \mathbf{{B}}_{t+1} \|_2 &\leq   \bigg\| \mathbf{\hat{B}}_{\ast,\perp}^\top \mathbf{{B}}_{t}\bigg\|_2(1 - 0.5 \beta \alpha E_0 \alpha^2) + \beta \sqrt{\alpha} \zeta_4 
 \end{align}
\end{proof}

\subsection{Exact ANIL}

\begin{lemma}[Exact ANIL FS representation concentration I]\label{lem:exactanil_rep1}
For Exact ANIL, consider any $t \in [T]$. With probability at least $ 1 - \frac{1}{\poly(n)}- \frac{1}{\poly(m_{in})} - c e^{-90k}$,
\begin{align}
    \|\mathbf{\hat{G}}_{\mathbf{B},t} - \mathbf{{G}}_{\mathbf{B},t}\|_2 &=\sqrt{\alpha}\zeta_{2,a},
\end{align}
where 
\begin{align}
    \zeta_{2,a}= O\Bigg(\bigg( \tfrac{1}{{m_{in}}}\bigg(&  \tfrac{(L_{\max}+\sigma) L_\ast}{\kappa_\ast^2}\bigg)+\tfrac{1}{\sqrt{m_{in}}}\bigg( L_{\max} (L_{\max}+\sigma)(\sqrt{k}+\sqrt{\log(n)})  \bigg)  \nonumber \\
&\quad +\tfrac{1}{\sqrt{m_{out}}}\bigg( L_{\max}(L_{\max}+\sigma)(\sqrt{k}+\sqrt{\log(n)})  \bigg) \nonumber \\
+\tfrac{1}{\sqrt{nm_{in}}}\bigg(&L_{\max} (L_{\max}+\sigma)(k \sqrt{d\log(nm_{in})} + k\log(nm_{in}) + \sqrt{d}\log^{1.5}(nm_{in}) +\log^2(nm_{in}))\nonumber \\
&\quad + \sigma^2 (\sqrt{kd} +  \sqrt{d}\log(nm_{in}) +\log^{1.5}(nm_{in}))  + L_{\max}(L_{\max}+\sigma)\sqrt{d}\bigg) \nonumber \\
+\tfrac{1}{\sqrt{nm_{out}}}\bigg(& L_{\max}(L_{\max} + \sigma)\sqrt{d} + \sigma^2 (\tfrac{\sqrt{d}}{\sqrt{m_{in}}} + \sqrt{k})\bigg) \bigg)\Bigg) \nonumber 
\end{align}
\end{lemma}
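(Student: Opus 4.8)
The plan is to expand both $\mathbf{\hat{G}}_{\mathbf{B},t}=\tfrac1n\sum_{i=1}^n\nabla_{\mathbf{B}}\hat{F}_{t,i}(\mathbf{B}_t,\mathbf{w}_t)$ and $\mathbf{{G}}_{\mathbf{B},t}=\tfrac1n\sum_{i=1}^n\nabla_{\mathbf{B}}F_{t,i}(\mathbf{B}_t,\mathbf{w}_t)$ monomial by monomial, using the closed form of $\nabla_{\mathbf{B}}\hat{F}_{t,i}$ (in terms of $\mathbf{\hat{v}}_{t,i}$, $\mathbf{\Sigma}_{t,i}^{in}$, $\mathbf{\Sigma}_{t,i}^{out}$, $(\mathbf{X}_{t,i}^{in})^\top\mathbf{z}_{t,i}^{in}$ and $(\mathbf{X}_{t,i}^{out})^\top\mathbf{z}_{t,i}^{out}$) given above, and the closed form of $\nabla_{\mathbf{B}}F_{t,i}$ from the infinite-sample section. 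Substituting $\mathbf{\hat{v}}_{t,i}=\mathbf{X}_{t,i}^{out}\deldin(\mathbf{B}_t\mathbf{w}_t-\mathbf{B}_\ast\mathbf{w}_{\ast,t,i})+\tfrac{\alpha}{m_{in}}\mathbf{X}_{t,i}^{out}\mathbf{B}_t\mathbf{B}_t^\top(\mathbf{X}_{t,i}^{in})^\top\mathbf{z}_{t,i}^{in}-\mathbf{z}_{t,i}^{out}$ produces a bounded number of monomials, each a product of empirical-covariance and noise factors flanked by fixed matrices built from $\mathbf{B}_t,\mathbf{B}_\ast,\mathbf{w}_t,\mathbf{w}_{\ast,t,i}$. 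Writing $\mathbf{\hat{G}}_{\mathbf{B},t}-\mathbf{{G}}_{\mathbf{B},t}=\sum_\ell\mathbf{E}_\ell$, where each $\mathbf{E}_\ell=\tfrac1n\sum_{i=1}^n(\text{finite-sample monomial}_\ell-\text{its population mean})$, reduces the lemma to bounding each $\|\mathbf{E}_\ell\|_2$ and union bounding.

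\textbf{Bounding the $\mathbf{E}_\ell$.} First I would classify the monomials. Those in which every empirical-covariance or $\mathbf{X}\mathbf{z}$ factor is multiplied on both sides by a low-dimensional matrix (a product of $\mathbf{B}_t$, $\mathbf{B}_\ast$, $\mathbf{w}_t$, $\mathbf{w}_{\ast,t,i}$) are handled directly by Lemma~\ref{lem:gen1}, which supplies both a local rate $\delta_{m,d}$ (for factors containing a task-specific vector that is not averaged out) and a global rate $\bar\delta_{m,d}$ (for factors averaged over the $n$ tasks). To apply it I would set each $\mathbf{C}_{\ell,i}$ to the appropriate flanking matrix and bound its operator norm via the inductive hypotheses of Theorem~\ref{thm:anil_fs_app}: $A_3(t)$ gives $\|\mathbf{B}_t\|_2\le\sqrt{1.1/\alpha}$ and $\|\del_t\|_2\le\tfrac1{10}$, $A_1(t)$ gives $\|\mathbf{w}_t\|_2=O(\sqrt\alpha\,\eta_\ast)$, Assumption~\ref{assump:tasks_main} gives $\max_i\|\mathbf{w}_{\ast,t,i}\|_2\le L_{\max}\le c\sqrt k L_\ast$, and the noise has scale $\sigma$. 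The remaining monomials are the hard ones: those whose leftmost empirical covariance ($\mathbf{\Sigma}_{t,i}^{out}$ coming from the leading $(\mathbf{X}_{t,i}^{out})^\top\mathbf{X}_{t,i}^{out}$ in $\mathbf{\hat{v}}_{t,i}$, or an $(\mathbf{X}_{t,i}^{in})^\top\mathbf{z}_{t,i}^{in}$ block) carries no dimensionality reduction on its left and is itself multiplied by a second empirical covariance, producing a fourth- or sixth-order data product. For these I would invoke Lemmas~\ref{lem:gen2}, \ref{lem:fourth}, \ref{lem:fourth1}, and~\ref{lem:fourth4}, which are exactly tailored to such products: they average over $n$, truncate each random vector at a $\Theta(\sqrt{\log(nm)})$ threshold, lift and symmetrize the summands, apply the truncated low-rank concentration of Theorem~\ref{thm:magen}, and control the truncation bias through $C$-L4-L2 hypercontractivity (Lemma~\ref{lem:L4L2}, valid here by Assumption~\ref{assump:data}); this is why they incur the extra $\tfrac1{m_{in}}$-type bias terms and the $\tilde\delta$ rate of Lemma~\ref{lem:gen2}.

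\textbf{Assembly.} After bounding all $\mathbf{E}_\ell$, I would union bound over the $O(1)$ monomials (and over $i\in[n]$ where a per-task event is used), collecting the failure probability $1-\tfrac1{\poly(n)}-\tfrac1{\poly(m_{in})}-ce^{-90k}$. Then I would simplify: since $\alpha\le c'/(\sqrt k L_\ast+\sigma)$ and $L_{\max}\le c\sqrt k L_\ast$, the higher-order $\alpha$-weighted contributions are dominated by the leading ones, and a common $\sqrt\alpha$ factors out, matching the stated form. The surviving terms are precisely $\zeta_{2,a}$: the $\tfrac1{m_{in}}$ piece is the hypercontractivity bias from the fourth-order noise products (Lemma~\ref{lem:fourth4}); the $\tfrac1{\sqrt{m_{in}}}$ and $\tfrac1{\sqrt{m_{out}}}$ pieces come from local concentration of products containing an un-averaged $\mathbf{w}_{\ast,t,i}$-type factor; and the $\tfrac1{\sqrt{nm_{in}}}$, $\tfrac1{\sqrt{nm_{out}}}$ pieces come from the global (over-$n$) rates, including the $\tilde\delta$ terms of Lemma~\ref{lem:gen2}. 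The $\sigma$- and $\sigma^2$-dependent summands arise from the monomials linear or quadratic in $\mathbf{z}_{t,i}^{in}$ or $\mathbf{z}_{t,i}^{out}$, handled by Lemmas~\ref{lem:fourth1} and~\ref{lem:fourth4}.

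\textbf{Main obstacle.} The hard part is the fourth- and sixth-order data products in the Exact ANIL representation gradient whose leftmost empirical covariance has full dimension $d$; treating each such factor separately would force $m=\Omega(d)$ samples per task, violating the $\tilde O(k^3d/n+k^3)$ budget. The resolution, already encapsulated in Lemmas~\ref{lem:fourth}--\ref{lem:fourth4}, is to gain the extra $\tfrac1{\sqrt n}$ by averaging over tasks and to push the concentration through the truncated version of Theorem~\ref{thm:magen}, paying a controlled $\tfrac1m$-scale bias via hypercontractivity. The remaining effort is essentially bookkeeping: correctly enumerating every monomial of $\nabla_{\mathbf{B}}\hat{F}_{t,i}$, deciding which factors admit dimensionality reduction, invoking the matching concentration lemma, and tracking constants so that the final bound collapses to the stated $\zeta_{2,a}$.
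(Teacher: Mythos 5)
Your proposal matches the paper's own proof: the paper likewise expands $\mathbf{\hat{G}}_{\mathbf{B},t}-\mathbf{{G}}_{\mathbf{B},t}$ into a finite collection of data/noise monomials (five groups $\mathbf{E}_1,\dots,\mathbf{E}_5$, each further split into sub-terms), classifies them by whether the leftmost empirical covariance is flanked by a dimensionality-reducing factor, applies Lemma~\ref{lem:gen1} to the reducible ones and Lemma~\ref{lem:gen2} (hence Lemmas~\ref{lem:fourth}--\ref{lem:fourth4} and the truncated low-rank concentration of Theorem~\ref{thm:magen}, with bias controlled by hypercontractivity) to the full-$d$ higher-order ones, and union bounds. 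The one small misattribution is describing the $\tfrac{1}{m_{in}}$ bias as coming from fourth-order \emph{noise} products via Lemma~\ref{lem:fourth4}; in the paper's proof it arises from the fourth-order \emph{data} products in $\mathbf{\Sigma}^{in}_{t,i}\mathbf{B}_t\mathbf{B}_t^\top\mathbf{\Sigma}^{out}_{t,i}\mathbf{B}_t\mathbf{B}_t^\top\mathbf{\Sigma}^{in}_{t,i}$-type terms (bounded via $\mathcal{U}_1$ in Lemma~\ref{lem:gen2}, which calls Lemma~\ref{lem:fourth}), but this does not affect the validity of the approach.
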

\begin{proof}
Let $\mathbf{q}_{t,i}\coloneqq \b \w - \mathbf{B}_\ast \mathbf{w}_{\ast,t,i}$.
First recall that $\mathbf{\hat{G}}_{\mathbf{B},t} = \frac{1}{n}\sum_{i=1}^n\nabla_{\mathbf{B}} \hat{F}_{t,i}(\mathbf{B}_t, \mathbf{w}_t)$, where
\begin{align}
 \nabla_{\mathbf{B}} \hat{F}_{t,i}(\mathbf{B}_t, \mathbf{w}_t)&= (\deldin)^\top \tfrac{1}{m_{out}} (\mathbf{X}_{t,i}^{out})^\top \mathbf{\hat{v}}_{t,i}\mathbf{w}_t^\top - \alpha \tfrac{1}{m_{out}} (\mathbf{X}_{t,i}^{out})^\top \mathbf{\hat{v}}_{t,i}\mathbf{q}_{t,i}^\top \sin \mathbf{B}_t    \nonumber \\
 &\quad - \alpha \sin \mathbf{q}_{t,i} \mathbf{\hat{v}}_{t,i}^\top \tfrac{1}{m_{out}} \mathbf{X}_{t,i}^{out}\b + \tfrac{\alpha^2}{m_{in}m_{out}}(\mathbf{X}_{t,i}^{out})^\top \mathbf{\hat{v}}_{t,i}(\mathbf{z}_{t,i}^{in})^
 \top \mathbf{X}_{t,i}^{in} \mathbf{B}_t  \nonumber \\
 &\quad  +\tfrac{\alpha^2}{m_{in} m_{out}}(\mathbf{X}_{t,i}^{in})^\top\mathbf{z}_{t,i}^{in} \mathbf{\hat{v}}_{t,i}^\top\mathbf{X}_{t,i}^{out} \mathbf{B}_t \nonumber
\end{align}
where $\mathbf{\hat{v}}_{t,i} = \mathbf{X}_{t,i}^{out}\deldin\mathbf{q}_{t,i}
 + \tfrac{\alpha}{m_{in}} \mathbf{X}_{t,i}^{out} \b\b^\top \mathbf{X}_{t,i}^{in}\mathbf{z}_{t,i}^{in} - \zout$.
Also, $\mathbf{{G}}_{\mathbf{B},t} = \frac{1}{n}\sum_{i=1}^n\nabla_{\mathbf{B}} {F}_{t,i}(\mathbf{B}_t, \mathbf{w}_t)$, where
\begin{align}
 \nabla_{\mathbf{B}}{F}_{t,i}(\mathbf{B}_t, \mathbf{w}_t) &=\deld_t  \mathbf{v}_{t,i}\mathbf{w}_t^\top - \alpha  \mathbf{v}_{t,i}\mathbf{q}_{t,i}^\top \b - \alpha  \mathbf{q}_{t,i} \mathbf{v}_{t,i}^\top \b  \nonumber
\end{align}
and $\mathbf{v}_{t,i}=\deld_t \mathbf{q}_{t,i}$. Thus, \begin{align}
    \|&\mathbf{\hat{G}}_{\mathbf{B},t} - \mathbf{{G}}_{\mathbf{B},t}\|_2 \nonumber \\
   &\leq \bigg\|\underbrace{\frac{1}{n}\sum_{i=1}^n (\deldin)^\top \tfrac{1}{m_{out}} \mathbf{X}_{t,i}^\top \mathbf{\hat{v}}_{t,i}\mathbf{w}_t^\top- \deld_t  \mathbf{v}_{t,i}\mathbf{w}_t^\top}_{=: \mathbf{E}_1} \bigg\|_2  \nonumber \\
   &\quad + \alpha \bigg\|\underbrace{\frac{1}{n}\sum_{i=1}^n  \tfrac{1}{m_{out}} (\mathbf{X}_{t,i}^{out})^\top \mathbf{\hat{v}}_{t,i}\mathbf{q}_{t,i}^\top \sin \mathbf{B}_t -  \mathbf{v}_{t,i}\mathbf{q}_{t,i}^\top \b}_{=: \mathbf{E}_2}   \bigg\|_2  \nonumber \\
    &\quad + \alpha \bigg\|\underbrace{\frac{1}{n}\sum_{i=1}^n  \sin \mathbf{q}_{t,i} \mathbf{\hat{v}}_{t,i}^\top \tfrac{1}{m_{out}} \mathbf{X}_{t,i}^{out}\b   -   \mathbf{q}_{t,i} \mathbf{v}_{t,i}^\top \b }_{=: \mathbf{E}_3}   \bigg\|_2 +\alpha \bigg\|\underbrace{\frac{1}{nm_{in}m_{out}}\sum_{i=1}^n  (\mathbf{X}_{t,i}^{out})^\top \mathbf{\hat{v}}_{t,i}(\mathbf{z}_{t,i}^{in})^\top \mathbf{X}_{t,i}^{in} \mathbf{B}_t}_{=: \mathbf{E}_4}   \bigg\|_2 \nonumber \\
 &\quad + {\alpha}\bigg\| \underbrace{\frac{1}{nm_{in}m_{out}}\sum_{i=1}^n (\mathbf{X}_{t,i}^{in})^\top\mathbf{z}_{t,i}^{in} \mathbf{\hat{v}}_{t,i}^\top\mathbf{X}_{t,i}^{out} \mathbf{B}_t}_{=: \mathbf{E}_5}  \bigg\|_2
\end{align}

We will further decompose each of the above terms into terms for which we can apply concentration results from Lemmas \ref{lem:gen1} and \ref{lem:gen2}.
First we bound $\|\mathbf{E}_1\|_2$. We have
\begin{align}
    \|\mathbf{E}_1\|_2&= \bigg\|{\frac{1}{n}\sum_{i=1}^n (\deldin)^\top \tfrac{1}{m_{out}} \mathbf{X}_{t,i}^\top \mathbf{\hat{v}}_{t,i}\mathbf{w}_t^\top- \deld_t  \mathbf{v}_{t,i}\mathbf{w}_t^\top} \bigg\|_2 \nonumber \\
    &= \bigg\|{\frac{1}{n}\sum_{i=1}^n (\deldin)^\top \sout \deldin \mathbf{q}_{t,i}
    \mathbf{w}_t^\top- \deld_t \deld_t \mathbf{q}_{t,i}\mathbf{w}_t^\top} \bigg\|_2 \nonumber \\
    &\quad+\bigg\|{\frac{1}{n}\sum_{i=1}^n \frac{\alpha}{m_{in}}(\deldin)^\top \sout \b \b^\top (\mathbf{X}_{t,i}^{in})^\top\mathbf{z}_{t,i}^{in}
    \mathbf{w}_t^\top} \bigg\|_2 +
     \bigg\|\frac{1}{n}\sum_{i=1}^n (\deldin)^\top \tfrac{1}{m_{out}}(\mathbf{X}_{t,i}^{out})^\top\mathbf{z}_{t,i}^{out}\mathbf{w}_t^\top \bigg\|_2
    \nonumber \\
    &\leq \bigg\|\underbrace{\frac{1}{n}\sum_{i=1}^n \sout \mathbf{q}_{t,i}
    \mathbf{w}_t^\top-  \mathbf{q}_{t,i}\mathbf{w}_t^\top}_{=:\mathbf{E}_{1,1}} \bigg\|_2 + \bigg\|\underbrace{\frac{1}{n}\sum_{i=1}^n \alpha \sin \b \b^\top \sout \mathbf{q}_{t,i}
    \mathbf{w}_t^\top-  \alpha \b\b^\top\mathbf{q}_{t,i}\mathbf{w}_t^\top}_{=:\mathbf{E}_{1,2}} \bigg\|_2 \nonumber \\ 
    &\quad + \bigg\|\underbrace{\frac{1}{n}\sum_{i=1}^n \alpha  \sout \b \b^\top \sin \mathbf{q}_{t,i}
    \mathbf{w}_t^\top-  \alpha \b\b^\top\mathbf{q}_{t,i}\mathbf{w}_t^\top}_{=:\mathbf{E}_{1,3}} \bigg\|_2   \nonumber \\
    &\quad +  \bigg\|\underbrace{\frac{1}{n}\sum_{i=1}^n \alpha^2 \sin\b \b^\top  \sout \b \b^\top \sin \mathbf{q}_{t,i}
    \mathbf{w}_t^\top-  \alpha^2 \b\b^\top \b\b^\top\mathbf{q}_{t,i}\mathbf{w}_t^\top}_{=:\mathbf{E}_{1,4}} \bigg\|_2 \nonumber \\
    &\quad+\bigg\|\underbrace{\frac{1}{n}\sum_{i=1}^n \frac{\alpha}{m_{in}} \sout \b \b^\top (\mathbf{X}_{t,i}^{in})^\top\mathbf{z}_{t,i}^{in}
    \mathbf{w}_t^\top}_{=:\mathbf{E}_{1,5}} \bigg\|_2 \nonumber \\
    &\quad +\bigg\| \underbrace{\frac{1}{n}\sum_{i=1}^n \frac{\alpha^2}{m_{in}}\sin \b \b^\top \sout \b \b^\top (\mathbf{X}_{t,i}^{in})^\top\mathbf{z}_{t,i}^{in}
    \mathbf{w}_t^\top }_{=:\mathbf{E}_{1,6}} \bigg\|_2 \nonumber \\
    &\quad +
     \bigg\|\underbrace{\frac{1}{n}\sum_{i=1}^n  \tfrac{1}{m_{out}}(\mathbf{X}_{t,i}^{out})^\top\mathbf{z}_{t,i}^{out}\mathbf{w}_t^\top}_{=:\mathbf{E}_{1,7}} \bigg\|_2+
    \alpha \bigg\|\underbrace{\frac{1}{n}\sum_{i=1}^n \sin \b \b^\top \tfrac{1}{m_{out}}(\mathbf{X}_{t,i}^{out})^\top\mathbf{z}_{t,i}^{out}\mathbf{w}_t^\top}_{=:\mathbf{E}_{1,8}} \bigg\|_2
    \nonumber 
\end{align}
    Note that after factoring out trailing $\mathbf{w}_t$'s where necessary, each of the above matrices is in the form that is bounded in Lemma \ref{lem:gen1} or Lemma \ref{lem:gen2}. We apply the bounds from those lemmas and use $\alpha \|\b\|_2^2 = O(1)$, $\|\mathbf{w}_t\|_2 = O(\sqrt{\alpha}\min(1, \eta_\ast^2/\mu_\ast^2)\eta_\ast)$, and $\max_{i\in [n]}\|\mathbf{q}_{t,i}\|_2 = O(L_{\max})$ to obtain that each of the following bounds hold with probability at least $1 - \frac{1}{\poly(n)} - \frac{1}{\poly(m_{in})} - c'e^{-90k}$, for some absolute constants $c,c'$.
\begin{align}
    \|\mathbf{E}_{1,1}\|_2 &\leq  c\sqrt{\alpha} \tfrac{L_{\max} L_\ast}{\kappa_\ast^2} \bar{\delta}_{m_{out},d} \nonumber \\
    \|\mathbf{E}_{1,2}\|_2 +  \|\mathbf{E}_{1,3}\|_2 &\leq c\sqrt{\alpha} \tfrac{L_{\max} L_\ast}{\kappa_\ast^2}(\bar{\delta}_{m_{out},d}+\bar{\delta}_{m_{in},d}) \nonumber \\
     \|\mathbf{E}_{1,4}\|_2 &\leq  c\sqrt{\alpha} \tfrac{L_{\max} L_\ast}{\kappa_\ast^2}\left( \tfrac{k\sqrt{d\log(nm_{in})}  + \sqrt{d}\log^{1.5}(nm_{in})+  \log^{2}(nm_{in})}{\sqrt{nm_{in}}} + \tfrac{1+C^2k}{m_{in}}+ \bar{\delta}_{m_{out},k}\right) \nonumber \\
     \|\mathbf{E}_{1,5}\|_2 &\leq c\sqrt{\alpha} \tfrac{\sigma L_\ast}{\kappa_\ast^2}\delta_{m_{in},k} \nonumber \\
     \|\mathbf{E}_{1,6}\|_2 &\leq c\sqrt{\alpha} \tfrac{\sigma L_\ast}{\kappa_\ast^2}\left( \tfrac{k\sqrt{d\log(nm_{in})}  + \sqrt{d}\log^{1.5}(nm_{in})+  \log^{2}(nm_{in})}{\sqrt{nm_{in}}} + \bar{\delta}_{m_{out},k}\right) \nonumber \\
     \|\mathbf{E}_{1,7}\|_2  &\leq  c\sqrt{\alpha} \tfrac{\sigma L_\ast}{\kappa_\ast^2}\bar{\delta}_{m_{out}, d} \nonumber \\
     \|\mathbf{E}_{1,8}\|_2 &\leq  c\sqrt{\alpha} \tfrac{\sigma L_\ast}{\kappa_\ast^2}{\delta}_{m_{out}, k}
\end{align}

For $\|\mathbf{E}_2\|_2$, we have
\begin{align}
   \| \mathbf{E}_2\|_2 & \leq \bigg\|{\frac{1}{n}\sum_{i=1}^n  \sout \deldin \mathbf{q}_{t,i} \mathbf{q}_{t,i}^\top \sin \mathbf{B}_t -  \deld_t \mathbf{q}_{t,i}\mathbf{q}_{t,i}^\top \b}   \bigg\|_2   \nonumber \\
   &\quad + \bigg\| {\frac{1}{n}\sum_{i=1}^n  \sout \tfrac{\alpha}{m_{in}}\b \b^\top (\mathbf{X}_{t,i}^{in})^\top\mathbf{z}_{t,i}^{in}}\mathbf{q}_{t,i}^\top \sin \b \bigg\|_2 + \bigg\|\frac{1}{n}\sum_{i=1}^n \tfrac{1}{m_{out}}(\mathbf{X}_{t,i}^{out})^\top \mathbf{z}_{t,i}^{out} \mathbf{q}_{t,i}^\top \sin \b \bigg\|_2 \nonumber \\
   &\leq \bigg\|\underbrace{\frac{1}{n}\sum_{i=1}^n  \sout \mathbf{q}_{t,i} \mathbf{q}_{t,i}^\top \sin \mathbf{B}_t -   \mathbf{q}_{t,i}\mathbf{q}_{t,i}^\top \b }_{=:\mathbf{E}_{2,1}}  \bigg\|_2\nonumber 
   \\
   &\quad +\alpha\bigg\|\underbrace{{\frac{1}{n}\sum_{i=1}^n  \sout \b \b^\top \sin \mathbf{q}_{t,i} \mathbf{q}_{t,i}^\top \sin \mathbf{B}_t -  \b \b^\top \mathbf{q}_{t,i}\mathbf{q}_{t,i}^\top \b}}_{=:\mathbf{E}_{2,2}}   \bigg\|_2\nonumber \\
   &\quad   + \alpha\bigg\| \underbrace{{\frac{1}{n}\sum_{i=1}^n  \sout \b \b^\top \tfrac{1}{m_{in}}(\mathbf{X}_{t,i}^{in})^{\top}\mathbf{z}_{t,i}^{in}}\mathbf{q}_{t,i}^\top \sin \b}_{=:\mathbf{E}_{2,3}} \bigg\|_2 + \bigg\|\underbrace{\frac{1}{n}\sum_{i=1}^n \tfrac{1}{m_{out}}(\mathbf{X}_{t,i}^{out})^\top \mathbf{z}_{t,i}^{out} \mathbf{q}_{t,i}^\top \sin \b }_{=:\mathbf{E}_{2,4}} \bigg\|_2 \nonumber 
\end{align}
As before, we apply the bounds from Lemmas \ref{lem:gen1} and \ref{lem:gen2} and use $\alpha \|\b\|_2^2 = O(1)$, $\|\mathbf{w}_t\|_2 = O(\sqrt{\alpha}\eta_\ast/\kappa_\ast^2)$, and $\max_{i\in [n]}\|\mathbf{q}_{t,i}\|_2 = O(L_{\max})$ to obtain that each of the following bounds hold  with probability at least $1 - \frac{1}{\poly(n)} - \frac{1}{\poly(m_{in})} - c'e^{-90k}$, for some absolute constants $c,c'$.
\begin{align}
    \|\mathbf{E}_{2,1}\|_2 &\leq  \tfrac{c L_{\max}^2}{\sqrt{\alpha}} ( \bar{\delta}_{m_{out},d}+ {\delta}_{m_{in},k}) \nonumber \\
    \|\mathbf{E}_{2,2}\|_2  &\leq \tfrac{c L_{\max}^2}{\sqrt{\alpha}} ( \bar{\delta}_{m_{out},d}+ {\delta}_{m_{in},k}) \nonumber \\
     \|\mathbf{E}_{2,3}\|_2 &\leq  \tfrac{c L_{\max}\sigma}{\sqrt{\alpha}} {\delta}_{m_{in},k}  \nonumber \\
     \|\mathbf{E}_{2,4}\|_2 &\leq \tfrac{c L_{\max}\sigma}{\sqrt{\alpha}} \bar{\delta}_{m_{out},d} \nonumber
\end{align}







For $\|\mathbf{E}_3\|_2$, we have
\begin{align}
   \| \mathbf{E}_3\|_2 & \leq \bigg\|{\frac{1}{n}\sum_{i=1}^n  \sin  \mathbf{q}_{t,i} \mathbf{q}_{t,i}^\top (\deldin)^\top\sout \mathbf{B}_t -   \mathbf{q}_{t,i}\mathbf{q}_{t,i}^\top \deld_t\b}   \bigg\|_2 \nonumber \\
   &\quad + \bigg\| {\frac{1}{n}\sum_{i=1}^n  \sin \mathbf{q}_{t,i}\tfrac{\alpha}{m_{in}}  (\mathbf{z}_{t,i}^{in}})^\top(\mathbf{X}_{t,i}^{in})
   \b \b^\top \sout \b \bigg\|_2 \nonumber \\
    &\quad + \bigg\|\frac{1}{n}\sum_{i=1}^n \sin \mathbf{q}_{t,i} \tfrac{1}{m_{out}} (\mathbf{z}_{t,i}^{out})^\top\mathbf{X}_{t,i}^{out}  \b \bigg\|_2 \nonumber \\
   &\leq \bigg\|\underbrace{\frac{1}{n}\sum_{i=1}^n  \sin  \mathbf{q}_{t,i} \mathbf{q}_{t,i}^\top \sout \mathbf{B}_t -   \mathbf{q}_{t,i}\mathbf{q}_{t,i}^\top \b} _{=:\mathbf{E}_{3,1}}  \bigg\|_2\nonumber \\
   &\quad +\bigg\|\underbrace{\frac{\alpha}{n}\sum_{i=1}^n  \sin  \mathbf{q}_{t,i} \mathbf{q}_{t,i}^\top \sin \b \b^\top \sout \mathbf{B}_t -   \mathbf{q}_{t,i}\mathbf{q}_{t,i}^\top \deld_t\b} _{=:\mathbf{E}_{3,2}}   \bigg\|_2\nonumber \\
   &\quad   + \bigg\| \underbrace{{\frac{1}{n}\sum_{i=1}^n  \sin \mathbf{q}_{t,i}\tfrac{\alpha}{m_{in}}  (\mathbf{z}_{t,i}^{in}})^\top(\mathbf{X}_{t,i}^{in})
   \b \b^\top \sout \b }_{=:\mathbf{E}_{3,3}} \bigg\|_2 + \bigg\|\underbrace{\frac{1}{n}\sum_{i=1}^n \sin \mathbf{q}_{t,i} \tfrac{1}{m_{out}} (\mathbf{z}_{t,i}^{out})^\top \mathbf{X}_{t,i}^{out} \b }_{=:\mathbf{E}_{3,4}} \bigg\|_2 \nonumber
\end{align}
Each term is bounded as follows with probability at least $1 - \frac{1}{\poly(n)} - \frac{1}{\poly(m_{in})} - c'e^{-90k}$, for some absolute constants $c,c'$.
\begin{align}
    \|\mathbf{E}_{3,1}\|_2 &\leq  \tfrac{c L_{\max}^2}{\sqrt{\alpha}} (\bar{\delta}_{m_{in},d}+{\delta}_{m_{out},k}) \nonumber \\
    \|\mathbf{E}_{3,2}\|_2  &\leq \tfrac{c L_{\max}^2}{\sqrt{\alpha}}\left( \tfrac{\sqrt{kd\log(nm_{in})}+ \sqrt{k}\log(nm_{in})  + \sqrt{d}\log^{1.5}(nm_{in})+  \log^{2}(nm_{in})}{\sqrt{n m_{in}}}    + \tfrac{\sqrt{k}}{\sqrt{n m_{out}}}\right) \nonumber \\
     \|\mathbf{E}_{3,3}\|_2 &\leq \tfrac{c L_{\max}\sigma }{\sqrt{\alpha}}\left( \tfrac{k\sqrt{d\log(nm_{in})} + k\log(nm_{in}) + \sqrt{d}\log^{1.5}(nm_{in})+  \log^{2}(nm_{in})}{\sqrt{n m_{in}}}  +   \tfrac{\sqrt{k}}{\sqrt{n m_{out}}} \right) \nonumber \\
     \|\mathbf{E}_{3,4}\|_2 &\leq \tfrac{c L_{\max}\sigma}{\sqrt{\alpha}} {\delta}_{m_{out},k} \nonumber
\end{align}





For $\|\mathbf{E}_4\|_2$, we have
\begin{align}
   \| \mathbf{E}_4\|_2 & \leq \bigg\|{\frac{1}{nm_{in}}\sum_{i=1}^n  \sout \deldin \mathbf{q}_{t,i} (\mathbf{z}_{t,i}^{in})^\top \mathbf{X}_{t,i}^{in} \mathbf{B}_t}   \bigg\|_2  + \alpha \bigg\| {\frac{1}{n}\sum_{i=1}^n  \sout   \b \b^\top \tfrac{1}{m_{in}^2}(\mathbf{X}_{t,i}^{in})^\top\mathbf{z}_{t,i}^{in}(\mathbf{z}_{t,i}^{in}})^\top\mathbf{X}_{t,i}^{in}
   \b \bigg\|_2 \nonumber \\
    &\quad + \bigg\|\frac{1}{nm_{in}m_{out}}\sum_{i=1}^n  (\mathbf{X}_{t,i}^{out})^\top \mathbf{z}_{t,i}^{out}(\mathbf{z}_{t,i}^{in})^\top \mathbf{X}_{t,i}^{in} \mathbf{B}_t\bigg\|_2 \nonumber \\
   &\leq \bigg\|\underbrace{{\frac{1}{nm_{in}}\sum_{i=1}^n  \sout \mathbf{q}_{t,i} (\mathbf{z}_{t,i}^{in})^\top \mathbf{X}_{t,i}^{in} \mathbf{B}_t}} _{=:\mathbf{E}_{4,1}}  \bigg\|_2+\alpha\bigg\|\underbrace{{\frac{1}{nm_{in}}\sum_{i=1}^n  \sout \b \b^\top \sin \mathbf{q}_{t,i} (\mathbf{z}_{t,i}^{in})^\top \mathbf{X}_{t,i}^{in} \mathbf{B}_t}} _{=:\mathbf{E}_{4,2}}   \bigg\|_2\nonumber \\
   &\quad   + \alpha\bigg\| \underbrace{{\frac{1}{n}\sum_{i=1}^n  \sout   \b \b^\top \tfrac{1}{m_{in}^2}(\mathbf{X}_{t,i}^{in})^\top\mathbf{z}_{t,i}^{in}(\mathbf{z}_{t,i}^{in}})^\top\mathbf{X}_{t,i}^{in}
   \b }_{=:\mathbf{E}_{4,3}} \bigg\|_2 \nonumber \\
   &\quad + \bigg\|\underbrace{\frac{1}{nm_{in}m_{out}}\sum_{i=1}^n  (\mathbf{X}_{t,i}^{out})^\top \mathbf{z}_{t,i}^{out}(\mathbf{z}_{t,i}^{in})^\top \mathbf{X}_{t,i}^{in} \mathbf{B}_t}_{=:\mathbf{E}_{4,4}} \bigg\|_2 \nonumber
\end{align}
Each term is bounded as follows with probability at least $1 - \frac{1}{\poly(n)} - \frac{1}{\poly(m_{in})} - c'e^{-90k}$, for some absolute constants $c,c'$.
\begin{align}
    \|\mathbf{E}_{4,1}\|_2 &\leq  \tfrac{c L_{\max} \sigma}{\sqrt{\alpha}} {\delta}_{m_{in},k} \nonumber \\
    \|\mathbf{E}_{4,2}\|_2  &\leq \tfrac{c L_{\max} \sigma}{\sqrt{\alpha}} {\delta}_{m_{in},k} \nonumber \\
     \|\mathbf{E}_{4,3}\|_2 &\leq \tfrac{c \sigma^2 }{\sqrt{\alpha}}{\delta}^2_{m_{in},k}  \nonumber \\
     \|\mathbf{E}_{4,4}\|_2 &\leq \tfrac{c \sigma^2}{\sqrt{\alpha}} \bar{\delta}_{m_{out},d} {\delta}_{m_{in},k} \nonumber
\end{align}
For $\|\mathbf{E}_5\|_2$, we have 
\begin{align}
   \| \mathbf{E}_5\|_2 
   &\leq \bigg\|\underbrace{{\frac{1}{nm_{in}}\sum_{i=1}^n    (\mathbf{X}_{t,i}^{in})^\top \mathbf{z}_{t,i}^{in}\mathbf{q}_{t,i}^\top \sout \mathbf{B}_t}}_{=:\mathbf{E}_{5,1}}  \bigg\|_2+\bigg\|\underbrace{{\frac{\alpha}{nm_{in}}\sum_{i=1}^n     (\mathbf{X}_{t,i}^{in})^\top\mathbf{z}_{t,i}^{in}\mathbf{q}_{t,i}^\top \sin \b \b^\top \sout \mathbf{B}_t}}_{=:\mathbf{E}_{5,2}}   \bigg\|_2\nonumber \\
   &\quad   + \bigg\| \underbrace{{\frac{\alpha}{nm_{in}^2}\sum_{i=1}^n     (\mathbf{X}_{t,i}^{in})^\top\mathbf{z}_{t,i}^{in}}(\mathbf{z}_{t,i}^{in})^\top\mathbf{X}_{t,i}^{in}  \b \b^\top \sout 
   \b }_{=:\mathbf{E}_{5,3}} \bigg\|_2 + \bigg\|\underbrace{\frac{1}{nm_{in}m_{out}}\sum_{i=1}^n  (\mathbf{X}_{t,i}^{in})^\top \mathbf{z}_{t,i}^{in}(\mathbf{z}_{t,i}^{out})^\top \mathbf{X}_{t,i}^{out} \mathbf{B}_t}_{=:\mathbf{E}_{5,4}} \bigg\|_2 \nonumber
\end{align}
Each term is bounded as follows with probability at least $1 - \frac{1}{\poly(n)} - \frac{1}{\poly(m_{in})} - c'e^{-90k}$, for some absolute constants $c,c'$.
\begin{align}
    \|\mathbf{E}_{5,1}\|_2 &\leq  \tfrac{c L_{\max}\sigma}{\sqrt{\alpha}} \bar{\delta}_{m_{in},d} \nonumber \\
    \|\mathbf{E}_{5,2}\|_2  &\leq \tfrac{c L_{\max} \sigma}{\sqrt{\alpha}}\left( \tfrac{\sqrt{kd}\log(nm_{in})+ \sqrt{d}\log^{1.5}(nm_{in})+ \log^2(nm_{in})}{\sqrt{n m_{in}}}    + \tfrac{\sqrt{k}}{\sqrt{n m_{out}}}\right) \nonumber \\
     \|\mathbf{E}_{5,3}\|_2 &\leq \tfrac{c \sigma^2 }{\sqrt{\alpha}}\left( \tfrac{\sqrt{kd}\sqrt{\log(nm_{in})} + \sqrt{d}\log(nm_{in}) + \log^{1.5}(nm_{in})}{\sqrt{n m_{in}}}  +   \tfrac{\sqrt{k}}{\sqrt{n m_{out}}} \right) \nonumber \\
     \|\mathbf{E}_{5,4}\|_2 &\leq \tfrac{c \sigma^2}{\sqrt{\alpha}} \bar{\delta}_{m_{in},d}{\delta}_{m_{out},k} \nonumber
\end{align}

Applying a union bound over these events yields that
\begin{align}
   \| &\mathbf{\hat{G}}_{\mathbf{B},t} - \mathbf{{G}}_{\mathbf{B},t}\|_2 \nonumber \\
   &\leq c\sqrt\alpha\Bigg( \tfrac{1}{{m_{in}}}\bigg( \tfrac{k L_{\max} L_\ast}{\kappa_\ast^2}\bigg) 
+\tfrac{1}{\sqrt{m_{in}}}\bigg( L_{\max} (L_{\max}+\sigma)(\sqrt{k}+\sqrt{\log(n)})  \bigg) \nonumber \\
&\quad +\tfrac{1}{\sqrt{m_{out}}}\bigg( L_{\max}(L_{\max}+\sigma)(\sqrt{k}+\sqrt{\log(n)})  \bigg) \nonumber \\
&\quad +\tfrac{1}{\sqrt{nm_{in}}}\bigg( L_{\max} (L_{\max}+\sigma)(k \sqrt{d\log(nm_{in})} + k\log(nm_{in}) + \sqrt{d}\log^{1.5}(nm_{in}) +\log^2(nm_{in}))\nonumber \\
&\quad + \sigma^2 (\sqrt{kd} +  \sqrt{d}\log(nm_{in}) +\log^{1.5}(nm_{in})) 
\bigg) \nonumber \\
&\quad + \tfrac{1}{\sqrt{nm_{out}}}\bigg( L_{\max}(L_{\max} + \sigma)\sqrt{d} + \sigma^2 (\tfrac{\sqrt{d}}{\sqrt{m_{in}}} + \sqrt{k})\bigg) \Bigg) \nonumber \\
&:=\sqrt{\alpha} \zeta_{2,a} \nonumber
\end{align}
with probability at least $ 1 - \frac{1}{\poly(n)}- \frac{1}{\poly(m_{in})} - c' e^{-90k}$ for absolute constants $c,c'$.
\end{proof}

\begin{lemma}[Exact ANIL FS representation concentration II]\label{lem:exactanil_rep2}
For Exact ANIL, consider any $t \in [T]$. With probability at least $1 - c e^{-100k} - \tfrac{1}{\poly(n)}$ for an absolute constant $c$:
\begin{align}
    \|\mathbf{B}_t^\top\mathbf{\hat{G}}_{\mathbf{B},t} - \mathbf{B}_t^\top \mathbf{{G}}_{\mathbf{B},t}\|_2 \leq  
    \zeta_{2,b} \nonumber
\end{align}
where 
\begin{align}
    \zeta_{2,b}\coloneqq O\left( \tfrac{\sqrt{k}+\sqrt{\log(n)}}{\sqrt{m_{in}}}\big(L_{\max}(L_{\max}+\sigma) +\sigma^2(\tfrac{\sqrt{k}+\sqrt{\log(n)}}{\sqrt{m_{in}}} + \tfrac{\sqrt{k}}{\sqrt{nm_{out}}}  ) \big)  +
    \tfrac{\sqrt{k}}{\sqrt{nm_{out}}}\big(  L_{\max}(L_{\max}+\sigma) \big) \right).
\end{align}
\end{lemma}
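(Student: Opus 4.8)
The plan is to mirror, term by term, the proof of Lemma \ref{lem:exactanil_rep1}, but now premultiplying every matrix by $\mathbf{B}_t^\top$. Writing $\mathbf{B}_t^\top\mathbf{\hat{G}}_{\mathbf{B},t}-\mathbf{B}_t^\top\mathbf{{G}}_{\mathbf{B},t}=\mathbf{B}_t^\top(\mathbf{E}_1+\mathbf{E}_2+\mathbf{E}_3+\mathbf{E}_4+\mathbf{E}_5)$ with $\mathbf{E}_1,\dots,\mathbf{E}_5$ exactly as in that lemma, I would expand each $\mathbf{B}_t^\top\mathbf{E}_j$ into the same sub-terms used there (substituting $\mathbf{\hat v}_{t,i}$ and the $\deldin$, $\deld_t$ factors and collecting according to whether they carry a $\sin$, a $\sout$, or an $\mathbf{X}\mathbf{z}$ noise block). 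The single structural gain is that the leftmost factor is now $\mathbf{B}_t^\top\in\mathbb{R}^{k\times d}$ rather than an identity or a $d$-dimensional $\deld$: using the identity $\mathbf{B}_t^\top\deld_t=\del_t\mathbf{B}_t^\top$ and the bound $\|\mathbf{B}_t\|_2\le 1.1/\sqrt{\alpha}$ from $A_3(t)$, every resulting summand has all of its internal matrix products of dimension at most $k$ on both sides.

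Consequently, each sub-term falls directly under one of the bounds in Lemma \ref{lem:gen1} (parts (1)--(14)) with the generic dimension parameters $d_0,\dots,d_3$ all equal to $1$ or $k$; in particular the "local" parameters become $\delta_{m_{in},k},\delta_{m_{out},k}$ and the "global" parameters $\bar\delta_{m_{in},k},\bar\delta_{m_{out},k}$. Crucially, the genuinely fourth-order-in-$\mathbf{X}^{in}$ pieces that forced the use of Lemma \ref{lem:gen2}/Lemma \ref{lem:fourth} in Lemma \ref{lem:exactanil_rep1} — those coming from the squared $\deldin$ inside $\mathbf{\hat v}_{t,i}$ and from the $\mathbf{X}^{in}\zin$ cross-terms — are now sandwiched as $\mathbf{B}_t^\top(\cdots)\mathbf{B}_t$ or $\mathbf{B}_t^\top(\mathbf{X}^{in})^\top\zin\,(\zin)^\top\mathbf{X}^{in}\mathbf{B}_t$, hence are $k\times k$ and are covered by Lemma \ref{lem:gen1} (e.g.\ parts (8)--(13)) without invoking the truncated Magen-type argument at all. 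I would then substitute the standing bounds $\|\w\|_2=O(\sqrt{\alpha}\min(1,\tfrac{\mu_\ast^2}{\eta_\ast^2})\eta_\ast)$ from $A_1(t)$, $\|\del_t\|_2\le\tfrac1{10}$ from $A_3(t)$, $\alpha\|\mathbf{B}_t\|_2^2=O(1)$, and $\max_{i}\|\mathbf{q}_{t,i}\|_2=O(L_{\max})$ (Assumption \ref{assump:tasks_main}), and read off $\delta_{m_{in},k}=O(\tfrac{\sqrt{k}+\sqrt{\log n}}{\sqrt{m_{in}}})$, $\bar\delta_{m_{out},k}=O(\tfrac{\sqrt{k}}{\sqrt{nm_{out}}})$.

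Collecting the dominant contributions gives exactly the claimed $\zeta_{2,b}$: the $\tfrac{\sqrt{k}+\sqrt{\log n}}{\sqrt{m_{in}}}$-order signal terms scaled by $L_{\max}(L_{\max}+\sigma)$, the self-noise terms of order $\sigma^2\delta_{m_{in},k}^2$ and $\sigma^2\delta_{m_{in},k}\bar\delta_{m_{out},k}$ (equivalently $\tfrac{\sqrt k}{\sqrt{nm_{out}}}$ after one $\delta_{m_{in},k}$ is absorbed), and the outer-noise terms of order $\tfrac{\sqrt k}{\sqrt{nm_{out}}}L_{\max}(L_{\max}+\sigma)$; all higher-order cross terms are dominated. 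A union bound over the $O(1)$ sub-events then yields the stated probability $1-ce^{-100k}-\tfrac1{\poly(n)}$ (the $e^{-100k}$ replacing the $e^{-90k}$-type events in Lemma \ref{lem:exactanil_rep1}, since no $d$-dimensional $\varepsilon$-net is needed here). This lemma is the finite-sample analogue of the $\|\mathbf{B}_t^\top\mathbf{N}_t\|_2$ bound \eqref{06} in the infinite-sample proof (Lemma \ref{lem:ea_pop_a}) and will be fed, together with Lemma \ref{lem:exactanil_rep1}, into the proof of inductive hypothesis $A_2(t+1)$ via Lemma \ref{lem:gen_del}.

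\textbf{Main obstacle.} The work is almost entirely organizational: there are roughly twenty sub-terms, and the delicate point is certifying that \emph{no} $\sqrt{d}$ factor survives into the leading order. An $\mathbf{X}^{in}$ or $\mathbf{X}^{out}$ block that is not adjacent to a low-rank matrix would reintroduce $d$; I would need to check that in every $\mathbf{B}_t^\top\mathbf{E}_j$ sub-term such a block is always flanked by $\mathbf{B}_t$ or by a Gaussian noise vector, so that the conditioning arguments (the $\mathcal{A}$-type events inside the proof of Lemma \ref{lem:gen1}) go through with dimension $k$ rather than $d$. The one place this requires care is the $\mathbf{E}_4,\mathbf{E}_5$ noise-coupling terms, where $\zin$ appears twice; there the relevant block is $\mathbf{B}_t^\top(\mathbf{X}^{in})^\top\zin(\zin)^\top\mathbf{X}^{in}\mathbf{B}_t$, which is handled by part (14) (or an analogue) of Lemma \ref{lem:gen1} with $d_0=d_1=k$, so the bound is clean.
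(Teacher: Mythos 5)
Your proposal matches the paper's own (extremely terse) proof almost exactly: both recognize that left-multiplying by $\mathbf{B}_t^\top$ collapses every sub-term to a $k\times k$ object, so the fourth-order pieces that previously required the truncated Magen argument (Lemma~\ref{lem:gen2}) now fall under the elementary Lemma~\ref{lem:gen1} bounds with all dimension parameters equal to $1$ or $k$, and one simply rescales by $\sqrt{\alpha}$ using $\|\mathbf{B}_t\|_2 = \Theta(1/\sqrt{\alpha})$. The only cosmetic slip is your reference to a nonexistent "part (14)" of Lemma~\ref{lem:gen1} for the $\mathbf{B}_t^\top(\mathbf{X}^{in})^\top\zin(\zin)^\top\mathbf{X}^{in}\mathbf{B}_t$ block, but since you hedge with "or an analogue" and the relevant machinery does exist (e.g.\ part 7 or the Step-2 argument in Lemma~\ref{lem:fourth4} restricted to $k\times k$), this does not affect the correctness of the approach.
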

\begin{proof}
We adapt the proof of Lemma \ref{lem:exactanil_rep1}.
Multiplying $\mathbf{\hat{G}}_{\mathbf{B},t}\! -\! \mathbf{{G}}_{\mathbf{B},t} $ on the left by $\mathbf{B}_t^\top$ serves to reduce the dimensionality of $\mathbf{\hat{G}}_{\mathbf{B},t}\! -\! \mathbf{{G}}_{\mathbf{B},t}$ from $\mathbb{R}^{d\times k}$ to $\mathbb{R}^{k\times k}$. This means that all of the $d$ dependence in the previous concentration result for $\|\mathbf{\hat{G}}_{\mathbf{B},t}\! -\! \mathbf{{G}}_{\mathbf{B},t} \|_2$ is reduced to $k$. Moreover, we no longer need to apply the complicated bounds on sums of fourth-order products (Lemma \ref{lem:gen2}) to show concentration at a rate of $\tfrac{\sqrt{d}}{\sqrt{nm_{in}}}$, since we can afford to show concentration of each second order product at a rate $\tfrac{\sqrt{k}+\sqrt{\log(n)}}{\sqrt{m_{in}}}$ (see Lemma \ref{lem:gen1}). Finally, we must divide the remaining bound from Lemma \ref{lem:exactanil_rep1} by $\sqrt{\alpha}$ since $\|\b\|_2 = \Theta(\tfrac{1}{\sqrt{\alpha}})$. Making these changes yields the result.

\end{proof}

\begin{lemma}[Exact ANIL FS head concentration]\label{lem:exactanil_head}
For Exact ANIL, consider any $t \in [T]$. With probability at least  $1 - ce^{-100k} - \frac{1}{\poly(n)}$ for an absolute constant $c$, we have
\begin{align}
    \|\mathbf{\hat{G}}_{\mathbf{w},t} -  \mathbf{{G}}_{\mathbf{w},t}\|_2 &\leq \tfrac{1}{\sqrt{\alpha}}\zeta_1,
\end{align}
where $\zeta_1 = O(\tfrac{(L_{\max}+\sigma)(\sqrt{k}+\sqrt{\log(n))}}{\sqrt{m_{in}}}  + \tfrac{(L_{\max}+\sigma)\sqrt{k}}{\sqrt{nm_{out}}} )$.
\end{lemma}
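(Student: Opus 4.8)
The argument follows the same template as the representation-gradient estimates (Lemmas~\ref{lem:exactanil_rep1} and~\ref{lem:exactanil_rep2}), but is strictly simpler because the outer-loop head gradient is lower order in the data and every monomial carries a leading $\mathbf{B}_t^\top$ factor. First I would write $\nabla_{\mathbf{w}}\hat F_{t,i}(\mathbf{B}_t,\mathbf{w}_t)$ and $\nabla_{\mathbf{w}}F_{t,i}(\mathbf{B}_t,\mathbf{w}_t)$ from the formulas in the excerpt, using the identities $\mathbf{B}_t^\top(\deldin)^\top=\del_{t,i}^{in}\mathbf{B}_t^\top$ and $\mathbf{B}_t^\top\deld_t=\del_t\mathbf{B}_t^\top$ and substituting the definition of $\mathbf{\hat{v}}_{t,i}$. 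Subtracting and collecting, $\mathbf{\hat{G}}_{\mathbf{w},t}-\mathbf{{G}}_{\mathbf{w},t}$ becomes a sum of $O(1)$ error terms, each of the form $\tfrac1n\sum_{i=1}^n$ of a product consisting of: a leading $\mathbf{B}_t^\top$; finitely many bounded matrices (copies of $\mathbf{B}_t$, for which $\alpha\|\mathbf{B}_t\|_2^2=O(1)$ and $\|\mathbf{B}_t\|_2=\Theta(\alpha^{-1/2})$ by $A_3(t)$, together with $\mathbf{B}_\ast$ and $\del_{t,i}^{in}$); one task vector $\mathbf{q}_{t,i}:=\mathbf{B}_t\mathbf{w}_t-\mathbf{B}_\ast\mathbf{w}_{\ast,t,i}$, for which $\|\mathbf{q}_{t,i}\|_2=O(L_{\max})$ by $A_1(t)$, $A_3(t)$ and $\|\mathbf{w}_{\ast,t,i}\|_2\le L_{\max}$, or a noise vector ($\zin$, $\zout$, standard deviation $\sigma$); and at least one ``fluctuation'' factor among $\mathbf{\Sigma}_{t,i}^{in}-\mathbf{I}_d$, $\mathbf{\Sigma}_{t,i}^{out}-\mathbf{I}_d$, $\tfrac1{m_{in}}(\mathbf{X}_{t,i}^{in})^\top\zin$, $\tfrac1{m_{out}}(\mathbf{X}_{t,i}^{out})^\top\zout$.

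Next I would bound each error term with the matching item of Lemma~\ref{lem:gen1}; notably, no fourth-order or truncated-Magen machinery (Lemma~\ref{lem:gen2}) is needed, since in the head gradient every $\mathbf{\Sigma}$- or $\mathbf{X}\mathbf{z}$-factor is flanked by $\mathbf{B}_t$'s, by $\mathbf{q}_{t,i}$, or sits at an endpoint, so the ambient dimension is always reduced to $k$. The book-keeping point is the dichotomy between the global rate $\bar\delta_{m,k}=O(\sqrt{k}/\sqrt{nm})$ and the local rate $\delta_{m,k}=O((\sqrt k+\sqrt{\log n})/\sqrt m)$: all rates carry $k$ (not $d$) thanks to the leading $\mathbf{B}_t^\top$, which is also exactly the source of the $\alpha^{-1/2}$ in front of $\zeta_1$ (as $\|\mathbf{B}_t\|_2=\Theta(\alpha^{-1/2})$). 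A fluctuation appearing once in its term averages over the $n$ independent tasks and contributes $\bar\delta_{m_{out},k}$ or $\bar\delta_{m_{in},k}$; but $\mathbf{\Sigma}_{t,i}^{in}$ enters both through $\del_{t,i}^{in}$ and through $\deldin$ with the \emph{same} realization, so terms containing two copies of it cannot be fully averaged -- one copy is conditioned on, producing a $(1+\delta_{m_{in},k})$ factor (the patterns of Lemma~\ref{lem:gen1} items~(2), (3), (9)). This is exactly where the $\delta_{m_{in},k}$ in the stated $\zeta_1$ comes from, as opposed to the $\bar\delta_{m_{out},k}$ from the outer-loop data, which enters only linearly. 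Tracking the scalars $L_{\max},\sigma$ through each product gives, term by term, bounds of order $\alpha^{-1/2}(L_{\max}+\sigma)\delta_{m_{in},k}$ or $\alpha^{-1/2}(L_{\max}+\sigma)\bar\delta_{m_{out},k}$, plus strictly higher-order products of fluctuations (e.g.\ $\sigma^2\delta_{m_{in},k}^2$, $\delta_{m_{in},k}\bar\delta_{m_{out},k}$, $\delta_{m_{out},k}\bar\delta_{m_{in},k}$); since every such factor is $\le1$ and the assumed sample-size lower bounds force $m_{in},m_{out}=\tilde\Omega(k+\log n)$, all of these are dominated and absorbed into $\zeta_1=O\!\big(\tfrac{(L_{\max}+\sigma)(\sqrt k+\sqrt{\log n})}{\sqrt{m_{in}}}+\tfrac{(L_{\max}+\sigma)\sqrt k}{\sqrt{nm_{out}}}\big)$. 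A union bound over the $O(1)$ error terms (each invocation of Lemma~\ref{lem:gen1} already union-bounds over $i\in[n]$) yields the failure probability $1-ce^{-100k}-\tfrac1{\poly(n)}$.

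The main obstacle is the book-keeping rather than any single estimate: after expanding $\del_{t,i}^{in}\mathbf{B}_t^\top\mathbf{\hat{v}}_{t,i}$ and subtracting its infinite-sample analogue $\mathbf{B}_t^\top\deld_t\deld_t\mathbf{q}_{t,i}$, one must, for every resulting monomial, pick out the correct Lemma~\ref{lem:gen1} pattern and in particular decide whether its fluctuation appears once (hence averages, rate $\bar\delta$) or is ``doubled'' through a shared inner-loop batch (rate $\delta_{m_{in},k}$). Getting this distinction right for each monomial is what pins down the precise form of $\zeta_1$; once it is settled, the individual concentration bounds are immediate from Lemma~\ref{lem:gen1}.
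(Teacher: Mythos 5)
Your proposal is correct and follows essentially the same route as the paper: expand $\mathbf{\hat{G}}_{\mathbf{w},t}-\mathbf{{G}}_{\mathbf{w},t}$ into constantly many monomials using the intertwining identities $\mathbf{B}_t^\top(\deldin)^\top=\del_{t,i}^{in}\mathbf{B}_t^\top$ and $\mathbf{B}_t^\top\deld_t=\del_t\mathbf{B}_t^\top$ together with the definition of $\mathbf{\hat v}_{t,i}$, bound each monomial via the appropriate item of Lemma~\ref{lem:gen1} (no Lemma~\ref{lem:gen2} is needed since the leading $\mathbf{B}_t^\top$ keeps everything $k$-dimensional), and union bound; the paper's explicit split is into eight such terms $\mathbf{E}_1,\dots,\mathbf{E}_8$. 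Your diagnosis of exactly where the local $\delta_{m_{in},k}$ rate arises (terms where $\mathbf{\Sigma}_{t,i}^{in}$ appears twice through the shared inner batch and hence cannot be fully averaged over $n$, e.g.\ the paper's $\mathbf{E}_3$ and $\mathbf{E}_5$) versus where the global $\bar\delta$ rates apply matches the paper's bookkeeping, as does the origin of the $\alpha^{-1/2}$ prefactor and the absorption of the higher-order cross terms (e.g.\ $\mathbf{E}_8\sim\delta_{m_{in},k}\bar\delta_{m_{out},k}$).
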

\begin{proof}
We have:
\begin{align}
    \|\mathbf{\hat{G}}_{\mathbf{w},t} -  \mathbf{{G}}_{\mathbf{w},t}\|_2 &= \bigg\|\frac{1}{n}\sum_{i=1}^n\b^\top(\deldin)^\top \tfrac{1}{m_{out}}(\mathbf{X}_{t,i}^{out})^\top \mathbf{\hat{v}}_{t,i}- \tfrac{1}{m_{out}}\mathbf{B}_t^\top (\mathbf{X}_{t,i}^{out})^\top\mathbf{z}_{t,i}^{out}  - \b^\top \deld_t \deld_t \mathbf{q}_{t,i} \bigg\|_2 \nonumber \\
    &\leq \bigg\|\frac{1}{n}\sum_{i=1}^n \b^\top(\deldin)^\top \sout \deldin \mathbf{q}_{t,i}-   \del_t \del_t \b^\top \mathbf{q}_{t,i} \bigg\|_2 +  \bigg\|\frac{1}{n}\sum_{i=1}^n \tfrac{1}{m_{out}}\mathbf{B}_t^\top (\mathbf{X}_{t,i}^{out})^\top\mathbf{z}_{t,i}^{out} \bigg\|_2 \nonumber \\
    &\quad +  \bigg\|\frac{1}{n}\sum_{i=1}^n \tfrac{\alpha}{m_{in}}\b^\top(\deldin)^\top \sout \b \b^\top (\mathbf{X}_{t,i}^{in})^\top \mathbf{z}_{t,i}^{in}\bigg\|_2 \nonumber \\
    &\quad +   \bigg\|\frac{1}{n}\sum_{i=1}^n \b^\top(\deldin)^\top \tfrac{1}{m_{out}}(\mathbf{X}_{t,i}^{out})^\top \mathbf{z}_{t,i}^{out} \bigg\|_2 \nonumber \\
    &\leq \bigg\|\underbrace{\frac{1}{n}\sum_{i=1}^n\del_t \b^\top \sout \deld_t \mathbf{q}_{t,i}-    \del_t \b^\top \deld_t \mathbf{q}_{t,i}}_{=:\mathbf{E}_1} \bigg\|_2  + \bigg\|\underbrace{\frac{1}{n}\sum_{i=1}^n \tfrac{1}{m_{out}}\mathbf{B}_t^\top (\mathbf{X}_{t,i}^{out})^\top\mathbf{z}_{t,i}^{out}}_{=:\mathbf{E}_2} \bigg\|_2 \nonumber \\
    &\quad + \bigg\|\underbrace{ \frac{\alpha^2}{n}\sum_{i=1}^n \b^\top \sin \b \b^\top \sout \b \b^\top \sin \mathbf{q}_{t,i}  - \frac{\alpha^2}{n}\sum_{i=1}^n \b^\top  \b \b^\top \sout \b \b^\top  \mathbf{q}_{t,i} }_{=:\mathbf{E}_3} \bigg\|_2 \nonumber \\
    &\quad +  \bigg\|\underbrace{\frac{1}{n}\sum_{i=1}^n \tfrac{\alpha}{m_{in}}\b^\top\deld_t \sout \b \b^\top (\mathbf{X}_{t,i}^{in})^\top \mathbf{z}_{t,i}^{in}}_{=:\mathbf{E}_4}\bigg\|_2 \nonumber \\
    &\quad +  \bigg\|\underbrace{\frac{1}{n}\sum_{i=1}^n \tfrac{\alpha^2}{m_{in}}\b^\top \sin \b \b^\top\sout \b \b^\top (\mathbf{X}_{t,i}^{in})^\top \mathbf{z}_{t,i}^{in}}_{=:\mathbf{E}_5}\bigg\|_2 \nonumber \\
    &\quad +  \bigg\|\underbrace{\frac{1}{n}\sum_{i=1}^n \tfrac{\alpha^2}{m_{in}}\b^\top \b \b^\top\sout \b \b^\top (\mathbf{X}_{t,i}^{in})^\top \mathbf{z}_{t,i}^{in}}_{=:\mathbf{E}_6}\bigg\|_2 + \bigg\|\underbrace{\frac{1}{n}\sum_{i=1}^n \b^\top\deld \tfrac{1}{m_{out}}(\mathbf{X}_{t,i}^{out})^\top \mathbf{z}_{t,i}^{out}}_{=: \mathbf{E}_7} \bigg\|_2  \nonumber \\
    &\quad  + \bigg\|\underbrace{\frac{\alpha}{n}\sum_{i=1}^n \b^\top(\sin - \mathbf{I}_d) \b \b^\top \tfrac{1}{m_{out}}(\mathbf{X}_{t,i}^{out})^\top \mathbf{z}_{t,i}^{out}}_{=: \mathbf{E}_8} \bigg\|_2 \nonumber
\end{align}
By Lemma \ref{lem:gen1} and the facts that $\|\b\|_2 = \tfrac{1}{\sqrt{\alpha}}$, $\|\del_t\|_2=O(1)$, and $\max_{i}\|\mathbf{q}_{t,i}\|_2=L_{\max}$ we have
\begin{align}
    \mathbb{P}(\|\mathbf{E}_1\|_2 \geq \|\del_t\|_2 \|\b\|_2 \max_i\|\deld_t \mathbf{q}_{t,i}\|_2 \bar{\delta}_{m_{out},k}) &\leq 2e^{-90k} \nonumber \\
    \mathbb{P}(\|\mathbf{E}_2\|_2 \geq \sigma \|\b\|_2 \bar{\delta}_{m_{out},k}) &\leq 2e^{-90k} \nonumber \\
    \mathbb{P}(\|\mathbf{E}_3\|_2 \geq \alpha^2 \|\b\|_2^5 \max_i\|\mathbf{q}_{t,i}\|_2 {\delta}_{m_{in},k} &\leq 8 n^{-99} \nonumber \\
     \mathbb{P}(\|\mathbf{E}_4\|_2 \geq \alpha \|\b\|_2^3 \sigma \|\del_t\|_2 \bar{\delta}_{m_{in},k}) &\leq 4n^{-99} \nonumber \\
      \mathbb{P}(\|\mathbf{E}_5\|_2 \geq \alpha^2 \|\b\|_2^5 \sigma  {\delta}_{m_{in},k}&\leq  6n^{-99} \nonumber \\
      \mathbb{P}(\|\mathbf{E}_6\|_2 \geq \alpha^2 \|\b\|_2^5 \sigma  \bar{\delta}_{m_{in},k}) &\leq 4 n^{-99} \nonumber \\
    \mathbb{P}(\|\mathbf{E}_7\|_2 \geq  \|\b\|_2\sigma \|\del_t\|_2  \bar{\delta}_{m_{out},k}) &\leq 2e^{-90k} \nonumber \\
     \mathbb{P}(\|\mathbf{E}_8\|_2 \geq \alpha \|\b\|_2^3 \sigma  {\delta}_{m_{in},k}\bar{\delta}_{m_{out},k}) &\leq 2e^{-90k}+  2n^{-99} \nonumber 
\end{align}


    

Combining these bounds with a union bound yields:
\begin{align}
    \|\mathbf{\hat{G}}_{\mathbf{w},t} -  \mathbf{{G}}_{\mathbf{w},t}\|_2 &\leq 
    \tfrac{c}{\sqrt{\alpha}}\tfrac{(L_{\max}+\sigma)(\sqrt{k}+\sqrt{\log(n))}}{\sqrt{m_{in}}}  +  \tfrac{c}{\sqrt{\alpha}}\tfrac{(L_{\max}+\sigma)\sqrt{k}}{\sqrt{nm_{out}}}
\end{align}
with probability at least $1 - c'e^{-90k} - \frac{1}{\poly(n)}$ for absolute constants $c,c'$. 
\end{proof}

\begin{lemma}[Exact ANIL, Finite samples, $A_1(t+1)$] \label{lem:exactanil_fs_a1}
For Exact ANIL, suppose $A_2(s)$ and $A_5(s)$ hold for all $s\in [t]$. Then 
\begin{align}
    \|\mathbf{w}_{t+1}\|_2 &\leq  \tfrac{1}{10} \sqrt{\alpha}E_0\min\left(1, \tfrac{\mu_\ast^2}{\eta_\ast^2}\right)\eta_\ast
\end{align}
with probability at least $1 - ce^{-100k} - \frac{1}{\poly(n)}$ for an absolute constant $c$.
\end{lemma}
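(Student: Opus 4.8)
The plan is to follow the infinite-sample argument for Exact ANIL (Lemma~\ref{lem:ea_pop_a1}), writing the finite-sample head update as the population update plus a concentration error that is already bounded in Lemma~\ref{lem:exactanil_head}. First I would decompose $\mathbf{w}_{s+1} = \mathbf{w}_s - \beta\mathbf{\hat{G}}_{\mathbf{w},s} = \bigl(\mathbf{w}_s - \beta\mathbf{G}_{\mathbf{w},s}\bigr) - \beta\bigl(\mathbf{\hat{G}}_{\mathbf{w},s} - \mathbf{G}_{\mathbf{w},s}\bigr)$. The first bracket is exactly the infinite-sample Exact ANIL update, which by~\eqref{wwweb} equals $(\mathbf{I}_k - \beta\del_s\mathbf{B}_s^\top\mathbf{B}_s\del_s)\mathbf{w}_s + \beta\del_s^2\mathbf{B}_s^\top\mathbf{B}_\ast\tfrac1n\sum_i\mathbf{w}_{\ast,s,i}$; the coefficient of $\mathbf{w}_s$ is $\mathbf{I}_k$ minus a PSD matrix, hence has spectral norm at most one for $\beta$ small, so (using that unrolling $A_2(1),\dots,A_2(s)$ from $\|\del_0\|_2=0$ gives $\|\del_s\|_2$ uniformly small, hence $\sigma_{\max}(\mathbf{B}_s^\top\mathbf{B}_s)\le 1.1/\alpha$) one gets $\|\mathbf{w}_s - \beta\mathbf{G}_{\mathbf{w},s}\|_2 \le \|\mathbf{w}_s\|_2 + c\tfrac{\beta}{\sqrt\alpha}\|\del_s\|_2^2\eta_\ast$ --- note the quadratic dependence on $\|\del_s\|_2$, the ``benefit of second-order updates.'' Combining with $\|\mathbf{\hat{G}}_{\mathbf{w},s} - \mathbf{G}_{\mathbf{w},s}\|_2 \le \tfrac1{\sqrt\alpha}\zeta_1$ from Lemma~\ref{lem:exactanil_head}, this yields $\|\mathbf{w}_{s+1}\|_2 \le \|\mathbf{w}_s\|_2 + c\tfrac{\beta}{\sqrt\alpha}\bigl(\|\del_s\|_2^2\eta_\ast + \zeta_1\bigr)$ for every $s\le t$.

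Next I would apply Lemma~\ref{lem:gen_w} with $\xi_{1,s}=0$ and $\xi_{2,s} = c\tfrac{\beta}{\sqrt\alpha}(\|\del_s\|_2^2\eta_\ast + \zeta_1)$, using $\mathbf{w}_0=\mathbf{0}$, to obtain $\|\mathbf{w}_{t+1}\|_2 \le c\tfrac{\beta}{\sqrt\alpha}\eta_\ast\sum_{s=1}^t\|\del_s\|_2^2 + c\tfrac{\beta t}{\sqrt\alpha}\zeta_1$. To control $\sum_{s=1}^t\|\del_s\|_2^2$ I would unroll the $A_2$ recursion exactly as in Lemma~\ref{lem:w_anil_fs}: since $\|\del_0\|_2=0$ and each $A_5(r)$ gives $\dist_r \le \tfrac{\sqrt{10}}{3}\rho^r\dist_0 + \delta$, one gets $\|\del_s\|_2 \le c\rho^s\beta\alpha L_\ast^2\kappa_\ast^2/E_0 + c\,\delta^2\beta\alpha L_\ast^2\kappa_\ast^2/E_0 + \zeta_2/(E_0\mu_\ast^2)$. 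Squaring and summing over $s\le t\le T$ produces a geometrically summable part of size $O(\beta\alpha L_\ast^4\kappa_\ast^4/(\mu_\ast^2 E_0^3))$ plus $T$ copies of the squared ``floor'' terms. Plugging in $\beta \le c\alpha E_0^2\kappa_\ast^{-4}$ and $\alpha \le c/(L_\ast+\sigma)$, the geometric part contributes at most $\tfrac1{20}\sqrt\alpha E_0\kappa_\ast^{-2}\eta_\ast \le \tfrac1{20}\sqrt\alpha E_0\min(1,\mu_\ast^2/\eta_\ast^2)\eta_\ast$ (using $\eta_\ast\le L_\ast$, so $\mu_\ast^2/\eta_\ast^2\ge\kappa_\ast^{-2}$), i.e.\ the infinite-sample bound with a halved constant.

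The rest is to show the genuinely finite-sample pieces --- $c\tfrac{\beta t}{\sqrt\alpha}\zeta_1$, $c\tfrac{\beta}{\sqrt\alpha}\eta_\ast T\,\delta^4\beta^2\alpha^2 L_\ast^4\kappa_\ast^4/E_0^2$, and $c\tfrac{\beta}{\sqrt\alpha}\eta_\ast T(\zeta_2/(E_0\mu_\ast^2))^2$ --- are each at most a constant times $\sqrt\alpha E_0\min(1,\mu_\ast^2/\eta_\ast^2)\eta_\ast$, which via the explicit forms of $\zeta_1,\zeta_2$ (and $\delta$) in terms of $\bar\delta_{m_{in},\cdot},\delta_{m_{in},\cdot},\bar\delta_{m_{out},\cdot}$ reduces to the lower bounds on $m_{in},m_{out}$ in Theorem~\ref{thm:anil_fs_app}; finally a union bound over the $t$ applications of Lemma~\ref{lem:exactanil_head} gives the stated $1 - ce^{-100k} - 1/\poly(n)$ probability. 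The main obstacle is the bookkeeping in the middle step: propagating the $A_2$ recursion to a clean bound on $\sum_s\|\del_s\|_2^2$ that cleanly separates a geometrically decaying contribution from a ``floor'' that is summable over $T$ iterations only because it is forced to be $O(T^{-1})$ in magnitude, and then checking that the stated $m_{in},m_{out}$ make every floor term small enough --- and doing so with the \emph{same} choice of $\beta$ that is needed in the $A_3$ and $A_4$ proofs, which is what ultimately pins down the sample-complexity expressions.
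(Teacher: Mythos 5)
Your proposal is correct and follows essentially the same route as the paper's own proof: you use the same decomposition $\mathbf{w}_{s+1}=(\mathbf{w}_s-\beta\mathbf{G}_{\mathbf{w},s})-\beta(\mathbf{\hat{G}}_{\mathbf{w},s}-\mathbf{G}_{\mathbf{w},s})$, the same observation that the population update's coefficient is $\mathbf{I}_k$ minus a PSD matrix so it gives the quadratic $\|\del_s\|_2^2$ gain, the same invocation of Lemma~\ref{lem:exactanil_head} for the concentration term $\zeta_1$, the same application of Lemma~\ref{lem:gen_w}, and the same unrolling of $A_2$ combined with $A_5$ to split $\|\del_s\|_2$ into a geometrically decaying piece plus two ``floor'' pieces that must be made $O(1/T)$-small via the sample-size conditions. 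The only wrinkle is your closing remark about taking a union bound over $t$ applications of Lemma~\ref{lem:exactanil_head}: the paper does not do a $t$-fold union bound inside this lemma (which would inflate the failure probability by a factor of $t$); rather the per-round failure probability is charged once and the union bound over all $T$ rounds is taken at the level of Theorem~\ref{thm:anil_fs_app}, with the concentration events for $s<t$ treated as already conditioned-on via the inductive hypotheses. This is a bookkeeping discrepancy, not a substantive gap.
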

\begin{proof}
For any $s \in [t]$, we have
\begin{align}
   \| \mathbf{w}_{s+1}\|_2 &= \|\mathbf{w}_s - \beta \mathbf{G}_{\mathbf{w}, s} + \beta (\mathbf{G}_{\mathbf{w}, s} - \mathbf{\hat{G}}_{\mathbf{w}, s} )\|_2 \nonumber \\
    &\leq \|\mathbf{w}_s - \beta \mathbf{G}_{\mathbf{w}, s}  \|_2 + \beta \|\mathbf{G}_{\mathbf{w}, s} - \mathbf{\hat{G}}_{\mathbf{w}, s} \|_2\nonumber \\
        &\leq \|\mathbf{w}_s\|_2  +  c \tfrac{\beta}{\sqrt{\alpha}}\|\del_s\|_2^2\eta_\ast   +  \beta \|\mathbf{G}_{\mathbf{w}, s} - \mathbf{\hat{G}}_{\mathbf{w}, s} \|_2 \label{wsws}   \\
        &\leq \|\mathbf{w}_s\|_2  +  c \tfrac{\beta}{\sqrt{\alpha}}\|\del_s\|_2^2\eta_\ast   + \tfrac{\beta}{\sqrt{\alpha}}\zeta_1  \label{wwwu} 
\end{align}
where $\zeta_1$ is defined as in Lemma \ref{lem:exactanil_head}, \eqref{wsws} follows from equation \eqref{eq64} and \eqref{wwwu} follows from Lemma \ref{lem:exactanil_head}. This will allow us to apply Lemma \ref{lem:gen_w} with $\xi_{1,s}=0$ and $\xi_{2,s} = \frac{c\beta }{\sqrt{\alpha}} (\|\del_s\|_2^2\eta_\ast  +\zeta_1) $.

Before doing so, let 
$\zeta_2$ be defined as in Lemma \ref{lem:exactanil_fs_a2} and $\zeta_4\coloneqq \zeta_{2,a}$, corresponding to Lemma \ref{lem:exactanil_fs_a4}.
Observe that for any $s\in[t]$, we can recursively apply $A_2(s), A_2(s-1), \dots$ to obtain
\begin{align}
    \|\del_s\|_2 &\leq (1 - 0.5 \beta \alpha E_0 \mu_\ast^2)   \|\del_{s-1}\|_2 +  \beta^2 \alpha^2 L_\ast^4 \dist^2_{s-1} + \beta \alpha \zeta_2 \nonumber \\
    &\quad \vdots \nonumber \\
&\leq \sum_{r = 1}^{s-1} ( 1 - 0.5 \beta \alpha E_0 \mu_\ast^2)^{s-1-r}(c\beta^2 \alpha^2 L_\ast^4 \dist_{r}^2 + \beta {\alpha} \zeta_2) \nonumber \\
&\leq \sum_{r = 1}^{s-1} ( 1 - 0.5 \beta \alpha E_0 \mu_\ast^2)^{s-1-r}(c\beta^2 \alpha^2 L_\ast^4 (2 \rho^{2r} +  \beta^2 {\alpha} \zeta_4^2) + \beta \alpha \zeta_2)\nonumber \\
&\leq c'\alpha^{2}\beta^2 L_\ast^4  \sum_{r=1}^s ( 1 - 0.5 \beta \alpha E_0 \mu_\ast^2)^{s-1-r}\rho^{2r}+ c'\alpha^{3}\beta^4 L_\ast^4  \sum_{r=1}^{s-1} ( 1 - 0.5 \beta \alpha E_0 \mu_\ast^2)^{s-1-r} \zeta_4^2 \nonumber \\
&\quad + \beta \alpha \sum_{r=1}^{s-1} ( 1 - 0.5 \beta \alpha E_0 \mu_\ast^2)^{s-1-r} \zeta_2 \nonumber \\
&\leq  \frac{c'\alpha^{2}\beta^2 L_\ast^4  ( 1 - 0.5 \beta \alpha E_0 \mu_\ast^2)^{s-1} }{0.5 \beta \alpha \mu_\ast^2}+   \frac{c'\alpha^{3}\beta^4 L_\ast^4 \zeta_4^2}{0.5 \beta \alpha \mu_\ast^2} + \frac{\beta \alpha \zeta_2 }{0.5 \beta \alpha \mu_\ast^2} \nonumber \\
&\leq  {c''\alpha\beta \kappa_\ast^2 L_\ast^2  ( 1 - 0.5 \beta \alpha E_0 \mu_\ast^2)^{s-1} }+  {c''\alpha^{2} \beta^3 \kappa_\ast^2 L_\ast^2 \zeta_4^2} + \frac{\zeta_2 }{\mu_\ast^2} 
\end{align}

Therefore, via Lemma \ref{lem:gen_w},
\begin{align}
    \|\mathbf{w}_{t+1}\|_2 &\leq \sum_{s=1}^t c  \tfrac{\beta}{\sqrt{\alpha}}\|\del_s\|_2^2\eta_\ast  +\tfrac{c\beta}{\sqrt{\alpha}}\zeta_1 \nonumber \\
    &\leq c'\sum_{s=1}^t   \tfrac{\beta^3 \alpha^{1.5}  L_\ast^8}{E_0^2 \mu_\ast^4}( 1 - 0.5 \beta \alpha E_0 \mu_\ast^2)^{2s-2} \eta_\ast + t\beta^7 \alpha^{3.5} \kappa_\ast^4 L_\ast^4 \eta_\ast \zeta_4^4 + t\tfrac{\beta}{\sqrt{\alpha}}  \tfrac{\eta_\ast}{\mu_\ast^4} \zeta_2^2 + t\tfrac{\beta}{\sqrt{\alpha}}\zeta_1  \nonumber \\
    &\leq  c'' \tfrac{\beta^2 \sqrt{\alpha}  L_\ast^8}{ E_0^3 \mu_\ast^6}\eta_\ast  + c'' t\beta^7 \alpha^{3.5} \kappa_\ast^4 L_\ast^4 \eta_\ast \zeta_4^4 + c'' t\tfrac{\beta}{\sqrt{\alpha}}  \tfrac{\eta_\ast}{\mu_\ast^4} \zeta_2^2 + c'' t\tfrac{\beta}{\sqrt{\alpha}}\zeta_1 \label{geomm}
    \\
    &\leq c'''\tfrac{\sqrt{\alpha}E_0 }{ \kappa_{\ast}^2} \min(1 , \tfrac{\mu_\ast^2}{\eta_\ast^2})\eta_\ast + t\beta^7 \alpha^{3.5} \kappa_\ast^4 L_\ast^4 \eta_\ast \zeta_4^4 + t \tfrac{{\sqrt{\alpha}} E_0}{L_\ast^4} \min(1 , \tfrac{\mu_\ast^2}{\eta_\ast^2})\eta_\ast \zeta_2^2 + t\tfrac{\sqrt{\alpha}E_0}{\kappa_\ast^4} \min(1 , \tfrac{\mu_\ast^2}{\eta_\ast^2})\zeta_1  \label{ssssss} \\
    &\leq c''''\big(\tfrac{\sqrt{\alpha}E_0 }{ \kappa_{\ast}^2} \min(1 , \tfrac{\mu_\ast^2}{\eta_\ast^2})\eta_\ast  + t \tfrac{{\sqrt{\alpha}} E_0}{L_\ast^4} \min(1 , \tfrac{\mu_\ast^2}{\eta_\ast^2})\eta_\ast \zeta_{2,b}^2 + t \beta^2 \alpha^{2.5} \tfrac{ E_0}{L_\ast^4} \min(1 , \tfrac{\mu_\ast^2}{\eta_\ast^2})\eta_\ast \zeta_{2,a}^4 \nonumber \\
    &\quad \quad + t\tfrac{\sqrt{\alpha}E_0}{\kappa_\ast^4} \min(1, \tfrac{\mu_\ast^2}{\eta_\ast^2})\zeta_1\big)     \label{ssssssss}
\end{align}
where \eqref{geomm} follows by the sum of a geometric series and \eqref{ssssss} follows by choice of $\beta \leq c\frac{\alpha E_0^2 }{ \kappa_\ast^4}$ for a sufficiently small constant $c$, \eqref{ssssssss} follows by using the definitions of $\zeta_4$ and $\zeta_2$, the numerical inequality $(a+b)^2 \leq 2a^2 + 2b^2$, and
 subsuming the dominated term.

In order for the RHS \eqref{ssssssss} to be at most $\tfrac{\sqrt{\alpha}E_0}{10}\min(1,\tfrac{\mu_\ast^2}{\eta_\ast^2})\eta_\ast$, we require the following:
\begin{align}
    \zeta_1 &\leq \tfrac{c}{T},\quad
    \zeta_{2,b} \leq \tfrac{L_\ast^2}{\sqrt{T}}  ,\quad \zeta_{2,a} \leq  \tfrac{L_\ast}{\sqrt{\beta \alpha} T^{0.25}}
\end{align}
However, from Corollary \ref{cor:exactanil_fs_a3} we require tighter bounds on $\zeta_{2,b}$ and $\zeta_{2,a}$ when $T$ is small. Accounting for these, it is sufficient to choose
\begin{align}
\zeta_{1} &\leq \tfrac{c\kappa_\ast^4 \eta_\ast}{T} \nonumber \\
    \zeta_{2,b}&\leq c  \tfrac{E_0 \mu_\ast^2}{\sqrt{T}}, \nonumber \\ \zeta_{2,a} &\leq c  \tfrac{\sqrt{E_0} \mu_\ast}{\sqrt{\beta \alpha} T^{0.25}}
\end{align}
We also require $m_{out}\geq c k+c \log(n)$ so that the concentration results hold. This implies that we need
\begin{align}
    m_{out} &\geq cT^2\tfrac{k(L_{\max}+\sigma)^2}{n\eta_\ast^2\kappa_\ast^8} + cT\tfrac{k L_{\max}^2 (L_{\max}+\sigma)^2}{nE_0^2\mu_\ast^4  } + c\sqrt{T}\tfrac{\beta \alpha }{E_0\mu_\ast^2}(L_{\max}^2(L_{\max}+ \sigma)^2 (k+\log(n)) + (L_{\max}+\sigma)^4\tfrac{ d}{n}  )  \nonumber\\
    &\quad + c k+c \log(n) \nonumber
\end{align}
For $m_{in}$, we need
\begin{align}
    m_{in} &\geq cT^2\tfrac{(L_{\max}+\sigma)^2(k+\log(n))}{\eta_\ast^2 \kappa_\ast^8} + cT \tfrac{(L_{\max}+\sigma)^4(k+\log(n))}{E_0^2\mu_\ast^4} + c {T}^{0.25}\tfrac{
    \sqrt{\beta \alpha} L_{\max} k}{\sqrt{E_0} \kappa_\ast} \nonumber \\
    &\quad + c \sqrt{T}\tfrac{\beta \alpha L_{\max}^2(L_{\max}+\sigma)^2 (k+\log(n))}{E_0 \mu_\ast^2} + c \sqrt{T}\tfrac{\beta \alpha (L_{\max}+\sigma)^4 k^2 {d\log(nm_{in})}}{nE_0 \mu_\ast^2} 
\end{align}
under the natural assumption that $k= \Omega(\log(nm_{in}))$. Note that if $m_{in}$ satisfies the above lower bound, this implies $m_{in}>> k+\log(n)$, as needed. Using our upper bounds on $\beta$ and $\alpha$, replacing  $L_{\max}$ with $\sqrt{k}L_\ast$, and treating $E_0$ as a constant gives the final results:
\begin{align}
    m_{out} &\geq cT^2 \tfrac{k^2(L_\ast+\sigma)^2}{n\eta_\ast^2 \kappa_\ast^8} +  cT\tfrac{k^3 \kappa_{\ast}^2 (\kappa_{\ast}^2+\sigma^2/\mu_\ast^2)}{n } + c\sqrt{T}(k + \tfrac{kd}{n} +\log(n))\kappa_\ast^{-2}(\tfrac{\sigma^2}{ L_\ast^2}+k) +ck +c\log(n)
   \nonumber \\
    m_{in} &\geq cT^2(k^2+k\log(n))\tfrac{(L_{\ast}+\sigma)^2}{\eta_\ast^2 \kappa_\ast^8} + cT (k^3+k\log(n)){(\kappa_\ast^4 +\tfrac{\sigma^4}{\mu_\ast^4})} + c \sqrt{T}\tfrac{  k^3 {d\log(nm_{in})}}{n }\kappa_\ast^{-2}(\tfrac{\sigma^2}{ L_\ast^2}+1) \nonumber
\end{align}






















\end{proof}

\begin{lemma}[Exact ANIL Finite samples $A_2(t+1)$] \label{lem:exactanil_fs_a2}
Suppose the conditions of Theorem \ref{thm:anil_fs_app} are satisfied and $A_1(t),A_3(t)$ and $A_5(t)$ hold. Then with probability at least $1 - ce^{-90k} - \tfrac{1}{\poly(n)} - \tfrac{1}{\poly(m_{in})}$ for an absolute constant $c$,
\begin{align}
    \|\del_{t+1}\|_2 &\leq (1 - 0.5 \beta \alpha E_0 \mu_\ast^2)\|\del_t\|_2 + \tfrac{5}{4}\beta^2 \alpha^2 L_\ast^4 \dist_t^2 + \beta \alpha \zeta_2,
\end{align}
$\zeta_2 \coloneqq 2 \zeta_{2,b}+ \beta \alpha \zeta_{2,a}^2$, and $\zeta_{2,a}$ and $\zeta_{2,b}$ are defined in Lemmas \ref{lem:exactanil_rep1} and \ref{lem:exactanil_rep2}, respectively.
\end{lemma}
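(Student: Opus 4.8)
# Proof Proposal for Lemma (Exact ANIL Finite Samples $A_2(t+1)$)

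\textbf{Overall approach.} The plan is to mirror the infinite-sample proof of $A_2(t+1)$ (Lemma~\ref{lem:ea_pop_a}) exactly, but with the population outer-loop gradient $\mathbf{G}_t$ replaced by its finite-sample counterpart $\mathbf{\hat{G}}_{\mathbf{B},t}$, and to absorb the discrepancy using the concentration bounds already established in Lemmas~\ref{lem:exactanil_rep1} and~\ref{lem:exactanil_rep2}. Concretely, I would write $\mathbf{B}_{t+1} = \mathbf{B}_t - \beta \mathbf{\hat{G}}_{\mathbf{B},t} = \mathbf{B}_t - \beta \mathbf{G}_{\mathbf{B},t} + \beta(\mathbf{G}_{\mathbf{B},t} - \mathbf{\hat{G}}_{\mathbf{B},t})$, so that $\mathbf{B}_{t+1} = \mathbf{B}_{t+1}^{pop} + \beta(\mathbf{G}_{\mathbf{B},t} - \mathbf{\hat{G}}_{\mathbf{B},t})$, where $\mathbf{B}_{t+1}^{pop}$ is the update one would get in the infinite-sample case. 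This is the same decomposition used in the FO-ANIL finite-sample proof (Lemma~\ref{lem:reg_fs_foanil}).

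\textbf{Key steps.} First, I would expand $\del_{t+1} = \mathbf{I}_k - \alpha \mathbf{B}_{t+1}^\top \mathbf{B}_{t+1}$ using the decomposition above and the triangle inequality, to obtain
\begin{align*}
\|\del_{t+1}\|_2 &\leq \|\mathbf{I}_k - \alpha (\mathbf{B}_{t+1}^{pop})^\top \mathbf{B}_{t+1}^{pop}\|_2 + 2\beta \alpha \|(\mathbf{B}_{t+1}^{pop})^\top(\mathbf{G}_{\mathbf{B},t} - \mathbf{\hat{G}}_{\mathbf{B},t})\|_2 \\
&\quad + \beta^2 \alpha \|\mathbf{G}_{\mathbf{B},t} - \mathbf{\hat{G}}_{\mathbf{B},t}\|_2^2.
\end{align*}
Second, I would bound the first term by $(1 - 0.5\beta \alpha E_0 \mu_\ast^2)\|\del_t\|_2 + \tfrac{5}{4}\beta^2 \alpha^2 L_\ast^4 \dist_t^2$ directly via Lemma~\ref{lem:ea_pop_a}; one has to check that the hypotheses of that lemma apply to the population update, which they do since $A_1(t), A_3(t), A_5(t)$ hold. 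Third, for the cross term I would note that $\|\mathbf{B}_{t+1}^{pop}\|_2 \leq \tfrac{1.1}{\sqrt{\alpha}}$ (from Corollary~\ref{cor:reg_exact} applied to the population quantity, together with $A_3(t)$) and then bound $\|(\mathbf{B}_{t+1}^{pop})^\top(\mathbf{G}_{\mathbf{B},t} - \mathbf{\hat{G}}_{\mathbf{B},t})\|_2$. Here is the delicate point: the naive bound $\|\mathbf{B}_{t+1}^{pop}\|_2 \cdot \|\mathbf{G}_{\mathbf{B},t} - \mathbf{\hat{G}}_{\mathbf{B},t}\|_2 \leq \tfrac{1.1}{\sqrt{\alpha}}\cdot\sqrt{\alpha}\,\zeta_{2,a} = 1.1\,\zeta_{2,a}$ carries the full $\sqrt{d}$-dependence of $\zeta_{2,a}$, which is too large. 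Instead I would write $\mathbf{B}_{t+1}^{pop} = \mathbf{B}_t - \beta \mathbf{G}_{\mathbf{B},t}$ and use $(\mathbf{B}_{t+1}^{pop})^\top(\cdot) = \mathbf{B}_t^\top(\cdot) - \beta \mathbf{G}_{\mathbf{B},t}^\top(\cdot)$; the first piece is bounded by $\zeta_{2,b}$ via Lemma~\ref{lem:exactanil_rep2}, while the second piece is $\beta$ times $\|\mathbf{G}_{\mathbf{B},t}\|_2 \|\mathbf{G}_{\mathbf{B},t} - \mathbf{\hat{G}}_{\mathbf{B},t}\|_2 = O(\beta \sqrt{\alpha} L_\ast^2 \cdot \sqrt{\alpha}\zeta_{2,a})$, which is higher-order and dominated. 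So the cross term contributes $2\beta\alpha(\zeta_{2,b} + \text{h.o.t.})$. Fourth, for the quadratic term I would use Lemma~\ref{lem:exactanil_rep1} to get $\|\mathbf{G}_{\mathbf{B},t} - \mathbf{\hat{G}}_{\mathbf{B},t}\|_2^2 = \alpha \zeta_{2,a}^2$, so this term contributes $\beta^2 \alpha^2 \zeta_{2,a}^2$. Collecting everything, $\|\del_{t+1}\|_2 \leq (1 - 0.5\beta\alpha E_0\mu_\ast^2)\|\del_t\|_2 + \tfrac{5}{4}\beta^2\alpha^2 L_\ast^4 \dist_t^2 + \beta\alpha(2\zeta_{2,b} + \beta\alpha\zeta_{2,a}^2)$, which gives the claimed $\zeta_2 = 2\zeta_{2,b} + \beta\alpha\zeta_{2,a}^2$ after possibly enlarging the constant on the dominated higher-order term into this expression (or noting it is absorbed since $\beta\alpha$ is small).

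\textbf{Main obstacle.} The crux is the dimension-reduction trick for the cross term: one must resist bounding $\|(\mathbf{B}_{t+1}^{pop})^\top(\mathbf{G}_{\mathbf{B},t} - \mathbf{\hat{G}}_{\mathbf{B},t})\|_2$ by the product of operator norms (which reintroduces the unwanted $\sqrt{d}$ scaling carried by $\zeta_{2,a}$), and instead peel off $\mathbf{B}_t^\top$ and invoke the sharper bound $\zeta_{2,b}$ — which is the whole reason Lemma~\ref{lem:exactanil_rep2} was proved separately from Lemma~\ref{lem:exactanil_rep1}. A secondary bookkeeping obstacle is verifying that the higher-order residual terms ($\beta \|\mathbf{G}_{\mathbf{B},t}\|_2\|\mathbf{G}_{\mathbf{B},t}-\mathbf{\hat{G}}_{\mathbf{B},t}\|_2$ from the cross term and $\beta^2\alpha^2\zeta_{2,a}^2$ from the quadratic term) are genuinely dominated by, or cleanly absorbable into, $\beta\alpha\zeta_2$ given the step-size constraints $\beta = O(\alpha E_0^2/\kappa_\ast^4)$ and $\alpha = O((\sqrt{k}L_\ast+\sigma)^{-1})$; this should follow by the same constant-chasing as in the infinite-sample and FO-ANIL finite-sample analyses. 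Finally, the probability bound is obtained by union-bounding the failure events of Lemmas~\ref{lem:exactanil_rep1} and~\ref{lem:exactanil_rep2}, each of which holds with probability at least $1 - ce^{-90k} - \tfrac{1}{\poly(n)} - \tfrac{1}{\poly(m_{in})}$.
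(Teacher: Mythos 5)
Your proposal follows essentially the same approach as the paper: decompose $\mathbf{B}_{t+1} = \mathbf{B}_{t+1}^{pop} + \beta(\mathbf{G}_{\mathbf{B},t} - \mathbf{\hat{G}}_{\mathbf{B},t})$, invoke Lemma~\ref{lem:ea_pop_a} for the population term, Lemma~\ref{lem:exactanil_rep2} for the cross term (precisely to avoid the $\sqrt{d}$-scaling in $\zeta_{2,a}$), and Lemma~\ref{lem:exactanil_rep1} for the quadratic term. The only difference is a small bookkeeping one: you write the cross term with $(\mathbf{B}_{t+1}^{pop})^\top$ and peel off $\mathbf{B}_t^\top$, whereas the paper writes the cross term directly with $\mathbf{B}_t^\top$ — when expanding $\del_{t+1} = \del_{t+1}^{pop} - \alpha\beta\mathbf{B}_t^\top\Delta G - \alpha\beta\Delta G^\top\mathbf{B}_t - \alpha\beta^2(\mathbf{\hat{G}}^\top\mathbf{\hat{G}} - \mathbf{G}^\top\mathbf{G})$ the residual quadratic piece contains $-\mathbf{G}^\top\Delta G - \Delta G^\top\mathbf{G}$ in addition to $\Delta G^\top\Delta G$, which the paper's display silently drops; these are the same higher-order terms you flag, and they are dominated under the given step-size constraints, so both routes are sound and reach $\zeta_2 = 2\zeta_{2,b} + \beta\alpha\zeta_{2,a}^2$.
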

\begin{proof}
As in Lemma \ref{lem:reg_fs_foanil}, let  $\mathbf{B}_{t+1} = \mathbf{B}_{t+1}^{pop} + \beta (\mathbf{G}_{\mathbf{B},t} - \mathbf{\hat{G}}_{\mathbf{B},t})$, and let $\del_{t+1}^{pop} = \mathbf{I}_k - \alpha  (\mathbf{B}_{t+1}^{pop})^\top \mathbf{B}_{t+1}^{pop}$. Note that the bound from Lemma \ref{lem:ea_pop_a} applies to $\|\del_{t+1}^{pop}\|_2$ This results in
\begin{align}
    \|\del_{t+1}\|_2 &= \|\del_{t+1}^{pop} - \beta \alpha \mathbf{B}_t^\top (\mathbf{G}_{\mathbf{B},t} - \mathbf{\hat{G}}_{\mathbf{B},t}) - \beta\alpha (\mathbf{G}_{\mathbf{B},t} - \mathbf{\hat{G}}_{\mathbf{B},t})^\top \b \nonumber \\
    &\quad \quad + \beta^2 \alpha (\mathbf{G}_{\mathbf{B},t} - \mathbf{\hat{G}}_{\mathbf{B},t})^\top  (\mathbf{G}_{\mathbf{B},t} - \mathbf{\hat{G}}_{\mathbf{B},t})\|_2 \nonumber \\
    &\leq \|\del_{t+1}^{pop}\|_2 + 2 \beta \alpha \|\mathbf{B}_t^\top (\mathbf{G}_{\mathbf{B},t} - \mathbf{\hat{G}}_{\mathbf{B},t})\|_2 + \beta^2 \alpha \|\mathbf{G}_{\mathbf{B},t} - \mathbf{\hat{G}}_{\mathbf{B},t})\|_2^2 \nonumber \\
     &\leq (1-0.5 \beta \alpha E_0 \mu_\ast^2)\|\del_{t}\|_2 +\tfrac{5}{4}\beta^2 \alpha^2 L_\ast^4 \dist_t^2 + 2 \beta \alpha \|\mathbf{B}_t^\top (\mathbf{G}_{\mathbf{B},t} - \mathbf{\hat{G}}_{\mathbf{B},t})\|_2 \nonumber \\
     &\quad \quad \quad \quad + \beta^2 \alpha \|\mathbf{G}_{\mathbf{B},t} - \mathbf{\hat{G}}_{\mathbf{B},t}\|_2^2 \label{prince} \\
    &\leq (1-0.5 \beta \alpha E_0 \mu_\ast^2)\|\del_{t}\|_2 +\tfrac{5}{4}\beta^2 \alpha^2 L_\ast^4 \dist_t^2 + 2 \beta \alpha \zeta_{2,b}+ \beta^2 \alpha^2 \zeta_{2,a}^2 \label{usn} 
\end{align}
where \eqref{prince} follows from Lemma \ref{lem:ea_pop_a} and \eqref{usn}  $\zeta_{2,a}$ and $\zeta_{2,b}$ are defined in Lemmas \ref{lem:exactanil_rep1} and \ref{lem:exactanil_rep2}, respectively. Define $\zeta_2 \coloneqq 2\zeta_{2,b}+ \beta \alpha\zeta_{2,a}^2$ to complete the proof.

\end{proof}

\begin{cor}[Exact ANIL, Finite samples, $A_3(t+1)$] \label{cor:exactanil_fs_a3}
Suppose the conditions of Theorem \ref{thm:anil_fs_app} are satisfied and $A_2(t+1)$ and $A_3(t)$ hold. Then
\begin{align}
    \|\del_{t+1}\|_2 &\leq \tfrac{1}{10}
\end{align}
\end{cor}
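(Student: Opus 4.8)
The plan is to mirror the two-line argument used for the infinite-sample Exact ANIL case (Corollary \ref{cor:reg_exact}) and for FO-ANIL (Corollary \ref{cor:foanil_rep}): plug the recursive bound supplied by $A_2(t+1)$ into itself once, use the \emph{uniform} bound $A_3(t)$ together with the trivial $\dist_t \le 1$, and then check that the leftover error terms are dominated by the contraction term because $\beta$ is small and $\zeta_2$ is controlled by the sample-size assumptions. First I would invoke $A_2(t+1)$ and $A_3(t)$ to write
\begin{align}
\|\del_{t+1}\|_2 &\le (1 - 0.5\beta\alpha E_0\mu_\ast^2)\|\del_t\|_2 + \tfrac{5}{4}\beta^2\alpha^2 L_\ast^4 \dist_t^2 + \beta\alpha\zeta_2 \nonumber \\
&\le \tfrac{1}{10} - \tfrac{1}{20}\beta\alpha E_0\mu_\ast^2 + \tfrac{5}{4}\beta^2\alpha^2 L_\ast^4 + \beta\alpha\zeta_2, \nonumber
\end{align}
so that it suffices to show $\tfrac{5}{4}\beta^2\alpha^2 L_\ast^4 + \beta\alpha\zeta_2 \le \tfrac{1}{20}\beta\alpha E_0\mu_\ast^2$, which I would do by bounding each summand by $\tfrac{1}{40}\beta\alpha E_0\mu_\ast^2$.

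For the first summand I would use the step-size choices from Theorem \ref{thm:anil_fs_app}, namely $\alpha = O\big(1/(\sqrt{k}L_\ast+\sigma)\big) = O(1/L_\ast)$ and $\beta = O(\alpha E_0^2\kappa_\ast^{-4})$, to get $\tfrac{5}{4}\beta^2\alpha^2 L_\ast^4 = O\big(\beta\alpha\,E_0^2\kappa_\ast^{-4}\,\alpha^2 L_\ast^4\big) = O\big(\beta\alpha E_0^2\kappa_\ast^{-2}\mu_\ast^2\big) \le \tfrac{1}{40}\beta\alpha E_0\mu_\ast^2$ once the implied absolute constant is taken small enough (using $E_0\le 1$, $\kappa_\ast\ge 1$). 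For the second summand I would recall from the proof of Lemma \ref{lem:exactanil_fs_a1} that $\zeta_2 = 2\zeta_{2,b} + \beta\alpha\,\zeta_{2,a}^2$ and that the stated lower bounds on $m_{in}$ and $m_{out}$ were chosen precisely so that $\zeta_{2,b} \le c\,E_0\mu_\ast^2/\sqrt{T}$ and $\zeta_{2,a} \le c\,\sqrt{E_0}\,\mu_\ast/(\sqrt{\beta\alpha}\,T^{1/4})$; since $T\ge 1$ these give $\zeta_2 \le c'\,E_0\mu_\ast^2$, hence $\beta\alpha\zeta_2 \le \tfrac{1}{40}\beta\alpha E_0\mu_\ast^2$ for $c'$ small. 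Adding the two estimates yields $\|\del_{t+1}\|_2 \le \tfrac{1}{10}$, as claimed.

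The only step requiring any real attention — rather than a genuine obstacle — is verifying that the sample-complexity hypotheses of Theorem \ref{thm:anil_fs_app} do imply the bounds $\zeta_{2,b} \le c\,E_0\mu_\ast^2/\sqrt{T}$ and $\beta\alpha\,\zeta_{2,a}^2 \le c\,E_0\mu_\ast^2$ invoked above; but this is exactly the computation already performed when the expressions for $m_{in}$ and $m_{out}$ are derived in the proof of Lemma \ref{lem:exactanil_fs_a1} (there the $\zeta_2$-constraints are the binding ones, and they were dimensioned with this corollary in mind). Everything else is the same routine contraction estimate as in Corollaries \ref{cor:reg_exact} and \ref{cor:foanil_rep}, and the failure probability is inherited unchanged from $A_2(t+1)$.
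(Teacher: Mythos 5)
Your proposal is correct and follows essentially the same route as the paper's proof: substitute the recursion from $A_2(t+1)$, use $A_3(t)$ and $\dist_t\le 1$, and then verify that $\tfrac{5}{4}\beta^2\alpha^2 L_\ast^4 + \beta\alpha\zeta_2$ is dominated by the contraction term via the step-size restriction $\beta = O(\alpha E_0^2 \kappa_\ast^{-4})$, $\alpha = O(1/L_\ast)$, and the sample-complexity conditions forcing $\zeta_{2,b}\le c\,E_0\mu_\ast^2$ and $\zeta_{2,a}^2\le c\,E_0\mu_\ast^2/(\beta\alpha)$. The paper keeps the $\zeta_2$-term expanded as $2\beta\alpha\zeta_{2,b}+\beta^2\alpha^2\zeta_{2,a}^2$ and you keep it bundled, but since $\zeta_2 = 2\zeta_{2,b}+\beta\alpha\zeta_{2,a}^2$ these are the same bound; no genuine difference.
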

\begin{proof}
By $A_2(t+1)$ and $A_3(t)$, we have
\begin{align}
    \|\del_{t+1}\|_2 &\leq  (1-0.5 \beta \alpha E_0 \mu_\ast^2)\|\del_{t}\|_2 +\tfrac{5}{4}\beta^2 \alpha^2 L_\ast^2 \dist_t^2 + 2 \beta \alpha \zeta_{2,b}+ \beta^2 \alpha^2 \zeta_{2,a}^2 \nonumber \\
    &\leq  \tfrac{1}{10} -0.05 \beta \alpha E_0 \mu_\ast^2 +\tfrac{5}{4}\beta^2 \alpha^2 L_\ast^4  + 2 \beta \alpha \zeta_{2,b}+ \beta^2 \alpha^2 \zeta_{2,a}^2  \nonumber \\
    &\leq \tfrac{1}{10} -0.04 \beta \alpha E_0 \mu_\ast^2  + 2 \beta \alpha \zeta_{2,b}+ \beta^2 \alpha^2 \zeta_{2,a}^2 \label{better} \\
    &\leq \tfrac{1}{10}   \label{bet}
\end{align}
where \eqref{better} follows by choice of $\beta = c \tfrac{\alpha E_0}{\kappa_\ast^4}$  and \eqref{bet} follows by $\zeta_{2,b}\leq c E_0 \mu_\ast^2$ and $\zeta_{2,a}^2 \leq c \tfrac{E_0 \mu_\ast^2}{\beta \alpha}$
for a sufficiently small constant $c$.
\end{proof}

\begin{lemma}[Exact-ANIL, Finite samples, $A_4(t+1)$] \label{lem:exactanil_fs_a4}
Suppose the conditions of Theorem \ref{thm:anil_fs_app} are satisfied and $A_1(t)$, $A_3(t)$ and $A_5(t)$ hold. Then $A_{4}(t+1)$ holds with high probability , i.e.
\begin{align}
    \|\mathbf{B}_{\ast,\perp}^\top\mathbf{B}_{t+1}\|_2 &\leq (1 - 0.5 \beta \alpha E_0 \mu_\ast^2 )\|\mathbf{B}_{\ast,\perp}^\top\mathbf{B}_{t}\|_2 + \beta \sqrt{\alpha} \zeta_{4}
\end{align}
where $\zeta_4 =\zeta_{2,a}$ where $\zeta_{2,a}$ is defined in Lemma \ref{lem:exactanil_rep1},
with probability at least $1- ce^{-90k}- \tfrac{1}{\poly(n)} - \tfrac{1}{\poly(m_{in})}$ for an absolute constant $c$.
\end{lemma}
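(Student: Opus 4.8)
The plan is to split the Exact-ANIL representation update into a population part, which contracts $\|\mathbf{B}_{\ast,\perp}^\top\mathbf{B}_t\|_2$ exactly as in the infinite-sample Lemma~\ref{lem:ea_pop_a4}, and a finite-sample deviation, which is controlled by Lemma~\ref{lem:exactanil_rep1}. Writing $\mathbf{B}_{t+1} = \mathbf{B}_t - \beta\mathbf{\hat{G}}_{\mathbf{B},t}$, I would use
\[
\mathbf{B}_{\ast,\perp}^\top\mathbf{B}_{t+1} = \mathbf{B}_{\ast,\perp}^\top\big(\mathbf{B}_t - \beta\mathbf{G}_{\mathbf{B},t}\big) + \beta\,\mathbf{B}_{\ast,\perp}^\top\big(\mathbf{G}_{\mathbf{B},t} - \mathbf{\hat{G}}_{\mathbf{B},t}\big),
\]
where $\mathbf{G}_{\mathbf{B},t}$ is the population outer-loop gradient for the representation (the one appearing in the infinite-sample analysis). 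Since $\mathbf{B}_{\ast,\perp}$ has orthonormal columns, the deviation term has norm at most $\beta\|\mathbf{G}_{\mathbf{B},t} - \mathbf{\hat{G}}_{\mathbf{B},t}\|_2 \le \beta\sqrt{\alpha}\,\zeta_{2,a}$ by Lemma~\ref{lem:exactanil_rep1}, which holds outside an event of probability at most $\tfrac{1}{\poly(n)} + \tfrac{1}{\poly(m_{in})} + c\,e^{-90k}$; with $\zeta_4 \coloneqq \zeta_{2,a}$ this is precisely the additive slack in $A_4(t+1)$, so it remains to show that the population part contracts at rate $1 - 0.5\beta\alpha E_0\mu_\ast^2$.

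For the population part I would re-run the proof of Lemma~\ref{lem:ea_pop_a4}. Rewrite the per-task gradient $\nabla_{\mathbf{B}}F_{t,i}(\mathbf{B}_t,\mathbf{w}_t)$ in the form used there, namely $\mathbf{B}_t\mathbf{w}_{t,i}\mathbf{w}_{t,i}^\top - \mathbf{B}_\ast\mathbf{w}_{\ast,t,i}\mathbf{w}_{t,i}^\top - \alpha\mathbf{B}_t\mathbf{w}_t\mathbf{v}_{t,i}^\top\mathbf{B}_t - \alpha\mathbf{B}_t\mathbf{B}_t^\top\mathbf{v}_{t,i}\mathbf{w}_t^\top + \alpha\mathbf{B}_\ast\mathbf{w}_{\ast,t,i}\mathbf{v}_{t,i}^\top\mathbf{B}_t$, with $\mathbf{v}_{t,i} = \deld_t(\mathbf{B}_t\mathbf{w}_t - \mathbf{B}_\ast\mathbf{w}_{\ast,t,i})$. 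Left-multiplying the average of these by $\mathbf{B}_{\ast,\perp}^\top$ annihilates the two $\mathbf{B}_\ast$ terms and factors $\mathbf{B}_{\ast,\perp}^\top\mathbf{B}_t$ out on the left, leaving $\mathbf{B}_{\ast,\perp}^\top\mathbf{B}_t\big(\mathbf{I}_k - \tfrac{\beta}{n}\sum_i\mathbf{w}_{t,i}\mathbf{w}_{t,i}^\top + \tfrac{\beta\alpha}{n}\sum_i(\mathbf{w}_t\mathbf{v}_{t,i}^\top\mathbf{B}_t + \mathbf{B}_t^\top\mathbf{v}_{t,i}\mathbf{w}_t^\top)\big)$. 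To bound the spectral norm of the bracketed matrix I would invoke the finite-sample diversity of the inner-loop-adapted heads from Lemma~\ref{lem:cond_app_exact} (applicable under $A_3(s)$ for $s\le t$ and, through Lemma~\ref{lem:sigminE}, the $\dist_t$ control of $A_5(t)$): after applying $A_1(t)$ and $A_3(t)$ to neutralize the lower-order terms and using the $m_{in}$ lower bound of Theorem~\ref{thm:anil_fs_app} to make the concentration errors $O(\bar{\delta}_{m_{in},k})$, $O(\delta_{m_{in},k}^2)$ negligible relative to $\alpha E_0\mu_\ast^2$, this gives $\tfrac1n\sum_i\mathbf{w}_{t,i}\mathbf{w}_{t,i}^\top \succeq 0.8\alpha E_0\mu_\ast^2\mathbf{I}_k$ and $\preceq 12\alpha L_\ast^2\mathbf{I}_k$, so in particular $\beta \le \sigma_{\max}^{-1}(\tfrac1n\sum_i\mathbf{w}_{t,i}\mathbf{w}_{t,i}^\top)$ and Weyl's inequality applies to $\mathbf{I}_k - \tfrac{\beta}{n}\sum_i\mathbf{w}_{t,i}\mathbf{w}_{t,i}^\top$. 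The two cross-terms are handled exactly as in Lemma~\ref{lem:ea_pop_a4}: averaging over $i$ collapses $\tfrac1n\sum_i\mathbf{v}_{t,i}$ into quantities governed by $\|\tfrac1n\sum_i\mathbf{w}_{\ast,t,i}\|_2 \le \eta_\ast$ and $\|\mathbf{w}_t\|_2$, and then $A_1(t)$ (which gives $\|\mathbf{w}_t\|_2\eta_\ast \le \tfrac{\sqrt{\alpha}E_0}{10}\mu_\ast^2$) together with $A_3(t)$ shows they have norm $O(\beta\alpha E_0\mu_\ast^2)$ with a small absolute constant, so they are absorbed into the gap between the $0.8$ above and the target $0.5$. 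This yields $\|\mathbf{B}_{\ast,\perp}^\top(\mathbf{B}_t - \beta\mathbf{G}_{\mathbf{B},t})\|_2 \le (1 - 0.5\beta\alpha E_0\mu_\ast^2)\|\mathbf{B}_{\ast,\perp}^\top\mathbf{B}_t\|_2$.

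Combining the two estimates by the triangle inequality gives the claimed bound $\|\mathbf{B}_{\ast,\perp}^\top\mathbf{B}_{t+1}\|_2 \le (1 - 0.5\beta\alpha E_0\mu_\ast^2)\|\mathbf{B}_{\ast,\perp}^\top\mathbf{B}_t\|_2 + \beta\sqrt{\alpha}\zeta_4$, and a union bound over the failure events of Lemmas~\ref{lem:cond_app_exact} and~\ref{lem:exactanil_rep1} produces the stated probability $c\,e^{-90k} + \tfrac{1}{\poly(n)} + \tfrac{1}{\poly(m_{in})}$. I expect the main obstacle to be bookkeeping rather than conceptual: one must check that the concentration errors in the diversity bound of Lemma~\ref{lem:cond_app_exact} are indeed dominated by $\alpha E_0\mu_\ast^2$ under the $m_{in}$ threshold of Theorem~\ref{thm:anil_fs_app}, and that the definition $E_0 = 0.9 - \dist_0^2 - \delta$ (with $\delta$ the cumulative concentration slack) is compatible, via Lemma~\ref{lem:sigminE}, with the slightly inflated $\dist_t$ bound from $A_5(t)$. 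This is the same care already exercised in the FO-ANIL counterpart, Lemma~\ref{lem:contract_app_fs}; beyond it, the only genuinely Exact-ANIL-specific feature is the pair of representation-update cross-terms, which are dispatched as above.
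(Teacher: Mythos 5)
Your proposal is correct and uses the same fundamental decomposition as the paper: split $\mathbf{B}_{t+1}$ into the population update $\mathbf{B}_t - \beta\mathbf{G}_{\mathbf{B},t}$ plus the finite-sample deviation $\beta(\mathbf{G}_{\mathbf{B},t} - \mathbf{\hat{G}}_{\mathbf{B},t})$, contract the first term at rate $1 - 0.5\beta\alpha E_0\mu_\ast^2$ via the Exact ANIL population contraction argument, and bound the second via Lemma~\ref{lem:exactanil_rep1}. The paper's own proof is more terse: it simply cites Lemma~\ref{lem:ea_pop_a4} for the population part (noting that its prerequisites follow from $A_1(t),A_3(t),A_5(t)$), while you re-expand that population argument in full.

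One small point worth correcting in your re-expansion: once you have committed to the decomposition with the population gradient $\mathbf{G}_{\mathbf{B},t}$, the $\mathbf{w}_{t,i}$'s appearing in $\tfrac1n\sum_i\mathbf{w}_{t,i}\mathbf{w}_{t,i}^\top$ are the population inner-loop updates $\del_t\mathbf{w}_t + \alpha\mathbf{B}_t^\top\mathbf{B}_\ast\mathbf{w}_{\ast,t,i}$, not the finite-sample inner-loop updates involving $\mathbf{\Sigma}_{t,i}^{in}$. So the diversity bound you need is the population one (Lemma~\ref{lem:sigmin}, or equivalently Lemma~\ref{lem:exactanil_pop_diverse}, which gives $0.9\alpha E_0\mu_\ast^2 \preceq \Psi_t \preceq 1.2\alpha L_\ast^2$ under $A_1(t)$, $A_3(t)$, and the $\dist_t$ control), not the finite-sample Lemma~\ref{lem:cond_app_exact}. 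Your cited constants $0.8\alpha E_0\mu_\ast^2$ and $12\alpha L_\ast^2$ are the finite-sample ones from Lemma~\ref{lem:contract_app_fs}, which bound a different matrix. The conclusion survives either way (the population heads are at least as diverse), but the cleaner and more economical route — and the one the paper implicitly takes by citing Lemma~\ref{lem:ea_pop_a4} directly — is to use the population diversity. This is a bookkeeping slip rather than a conceptual gap; the rest of the argument, including the handling of the two Exact-ANIL cross-terms and the union bound over failure events, matches the paper.
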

\begin{proof}
We have
\begin{align}
   \| \mathbf{\hat{B}}_{\ast,\perp}^\top \mathbf{B}_{t+1}\|_2 &= \|\mathbf{\hat{B}}_{\ast,\perp}^\top(\mathbf{B}_t - \beta \mathbf{G}_{\mathbf{B},t}) + \beta \mathbf{\hat{B}}_{\ast,\perp}^\top (\mathbf{G}_{\mathbf{B},t} - \mathbf{\hat{G}}_{\mathbf{B},t} )\|_2 \nonumber \\
   &\leq \|\mathbf{\hat{B}}_{\ast,\perp}^\top(\mathbf{B}_t - \beta \mathbf{G}_{\mathbf{B},t})\|_2 + \beta \| \mathbf{G}_{\mathbf{B},t} - \mathbf{\hat{G}}_{\mathbf{B},t} \|_2  \nonumber \\
   &\leq  (1 - 0.5\beta \alpha E_0 \mu_\ast^2 )\|\mathbf{\hat{B}}_{\ast,\perp}^\top\mathbf{B}_t\|_2 + \beta \| \mathbf{G}_{\mathbf{B},t} - \mathbf{\hat{G}}_{\mathbf{B},t} \|_2  \label{eqqe} \\
   &\leq  (1 - 0.5\beta \alpha E_0 \mu_\ast^2 )\|\mathbf{\hat{B}}_{\ast,\perp}^\top\mathbf{B}_t\|_2  + \beta \sqrt{\alpha} \zeta_{2,a} \label{ethio}
\end{align}
where \eqref{eqqe} follows by Lemma \ref{lem:ea_pop_a4} (note that all the required conditions are satisfied) and 
\eqref{ethio} holds with probability at least $1- ce^{-90k}- \tfrac{1}{\poly(n)} - \tfrac{1}{\poly(m_{in})}$ for an absolute constant $c$ according to Lemma \ref{lem:exactanil_rep1}, where $\zeta_{2,a}$ is defined therein.


\end{proof}

\section{Additional simulation and details} \label{app:sims}

In all experiments, we  generated $\mathbf{B}_\ast$ by sampling a matrix in $\mathbb{R}^{d\times k}$ with i.i.d. standard normal elements, then orthogonalizing this matrix by computing its QR-factorization. The same procedure was used to generate $\mathbf{B}_0$ in cases with random initialization, except that the result of the QR-factorization was scaled by $\tfrac{1}{\sqrt{\alpha}}$ such that $\del_0 = \mathbf{0}$, and for the case of methodical initialization (Figure \ref{fig:sims1} (right)), we initialized with an orthogonalized and scaled linear combination of Gaussian noise and  $\mathbf{B}_\ast$ such that $\dist_0\in[0.65,0.7]$ and $\|\del_0\|=0$. Meanwhile, we set $\mathbf{w}_0= \mathbf{0}$.
We used step sizes $\beta=\alpha=0.05$ in all cases for Figure \ref{fig:sims1}, which were tuned optimally. Figure \ref{fig:intro} uses the same setting of $d=20$, $n=k=3$, and Gaussian ground-truth heads as in Figure \ref{fig:sims1}, except that the mean of the ground-truth heads is shifted to zero. We are therefore able to use the larger step sizes  of $\alpha=\beta=0.1$ and observe faster convergence in this case, as task diversity is larger since the ground-truth heads are isotropic, and $L_\ast$ and $L_{\max}$ are smaller.
 Additionally, in Figure \ref{fig:intro}, Avg. Risk Min. is the algorithm that tries to minimize $\mathbb{E}_{\mathbf{w}_{\ast,t,i}}[\mathcal{L}_{t,i}(\mathbf{B},\mathbf{w})]$ via standard mini-batch SGD.  It is equivalent to ANIL and MAML with no inner loop ($\alpha=0$). 
  All results are averaged over 5 random trials. 

\end{document}